\newtheorem{theorem}{Theorem}[]
\newtheorem{definition}{Definition}[]
\newtheorem{assumption}{Assumption}
\newtheorem{proposition}[theorem]{Proposition}
\newtheorem{corollary}[theorem]{Corollary}
\newtheorem{lemma}[theorem]{Lemma}
\theoremstyle{remark}
\newtheorem{remark}{Remark}
\global\long\def\real{\mathbb{R}}
\global\long\def\prob{\mathbb{P}}
\global\long\def\mean {\mathbb{E}}
\global\long\def\var{\mathrm{var}}
\global\long\def\cov{\mathrm{cov}}
\global\long\def\indc{\mathds{1}}
\global\long\def\Xspace{\mathcal{X}}
\global\long\def\Aspace{\mathcal{A}}
\global\def\data{\mathcal D}
\DeclareMathOperator*{\argmin}{arg\,min}
\title{Fairness-aware Bayes optimal functional classification}
\author[1]{Xiaoyu Hu\footnote{Equal contribution}}
\author[2]{Gengyu Xue$^{\ast}$}
\author[3]{Zhenhua Lin}
\author[2]{Yi Yu}
\affil[1]{School of Mathematics and Statistics, Xi'an Jiaotong University}
\affil[2]{Department of Statistics, University of Warwick}
\affil[3]{Department of Statistics and Data Science, National University of Singapore}
\date{\today}
\begin{document}
\maketitle

\begin{abstract}
    Algorithmic fairness has become a central topic in machine learning, and mitigating disparities across different subpopulations has emerged as a rapidly growing research area. In this paper, we systematically study the classification of functional data under fairness constraints, ensuring the disparity level of the classifier is controlled below a pre-specified threshold. We propose a unified framework for fairness-aware functional classification, tackling an infinite-dimensional functional space, addressing key challenges from the absence of density ratios and intractability of posterior probabilities, and discussing unique phenomena in functional classification. We further design a post-processing algorithm Fair Functional Linear Discriminant Analysis classifier (Fair-FLDA), which targets at homoscedastic Gaussian processes and achieves fairness via group-wise thresholding. Under weak structural assumptions on eigenspace, theoretical guarantees on fairness and excess risk controls are established. As a byproduct, our results cover the excess risk control of the standard FLDA as a special case, which, to the best of our knowledge, is first time seen. Our theoretical findings are complemented by extensive numerical experiments on synthetic and real datasets, highlighting the practicality of our designed algorithm.     
\end{abstract}

\section{Introduction}
Driven by technological advancements that enable high-resolution data collection and analysis, functional data analysis (FDA) has gained increasing attention over the past two decades. A wide range of statistical research has been carried out in this area, and we refer readers to \citet{wang2016functional} for a comprehensive review of recent developments. 

Across a variety of statistical tasks, functional classification has emerged as one of the central focuses, with applications in many areas including neuroscience \citep[e.g.][]{heinrichs2023functional,lila2024interpretable}, genetics \citep[e.g.][]{coffey2014clustering}, handwriting recognition \citep[e.g.][]{hubert2017multivariate} and others. Various classifiers have been proposed and thoroughly studied in the literature, among which are classifiers based on projection \citep[e.g.][]{delaigle2012achieving,kraus2019classification}, Radon--Nikodym derivatives \citep[e.g.][]{berrendero2018use,torrecilla2020optimal}, principal component score densities \citep[e.g.][]{dai2017optimal} and partial least squares \citep[e.g.][]{preda2007pls}, to name but a few. 

Despite the satisfactory performance of the existing classifiers, empirical studies have shown that many algorithms might inherit bias from data and raise ethical issues \citep[e.g.][]{angwin2022machine}. Organisations like the White House \citep{whitehouse2016bigdata} have called for mitigating discrimination and prompting fairness during decision-making. In response, ensuring that algorithms do not disadvantage any vulnerable groups has become a pressing priority in both research and practice.  Substantial efforts have been directed toward reducing bias in existing methods.

In the existing literature, algorithmic fairness has been extensively considered, particularly within the settings of classification \citep[e.g.][]{yang2020fairness,jiang2020wasserstein,wei2021optimized,zeng2024bayes,zeng2024minimax, hou2024finite} and regression \citep[e.g.][]{chzhen2022minimax, fukuchi2023demographic,xu2023fair}. These algorithms can be classified into pre-, in- and post-processing procedures. Pre-processing methods aim to modify training data prior to model training, allowing models to learn from debiased inputs \citep[e.g.][]{calmon2017optimized,johndrow2019algorithm}. In-processing ones handle fairness constraints during the training step. Common strategies include fairness-constrained optimisation \citep[e.g.][]{narasimhan2018learning,celis2019classification} and fairness penalised objective functions \citep[e.g.][]{cho2020fair}. Post-processing one, by contrast, seek to reduce disparities by modifying predictions after training is completed \citep[e.g.,][]{kim2019multiaccuracy,li2022fairee}.

The most related work to this paper is \citet{zeng2024bayes}, in which they develop a framework for Bayes-optimal fair classifiers under finite-dimensional feature spaces with a strong reliance on posterior probabilities, which are unfortunately intractable in functional spaces.  This handicaps the direct application of \citet{zeng2024bayes} to functional data.  In this paper, we resort to the Radon--Nikodym derivative, which is used naturally as a substitute and plays a central role in characterising optimal decision rules under fairness constraints in the functional setting. Additional challenges, further comparisons and practical considerations are provided in \Cref{remark_compare}.

In fact, even without fairness constraints, the characterisation of excess risk for functional classification is unresolved in general settings with unknown eigenfunctions of the covariance operator.  Addressing fairness in this setting is inherently more challenging, and quantifying the trade-off between fairness and accuracy is theoretically more demanding and remains largely unexplored.

\subsection{List of contributions}
In this paper, we study the problem of optimal binary classification for functional data under various fairness constraints. Specifically, we focus on the case when the sensitive attribute is binary and the probability measures of two classes of standard features within each sensitive group are mutually absolutely continuous. The main contributions of this paper are summarised as follows. 

Firstly, to the best of our knowledge, this is the first study to explore fair classification for functional data.
We propose a unified framework for constructing the fair Bayes-optimal classifier for functional data, providing a functional data-tailored treatment of fairness-aware classification problems. 

Secondly, when the non-sensitive features are assumed to be Gaussian processes, we introduce a post-processing algorithm, the Fair Functional Linear Discriminant Analysis classifier (Fair-FLDA) in \Cref{alg:plug-in}, which effectively enforces fairness by group-wise thresholding. Our algorithm accounts for the most general setting where we assume all model parameters, including group-wise covariance functions and their eigenvalues and eigenfunctions, to be unknown. 

Thirdly, we further establish the finite-sample theoretical guarantee for the proposed algorithm in terms of both fairness and excess risk control, ensuring our algorithm Fair-FLDA not only adheres to the specified fairness constraint with high probability, but also achieves a satisfactory classification performance, with the cost of fairness explicitly quantified. As a byproduct, our results cover the special case of functional classification without fairness, which serves as a complement of \citet{wang2021optimal} under a more general setting when eigenfunctions are assumed to be unknown. 

Finally, the proposed algorithm is validated through extensive numerical experiments on both simulated and real datasets, further supporting our theoretical findings and highlighting their practicality.

\medskip 
\noindent \textbf{Notation.} In this paper, for a positive integer $a$, denote $[a] = \{1, \ldots, a\}$. For $a,b \in \mathbb{R}$, let $a \vee b = \max\{a,b\}$ and $a\wedge b = \min\{a,b\}$. For two sequences of positive numbers $\{a_n\}$ and $\{b_n\}$, denote $a_n \lesssim b_n$ (or $a_n = O(b_n)$) and $a_n \asymp b_n$, if there exists some constants $c,C > 0$ such that $a_n/b_n \leq C$ and $c \leq a_n/b_n \leq C$. Write $a_n \ll b_n$, if $a_n/b_n \rightarrow 0$ as $n \rightarrow \infty$. For an $\mathbb{R}$-valued random variable $X$ and $k \in [2]$, let $\|X\|_{\psi_k}$ denote the Orlicz-$\psi_k$ norm, i.e.~$\|X\|_{\psi_k} = \inf \{t > 0: \mathbb{E}[\exp(\{|X|/t\}^k)]\leq 2\}$. For a sequence of random variables $\{X_n\}$ and positive numbers $\{a_n\}$, denote $X_n = O_\mathrm{p}(a_n)$ if $\lim_{M \rightarrow \infty}\lim\sup_n \mathbb{P}(|X_n|\geq M a_n)=0$. For any two $\sigma$-finite measures $\mu$ and $\nu$, denote $\mu \ll \nu$ if $\mu$ is absolutely continuous with respect to $\nu$ and write $\mathrm{d}\mu/\mathrm{d}\nu$ the Radon--Nikodym derivative; write $\mu\sim \nu$ if they are equivalent. Let $L^2([0,1])$ be the space of square-integrable functions on $[0,1]$. For $f\in L^2([0,1])$, denote $\|f\|^2_{L^2} = \int_{0}^1 f^2(s)\,\mathrm{d}s$.  For $f,g \in L^2([0,1])$, denote the inner product by $\langle f, g \rangle_{L^2} = \int_{0}^1 f(s)g(s)\;\mathrm{d}s$. For any bivariate kernel function $K : [0,1]^2\rightarrow \mathbb{R}_+$, let $\mathcal{H}(K)$ denote the reproducing kernel Hilbert spaces (RKHS) generated by $K$. For $f \in \mathcal{H}(K)$, denote $\|f\|^2_{K} = \sum_{j=1}^\infty \langle f, \phi_j\rangle_{L^2}^2/\lambda_j$ its RKHS norm, where $\{\phi_j\}_{j=1}^\infty$ and $\{\lambda_j\}_{j=1}^\infty$ are obtained by Mercer's decomposition of $K$: $K(s,t) = \sum_{j=1}^\infty \lambda_j\phi_j(s)\phi_j(t)$, $s,t \in [0,1]$.

\section{Fair Bayes optimal classifier} 

In this section, we first formally introduce the functional classification model under fairness constraints in \Cref{section_setup}, then present a unified framework to obtain $f^\star_{D, \delta}$ in \Cref{section_fair_bayes} and introduce our algorithm Fair Functional Linear Discriminant Analysis classifier (Fair-FLDA) in \Cref{section_algorithm}.

\subsection{Problem setup} \label{section_setup}

Suppose that we have $n$ independent and identically distributed samples $\data = \{(X_i, A_i, Y_i), i\in[n]\}$, where $X_i \in L^2([0,1])$ is the standard functional feature, $A_i \in \{0, 1\}$ is the sensitive feature (e.g.~gender or race) and $Y_i \in \{0, 1\}$ is the binary label. Let $\mathcal F$ be the class of measurable functions $f: L^2([0,1]) \times \{0,1\} \to [0, 1]$ and our goal is to identify a randomised classifier $f^\star \in \mathcal{F}$, as defined in \Cref{def:random_classifier}, such that the misclassification error 
\begin{equation}\label{eq-misclassification-error}
    R(f^\star) = \mathbb{P}(\widehat{Y}_{f^\star}(X,A) \neq Y)   
\end{equation}
is minimised subject to a specified fairness constraint.
\begin{definition}[Randomised classifier]\label{def:random_classifier}
    For any $x \in L^2([0,1])$ and $a \in \{0,1\}$, a randomised classifier $f\in \mathcal F$ is a measurable function such that $f(x, a) = \prob(\widehat Y_f =1 | X=x, A=a)$, where $\widehat{Y}_f = \widehat{Y}_f(x,a)$ is the predicted label, i.e.~$\widehat Y_f |\{X=x, A=a\} \sim \mathrm{Bernoulli}\;(f(x,a))$.
\end{definition}

For $a,y \in \{0,1\}$, let $P_{a,y}$ be the distribution of the random process $X$ given $A=a$ and $Y=y$. We additionally assume that $P_{a,1} \sim P_{a,0}$, thus the Radon--Nikodym derivative $\eta_a(X) = \mathrm{d}P_{a,1}(X)/\mathrm{d}P_{a,0}$ is well defined. Let $\pi_{a,y} = \prob(A=a, Y=y)$ and $\pi_a = \mathbb{P}(A=a)$, then a straightforward calculation shows that for each sensitive group, the Bayes rule that minimises the misclassification error is given by 
\begin{equation}\label{eq_bayes_no_fair}
    f^*(x,a) = \indc\{\eta_a(X) \geq \pi_{a,0}/\pi_{a,1}\},
\end{equation}
where $\indc\{\cdot\}$ denotes the indicator function \citep[e.g.][]{berrendero2018use}. However, the above classifier does not take fairness into account. To address this, in order to mitigate bias across groups, existing literature proposed various notions of parity, some of which are listed below.
\begin{definition}\label{def-notions-of-parity}
    A classifier $f$ is said to satisfy (i) equality of opportunity \citep[e.g.][]{hardt2016equality} if the true positive rates are the same among protected groups, i.e.~$\prob(\widehat Y_f =1 | A=0, Y=1) = \prob(\widehat Y_f =1 | A=1, Y=1)$, (ii) predictive equality \citep[e.g.][]{corbett2017algorithmic} if the false positive rates are the same among protected groups. i.e.~$\prob(\widehat Y_f=1|A=0, Y=0) = \prob(\widehat Y_f=1| A=1, Y=0)$, and (iii) demographic parity \citep[e.g.][]{cho2020fair} if its prediction $\widehat Y_f$ is independent of the sensitive attribute $A$, i.e.~$\prob(\widehat Y_f=1| A=a) = \prob(\widehat Y_f=1)$, for $a \in \{0,1\}$.
\end{definition}

Enforcing exact parity may lead to a substantial loss in accuracy. In the existing literature, one popular approach to balance this trade-off is to instead control the disparity measure, $\mathrm{D}: \mathcal{F} \rightarrow [-1,1]$, i.e.~upper bounding the difference in quantities of interest between the sensitive groups. The disparity measures corresponding to the notions of parity in \Cref{def-notions-of-parity} are presented in  \Cref{def_disparity_measure} for completeness.

\begin{definition} \label{def_disparity_measure}
    For a given classifier $f \in \mathcal F$, the disparity of opportunity (DO), predictive disparity (PD) and demographic disparity (DD) are defined as
	\begin{align*} 
    	\mathrm{DO}(f) & = \prob\{\widehat Y_f(X, 1)=1| A=1, Y=1\} - \prob\{\widehat Y_f(X,0)=1 | A=0, Y=1\} , \\
		\mathrm{PD}(f) & = \prob\{\widehat Y_f(X, 1)=1|A=1, Y=0\} - \prob\{\widehat{Y}_f(X,0)=1|A=0, Y=0\},\\
        \mathrm{DD}(f) & = \prob\{\widehat Y_f(X, 1)=1| A=1\} - \prob\{\widehat Y_f(X,0)=1 | A=0\}.
	\end{align*}
\end{definition}
For any disparity measure $\mathrm{D}$, any tolerance level $\delta\geq 0$, we seek the $\delta$-fair Bayes optimal classifier~$f^\star_{\mathrm{D}, \delta}$, such that the misclassification error defined in \eqref{eq-misclassification-error}, is minimised over all classifiers that satisfy the $\delta$-disparity, i.e.
\begin{equation} \label{eq_bayes_fair}
    f^\star_{\mathrm{D}, \delta} \in \argmin_{f\in\mathcal F} \big\{ R(f) : |\mathrm{D}(f)| \le \delta\big\}.
\end{equation}

\subsection{Unified framework for fairness-aware functional classification} \label{section_fair_bayes}
To characterise the $\delta$-fair Bayes optimal classifier defined in \eqref{eq_bayes_fair}, in this section, we apply the generalised Neyman--Pearson lemma (see \Cref{lem:GNP} in \Cref{appendix_technical}) and exploit Radon--Nikodym derivatives, providing a unified framework for fairness-aware functional classification.

The applications of the generalised Neyman--Pearson lemma are available when both the objective function and the constraint in \eqref{eq_bayes_fair} are linear in the classifier.  As for the objective function, it follows from \Cref{l:risk_densityratio} in \Cref{appendix_bayes_optimal_auxiliary}, we can rewrite the misclassification error in \eqref{eq-misclassification-error} as
\begin{align*}
	R(f)  =  \sum_{a \in \{0,1\}} \int_{\Xspace} f(x,a) \bigg\{ \pi_{a,0} - \pi_{a,1} \frac{\mathrm{d}P_{a,1}}{\mathrm{d}P_{a,0}}(x) \bigg\} \mathrm{d}P_{a,0}(x) +  \prob(Y=1),
\end{align*}
which is linear in the classifier. 
As for the constraint, we consider the class of bilinear disparity measures defined below.
\begin{definition} \label{def:linear_disparity}
	For all probability measures $P$ and $f\in\mathcal F$, a disparity measure $\mathrm{D}: \mathcal F \to [-1,1]$ is bilinear in the classifiers $f$ and $\mathrm{d}P_{a,1}/\mathrm{d}P_{a,0}$ if there exist $s_{\mathrm{D}, a}, b_{\mathrm{D}, a} \in \real$ such that
	\[ 
        \mathrm{D}(f) = \sum_{a \in \{0,1\}} \int_{\Xspace} f(x,a) \bigg\{s_{\mathrm{D}, a} \frac{\mathrm{d}P_{a,1}}{\mathrm{d}P_{a,0}}(x) + b_{\mathrm{D}, a}\bigg\} \mathrm{d}P_{a,0}(x).
    \]    
\end{definition}
\Cref{def:linear_disparity} is also considered in \citet{zeng2024bayes} and hold for many commonly used disparity measures in the existing literature, including those defined in \Cref{def_disparity_measure}.  This is justified below, with its proof deferred to \Cref{appendix_proof_bilinear}.
\begin{proposition} \label{prop:bilinear_coef}
The disparity measures $\mathrm{DO}, \mathrm{PD}$ and $\mathrm{DD}$ defined in \Cref{def_disparity_measure} are bilinear with $s_{\mathrm{DO}, a} = 2a-1$, $b_{\mathrm{DO}, a}=0$; $s_{\mathrm{PD}, a}=0$, $b_{\mathrm{PD}, a} = 2a-1$; and $s_{\mathrm{DD}, a} = (2a-1)\pi_{a,1}/\pi_a$, $b_{\mathrm{DD}, a} = (2a-1)\pi_{a,0}/\pi_a$, for $a \in \{0, 1\}$.
\end{proposition}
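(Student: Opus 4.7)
The plan is to verify the bilinear representation for each of the three disparity measures separately, using the tower property together with a change of measure via the Radon--Nikodym derivative $\mathrm{d}P_{a,1}/\mathrm{d}P_{a,0}$. The starting point is the elementary identity that for a randomised classifier $f$ as in \Cref{def:random_classifier}, and for $a,y \in \{0,1\}$,
\[
    \prob\{\widehat Y_f(X,a)=1 \mid A=a, Y=y\} = \mean[f(X,a)\mid A=a,Y=y] = \int_{\Xspace} f(x,a)\,\mathrm{d}P_{a,y}(x),
\]
which follows because, conditionally on $(A,Y)=(a,y)$, $X\sim P_{a,y}$ and $\widehat Y_f \mid X \sim \mathrm{Bernoulli}(f(X,a))$.

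For $\mathrm{PD}(f)$, each conditional probability is already an integral against $P_{a,0}$, so the identity $\prob\{\widehat Y_f(X,a)=1\mid A=a,Y=0\} = \int f(x,a)\,\mathrm{d}P_{a,0}(x)$ plus the sign $(2a-1)$ coming from the difference between the $a=1$ and $a=0$ terms yields immediately the bilinear form with $s_{\mathrm{PD},a}=0$ and $b_{\mathrm{PD},a}=2a-1$. For $\mathrm{DO}(f)$, each conditional probability involves an integral against $P_{a,1}$; applying $P_{a,1}\sim P_{a,0}$ to rewrite $\mathrm{d}P_{a,1} = (\mathrm{d}P_{a,1}/\mathrm{d}P_{a,0})\,\mathrm{d}P_{a,0}$ and again absorbing the sign produces $s_{\mathrm{DO},a}=2a-1$ and $b_{\mathrm{DO},a}=0$.

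The only step that requires a little more care is $\mathrm{DD}(f)$, which I will handle by conditioning on $Y$ inside the group $A=a$. Writing
\[
    \prob\{\widehat Y_f(X,a)=1\mid A=a\} = \sum_{y\in\{0,1\}} \frac{\pi_{a,y}}{\pi_a}\int_{\Xspace} f(x,a)\,\mathrm{d}P_{a,y}(x),
\]
and then converting the $y=1$ integrand against $P_{a,0}$ via the Radon--Nikodym derivative, one obtains
\[
    \prob\{\widehat Y_f(X,a)=1\mid A=a\} = \int_{\Xspace} f(x,a)\Bigl\{\frac{\pi_{a,1}}{\pi_a}\,\frac{\mathrm{d}P_{a,1}}{\mathrm{d}P_{a,0}}(x) + \frac{\pi_{a,0}}{\pi_a}\Bigr\}\,\mathrm{d}P_{a,0}(x).
\]
Taking the signed difference over $a$ with sign $(2a-1)$ then reads off $s_{\mathrm{DD},a}$ and $b_{\mathrm{DD},a}$ as stated.

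There is no real obstacle here; the proof is essentially bookkeeping, and the one spot worth being pedantic about is the DD case, where the $\pi_{a,y}/\pi_a$ weights must be tracked through the change of measure so that the coefficients of $\mathrm{d}P_{a,1}/\mathrm{d}P_{a,0}$ and of the constant term are correctly identified. Everywhere the equivalence $P_{a,1}\sim P_{a,0}$ assumed in \Cref{section_setup} is what legitimises the change of measure.
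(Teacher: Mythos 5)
Your proof is correct and follows essentially the same route as the paper's: express each conditional probability as an integral against $P_{a,y}$, change measure to $P_{a,0}$ via the Radon--Nikodym derivative for the $y=1$ terms, and for DD decompose over $Y$ with weights $\pi_{a,y}/\pi_a$ before changing measure. The sign $(2a-1)$ bookkeeping and the final identification of coefficients match the paper exactly.
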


With both the misclassification error and disparity measures linear in the classifier, the generalised Neyman--Pearson lemma unlocks $f^\star_{\mathrm{D}, \delta}$, a closed-form solution to the $\delta$-fair Bayes optimal classifier, presented in \Cref{thm:fair_bayes_opt}.

\begin{theorem} \label{thm:fair_bayes_opt}
Assume that $\mathrm{d}P_{a,1}/\mathrm{d}P_{a,0}$ is a continuous random variable for $a \in \{0, 1\}$. For any $\tau \in \mathbb{R}$ and a given bilinear disparity measure $\mathrm{D}$ in \Cref{def:linear_disparity}, denote the classifier
\begin{align} \label{eq:f_bayesform}
    g_{\mathrm{D}, \tau}(x,a) = \begin{cases} \vspace{0.5em}
            1, &(\pi_{a,1}-\tau s_{\mathrm{D},a}) \frac{\mathrm{d}P_{a,1}}{\mathrm{d}P_{a,0}}(x) \geq  \pi_{a,0}+\tau b_{\mathrm{D},a},\\ 
            0, &(\pi_{a,1}-\tau s_{\mathrm{D},a}) \frac{\mathrm{d}P_{a,1}}{\mathrm{d}P_{a,0}}(x) < \pi_{a,0}+\tau b_{\mathrm{D},a},
        \end{cases}
    \end{align}
    and let $\mathrm{D}(\tau) = \mathrm{D}(g_{\mathrm{D}, \tau})$. Then, for any $\delta \geq 0$, the $\delta$-fair Bayes optimal classifier is $f_{\mathrm{D}, \delta}^\star = g_{\mathrm{D}, \tau_{\mathrm{D}, \delta}^\star}$, where 
    \begin{align} \label{eq:taustar}
        \tau_{\mathrm{D}, \delta}^\star = \argmin_{\tau \in \real} \big\{ |\tau|: \, |\mathrm{D}(\tau)| \leq \delta\big\}.
    \end{align}
\end{theorem}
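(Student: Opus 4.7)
The plan is to recast \eqref{eq_bayes_fair} as a constrained linear programme over $\mathcal F$ and invoke the generalised Neyman--Pearson lemma (\Cref{lem:GNP}). By \Cref{l:risk_densityratio} the misclassification error is linear in $f$, and by \Cref{def:linear_disparity} so is $\mathrm{D}(f)$; hence for every $\tau\in\real$ the Lagrangian admits the pointwise representation
\begin{equation*}
 R(f) + \tau\, \mathrm{D}(f) = \sum_{a\in\{0,1\}}\int_{\Xspace} f(x,a)\bigl\{(\pi_{a,0}+\tau b_{\mathrm{D},a}) - (\pi_{a,1}-\tau s_{\mathrm{D},a})\,\eta_a(x)\bigr\}\,\mathrm{d}P_{a,0}(x) + \prob(Y=1).
\end{equation*}
Since $f(x,a)\in[0,1]$, minimising pointwise in $f$ reproduces exactly the thresholded classifier $g_{\mathrm{D},\tau}$ in \eqref{eq:f_bayesform}; the assumed continuous distribution of $\mathrm{d}P_{a,1}/\mathrm{d}P_{a,0}$ under $P_{a,0}$ guarantees that the boundary set has probability zero, so no randomisation on the boundary is required.

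Rearranging the resulting inequality $R(g_{\mathrm{D},\tau})+\tau\mathrm{D}(g_{\mathrm{D},\tau})\le R(f)+\tau\mathrm{D}(f)$ yields, for all $f\in\mathcal F$ and $\tau\in\real$,
\begin{equation*}
 R(f) \ge R(g_{\mathrm{D},\tau}) + \tau\bigl(\mathrm{D}(g_{\mathrm{D},\tau}) - \mathrm{D}(f)\bigr).
\end{equation*}
I then specialise to $\tau^\star_{\mathrm{D},\delta}$ from \eqref{eq:taustar} and split into three cases. If $|\mathrm{D}(0)|\le\delta$, the unconstrained Bayes classifier $g_{\mathrm{D},0}$ is already feasible and achieves the minimum risk, so $\tau^\star_{\mathrm{D},\delta}=0$ works. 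If $\mathrm{D}(0)>\delta$, I will argue that $\tau\mapsto\mathrm{D}(\tau)$ is continuous and non-increasing on an interval containing $0$, whence $\tau^\star_{\mathrm{D},\delta}>0$ is attained and satisfies $\mathrm{D}(\tau^\star_{\mathrm{D},\delta})=\delta$. Substituting into the displayed inequality and using $\mathrm{D}(f)\le\delta$ for every feasible $f$ yields $R(f)\ge R(g_{\mathrm{D},\tau^\star_{\mathrm{D},\delta}})$. The case $\mathrm{D}(0)<-\delta$ is symmetric, with $\tau^\star_{\mathrm{D},\delta}<0$.

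The main obstacle I anticipate is the continuity and monotonicity claim for $\tau\mapsto\mathrm{D}(\tau)$ in the binding case. The threshold appearing in $g_{\mathrm{D},\tau}$ is the fractional-linear map $\tau\mapsto(\pi_{a,0}+\tau b_{\mathrm{D},a})/(\pi_{a,1}-\tau s_{\mathrm{D},a})$, whose sign and direction of change depend on the group-specific coefficients identified in \Cref{prop:bilinear_coef}; verifying the claim therefore requires tracking the sign of $\pi_{a,1}-\tau s_{\mathrm{D},a}$ and appealing to the continuous distribution of $\eta_a$ to rule out jumps. Once the claim is established, complementary slackness forces the argmin in \eqref{eq:taustar} to be attained with an active constraint, which delivers the asserted optimality of $g_{\mathrm{D},\tau^\star_{\mathrm{D},\delta}}$.
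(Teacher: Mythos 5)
Your proposal is correct and follows essentially the same route as the paper's: both linearise $R$ and $\mathrm{D}$, both apply the generalised Neyman--Pearson machinery to identify $g_{\mathrm{D},\tau}$ as the thresholded minimiser, both appeal to continuity and monotonicity of $\tau\mapsto\mathrm{D}(\tau)$ (the paper's \Cref{prop:D_and_R}) to locate $\tau^\star_{\mathrm{D},\delta}$ with an active constraint, and both split on the sign of $\mathrm{D}(0)$. The only cosmetic difference is that you write out the Lagrangian inequality $R(f)\ge R(g_{\mathrm{D},\tau})+\tau\{\mathrm{D}(g_{\mathrm{D},\tau})-\mathrm{D}(f)\}$ explicitly, whereas the paper packages the same content into the auxiliary \Cref{lem:fairnessbayesopt_gnp}.
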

The proof of \Cref{thm:fair_bayes_opt} is in \Cref{appendix_proof_optimal_bayes}. The condition that $\mathrm{d}P_{a,1}/\mathrm{d}P_{a,0}$ is continuous holds for homogeneous Gaussian processes considered in \Cref{section_algorithm}. We remark that our analysis can be easily extended to the case when $\mathrm{d}P_{a,1}/\mathrm{d}P_{a,0}$ is discontinuous, by including a randomised decision rule on the set where $(\pi_{a,1}-\tau s_{\mathrm{D},a}) \mathrm{d}P_{a,1}(x) /\mathrm{d}P_{a,0} = \pi_{a,0}+\tau b_{\mathrm{D},a}$.

At a high level, \Cref{thm:fair_bayes_opt} states that the $\delta$-fair Bayes optimal classifier is shifted from the Bayes classifier \eqref{eq_bayes_no_fair} by linear factors $s_{\mathrm{D}, a}$ and $b_{\mathrm{D}, a}$.  The linear shift is ensured from the bilinearity of the disparity measure $\mathrm{D}$ and the shift level $\tau$ is further optimised in \eqref{eq:taustar}.  The optimisation in \eqref{eq:taustar}, at the core is to minimise the misclassification error among all that satisfy the fairness constraints. The detailed form is a consequence of behaviours of the misclassification error $R(g_{\mathrm{D}, \tau})$ and disparity measure $\mathrm{D}(g_{\mathrm{D}, \tau})$ as functions of $\tau$.  We show in \Cref{prop:D_and_R} in \Cref{appendix_bayes_optimal_auxiliary} that, the disparity $\mathrm{D}(\tau)$ is continuous and non-increasing, while the misclassification $R(g_{\mathrm{D}, \tau})$ is non-increasing in $(-\infty, 0)$ and non-decreasing in $(0, \infty)$. As a result, finding an optimal threshold $\tau$ that minimises the misclassification error reduces to minimising $|\tau|$.

For $\mathrm{D} \in \{\mathrm{DO}, \mathrm{PD}, \mathrm{DD}\}$ as defined in \Cref{def_disparity_measure}, it holds that $\pi_{a,1} - \tau_{\mathrm{D}, \delta}^\star s_{\mathrm{D},a}>0$ and $\pi_{a,0}+\tau_{\mathrm{D}, \delta}^\star b_{\mathrm{D},a}>0$ (see \Cref{l_range_tau} in \Cref{appendix_bayes_optimal_auxiliary}). Consequently, the fair Bayes-optimal classifier $f_{\mathrm{D}, \delta}^\star$ is
\begin{align} \label{eq_f_star}
    f_{\mathrm{D}, \delta}^\star(x,a)= \begin{cases} \vspace{0.5em}
        1, & \frac{\mathrm{d}P_{a,1}}{\mathrm{d}P_{a,0}}(x) \geq  \frac{\pi_{a,0}+\tau_{\mathrm{D}, \delta}^\star b_{\mathrm{D},a}}{\pi_{a,1} - \tau_{\mathrm{D}, \delta}^\star s_{\mathrm{D},a}},\\
        0,&\frac{\mathrm{d}P_{a,1}}{\mathrm{d}P_{a,0}}(x) <  \frac{\pi_{a,0}+\tau_{\mathrm{D}, \delta}^\star b_{\mathrm{D},a}}{\pi_{a,1} - \tau_{\mathrm{D}, \delta}^\star s_{\mathrm{D},a}}.
    \end{cases}
\end{align}
Compared to the Bayes classifier without fairness constraints in \eqref{eq_bayes_no_fair}, mitigating disparity is achieved by adjusting the classification thresholds, from $\pi_{a,0}/\pi_{a,1}$ to $(\pi_{a,0}+\tau_{\mathrm{D}, \delta}^\star b_{\mathrm{D},a})/(\pi_{a,1} - \tau_{\mathrm{D}, \delta}^\star s_{\mathrm{D},a})$, with the shift determined by the chosen disparity measure and the underlying population distributions. When $\tau^\star_{\mathrm{D},\delta} =0$, i.e.~$|\mathrm{D}(0)|\leq \delta$, we recover the Bayes classifier $f^\star(x,a)$ in \eqref{eq_bayes_no_fair}, which is, in this case, automatically fair. We write the classifier in \eqref{eq_bayes_no_fair} as $f^\star_{\mathrm{D},\infty}$ in the rest of the paper. 

\begin{remark} \label{remark_compare}
Our framework is inspired by the one in \citet{zeng2024bayes}, which is also an application of the generalised Neyman--Pearson lemma.  The key difference between \citet{zeng2024bayes} and \Cref{thm:fair_bayes_opt} lies in the generalisation of the classifier to a functional feature space.  When moving to infinite-dimensional spaces, the absence of a default base measure leads to the use of the Radon--Nikodym derivative $\mathrm{d}P_{a,1}(x)/\mathrm{d}P_{a,0}$.  It is a more natural functional used for functional classification, rather than the posterior probabilities $\mathbb{P}(Y=1|A=a, X=x)$ considered in \citet{zeng2024bayes}.  In particular, in important cases such as when the functions are Gaussian processes, $\mathrm{d}P_{a,1}(x)/\mathrm{d}P_{a,0}$ is analytically tractable but not $\mathbb{P}(Y=1|A=a, X=x)$.  In the cases when $\mathrm{d}P_{a,1}(x)/\mathrm{d}P_{a,0}$ is not tractable, there are ample tools for its approximation \citep[e.g.][]{bongiorno2016classification, dai2017optimal}.

\end{remark}

\subsection{Fair functional linear discriminant analysis classifier for Gaussian processes} \label{section_algorithm}

As a concrete and important example, we focus on a specific setting in which the functional features are modelled as Gaussian processes.  We propose the Fair Functional Linear Discriminant Analysis classifier in \Cref{alg:plug-in}, featuring a plug-in estimator built on $f^{\star}_{\mathrm{D}, \delta}$ proposed in \Cref{thm:fair_bayes_opt} and, specifically, in \eqref{eq_f_star}.

In addition to the problem setup in \Cref{section_setup}, for any collection of observations $(X,A,Y) \in \mathcal{D}$, we assume that the functional feature $X$ is a Gaussian process, i.e.~$\{X|A=a,Y=y\} \sim\mathcal{GP}(\mu_{a,y}, K_{a,y})$, where $\mu_{a,y}(t) = \mathbb{E}\{ X(t) | A=a, Y=y\} $ denote the mean function and $K_{a,y}(s,t) = \mathbb{E}[ \{X(s)-\mu_{a,y}(s)\}\{X(t)-\mu_{a,y}(t)\} | A=a, Y=y ]$ denote the covariance function, $s,t \in [0,1]$ and $a,y \in\{0,1\}$. For simplicity, we only consider a homoscedastic setting within each group in this paper, i.e.~$K_{a,0} = K_{a,1} = K_a$.  The covariance function $K_a$, consequently, admits the spectral expansion $ K_a(s,t) = \sum_{j=1}^\infty \lambda_{a,j}\phi_{a,j}(s)\phi_{a,j}(t)$, where $\lambda_{a,1} \geq \lambda_{a,2} \geq \cdots \geq 0$ are the eigenvalues and $\{\phi_{a,j}\}_{j\in \mathbb{N}_+}$ are the collection of eigenfunctions. In the absence of fairness constraints, the functional linear discriminant analysis (FLDA) classifier defined in \eqref{eq_bayes_no_fair} is known to be optimal to minimise the misclassification error \citep[e.g.][]{berrendero2018use}.

To account for fairness and construct a plug-in type classifier for $f^\star_{\mathrm{D}, \delta}$, it is essential to evaluate the Radon--Nikodym derivative $\mathrm{d}P_{a,1}/\mathrm{d}P_{a,0}$, which plays a central role in the decision rule. Although such derivatives are typically intractable, they are analytically available under the Gaussian setting. In particular, by standard results of Gaussian measures \citep[e.g.~Theorem 1 in][]{berrendero2018use}, the distributions $P_{a,0}$ and $P_{a,1}$ are mutually continuous if and only if the mean difference $\mu_{a,1}-\mu_{a,0}$ belongs to the RKHS space $\mathcal{H}(K_a)$.  The Radon--Nikodym derivative consequently admits that 
\begin{align*}
    \frac{\mathrm{d}P_{a,1}}{\mathrm{d}P_{a,0}}(X) = \exp \bigg\{  \sum_{j=1}^\infty \frac{(\zeta_{a, j}-\theta_{a,0, j})(\theta_{a, 1, j} - \theta_{a,0, j})}{\lambda_{a,j}} - \frac{1}{2} \sum_{j=1}^\infty \frac{ (\theta_{a,1,j} - \theta_{a,0, j})^2 }{\lambda_{a,j}} \bigg\},
\end{align*}
where $\zeta_{a, j}= \langle X, \phi_{a,j} \rangle_{L^2}$ is the principal component scores of $X$ and $\theta_{a,y,j} = \langle\mu_{a,y}, \phi_{a,j}\rangle_{L^2}$ are the coefficients of the mean functions projected onto the eigenfunctions of $K_a$.

\begin{remark}[Perfect classification]\label{rmk:perfect}
    In the problem of functional classification without fairness, the intrinsic infinite-dimensional nature of functional data gives rise to vanishing misclassification errors under certain scenarios. Such a phenomenon is first discussed in \citet{delaigle2012achieving} and is commonly known as perfect classification in the existing literature. As discussed in \citet{berrendero2018use}, for homogeneous Gaussian processes, perfect classification arises when the class distributions $P_{a,1}$ and $P_{a,0}$ are mutually singular, i.e.~$\mu_{a,1}-\mu_{a,0} \notin \mathcal{H}(K_a)$, in which case the Radon--Nikodym derivative $\mathrm{d}P_{a,1}/\mathrm{d}P_{a,0}$ fails to exist. We remark that in this paper, we restrict our theoretical analysis to the more challenging regime of imperfect classification, where the classification error does not vanish. Notably, in the perfect classification regime, the FLDA classifier in \eqref{eq_bayes_no_fair} is automatically fair when the disparity measure $\mathrm{D} \in \{\mathrm{DO}, \mathrm{PD}\}$. When $\mathrm{D} = \mathrm{DD}$, \eqref{eq_bayes_no_fair} is automatically fair if $|\mathbb{P}(Y =1|A=1) -\mathbb{P}(Y=1|A=0)| \leq \delta$. Further insights into the phenomenon of automatic fair are supported by numerical experiments presented in \Cref{sec:perfect}.
\end{remark}

To estimate $\mathrm{d}P_{a,1}/\mathrm{d}P_{a,0}$ in practice, we assume the availability of an additional dataset, $\widetilde{\mathcal{D}} = \{(\widetilde{X}_i,\widetilde{A}_i,\widetilde{Y}_i)\}$, which is drawn independently from the same distribution as $\mathcal{D}$. We refer to $\widetilde{\mathcal{D}}$ as the training data, used to estimate $\widehat{\eta}_a$ in the initial classifier, and to $\mathcal{D}$ as the calibration data, subsequently used to post-process the initial classifier by selecting the adjusted threshold $\widehat{\tau}_{\mathrm{D},\delta}$. 

We decompose the calibration data as $\mathcal{D} = \mathcal{D}_{0,1} \cup \mathcal{D}_{0,0} \cup \mathcal{D}_{1,1} \cup \mathcal{D}_{1,0}$, where for $a, y \in \{0, 1\}$, let $\mathcal{D}_{a,y} = \{(X_i, A_i = a, Y_i = y)\}$, $n_{a,y} = |\mathcal{D}_{a,y}|$ and $n = \sum_{a,y} n_{a,y}$. The training data are written as $\widetilde{\mathcal{D}} = \widetilde{\mathcal{D}}_{0,1} \cup \widetilde{\mathcal{D}}_{0,0} \cup \widetilde{\mathcal{D}}_{1,1} \cup \widetilde \data_{1,0}$ with sizes $\widetilde{n}_{a,y} = |\widetilde{\mathcal{D}}_{a,y}|$ and total size $\widetilde{n} = \sum_{a,y} \widetilde{n}_{a,y}$. For notational clarity, we denote the $i$-th feature in $\mathcal{D}_{a,y}$ as $X^i_{a,y}$ for $i \in [n_{a,y}]$, and write $X$ from $\widetilde{\data}_{a,y}$ as $\widetilde{X}^i_{a,y}$ for $i\in [\widetilde{n}_{a,y}]$. The resulting plug-in classifier is detailed in Algorithm~\ref{alg:plug-in}, with its theoretical guarantees discussed in \Cref{section_theory}.
\begin{algorithm}[!htbp]
	\caption{Fair Functional Linear Discriminant Analysis classifier. \label{alg:plug-in}}
	\begin{algorithmic}
		\Require Training data $\widetilde \data_{0,1} \cup \widetilde \data_{0,0} \cup \widetilde \data_{1,1} \cup \widetilde \data_{1,0}$, calibration data $\data_{0,0} \cup \data_{0,1} \cup \data_{1,0} \cup \data_{1,1}$, disparity level $\delta$, level of truncation $J$.
		\State \textbf{S1.} Estimating Radon--Nikodym derivatives $\eta_a $ and class probabilities $\pi_{a, y}$ using training data.
		\State  For $a, y\in\{0,1\}$, calculate $\widehat \pi_{a, y} = \widetilde n_{a, y}/\widetilde n$, $\widehat{\mu}_{a, y}(t) = (\widetilde n_{a, y})^{-1}\sum_{i=1}^{\widetilde n_{a, y}} \widetilde X_{a, y}^i(t)$ and 
			\begin{equation*}
			\widehat{K}_{a}(s,t) = \sum_{y \in \{0,1\}} \frac{\widetilde n_{a,y}}{\widetilde n_{a,0}+\widetilde n_{a,1}} \frac{1}{\widetilde n_{a, y} -1} \sum_{i=1}^{\widetilde n_{a,y}}\{\widetilde X^i_{a,y}(s) - \widehat{\mu}_{a,y}(s)\}\{\widetilde X^i_{a,y}(t) - \widehat{\mu}_{a,y}(t)\}. 
		\end{equation*}
		\State Estimate empirical eigenvalues $\{\widehat{\lambda}_{a,j}\}_{j =1}^J$ and eigenfunctions $\{\widehat{\phi}_{a,j}\}_{j=1}^J$ of $\widehat{K}_a$ by spectral expansion.
		\State Calculate $\widehat{\theta}_{a,y,j} = \int_0^1 \widehat{\mu}_{a,y}(t)\widehat{\phi}_{a,j}(t)\, \mathrm{d}t$. For any function $X$, denote
		\begin{equation*}
			\widehat \eta_a(X) = \exp\bigg\{\sum_{j=1}^J \frac{(\widehat{\theta}_{a,1,j} - \widehat{\theta}_{a,0,j} )(\int_{0}^1 X(t) \widehat{\phi}_{a,j}(t) \,\mathrm{d}t  - \widehat{\theta}_{a,0,j})}{\widehat{\lambda}_{a,j}} - \frac{1}{2}\sum_{j=1}^J\frac{( \widehat{\theta}_{a,1,j}- \widehat{\theta}_{a,0,j})^2}{\widehat{\lambda}_{a,j}}\bigg\} .
		\end{equation*}
		
		\State \textbf{S2.} Estimating the optimal threshold using calibration data.
		\State Let $\widehat g_{\mathrm{D}, \tau}(x,a) = \indc \big\{ (\widehat{\pi}_{a,1} - \tau s_{\mathrm{D}, a}) \widehat \eta_a(x) > \widehat{\pi}_{a,0} +\tau b_{\mathrm{D},a} \big\}$.
		\State With $\int_\Xspace f(x, a) \;\mathrm{d}\widehat P_{a, y} = n_{a,y}^{-1}\sum_{i=1}^{n_{a,y}} f(X_{a, y}^i, a)$, calculate
		\[ \widehat{\mathrm{D}}(\tau) = \sum_{a \in \{0,1\}} \bigg\{ \int_\Xspace \widehat g_{\mathrm{D}, \tau}(x, a) s_{\mathrm{D}, a} \;\mathrm{d}\widehat P_{a, 1}(x) + \int_\Xspace \widehat g_{\mathrm{D}, \tau}(x, a) b_{\mathrm{D}, a} \;\mathrm{d}\widehat P_{a, 0}(x)   \bigg\}. \]		
		\State Set $\widehat{\tau}_{\mathrm{D},\delta} = \argmin_{\tau \in \mathbb{R}}\{|\tau|:\, |\widehat{\mathrm{D}}(\tau)| \leq \delta \}$. 
		\Ensure $\widehat{f}_{\mathrm{D},\delta}(x,a)$, with
			\begin{align*}
			\widehat{f}_{\mathrm{D}, \delta}(x,a) = \indc\big\{ (\widehat{\pi}_{a,1} - \widehat \tau_{\mathrm{D}, \delta} s_{\mathrm{D}, a}) \widehat \eta_a(x) > \widehat{\pi}_{a,0} + \widehat \tau_{\mathrm{D}, \delta} b_{\mathrm{D},a} \big\}.
		\end{align*} 
	\end{algorithmic}
\end{algorithm}

\section{Theoretical Properties} \label{section_theory}

For a general bilinear disparity measure defined in \Cref{def:linear_disparity}, we provide the theoretical guarantees on the fairness and excess risk control of the Fair-FLDA algorithm (\Cref{alg:plug-in}) in Theorems \ref{thm_fair_guarantee} and \ref{thm_fairness_general}, with a special case regarding the disparity of opportunity in \Cref{thm_fairness}.  
To kick off, we list a few assumptions.

\begin{assumption}[Class probabilities] \label{a_class_prob}
Assume that there exist absolute constants $0<C_p \leq C_p'<1$,  such that the class probabilities satisfy $0 < C_p \leq \pi_{a,y}\leq C_p'<1$, $a,y \in \{0,1\}$. 
\end{assumption}

\begin{assumption}[Gaussian processes]\label{a_data}
Assume that the standard features $\{\widetilde{X}^i_{a,y}\}_{i\in [\widetilde{n}_{a,y}]}\cup \{X^i_{a,y}\}_{i\in [n_{a,y}]}$ are collections of Gaussian processes $\mathcal{GP}(\mu_{a,y},K_{a})$ with continuous trajectories, $a,y \in \{0,1\}$.  In addition, assume the following holds for any $a \in \{0,1\}$.
    \begin{enumerate}[label=\textbf{\alph*.}]
        \item \label{a_data_cov}(Covariance function.)  The covariance function $K_a$ is continuous and there exist absolute constants $C_{\lambda}, C'_{\lambda}>0$ such that the eigenvalues of the covariance operator are decreasing with $j^{-\alpha} \geq C_{\lambda}\lambda_{a,j} \geq C'_{\lambda}\lambda_{a,j+1} + j^{-\alpha-1}$ for $\alpha >1$ and $j \in \mathbb{N}_+$.
    
        \item \label{a_data_snr} (Signal-to-noise ratio.) There exists an absolute constant $C_K>0$ such that $\|\mu_{a,1} -\mu_{a,0}\|^2_{K_a} \geq C_K$.
    
        \item \label{a_data_mean_decay} (Mean difference.) There exists a constant $C_\mu >0$ such that for any $j \in \mathbb{N_+}$, it holds that $ |\langle \mu_{a,1} - \mu_{a,0}, \phi_{a,j}\rangle_{L^2}| \leq C_\mu j^{-\beta}$ with $\beta > (\alpha+1)/2$. 
    \end{enumerate}
\end{assumption}

In \Cref{a_class_prob}, we assume that the class probabilities are bounded away from $0$ and $1$, essential to ensure that a sufficient number of samples for each group can be observed in both the training and calibration steps. 

Assumption \ref{a_data} characterises the properties of the functional features. 
In particular, \Cref{a_data}\ref{a_data_cov} specifies the decaying rate of eigenvalues and quantifies the spacings between two consecutive eigenvalues.  This is commonly seen in the FDA literature \citep[e.g.][]{hall2007methodology,dou2012estimation} involving eigenfunction estimations. \Cref{a_data}\ref{a_data_snr}~imposes a lower bound on the magnitude of the signal-to-noise ratio $\|\mu_{a,1} -\mu_{a,0}\|^2_{K_a}$. We thus exclude the trivial classification regime and preclude the classifier from degenerating into random guessing. \Cref{a_data}\ref{a_data_mean_decay} enforces the alignment between the mean difference and the eigenspace, with larger values of $\beta$ indicating better alignment. Assumptions \ref{a_data}\ref{a_data_cov}, \ref{a_data}\ref{a_data_snr}~and \ref{a_data}\ref{a_data_mean_decay}~jointly imply that $\|\mu_{a,1} -\mu_{a,0}\|^2_{K_a} \asymp 1$, and that the tail sum satisfies $\sum_{j=J+1}^\infty (\theta_{a,1,j}-\theta_{a,0,j})^2/\lambda_j \lesssim J^{\alpha-2\beta+1}$ for any $J \in \mathbb{N}_+$. Throughout the paper, we assume that the decay rates $\alpha$ and $\beta$ are invariant for $a \in \{0,1\}$, but this assumption can be easily generalised to allow for different decaying rates between groups.

With the above assumptions, we first establish the fairness guarantee of \Cref{alg:plug-in}.

\begin{theorem}[Fairness guarantee] \label{thm_fair_guarantee}
    For any $\delta >0$ and bilinear disparity measure $\mathrm{D}$ in \Cref{def:linear_disparity}, let $\widehat{f}_{\mathrm{D}, \delta}$ denote the classifier output by \Cref{alg:plug-in}, we have, for any $\eta \in (0,1/2)$,~that
    \begin{equation*}
        \mathbb{P}\Big\{|\mathrm{D}(\widehat{f}_{\mathrm{D}, \delta})|\leq \delta + C\sqrt{\frac{\log(1/\eta)}{n}}\Big\} \geq 1-\eta,
    \end{equation*}
    where $C>0$ is an absolute constant.
\end{theorem}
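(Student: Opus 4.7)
The plan is to leverage the empirical feasibility of $\widehat{\tau}_{\mathrm{D},\delta}$ guaranteed by construction in \Cref{alg:plug-in}, namely $|\widehat{\mathrm{D}}(\widehat{\tau}_{\mathrm{D},\delta})|\leq \delta$, and transfer it to the population disparity through a uniform concentration argument. The triangle inequality gives
\[
|\mathrm{D}(\widehat{f}_{\mathrm{D},\delta})| \leq \delta + \bigl|\mathrm{D}(\widehat{g}_{\mathrm{D},\widehat{\tau}_{\mathrm{D},\delta}}) - \widehat{\mathrm{D}}(\widehat{\tau}_{\mathrm{D},\delta})\bigr| \leq \delta + \sup_{\tau \in \mathbb{R}}\bigl|\mathrm{D}(\widehat{g}_{\mathrm{D},\tau}) - \widehat{\mathrm{D}}(\tau)\bigr|,
\]
so the task reduces to bounding this supremum by $O(\sqrt{\log(1/\eta)/n})$ with probability at least $1-\eta$.

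Unpacking the bilinear form in \Cref{def:linear_disparity} and splitting by $(a,y) \in \{0,1\}^2$, the supremum decomposes into at most four terms, each controlled by a quantity of the form
\[
|c_{\mathrm{D},a}| \cdot \sup_{\tau\in\mathbb{R}} \left| \frac{1}{n_{a,y}} \sum_{i=1}^{n_{a,y}} \widehat{g}_{\mathrm{D},\tau}(X_{a,y}^i, a) - \mathbb{E}\bigl[\widehat{g}_{\mathrm{D},\tau}(X,a)\,\big|\,A=a, Y=y, \widetilde{\mathcal{D}}\bigr] \right|,
\]
with $c_{\mathrm{D},a} \in \{s_{\mathrm{D},a}, b_{\mathrm{D},a}\}$ bounded under \Cref{a_class_prob}. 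I would condition on the training data $\widetilde{\mathcal{D}}$, which freezes $\widehat{\eta}_a$ and $\widehat{\pi}_{a,y}$. A short algebraic simplification shows $\widehat{g}_{\mathrm{D},\tau}(x,a) = \indc\{\widehat{\eta}_a(x) \in I_a(\tau)\}$ where $I_a(\tau)$ is a half-line in $\mathbb{R}$ whose endpoint and orientation depend continuously on $\tau$. The Dvoretzky--Kiefer--Wolfowitz inequality, applied to the conditional empirical distribution of the scalar random variable $\widehat{\eta}_a(X_{a,y}^i)$, then delivers the uniform rate $O(\sqrt{\log(1/\eta)/n_{a,y}})$ with probability at least $1 - \eta$ for each $(a,y)$. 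Finally, \Cref{a_class_prob} together with a standard Hoeffding bound on $n_{a,y} = \sum_i \indc\{A_i = a, Y_i = y\}$ converts $n_{a,y}$ into $n$ up to absolute constants, and a union bound over the four pairs $(a,y)$ completes the argument.

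The main technical obstacle is that $\widehat{\tau}_{\mathrm{D},\delta}$ is selected using the very calibration data that defines $\widehat{\mathrm{D}}$, so a pointwise concentration at a deterministic $\tau$ does not suffice; uniformity in $\tau$ is essential. The key observation that makes this manageable is that, conditional on the training data, $\widehat{g}_{\mathrm{D},\tau}$ is a threshold rule on the one-dimensional statistic $\widehat{\eta}_a(\cdot)$, so the relevant empirical process reduces to a one-dimensional CDF problem amenable to DKW rather than requiring heavier chaining machinery. A secondary but routine issue is that for disparity measures such as $\mathrm{DD}$ the bilinear coefficients themselves depend on unknown $\pi_{a,y}$; these are replaced by their empirical analogues and the resulting $O(n^{-1/2})$ perturbation is absorbed into the constant $C$.
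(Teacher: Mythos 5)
Your proposal matches the paper's argument essentially step for step: condition on the training data $\widetilde{\mathcal{D}}$, observe that $\widehat{g}_{\mathrm{D},\tau}$ is a one-dimensional threshold rule in $\widehat{\eta}_a$, apply the Dvoretzky--Kiefer--Wolfowitz inequality to get a uniform-in-$\tau$ rate of $O(\sqrt{\log(1/\eta)/n_{a,y}})$, and finish with a union bound over $(a,y)$ together with the event $n_{a,y}\asymp n$ from \Cref{l_n_ay}. Your added remark that the DD coefficients depend on unknown $\pi_{a,y}$ is a reasonable point of caution, but the paper's proof compares $\widehat{\mathrm{D}}(\tau)$ against $\mathrm{D}(\widehat{g}_{\mathrm{D},\tau})$ with the same $s_{\mathrm{D},a},b_{\mathrm{D},a}$ on both sides, so that issue is absorbed exactly as you suggest.
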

\Cref{thm_fair_guarantee} is proved in \Cref{appendix_fair_guarantee} and demonstrates that with probability at least $1 - \eta$, after the calibration step, the disparity level of $\widehat{f}_{\mathrm{D}, \delta}$ does not exceed the pre-specified level $\delta$ by a small offset term up to $O(\sqrt{\log(1/\eta)/n})$. The magnitude of the offset term is quantified by the high probability upper bound on $|\widehat{\mathrm{D}}(\widehat{\tau}_{\mathrm{D},\delta})-\mathrm{D}(\widehat{\tau}_{\mathrm{D},\delta})|$, which measures the deviation of the empirical distribution from its corresponding population counterpart.

If one instead insists on controlling the population unfairness $\mathrm{D}(\widehat{f}_{\mathrm{D}, \delta})$ below $\delta$, then provided that $\sqrt{\log(1/\eta)/n} \lesssim \delta$, it suffices to adjust the input $\delta$ to $\delta - C\sqrt{\log(1/\eta)/n}$, i.e.~set $\widehat{\tau}_{\mathrm{D},\delta} = \argmin_{\tau \in \mathbb{R}}\{|\tau|: \, |\widehat{\mathrm{D}}(\tau)| \leq \delta- C\sqrt{\log(1/\eta)/n}\}$ in \textbf{S2} of \Cref{alg:plug-in}. We refer to the resulting method as the Fair Functional Linear Discriminant Analysis classifier with Calibration (Fair-$\mathrm{FLDA_c}$). Numerical results comparing the performance of Fair-FLDA and Fair-$\mathrm{FLDA_c}$ are presented in \Cref{section_numerical}.

We next present the excess risk control of our method. As a preliminary step, we establish the misclassification error of the oracle classifier in \Cref{thm_misclassification}.
\begin{proposition}[Misclassification error] \label{thm_misclassification}
    Under the model setup in \Cref{section_algorithm}, for any $\delta \geq 0$ and bilinear disparity measure $\mathrm{D}$ such that $\pi_{a,1} - \tau_{\mathrm{D}, \delta}^\star s_{\mathrm{D},a}>0$ and $\pi_{a,0}+\tau_{\mathrm{D}, \delta}^\star b_{\mathrm{D},a}>0$, the corresponding misclassification error for $f_{D, \delta}^\star$ defined in \Cref{thm:fair_bayes_opt} is given by 
    \begin{align*}
	R(f_{\mathrm{D}, \delta}^\star)  =\;& \sum_{a \in \{0,1\}} \pi_{a, 0} \Phi\bigg[ - \frac{\|\mu_{a,1} -\mu_{a,0}\|_{K_a}}{2} - \frac{\log\big\{ (\pi_{a,0}+\tau_{\mathrm{D}, \delta}^\star b_{\mathrm{D},a})/(\pi_{a,1} - \tau_{\mathrm{D}, \delta}^\star s_{\mathrm{D},a}) \big\}}{ \|\mu_{a,1} -\mu_{a,0}\|_{K_a} }  \bigg] \\
	&  +\sum_{a \in \{0,1\}} \pi_{a,1}  \Phi\bigg[ - \frac{\|\mu_{a,1} -\mu_{a,0}\|_{K_a}}{2} + \frac{\log\big\{ (\pi_{a,0}+\tau_{\mathrm{D}, \delta}^\star b_{\mathrm{D},a})/(\pi_{a,1} - \tau_{\mathrm{D}, \delta}^\star s_{\mathrm{D},a}) \big\}}{ \|\mu_{a,1} -\mu_{a,0}\|_{K_a} }  \bigg],
    \end{align*}
    where $\Phi$ is the cumulative distribution function of the standard normal distribution.
\end{proposition}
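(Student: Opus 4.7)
\medskip
\noindent\textbf{Proof proposal.}
The plan is to decompose the misclassification error by conditioning on $(A,Y)$ and then evaluate each error probability directly using the Gaussian structure of the log Radon--Nikodym derivative. First, I would split
\[
R(f_{\mathrm{D},\delta}^\star) \;=\; \sum_{a\in\{0,1\}} \pi_{a,0}\,P_{a,0}\{f_{\mathrm{D},\delta}^\star(X,a)=1\} \;+\; \sum_{a\in\{0,1\}} \pi_{a,1}\,P_{a,1}\{f_{\mathrm{D},\delta}^\star(X,a)=0\}.
\]
Since $\pi_{a,1}-\tau_{\mathrm{D},\delta}^\star s_{\mathrm{D},a}>0$ and $\pi_{a,0}+\tau_{\mathrm{D},\delta}^\star b_{\mathrm{D},a}>0$ by hypothesis, the explicit form of $f_{\mathrm{D},\delta}^\star$ in \eqref{eq_f_star} reduces the task to evaluating $P_{a,0}\{\eta_a(X)\geq c_a\}$ and $P_{a,1}\{\eta_a(X)<c_a\}$, where
\[
c_a \;=\; \frac{\pi_{a,0}+\tau_{\mathrm{D},\delta}^\star b_{\mathrm{D},a}}{\pi_{a,1}-\tau_{\mathrm{D},\delta}^\star s_{\mathrm{D},a}}.
\]

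Next, I would work in the spectral coordinates of $K_a$. Writing $\zeta_{a,j}=\langle X,\phi_{a,j}\rangle_{L^2}$ and $\Delta_{a,j}=(\theta_{a,1,j}-\theta_{a,0,j})/\sqrt{\lambda_{a,j}}$, the closed-form Cameron--Martin expression displayed just before \Cref{rmk:perfect} gives
\[
\log\eta_a(X) \;=\; \sum_{j=1}^\infty \Delta_{a,j}\,\frac{\zeta_{a,j}-\theta_{a,0,j}}{\sqrt{\lambda_{a,j}}} \;-\; \tfrac{1}{2}\sum_{j=1}^\infty \Delta_{a,j}^2 .
\]
Under $P_{a,0}$, the standardised scores $(\zeta_{a,j}-\theta_{a,0,j})/\sqrt{\lambda_{a,j}}$ are i.i.d.\ $\mathcal{N}(0,1)$, so $\log\eta_a(X)$ is Gaussian with mean $-\tfrac{1}{2}\|\mu_{a,1}-\mu_{a,0}\|_{K_a}^2$ and variance $\|\mu_{a,1}-\mu_{a,0}\|_{K_a}^2$. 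Under $P_{a,1}$, the shift $\zeta_{a,j}-\theta_{a,0,j}=(\zeta_{a,j}-\theta_{a,1,j})+\sqrt{\lambda_{a,j}}\,\Delta_{a,j}$ immediately yields a Gaussian with mean $+\tfrac{1}{2}\|\mu_{a,1}-\mu_{a,0}\|_{K_a}^2$ and the same variance.

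Finally, I would standardise each of $P_{a,0}\{\log\eta_a(X)\geq\log c_a\}$ and $P_{a,1}\{\log\eta_a(X)<\log c_a\}$ and use the symmetry $1-\Phi(z)=\Phi(-z)$ to arrive at the two $\Phi$-expressions in the claim. The only non-mechanical step is justifying the almost sure convergence of the series $\sum_j\Delta_{a,j}Z_{a,j}$ under both $P_{a,0}$ and $P_{a,1}$; this follows from $\mu_{a,1}-\mu_{a,0}\in\mathcal{H}(K_a)$, i.e.\ $\sum_j \Delta_{a,j}^2=\|\mu_{a,1}-\mu_{a,0}\|_{K_a}^2<\infty$, which is guaranteed by \Cref{a_data}\ref{a_data_cov} and \Cref{a_data}\ref{a_data_mean_decay}. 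Everything else is direct Gaussian CDF bookkeeping.
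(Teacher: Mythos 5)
Your proposal is correct and follows essentially the same route as the paper: both identify $\log\eta_a(X)$ (equivalently $\Lambda_a = \langle X-\mu_{a,0},\mu_{a,1}-\mu_{a,0}\rangle_{K_a}$) as Gaussian with variance $\|\mu_{a,1}-\mu_{a,0}\|_{K_a}^2$ and means $\mp\tfrac12\|\mu_{a,1}-\mu_{a,0}\|_{K_a}^2$ under $P_{a,0}$ and $P_{a,1}$ respectively, then decompose the risk by $(A,Y)$ and plug in the Gaussian CDF using the thresholding form of $f^\star_{\mathrm{D},\delta}$ in \eqref{eq_f_star}. The only cosmetic difference is that you spell out the standardisation and the convergence of the series, whereas the paper defers these to the argument in Theorem 2 of \citet{berrendero2018use}.
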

See \Cref{appendix_proof_misclassification_error} for the proof of \Cref{thm_misclassification}. Compared to the standard excess risk without fairness constraints, $R(f^\star_{\mathrm{D},\infty})$ \citep[e.g.~Theorem 2 in][]{berrendero2018use}, the cost of fairness constraints is quantified by the logarithmic term involving $\tau^\star_{\mathrm{D},\delta}$, resulted from the adjusted group-wise thresholds in~\eqref{eq_f_star}. Note that as shown in \Cref{prop:D_and_R} in \Cref{appendix_bayes_optimal_auxiliary}, a decrease in $\delta$, i.e.~stronger fairness constraints, leads to an increase in $|\tau^\star_{\mathrm{D},\delta}|$, hence a larger misclassification error $R(f_{\mathrm{D}, \delta}^\star)$. When $f^\star_{\mathrm{D},\infty}$ is automatically fair, i.e.~$\tau^*_{\mathrm{D},\delta}=0$, \Cref{thm_misclassification} recovers Theorem 2 in \citet{berrendero2018use}. 

Moving towards excess risk controls, as discussed in \citet{zeng2024minimax}, for any $f\in \mathcal{F}$, the traditional excess risk $R(f) -R(f_{\mathrm{D}, \delta}^\star)$ may be negative as $f_{\mathrm{D}, \delta}^\star$ does not necessarily minimise the excess risk, i.e.~$f_{\mathrm{D}, \delta}^\star \notin \argmin_{f\in \mathcal{F}} R(f)$.  To make a meaningful control of the misclassification error, we resort to the quantity $|R(f) -R(f_{\mathrm{D}, \delta}^\star)|$, which can be further decomposed as the non-negative fairness-aware excess risk $d_E(f, f^{\star}_{\mathrm{D}, \delta})$ (defined in \Cref{def-fairness-aware-excess-risk}) and a disparity cost $\tau_{\mathrm{D}, \delta}^\star \{\mathrm{D}(f_{\mathrm{D}, \delta}^\star) - \mathrm{D}(f)\}$, that
\begin{equation}\label{eq-R(f)-decomp}
    |R(f) -R(f_{\mathrm{D}, \delta}^\star)| = |d_E(f, f_{\mathrm{D}, \delta}^\star) + \tau_{\mathrm{D}, \delta}^\star \{\mathrm{D}(f_{\mathrm{D}, \delta}^\star) - \mathrm{D}(f)\}| \leq  d_E(f, f_{\mathrm{D}, \delta}^\star) + |\tau_{\mathrm{D}, \delta}^\star| |\mathrm{D}(f_{\mathrm{D}, \delta}^\star) - \mathrm{D}(f)|.
\end{equation}
The derivation of \eqref{eq-R(f)-decomp} is directly via the definition below.

\begin{definition}[Fairness-aware excess risk] \label{def-fairness-aware-excess-risk}
    For $\delta \geq 0$, let $f_{\mathrm{D}, \delta}^\star$ be a $\delta$-fair Bayes optimal classifier defined in \eqref{eq_bayes_fair}, recalling $\tau^\star_{\mathrm{D},\delta}$ in \eqref{eq:taustar} and $s_{\mathrm{D}, a}$, $b_{\mathrm{D},a}$ in \Cref{def:linear_disparity}. For any classifier $f: \Xspace \times \{0,1\} \rightarrow [0,1]$, the fairness-aware excess risk, $d_E(f, f_{\mathrm{D}, \delta}^\star)$,  is defined as
	\begin{align*}
		&d_E(f, f_{\mathrm{D}, \delta}^\star)\\
        =\;& \sum_{a \in \{0,1\}}  \int_\Xspace\big\{ f(x, a) - f_{\mathrm{D}, \delta}^\star(x, a) \big\} \Big[ \big(\pi_{a, 0} + \tau_{\mathrm{D}, \delta}^\star b_{\mathrm{D},a}\big)- \big(\pi_{a,1}-\tau_{\mathrm{D}, \delta}^\star s_{\mathrm{D}, a}\big)\frac{\mathrm{d}P_{a, 1}}{\mathrm{d}P_{a,0}}(x)\Big]\; \mathrm{d}P_{a, 0}(x).
	\end{align*}
\end{definition}
We are now ready to present the excess risk control for \Cref{alg:plug-in} in \Cref{thm_fairness_general}.
\begin{theorem} \label{thm_fairness_general}
     Denote $\epsilon_\pi$, $\epsilon_\eta$ and $\epsilon_{\mathrm{D}}$ the estimation error related to $\widehat{\pi}_{a,y}$, $\widehat{\eta}_a$ and $\widehat{\mathrm{D}}$, i.e.~for any small $\eta \in (0,1/2)$, $a,y \in \{0,1\}$ and $X \sim \mathcal{GP}(\mu_{a,y},K_a)$, it holds with probability at least $1-\eta/3$ that
     \begin{equation} \label{thm_fairness_general_eq1}
         |\widehat{\pi}_{a,y}-\pi_{a,y}| \leq \epsilon_\pi,\quad |\log\{\widehat{\eta}_a(X)\} -\log\{\eta_a(X)\}|  \leq \epsilon_\eta \quad \mbox{and} \quad \sup_{\tau \in \mathbb{R}}|\widehat{\mathrm{D}}(\tau) - \mathrm{D}(\tau)|\leq \epsilon_\mathrm{D}.
     \end{equation}
    Suppose that Assumptions \ref{a_class_prob} and \ref{a_data} hold; and that for any $\delta \geq 0$, any bilinear disparity measure $\mathrm{D}$ defined in \Cref{def:linear_disparity}, satisfying that (i) $\mathrm{D}(0)\notin (\delta - \epsilon_\mathrm{D},\delta] \cup [-\delta, -\delta+\epsilon_\mathrm{D})$, (ii) $\pi_{a,1} - \tau_{\mathrm{D}, \delta}^\star s_{\mathrm{D},a} \geq c_1$, $\pi_{a,0}+\tau_{\mathrm{D}, \delta}^\star b_{\mathrm{D},a} \geq c_1$, with $\max\{|s_{\mathrm{D},a}|,|b_{\mathrm{D},a}|\} \leq c_2$, and (iii) $|\mathrm{D}(\tau_{\mathrm{D}, \delta}^\star) - \mathrm{D}(\tau_{\mathrm{D}, \delta}^\star+\xi)| \geq C_\mathrm{D}|\xi|^\frac{1}{\gamma}$ for any~$\xi$ in the small neighbourhood of $0$ and some $\gamma >0$,
   where $c_1, c_2 ,C_\mathrm{D}>0$ are absolute constants.
    
    It holds with probability at least $1-\eta$, $\eta \in (0,\epsilon_\pi + \epsilon_\eta+\epsilon_\tau)$,  that the classifier $\widehat{f}_{\mathrm{D}, \delta}$ output by \Cref{alg:plug-in} satisfies that
    \begin{equation} \label{eq-thm5-Rf-upper-bound}            
        |R(\widehat{f}_{\mathrm{D}, \delta}) -R(f_{\mathrm{D}, \delta}^\star)| \lesssim d_E(f, f_{\mathrm{D}, \delta}^\star) + |\tau_{\mathrm{D}, \delta}^\star|\sqrt{\frac{\log(1/\eta)}{n}},
    \end{equation}
    where $d_E(\widehat{f}_{\mathrm{D}, \delta}, f_{\mathrm{D}, \delta}^\star) \lesssim (\epsilon_\pi + \epsilon_\eta+\epsilon_\tau)^2$ with $\epsilon_\tau = |\widehat{\tau}_{\mathrm{D},\delta} - \tau_{\mathrm{D},\delta}^\star|  \lesssim \epsilon_\mathrm{D}^{\gamma}\indc\{\tau_{\mathrm{D}, \delta}^\star \neq 0\}$.    
\end{theorem}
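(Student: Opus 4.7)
The plan is to exploit the decomposition in \eqref{eq-R(f)-decomp} with $f = \widehat{f}_{\mathrm{D}, \delta}$, giving
\begin{equation*}
|R(\widehat{f}_{\mathrm{D}, \delta}) - R(f^\star_{\mathrm{D}, \delta})| \leq d_E(\widehat{f}_{\mathrm{D}, \delta}, f^\star_{\mathrm{D}, \delta}) + |\tau^\star_{\mathrm{D}, \delta}| \cdot |\mathrm{D}(f^\star_{\mathrm{D}, \delta}) - \mathrm{D}(\widehat{f}_{\mathrm{D}, \delta})|,
\end{equation*}
and to control the threshold estimation error $\epsilon_\tau$, the fairness-aware excess risk $d_E$, and the disparity gap separately.

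\textbf{Control of $\epsilon_\tau$.} I would first split on whether $\tau_{\mathrm{D}, \delta}^\star = 0$. When $\tau_{\mathrm{D}, \delta}^\star = 0$, hypothesis~(i) forces $|\mathrm{D}(0)| \leq \delta - \epsilon_\mathrm{D}$, and coupled with $|\widehat{\mathrm{D}}(0) - \mathrm{D}(0)| \leq \epsilon_\mathrm{D}$ this gives $|\widehat{\mathrm{D}}(0)| \leq \delta$, so that $\widehat{\tau}_{\mathrm{D}, \delta} = 0$ and $\epsilon_\tau = 0$. When $\tau_{\mathrm{D}, \delta}^\star \neq 0$, the continuity and monotonicity of $\mathrm{D}(\tau)$ from \Cref{prop:D_and_R} yield $\mathrm{D}(\tau_{\mathrm{D}, \delta}^\star) = \mathrm{sign}(\tau_{\mathrm{D}, \delta}^\star)\,\delta$; combined with $|\widehat{\mathrm{D}}(\widehat{\tau}_{\mathrm{D}, \delta})| \leq \delta$ and $\sup_\tau|\widehat{\mathrm{D}}(\tau) - \mathrm{D}(\tau)| \leq \epsilon_\mathrm{D}$, this sandwiches $\mathrm{D}(\widehat{\tau}_{\mathrm{D}, \delta})$ within $\epsilon_\mathrm{D}$ of $\mathrm{D}(\tau_{\mathrm{D}, \delta}^\star)$, and inverting the Hölder-type lower bound in hypothesis~(iii) delivers $|\widehat{\tau}_{\mathrm{D}, \delta} - \tau_{\mathrm{D}, \delta}^\star| \lesssim \epsilon_\mathrm{D}^\gamma$.

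\textbf{Control of $d_E$.} This is the technical core of the argument. The integrand in \Cref{def-fairness-aware-excess-risk} is supported on the disagreement set $\mathcal{E}_a = \{x : \widehat{f}_{\mathrm{D}, \delta}(x, a) \neq f^\star_{\mathrm{D}, \delta}(x, a)\}$. Writing both indicators as $\mathrm{sign}(\log\eta_a(x) - T_a^\star)$, where $T_a^\star = \log\{(\pi_{a, 0} + \tau^\star_{\mathrm{D}, \delta} b_{\mathrm{D}, a})/(\pi_{a, 1} - \tau^\star_{\mathrm{D}, \delta} s_{\mathrm{D}, a})\}$ with plug-in analogue $\widehat T_a$, a disagreement on $\mathcal{E}_a$ implies $|\log\eta_a(x) - T_a^\star| \leq |\log\widehat{\eta}_a(x) - \log\eta_a(x)| + |\widehat T_a - T_a^\star| \lesssim \epsilon_\eta + \epsilon_\pi + \epsilon_\tau$, where hypothesis~(ii) linearises the log-threshold difference. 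Under the Gaussian process model, $\log\eta_a(X)$ is a univariate Gaussian under $P_{a, 0}$ with variance $\|\mu_{a, 1} - \mu_{a, 0}\|_{K_a}^2$ bounded below by \Cref{a_data}\ref{a_data_snr}, so its density is uniformly bounded and $P_{a, 0}(\mathcal{E}_a) \lesssim \epsilon_\eta + \epsilon_\pi + \epsilon_\tau$. A first-order expansion around $\log\eta_a(x) = T_a^\star$ also shows the bracket $(\pi_{a, 0} + \tau^\star_{\mathrm{D}, \delta} b_{\mathrm{D}, a}) - (\pi_{a, 1} - \tau^\star_{\mathrm{D}, \delta} s_{\mathrm{D}, a})\eta_a(x)$ has absolute value $O(\epsilon_\eta + \epsilon_\pi + \epsilon_\tau)$ on $\mathcal{E}_a$. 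Multiplying the margin window by the bracket magnitude produces $d_E(\widehat{f}_{\mathrm{D}, \delta}, f^\star_{\mathrm{D}, \delta}) \lesssim (\epsilon_\pi + \epsilon_\eta + \epsilon_\tau)^2$.

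\textbf{Disparity gap and assembly.} Because $\widetilde{\mathcal{D}}$ and $\mathcal{D}$ are independent, conditionally on the training data the plug-in classifier $\widehat{g}_{\mathrm{D}, \tau}$ is deterministic in $\tau$, so Hoeffding's inequality applied to the calibration sums defining $\widehat{\mathrm{D}}(\tau)$ yields $\sup_\tau|\widehat{\mathrm{D}}(\tau) - \mathrm{D}(\widehat{g}_{\mathrm{D}, \tau})| \lesssim \sqrt{\log(1/\eta)/n}$. When the fairness constraint is active, $\widehat{\mathrm{D}}(\widehat{\tau}_{\mathrm{D}, \delta}) = \mathrm{D}(\tau_{\mathrm{D}, \delta}^\star) = \mathrm{sign}(\tau^\star_{\mathrm{D}, \delta})\,\delta$ with matching signs by the case analysis above, so $|\mathrm{D}(\widehat{f}_{\mathrm{D}, \delta}) - \mathrm{D}(f^\star_{\mathrm{D}, \delta})| = |\mathrm{D}(\widehat{g}_{\mathrm{D}, \widehat{\tau}_{\mathrm{D}, \delta}}) - \widehat{\mathrm{D}}(\widehat{\tau}_{\mathrm{D}, \delta})| \lesssim \sqrt{\log(1/\eta)/n}$. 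Substituting these three bounds into the decomposition of the first step produces \eqref{eq-thm5-Rf-upper-bound}. The main obstacle is the $d_E$ step, in which the linearisation of the indicator discrepancy must be paired with the uniformly bounded density of $\log\eta_a(X)$ to extract a genuine squared rate; the remaining estimates are bookkeeping around the concentration and continuity tools already supplied by \Cref{thm_fair_guarantee} and \Cref{prop:D_and_R}.
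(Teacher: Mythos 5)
Your proposal follows the paper's proof faithfully: the same decomposition of $|R(\widehat f_{\mathrm{D},\delta})-R(f^\star_{\mathrm{D},\delta})|$ via \eqref{eq-R(f)-decomp}, the same inversion of the H\"older condition (iii) together with the argmin structure of $\widehat\tau_{\mathrm{D},\delta}$ to control $\epsilon_\tau$, the same margin-plus-bounded-density argument producing the $\epsilon^2$ rate for $d_E$ (in the paper this is the observation $|1-e^{T_a(X)}|\leq -T_a(X)\leq\widehat T_a(X)-T_a(X)\leq\epsilon_T$ combined with the bounded Gaussian density of $T_a$), and the DKW argument from \Cref{thm_fair_guarantee} for the disparity cost. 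Two places where the paper is more careful and your sketch glosses over: the implication ``disagreement implies $|\log\eta_a(x)-T_a^\star|\lesssim\epsilon$'' holds only on the high-probability event $\mathcal{E}_{T_a}$ of \eqref{pf_thm_fairness_eq2}, so $d_E$ picks up an additive $\mathbb{P}(\mathcal{E}_{T_0}^c\cup\mathcal{E}_{T_1}^c)\leq\eta$ that must be absorbed by the restriction on $\eta$; and the equality $\widehat{\mathrm{D}}(\widehat\tau_{\mathrm{D},\delta})=\mathrm{sign}(\tau^\star_{\mathrm{D},\delta})\delta$ is not exact since $\widehat{\mathrm{D}}$ is a step function and the argmin can overshoot, so the paper bounds the disparity gap directly through $|\widehat{\mathrm{D}}(\widehat\tau)-\mathrm{D}(\widehat\tau)|\lesssim\sqrt{\log(1/\eta)/n}$ rather than via an intermediate equality.
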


\Cref{thm_fairness_general} provides a general characterisation of fairness-aware excess risk and traditional excess risk when $\mathrm{D}(0)$ is not too close to $\delta$, i.e.~when $f^\star_{\mathrm{D},\infty}$ is either sufficiently fair or unfair. We impose two extra assumptions for $\mathrm{D}$, with the first one controlling the behaviours of $\tau^\star_{\mathrm{D},\delta}$ near the boundaries. This condition is used to ensure that $\widehat{\tau}_{\mathrm{D},\delta}$ lies within the range of its estimated counterparts, i.e.~$\widehat{\pi}_{a,1} - \widehat{\tau}_{\mathrm{D}, \delta} s_{\mathrm{D},a}>0$ and $\widehat{\pi}_{a,0}+\widehat{\tau}_{\mathrm{D}, \delta} b_{\mathrm{D},a}>0$. This guarantees that the misclassification error $R(\widehat{f}_{\mathrm{D},\delta})$ retains a well-structured form, analogous to that in \Cref{thm_misclassification}. The second assumption controls the steepness of $\mathrm{D}$ in a small neighbourhood of $\tau^\star_{\mathrm{D},\delta}$, with larger values of $\gamma$ corresponding to greater steepness. In the fair-impacted case, i.e.~$\tau^\star_{\mathrm{D},\delta} \neq 0$, if $\mathrm{D}$ is potentially very flat near $\tau^\star_{\mathrm{D},\delta}$, i.e.~$\gamma$ is small, the estimation problem becomes more challenging, resulting in a larger estimation error $\epsilon_\tau$ (as illustrated in \Cref{fig:D_tau}). We remark that when $\mathrm{D}$ is explicitly given, both of the above assumptions can be verified in most of the cases, see \Cref{thm_fairness} as an example. 

\begin{figure*}[!htbp]
	\begin{center}
		\newcommand{\thiswidth}{0.25\linewidth}
		\newcommand{\thisgap}{0mm}
		\begin{tabular}{cc}
			\hspace{\thisgap}\includegraphics[width=\thiswidth]{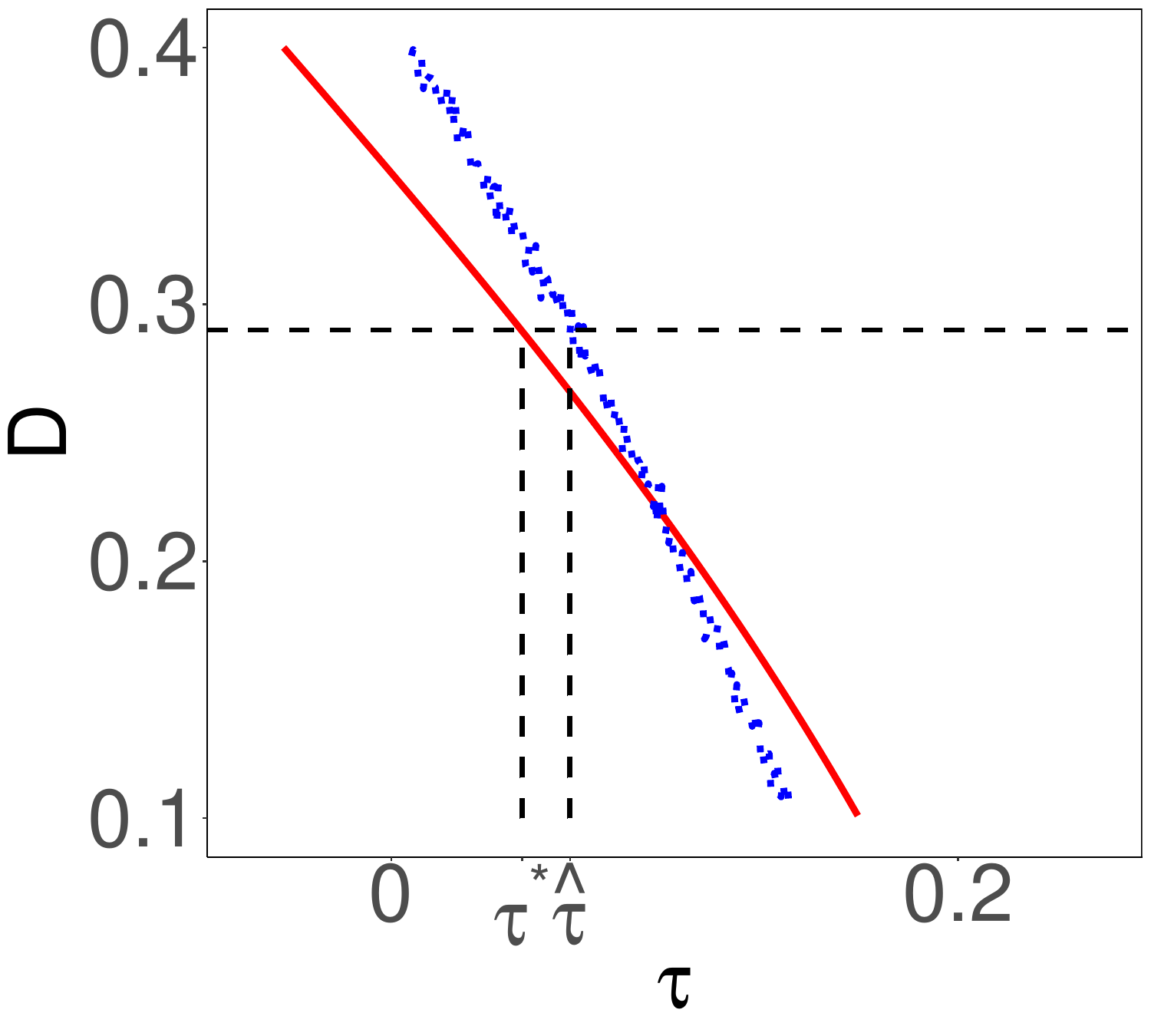} &
			\hspace{\thisgap}\includegraphics[width=\thiswidth]{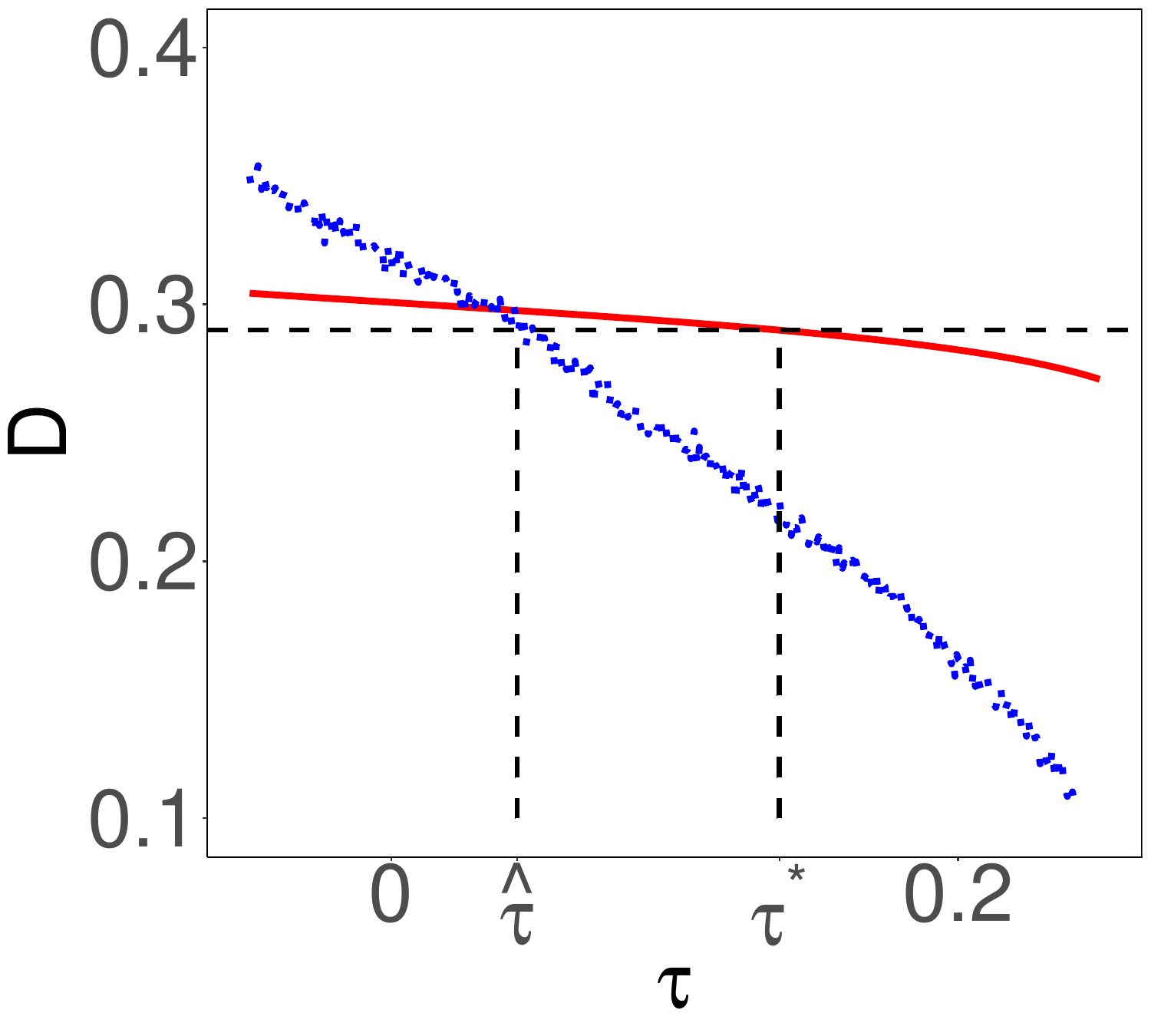} 
		\end{tabular}
		\caption{Effects of steepness of disparity levels on the estimation error of $\tau^\star_{\mathrm{D},\delta}$. Left and right panels illustrate steep and flat $\mathrm{D}(\cdot)$.  Red solid line: $\mathrm{D}(\cdot)$.  Blue dotted line: $\widehat{\mathrm{D}}(\cdot)$.}
		\label{fig:D_tau}
	\end{center}
\end{figure*}

When $\tau^\star_{\mathrm{D},\delta} \neq 0$, the disparity cost in the right-hand side of \eqref{eq-thm5-Rf-upper-bound} is a direct consequence of the fairness guarantees in \Cref{thm_fair_guarantee}. The fairness-aware excess risk, $d_E(\widehat{f}_{\mathrm{D}, \delta}, f_{\mathrm{D}, \delta}^\star)$, consists of two components, with $(\epsilon_\pi + \epsilon_\eta)^2$ capturing the intrinsic excess risk and $\epsilon_\mathrm{D}^{2\gamma}$ reflecting the cost of fairness. This is the first time seen in the FDA literature, echoing the same spirit in the finite-dimensional classification literature \citep{zeng2024minimax,hou2024finite}. Note that, when $\mathrm{D}$ is chosen as one of the disparity measures in \Cref{def_disparity_measure}, $\epsilon_\mathrm{D}$ can be explicitly controlled as $\epsilon_\mathrm{D} \asymp \epsilon_\pi + \epsilon_\eta +\sqrt{\log(1/\eta)/n}$.

When $\tau_{\mathrm{D},\delta}^\star  =0$, our results unveil the excess risk bound for FLDA, i.e.~$R(\widehat{f}_{\mathrm{D}, \infty}) -R(f_{\mathrm{D}, \infty}^\star) \lesssim (\epsilon_\pi + \epsilon_\eta\big)^2$.  Although without fairness constraints, such a result is also the first time seen in the FDA literature. To the best of our knowledge, excess risk control for functional classification has previously been considered in \citet{meister2016optimal} and \citet{wang2021optimal}, with the former relying on smoothness assumptions of the functional densities and decay rates of the metric entropy of the functional space, and the latter assuming that the eigenfunctions of the covariance operator are explicitly known. Instead, our result in \eqref{eq-thm5-Rf-upper-bound} is established under a more general setting, requiring only weak structural assumptions on the eigenspace. A more detailed comparison with \citet{wang2021optimal} is presented following \Cref{thm_fairness}.

To further illustrate our results, we apply the framework in \Cref{thm_fairness_general} to a setting where $\mathrm{D} = \mathrm{DO}$, the disparity of opportunity. The corresponding results are summarised in \Cref{thm_fairness}. 
\begin{corollary}[Excess risk control under disparity of opportunity] \label{thm_fairness}
    Suppose that the training and calibration data $\mathcal{D} \cup \widetilde{\mathcal{D}}$ are generated under Assumptions \ref{a_class_prob} and \ref{a_data}.  For any $\delta \geq 0$, the classifier $\widehat{f}_{\mathrm{DO}, \delta}$ output by \Cref{alg:plug-in} satisfies the following properties. 
    \begin{enumerate}
        \item  \label{thm_fairness_1}For any truncation level $J \in \mathbb{N}_+$ in \textbf{S1} in \Cref{alg:plug-in} such that 
        \begin{equation} \label{thm_fairness_eq_J}
            J \gtrsim \log^2(J) \;\; \text{and} \;\; J^{2\alpha+2}\log^2(J)\log(\widetilde{n}/\widetilde{\eta}) \lesssim \widetilde{n},
        \end{equation}
        and any arbitrarily small constants $\eta \in (0,n^{-1/2} \wedge \widetilde{n}^{(\alpha-2\beta+1)/(2\beta-\alpha)})$, denote 
        \begin{align*}
        \epsilon_\eta = \begin{cases}\vspace{0.5em}
               \sqrt{\frac{J^{\alpha-2\beta+4}\log(\widetilde{n}/\eta)\log(1/\eta)}{\widetilde{n}}} + \sqrt{J^{\alpha-2\beta+1}\log(1/\eta)}, & \frac{\alpha+1}{2}< \beta\leq \frac{\alpha+2}{2},\\ \vspace{0.5em}
               \sqrt{\frac{J^2\log(\widetilde{n}/\eta)\log(1/\eta)}{\widetilde{n}}}+ \sqrt{J^{\alpha-2\beta+1}\log(1/\eta)}, & \frac{\alpha+2}{2}< \beta\leq \frac{\alpha+3}{2},\\
                \sqrt{\frac{J\log(\widetilde{n}/\eta)\log(1/\eta)}{\widetilde{n}}}+ \sqrt{J^{\alpha-2\beta+1}\log(1/\eta)},& \beta > \frac{\alpha+3}{2}.
            \end{cases}
    \end{align*}
    Then, it holds with probability at least $1-\eta$ that 
        \begin{align*}
            &d_E(\widehat{f}_{\mathrm{DO}, \delta}, f_{\mathrm{DO}, \delta}^\star) \lesssim \epsilon_\eta^2 + \frac{ \log(1/\eta)\indc\{\tau_{D, \delta}^\star \neq 0\}}{n},
        \end{align*}
        if we additionally assume that $\mathrm{DO}(0)\notin (\delta - \epsilon_\eta -\sqrt{\log(1/\eta)/n},\delta] \cup [-\delta, -\delta+\epsilon_\eta +\sqrt{\log(1/\eta)/n})$.

        \item \label{thm_fairness_2} If we further assume that $n \asymp \widetilde{n} \asymp N$ up to poly-logrithmic factors and select the truncation level $J$ in \Cref{alg:plug-in} as
        \[J \asymp N^{\frac{1}{2\alpha+2}} \cdot \indc\Big\{\frac{\alpha+1}{2}< \beta< \frac{3\alpha+2}{2}\Big\} + N^{\frac{1}{2\beta-\alpha}} \cdot \indc\Big\{\beta \geq \frac{3\alpha+2}{2}\Big\},\]
        then it holds that 
        \[d_E(\widehat{f}_{\mathrm{DO}, \delta}, f_{\mathrm{DO}, \delta}^\star) = \begin{cases}\vspace{0.3em}
                O_\mathrm{p}\Big(N^{\frac{\alpha-2\beta+1}{2\alpha+2}}\Big),  & \frac{\alpha+1}{2}< \beta< \frac{3\alpha+2}{2},\\ 
                O_\mathrm{p}\Big(N^{\frac{\alpha-2\beta+1}{2\beta-\alpha}}\Big), & \beta \geq \frac{3\alpha+2}{2},
            \end{cases}\]
        and
        \[|R(\widehat{f}_{\mathrm{DO}, \delta}) -R(f_{\mathrm{DO}, \delta}^\star)| \leq d_E(\widehat{f}_{\mathrm{DO}, \delta}, f_{\mathrm{DO}, \delta}^\star) + |\tau_{\mathrm{DO}, \delta}^\star|O_\mathrm{p}\big(N^{-\frac{1}{2}}\big).\]
    \end{enumerate}    
\end{corollary}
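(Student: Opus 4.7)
The plan is to invoke Theorem \ref{thm_fairness_general} specialised to $\mathrm{D}=\mathrm{DO}$, verify its three hypotheses, and quantify the error quantities $\epsilon_\pi$, $\epsilon_\eta$, $\epsilon_{\mathrm{DO}}$ together with the steepness exponent $\gamma$. From Proposition \ref{prop:bilinear_coef}, $s_{\mathrm{DO},a}=2a-1$ and $b_{\mathrm{DO},a}=0$, so condition~(ii) of Theorem \ref{thm_fairness_general} reduces to keeping $\tau_{\mathrm{DO},\delta}^\star$ bounded away from $\pm\pi_{a,1}$, which follows from Lemma \ref{l_range_tau} and Assumption \ref{a_class_prob}. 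Condition~(i) is precisely the separation hypothesis imposed in Part~1 once the forthcoming bound on $\epsilon_{\mathrm{DO}}$ is substituted. For condition~(iii) I exploit the Gaussian model: since $\log\eta_a(X)$ is Gaussian under each conditioning $(A=a,Y=y)$, each conditional probability appearing in $\mathrm{DO}(\tau)$ equals $\Phi$ evaluated at a smooth function of $\tau$. Differentiability and strict positivity of $\Phi'$ on a bounded interval then yield $|\mathrm{DO}(\tau_{\mathrm{DO},\delta}^\star)-\mathrm{DO}(\tau_{\mathrm{DO},\delta}^\star+\xi)|\gtrsim|\xi|$ near $\xi=0$, so $\gamma=1$ and consequently $\epsilon_\tau\lesssim\epsilon_{\mathrm{DO}}$.

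With $\gamma$ pinned down, I control the three error quantities. The class-probability error is the standard binomial tail $\epsilon_\pi\lesssim\sqrt{\log(1/\eta)/\widetilde n}$. For the disparity error I decompose $\widehat{\mathrm{DO}}(\tau)-\mathrm{DO}(\tau)$ into an empirical-process piece over the one-parameter family of half-spaces $\{\widehat\eta_a(x)\ge c_a(\tau)\}$ plus a plug-in piece from replacing $\eta_a$ by $\widehat\eta_a$ inside the population quantity. The empirical-process piece is controlled by Hoeffding conditional on $\widetilde{\mathcal D}$, combined with a monotonicity-in-$\tau$ argument for the indicator, giving a uniform bound of order $\sqrt{\log(1/\eta)/n}$; the plug-in piece is absorbed into $\epsilon_\pi+\epsilon_\eta$ via anti-concentration of $\log\eta_a(X)$ near the decision boundary. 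Altogether $\epsilon_{\mathrm{DO}}\lesssim\epsilon_\pi+\epsilon_\eta+\sqrt{\log(1/\eta)/n}$.

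The heart of the argument is bounding $\epsilon_\eta=|\log\widehat\eta_a(X)-\log\eta_a(X)|$ when $X\sim\mathcal{GP}(\mu_{a,y},K_a)$. I split this into a truncation bias (the population terms with $j>J$) and a plug-in perturbation on the $J$ retained coordinates. The truncation bias is governed by Assumption \ref{a_data}\ref{a_data_mean_decay}, which yields $\sum_{j>J}(\theta_{a,1,j}-\theta_{a,0,j})^2/\lambda_{a,j}\lesssim J^{\alpha-2\beta+1}$, and a Gaussian concentration bound on the associated linear score in $X$ converts this into an $O(\sqrt{J^{\alpha-2\beta+1}\log(1/\eta)})$ contribution, which is exactly the second summand appearing in each of the three cases of $\epsilon_\eta$. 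For the plug-in piece I invoke the standard FDA perturbation bounds under the eigengap in Assumption \ref{a_data}\ref{a_data_cov}: $|\widehat\lambda_{a,j}-\lambda_{a,j}|$ and $\|\widehat\mu_{a,y}-\mu_{a,y}\|_{L^2}$ are $O(\sqrt{\log(\widetilde n/\eta)/\widetilde n})$ and $\|\widehat\phi_{a,j}-\phi_{a,j}\|_{L^2}\lesssim j^{\alpha+1}\sqrt{\log(\widetilde n/\eta)/\widetilde n}$. Their uniform validity across $j\le J$ is exactly what the side condition \eqref{thm_fairness_eq_J} delivers. Expanding the quadratic-in-$j$ form term-by-term and applying $\lambda_{a,j}^{-1}\asymp j^\alpha$ and $|\theta_{a,1,j}-\theta_{a,0,j}|\lesssim j^{-\beta}$, the dominant sum takes one of three forms depending on whether the mean-, eigenfunction- or eigenvalue-perturbation controls the error, producing exactly the three $\beta$-regimes of $\epsilon_\eta$. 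Plugging $\gamma=1$ and these rates into Theorem \ref{thm_fairness_general} gives $d_E(\widehat f_{\mathrm{DO},\delta},f_{\mathrm{DO},\delta}^\star)\lesssim(\epsilon_\pi+\epsilon_\eta+\epsilon_\tau)^2\lesssim\epsilon_\eta^2+\log(1/\eta)\indc\{\tau_{\mathrm{DO},\delta}^\star\ne0\}/n$, proving Part~1.

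For Part~2, I optimise $J$ by balancing the bias term $J^{\alpha-2\beta+1}$ against the variance term in $\epsilon_\eta^2$. When $\beta\ge(3\alpha+2)/2$ the mean-perturbation term of order $J\log(\widetilde n/\eta)/\widetilde n$ dominates, and balancing gives $J\asymp N^{1/(2\beta-\alpha)}$ with final rate $N^{(\alpha-2\beta+1)/(2\beta-\alpha)}$; in the complementary range $(\alpha+1)/2<\beta<(3\alpha+2)/2$ the eigenfunction-perturbation term dominates and, after collapsing the $\beta$-dependent factors, scales like $J^{2\alpha+2}\log(\widetilde n/\eta)/N$, so $J\asymp N^{1/(2\alpha+2)}$ produces the rate $N^{(\alpha-2\beta+1)/(2\alpha+2)}$. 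Combining $d_E$ with the disparity cost $|\tau_{\mathrm{DO},\delta}^\star|\sqrt{\log(1/\eta)/n}$ inherited from Theorem \ref{thm_fair_guarantee} and \eqref{eq-R(f)-decomp} yields the displayed excess-risk bound. The main obstacle throughout is the three-regime expansion of $\epsilon_\eta$: the delicate bookkeeping of which perturbation term dominates for each $\beta$, and the verification that \eqref{thm_fairness_eq_J} suffices for the uniform eigengap-based bounds to activate, is what forces the piecewise form of the rate and is the only step that materially exceeds routine computation.
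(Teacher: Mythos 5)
Your high-level plan — instantiate the general theorem for $\mathrm{D}=\mathrm{DO}$, verify the three hypotheses, determine $\gamma=1$, and control $\epsilon_\pi$, $\epsilon_\eta$, $\epsilon_{\mathrm{DO}}$ — is the right scaffolding, and your determination of $\gamma=1$ via the Gaussian structure of $\mathrm{DO}$ matches Lemma \ref{l_D_lower_upper_const}. However, there is a genuine gap in the step you yourself identify as the heart of the argument: the perturbation rates you invoke for the eigenspace estimates are not strong enough to reach the stated $\epsilon_\eta$.

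Specifically, you quote $\|\widehat\phi_{a,j}-\phi_{a,j}\|_{L^2}\lesssim j^{\alpha+1}\sqrt{\log(\widetilde n/\eta)/\widetilde n}$ and $|\widehat\lambda_{a,j}-\lambda_{a,j}|\lesssim\sqrt{\log(\widetilde n/\eta)/\widetilde n}$. The first is the naive Davis--Kahan bound (gap $\asymp j^{-\alpha-1}$ against $\|\widehat K - K\|_{L^2}\asymp\widetilde n^{-1/2}$) and the second is Weyl's inequality; neither is what the corollary needs. Lemma \ref{l_eigenfunc_estimation} proves the sharper rate $\|\widehat\phi_{a,j}-\phi_{a,j}\|_{L^2}\lesssim j\sqrt{\log(\widetilde n/\eta)/\widetilde n}$ via a Hall--Horowitz-style expansion that avoids pushing the full covariance error through the worst eigengap, and Lemma \ref{l_eigenvalue} gives $|\widehat\lambda_{a,j}-\lambda_{a,j}|\lesssim j^{-\alpha}\sqrt{\log(\widetilde n/\eta)/\widetilde n}$, so that the relative error $|\widehat\lambda_{a,j}-\lambda_{a,j}|/\lambda_{a,j}$ stays $O(\sqrt{\log(\widetilde n/\eta)/\widetilde n})$ uniformly. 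If instead you expand the quadratic form with your looser bounds, the variance contribution scales like $J^{2\alpha+2}\log(\widetilde n/\eta)/\widetilde n$ rather than the stated $J^{\alpha-2\beta+4}$, $J^2$, or $J$ over $\widetilde n$; under the side condition $J^{2\alpha+2}\log^2(J)\log(\widetilde n/\widetilde\eta)\lesssim\widetilde n$ this variance term barely goes to zero, so the bias--variance tradeoff in Part~2 breaks and none of the three $\beta$-regimes of $\epsilon_\eta$ is recovered. Beyond that, the paper does not merely plug the $L^2$ norm of $\widehat\phi_j-\phi_j$ into Cauchy--Schwarz for the inner products $\langle\mu_{a,1}-\mu_{a,0},\widehat\phi_j-\phi_j\rangle$; Lemmas \ref{l_mean_diff_score_2} and \ref{l_mean_diff_score_3} expand these in the eigenbasis and exploit the alignment decay in Assumption \ref{a_data}\ref{a_data_mean_decay} to save further powers of $j$. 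Without this cross-term analysis, you will again not reach the three-case $\epsilon_\eta$.

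Two secondary issues: for condition~(ii) you cite Lemma \ref{l_range_tau}, which only gives the strict open-interval containment; the quantitative bounded-away statement used here is Lemma \ref{l_tau_boundary}, whose proof requires the signal-to-noise Assumption \ref{a_data}\ref{a_data_snr} to keep $\|\mu_{a,1}-\mu_{a,0}\|_{K_a}$ away from $0$ and $\infty$. And note that the paper in fact proves the corollary directly, with Theorem \ref{thm_fairness_general}'s proof deferring back to it, so a strictly literal invocation of the general theorem is circular under the paper's logical order; your approach would be valid only if Theorem \ref{thm_fairness_general} were given an independent proof first.
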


We defer the proof of \Cref{thm_fairness} to \Cref{appendix_proof_thm_fairness}.
\Cref{thm_fairness}.\ref{thm_fairness_1} shows that the upper bound is of the form,
\begin{equation*}
    d_E(\widehat{f}_{\mathrm{DO}, \delta}, f_{\mathrm{DO}, \delta}^\star) \leq \mbox{variance to estimate $\eta_a$}+\mbox{squared bias due to truncation}+\mbox{cost of fairness},
\end{equation*}
which further highlights the role of the truncation parameter $J$ in determining the final convergence rate through the underlying bias-variance trade-off. Moreover, as the mean difference aligns more strongly with the eigenspace, i.e.~as $\beta$ increases, a smaller estimation variance is observed. 

In \Cref{thm_fairness}.\ref{thm_fairness_2}, we further elaborate on the optimal choice of $J$ and present the corresponding excess risk upper bounds for different values of $\beta$. Although \Cref{thm_fairness}.\ref{thm_fairness_1} classifies the excess risk into three regimes based on the relationship between $\alpha$ and $\beta$, with the extra condition on $J$ in \eqref{thm_fairness_eq_J}, we can only select $J$ to its maximum $N^{1/(2\alpha+2)}$ whenever $(\alpha+1)/2< \beta <(3\alpha+2)/2$, in which the approximation error $J^{\alpha-2\beta+1}$ always dominants. Note that the cost of fairness is masked in the setting when $n \asymp \widetilde{n} \asymp N$, which shares the same finding as \citet{hou2024finite}.

\Cref{thm_fairness} is the first time providing finite-sample guarantees for functional classification under fairness constraints.  There is no existing work in FDA with fairness constraints, and even for the FDA work without fairness, such detailed characterisation is novel in the FDA literature.  Without any predecessors regarding the former, we list some relevant works with the latter point.

The most relevant work is \citet{wang2021optimal}, which studies the binary classification problem for Gaussian processes under the assumption that the underlying eigenfunctions are known. In the case when $\tau^\star_{\mathrm{D},\delta} =0$, our results when $\beta > (\alpha+3)/2$ recover the excess risk rate established in \citet{wang2021optimal}, provided that $J^{2\alpha+2} \lesssim \widetilde{n}$ up to poly-logarithmic factors. We would like to remark that the upper bound on $J$ arises from the need to estimate eigenfunctions. Similar conditions can also be seen in previous work like \citet{hall2007methodology} and \citet{dou2012estimation}. 

Another relevant work is \citet{tony2019high}, in which they study the linear discriminant analysis classifier for $J$-dimensional Gaussian random variables. When $\beta > (\alpha+3)/2$, our key variance term~$J/\widetilde{n}$, resulting from estimating the first $J$ leading terms in $\eta_a$, aligns with the minimax optimal result derived in \citet{tony2019high}. At a high level, $f^\star_{\mathrm{D},\delta}$ can be viewed as applying LDA to an infinite number of principal component scores. By focusing only on the first $J$ scores in the construction of $\widehat{f}_{\mathrm{D},\delta}$, we recover the variance term, which aligns with the corresponding variance term in $J$-dimensional LDA. This phenomenon further supports the perspective in the existing literature that, when functional data are fully observed, the problem complexity becomes comparable to that of finite-dimensional settings, e.g.~functional mean estimation in \citet{zhang2016sparse} and varying coefficient model estimation in \citet{xue2024optimal}.

\section{Numerical experiments} \label{section_numerical}
In this section, we conduct numerical experiments to demonstrate the feasibility of \Cref{alg:plug-in} and support our theoretical findings in \Cref{section_theory}. Experiments with simulated and real data are demonstrated in Sections \ref{sec:sim} and \ref{sec:real-data}, respectively. All the codes and implementations can be found at \url{https://github.com/hxyXiaoyuHu/Fair-FLDA}.

\subsection{Simulated data analysis}\label{sec:sim}

Generate $(Y, A) \in \{0, 1\}^{\otimes 2}$ according to the distributions $\prob(A=1)=0.7, \prob(Y=1|A=0)=0.4$ and $\prob(Y=1|A=1)=0.7$.
Given $Y=y$ and $A=a$, generate the functional covariate $X_{a,y}(t)$ as $X_{a,y}(t) = \mu_{a, y}(t) + \sum_{k=1}^{50} \zeta_{a,k} \phi_k(t)$, where $\phi_k(t) = \sqrt{2}\cos(k \pi t)$, $\zeta_{a,k} \sim N(0, \lambda_{a,k})$, $\lambda_{0,k} = k^{-2}$, $\lambda_{1,k} = 2k^{-2}$, and the mean functions are specified as follows,
\[ \mu_{0,0}=\mu_{1,0}=0, ~~ \mu_{0,1}(t) = \sum_{k=1}^{50} 0.8(-1)^k k^{-\beta} \phi_k(t), ~~ \mu_{1,1}(t) = \sum_{k=1}^{50} \sqrt{2}(-1)^k k^{-\beta} \phi_k(t). \]
 
Let $n$ denote the size of the training sample. 
We implement the proposed fairness-aware classifier under two calibration settings,
\begin{itemize}
	\item Fair-FLDA: calibration constant set to 0;
	\item Fair-$\mathrm{FLDA_c}$: calibration constant set to $\min\{\sqrt{2\log(1/\rho)/n}, \delta\}$, with $\rho=0.05$.
\end{itemize}
Truncation levels are selected via 5-fold cross-validation, specifically by minimising the average classification error associated with the unconstrained classifier.

During implementation, the training set $\mathcal D$ is randomly split into  two equal-sized subsets, $\mathcal D_1 \cup \mathcal D_2$. One subset is used to estimate $\widehat \eta_a$ and $\widehat \pi_{a,y}$, while the other is used to estimate the threshold $\widehat \tau$.
Let $\widehat f_1$ denote the classifier estimated using $\mathcal D_1$ for model estimation and $\mathcal D_2$ for threshold calibration, and let~$\widehat f_2$ denote the classifier constructed with the roles of $\mathcal D_1$  and $\mathcal D_2$ reversed.
To mitigate the randomness caused by random splitting, we adopt a cross-fitting approach and define the final probabilistic classifier as the average $\widehat f = (\widehat f_1 + \widehat f_2)/2$. 

For comparisons, we include the classical functional linear discriminant classifier (FLDA). All methods are evaluated on an independent test set of size 5000. We repeat the experiments 500 times. For each fitted classifier $f$, we compute the unfairness metric $\mathcal{U}_{D}(f) = |D(f)|$ and report both its empirical median $\mathcal{U}_{D, 50}$ and empirical $95\%$ quantile $\mathcal{U}_{D, 95}$, along with the median classification error.

For conciseness, we present the key results in the 1st-3rd columns in Figure \ref{fig:results_main}, while additional results regarding effects of sample sizes ($n$), alignment of mean difference ($\beta$), model misspecification (non-Gaussianity) and perfect classification are deferred to \Cref{sec:apx_numerical}.
The classification errors of the proposed classifiers Fair-LDA and Fair-$\mathrm{FLDA_c}$ exhibit a non-increasing trend as $\delta$ grows, mirroring the behaviour of the oracle Bayes classifier. As expected, the error of FLDA remains constant across different values of $\delta$, as it does not incorporate any fairness constraint. When $\delta$ is small, corresponding to fair-impacted regimes, the classification errors of our methods decrease with increasing $\delta$, due to the less stringent fairness constraints.  Once $\delta$ exceeds a certain threshold, the fairness constraint becomes inactive, and the classification errors of the fairness-aware classifiers converge to that of the unconstrained FLDA.  

In terms of disparity control, the FLDA consistently fails to meet fairness requirements. In contrast, the Fair-FLDA maintains the desired median disparity level, while Fair-$\mathrm{FLDA_c}$ typically achieves median disparity levels strictly below $\delta$. Moreover, as established in Theorem \ref{thm_fair_guarantee}, the empirical 95\% quantile of disparity under Fair-FLDA may slightly exceed $\delta$, whereas the Fair-$\mathrm{FLDA_c}$ effectively corrects for this offset, achieving probabilistic control of the disparity below $\delta$ with probability at least 95\%. As $\delta$ increases beyond a critical threshold, both fairness-aware classifiers gradually reduce to the unconstrained classifier FLDA, and their corresponding disparity levels stabilize accordingly. 
Notably, both Fair-FLDA and Fair-$\mathrm{FLDA_c}$ satisfy their respective fairness criteria without significant compromise in classification accuracy.

\begin{figure}[!htbp]
    \centering
    {\scriptsize
    \hspace{0.01\textwidth}
    \makebox[0.46\textwidth][c]{\textbf{Simulated data}}
    \hspace{0.05\textwidth}%
    \makebox[0.46\textwidth][c]{\textbf{Real data}}\\[0.5ex]
    }
    \includegraphics[width=0.156\textwidth]{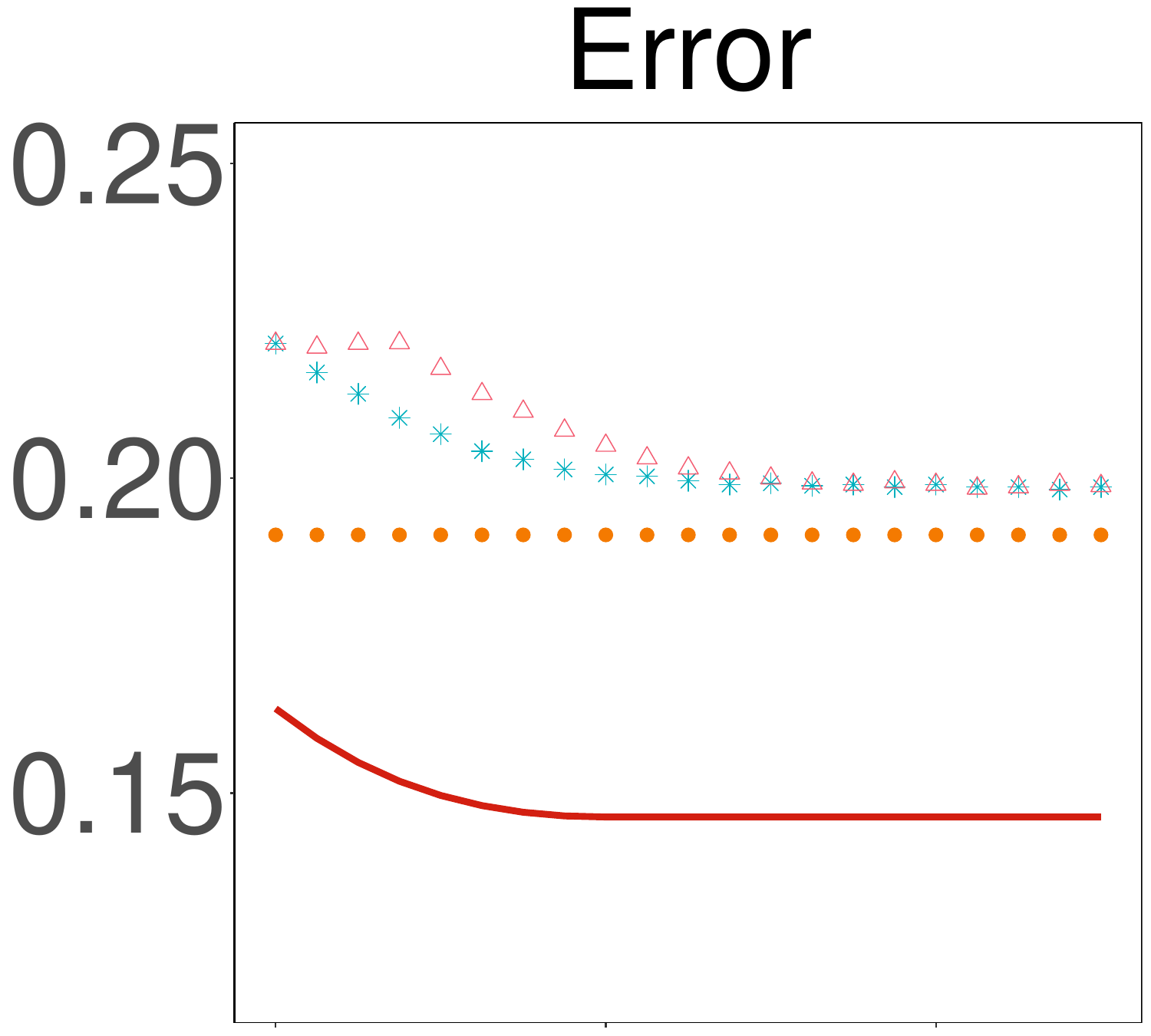}
	\includegraphics[width=0.156\textwidth]{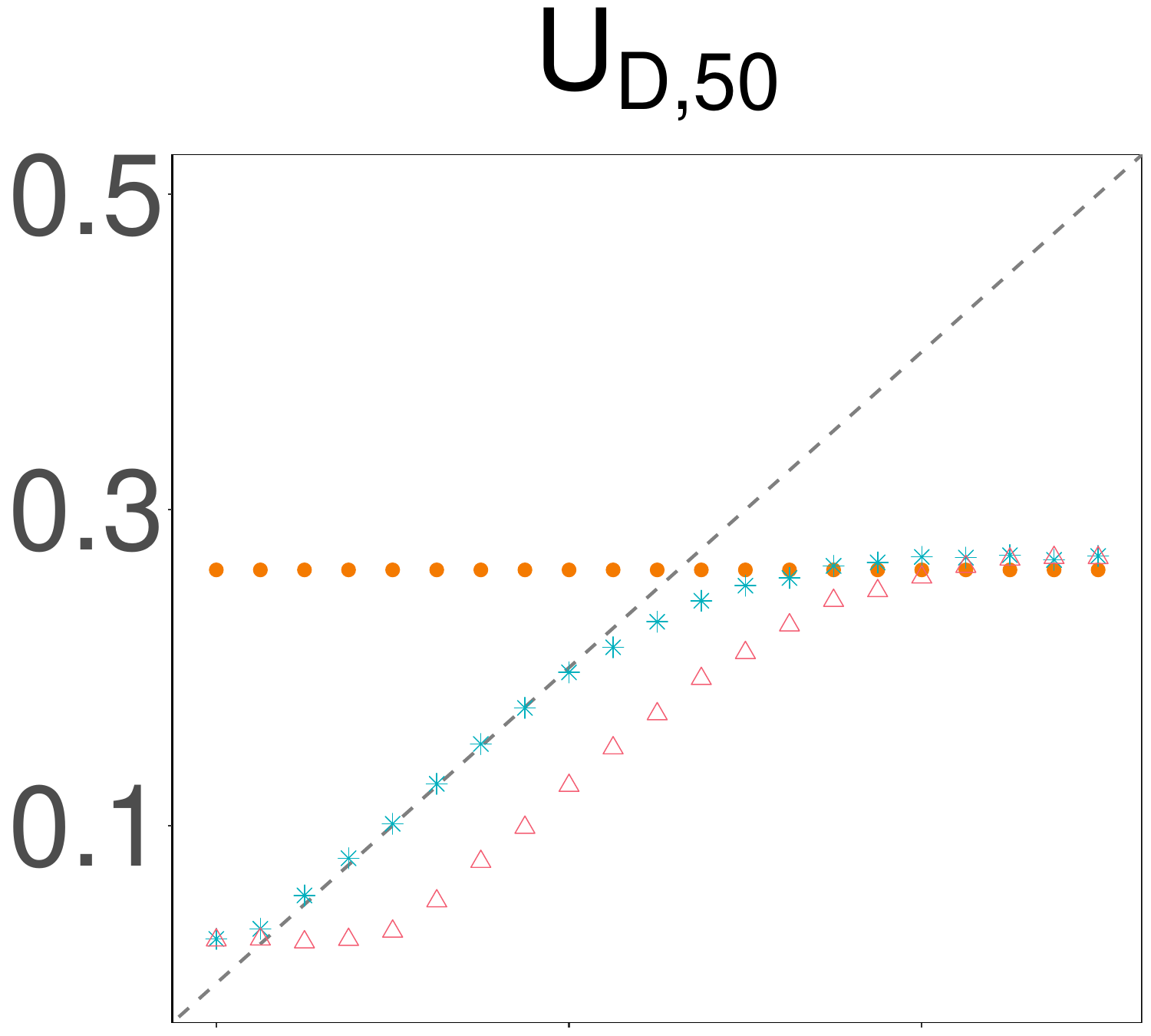}
	\includegraphics[width=0.156\textwidth]{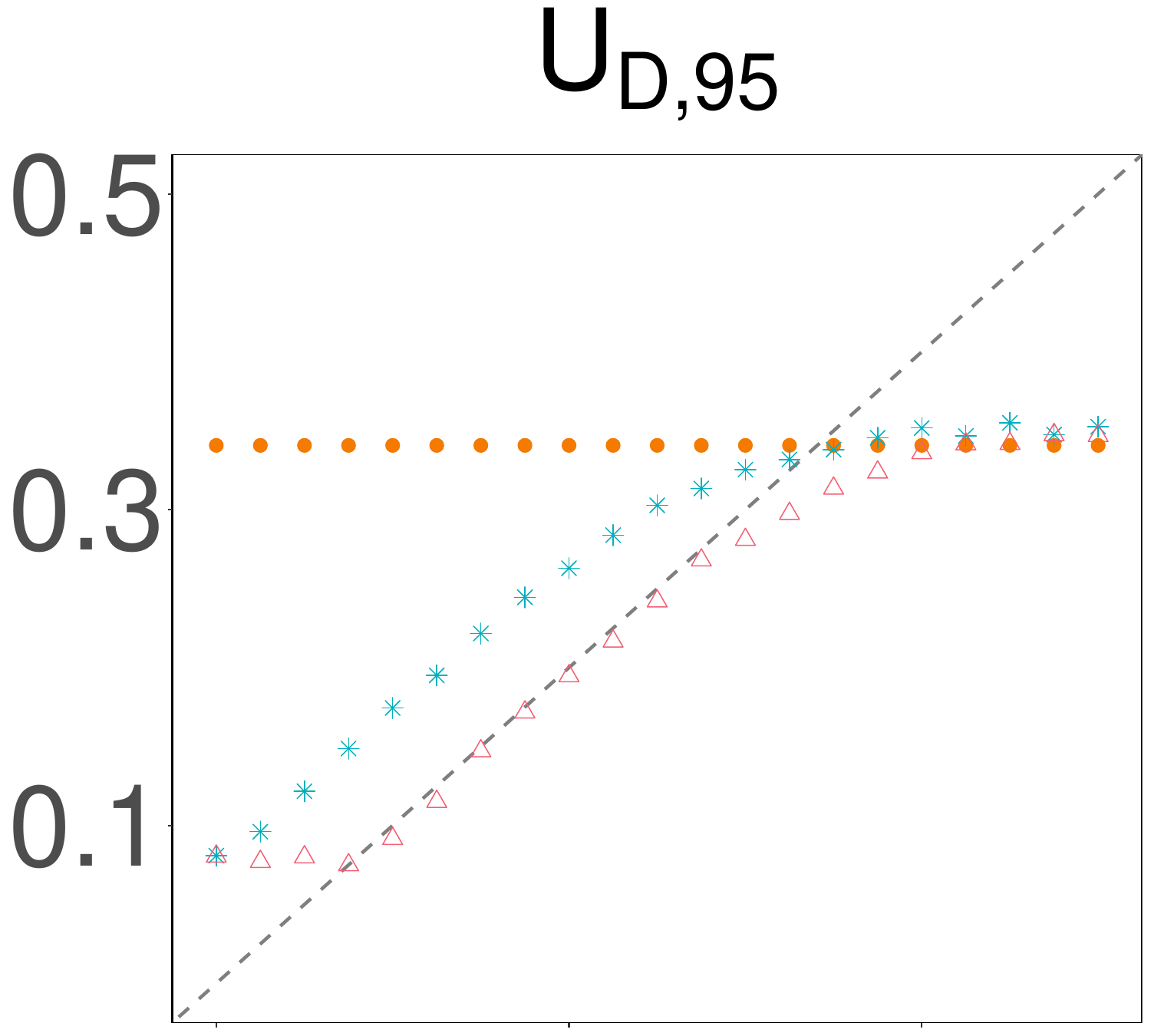} 
    \hspace{0.02\textwidth}
    \includegraphics[width=0.156\textwidth]{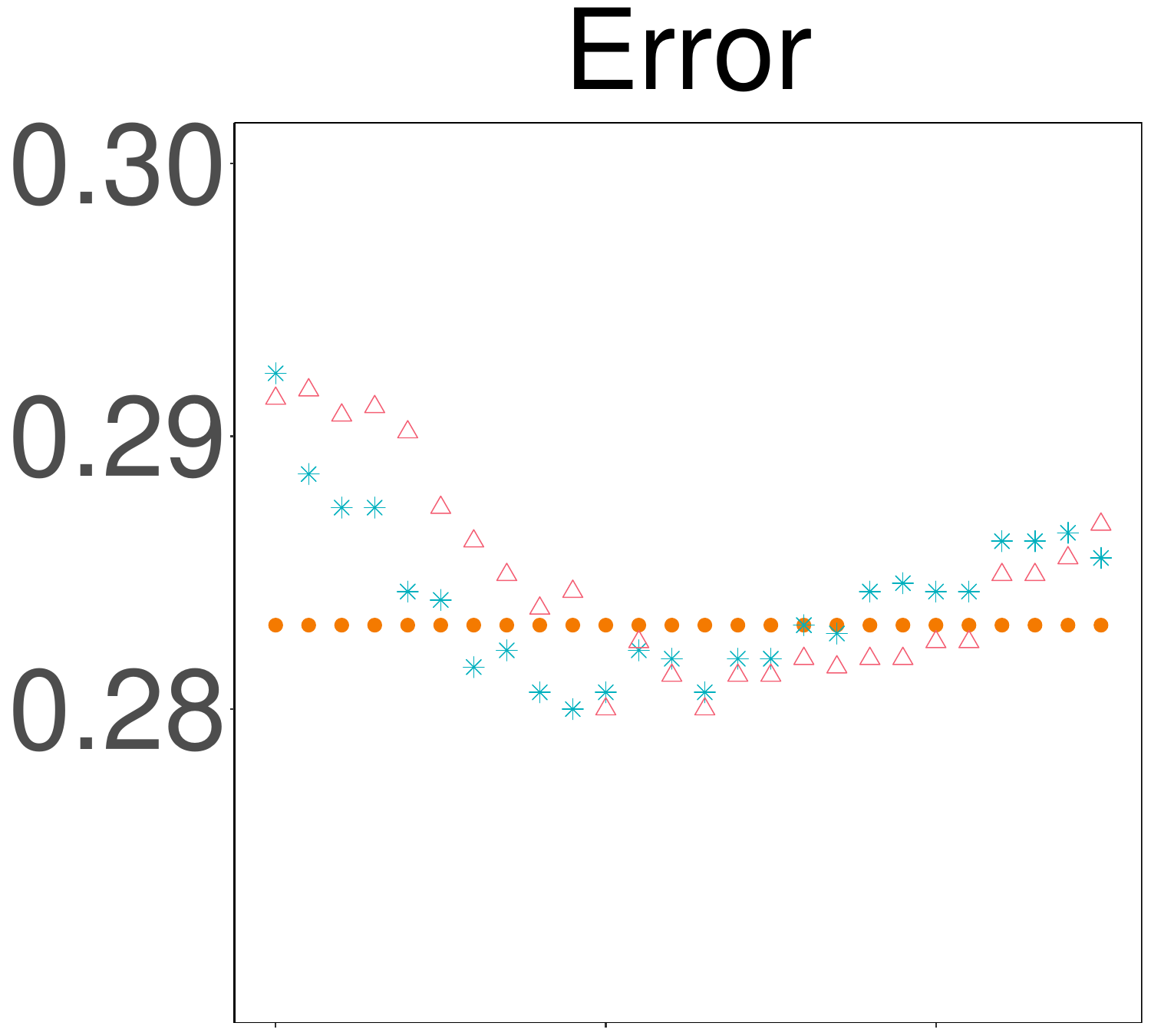}
	\includegraphics[width=0.156\textwidth]{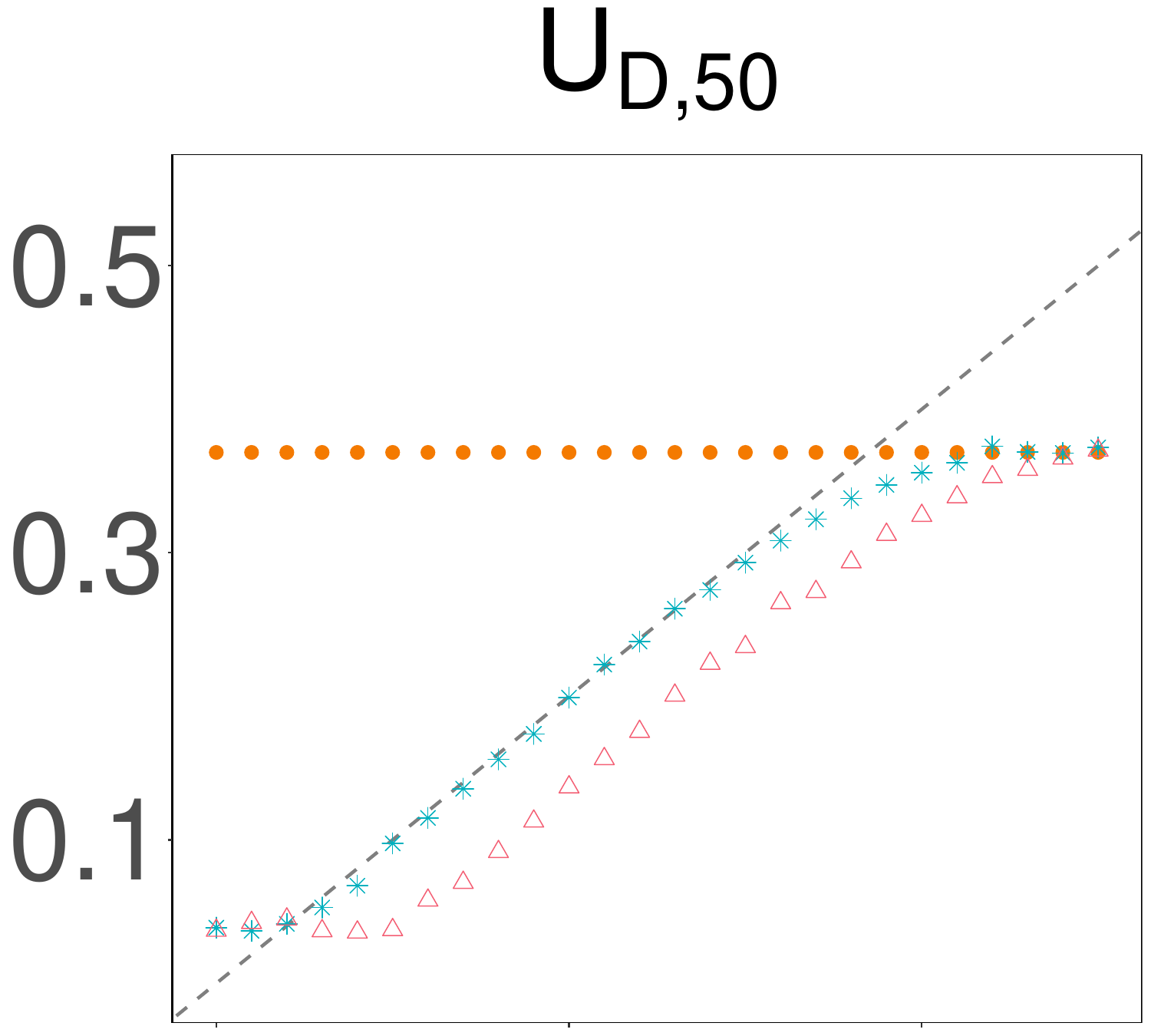} 
	\includegraphics[width=0.156\textwidth]{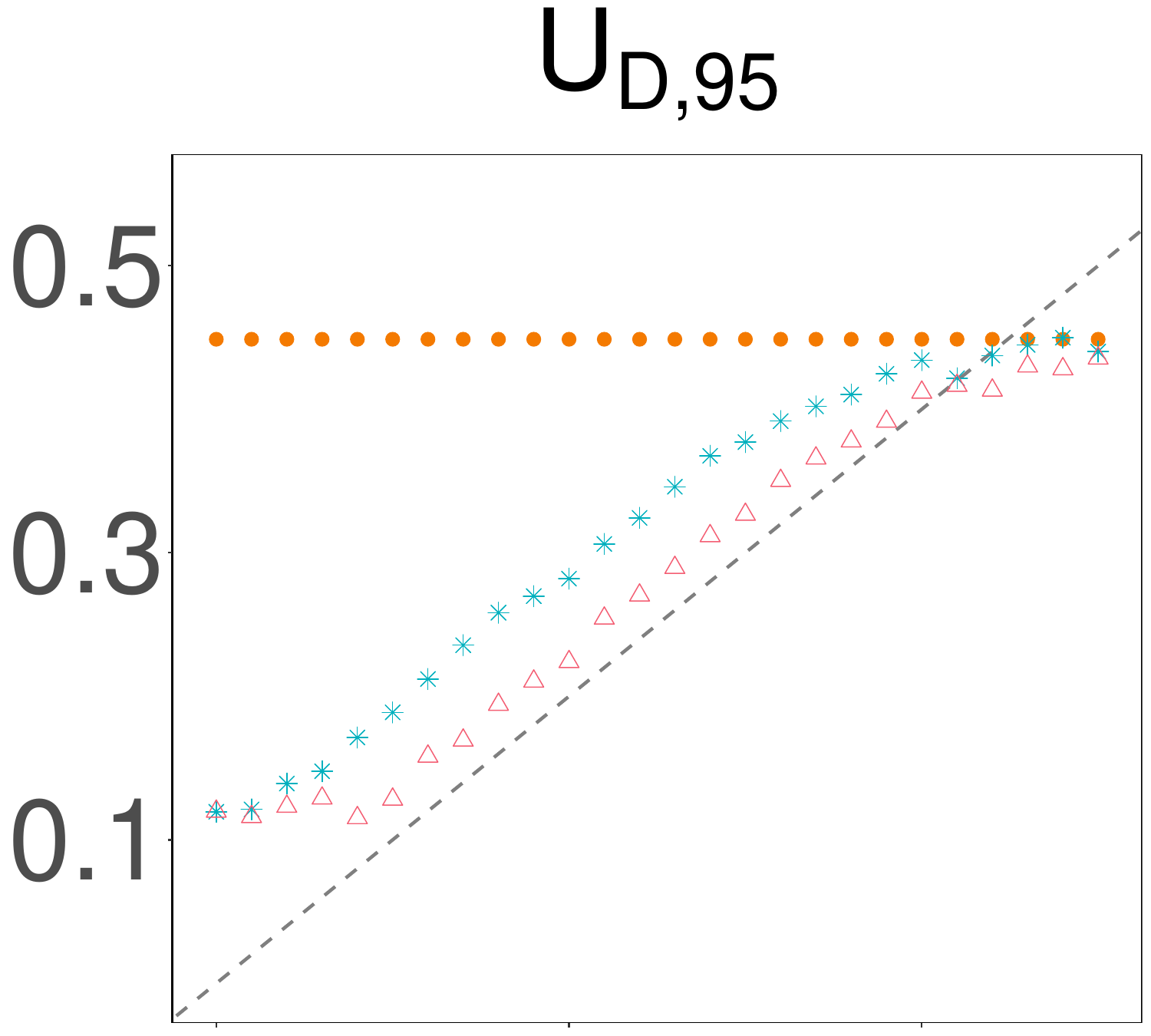} 

    \includegraphics[width=0.156\textwidth]{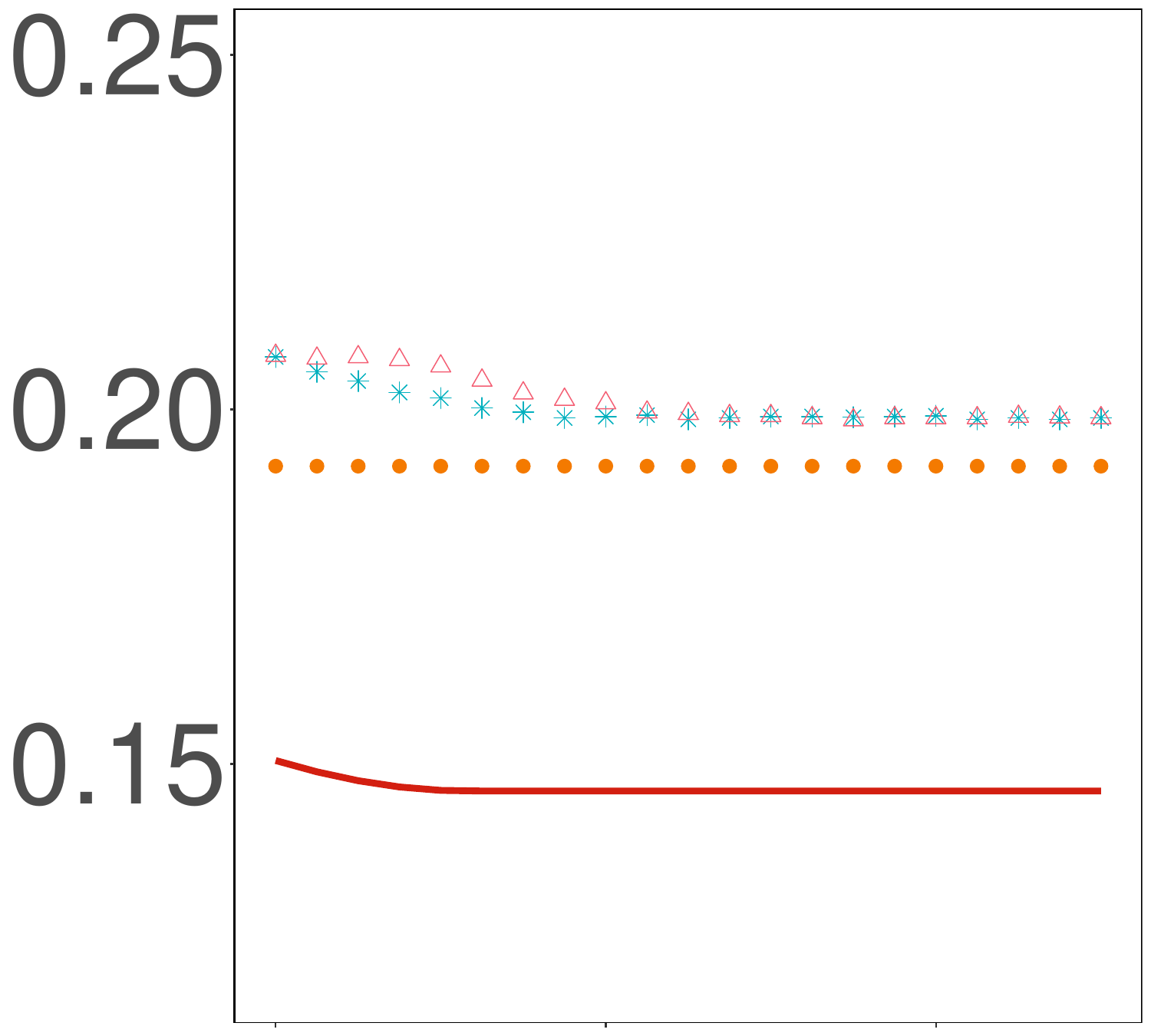}
	\includegraphics[width=0.156\textwidth]{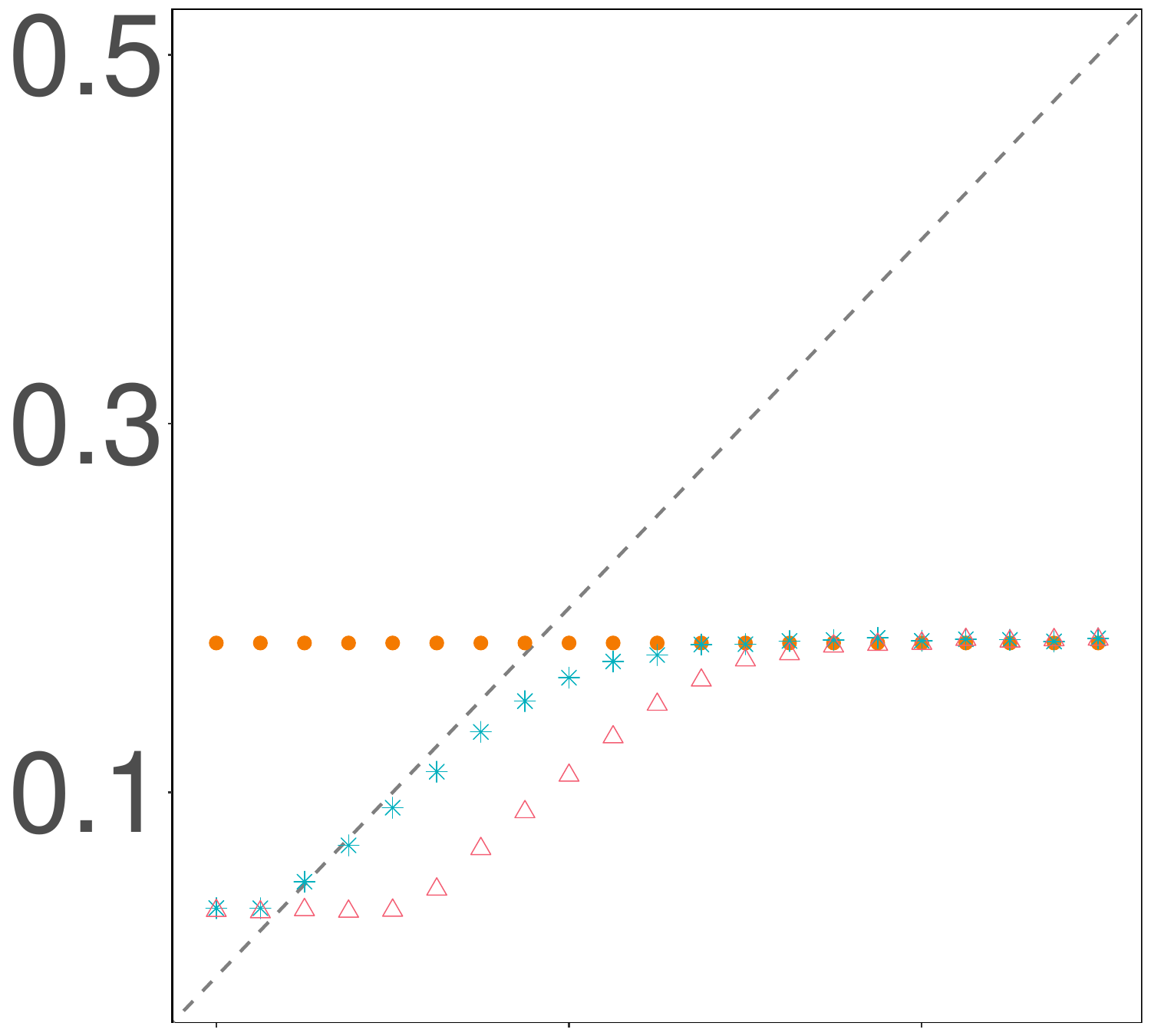}
	\includegraphics[width=0.156\textwidth]{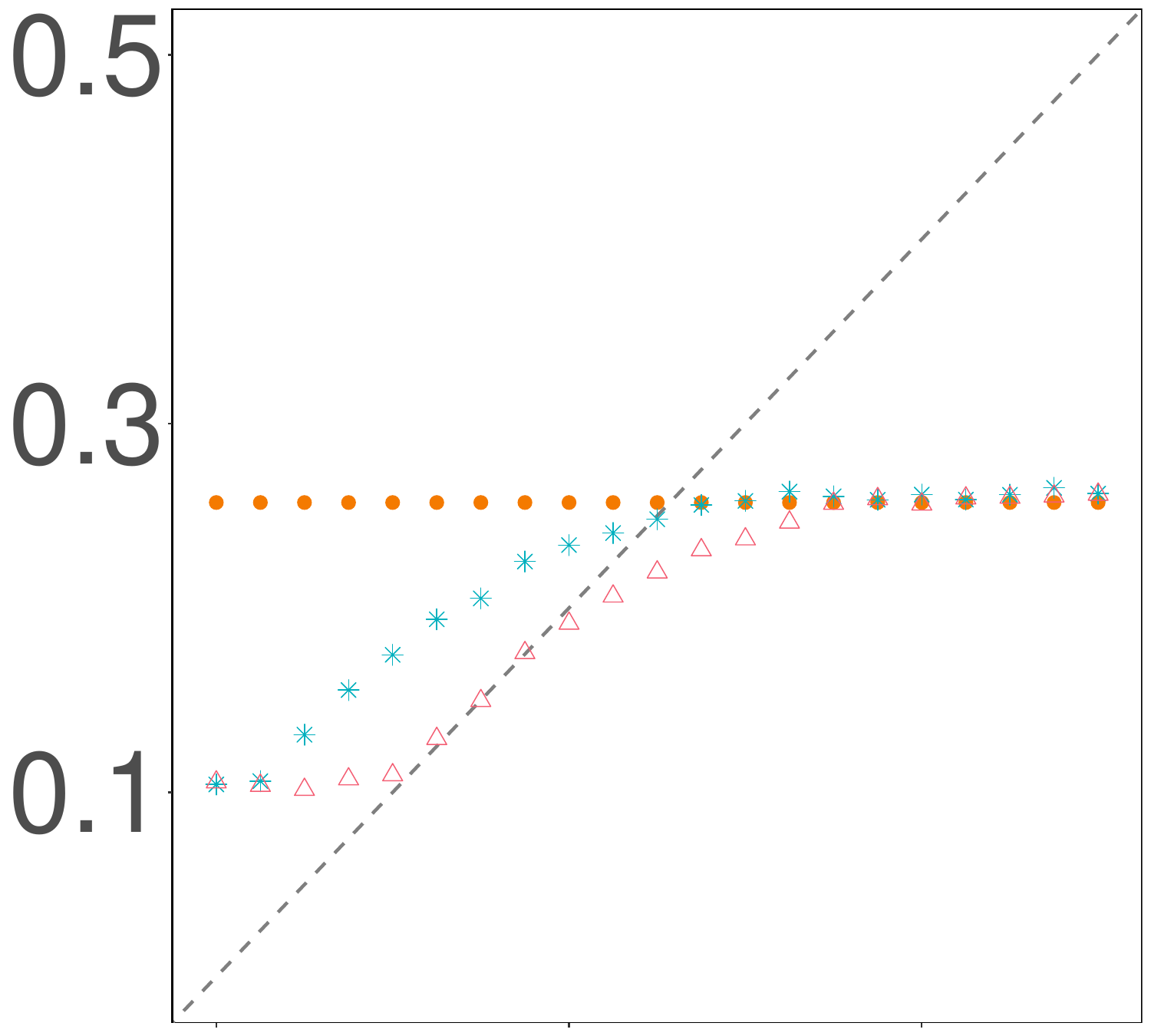}  
    \hspace{0.02\textwidth}
    \includegraphics[width=0.156\textwidth]{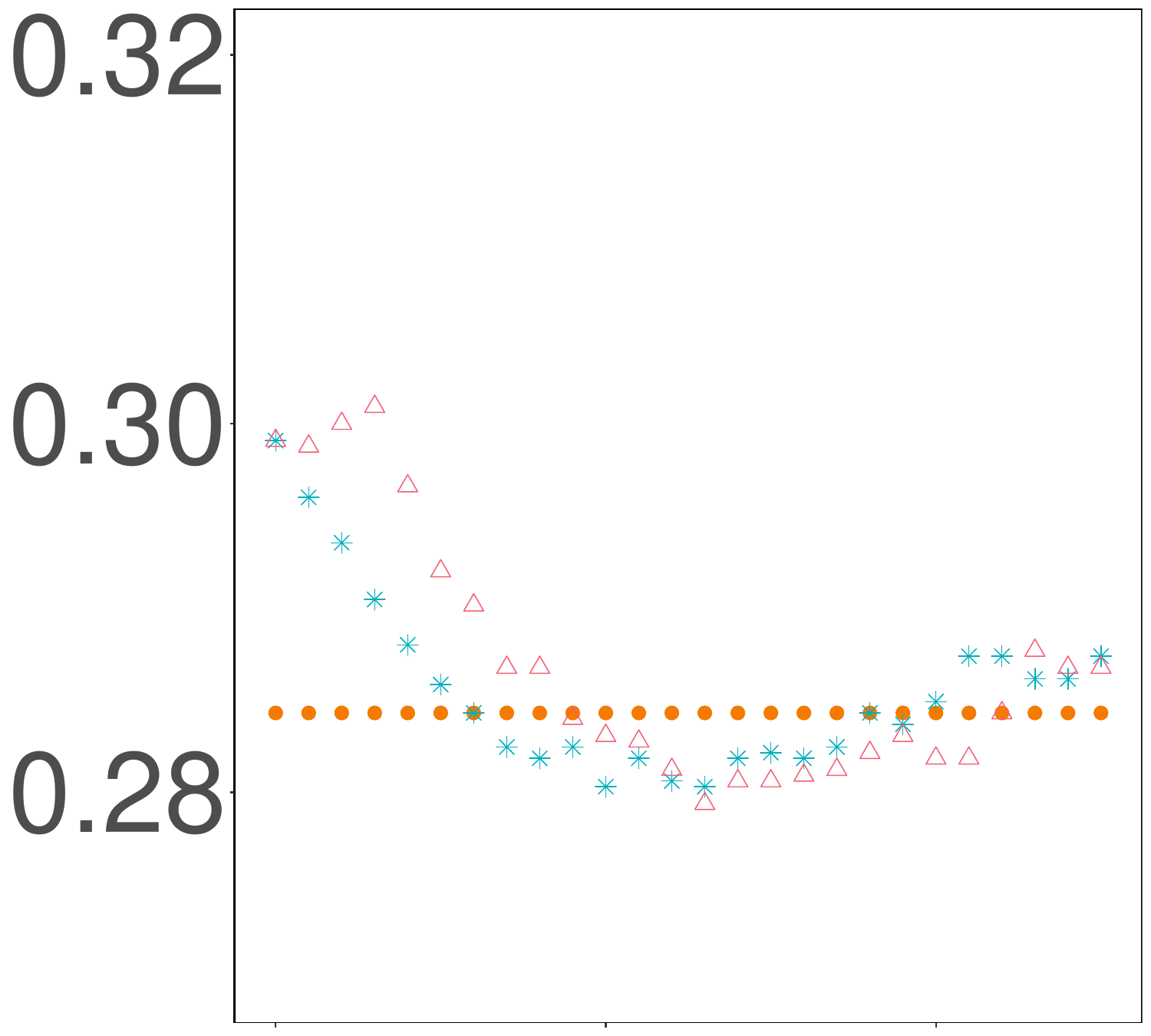}
	\includegraphics[width=0.156\textwidth]{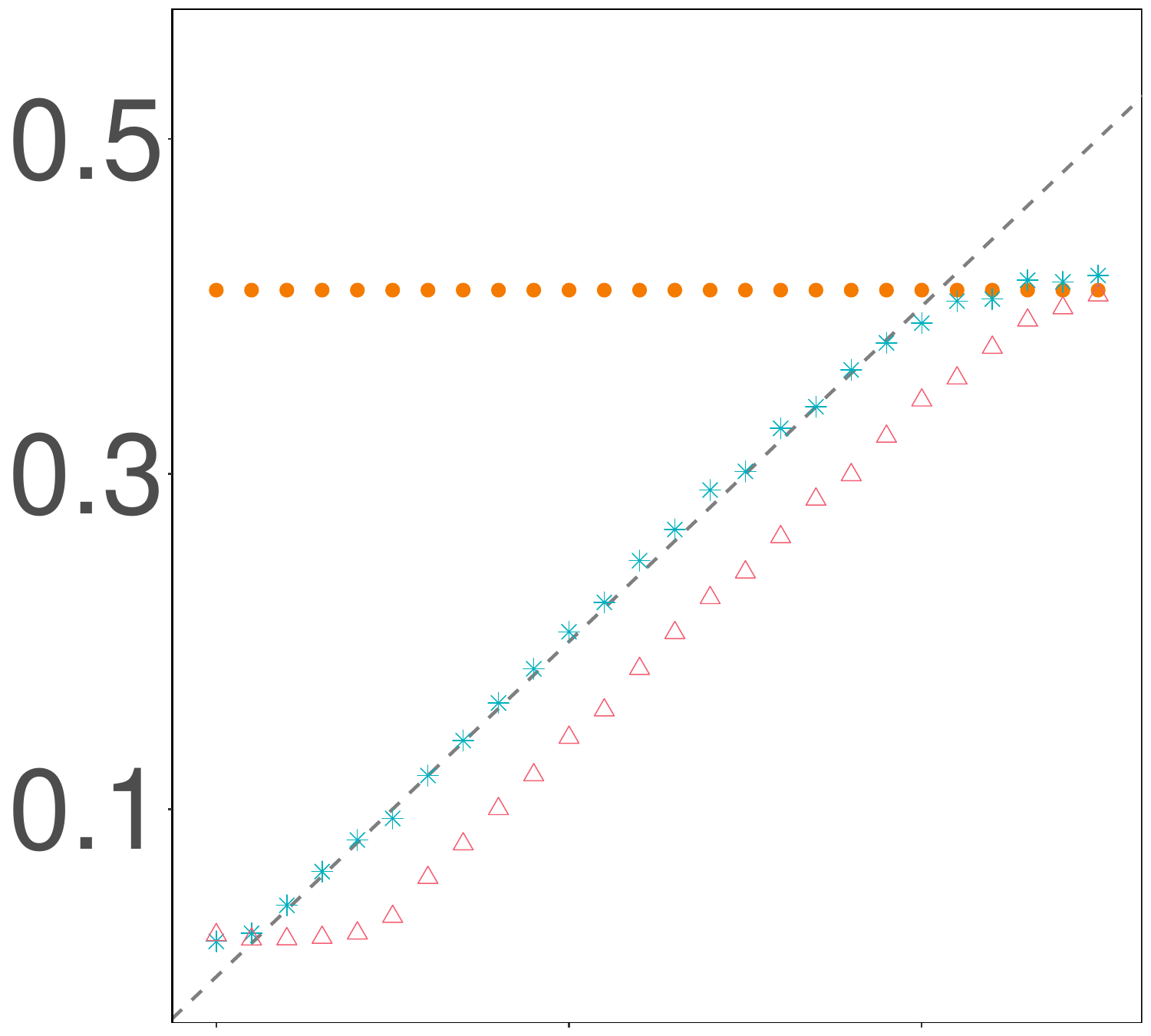} 
	\includegraphics[width=0.156\textwidth]{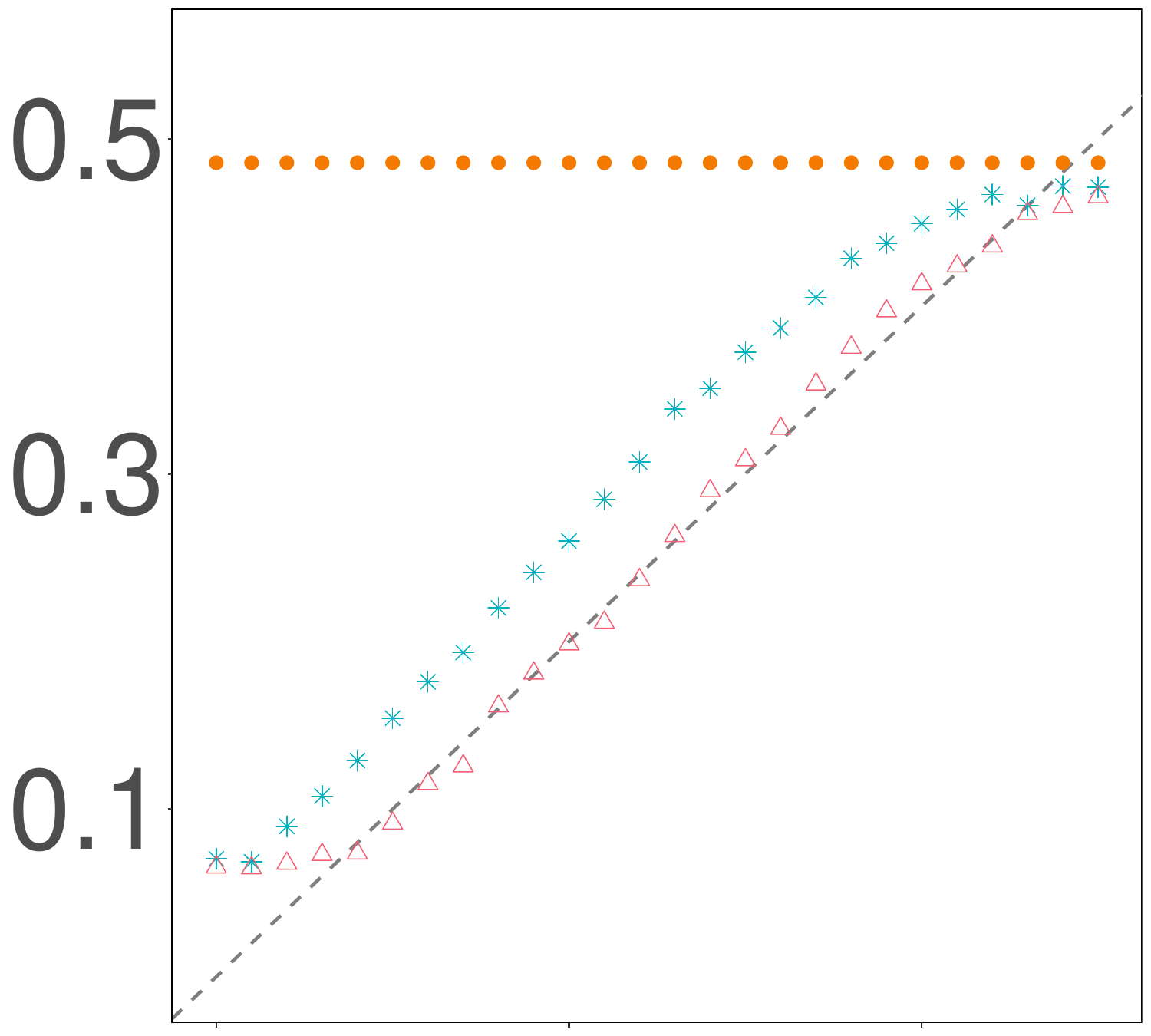} 
    
    \includegraphics[width=0.156\textwidth]{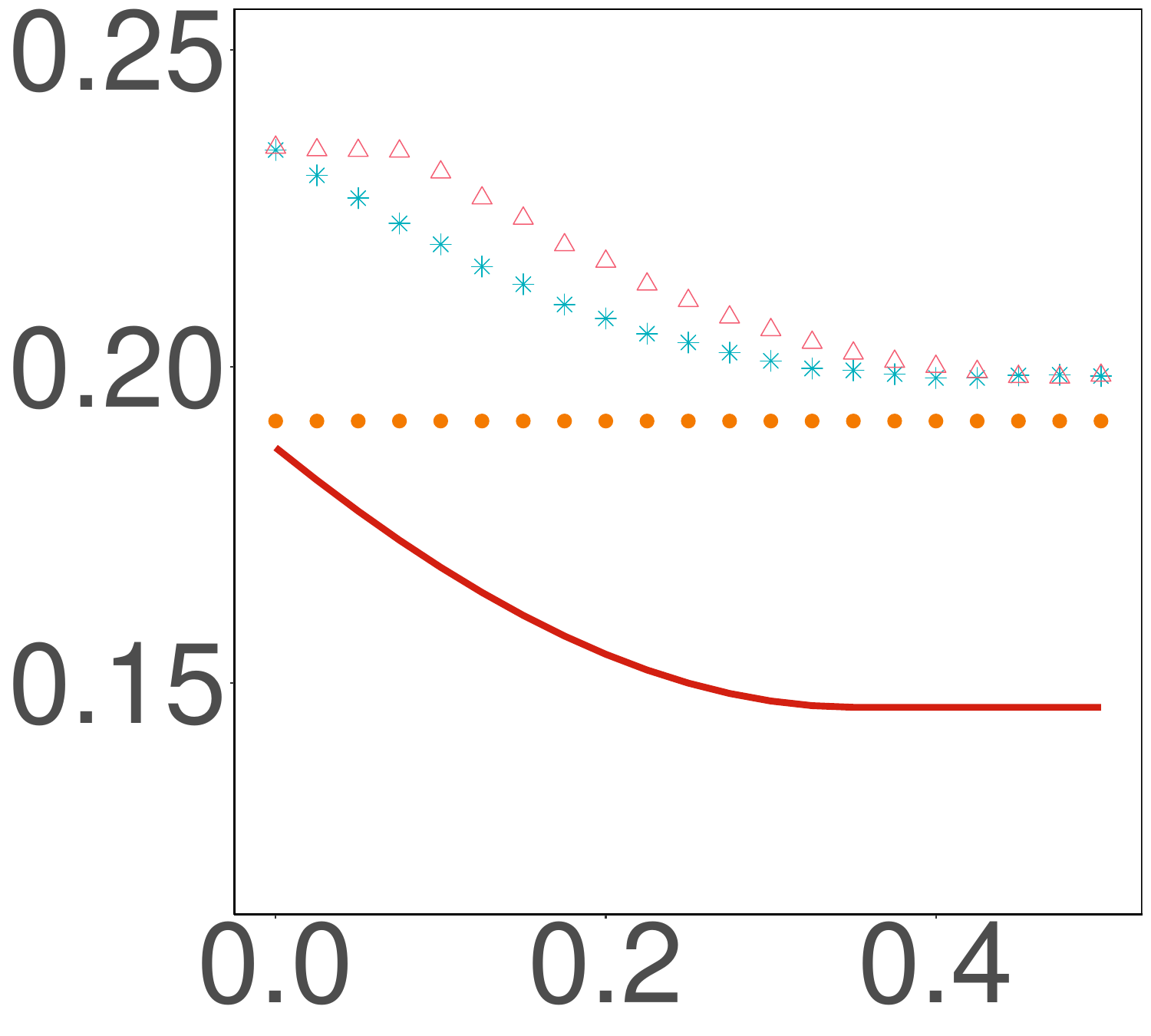}
	\includegraphics[width=0.156\textwidth]{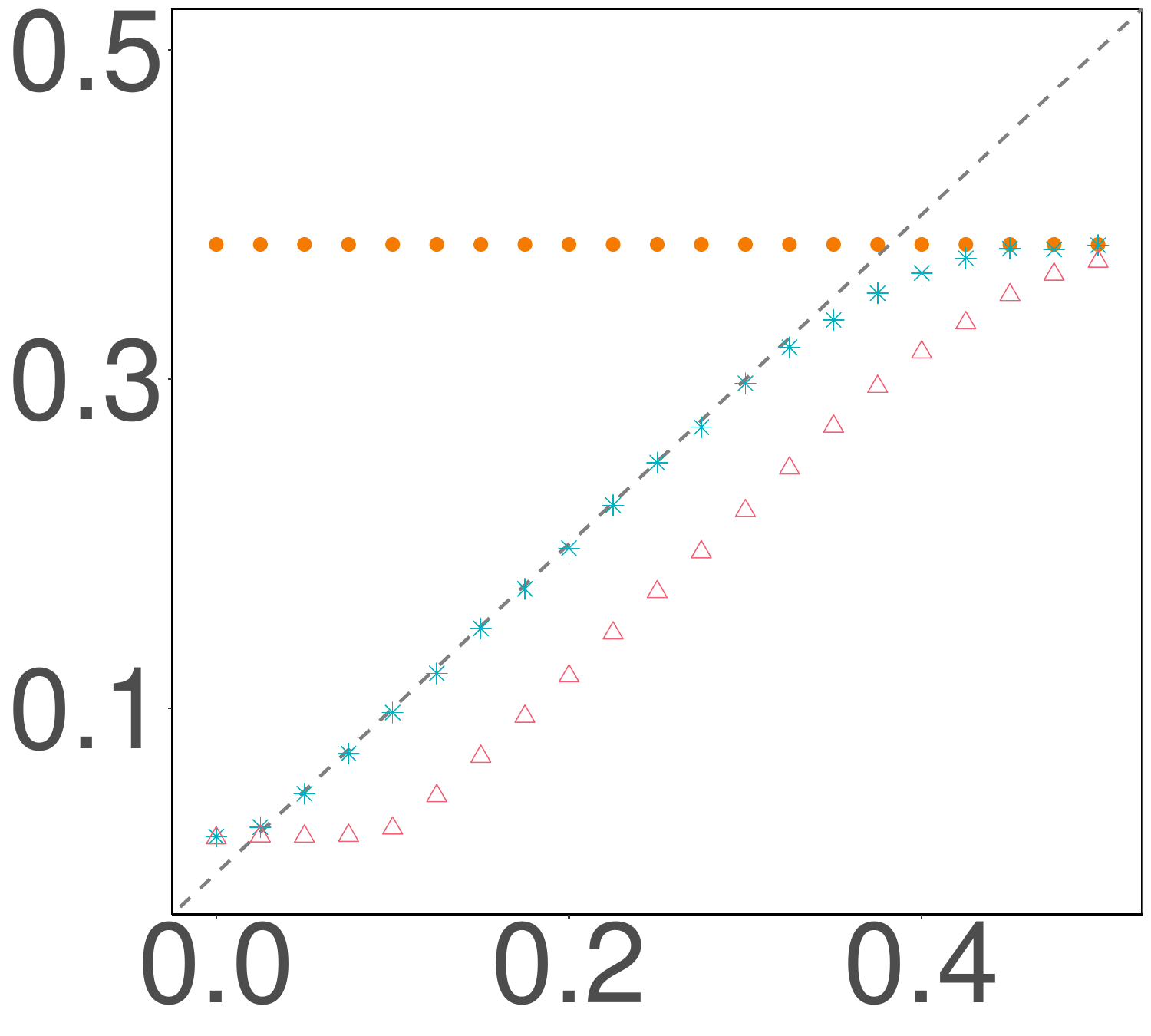}
	\includegraphics[width=0.156\textwidth]{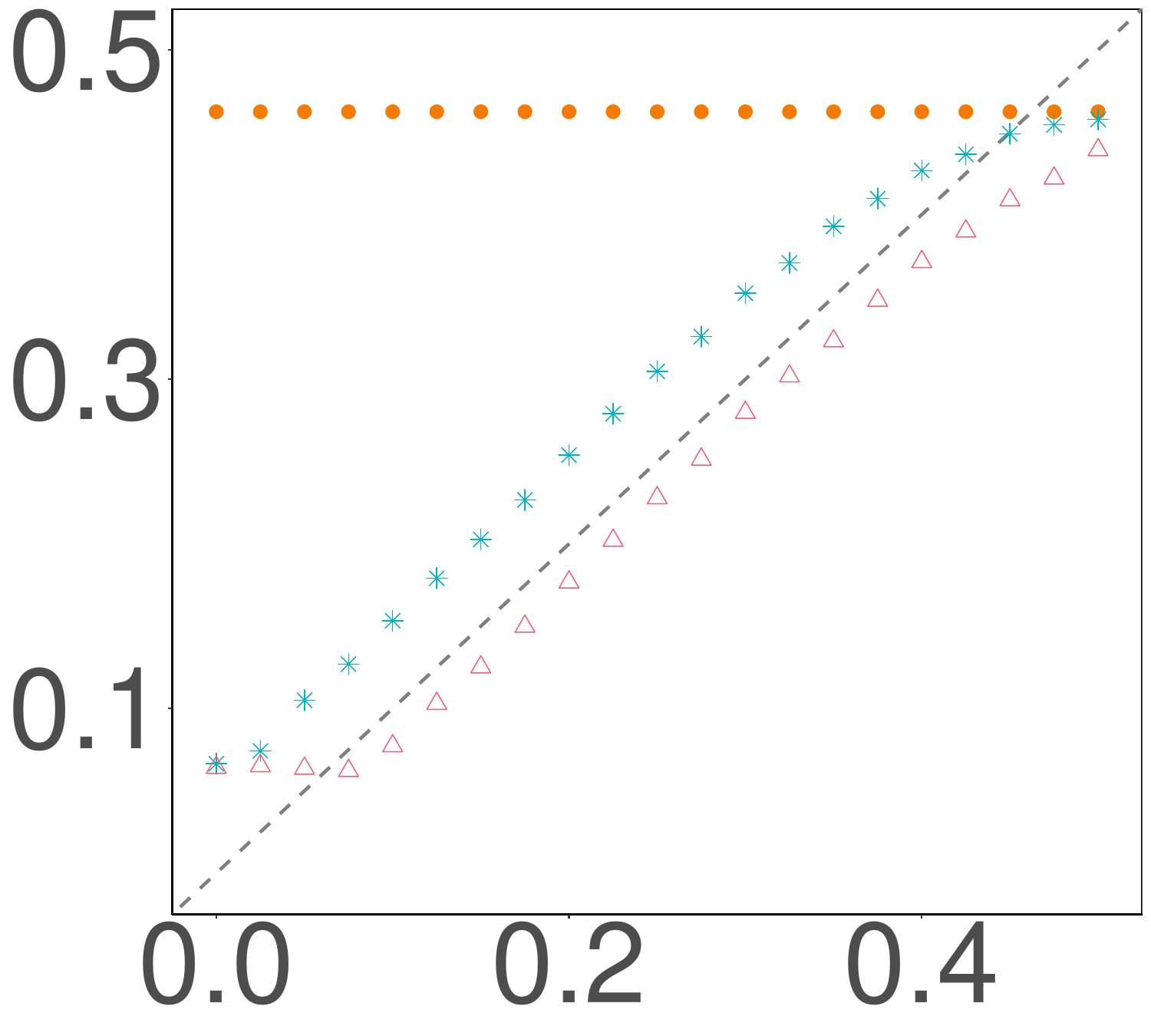}  
    \hspace{0.02\textwidth}
    \includegraphics[width=0.156\textwidth]{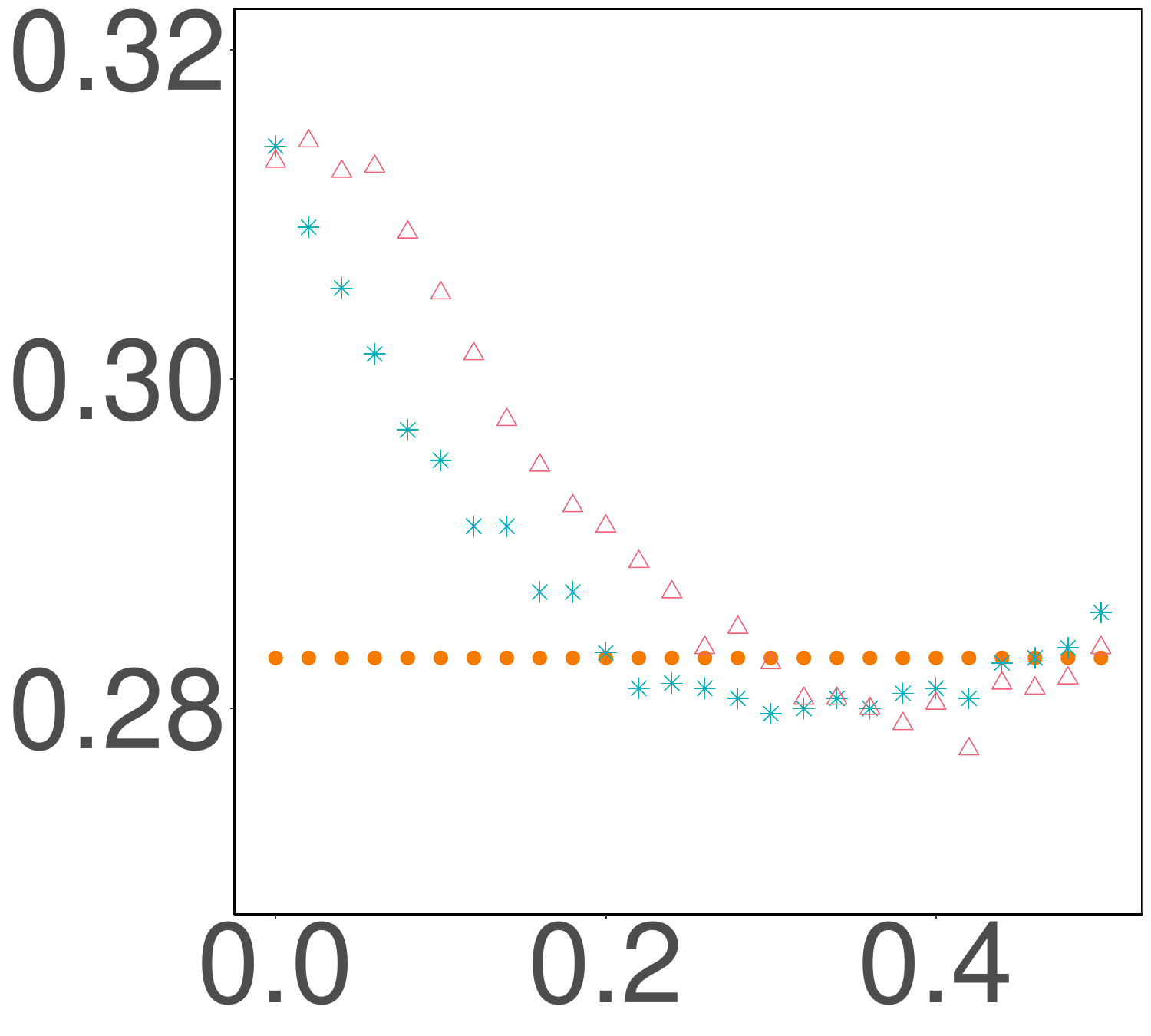}
	\includegraphics[width=0.156\textwidth]{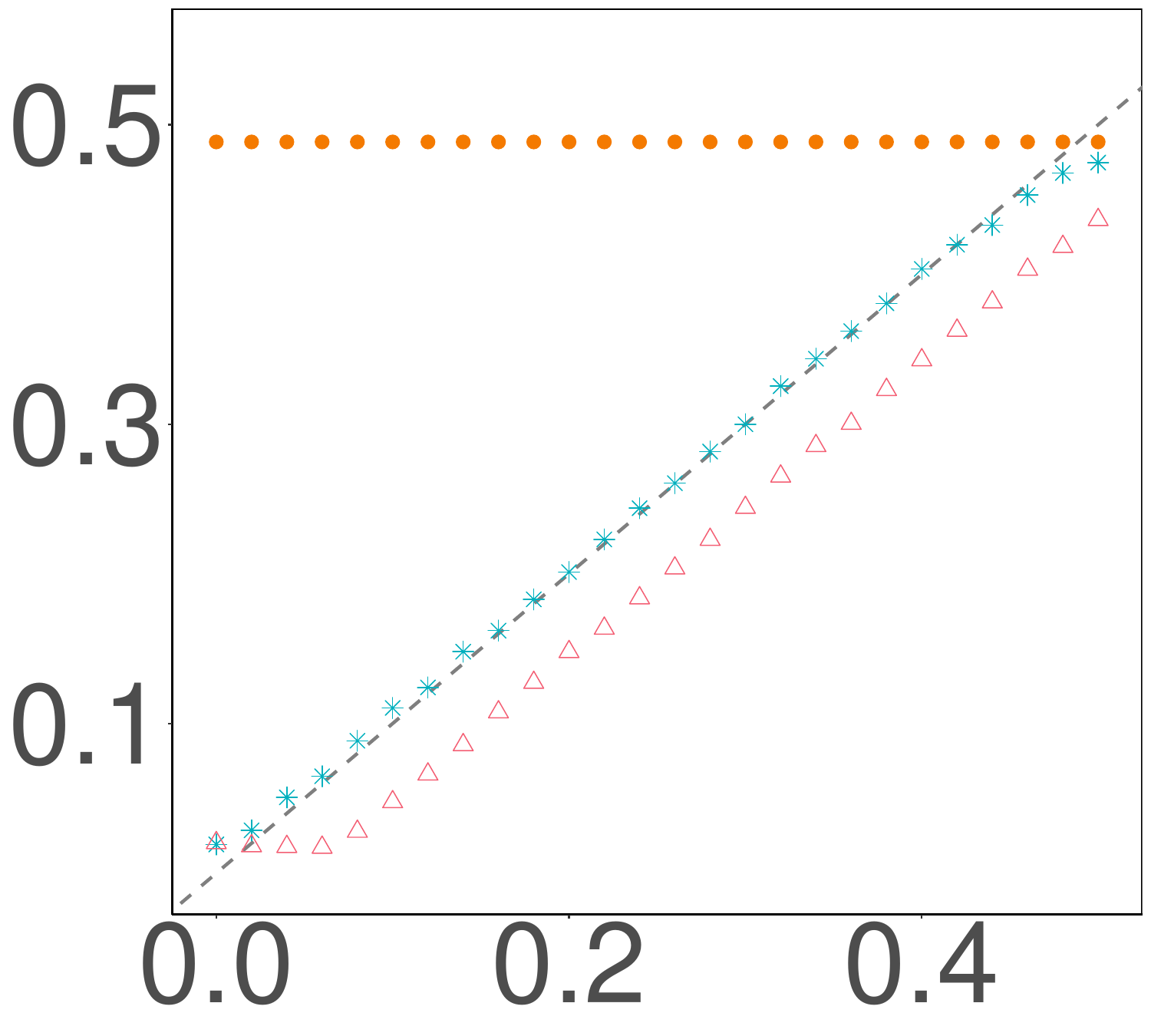} 
	\includegraphics[width=0.156\textwidth]{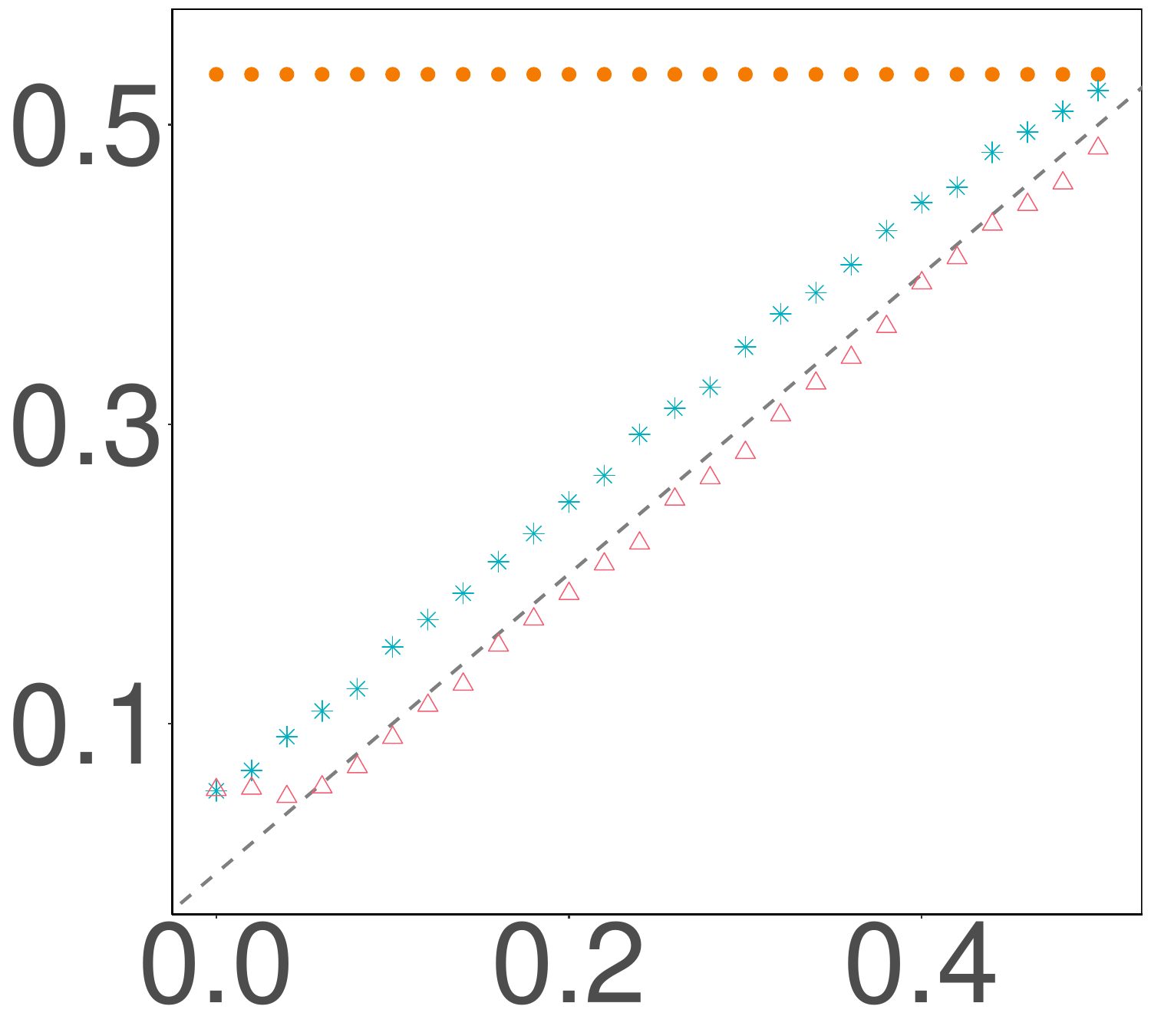} 
    \caption{All $x$-axis are the values of $\delta$. From left to right: medians of classification errors, medians and 95\% quantiles of the disparity measures, in the simulations (1st-3rd columns) and real data (4-6th columns). From top to bottom: DD, DO and PD. Orange dots: FLDA; blue stars: Fair-FLDA; pink triangles: Fair-$\mathrm{FLDA_c}$; red solid line: oracle Bayes classifier; grey dashed line: $y=x$.}\label{fig:results_main}
\end{figure}

\subsection{Real data analysis}\label{sec:real-data}

We apply the proposed method to data from the 2005-2006 National Health and Nutrition Examination Survey \citep{NHANES}. Further details about this dataset can be found in  \citet{lin2023causal}. Following the preprocessing steps in \citet{lin2023causal}, we exclude observations with questionable data reliability according to NHANES protocol, remove observations with intensity values higher than 1000 or equal to 0, and retain subjects with at least 100 remaining observations. 
The response variable is whether an individual is under 20 or over 50 years old, with the quantile function of intensity values as the functional covariate. The sensitive attribute refers to race, categorised as non-Hispanic white and non-Hispanic black. The final dataset consists of 3252 instances, which we randomly split into equal-sized training and test subsets.  

The methods are implemented following the same procedures described in Section \ref{sec:sim}.
As shown in the 4th-6th columns in Figure \ref{fig:results_main}, the classical classifier FLDA exhibits substantial unfairness, whereas the Fair-FLDA effectively controls the median disparity. In terms of probabilistic disparity control, the 95\% empirical quantile of the disparity for Fair-$\mathrm{FLDA_c}$ slightly exceeds $\delta$ with the default choice of the calibration constant under DO. In complex real world scenarios, we recommend tuning the calibration parameter $\kappa$ to achieve more reliable probabilistic disparity control. Details about the tuning strategy and corresponding results are deferred to \Cref{sec:apx_realdata}.
Importantly, the classification errors of both Fair-FLDA and Fair-$\mathrm{FLDA_c}$ remain comparable to that of the unconstrained classifier FLDA. These results demonstrate that our fairness-aware classifiers effectively mitigate unfairness while maintaining competitive classification accuracy.

\section{Conclusion} \label{section_conclusion}
In this paper, we study fairness-aware Bayes optimal classification for functional data. To the best of our knowledge, this is the first
time seen in the literature. We propose a unified framework for functional classification under fairness constraints and design a post-processing Fair-FLDA algorithm for settings where the functional features are modelled as Gaussian processes. Under appropriate assumptions, theoretical guarantees for both fairness and excess risk controls are provided, which are further supported by extensive numerical experiments on both synthetic and real datasets.

We envisage several potential extensions. Firstly, our framework and the Fair-FLDA algorithm depend on the availability of the sensitive features, which may be restricted in certain practical settings due to privacy concerns. When sensitive features $A$ are available during training but not available during prediction, one possible way to address this challenge is to predict $A$ from the functional features \citep[e.g.~Section 4.3.3 in][]{zeng2024bayes}. However, when $A$ is not available even during the training process, more refined methods for inferring the sensitive features would be necessary. Secondly, in many scenarios, the Radon--Nikodym derivatives $\mathrm{d}P_{a,1}/\mathrm{d}P_{a,0}$ are not explicitly known and easy to work with. To address this, a natural strategy during implementation is to approximate it using the density ratios of projection scores \citep[e.g.][]{dai2017optimal}. Finally, in reality, functions can only be discretely observed over sampling grids. Investigating the effect of sparsity on the excess risk under fairness constraints remains an intriguing area for further investigation.

\section*{Acknowledgements}

Yu's research is partially supported by the EPSRC (EP/Z531327/1) and the Leverhulme Trust (Philip Leverhulme Prize). Lin's research is partially supported by the Singapore MOE AcRF Tier 1 grant.

\bibliographystyle{apalike}
\bibliography{fairness}

\newpage
\appendix
\begin{appendices}
All technical details and additional numerical results are collected in the Appendices. Additional experimental results are collected in \Cref{sec:apx_numerical}. We present proofs and properties related to the Bayes optimal classifier $f^\star_{D,\delta}$ in \Cref{appendix_bayes_optimal}. The proof of \Cref{thm_fair_guarantee} is collected in \Cref{appendix_fair_guarantee}, with the proofs of Theorems \ref{thm_fairness_general} and \ref{thm_fairness} presented in \Cref{appendix_excess_risk}. All results related to class probability and eigenspace estimation can be found in Appendices \ref{appendix_empirical_prob} and \ref{appendix_func_estim}. For completeness, necessary technical lemmas are included in \Cref{appendix_technical}.

Throughout the appendix, with a slight abuse of notation, unless specifically stated otherwise, let $c_1, C_1, c_2, C_2, \ldots > 0$ denote absolute constants whose values may vary from place to place.

\section{Additional experimental results}\label{sec:apx_numerical}

The figure labels are consistent with those used in Figure \ref{fig:results_main} in the main text, unless otherwise specified.

\subsection{Additional simulation results under Gaussian models}\label{sec:apx_gaussian}

\paragraph{Results under varying sample sizes.} We evaluate the model from Section \ref{sec:sim} under varying sample sizes. As shown in Figures \ref{fig:DO_gauss_beta_1.5}-\ref{fig:DD_gauss_beta_1.5}, the excess risk of both Fair-FLDA and Fair-$\mathrm{FLDA_c}$ decreases with increasing $n$. Moreover, with larger $n$, the difference between Fair-FLDA and Fair-$\mathrm{FLDA_c}$ in disparity control becomes less significant. 

\begin{figure*}[!htbp]
	\begin{center}
		\newcommand{\thiswidth}{0.2\linewidth}
		\newcommand{\thisgap}{0mm}
		\begin{tabular}{ccc}
			\hspace{\thisgap}\includegraphics[width=\thiswidth]{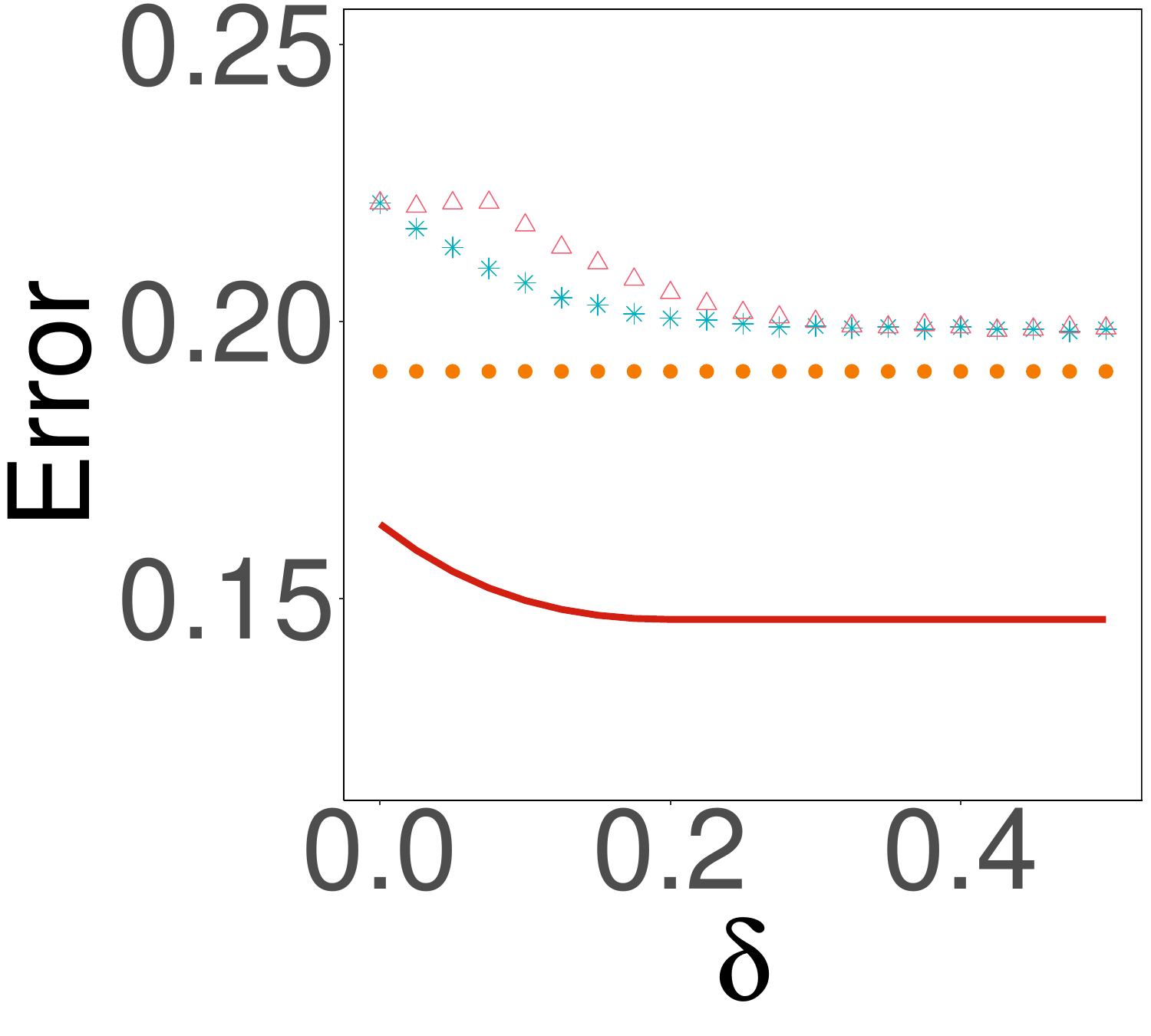} &
			\hspace{\thisgap}\includegraphics[width=\thiswidth]{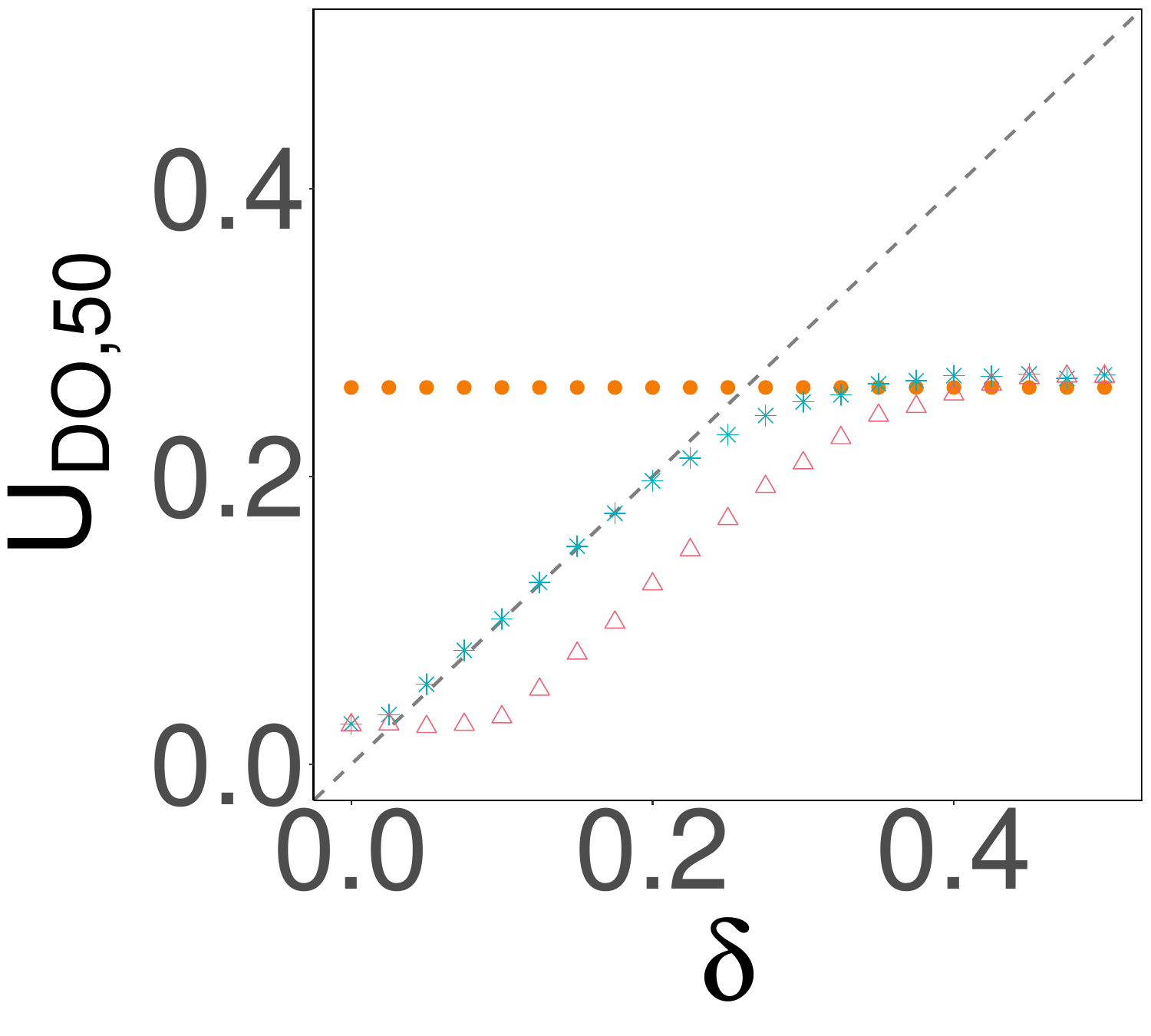} &
			\hspace{\thisgap}\includegraphics[width=\thiswidth]{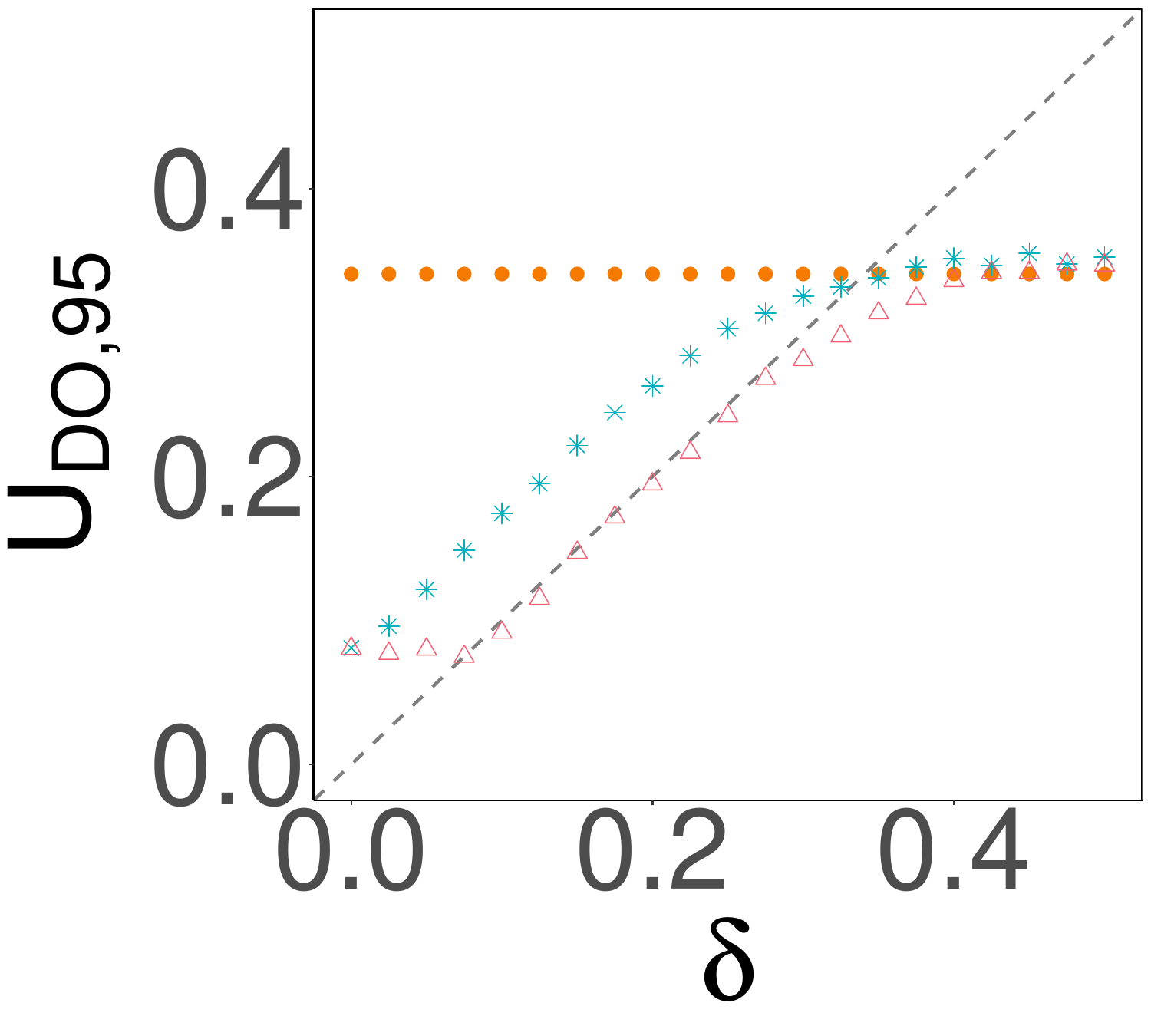} \\
			\hspace{\thisgap}\includegraphics[width=\thiswidth]{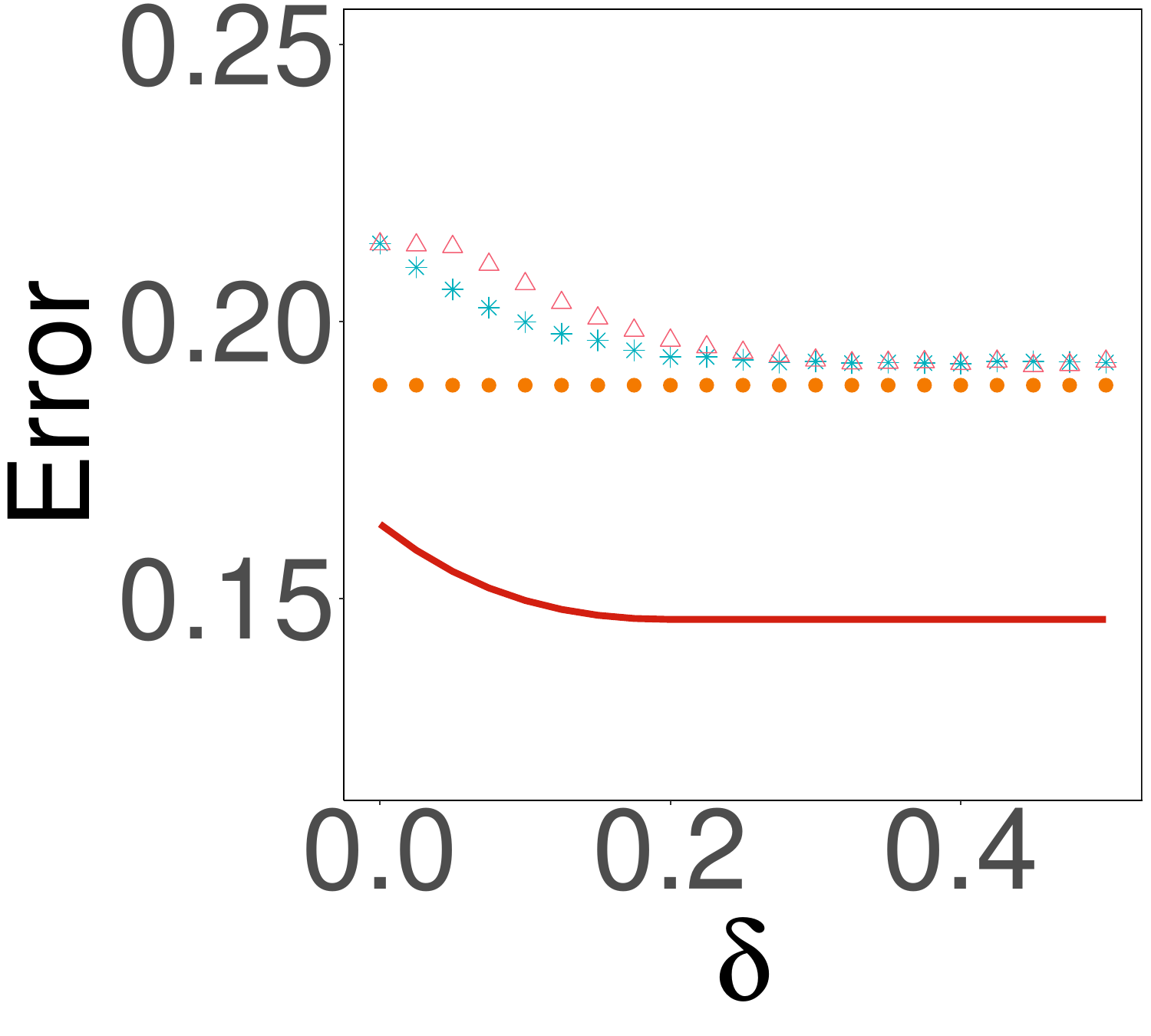} &
			\hspace{\thisgap}\includegraphics[width=\thiswidth]{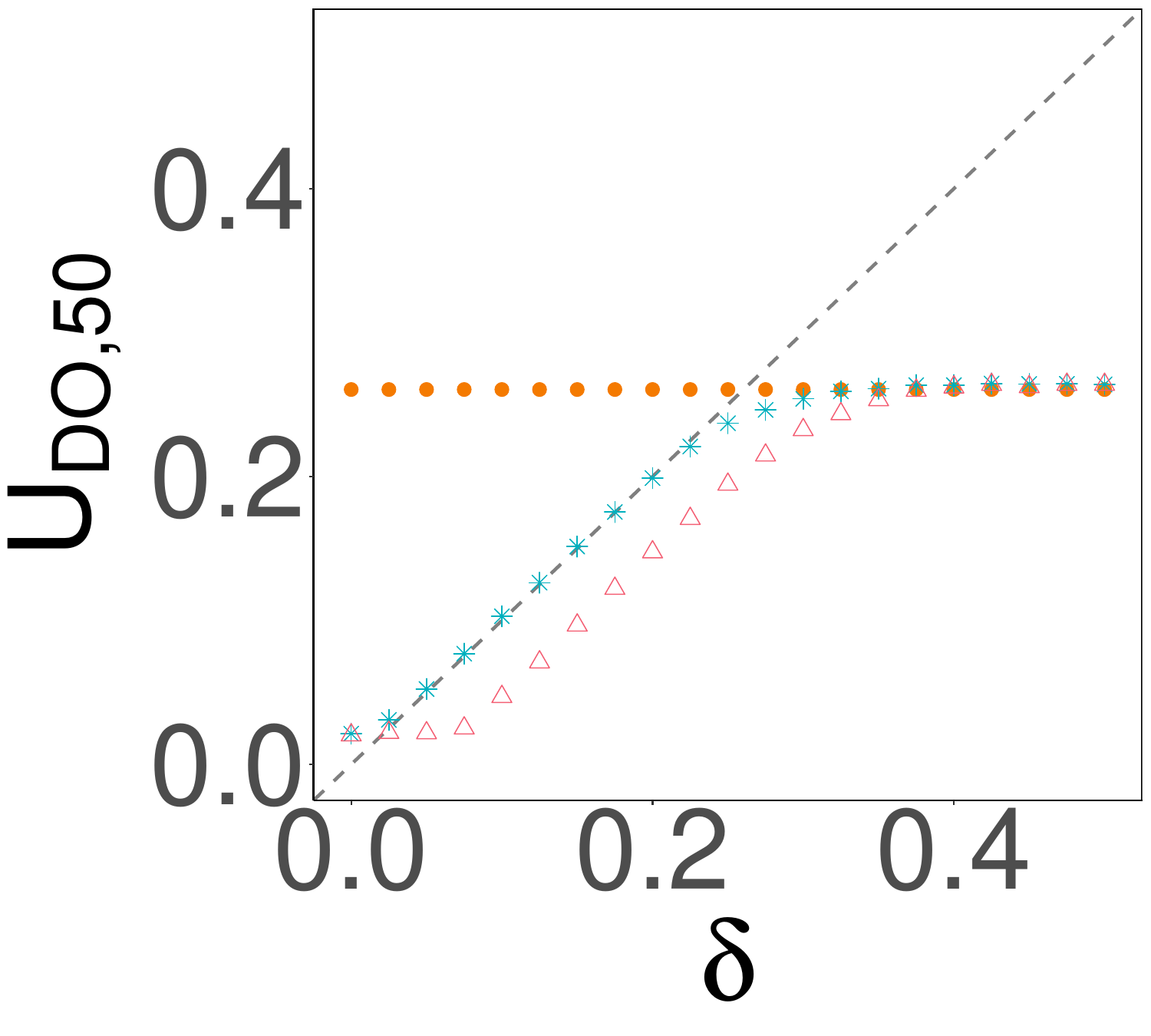} &
			\hspace{\thisgap}\includegraphics[width=\thiswidth]{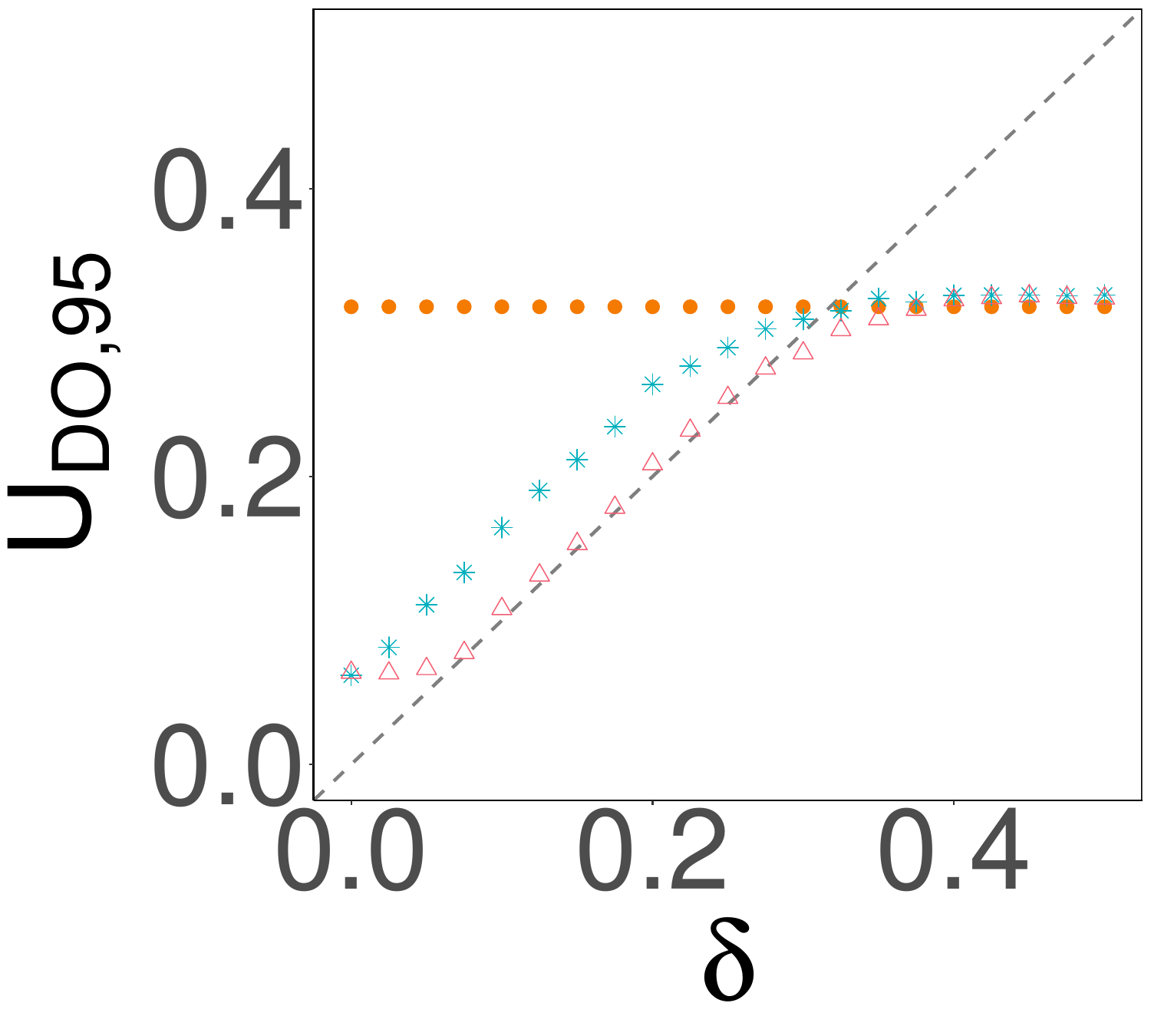} \\
			\hspace{\thisgap}\includegraphics[width=\thiswidth]{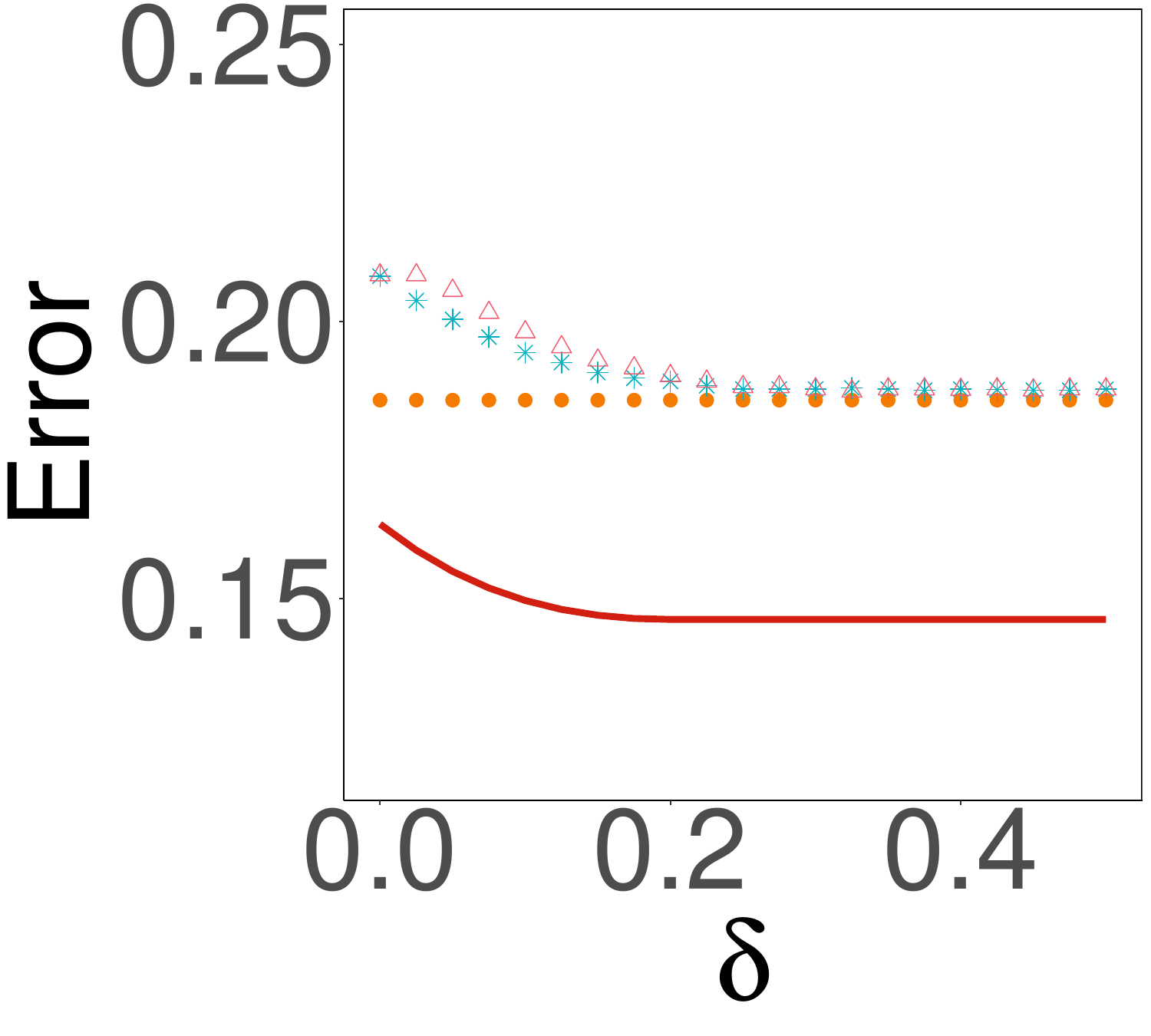} &
			\hspace{\thisgap}\includegraphics[width=\thiswidth]{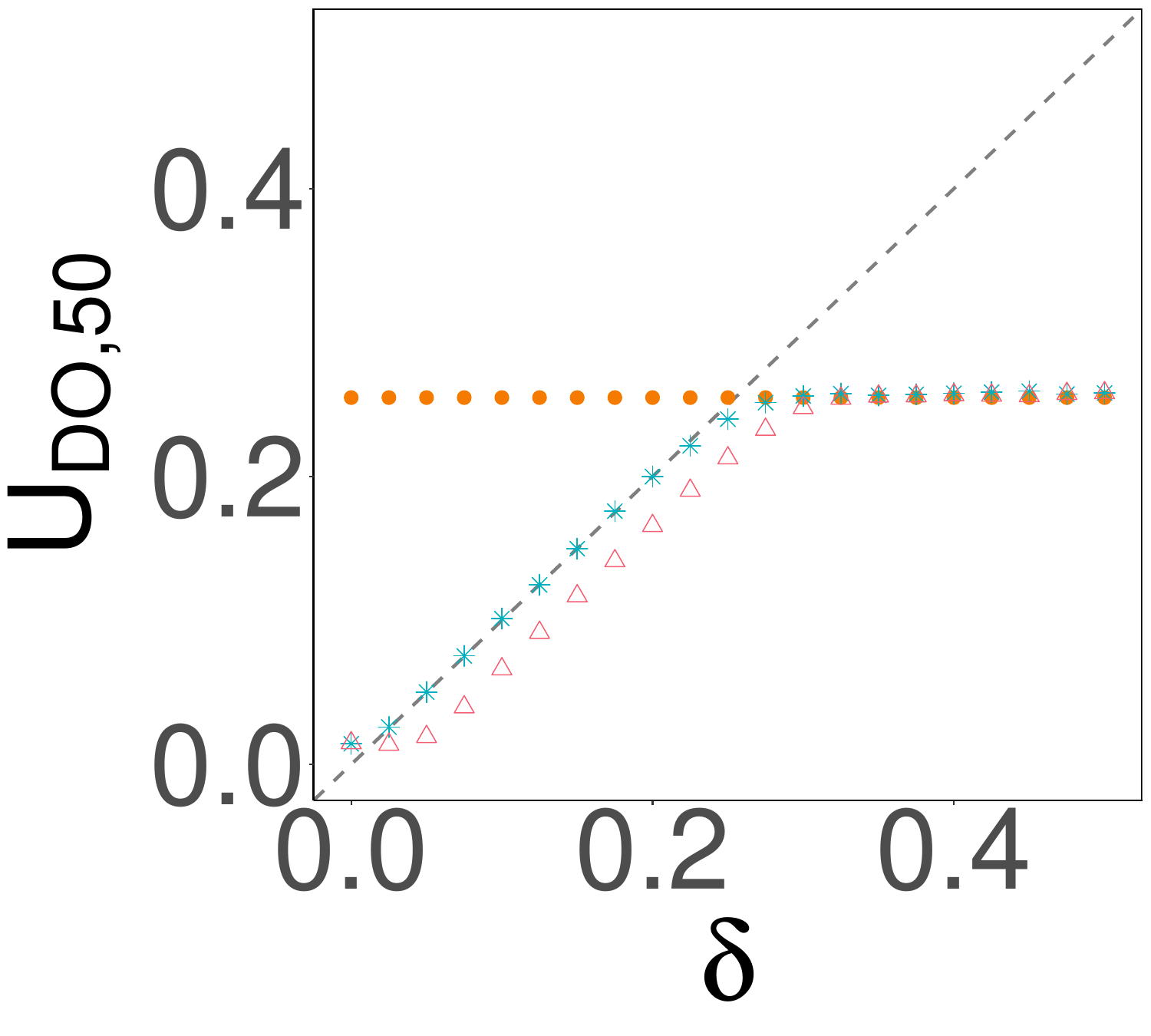} &
			\hspace{\thisgap}\includegraphics[width=\thiswidth]{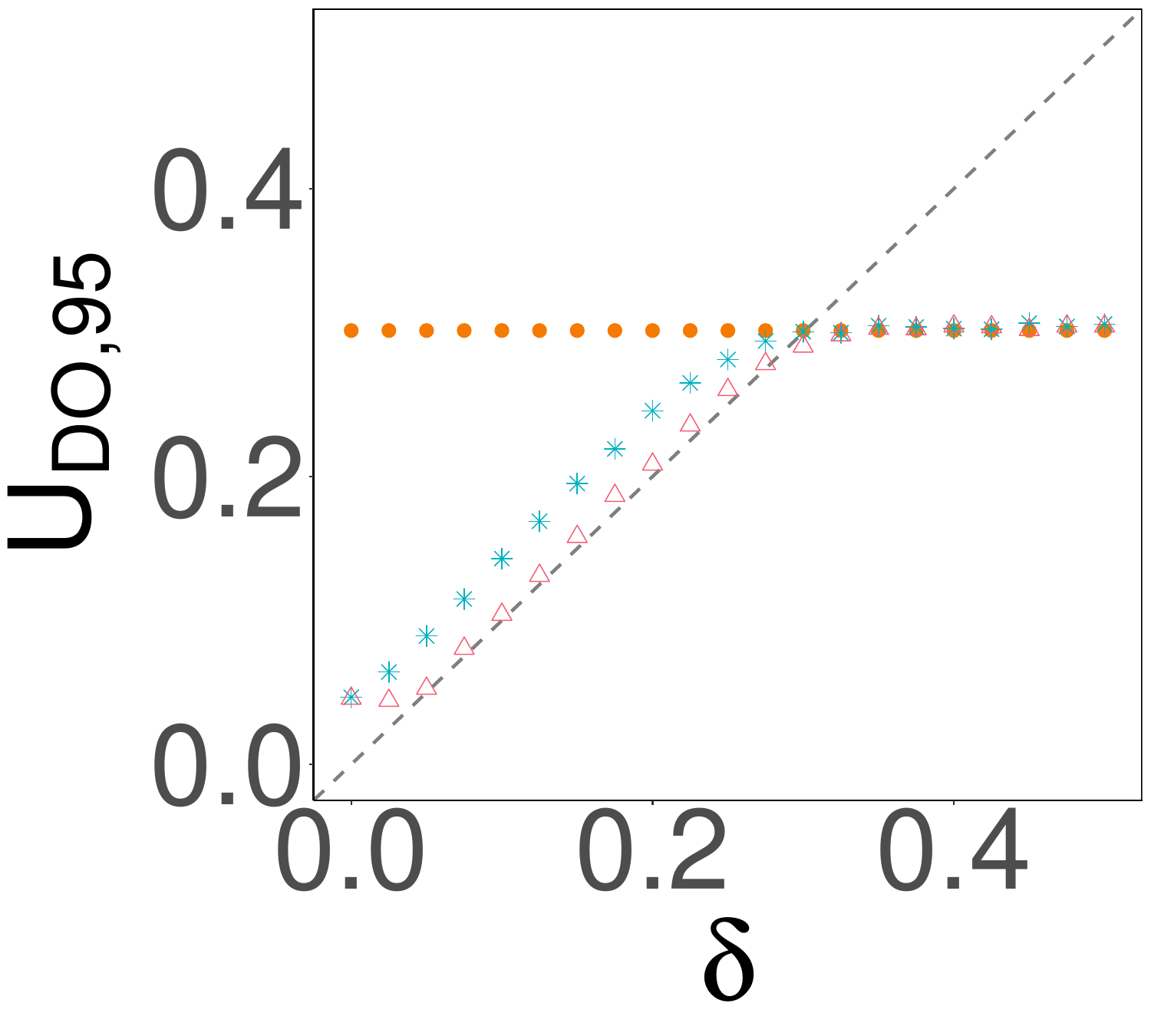}
		\end{tabular}
		\caption{Disparity DO results under the Gaussian model, $\beta=1.5$. Top: $n=1000$; middle: $n=2000$; bottom: $n=5000$. }
		\label{fig:DO_gauss_beta_1.5}
	\end{center}
\end{figure*}

\begin{figure*}[!htbp]
	\begin{center}
		\newcommand{\thiswidth}{0.2\linewidth}
		\newcommand{\thisgap}{0mm}
		\begin{tabular}{ccc}
				\hspace{\thisgap}\includegraphics[width=\thiswidth]{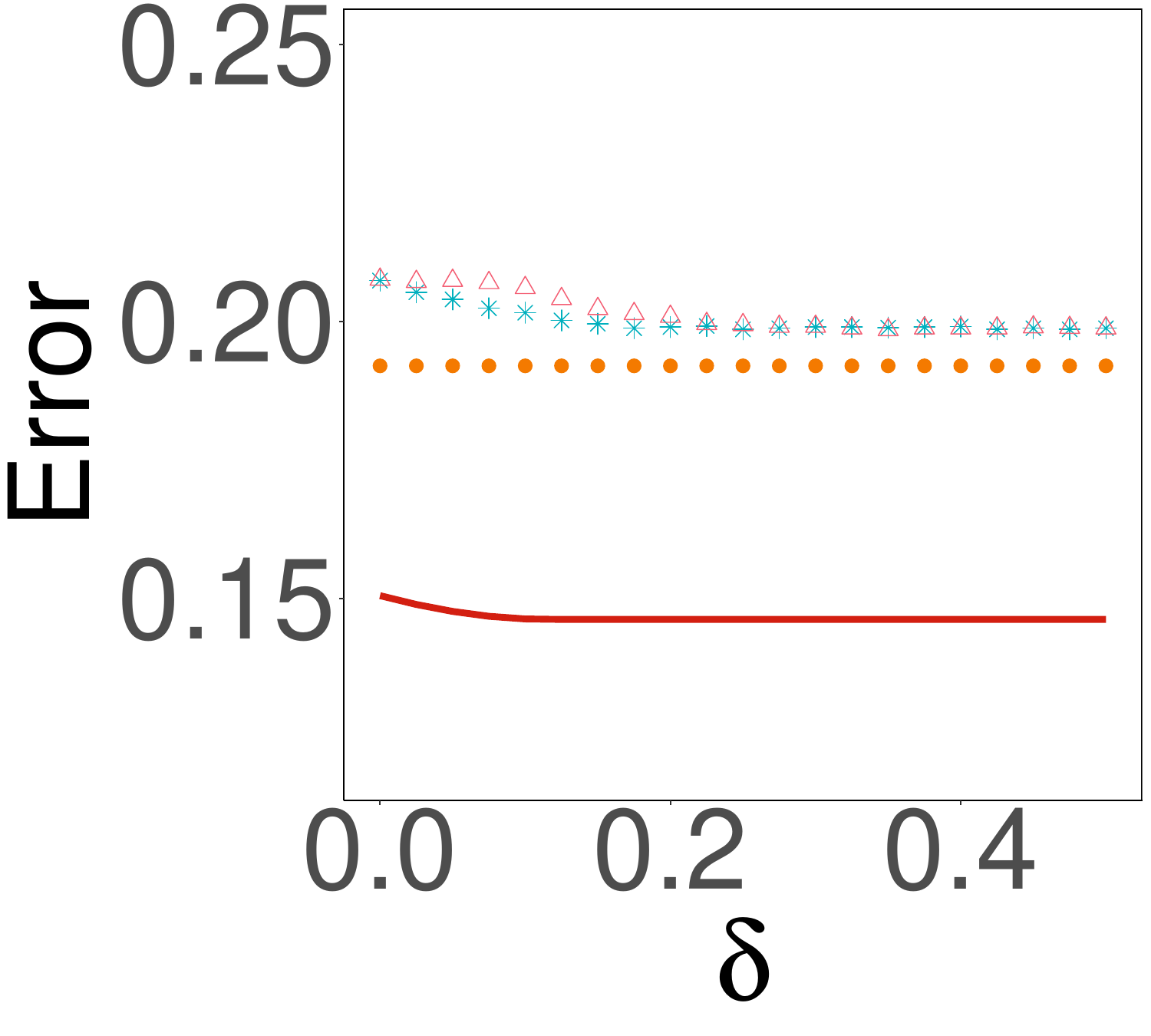} &
			\hspace{\thisgap}\includegraphics[width=\thiswidth]{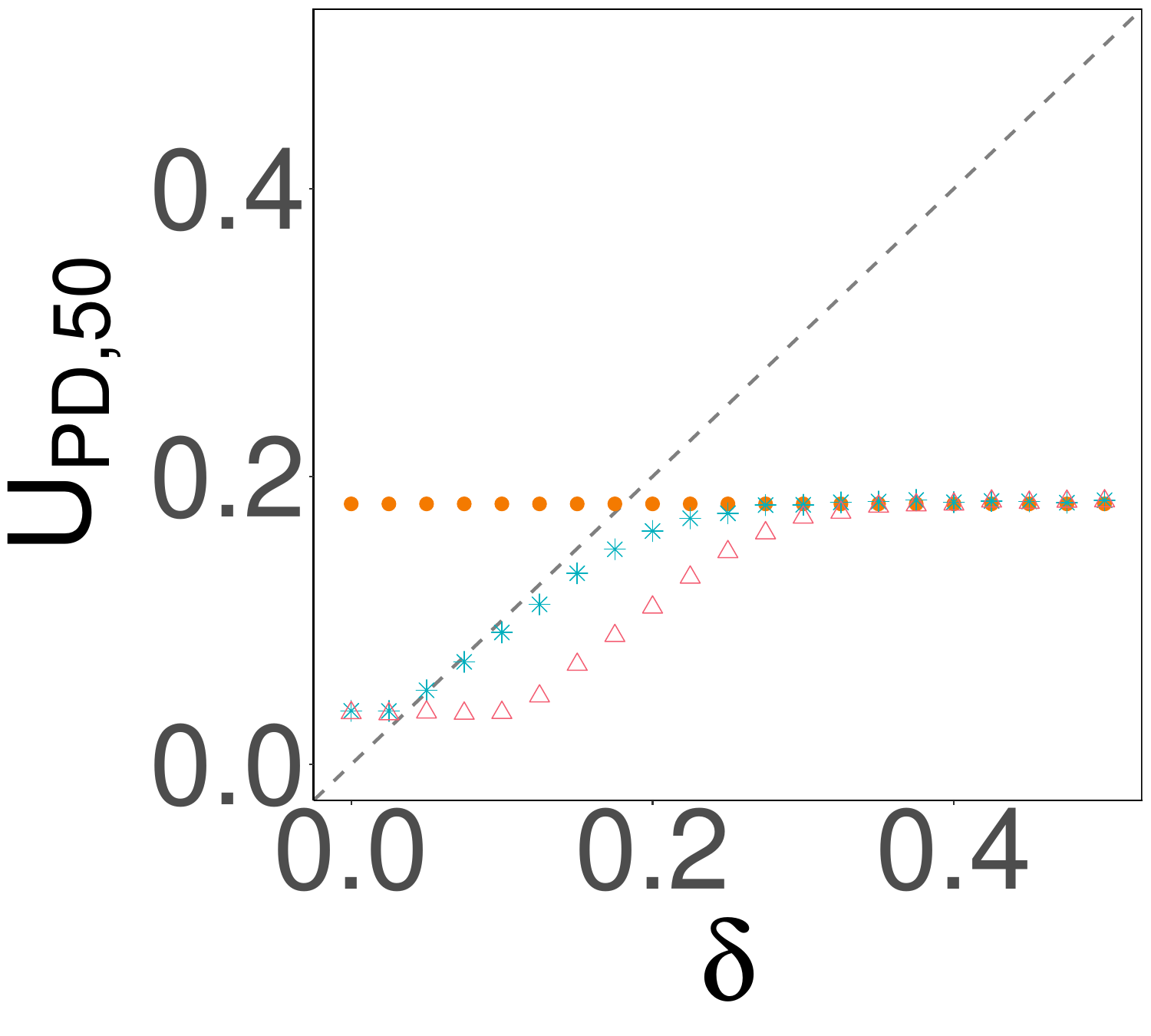} &
			\hspace{\thisgap}\includegraphics[width=\thiswidth]{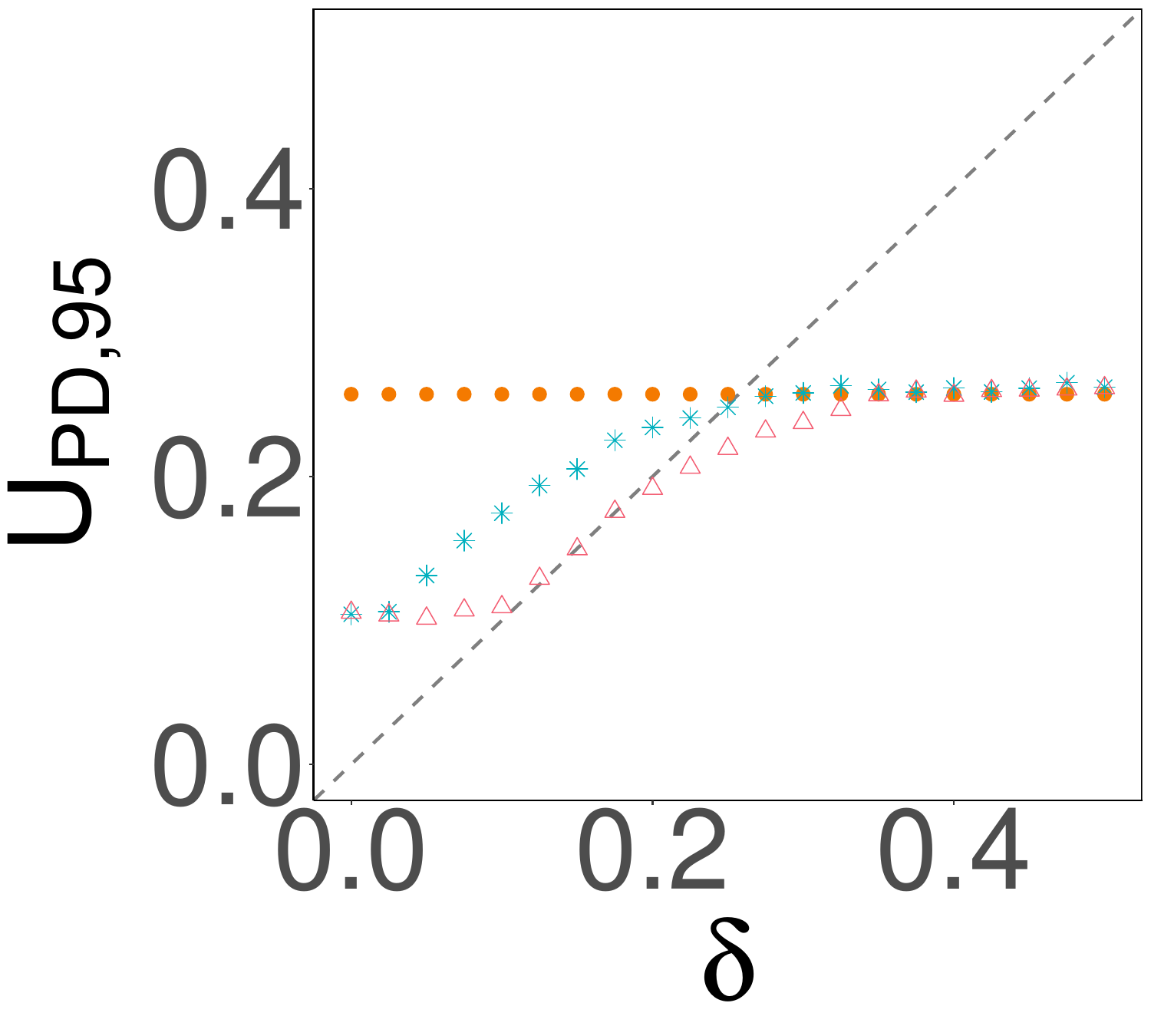} \\
			\hspace{\thisgap}\includegraphics[width=\thiswidth]{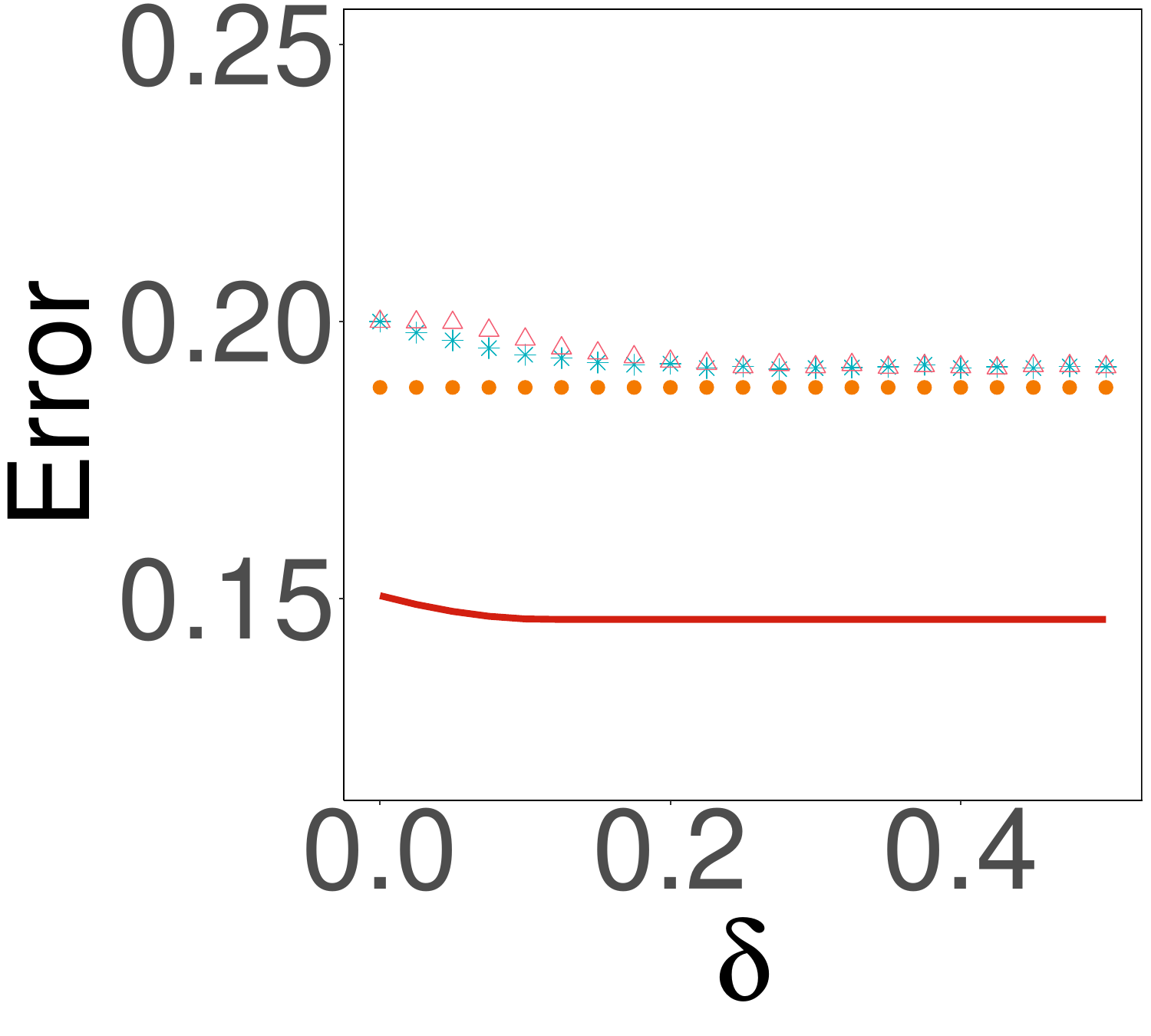} &
			\hspace{\thisgap}\includegraphics[width=\thiswidth]{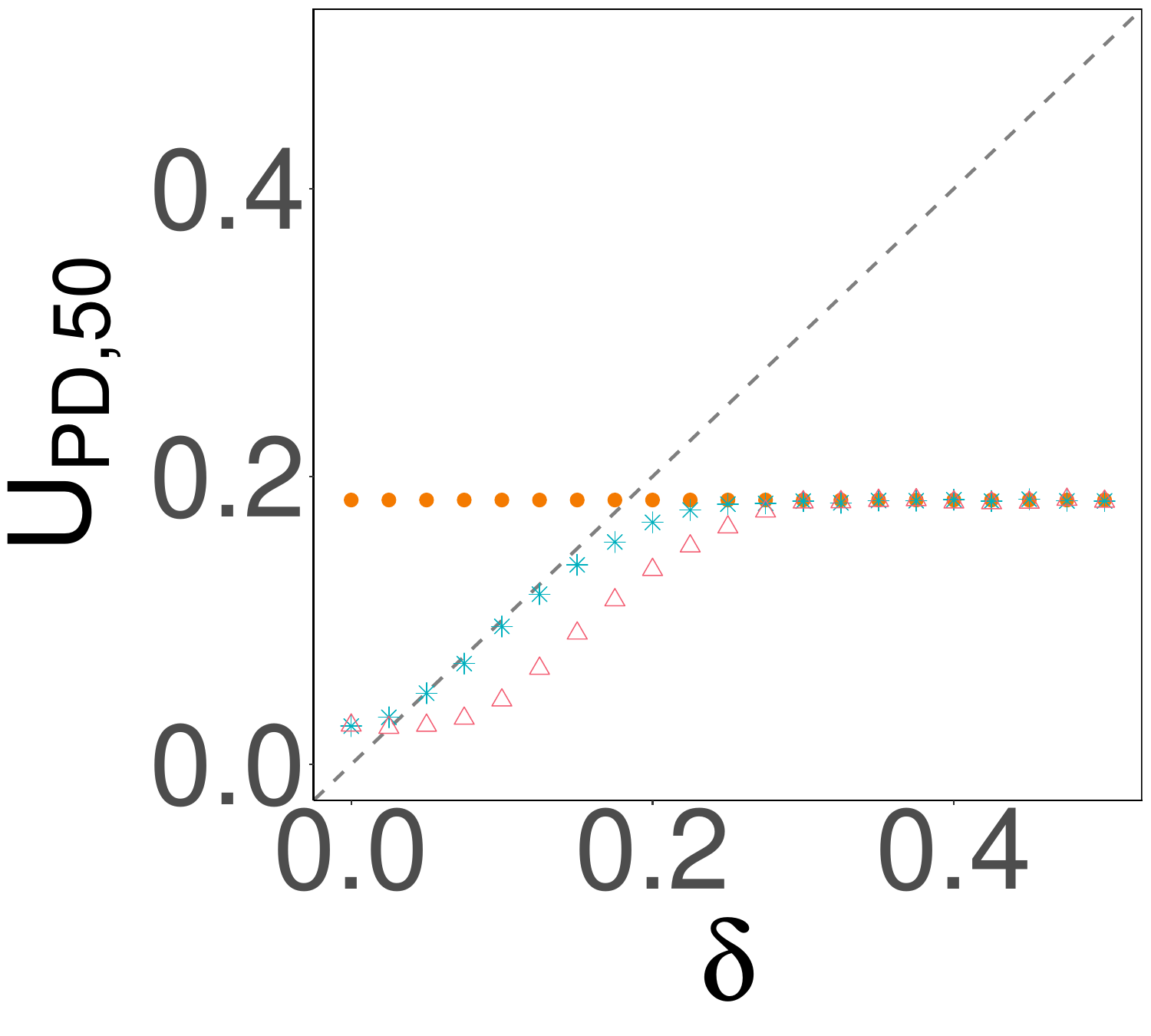} &
			\hspace{\thisgap}\includegraphics[width=\thiswidth]{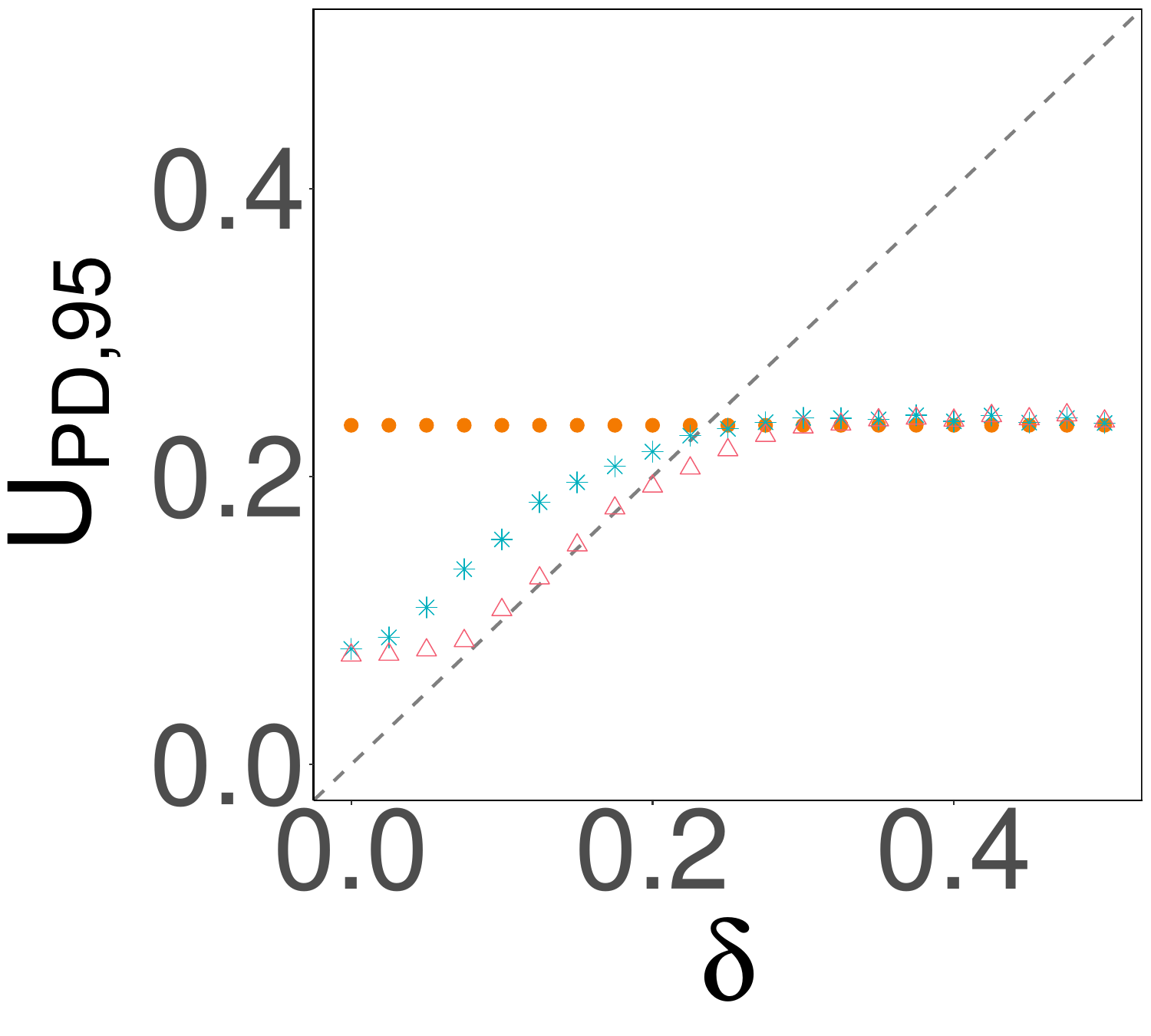} \\
			\hspace{\thisgap}\includegraphics[width=\thiswidth]{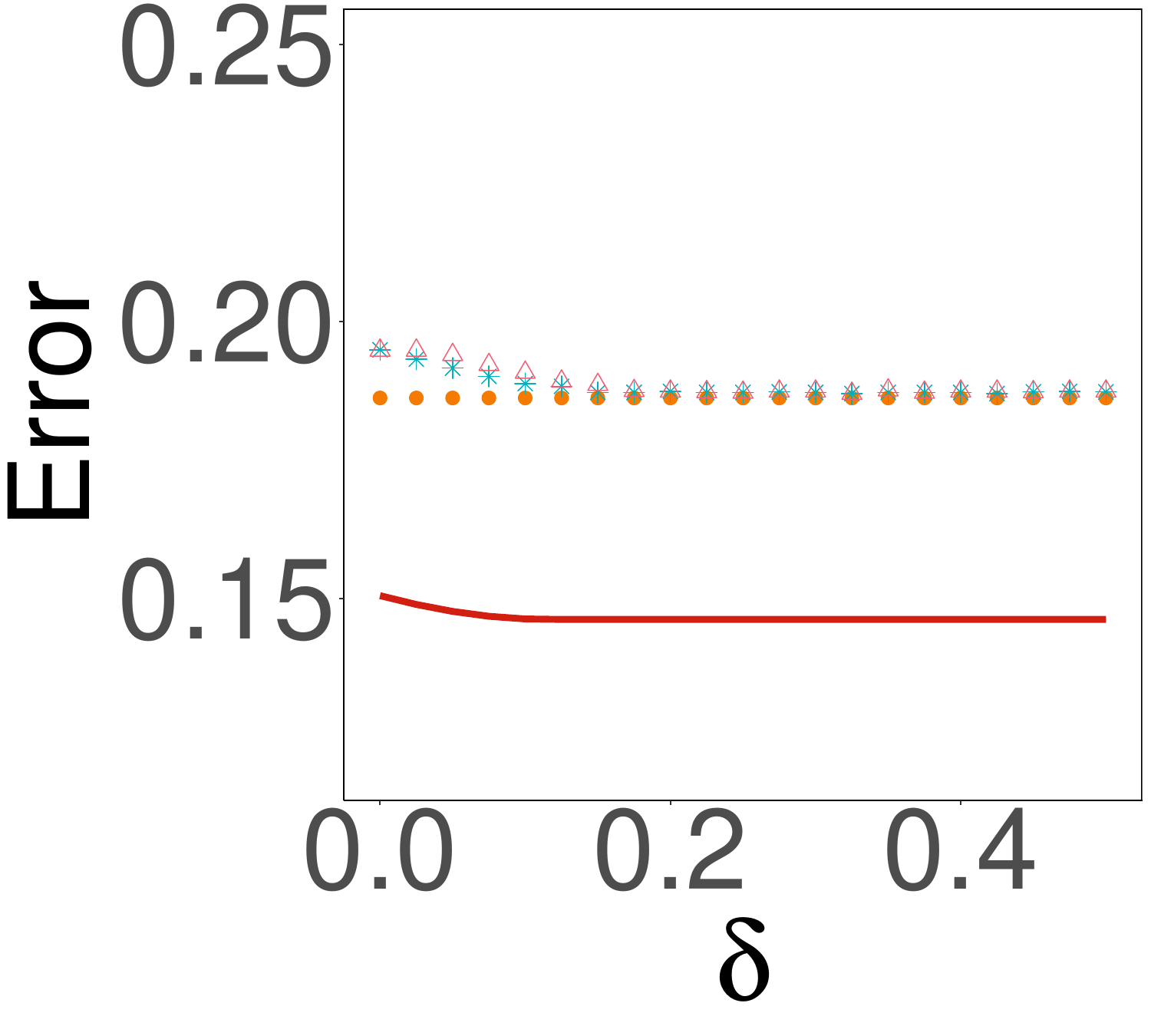} &
			\hspace{\thisgap}\includegraphics[width=\thiswidth]{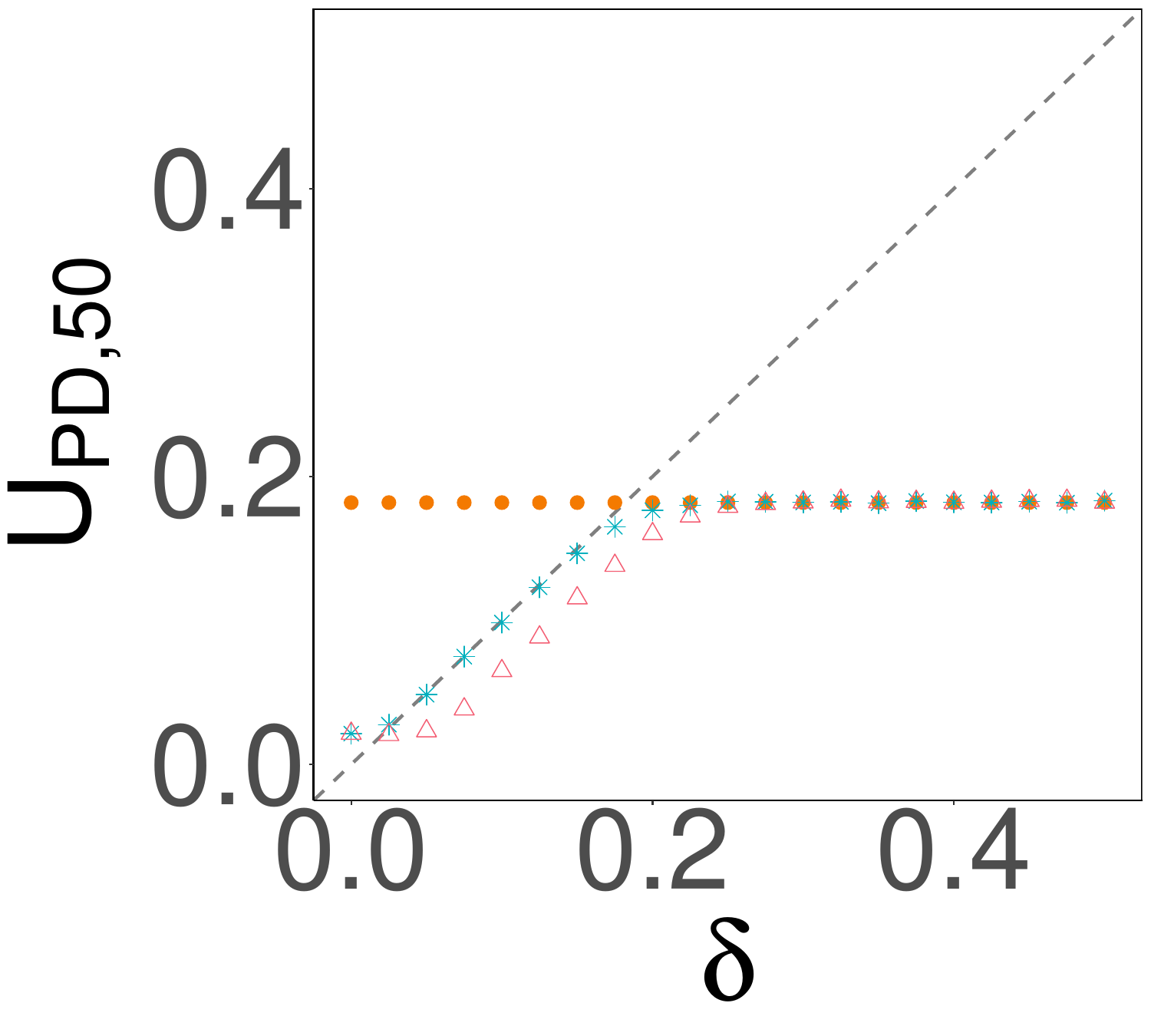} &
			\hspace{\thisgap}\includegraphics[width=\thiswidth]{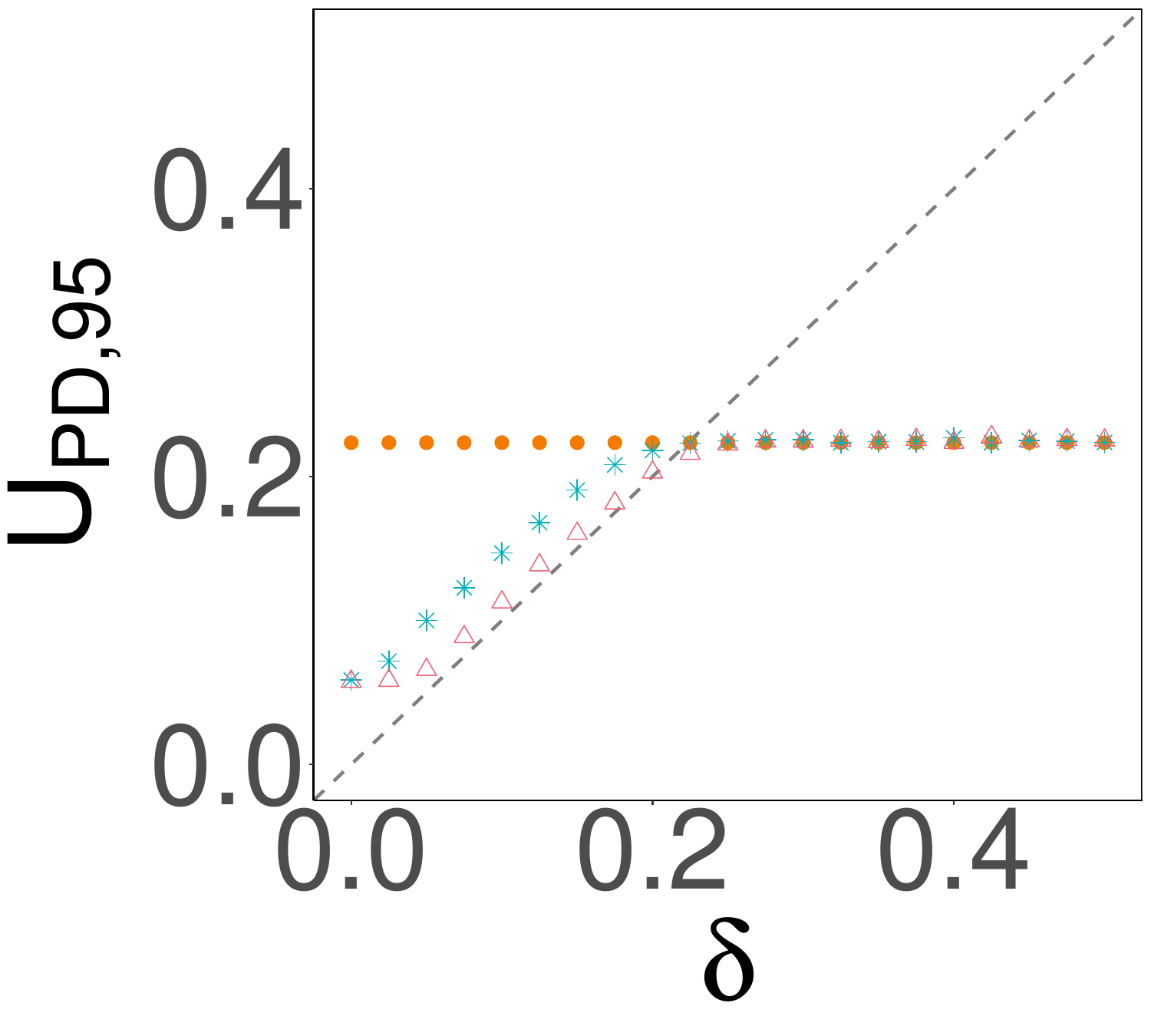}
		\end{tabular}
		\caption{Disparity PD results under the Gaussian model, $\beta=1.5$. Top: $n=1000$; middle: $n=2000$; bottom: $n=5000$. }
		\label{fig:PD_gauss_beta_1.5}
	\end{center}
\end{figure*}

\begin{figure*}[ht!]
	\begin{center}
		\newcommand{\thiswidth}{0.2\linewidth}
		\newcommand{\thisgap}{0mm}
		\begin{tabular}{ccc}
			\hspace{\thisgap}\includegraphics[width=\thiswidth]{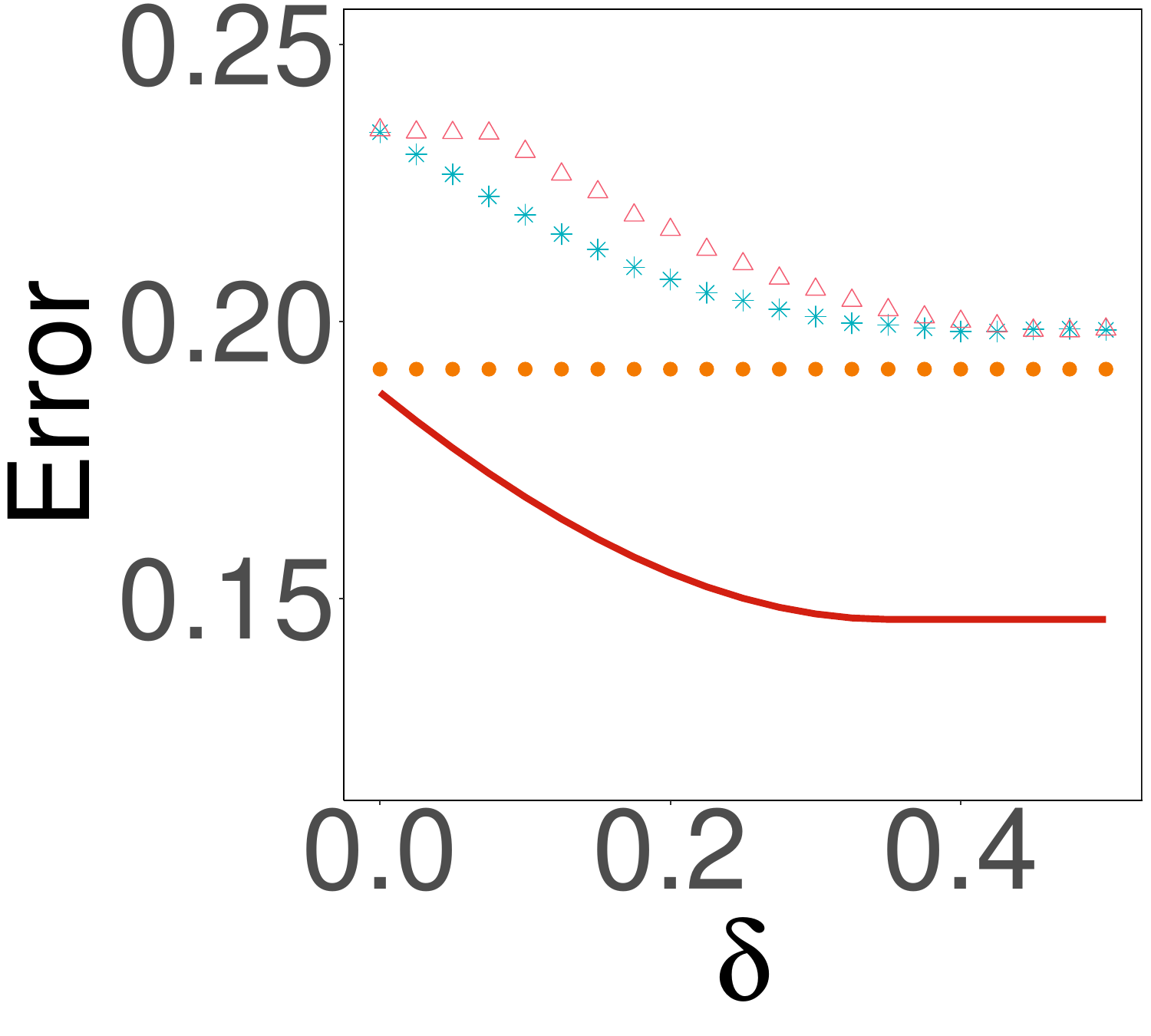} &
			\hspace{\thisgap}\includegraphics[width=\thiswidth]{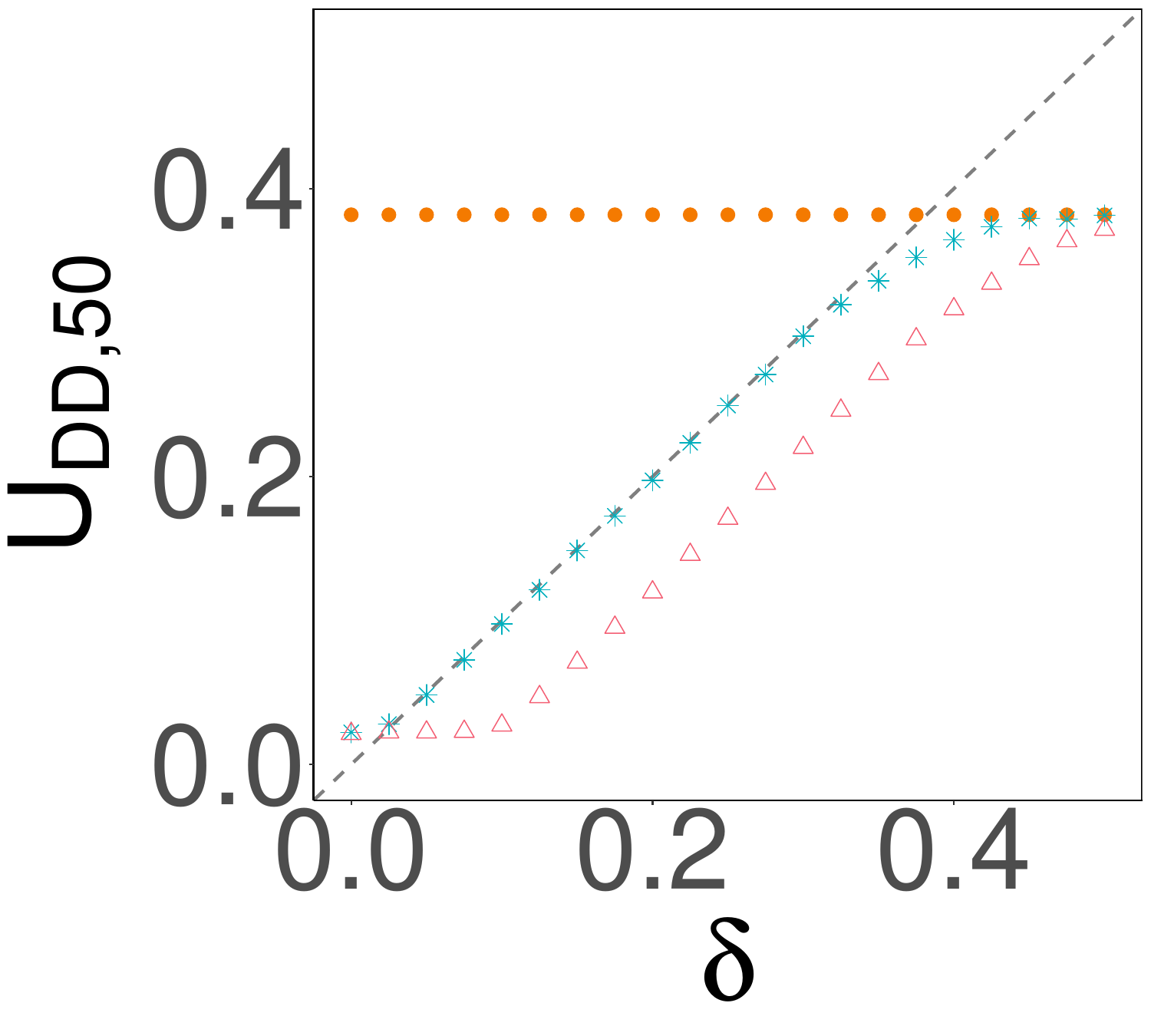} &
			\hspace{\thisgap}\includegraphics[width=\thiswidth]{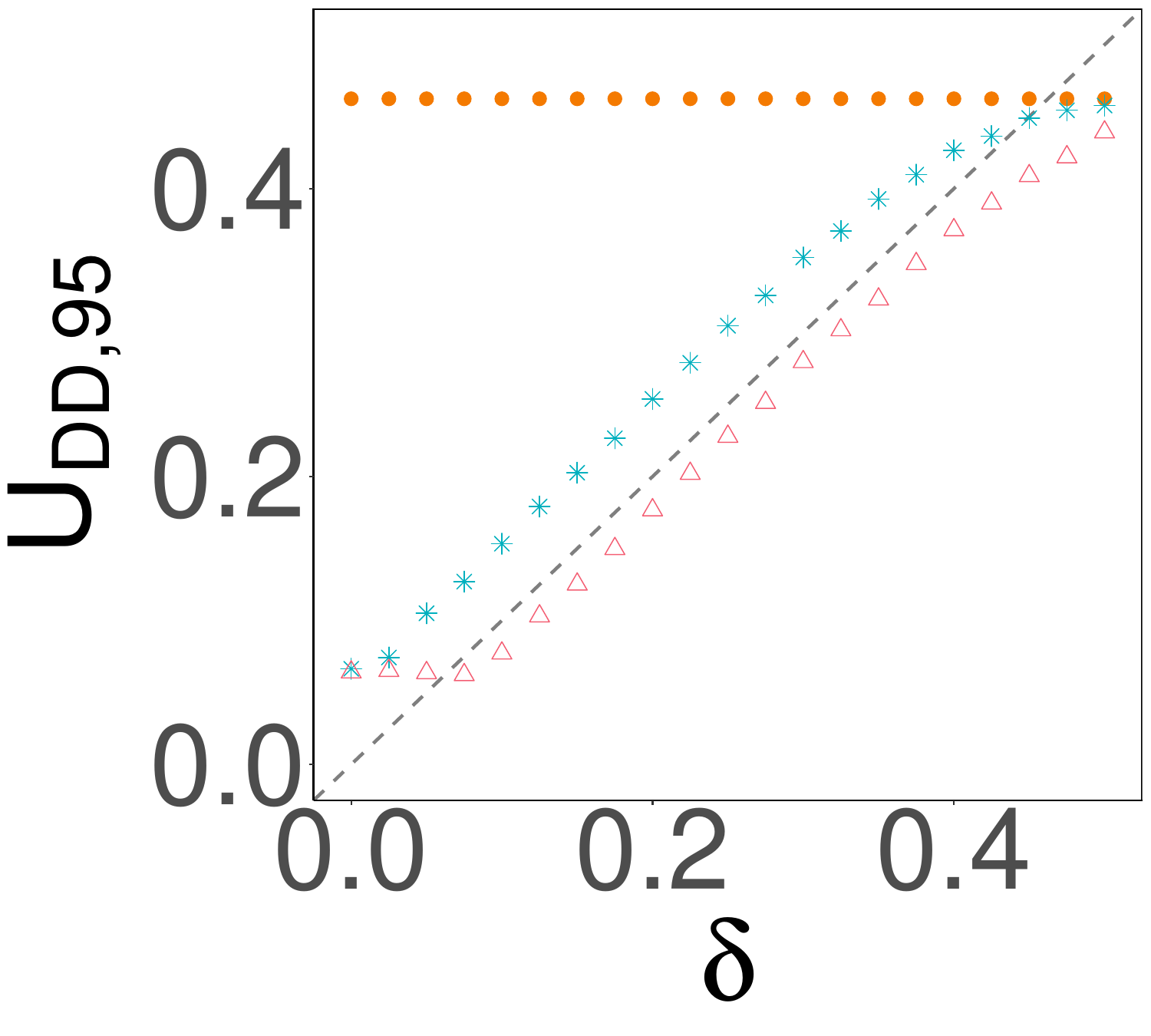} \\
			\hspace{\thisgap}\includegraphics[width=\thiswidth]{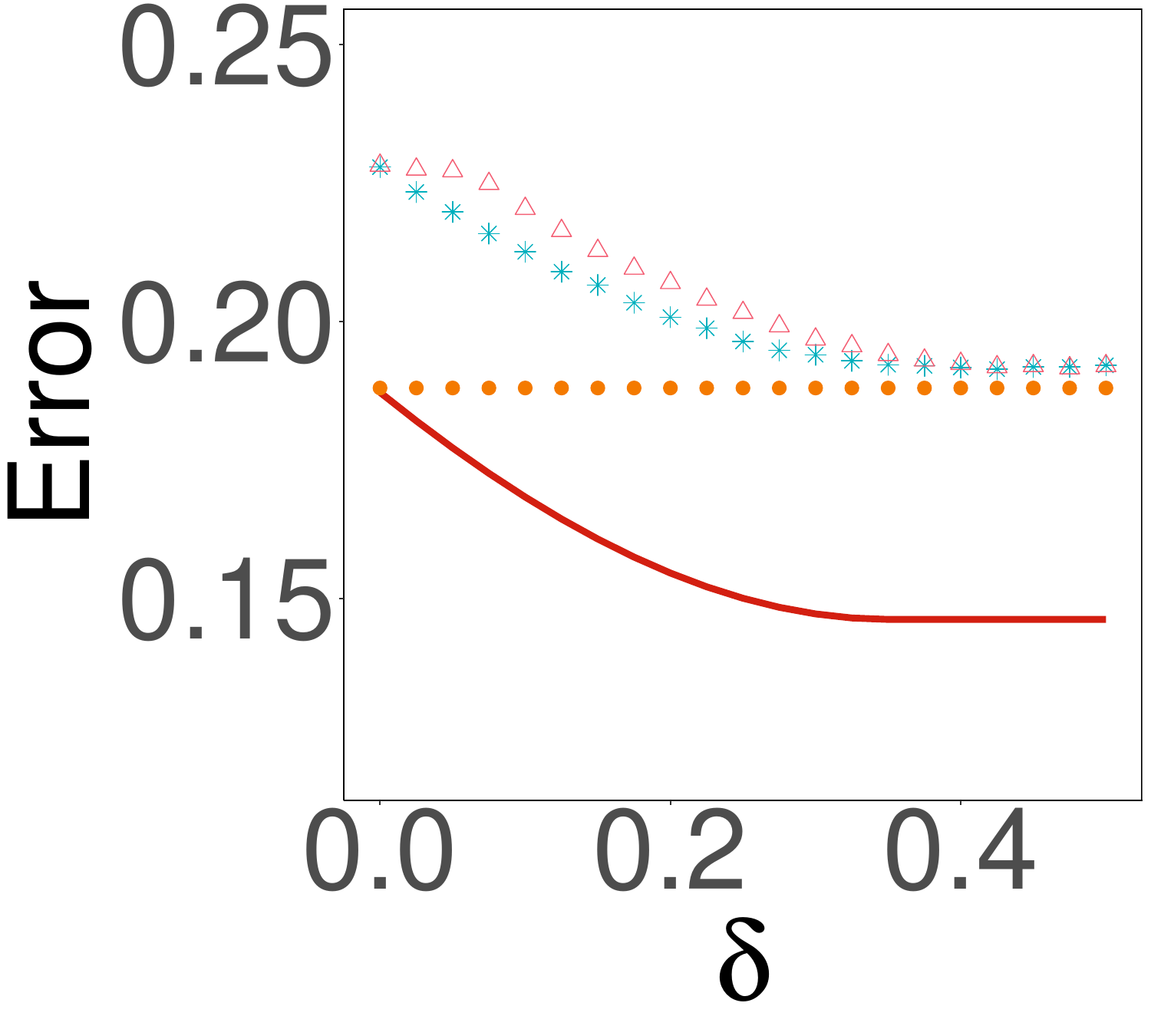} &
			\hspace{\thisgap}\includegraphics[width=\thiswidth]{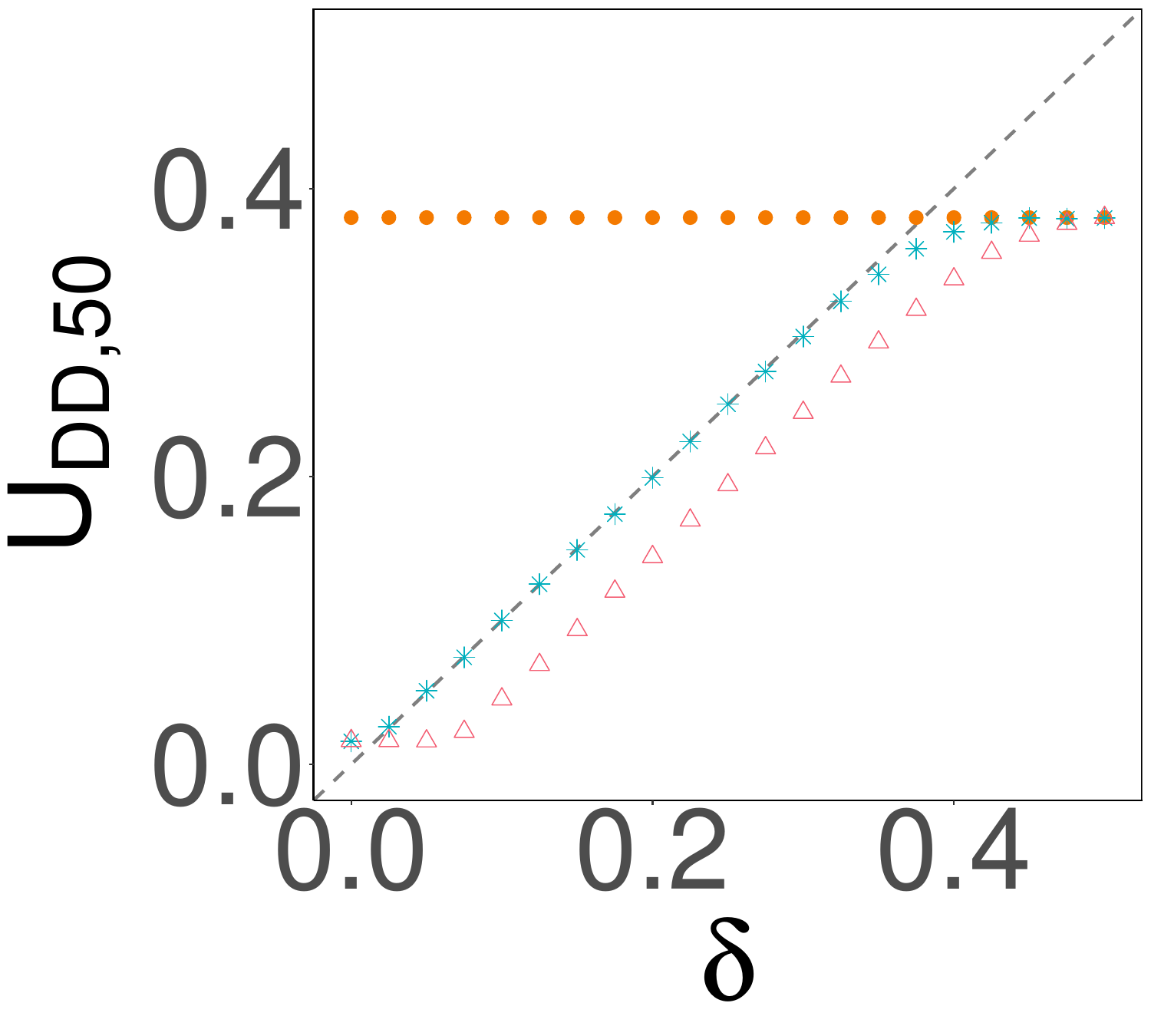} &
			\hspace{\thisgap}\includegraphics[width=\thiswidth]{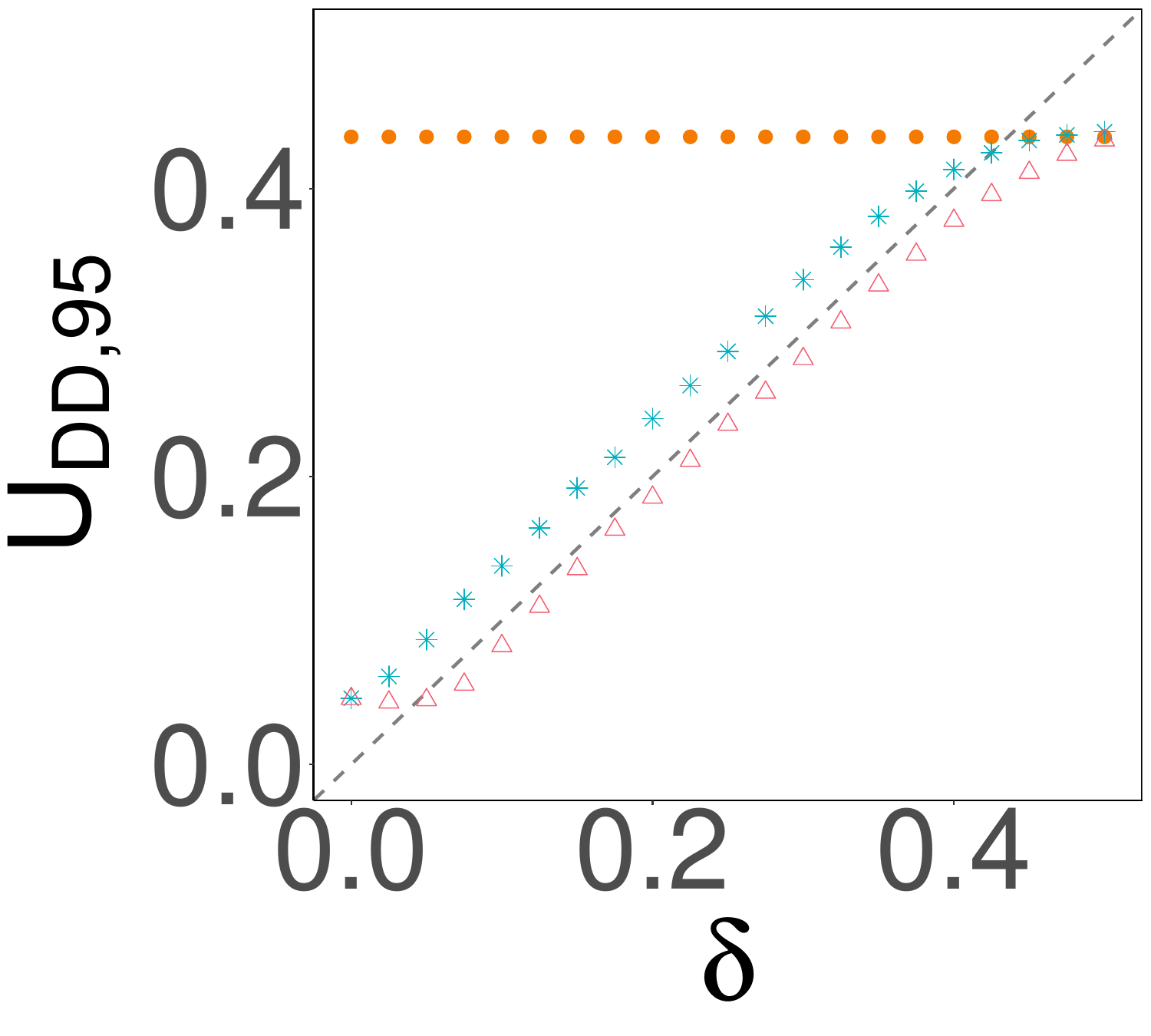} \\
			\hspace{\thisgap}\includegraphics[width=\thiswidth]{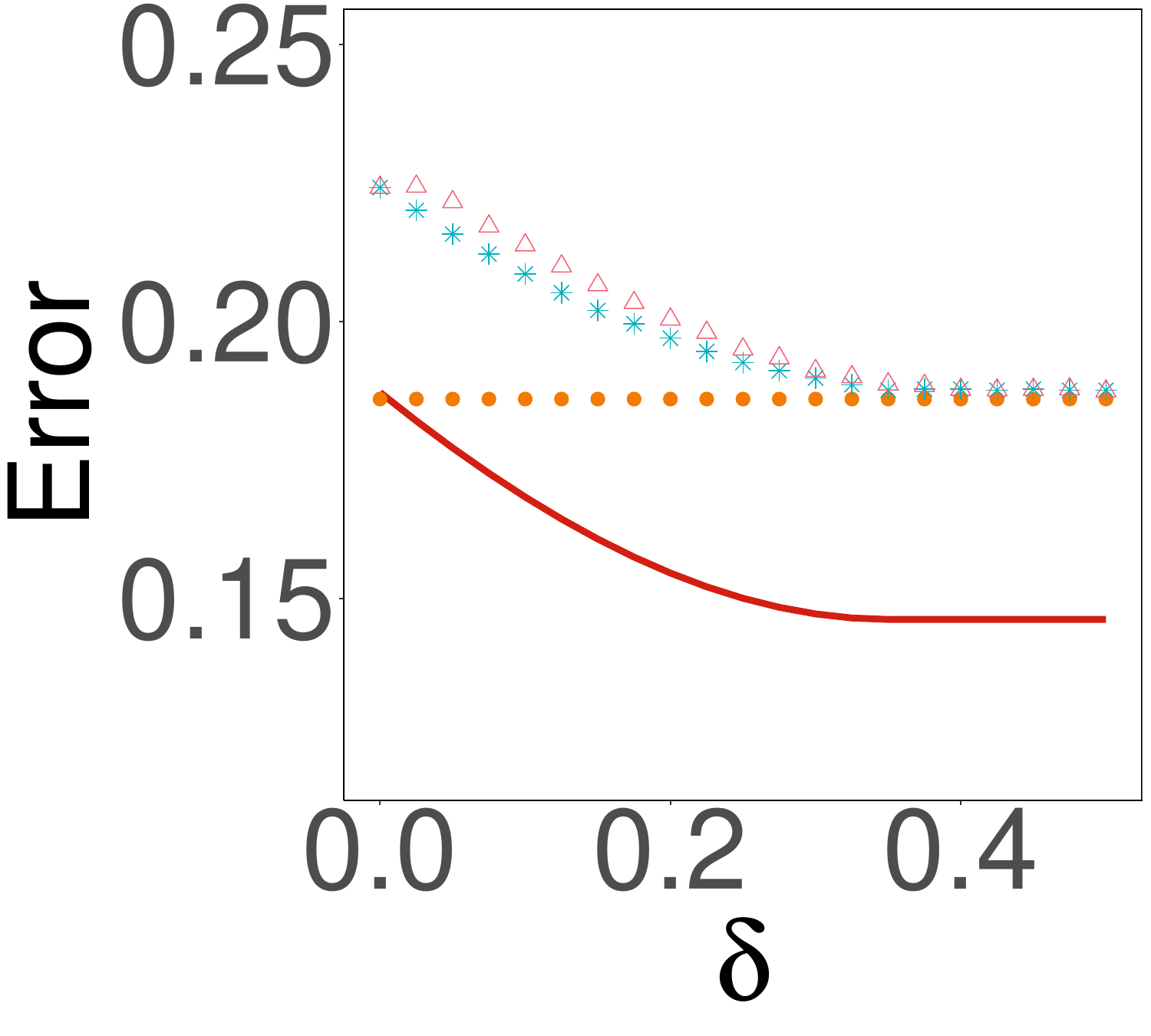} &
			\hspace{\thisgap}\includegraphics[width=\thiswidth]{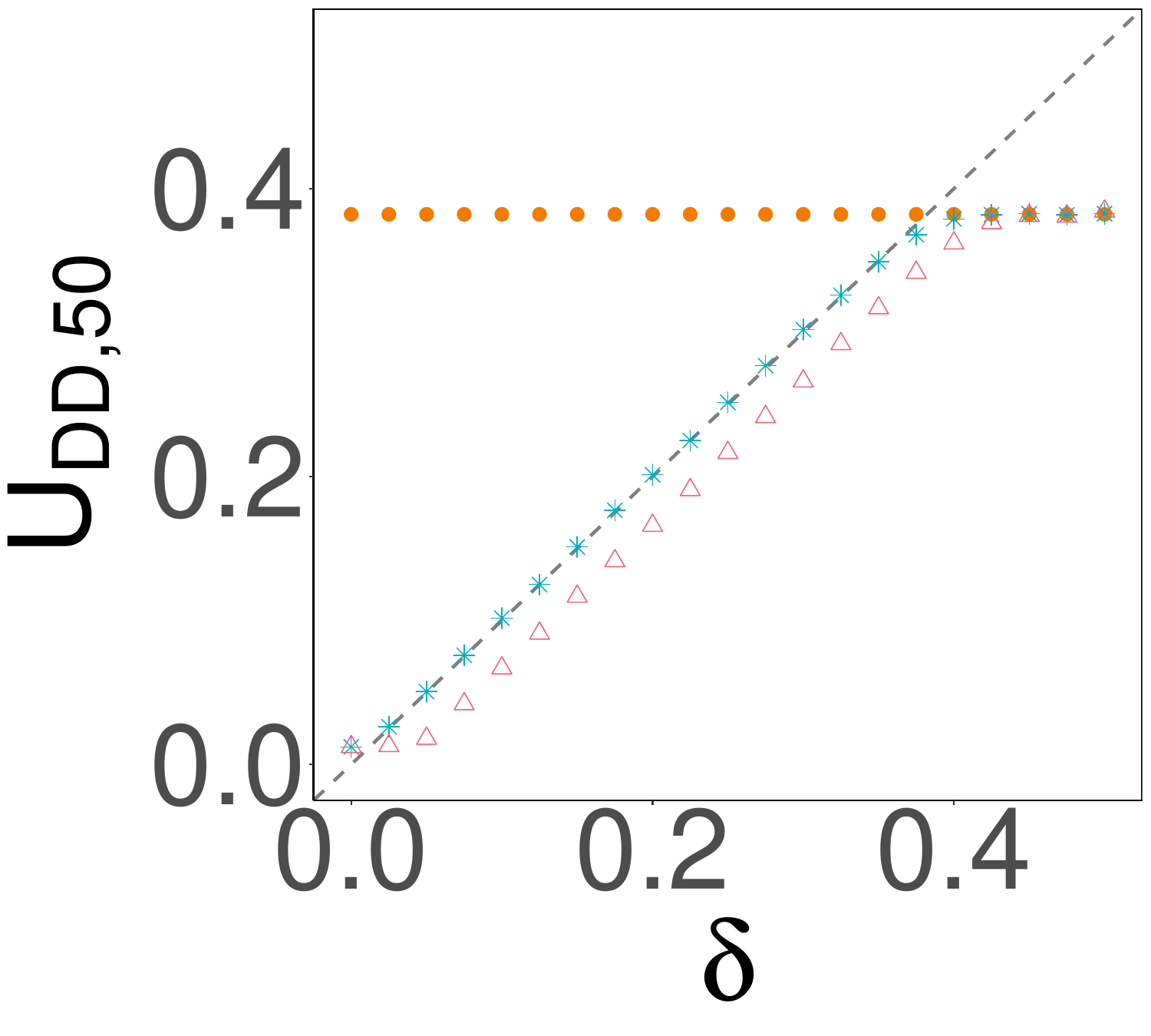} &
			\hspace{\thisgap}\includegraphics[width=\thiswidth]{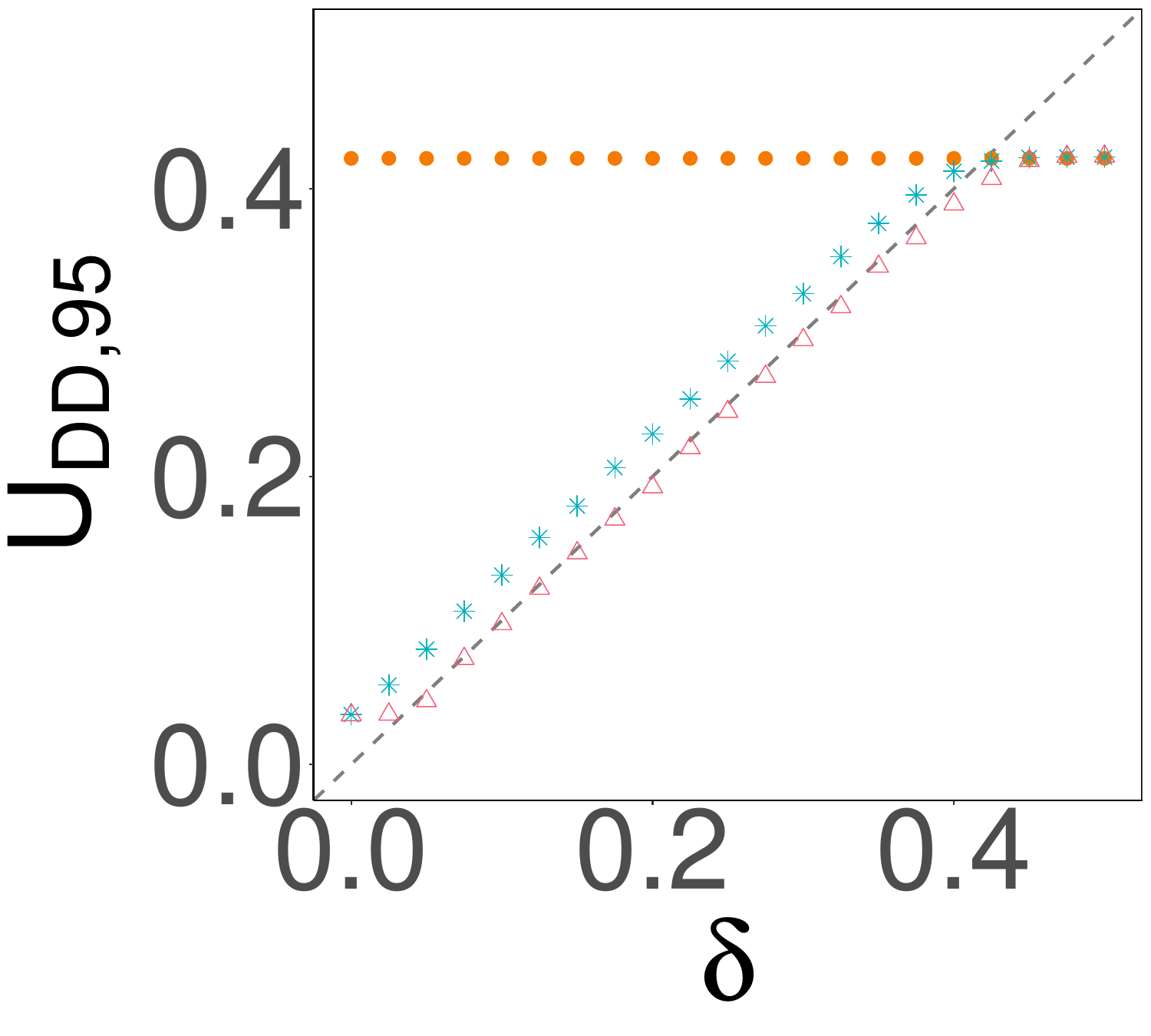}
		\end{tabular}
		\caption{Disparity DD results under the Gaussian model, $\beta=1.5$. Top: $n=1000$; middle: $n=2000$; bottom: $n=5000$.}
		\label{fig:DD_gauss_beta_1.5}
	\end{center}
\end{figure*}

\paragraph{Error-unfairness trade-off.}
We illustrate the error-unfairness trade-off in Figures \ref{fig:tradeoff_DO_gauss_beta_1.5}-\ref{fig:tradeoff_DD_gauss_beta_1.5}. There is only a single point in each figure for FLDA, as it does not incorporate any fairness correction. Both Fair-FDA and Fair-$\mathrm{FLDA_c}$ demonstrate comparable trade-offs between classification error and unfairness, since they are both derived from the Bayes optimal fair classifier. 

\begin{figure*}[!htbp]
	\begin{center}
		\newcommand{\thiswidth}{0.2\linewidth}
		\newcommand{\thisgap}{0mm}
		\begin{tabular}{ccc}
			\hspace{\thisgap}\includegraphics[width=\thiswidth]{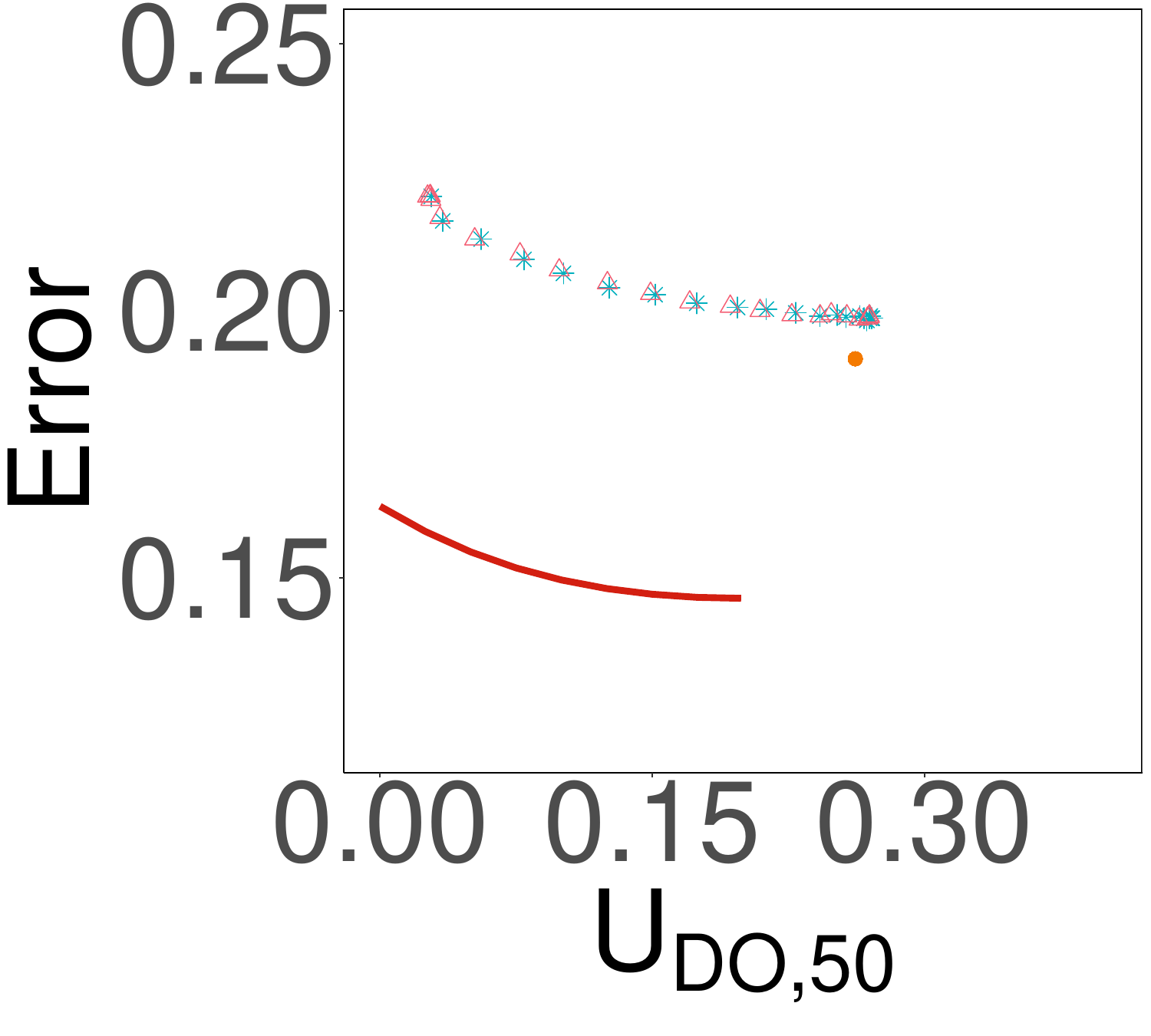} &
            \hspace{\thisgap}\includegraphics[width=\thiswidth]{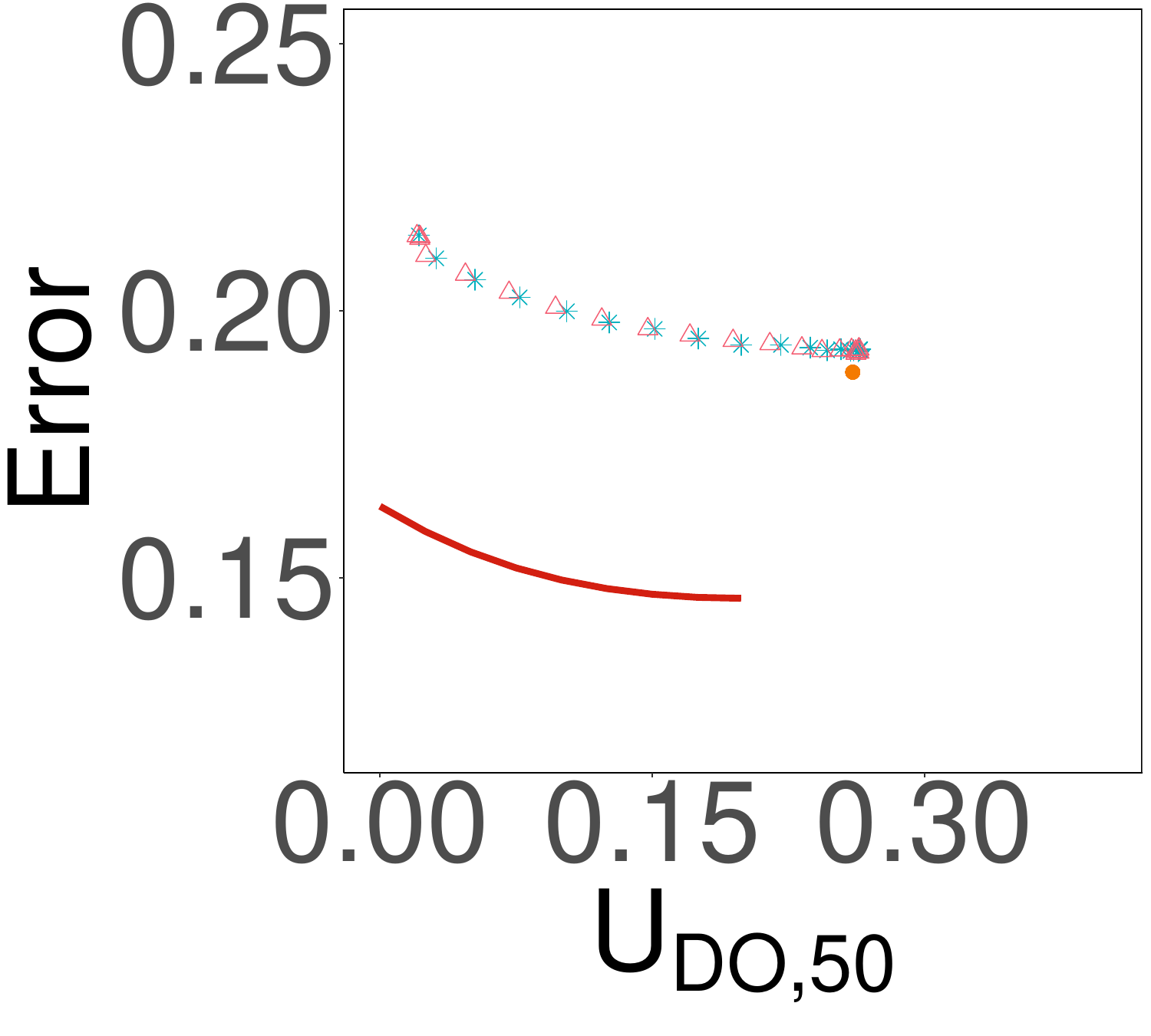} &
            \hspace{\thisgap}\includegraphics[width=\thiswidth]{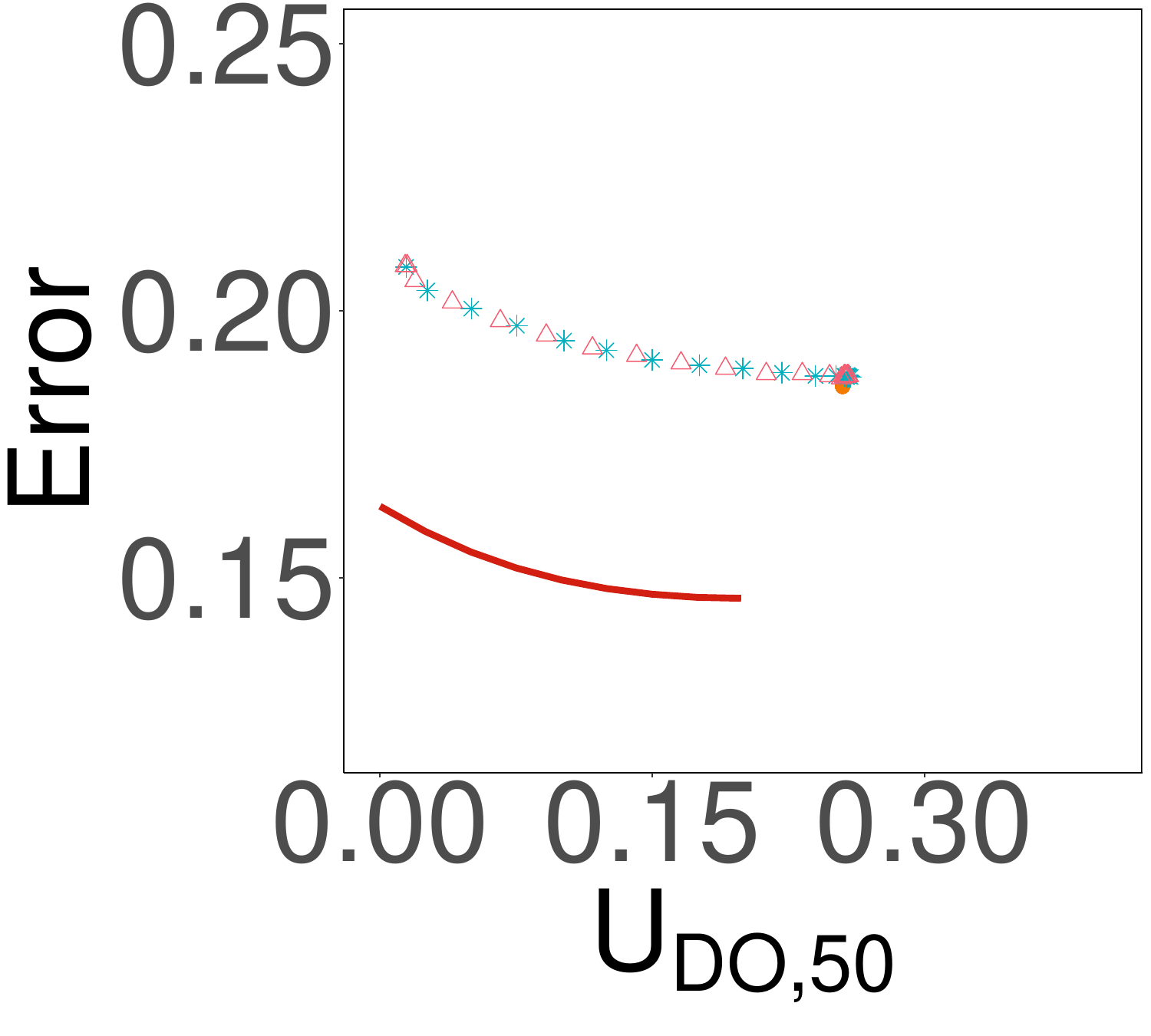} \\
		\hspace{\thisgap}\includegraphics[width=\thiswidth]{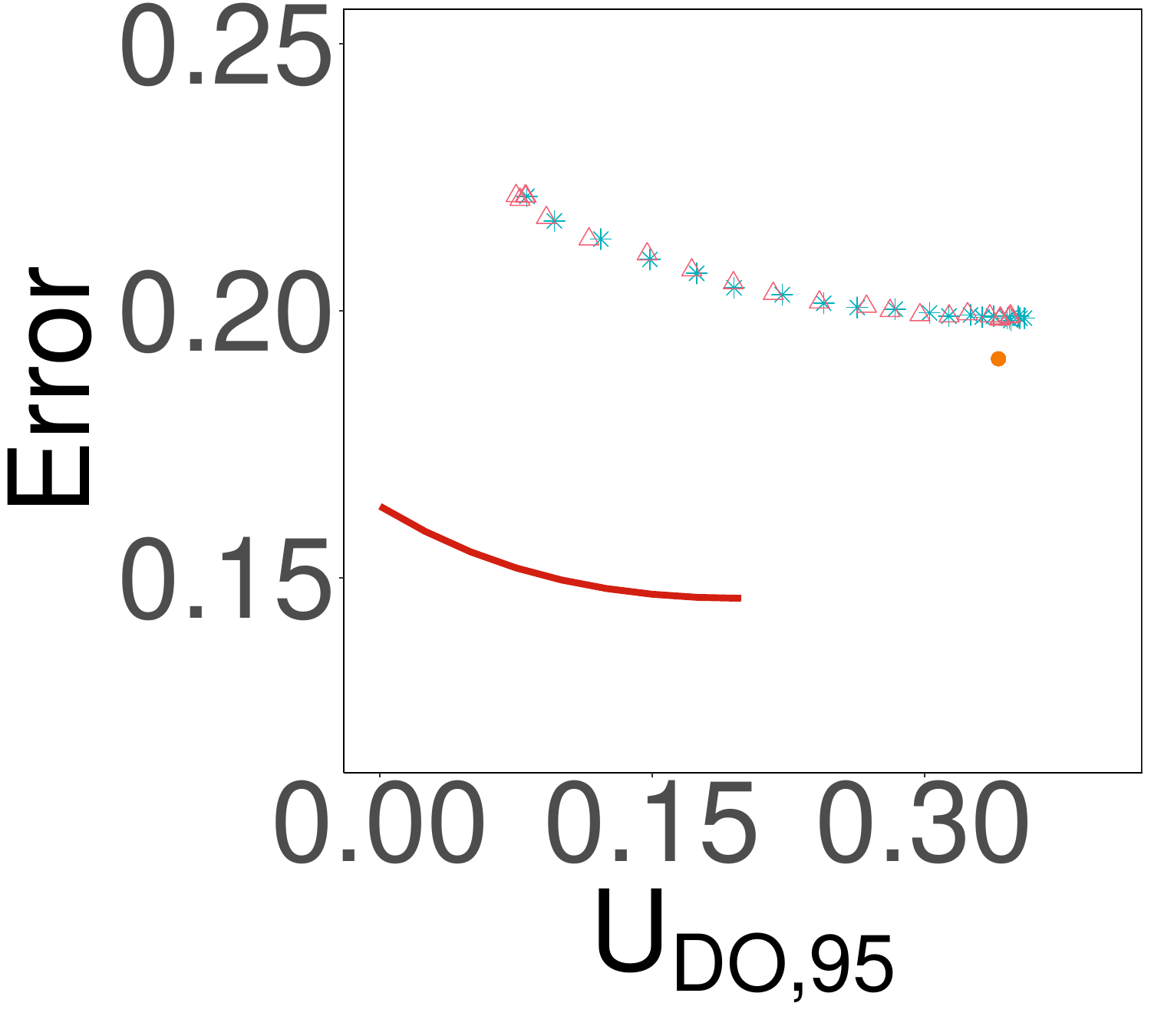} &
			\hspace{\thisgap}\includegraphics[width=\thiswidth]{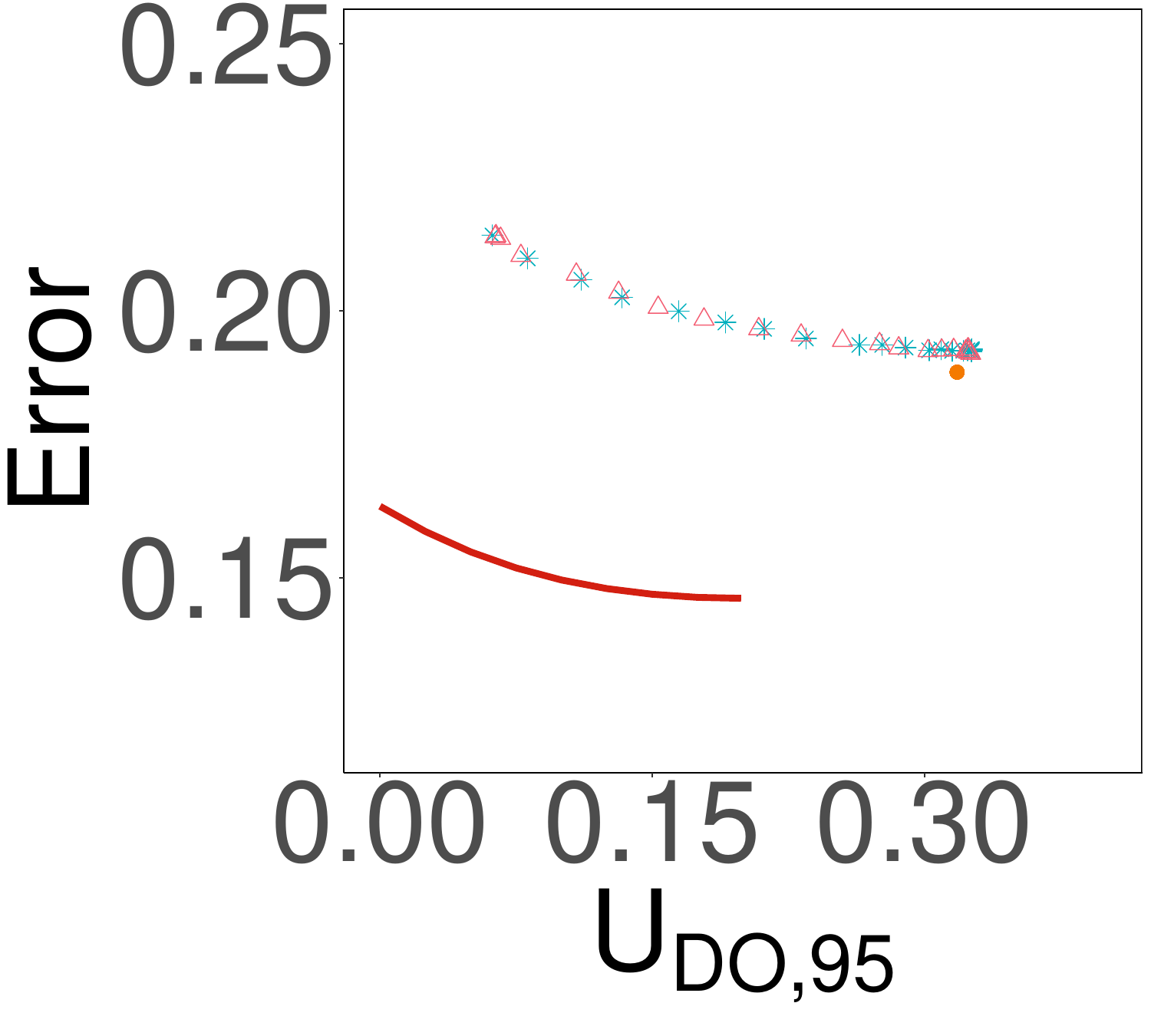} &
            \hspace{\thisgap}\includegraphics[width=\thiswidth]{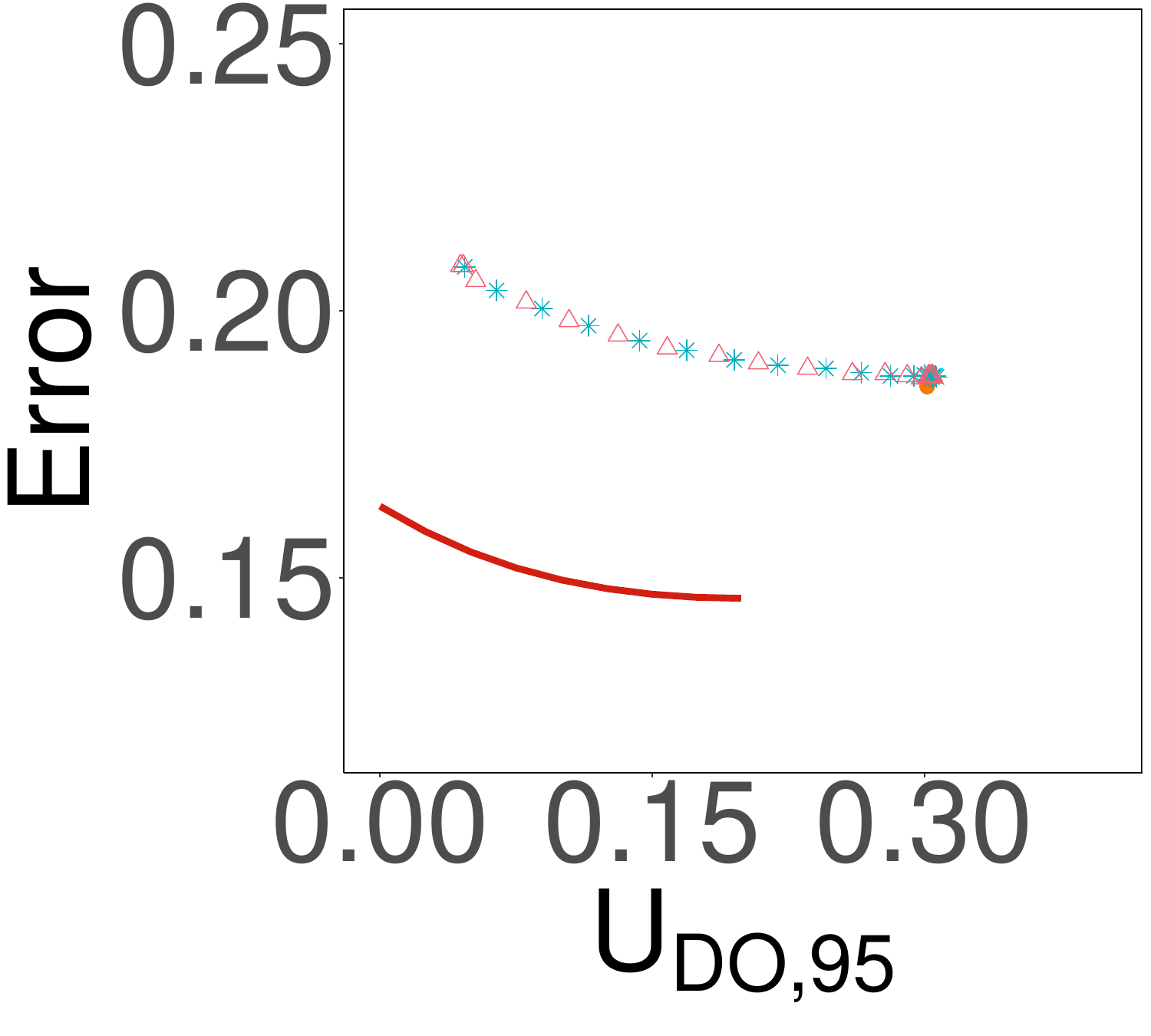} 
		\end{tabular}
		\caption{Error-unfairness trade-off for DO under the Gaussian model, $\beta=1.5$. Left: $n=1000$; middle: $n=2000$; right: $n=5000$.}
		\label{fig:tradeoff_DO_gauss_beta_1.5}
	\end{center}
\end{figure*}

\begin{figure*}[!htbp]
	\begin{center}
		\newcommand{\thiswidth}{0.2\linewidth}
		\newcommand{\thisgap}{0mm}
		\begin{tabular}{ccc}
			\hspace{\thisgap}\includegraphics[width=\thiswidth]{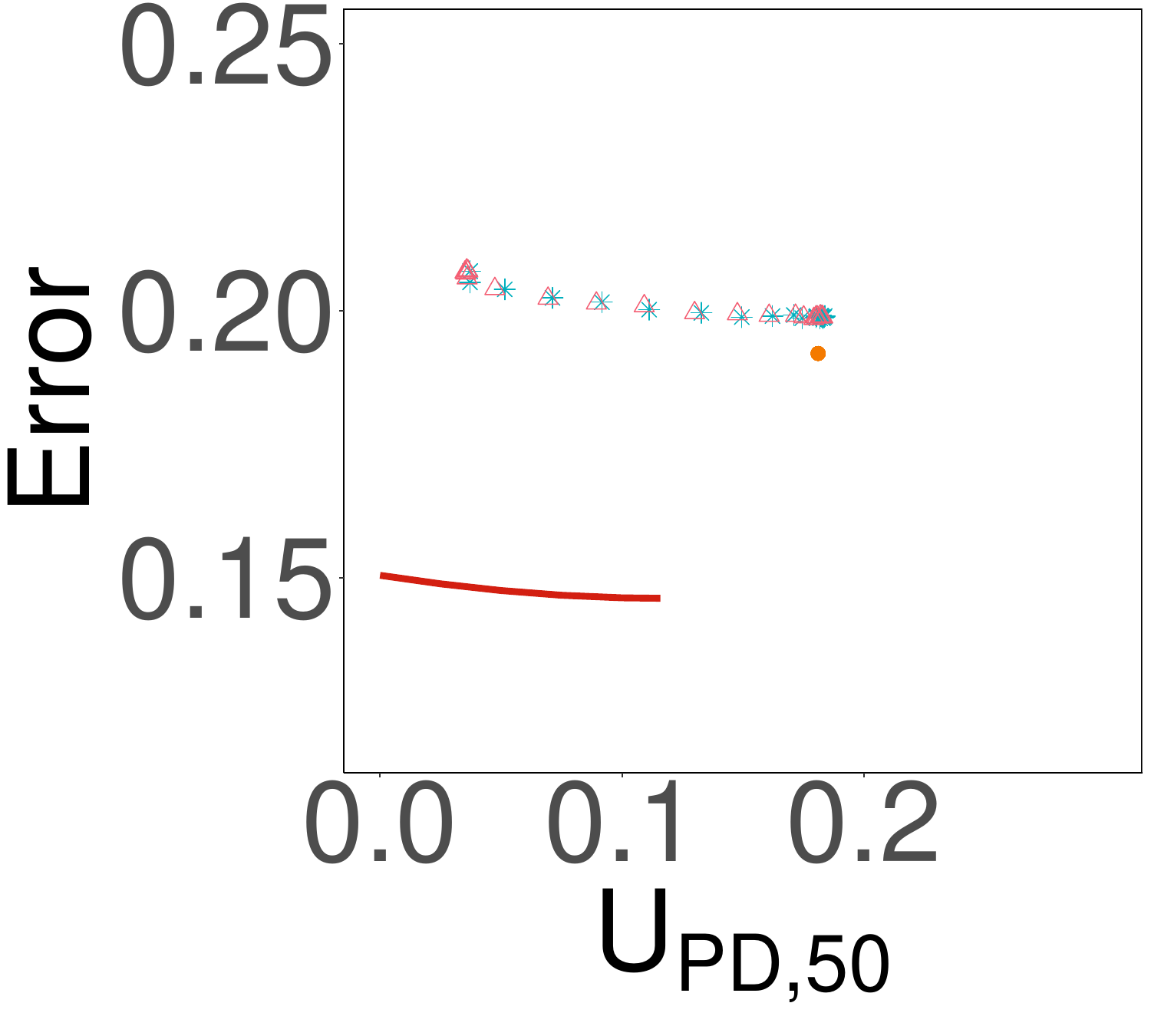} &
            \hspace{\thisgap}\includegraphics[width=\thiswidth]{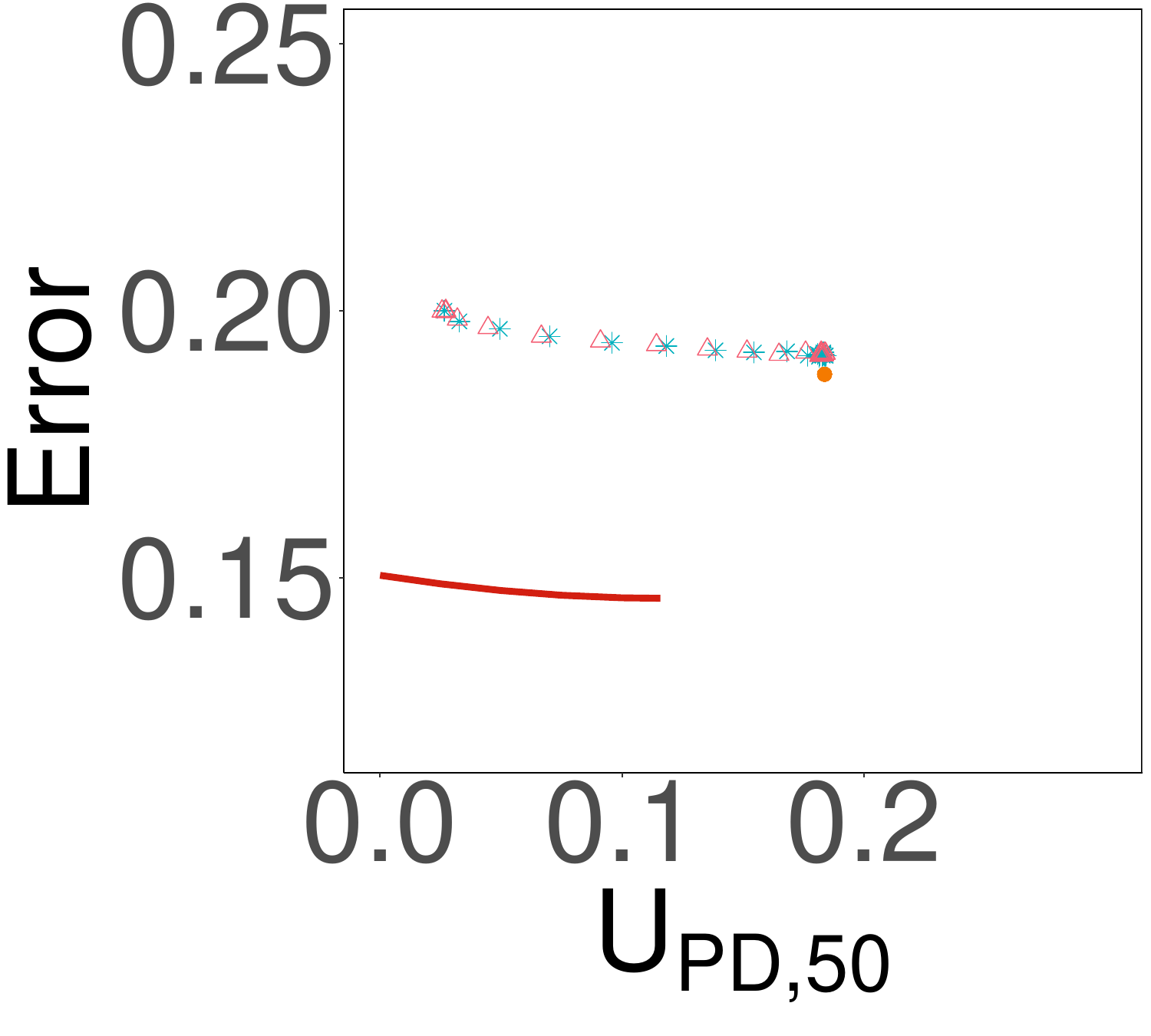} &
            \hspace{\thisgap}\includegraphics[width=\thiswidth]{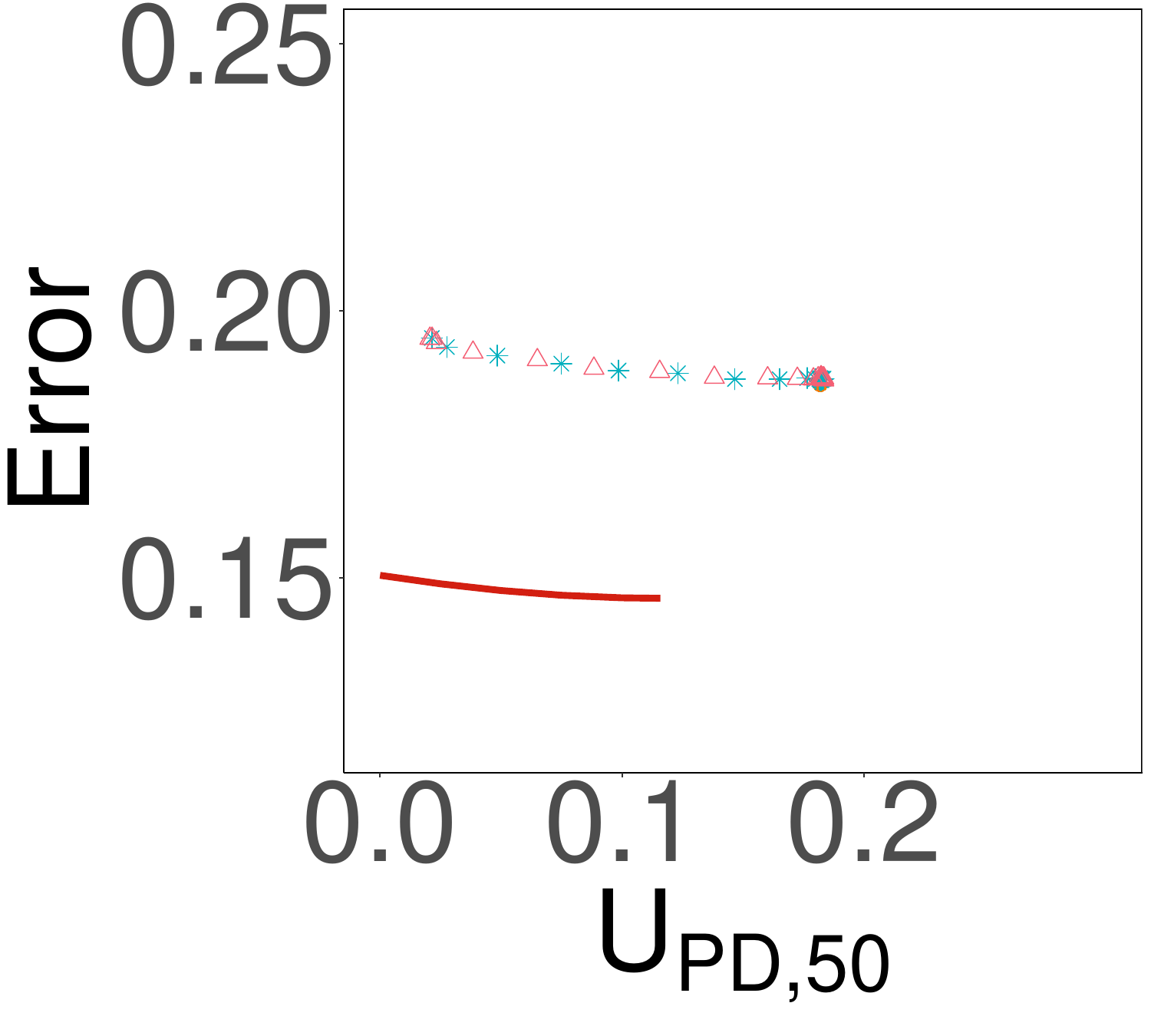} \\
		\hspace{\thisgap}\includegraphics[width=\thiswidth]{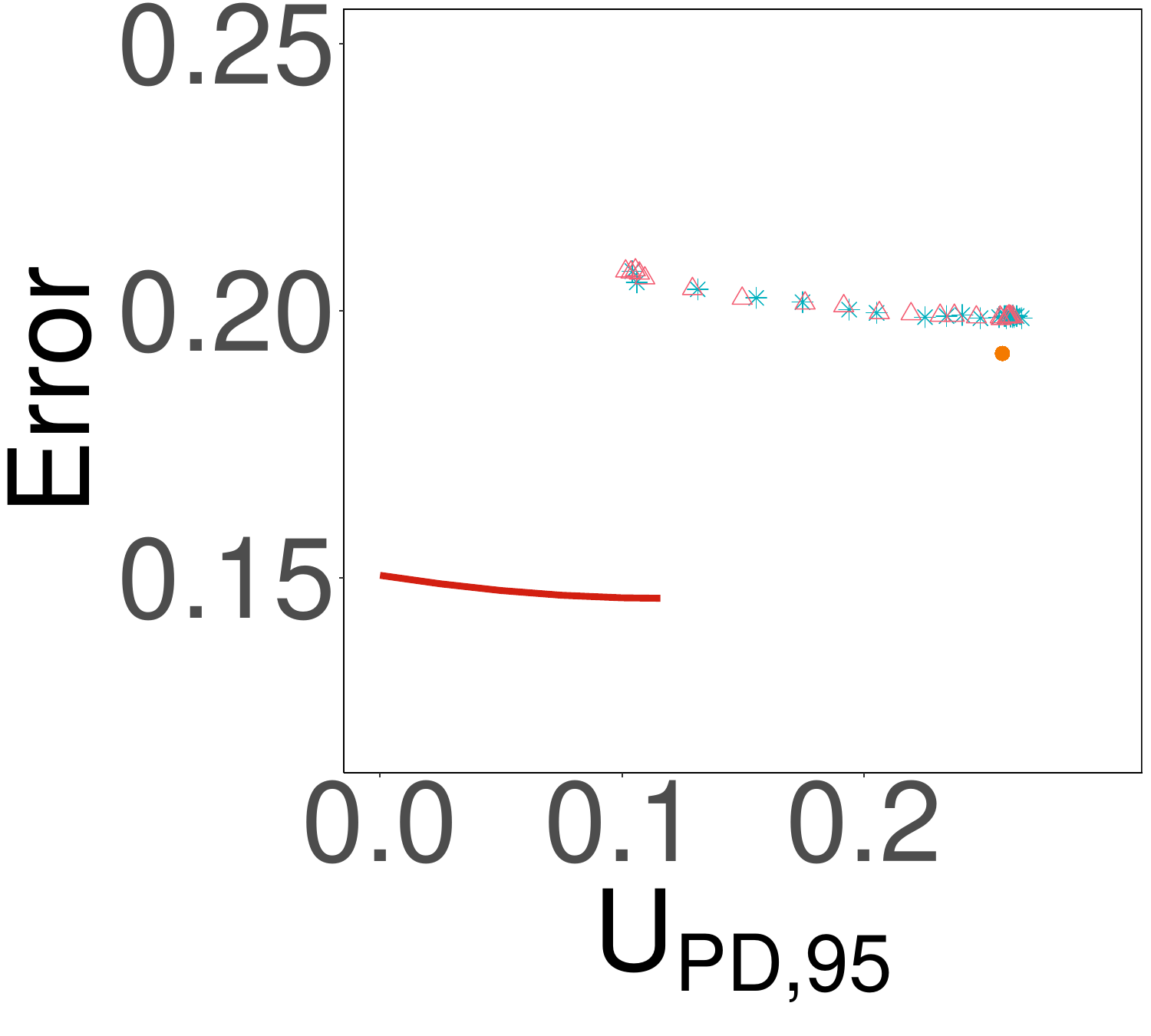} & 
        \hspace{\thisgap}\includegraphics[width=\thiswidth]{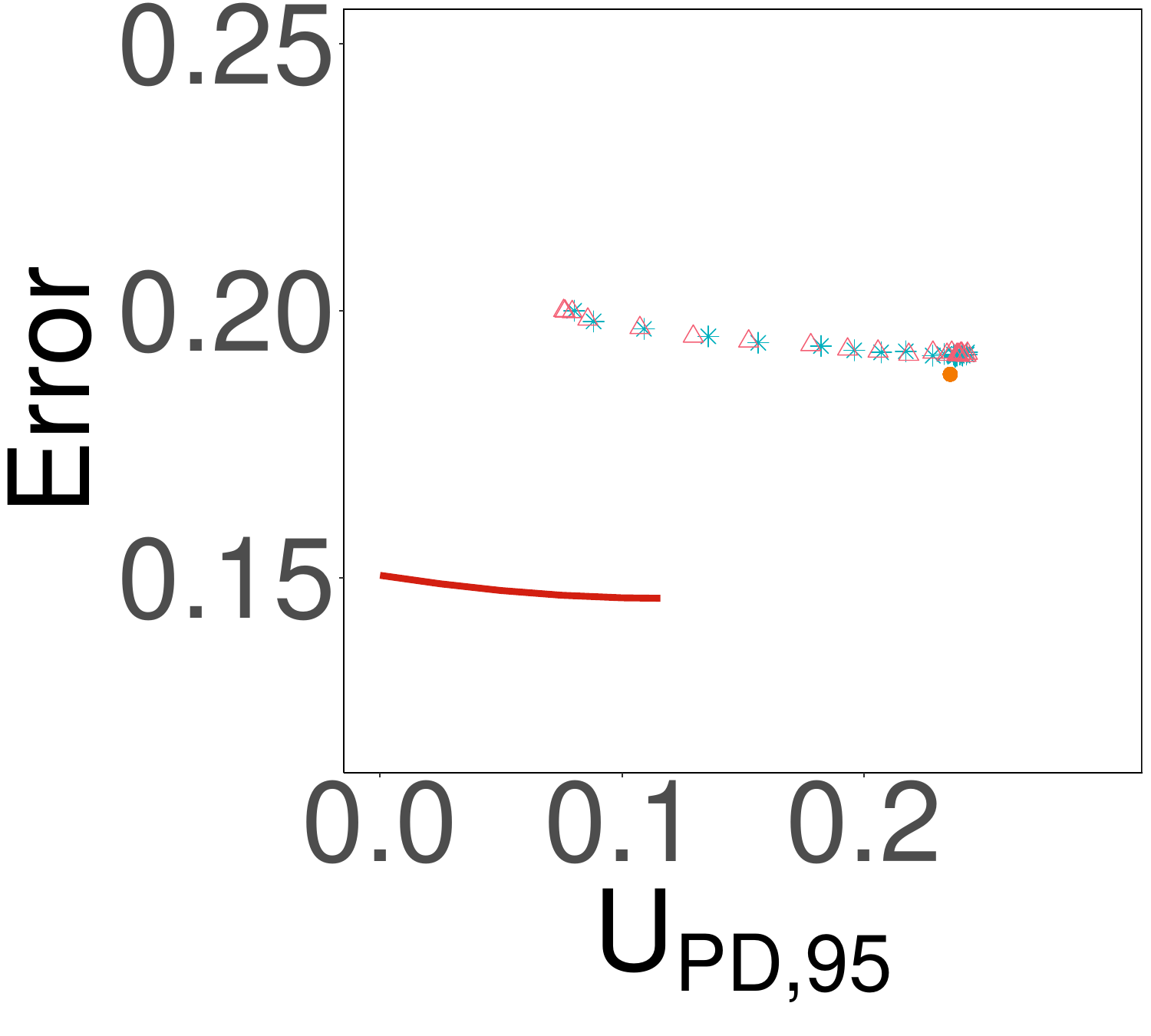} &
        \hspace{\thisgap}\includegraphics[width=\thiswidth]{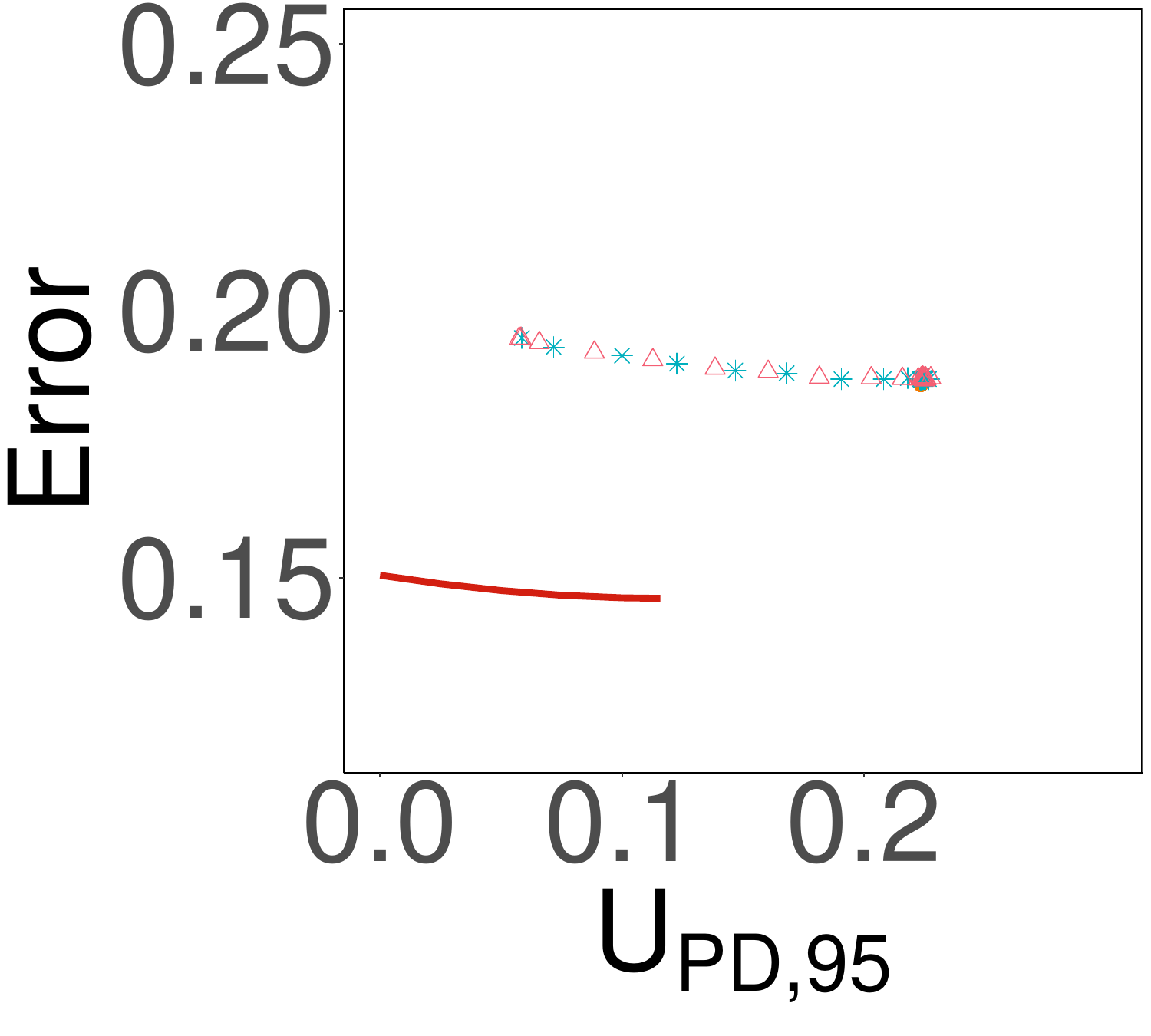}
		\end{tabular}
		\caption{Error-unfairness trade-off for PD under the Gaussian model, $\beta=1.5$. Left: $n=1000$; middle: $n=2000$; right: $n=5000$.}
		\label{fig:tradeoff_PD_gauss_beta_1.5}
	\end{center}
\end{figure*}

\begin{figure*}[!htbp]
	\begin{center}
		\newcommand{\thiswidth}{0.2\linewidth}
		\newcommand{\thisgap}{0mm}
		\begin{tabular}{ccc}
			\hspace{\thisgap}\includegraphics[width=\thiswidth]{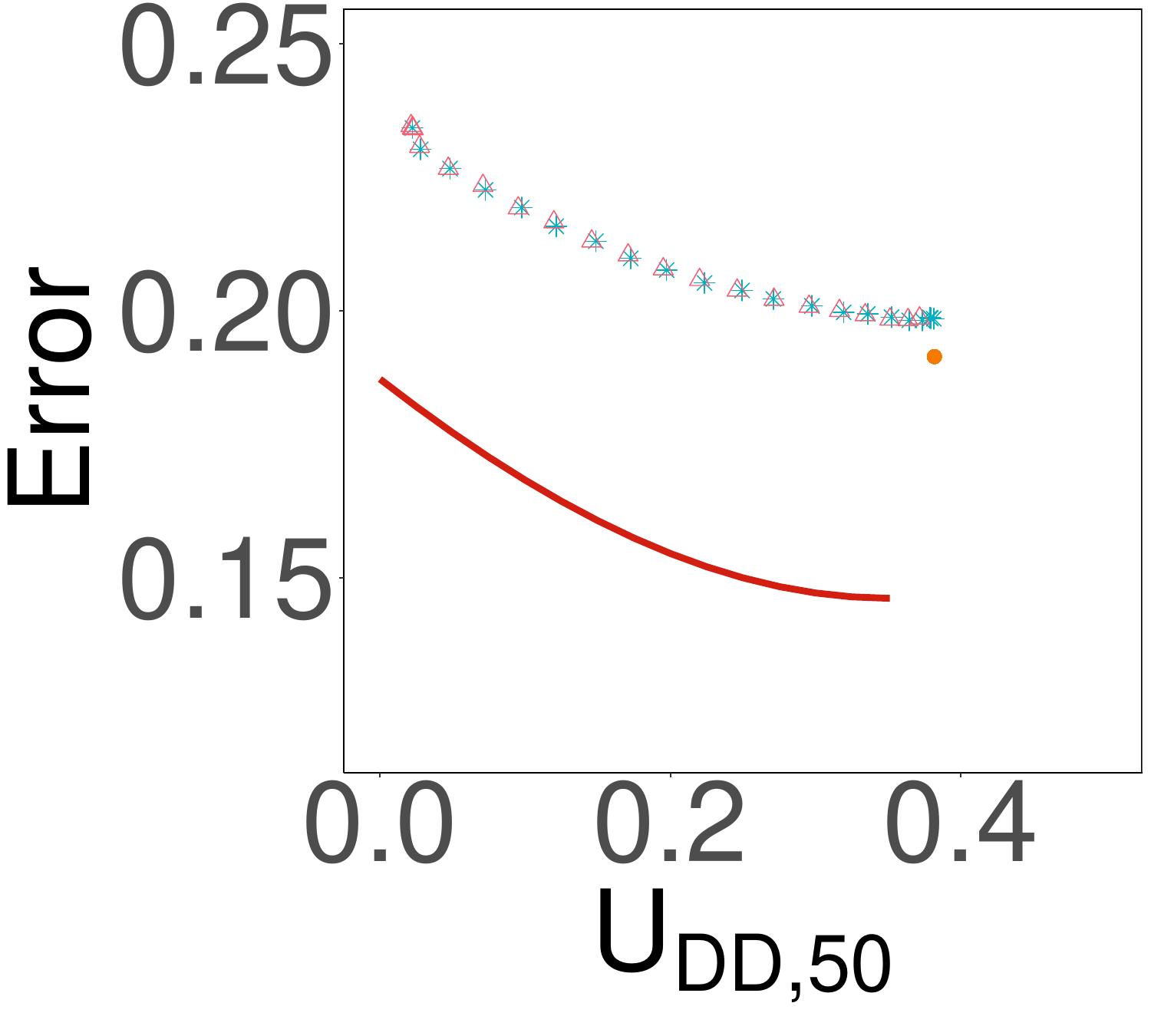} &
            \hspace{\thisgap}\includegraphics[width=\thiswidth]{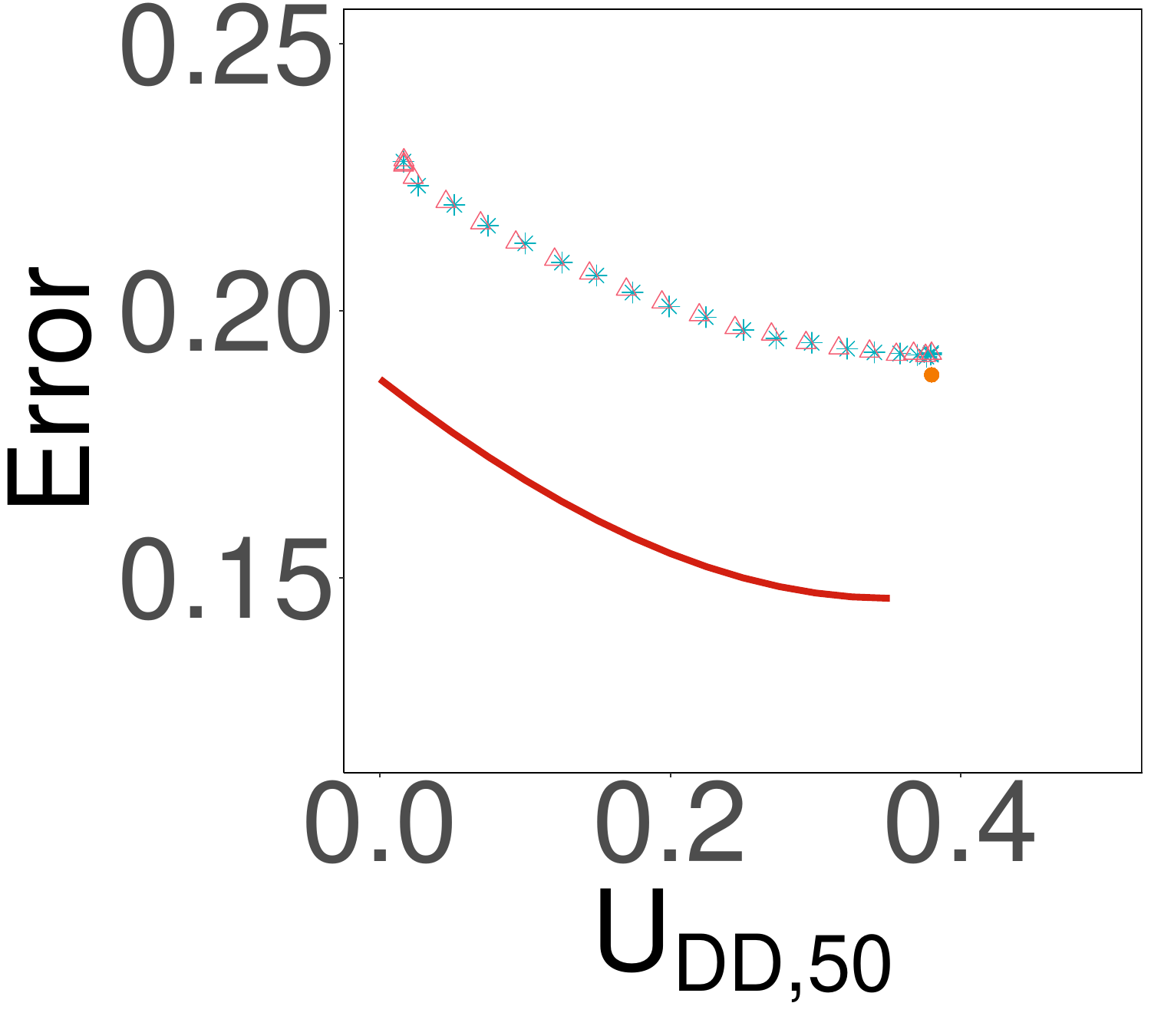} &
            \hspace{\thisgap}\includegraphics[width=\thiswidth]{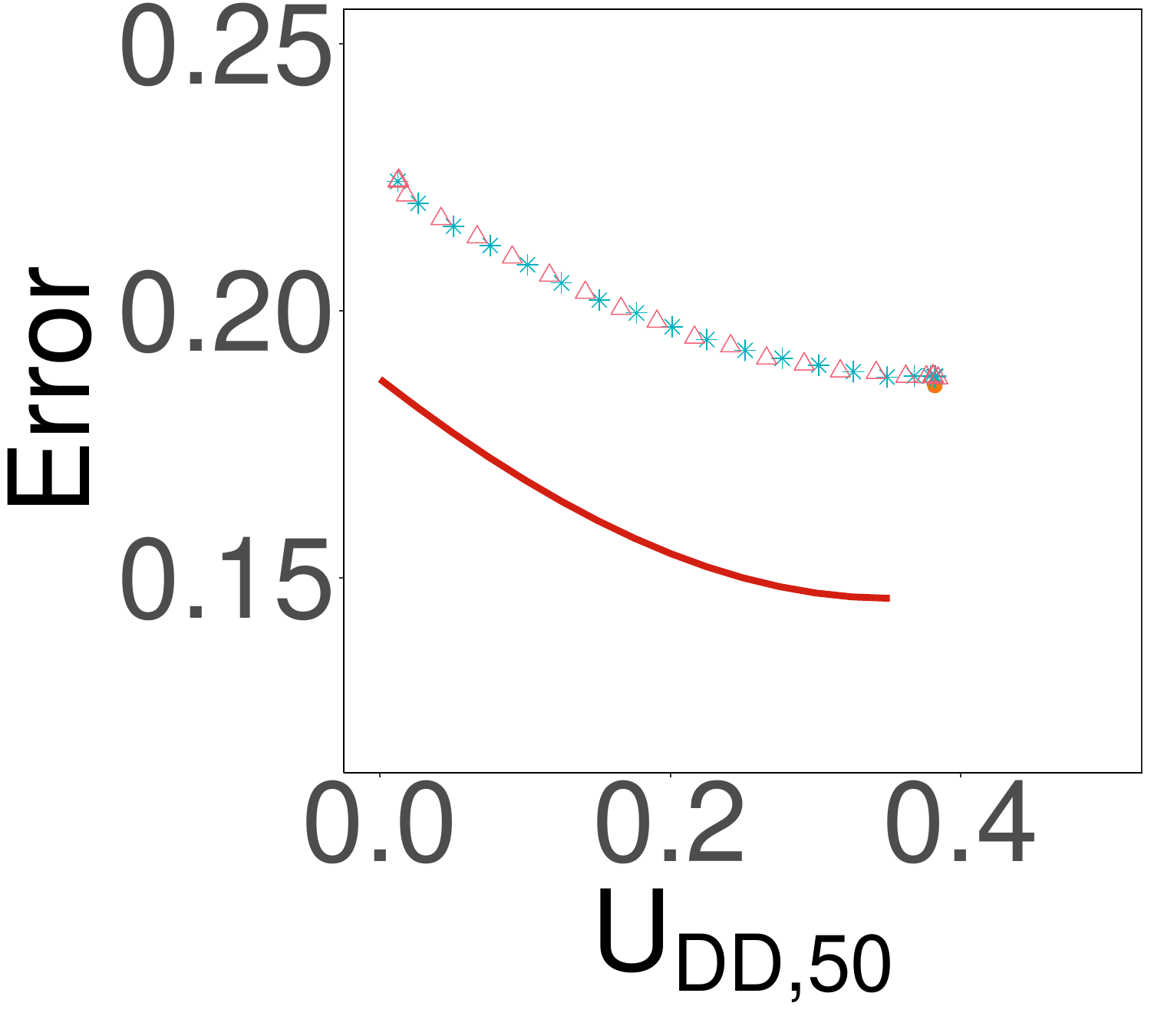} \\
		\hspace{\thisgap}\includegraphics[width=\thiswidth]{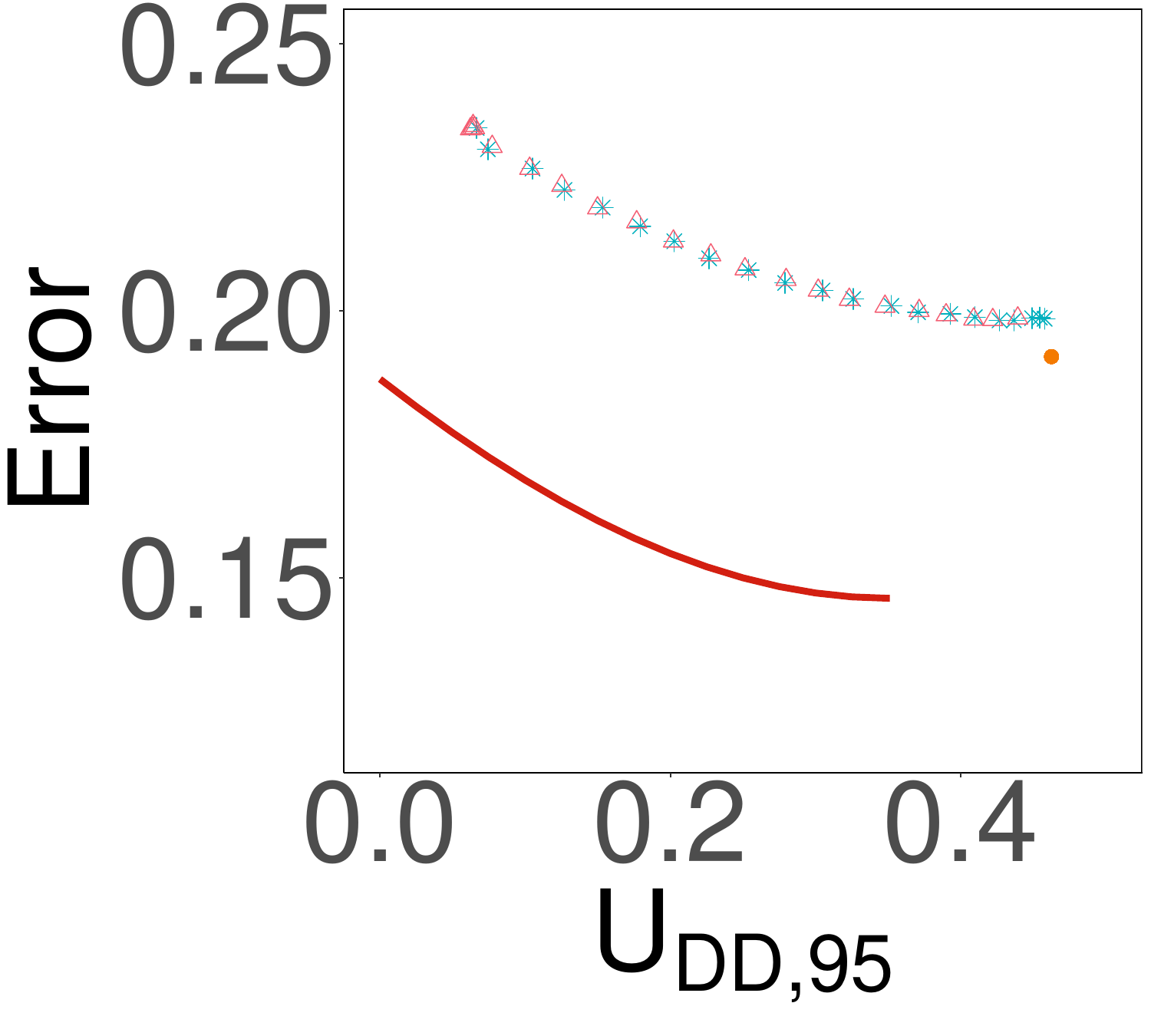} &
		\hspace{\thisgap}\includegraphics[width=\thiswidth]{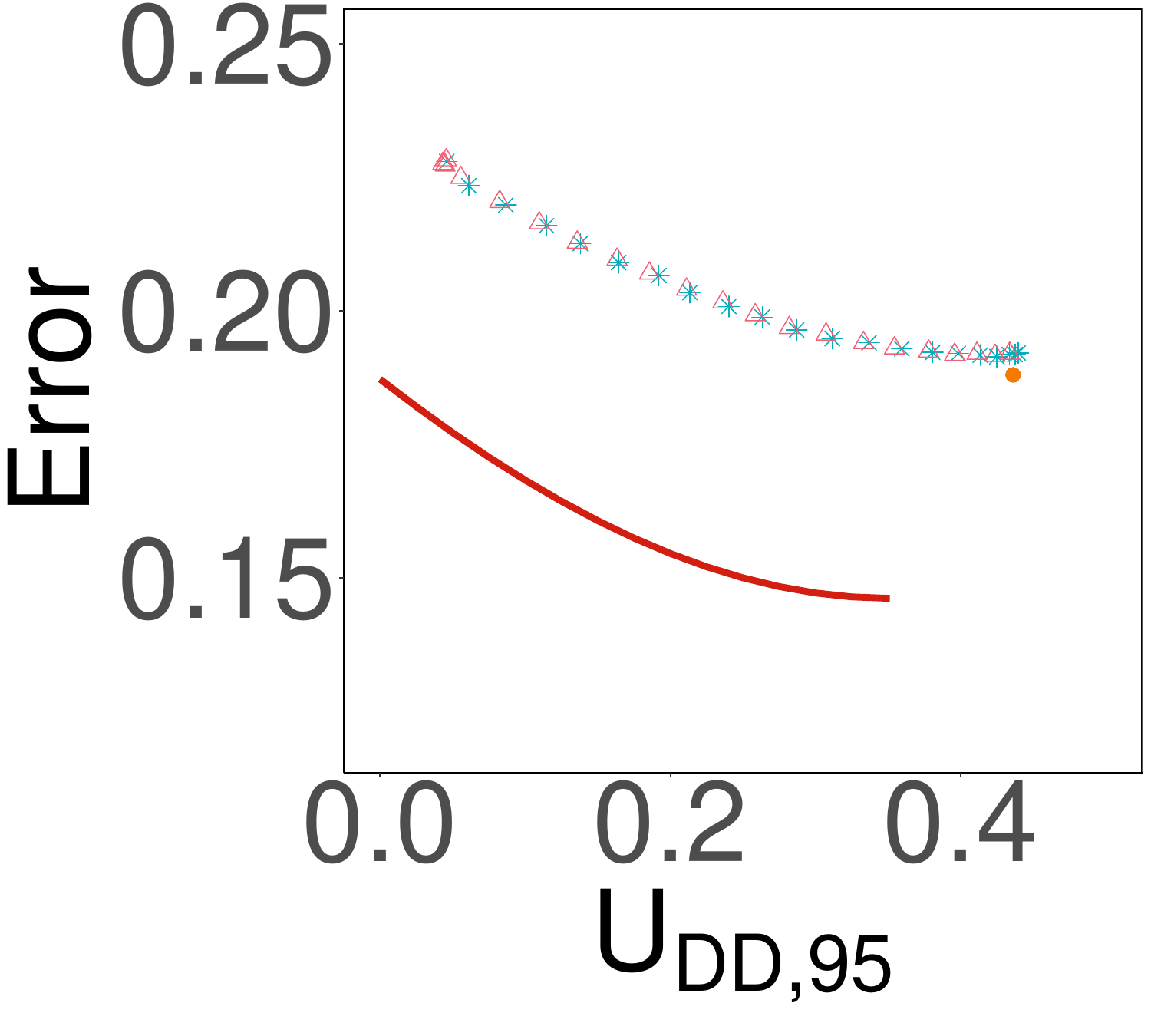} &
        \hspace{\thisgap}\includegraphics[width=\thiswidth]{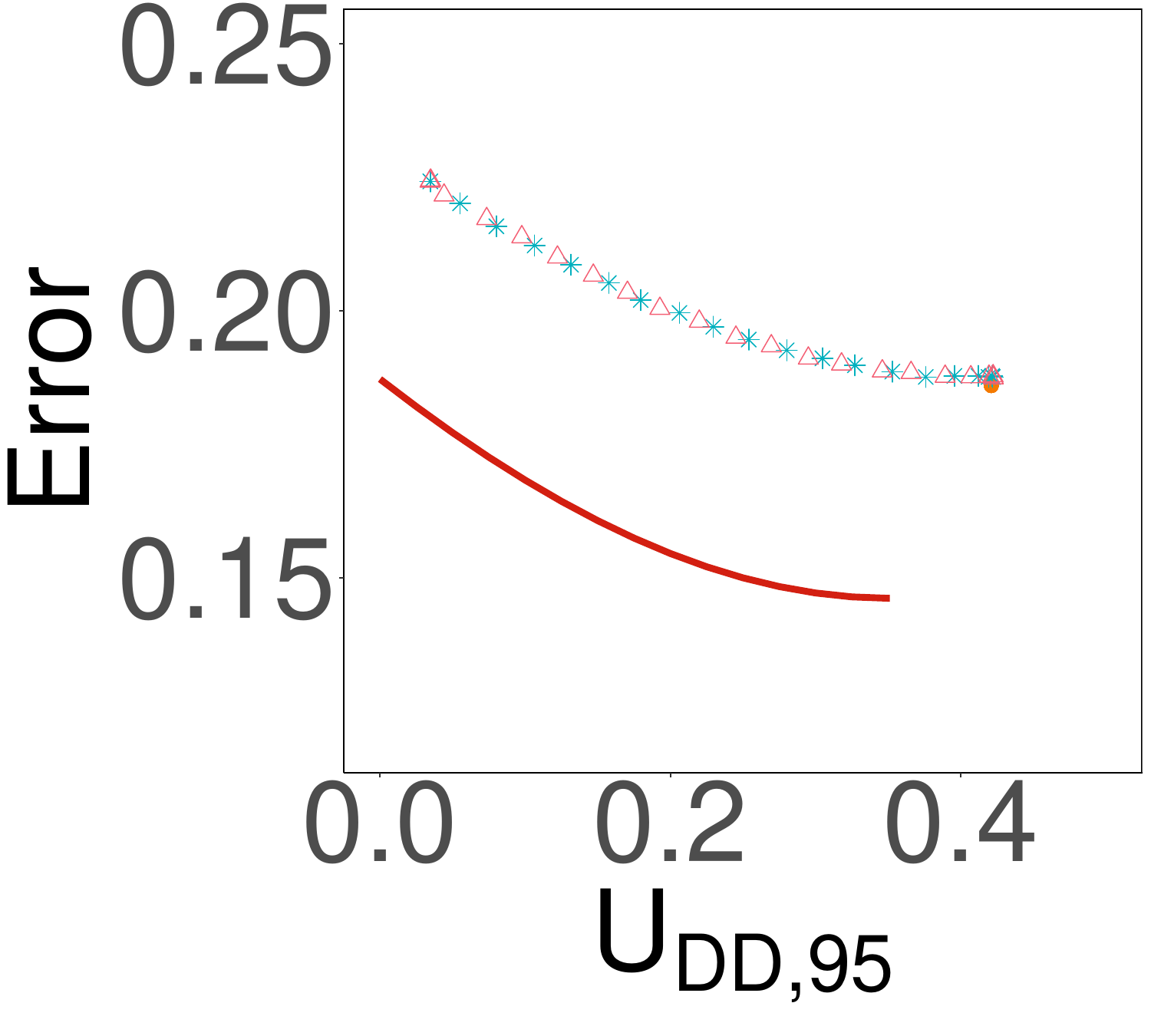}	
		\end{tabular}
		\caption{Error-unfairness trade-off for DD under the Gaussian model, $\beta=1.5$. Left: $n=1000$; middle: $n=2000$; right: $n=5000$.}
		\label{fig:tradeoff_DD_gauss_beta_1.5}
	\end{center}
\end{figure*}

\paragraph{Results under $\beta=2$.}
We evaluate the methods under the Gaussian model with $\beta=2$, while keeping other model parameters consistent with those in Section \ref{sec:sim}. 
The patterns of disparity control are similar to those observed under $\beta=1.5$. As illustrated in Figures \ref{fig:DO_gauss_beta_2}-\ref{fig:DD_gauss_beta_2},  the classification errors are generally higher compared to the case of $\beta=1.5$,  due to the lower signal-to-noise ratio. Nonetheless, the excess risk decreases more rapidly with increasing $n$ for larger values of $\beta$, aligning well with our theoretical results.

\begin{figure*}[!htbp]
	\begin{center}
		\newcommand{\thiswidth}{0.2\linewidth}
		\newcommand{\thisgap}{0mm}
		\begin{tabular}{ccc}
			\hspace{\thisgap}\includegraphics[width=\thiswidth]{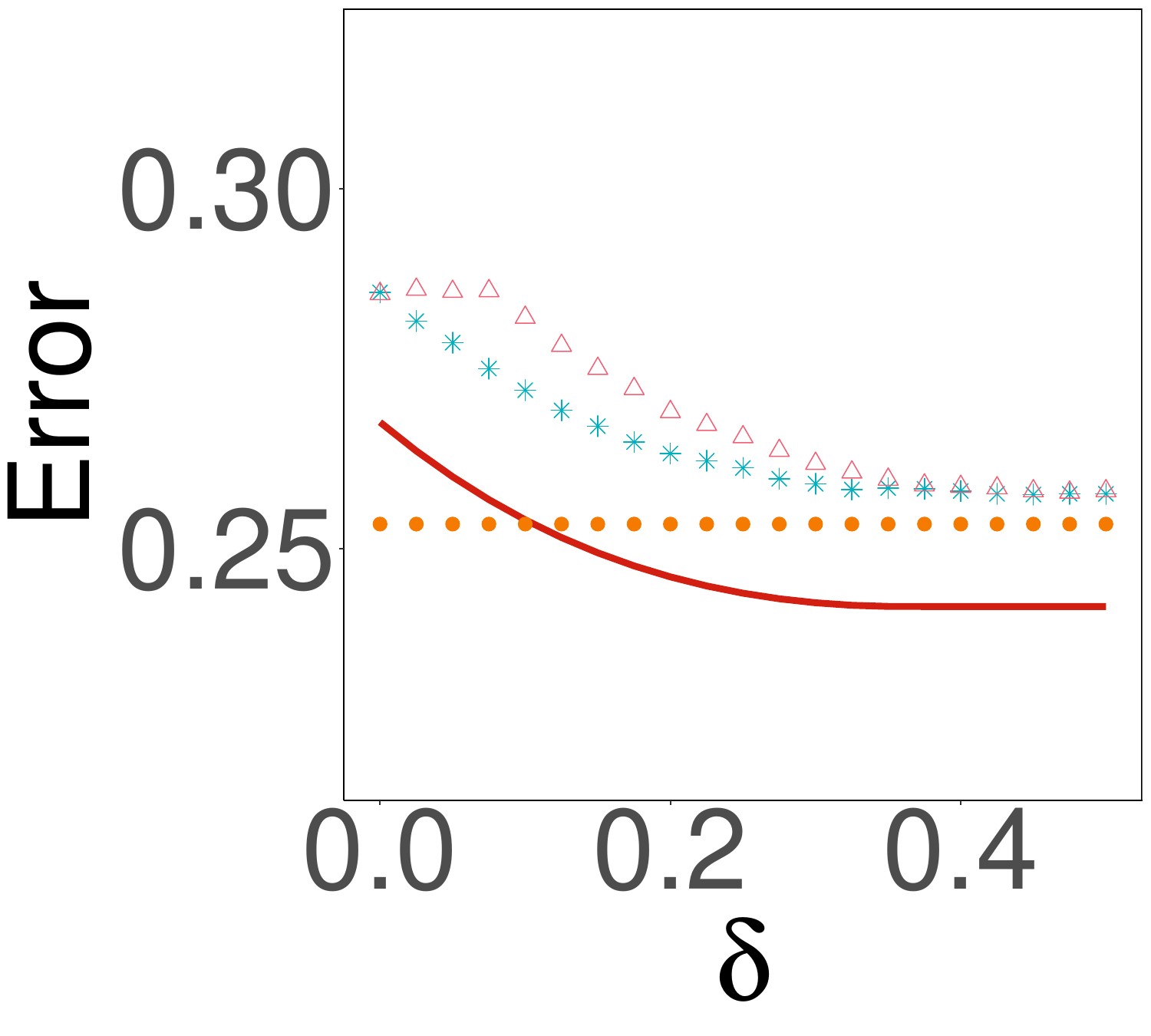} &
			\hspace{\thisgap}\includegraphics[width=\thiswidth]{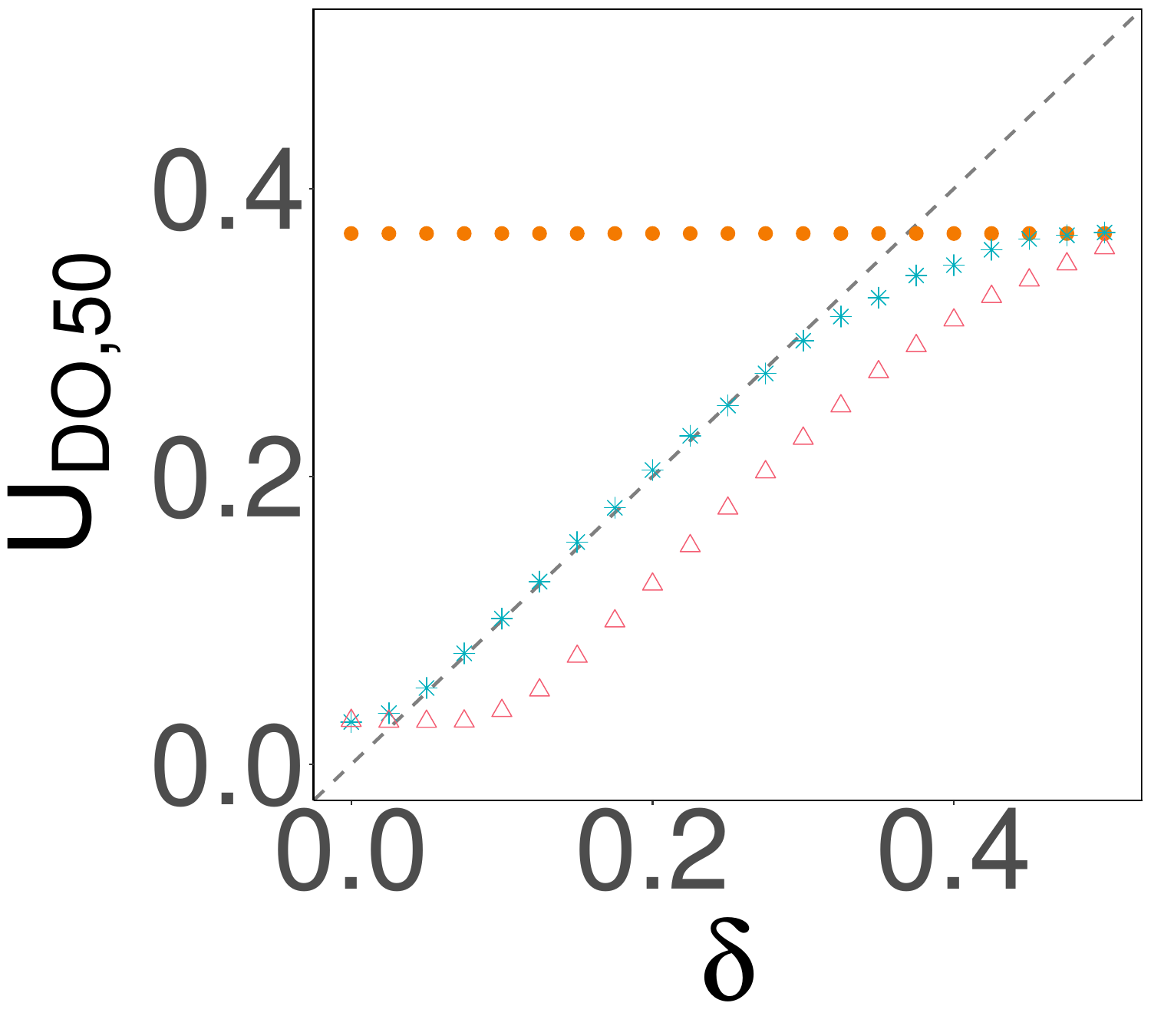} &
			\hspace{\thisgap}\includegraphics[width=\thiswidth]{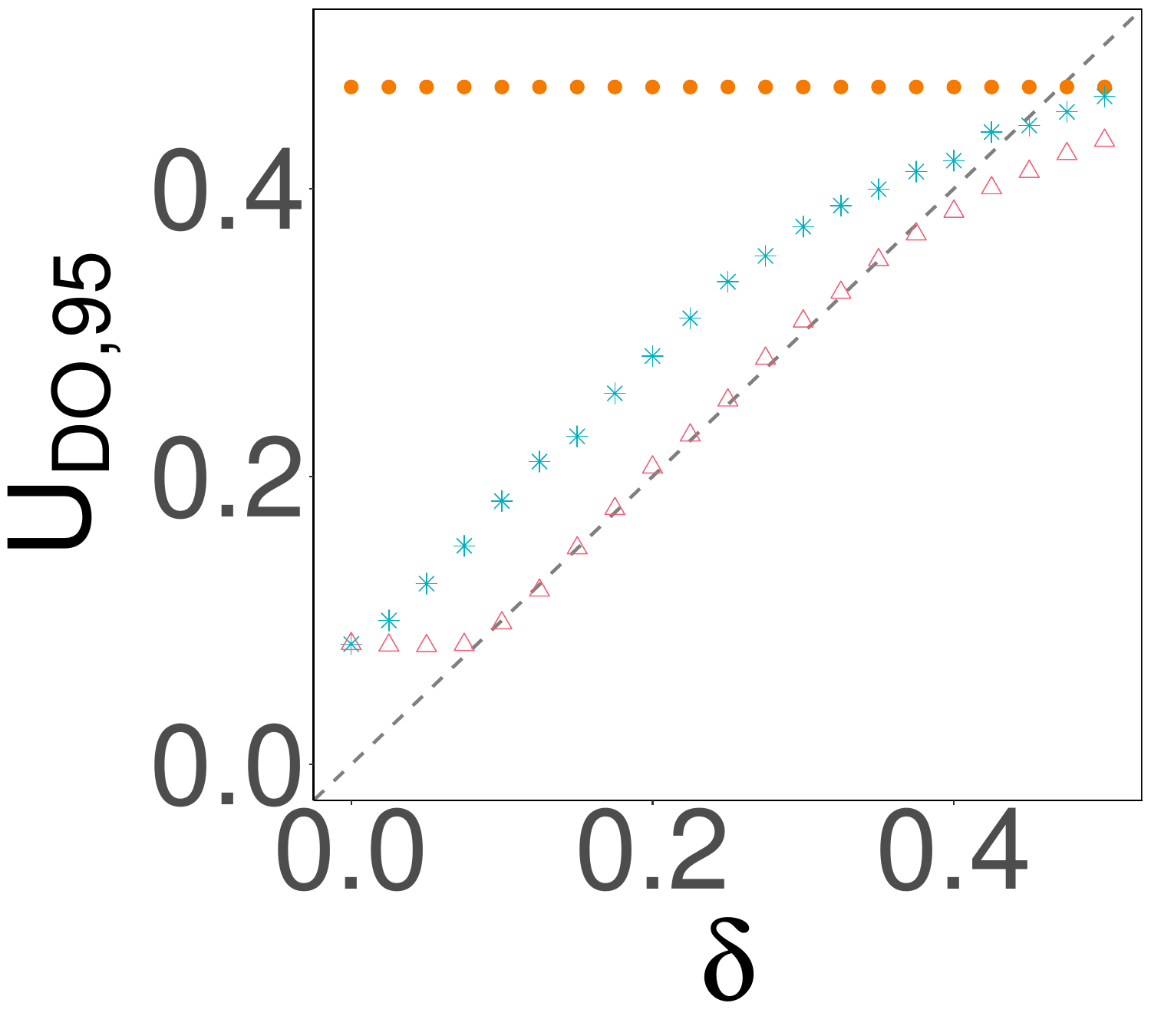} \\
			\hspace{\thisgap}\includegraphics[width=\thiswidth]{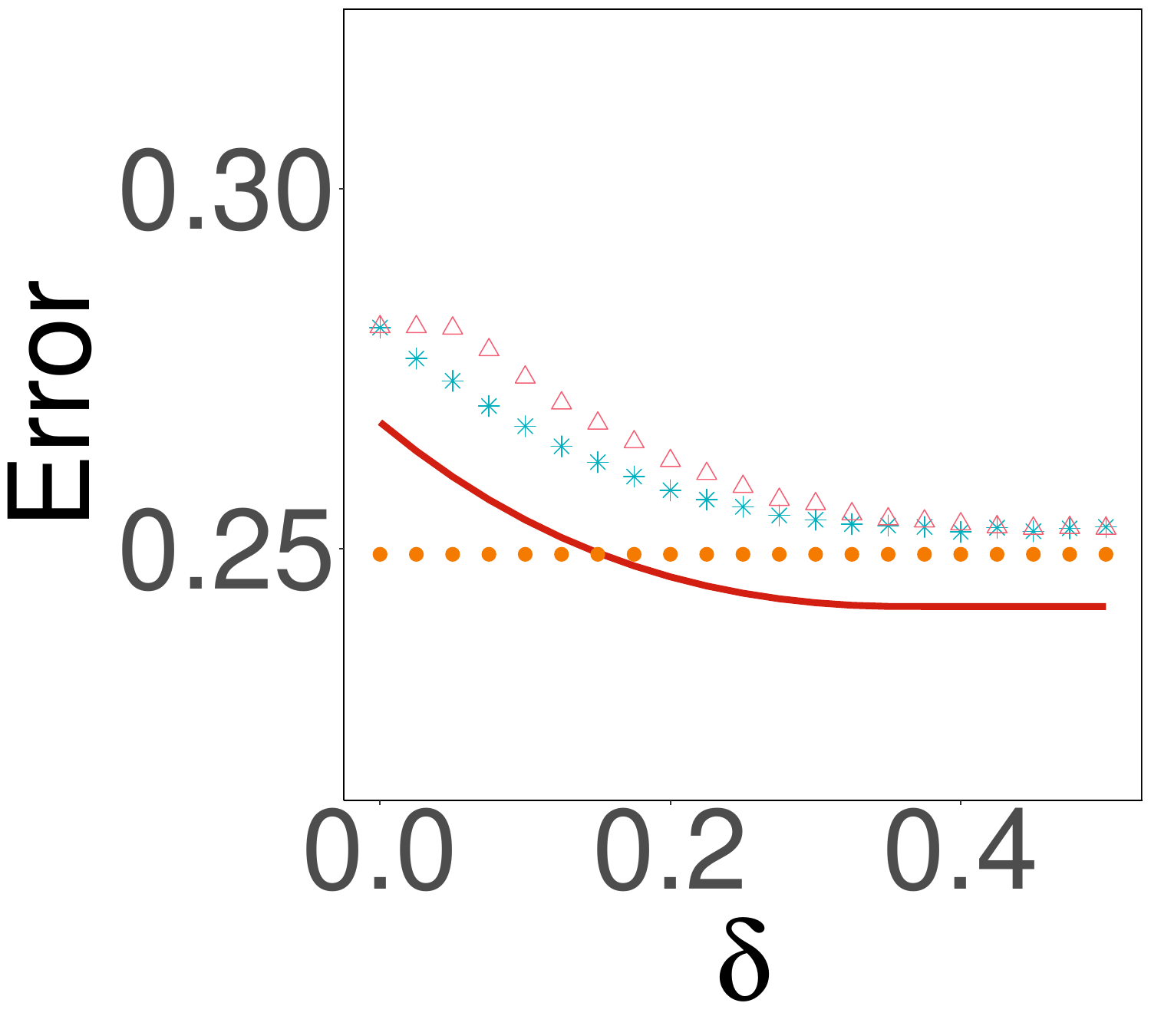} &
			\hspace{\thisgap}\includegraphics[width=\thiswidth]{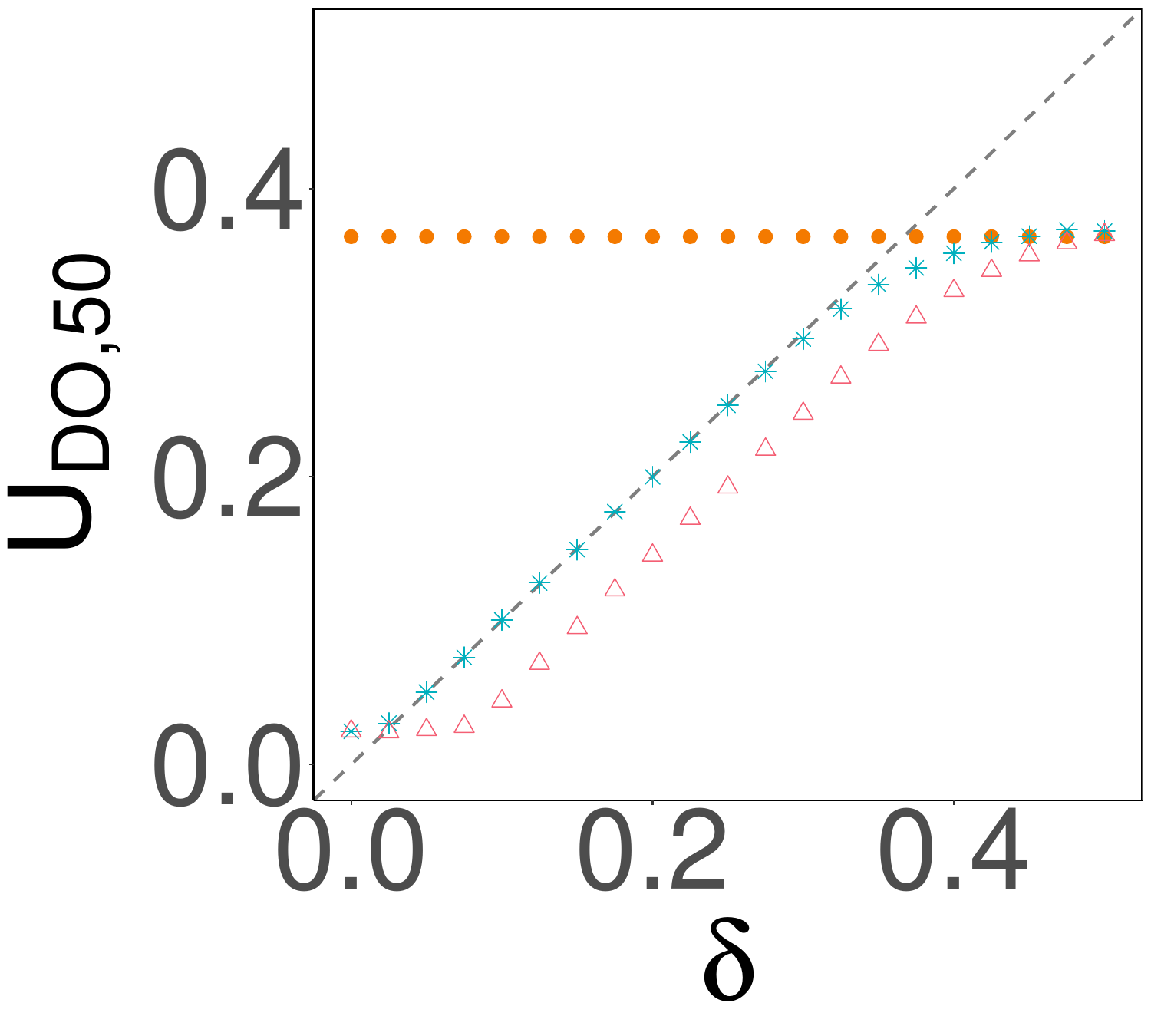} &
			\hspace{\thisgap}\includegraphics[width=\thiswidth]{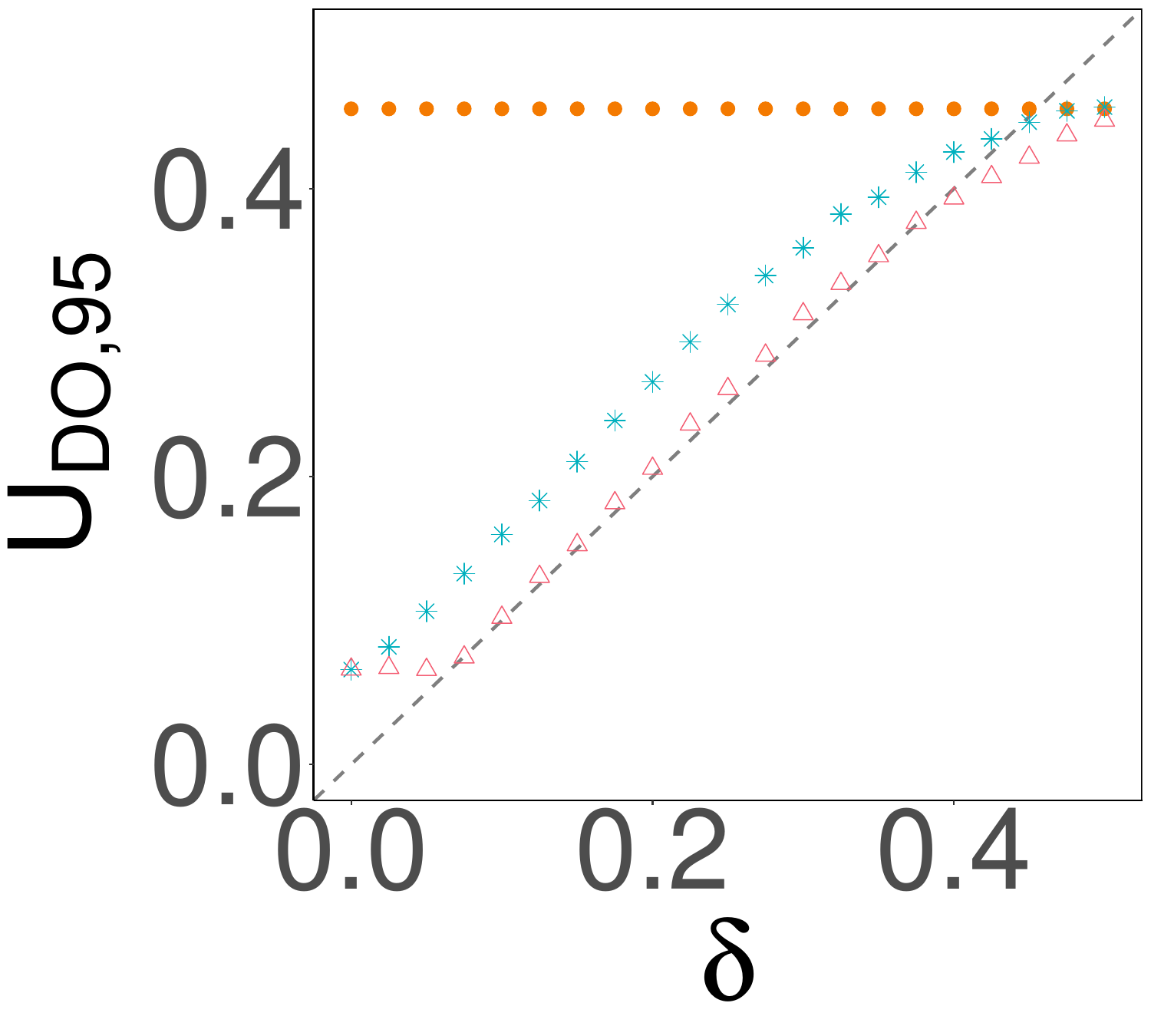} \\
			\hspace{\thisgap}\includegraphics[width=\thiswidth]{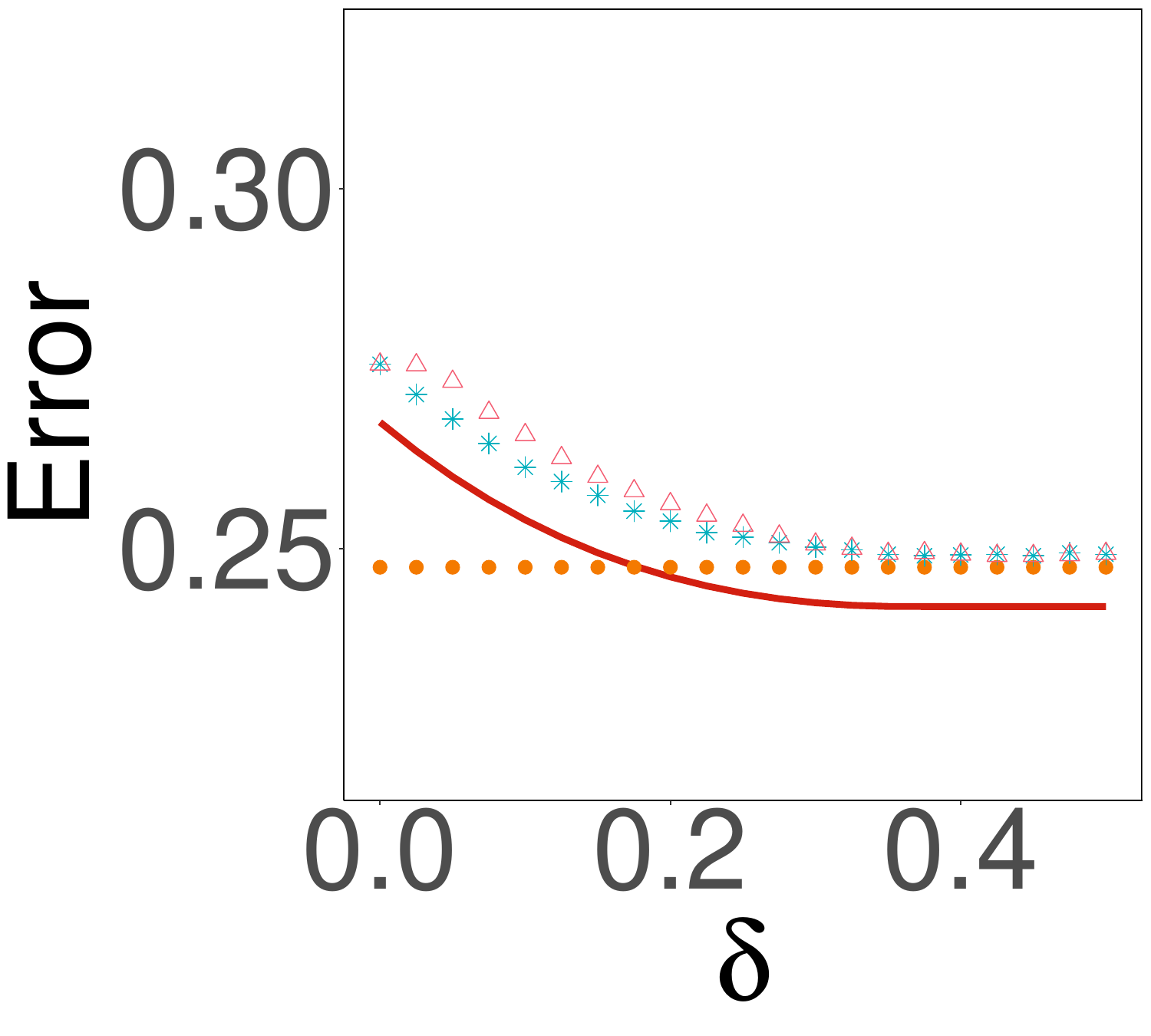} &
			\hspace{\thisgap}\includegraphics[width=\thiswidth]{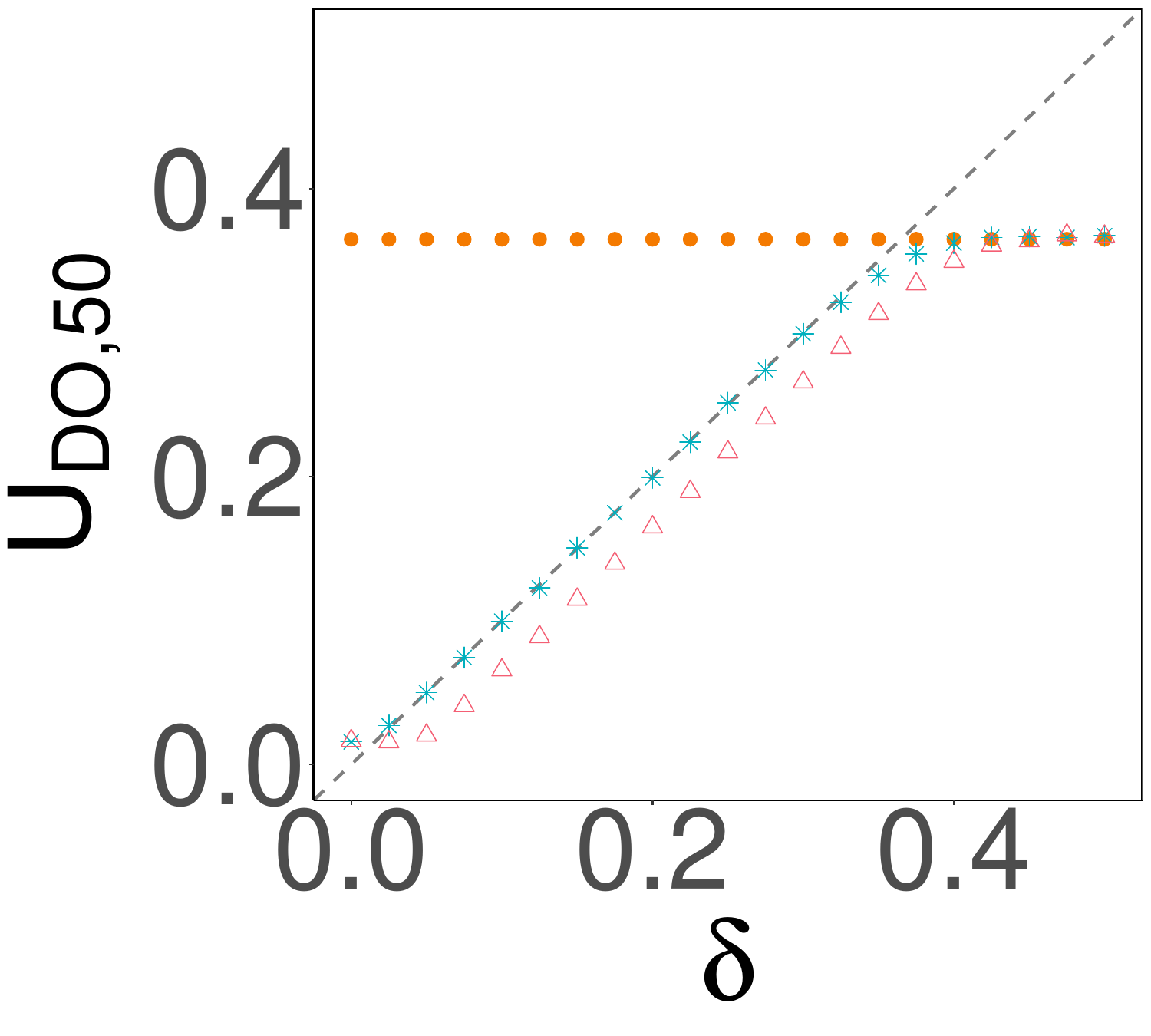} &
			\hspace{\thisgap}\includegraphics[width=\thiswidth]{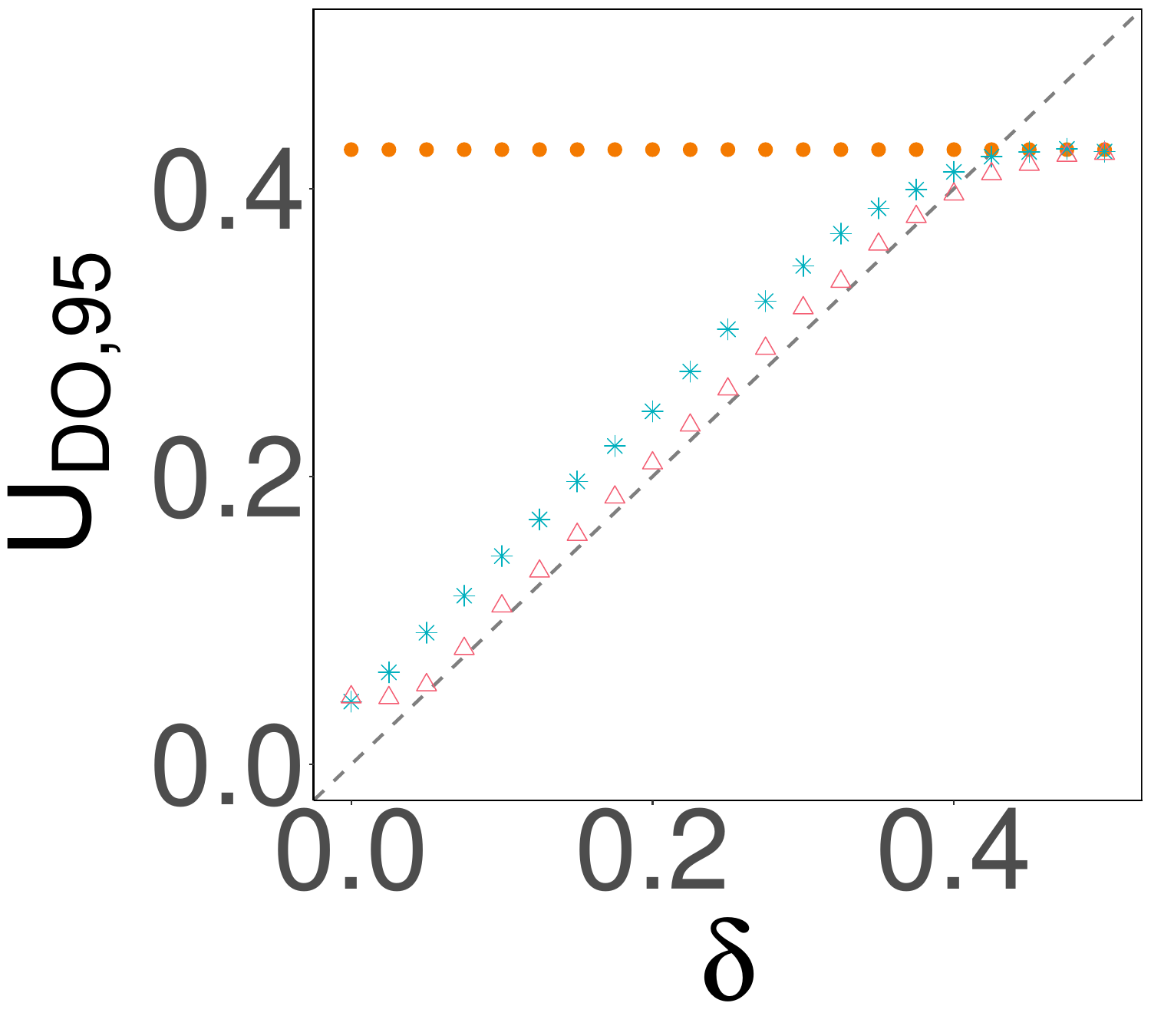}
		\end{tabular}
		\caption{Disparity DO results under the Gaussian model, $\beta=2$. Top: $n=1000$; middle: $n=2000$; bottom: $n=5000$}
		\label{fig:DO_gauss_beta_2}
	\end{center}
\end{figure*}

\begin{figure*}[!htbp]
	\begin{center}
		\newcommand{\thiswidth}{0.18\linewidth}
		\newcommand{\thisgap}{0mm}
		\begin{tabular}{ccc}
			\hspace{\thisgap}\includegraphics[width=\thiswidth]{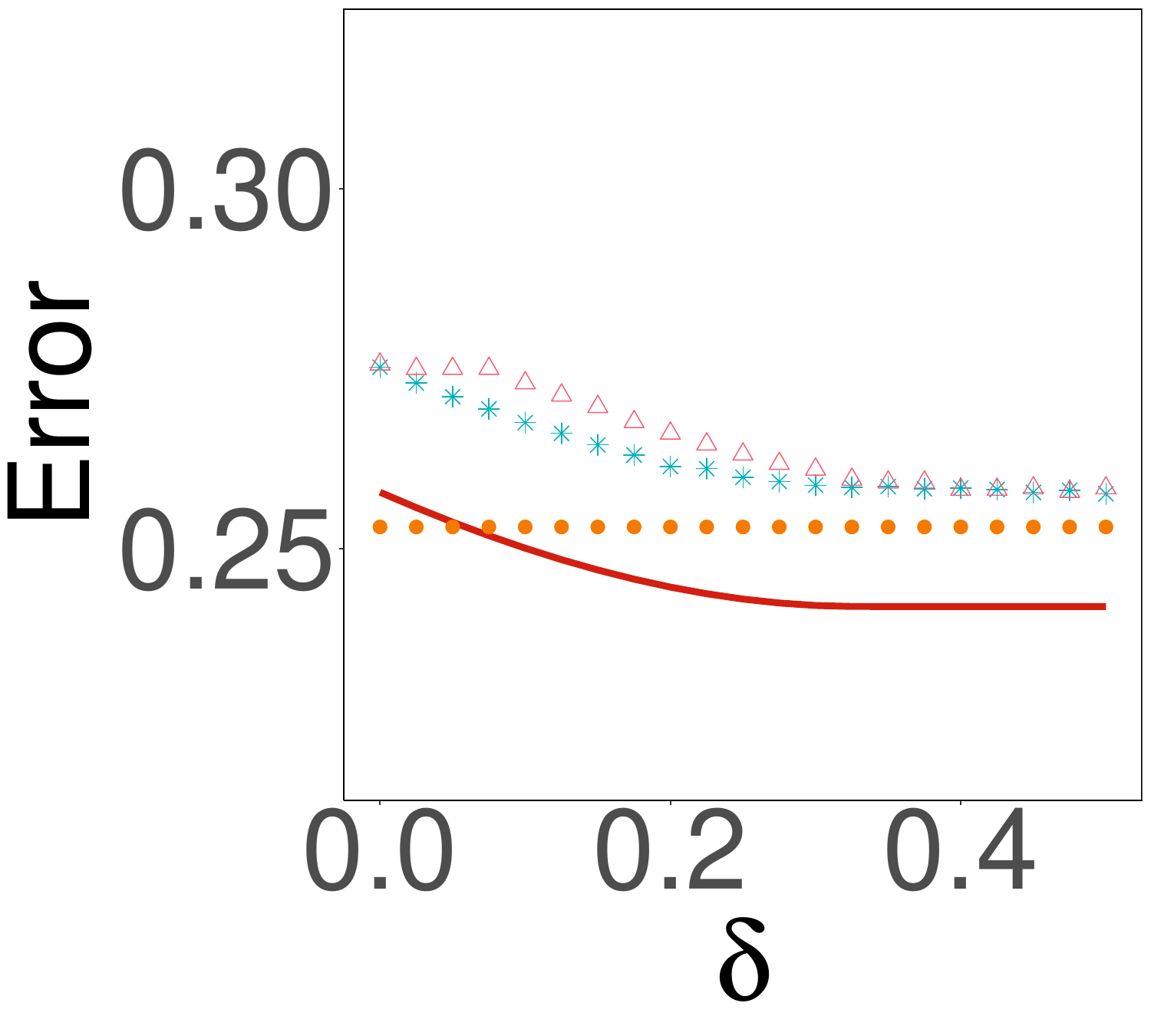} &
			\hspace{\thisgap}\includegraphics[width=\thiswidth]{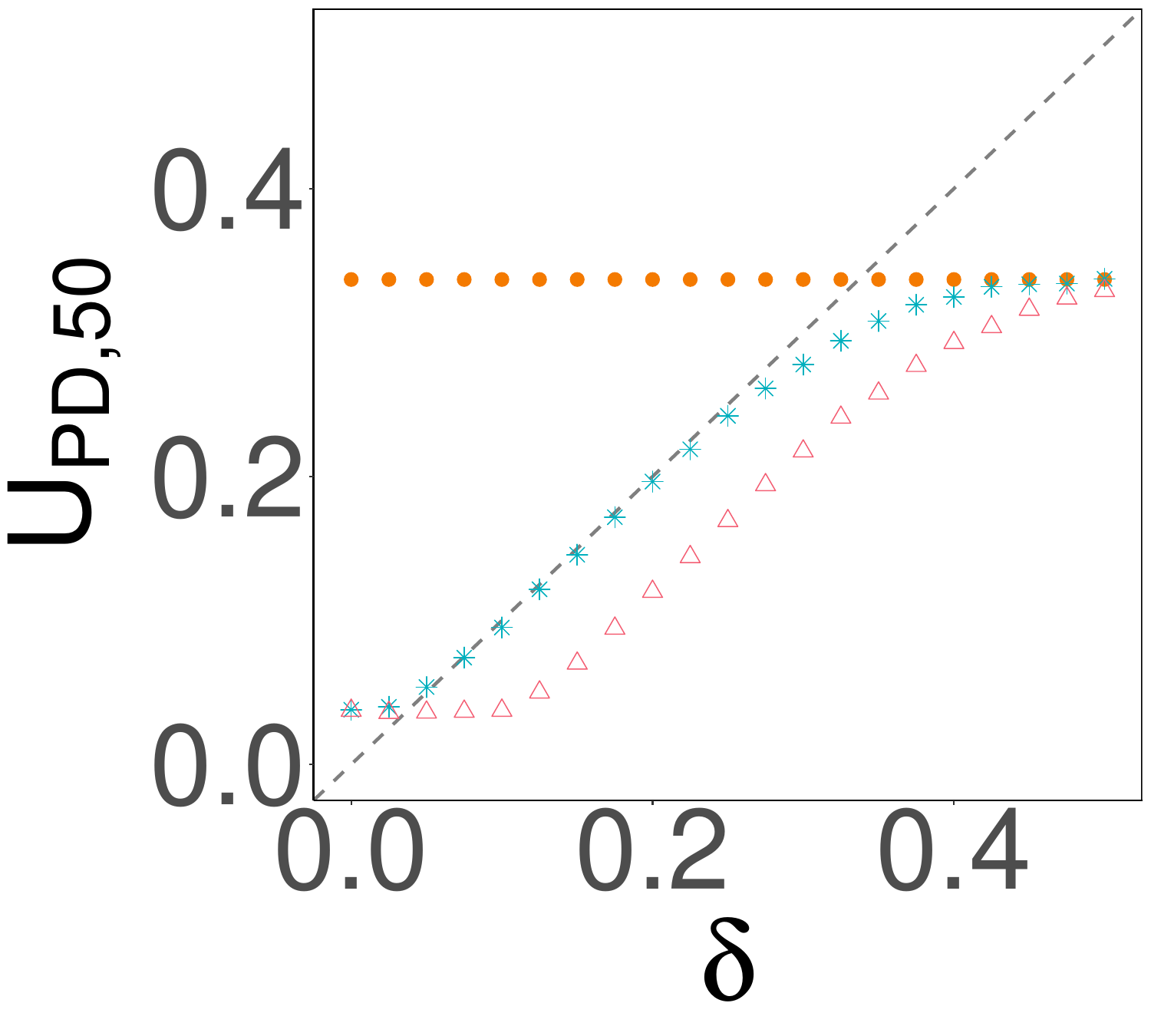} &
			\hspace{\thisgap}\includegraphics[width=\thiswidth]{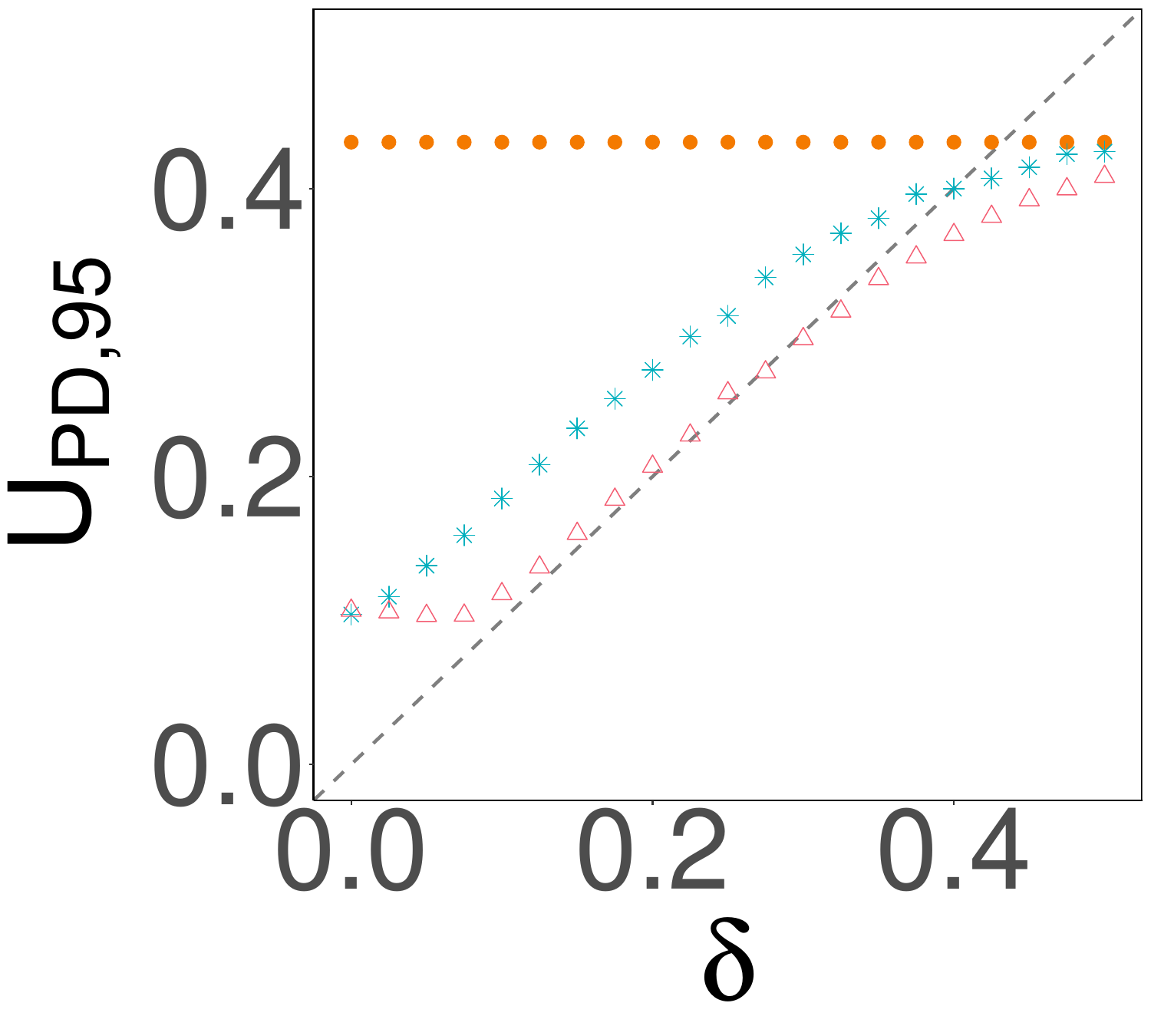} \\
			\hspace{\thisgap}\includegraphics[width=\thiswidth]{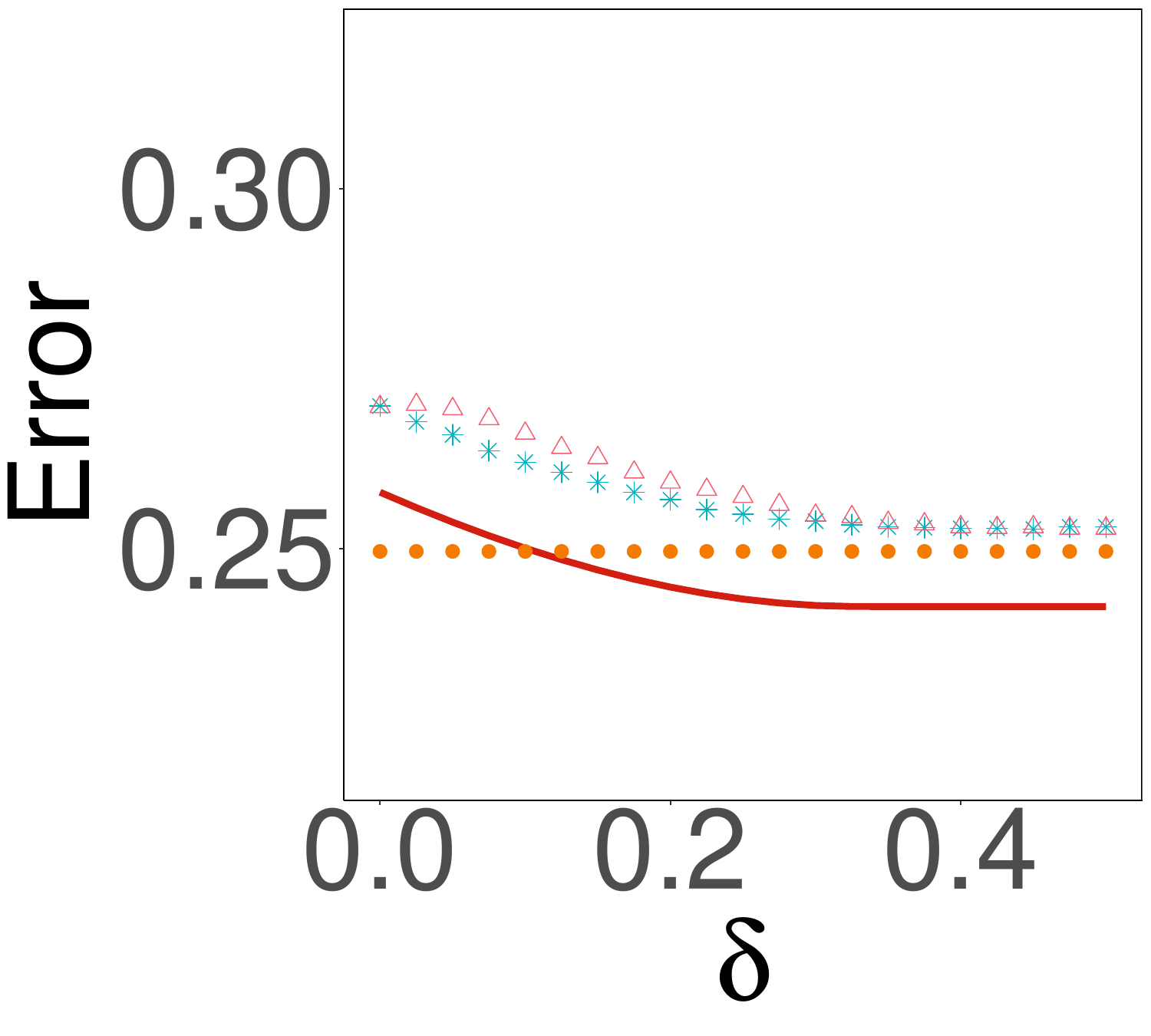} &
			\hspace{\thisgap}\includegraphics[width=\thiswidth]{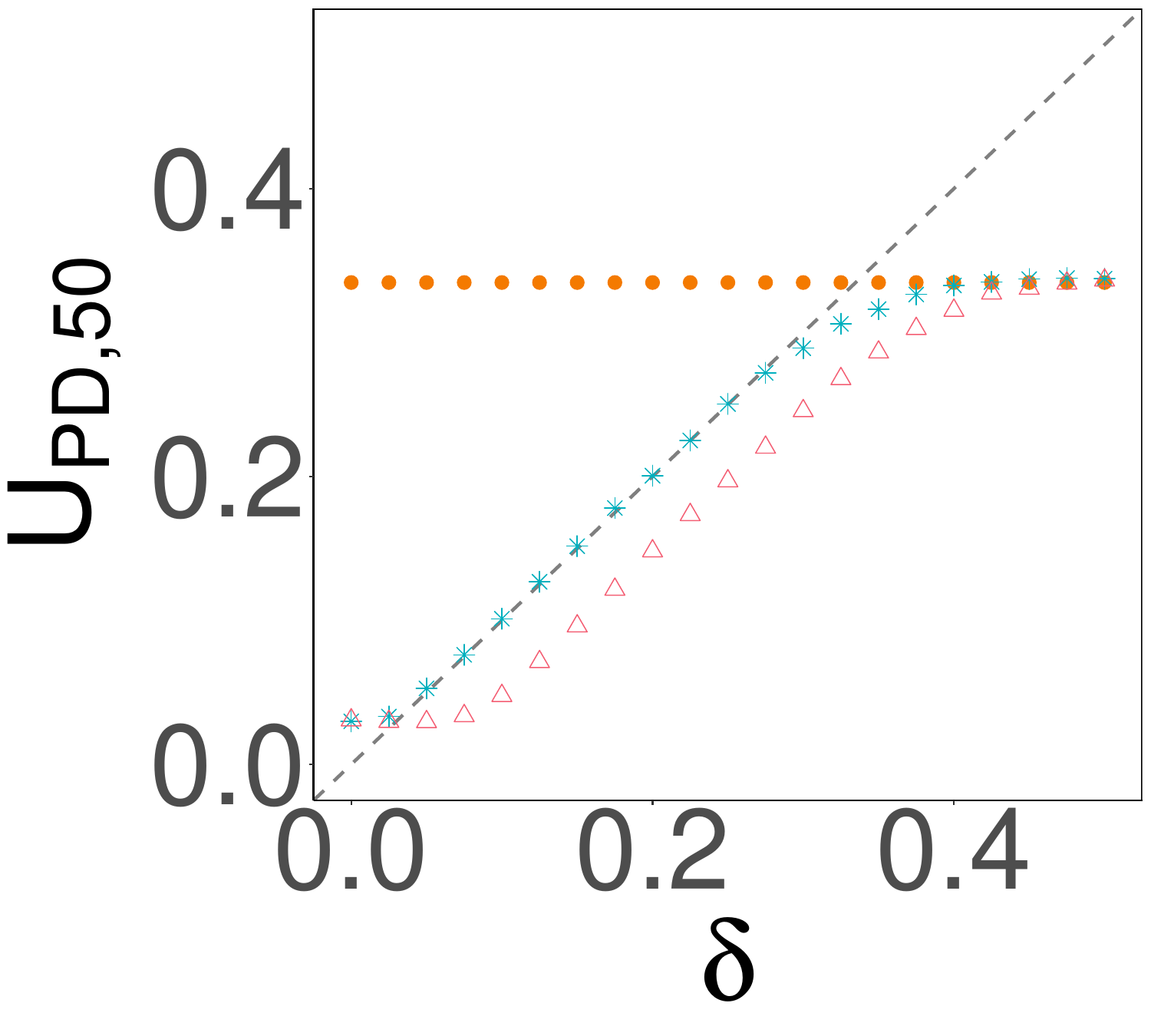} &
			\hspace{\thisgap}\includegraphics[width=\thiswidth]{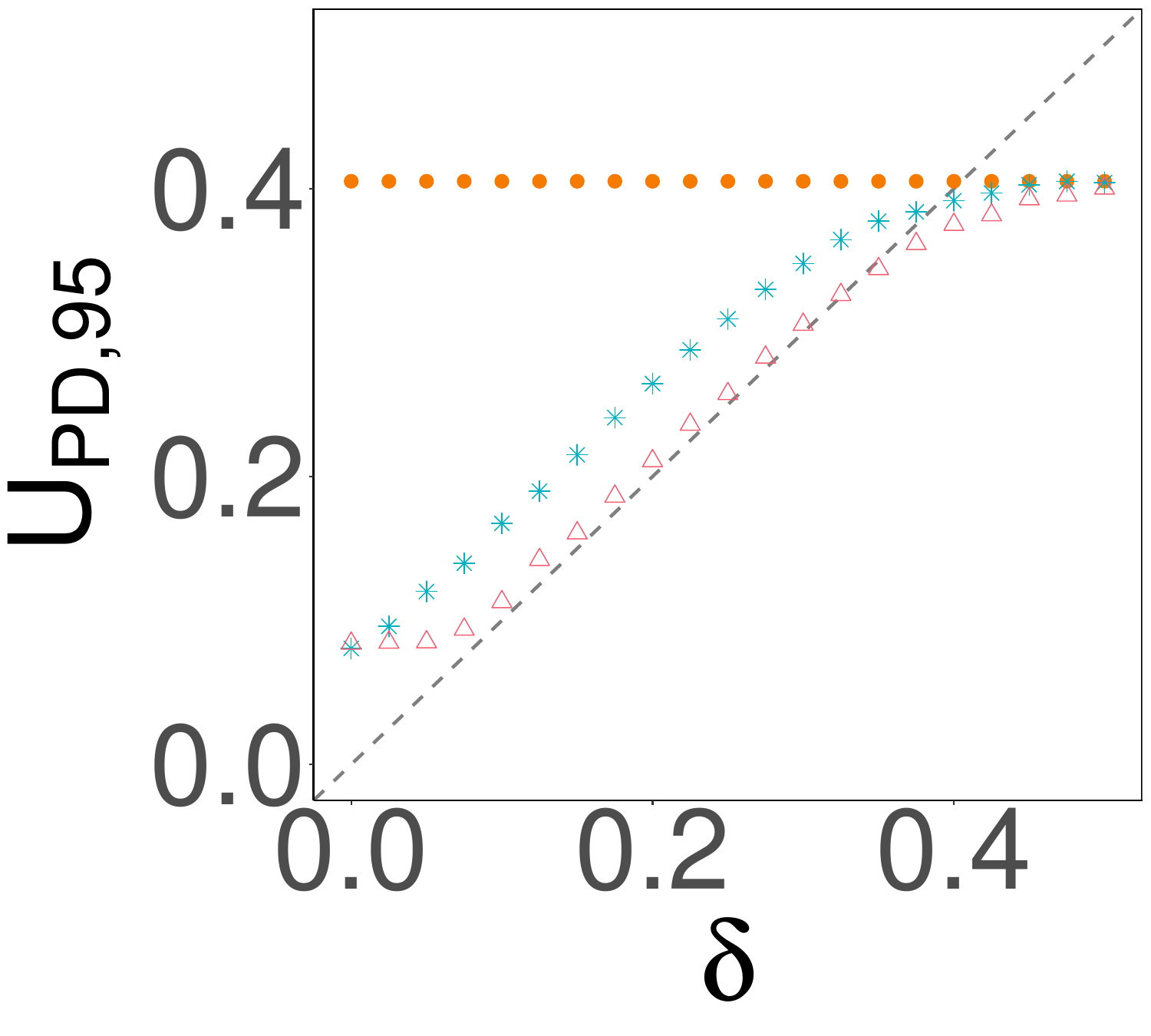} \\
			\hspace{\thisgap}\includegraphics[width=\thiswidth]{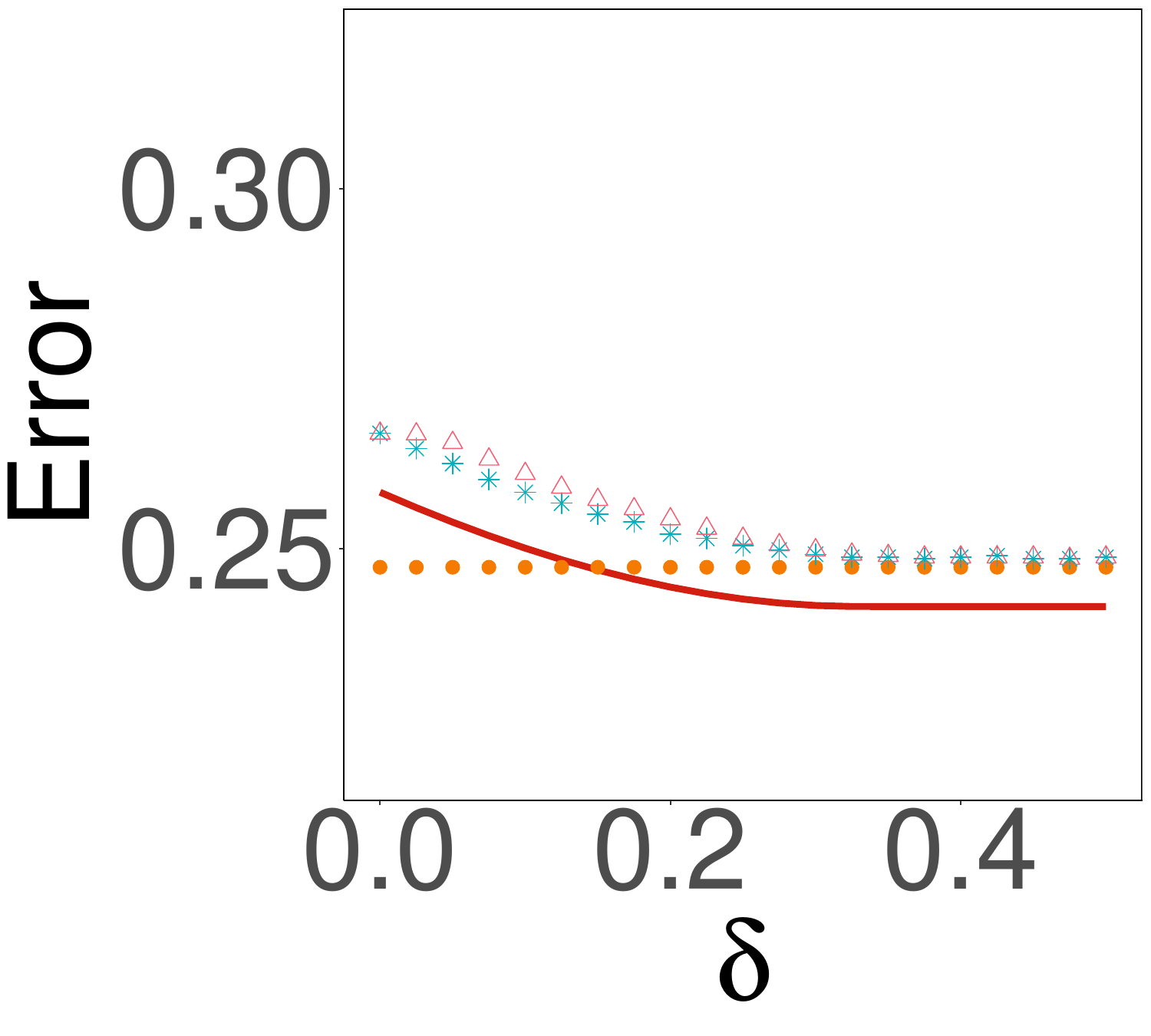} &
			\hspace{\thisgap}\includegraphics[width=\thiswidth]{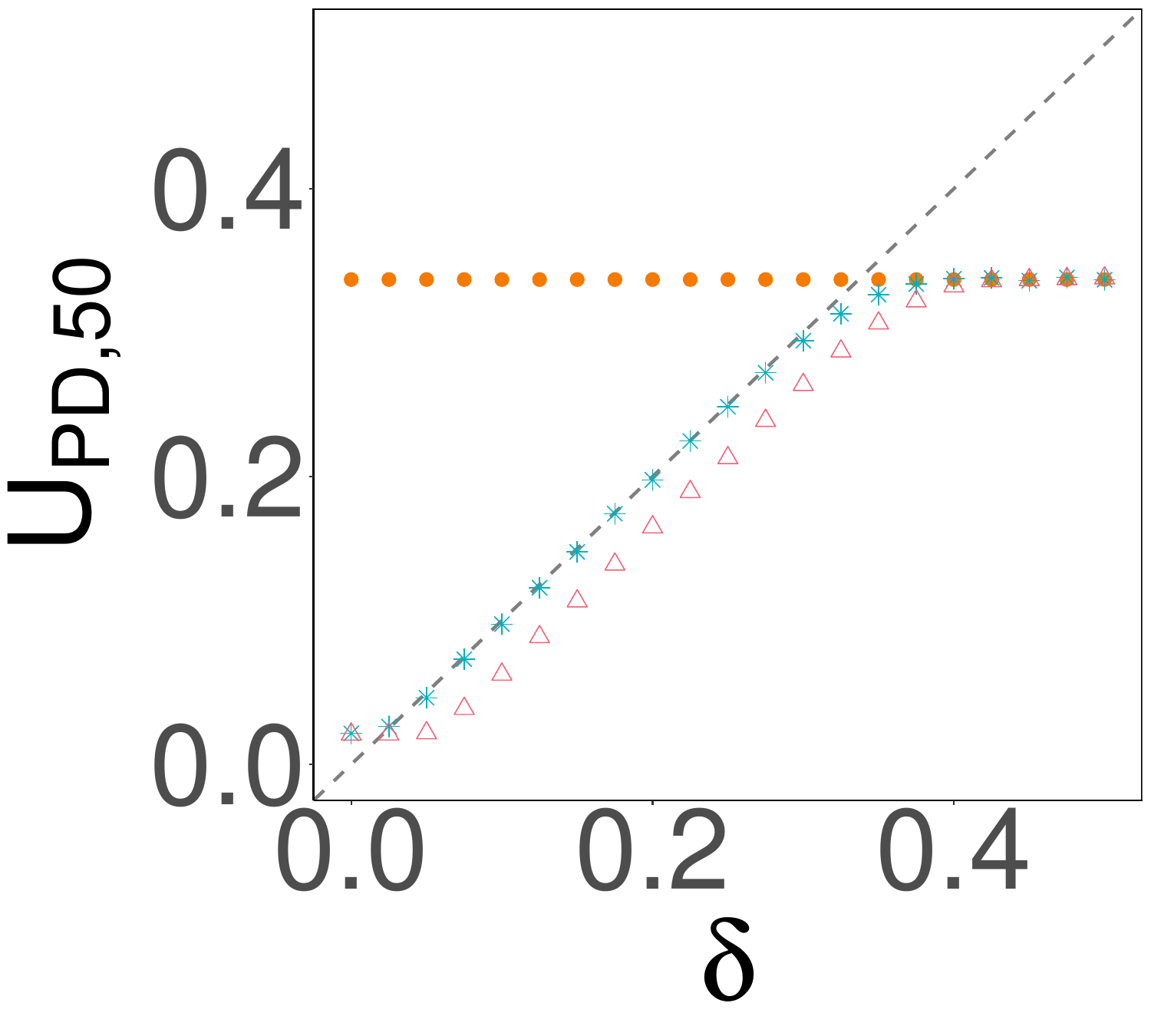} &
			\hspace{\thisgap}\includegraphics[width=\thiswidth]{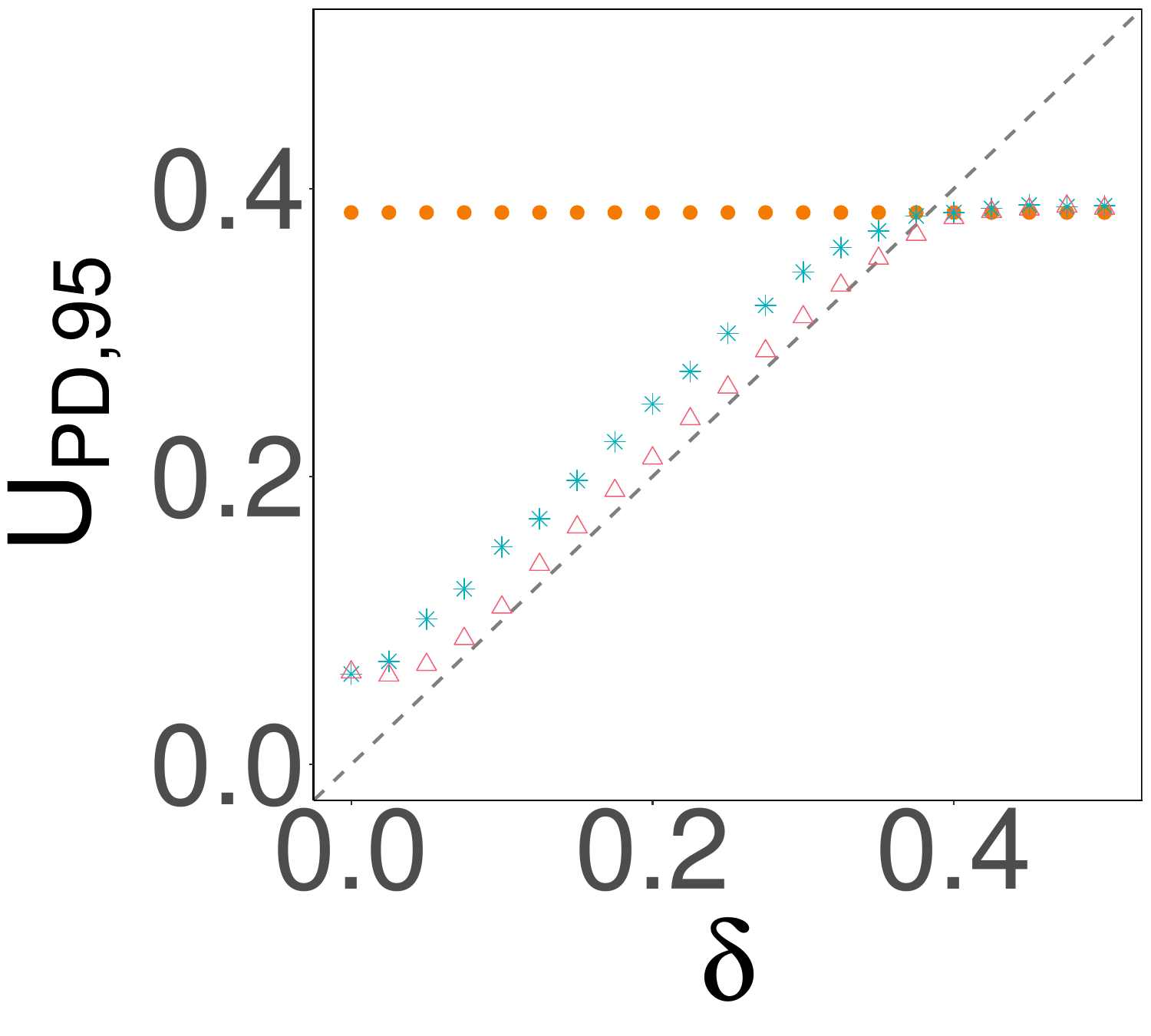}
		\end{tabular}
		\caption{Disparity PD results under the Gaussian model, $\beta=2$. Top: $n=1000$; middle: $n=2000$; bottom: $n=5000$.}
		\label{fig:PD_gauss_beta_2}
	\end{center}
\end{figure*}

\begin{figure*}[!htbp]
	\begin{center}
		\newcommand{\thiswidth}{0.18\linewidth}
		\newcommand{\thisgap}{0mm}
		\begin{tabular}{ccc}
			\hspace{\thisgap}\includegraphics[width=\thiswidth]{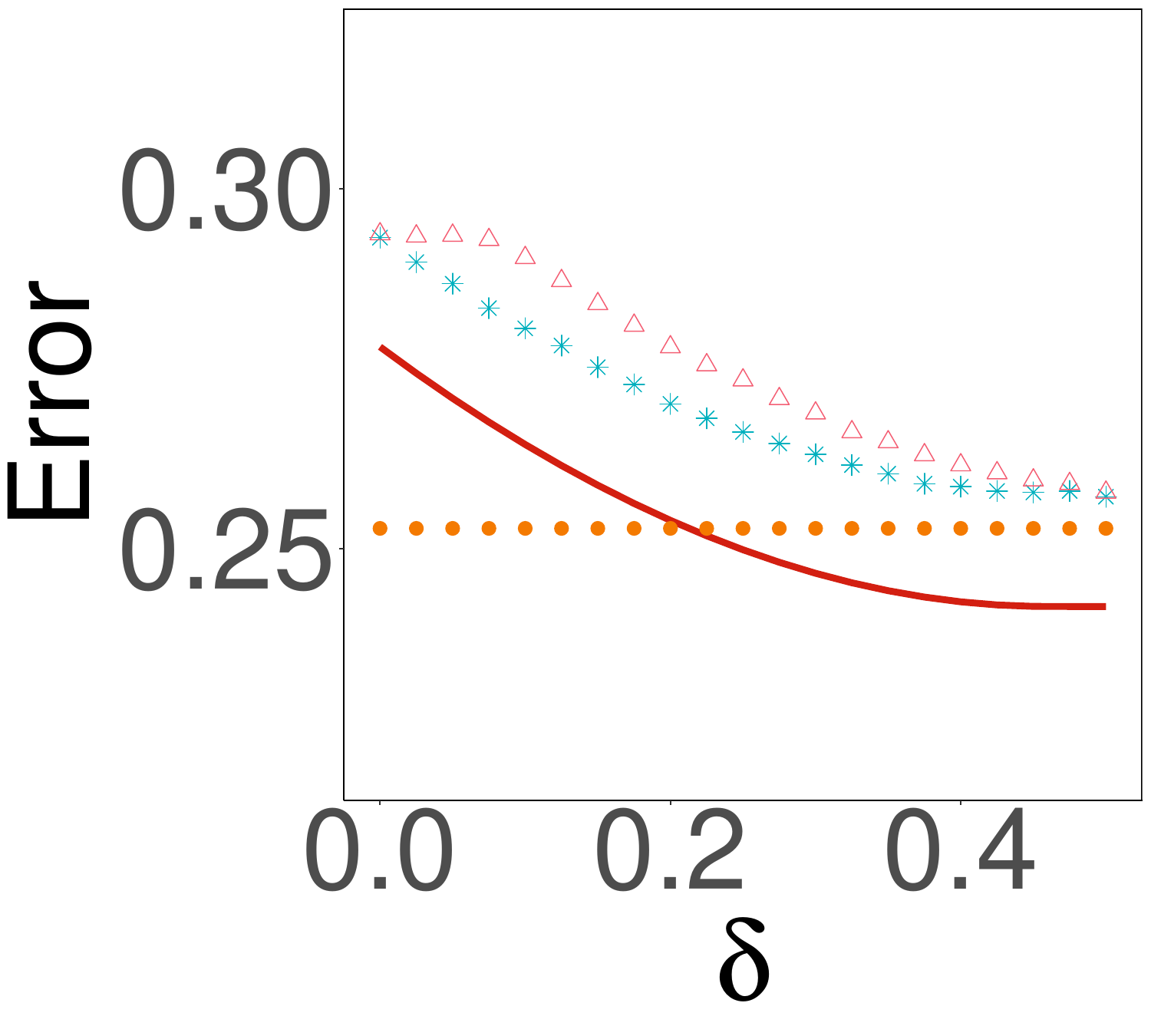} &
			\hspace{\thisgap}\includegraphics[width=\thiswidth]{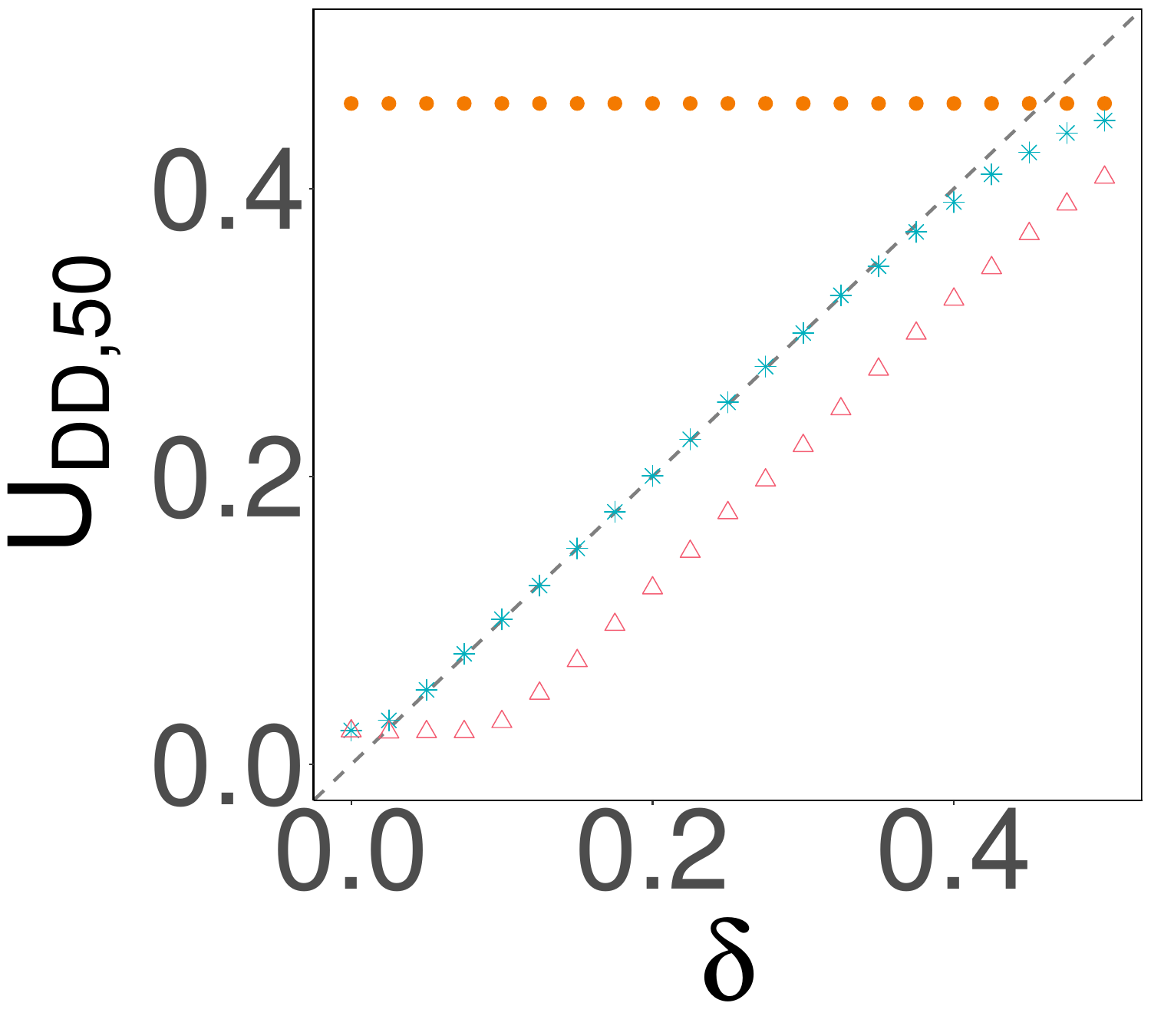} &
			\hspace{\thisgap}\includegraphics[width=\thiswidth]{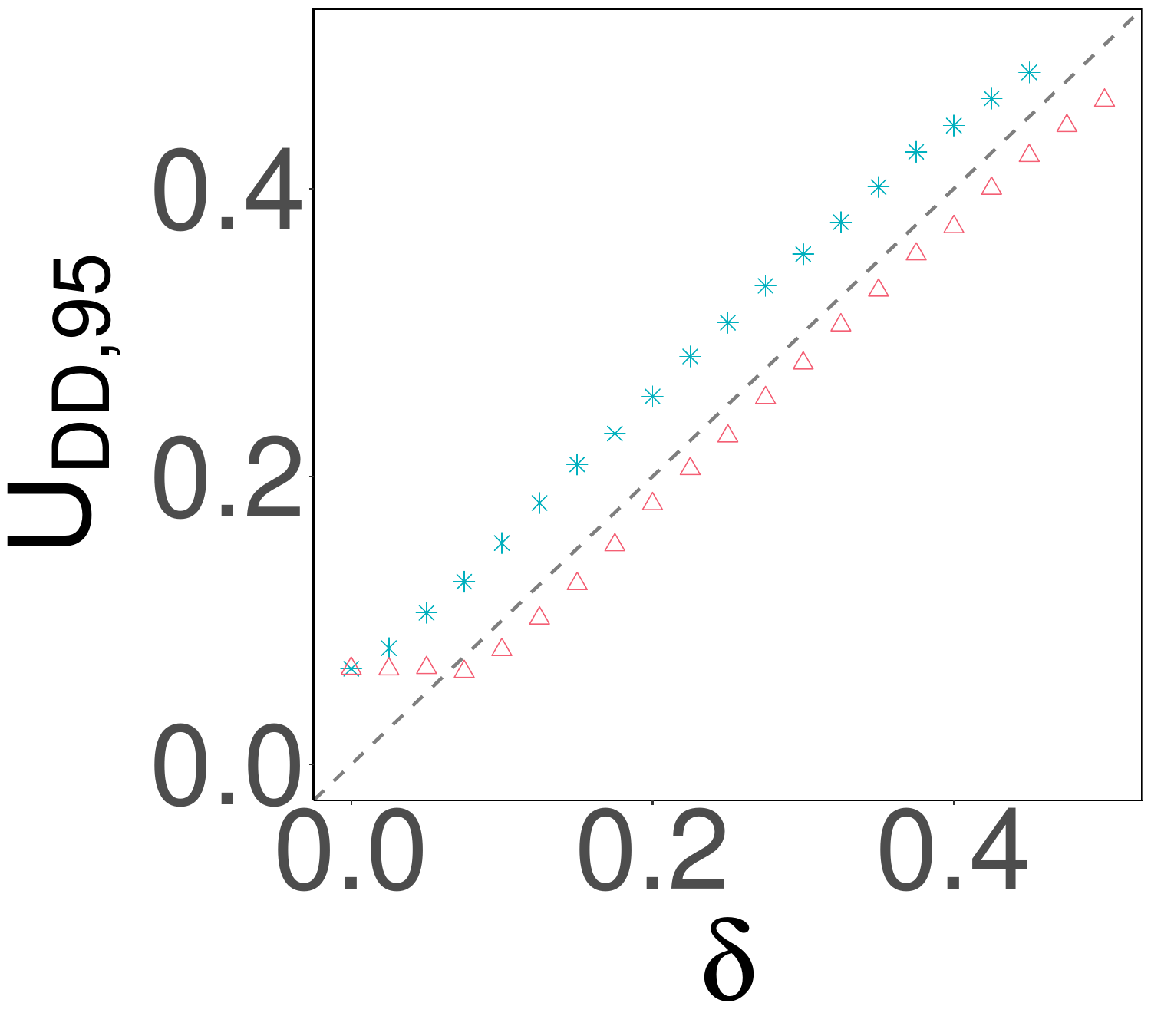} \\
			\hspace{\thisgap}\includegraphics[width=\thiswidth]{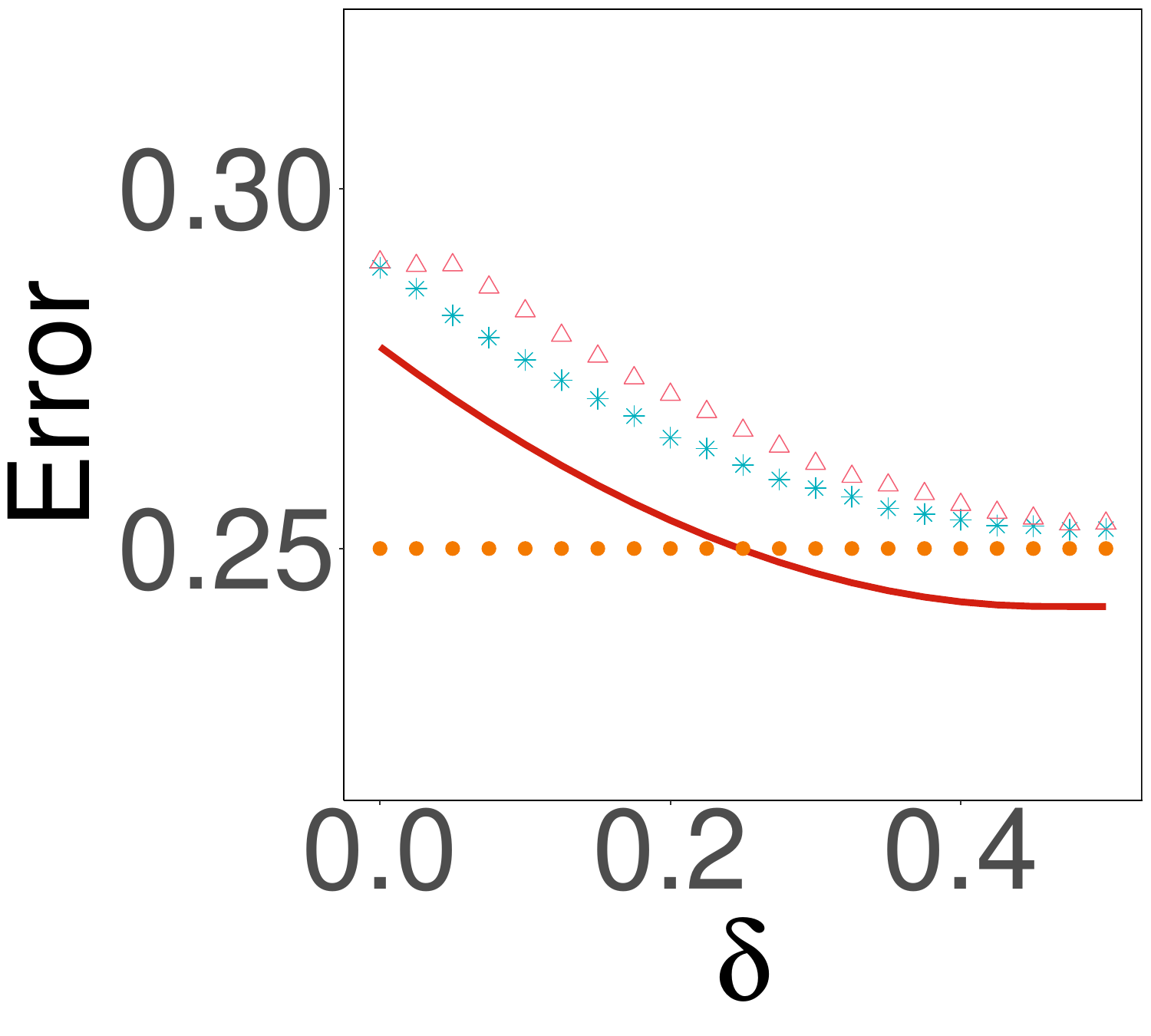} &
			\hspace{\thisgap}\includegraphics[width=\thiswidth]{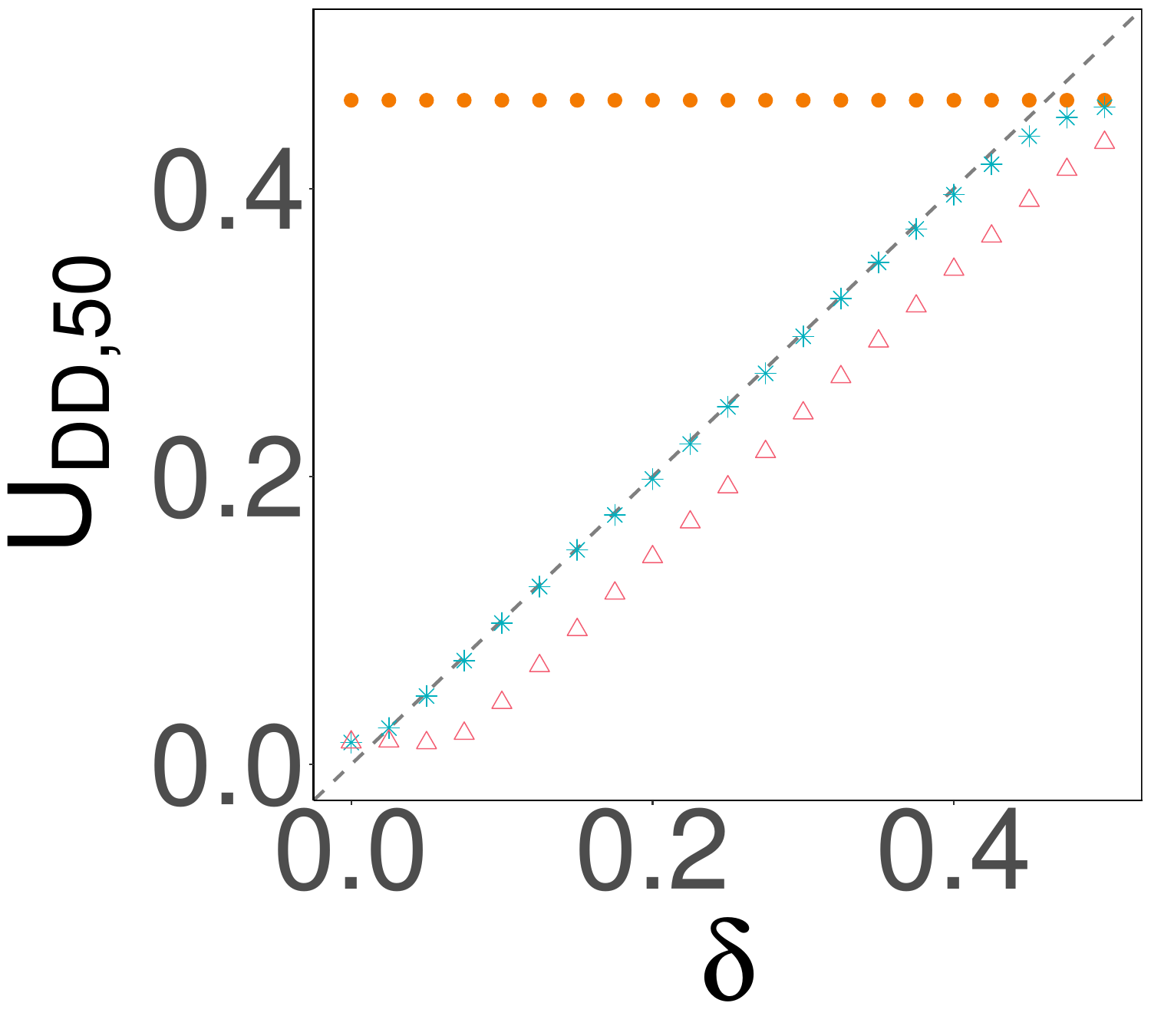} &
			\hspace{\thisgap}\includegraphics[width=\thiswidth]{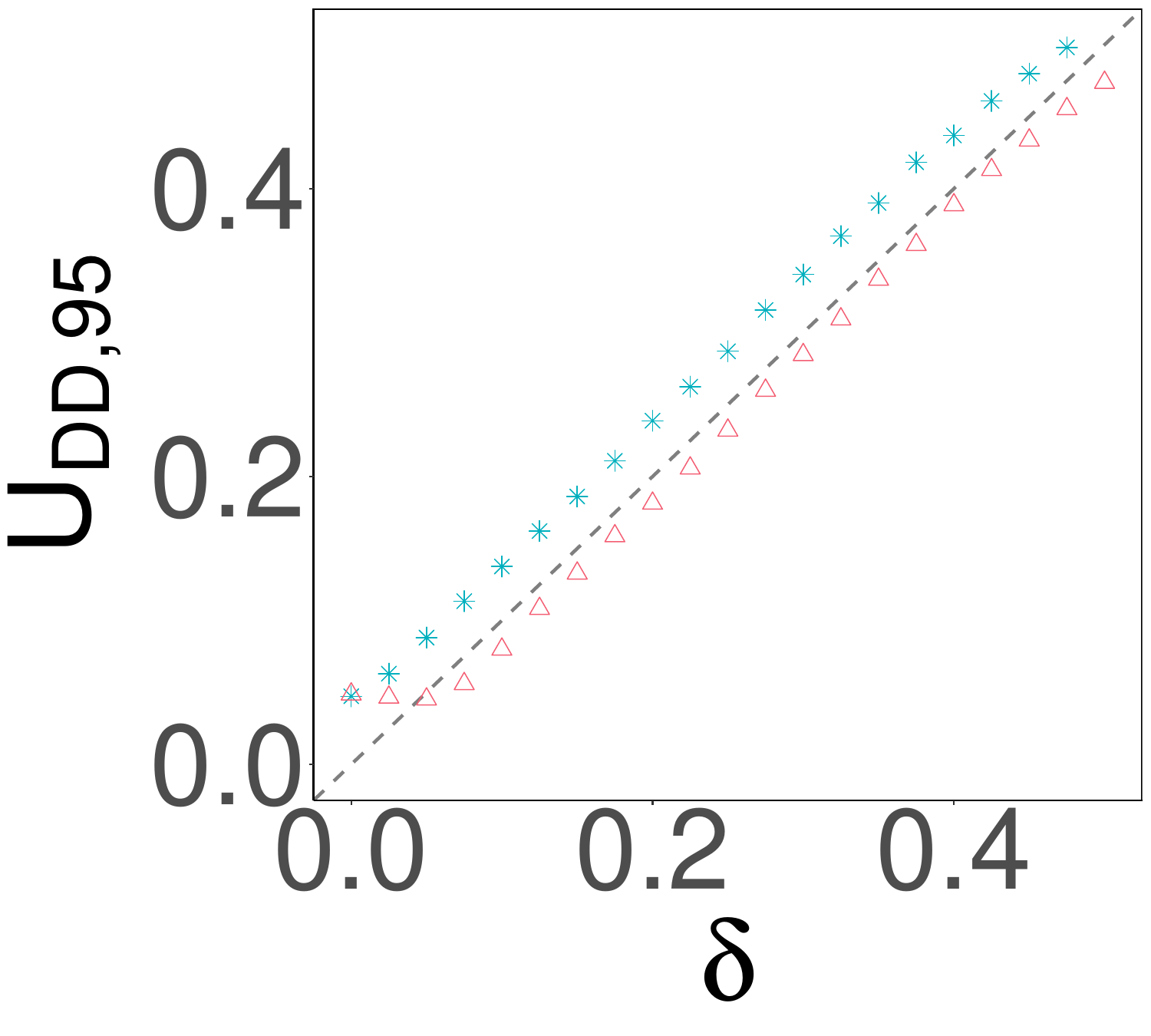} \\
			\hspace{\thisgap}\includegraphics[width=\thiswidth]{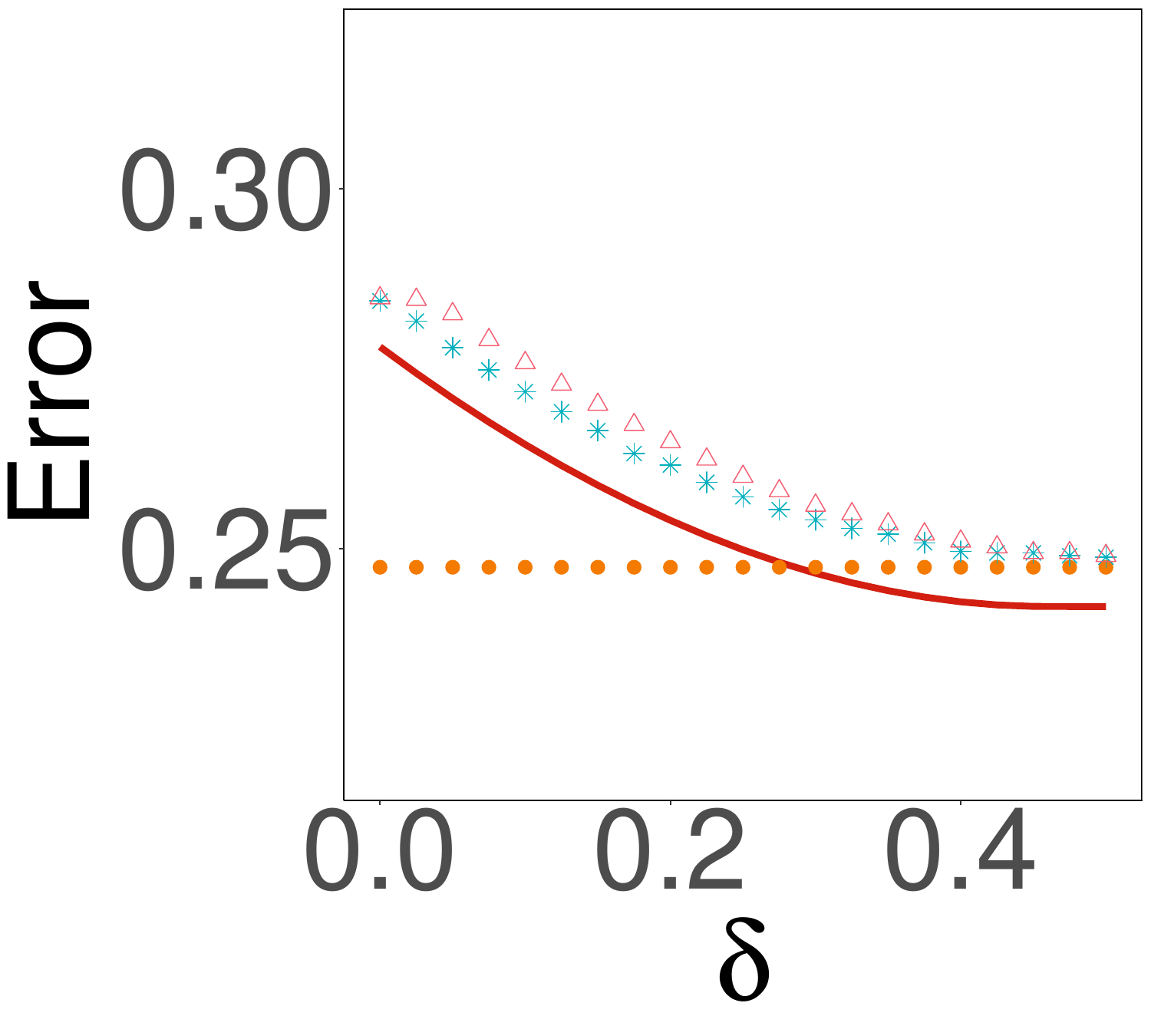} &
			\hspace{\thisgap}\includegraphics[width=\thiswidth]{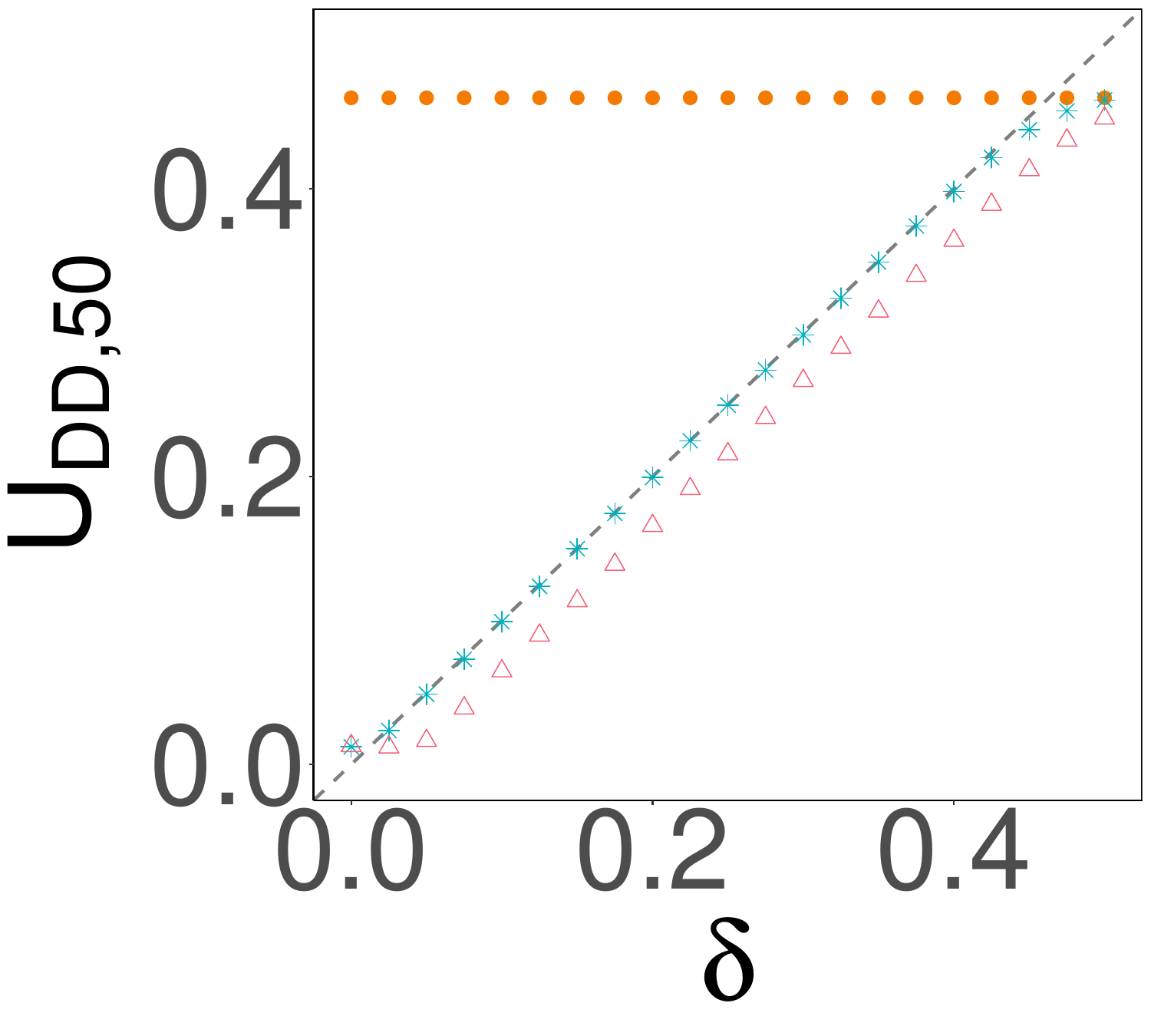} &
			\hspace{\thisgap}\includegraphics[width=\thiswidth]{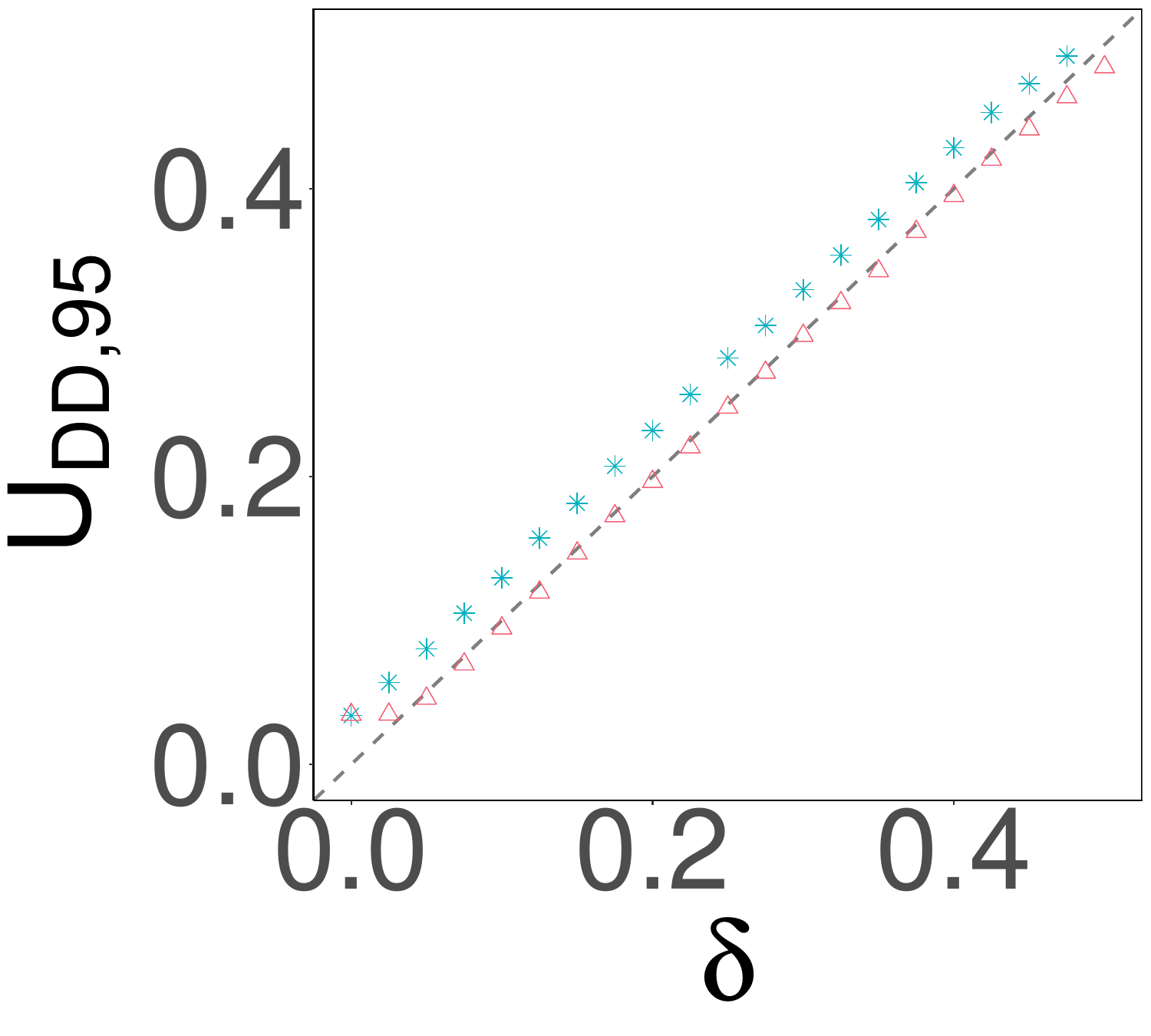}
		\end{tabular}
		\caption{Disparity DD results under the Gaussian model, $\beta=2$. Top: $n=1000$; middle: $n=2000$; bottom: $n=5000$.}
		\label{fig:DD_gauss_beta_2}
	\end{center}
\end{figure*}

\subsection{Simulation results under non-Gaussian models}\label{sec:apx_nongaussian}

Although the proposed fairness-aware classifier is established based on the explicit form of the Radon--Nikodym derivative under Gaussian assumptions, it remains applicable in more general scenarios. However, when the Gaussian assumption is violated, the proposed classifier may no longer be Bayes optimal.
To assess its performance beyond the Gaussian setting, we generate non-Gaussian stochastic processes by sampling $\zeta_{a,k} \sim \lambda_{a,k}^{1/2}\mathrm{Unif}(-\sqrt{3}, \sqrt{3})$, while keeping all other model parameters identical to those in Section \ref{sec:sim}.  

The results are presented in Figures \ref{fig:DO_unif_beta_1.5}-\ref{fig:DD_unif_beta_1.5}. 
Despite the lack of Bayes optimality guarantees in this setting, the Fair-FLDA and Fair-$\mathrm{FLDA_c}$ continue to exhibit effective disparity control and satisfactory classification accuracy. This robustness highlights the practical utility of our approach in more general, non-Gaussian scenarios.

\begin{figure*}[!htbp]
	\begin{center}
		\newcommand{\thiswidth}{0.18\linewidth}
		\newcommand{\thisgap}{0mm}
		\begin{tabular}{ccc}
			\hspace{\thisgap}\includegraphics[width=\thiswidth]{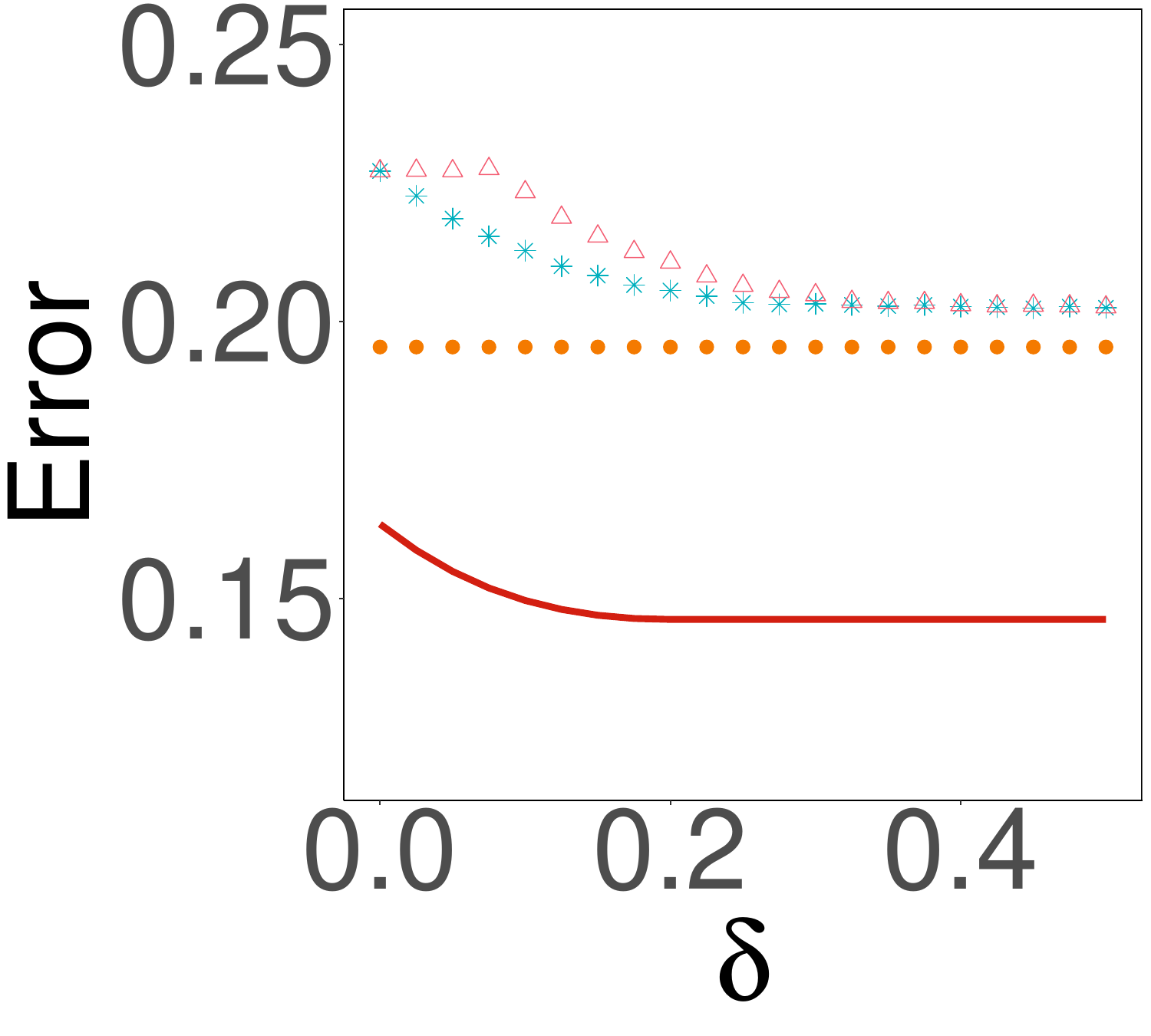} &
			\hspace{\thisgap}\includegraphics[width=\thiswidth]{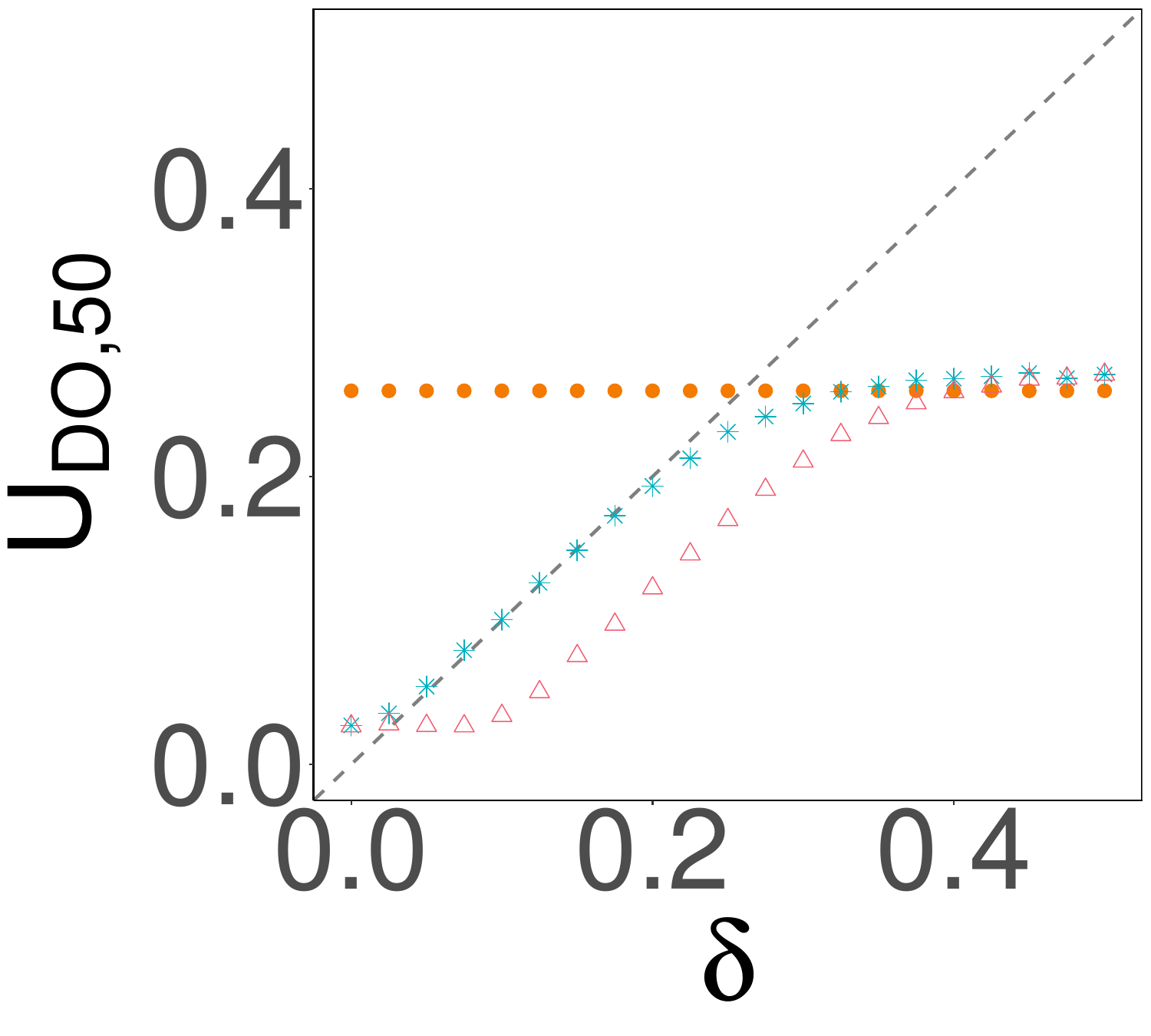} &
			\hspace{\thisgap}\includegraphics[width=\thiswidth]{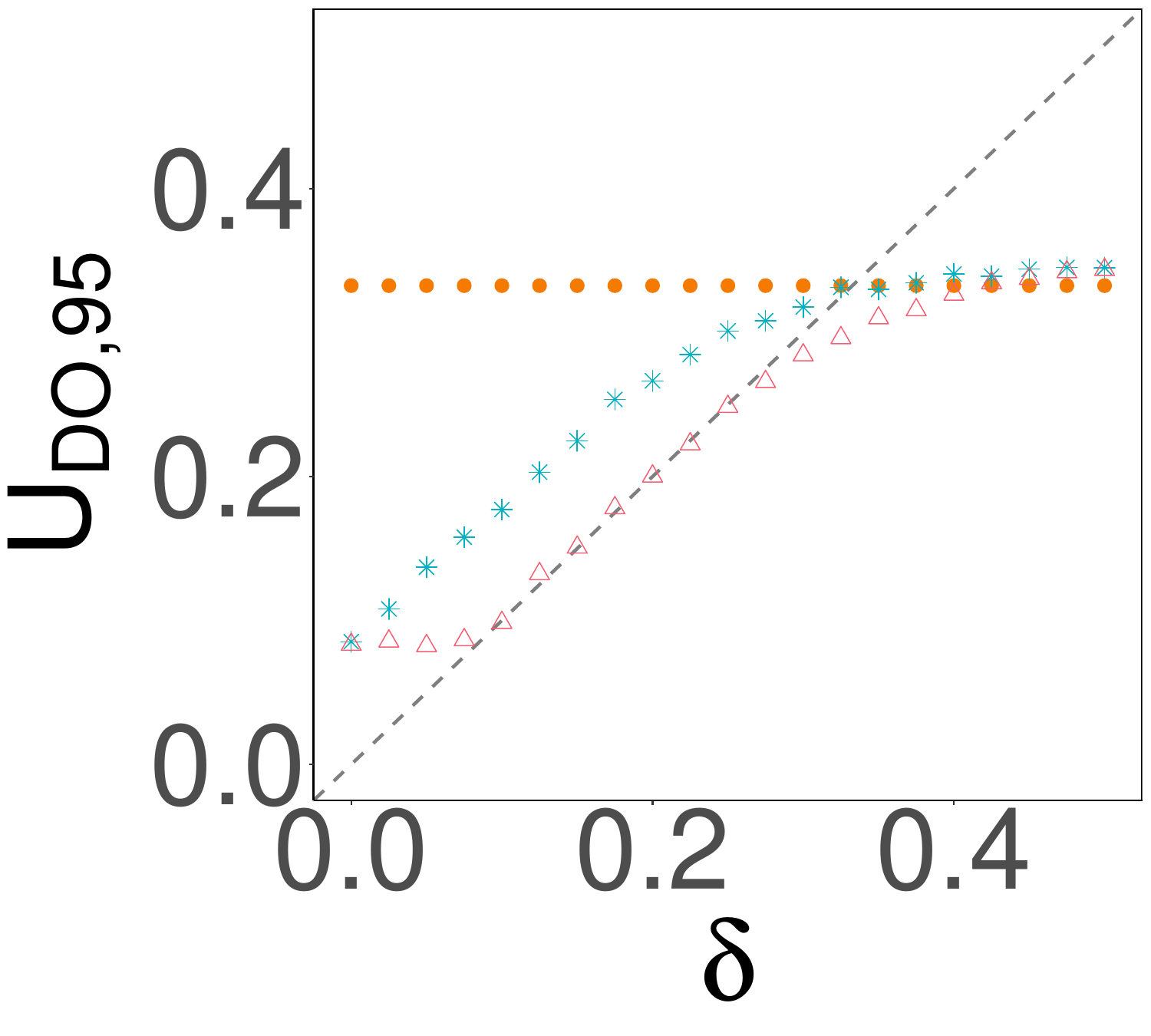} \\
			\hspace{\thisgap}\includegraphics[width=\thiswidth]{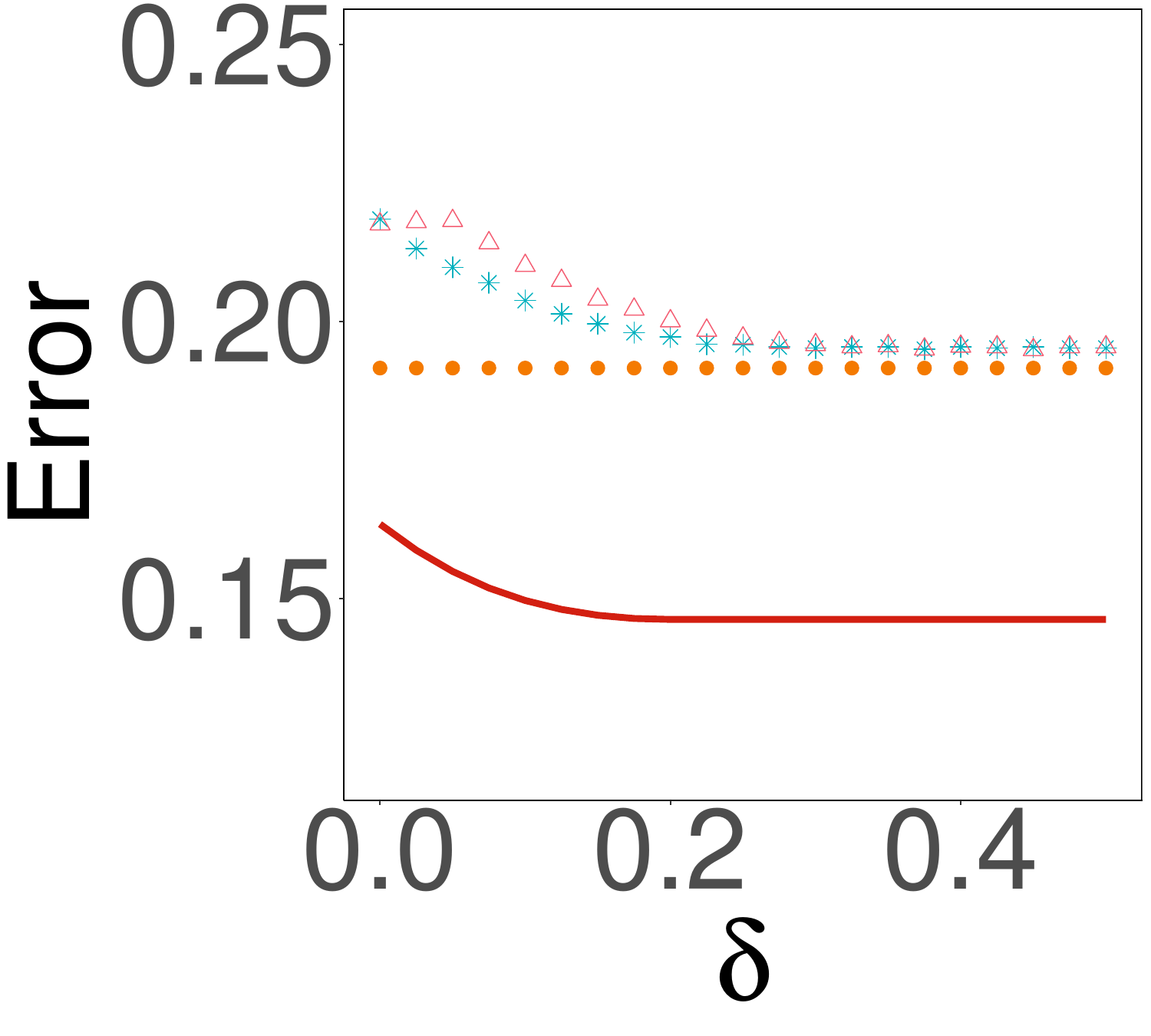} &
			\hspace{\thisgap}\includegraphics[width=\thiswidth]{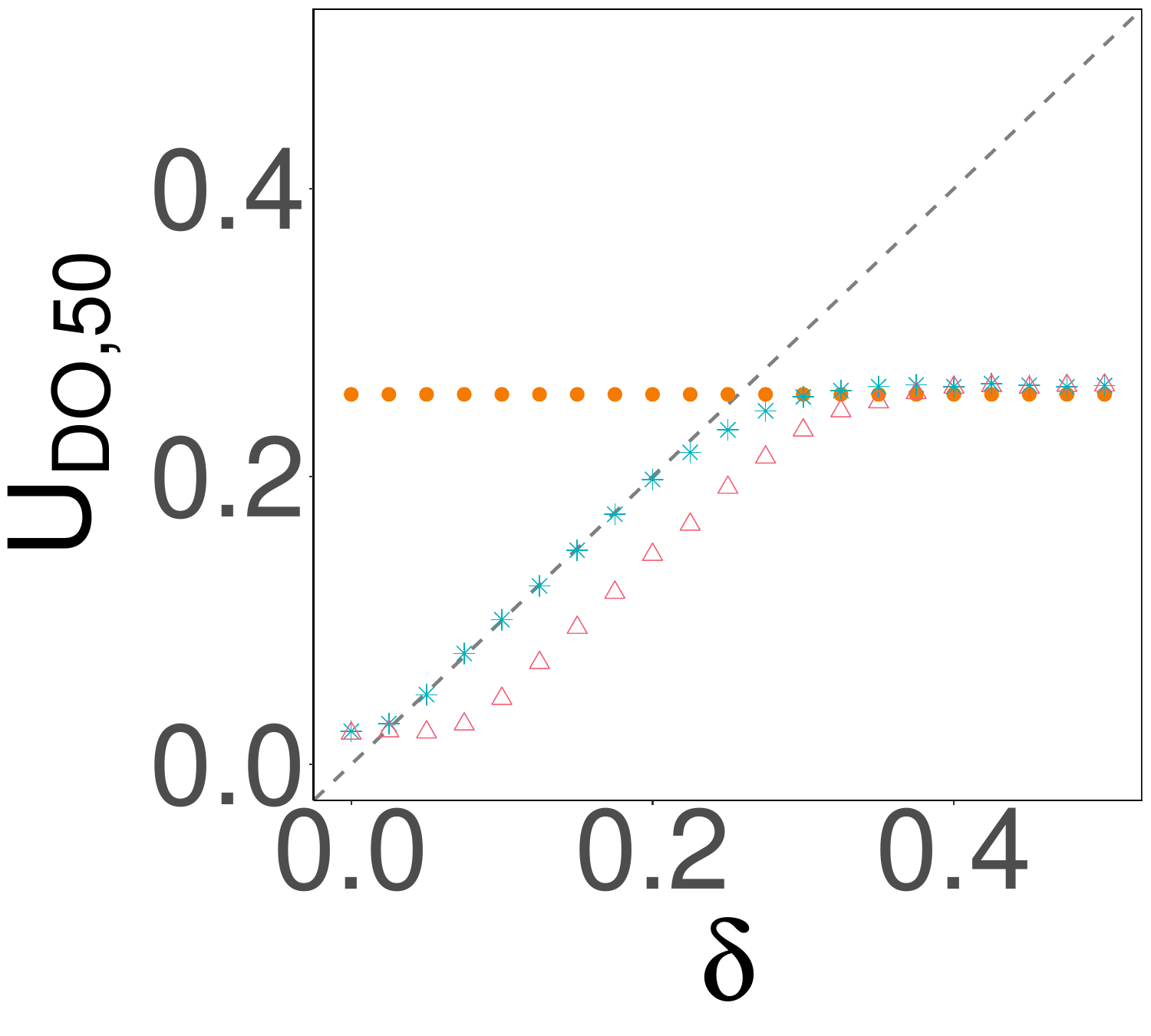} &
			\hspace{\thisgap}\includegraphics[width=\thiswidth]{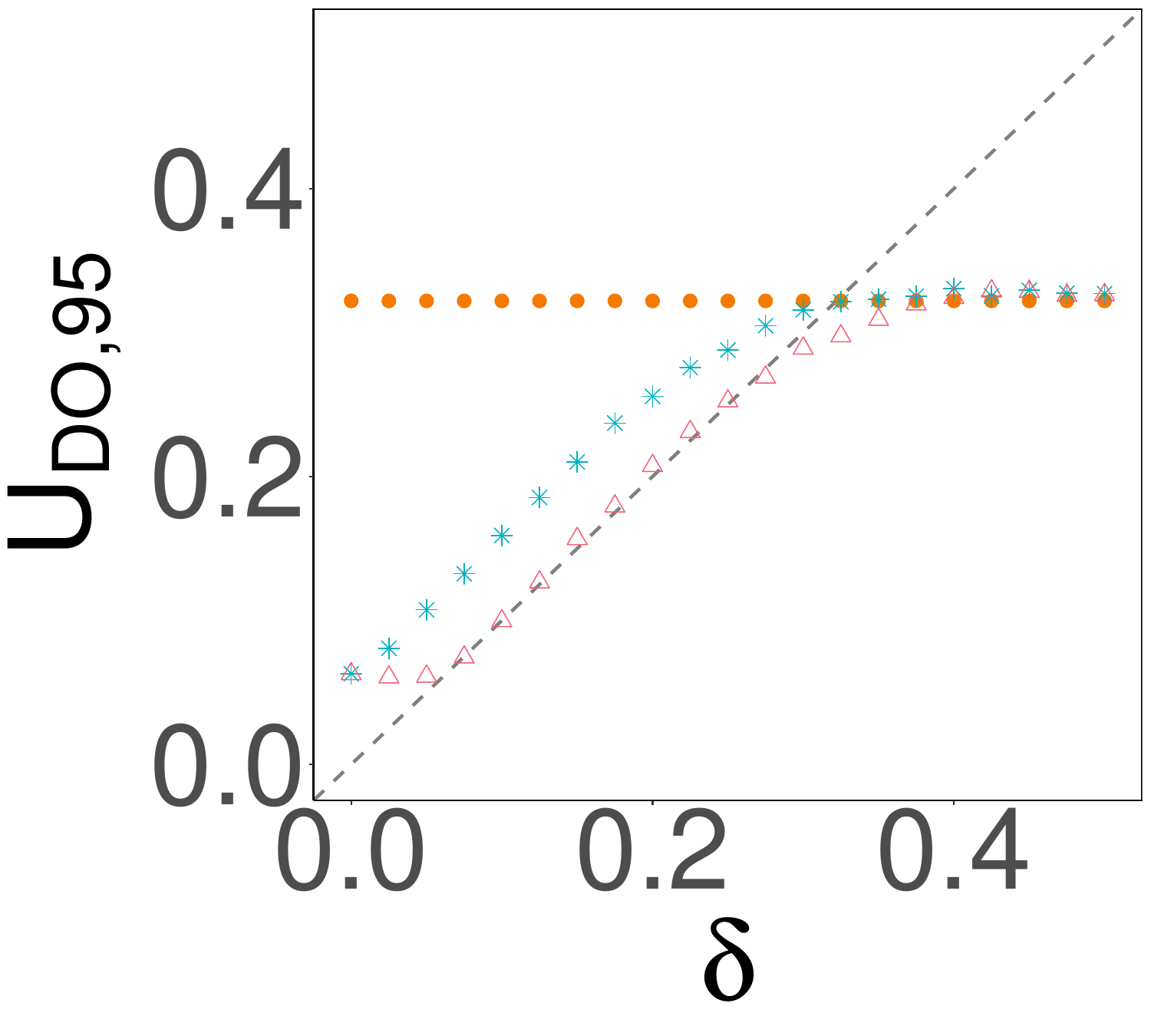} \\
			\hspace{\thisgap}\includegraphics[width=\thiswidth]{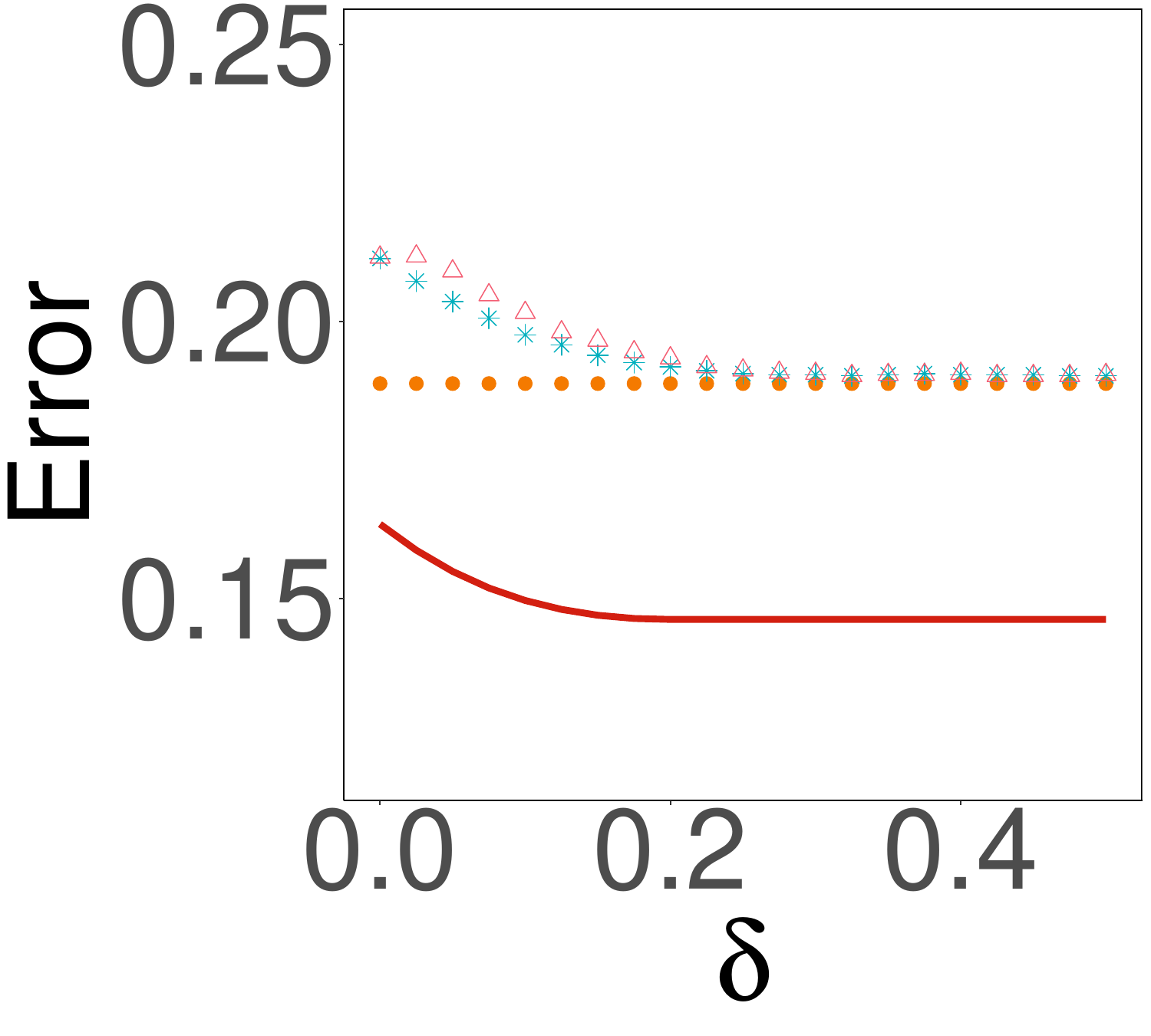} &
			\hspace{\thisgap}\includegraphics[width=\thiswidth]{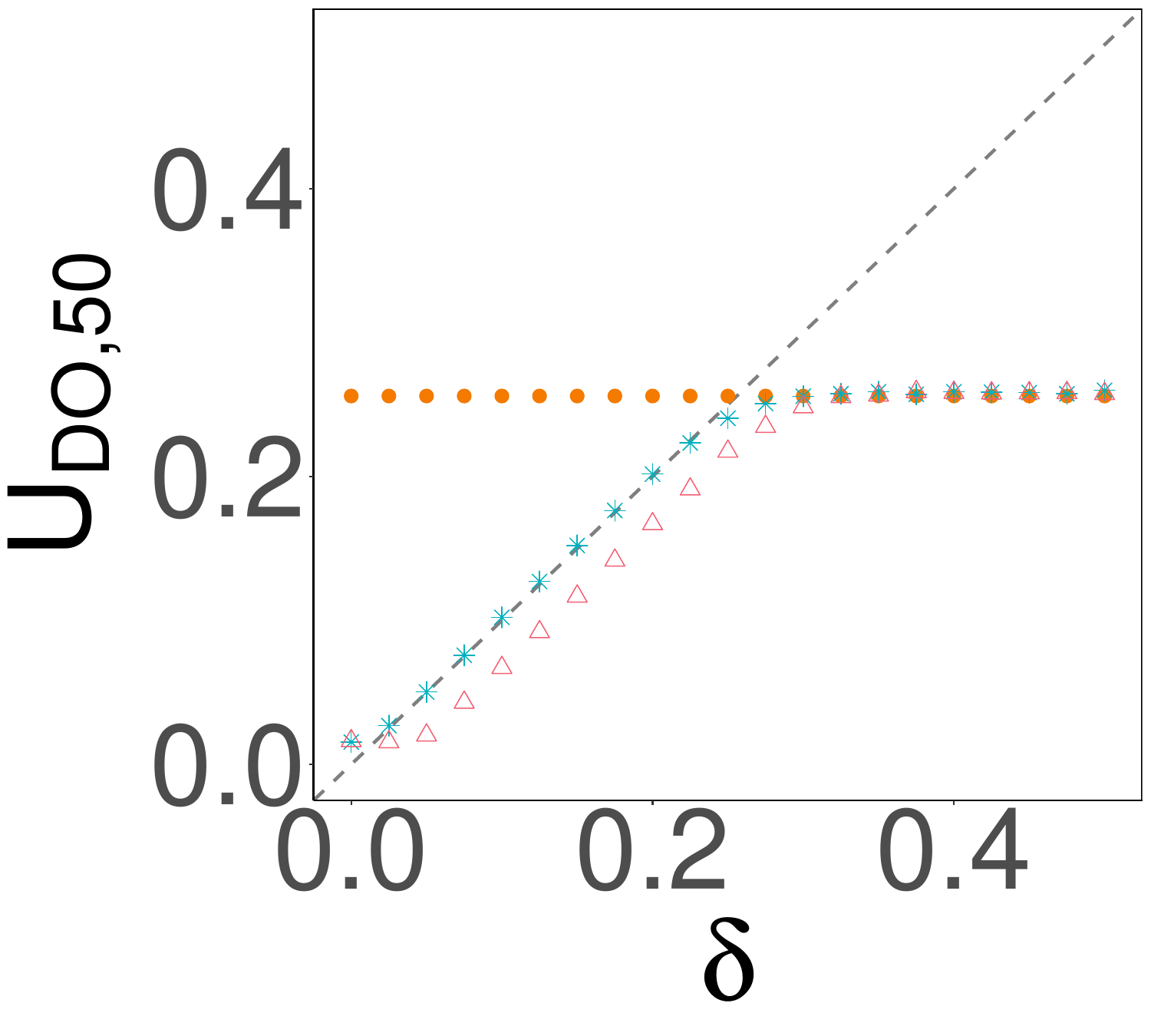} &
			\hspace{\thisgap}\includegraphics[width=\thiswidth]{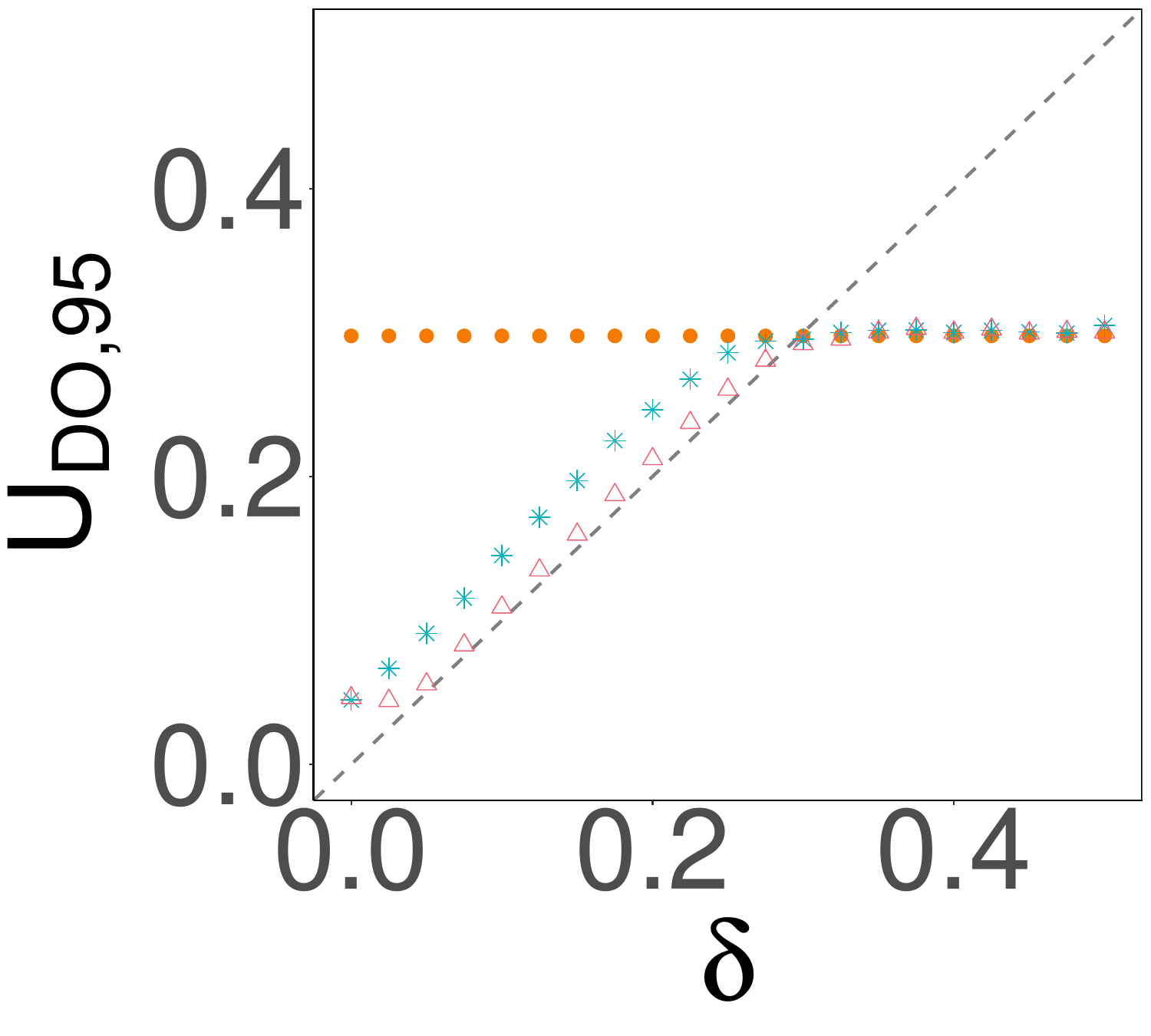}
		\end{tabular}
		\caption{Disparity DO results under the non-Gaussian model, $\beta=1.5$. Top: $n=1000$; middle: $n=2000$; bottom: $n=5000$.}
		\label{fig:DO_unif_beta_1.5}
	\end{center}
\end{figure*}

\begin{figure*}[!htbp]
	\begin{center}
		\newcommand{\thiswidth}{0.18\linewidth}
		\newcommand{\thisgap}{0mm}
		\begin{tabular}{ccc}
			\hspace{\thisgap}\includegraphics[width=\thiswidth]{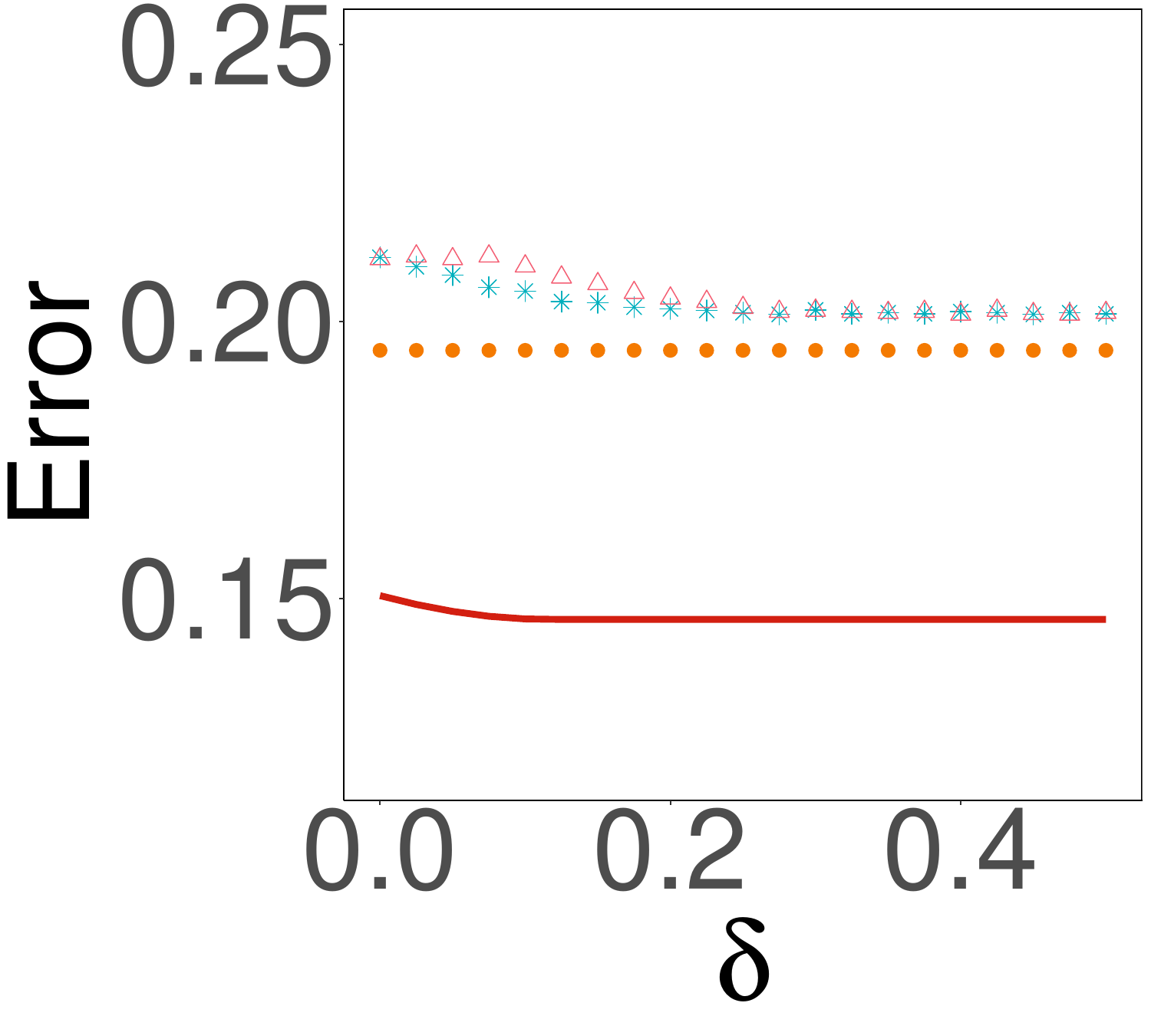} &
			\hspace{\thisgap}\includegraphics[width=\thiswidth]{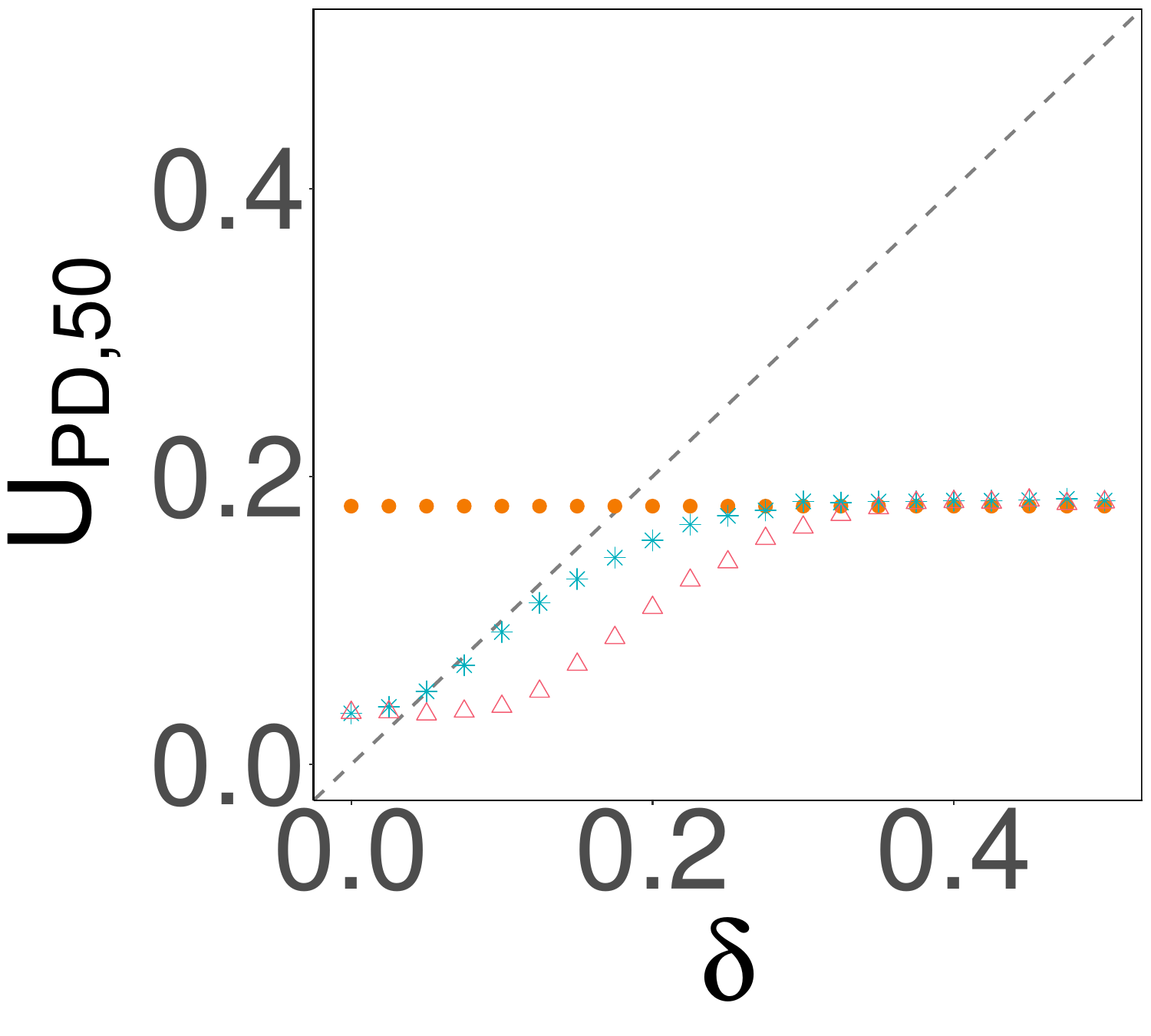} &
			\hspace{\thisgap}\includegraphics[width=\thiswidth]{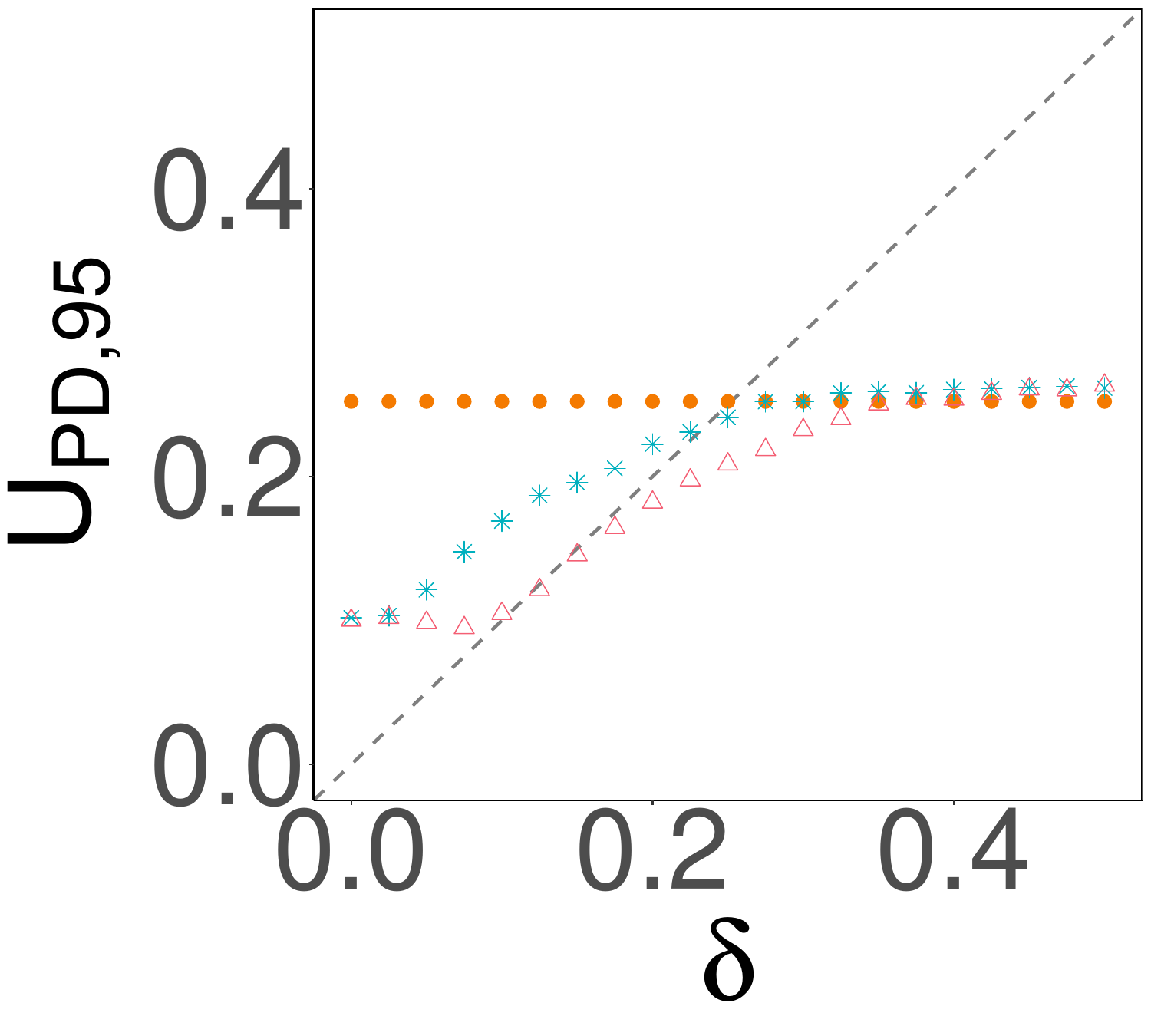} \\
			\hspace{\thisgap}\includegraphics[width=\thiswidth]{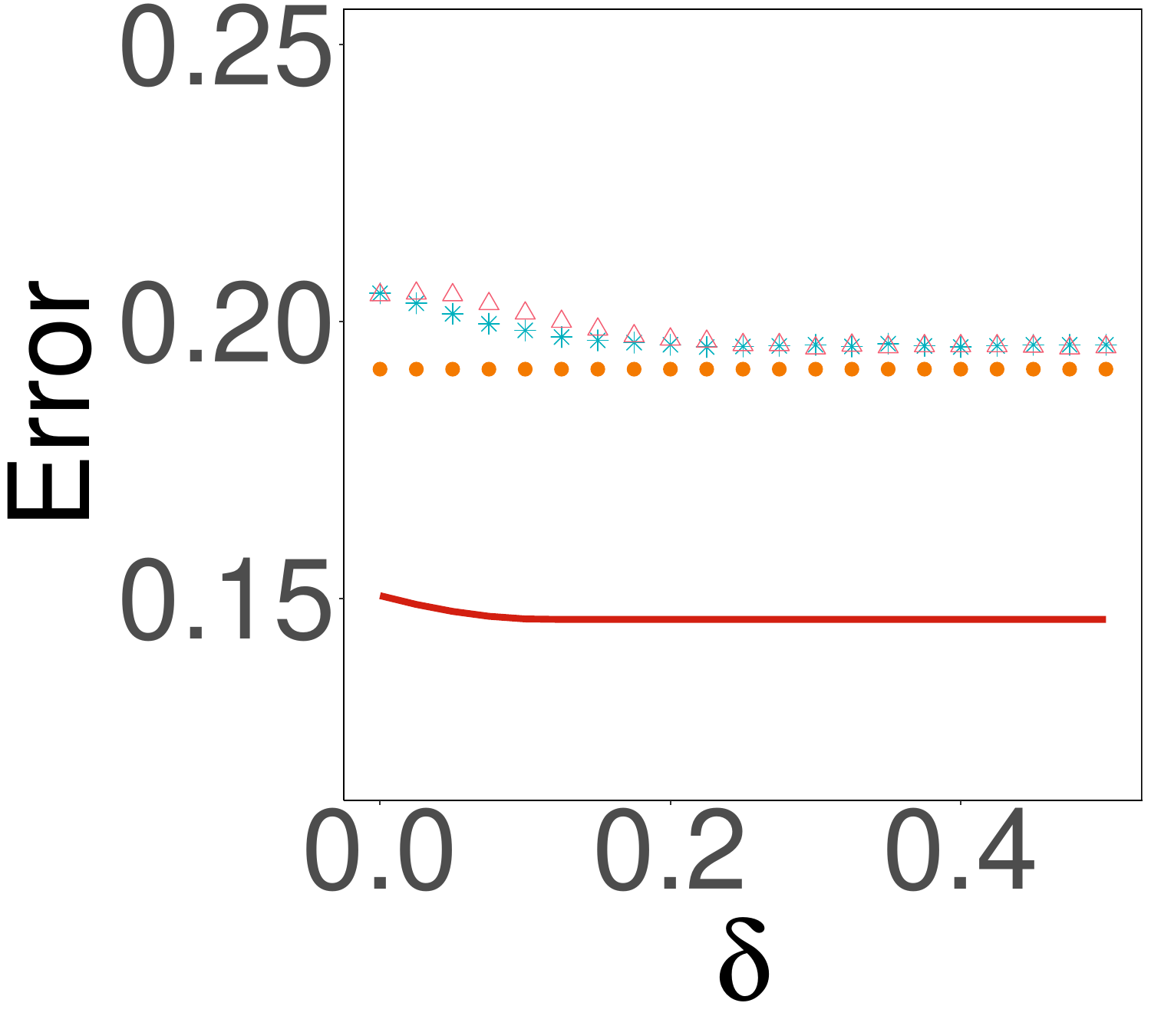} &
			\hspace{\thisgap}\includegraphics[width=\thiswidth]{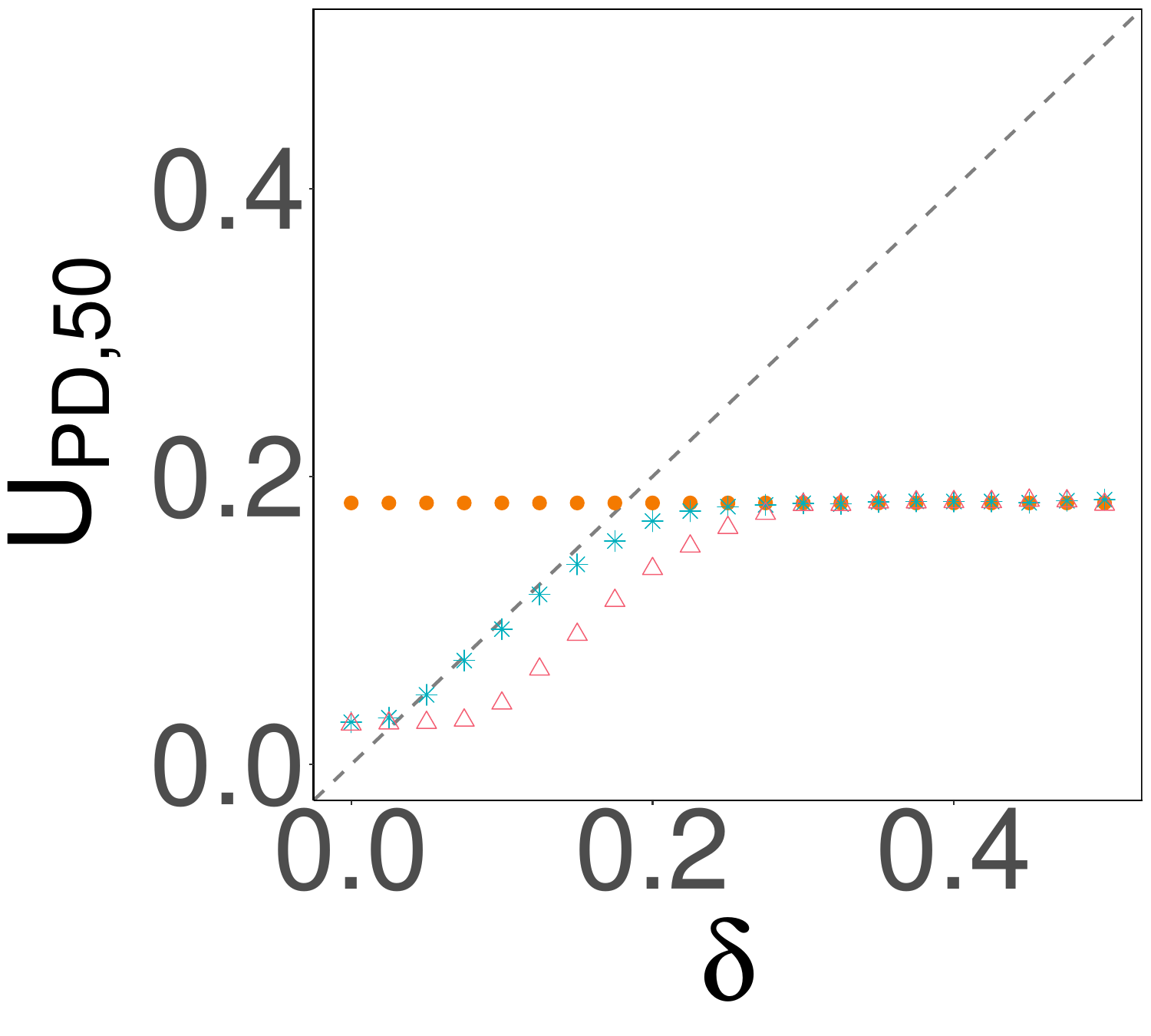} &
			\hspace{\thisgap}\includegraphics[width=\thiswidth]{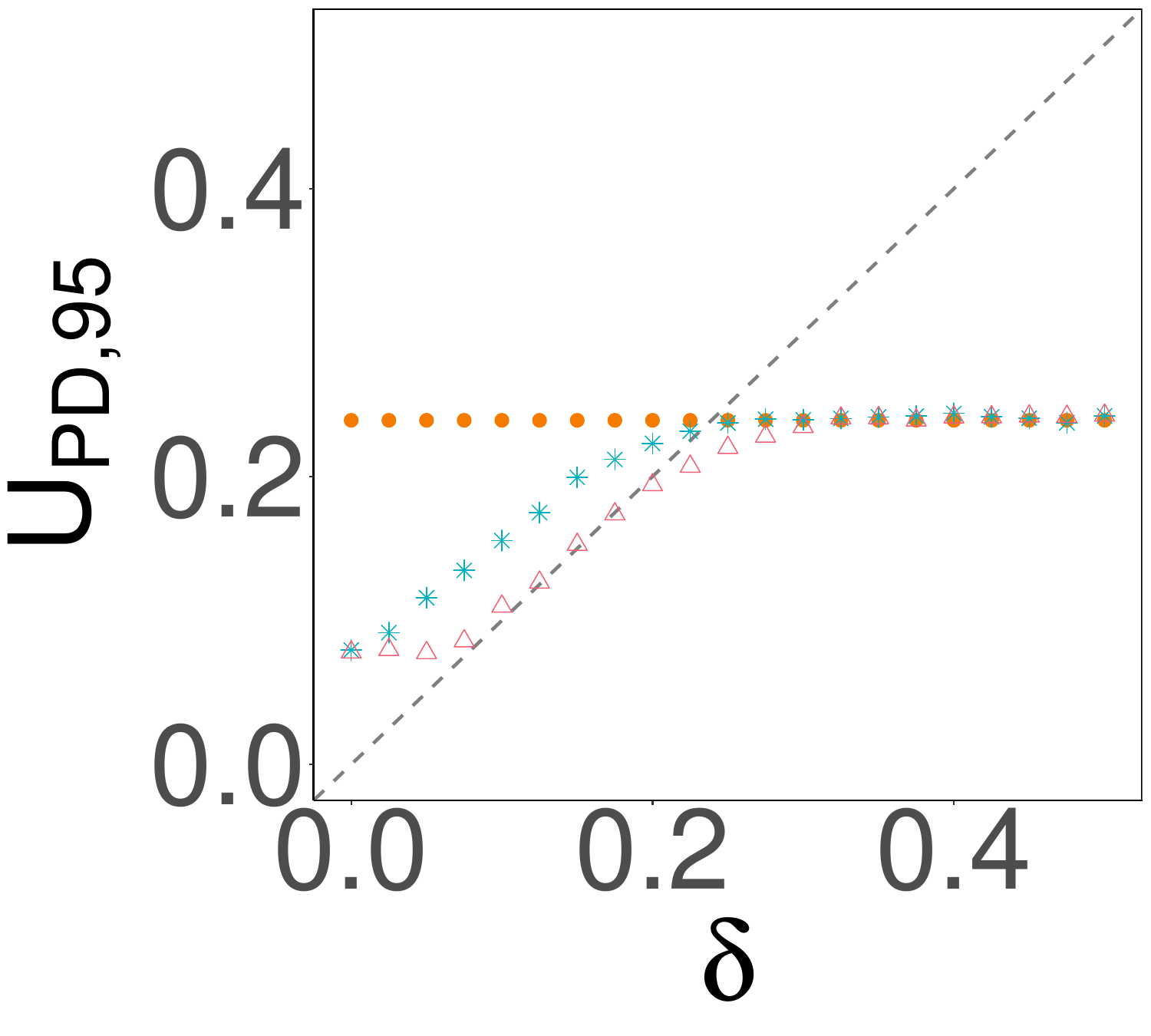} \\
			\hspace{\thisgap}\includegraphics[width=\thiswidth]{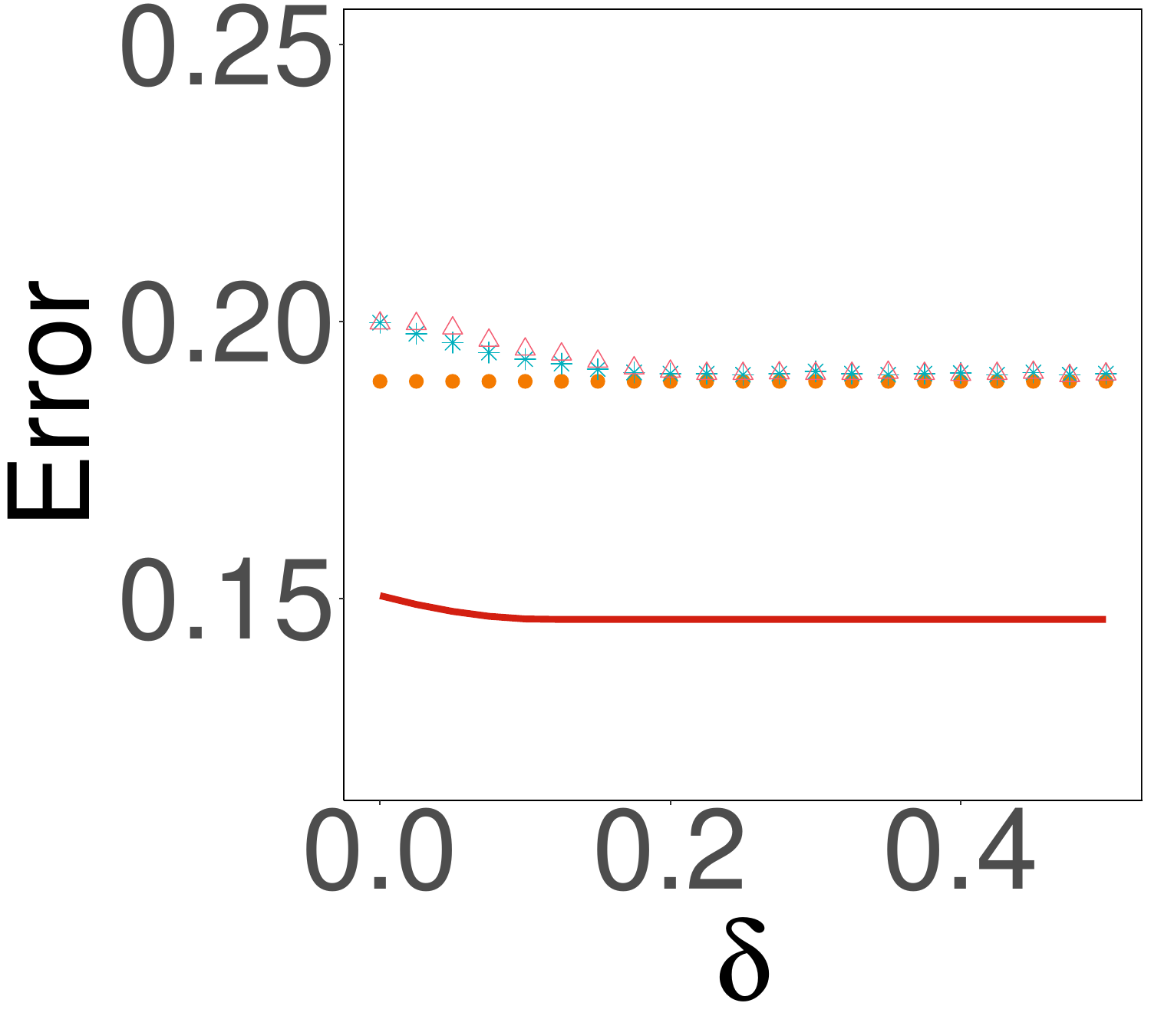} &
			\hspace{\thisgap}\includegraphics[width=\thiswidth]{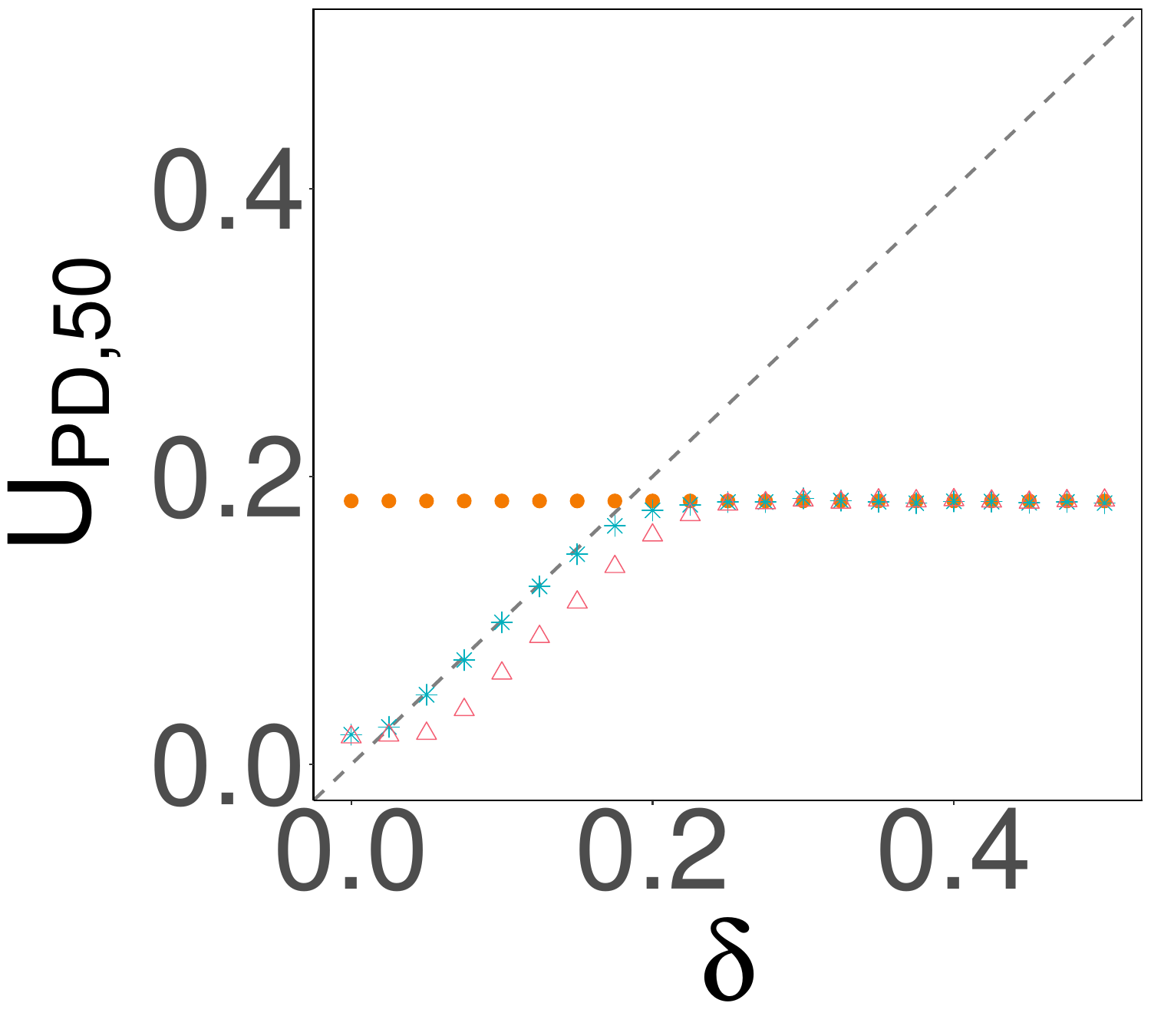} &
			\hspace{\thisgap}\includegraphics[width=\thiswidth]{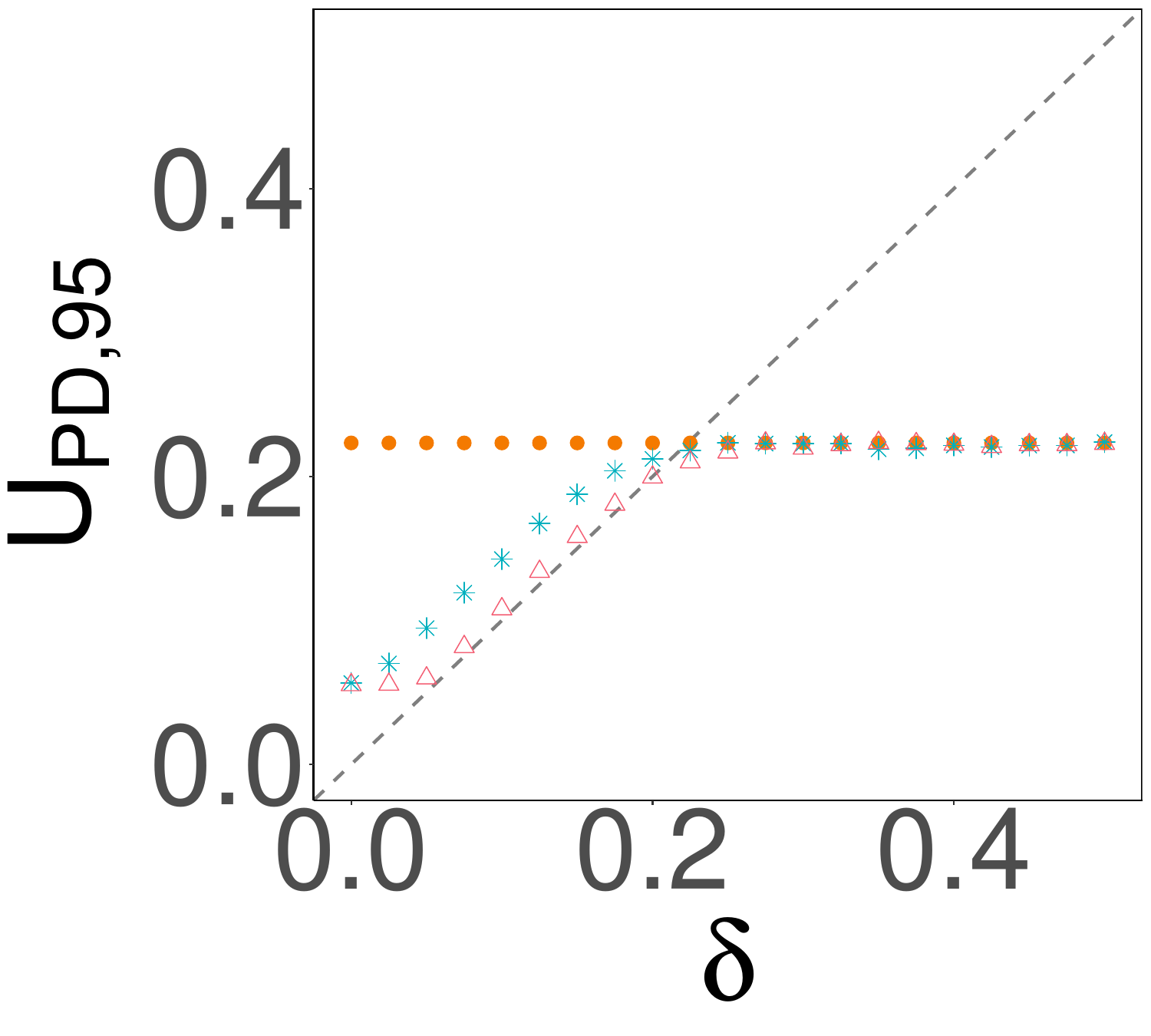}
		\end{tabular}
		\caption{Disparity PD results under the non-Gaussian model, $\beta=1.5$. Top: $n=1000$; middle: $n=2000$; bottom: $n=5000$.}
		\label{fig:PD_unif_beta_1.5}
	\end{center}
\end{figure*}

\begin{figure*}[!htbp]
	\begin{center}
		\newcommand{\thiswidth}{0.18\linewidth}
		\newcommand{\thisgap}{0mm}
		\begin{tabular}{ccc}
			\hspace{\thisgap}\includegraphics[width=\thiswidth]{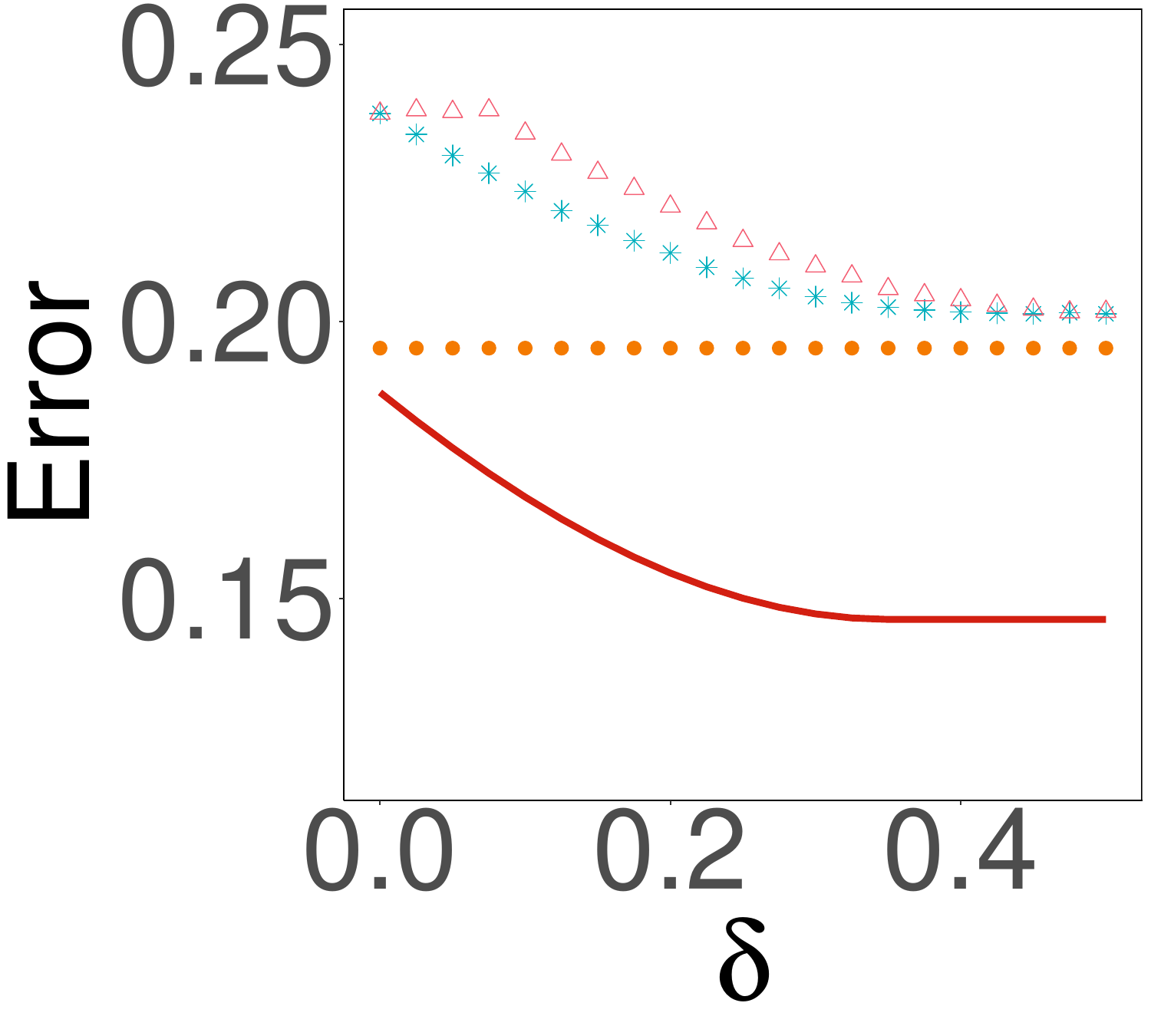} &
			\hspace{\thisgap}\includegraphics[width=\thiswidth]{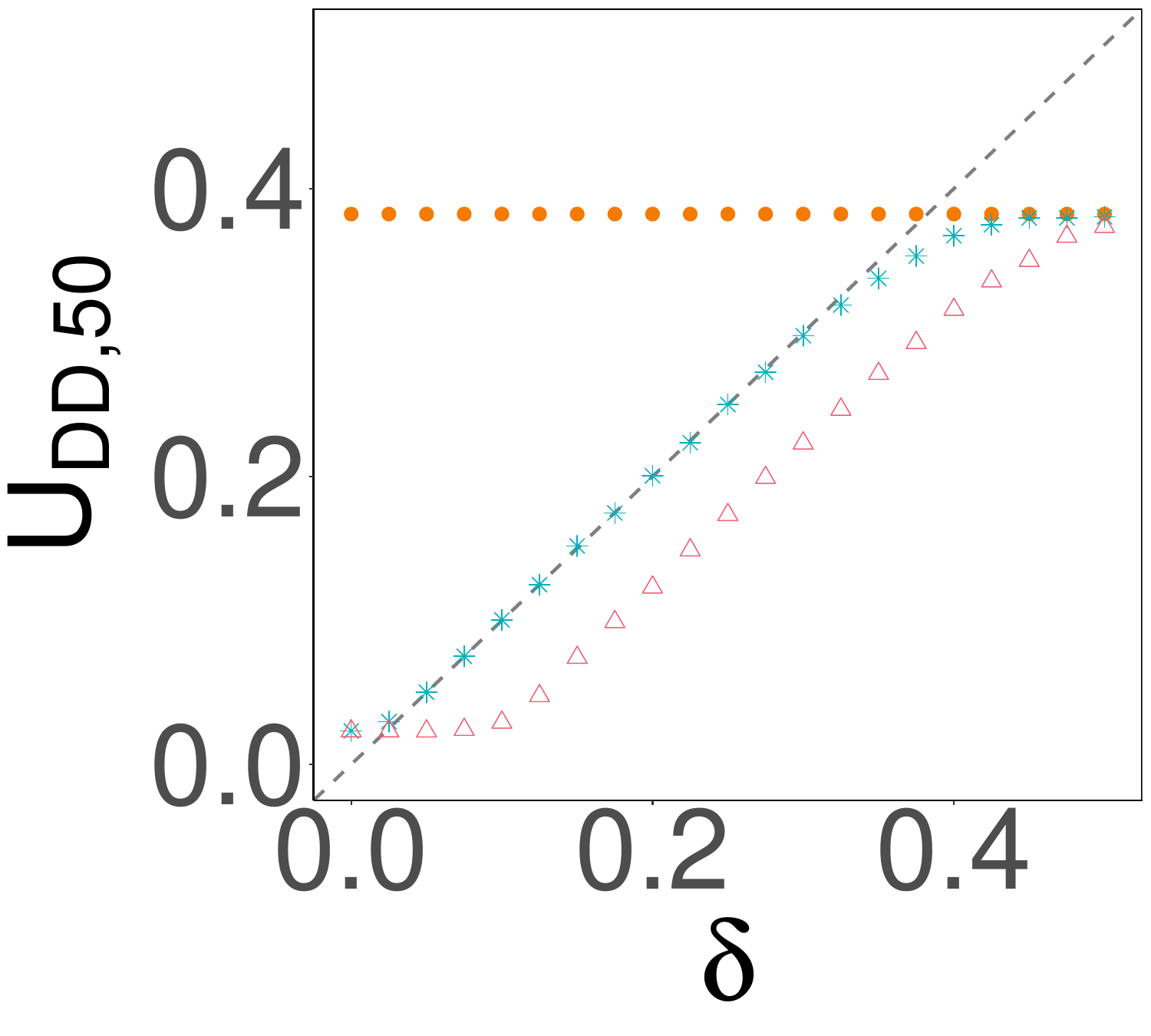} &
			\hspace{\thisgap}\includegraphics[width=\thiswidth]{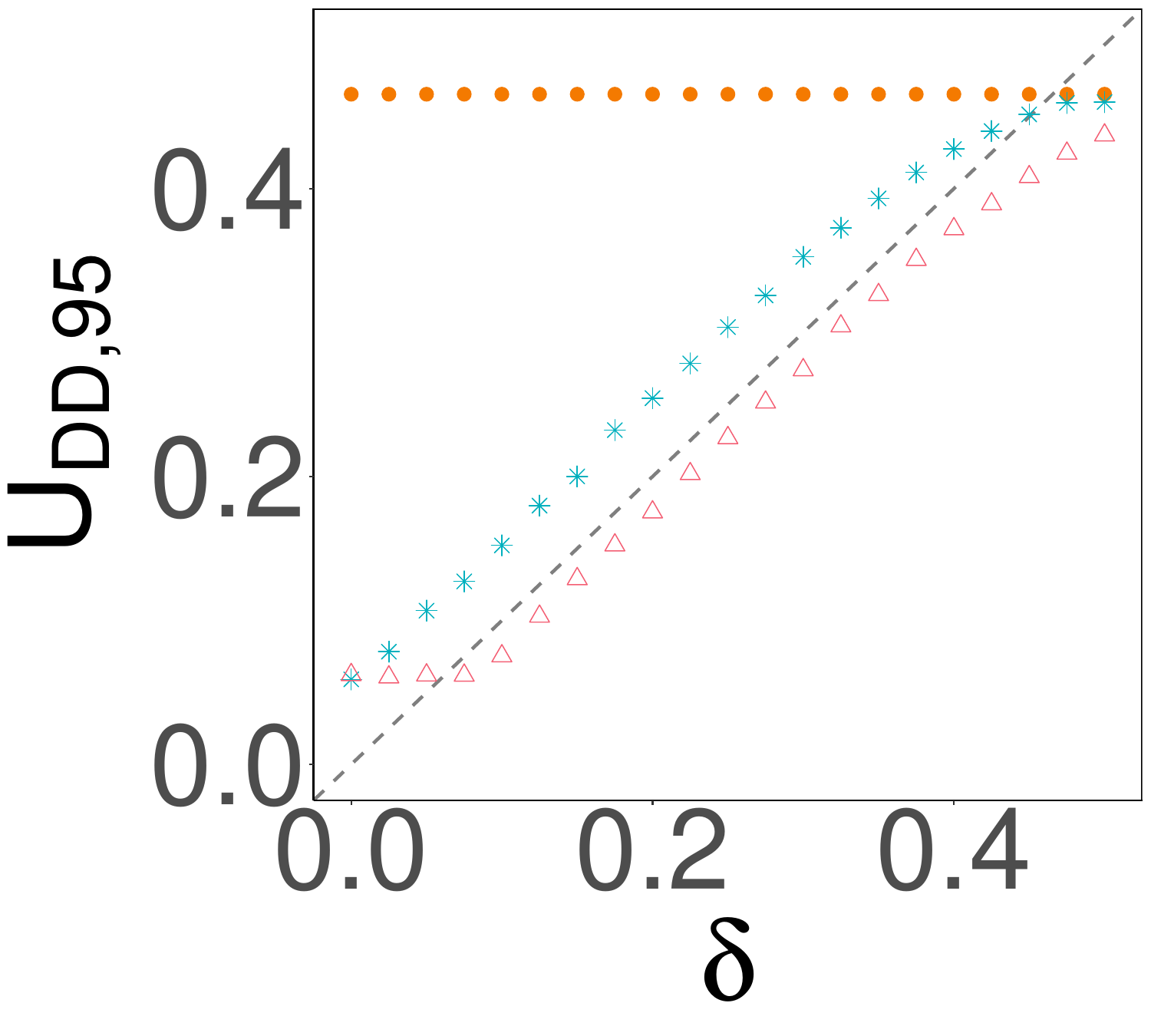} \\
			\hspace{\thisgap}\includegraphics[width=\thiswidth]{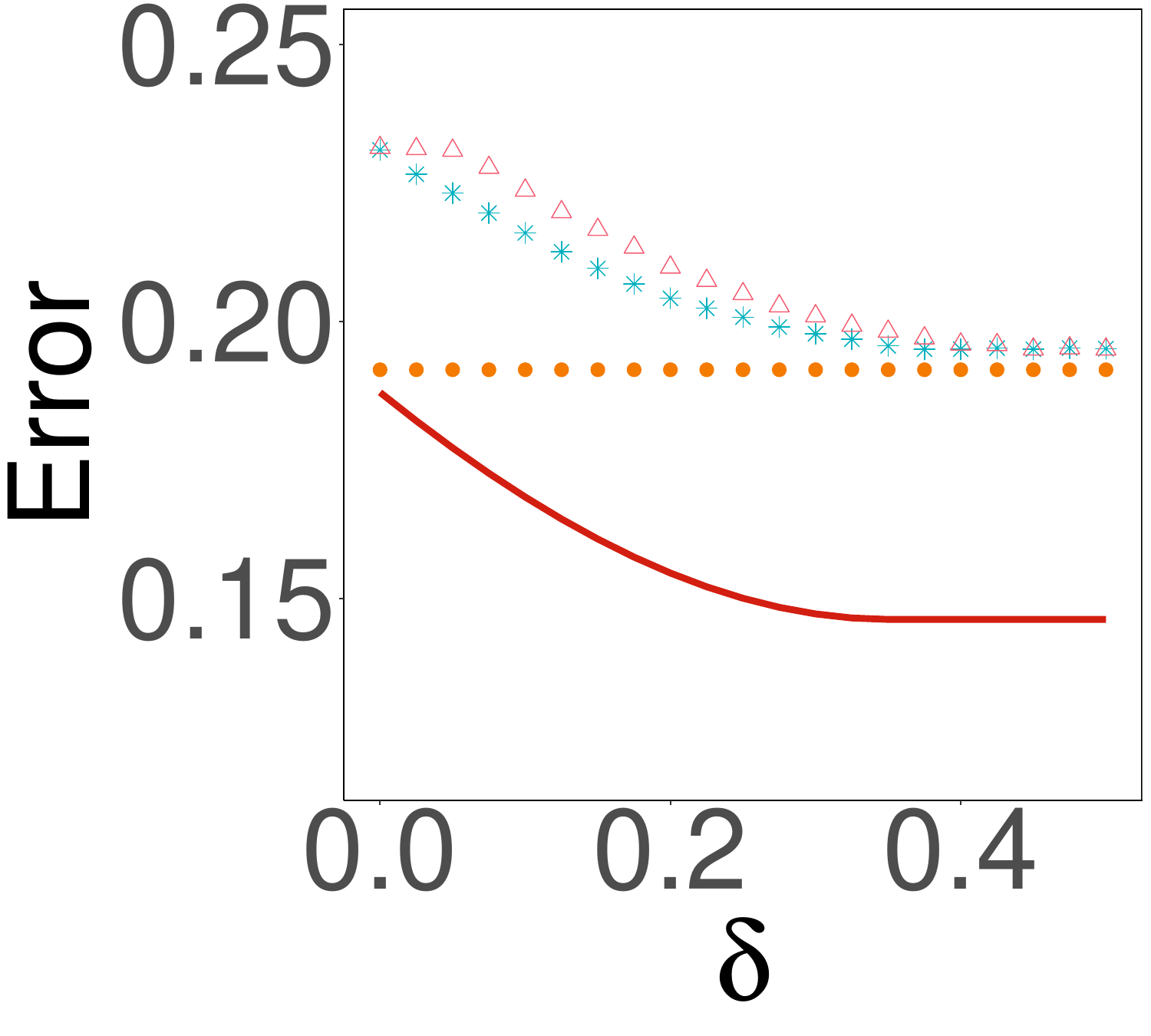} &
			\hspace{\thisgap}\includegraphics[width=\thiswidth]{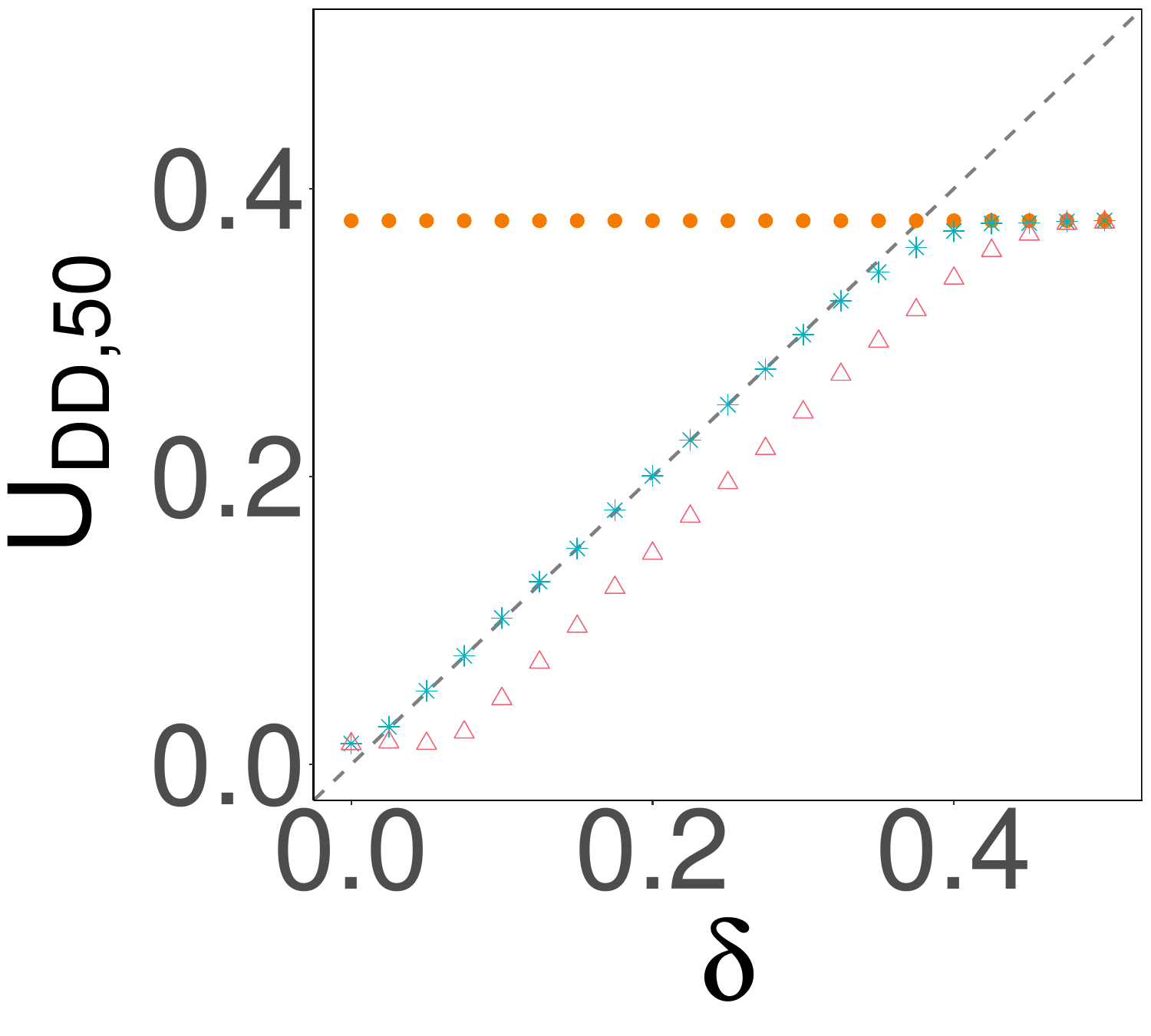} &
			\hspace{\thisgap}\includegraphics[width=\thiswidth]{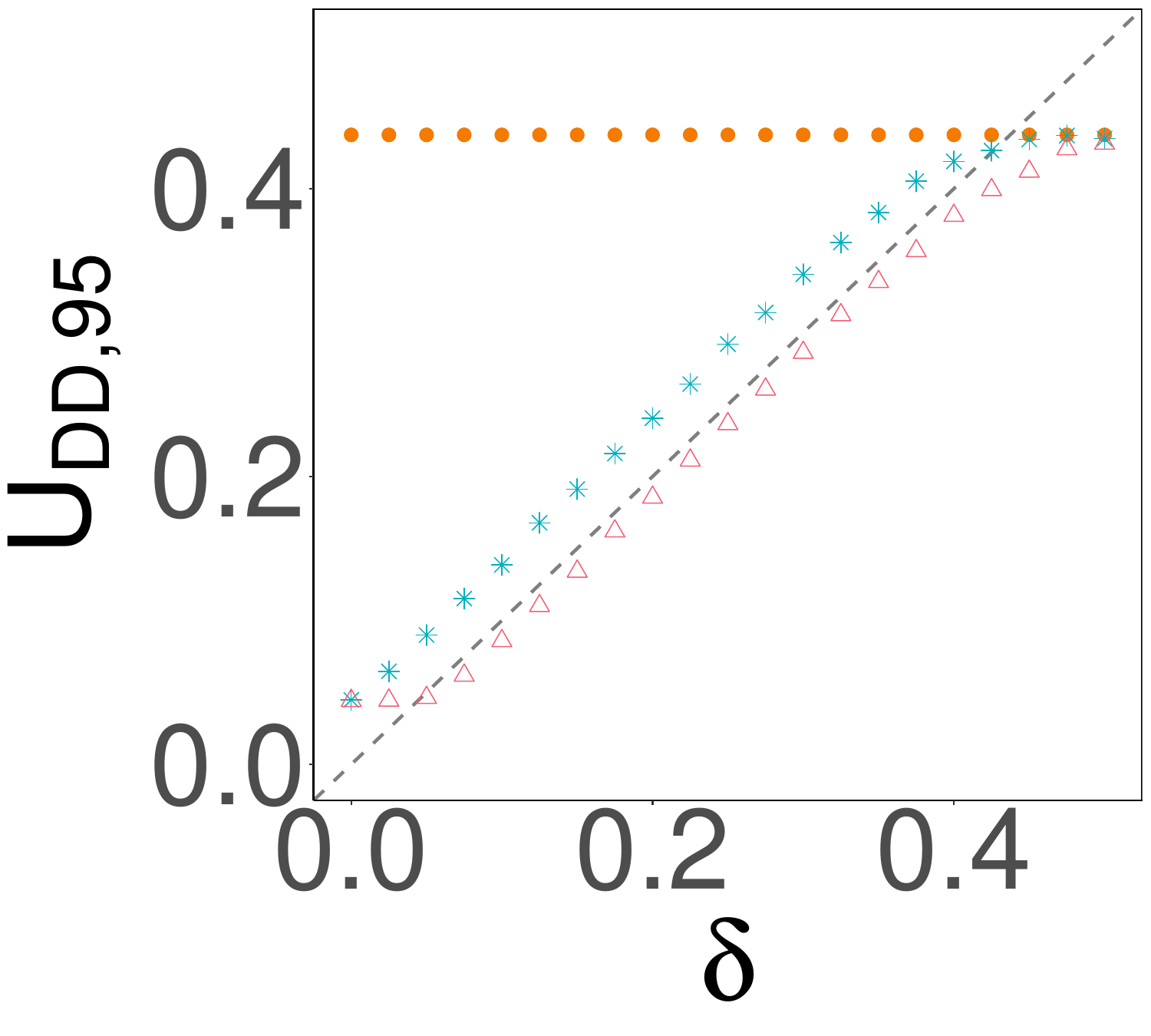} \\
			\hspace{\thisgap}\includegraphics[width=\thiswidth]{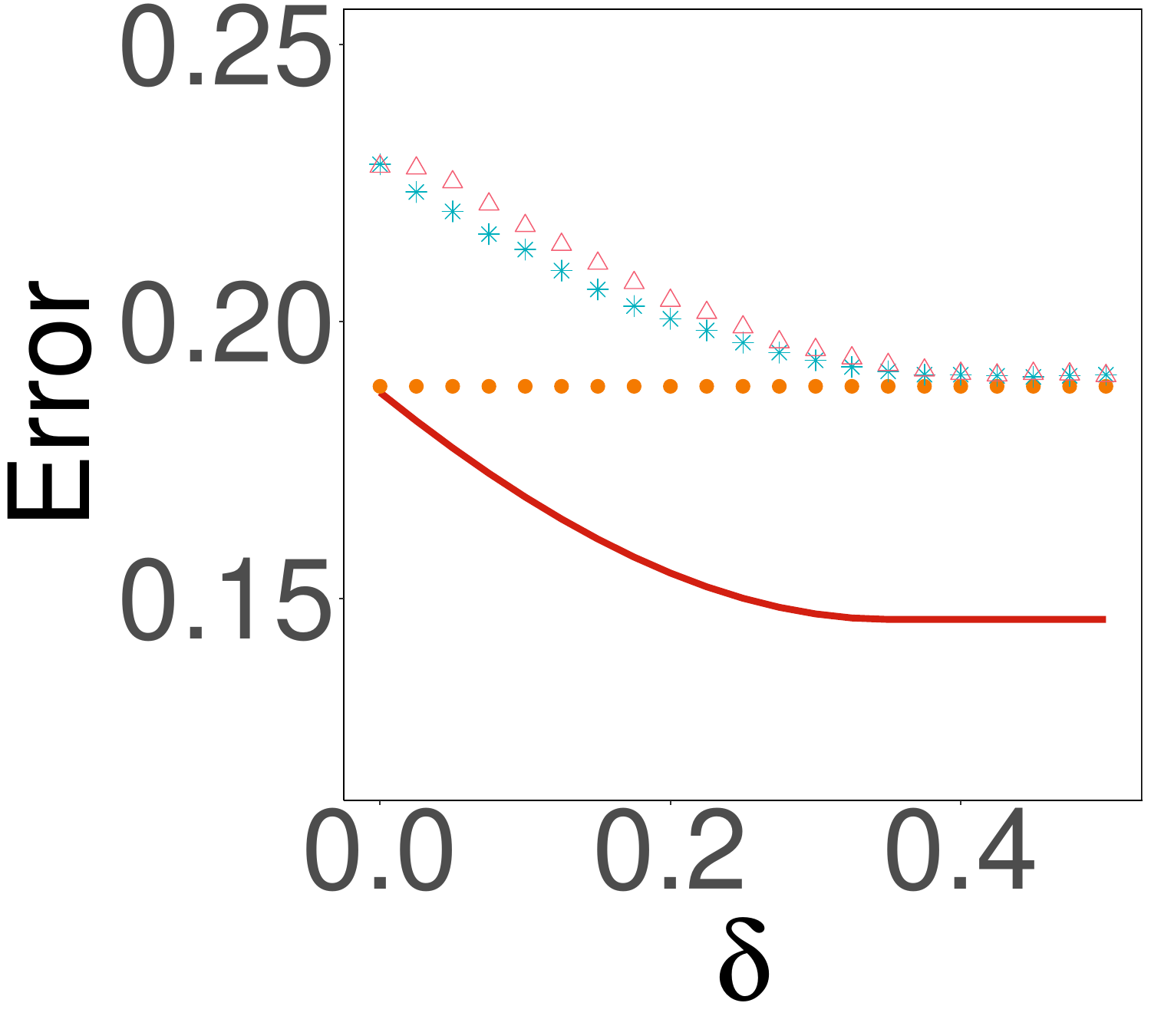} &
			\hspace{\thisgap}\includegraphics[width=\thiswidth]{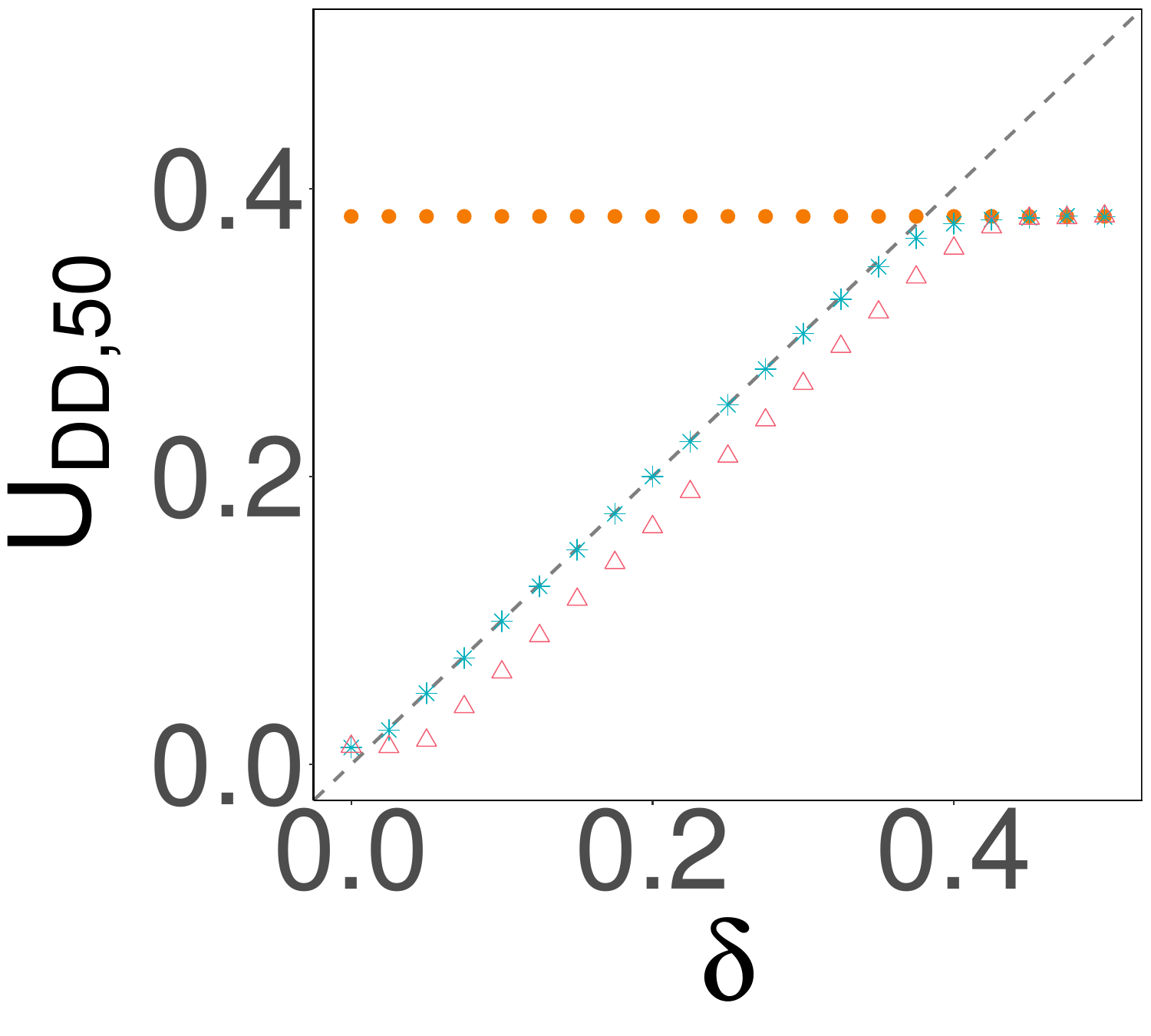} &
			\hspace{\thisgap}\includegraphics[width=\thiswidth]{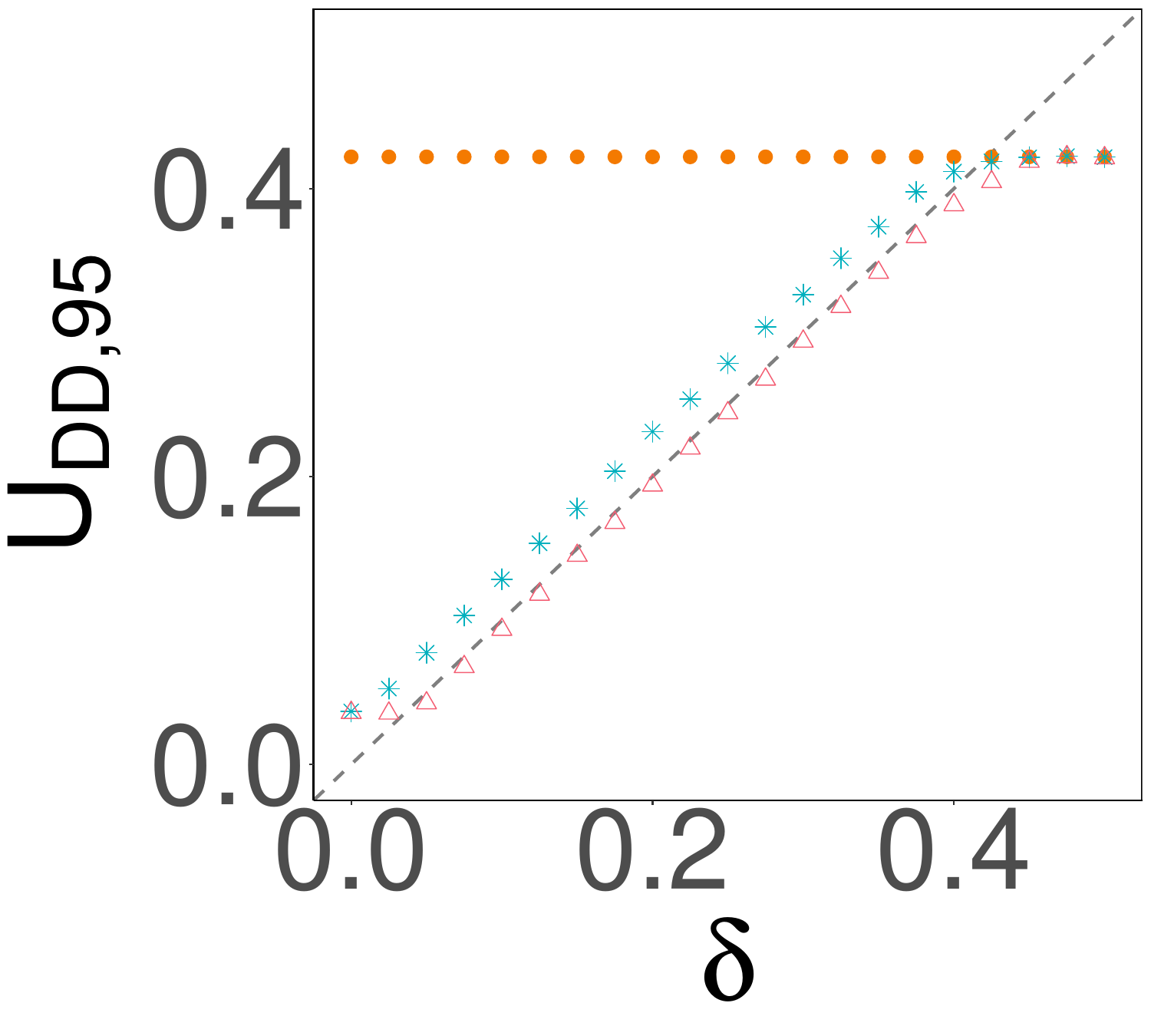}
		\end{tabular}
		\caption{Disparity DD results under the non-Gaussian model, $\beta=1.5$. Top: $n=1000$; middle: $n=2000$; bottom: $n=5000$. }
		\label{fig:DD_unif_beta_1.5}
	\end{center}
\end{figure*}

\newpage
\subsection{Results under perfect classification}\label{sec:perfect}

Functional data exhibit a unique property known as perfect classification, where the classification error can vanish, a phenomenon that does not typically arise in multivariate data.
To evaluate the performance of our algorithm in such scenarios, we consider the Gaussian model in Section \ref{sec:sim} with $\beta=0.5$, under which the signal-to-noise ratio is sufficiently high to mimic the perfect classification regime. For class probabilities, we examine two settings: (I) $\prob(Y=1|A=0)=0.4, \prob(Y=1|A=1)=0.7$; and (II) $\prob(Y=1|A=0)=\prob(Y=1|A=1)=0.5$, while keeping all other model parameters in Section \ref{sec:sim} unchanged. 

As discussed in Remark \ref{rmk:perfect}, under setting (I), the classical unconstrained Bayes classifier is automatically fair with respect to DO and PD. In setting (II),  it is automatically fair with respect to all the three disparity measures DO, PD and DD. To visualize this difference, we plot DD as a function of $\tau$ in Figure \ref{fig:DD_tau_beta_0.5}.

The results are presented in Figures \ref{fig:DO_gauss_beta_0.5}-\ref{fig:DD_gauss_beta_0.5_II}. Under the disparity measures DO and PD in setting (I), the classification errors of the fairness-aware classifiers are nearly zero, and the median disparity levels converge to zero across all values of $\delta$ as $n$ increases. This confirms that our approach naturally reduces to the classical FLDA classifier in such automatically fair cases. In contrast, under DD in setting (I), the fact that $|DD(\tau)| \equiv 0.3$ indicates that it is infeasible to achieve lower disparity levels in this setting. By comparison, in setting (II), the unconstrained Bayes classifier is automatically fair under DD, and the empirical results exhibit a similar pattern to those observed for DO and PD in setting (I).

Overall, in perfect classification cases, our proposed algorithm continues to perform comparably to the oracle fairness-aware Bayes optimal  classifier, further highlighting its effectiveness and adaptability.

\begin{figure*}[!htbp]
	\begin{center}
		\newcommand{\thiswidth}{0.2\linewidth}
		\newcommand{\thisgap}{0mm}
		\begin{tabular}{cc}
			\hspace{\thisgap}\includegraphics[width=\thiswidth]{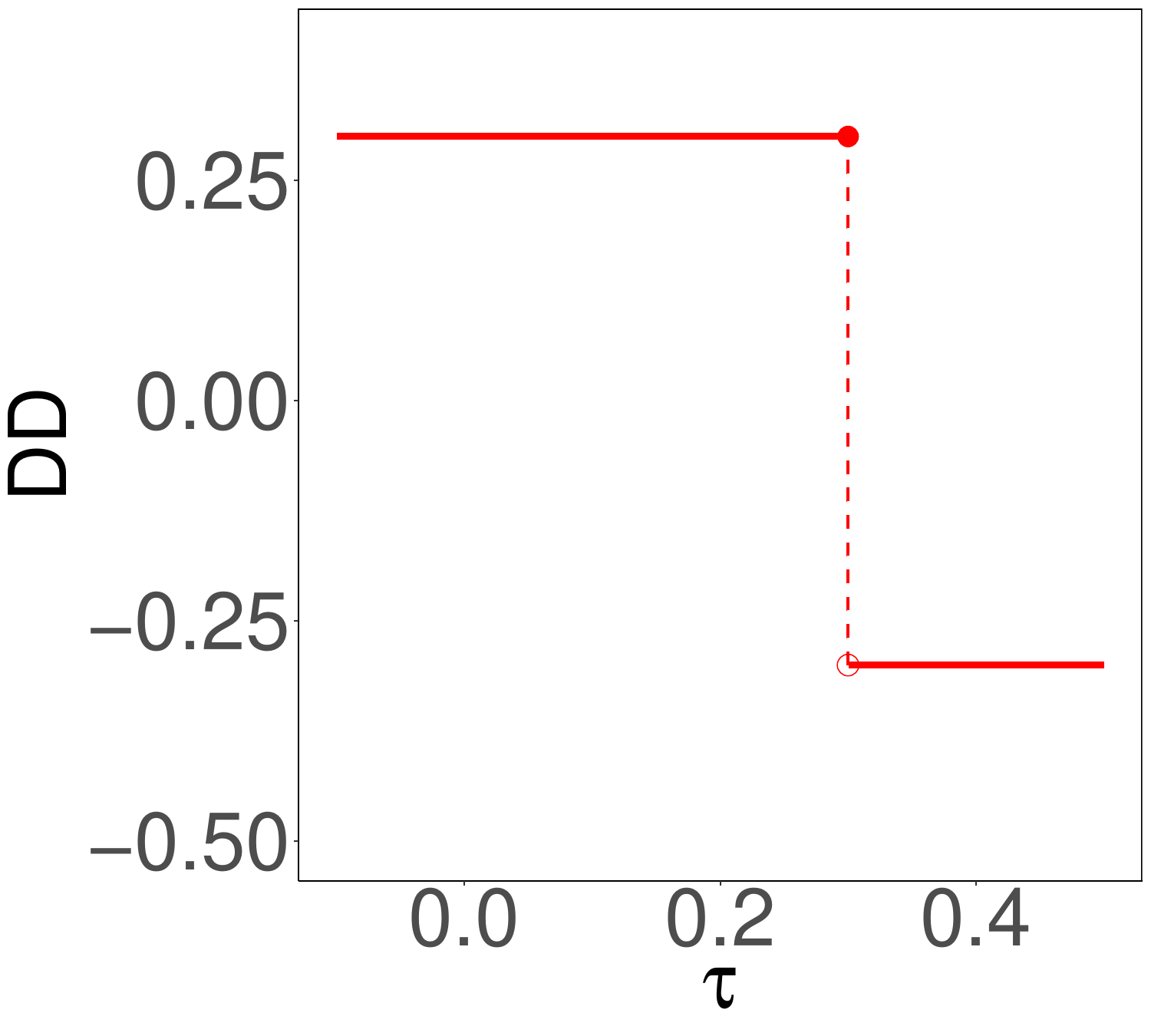} &
			\hspace{\thisgap}\includegraphics[width=\thiswidth]{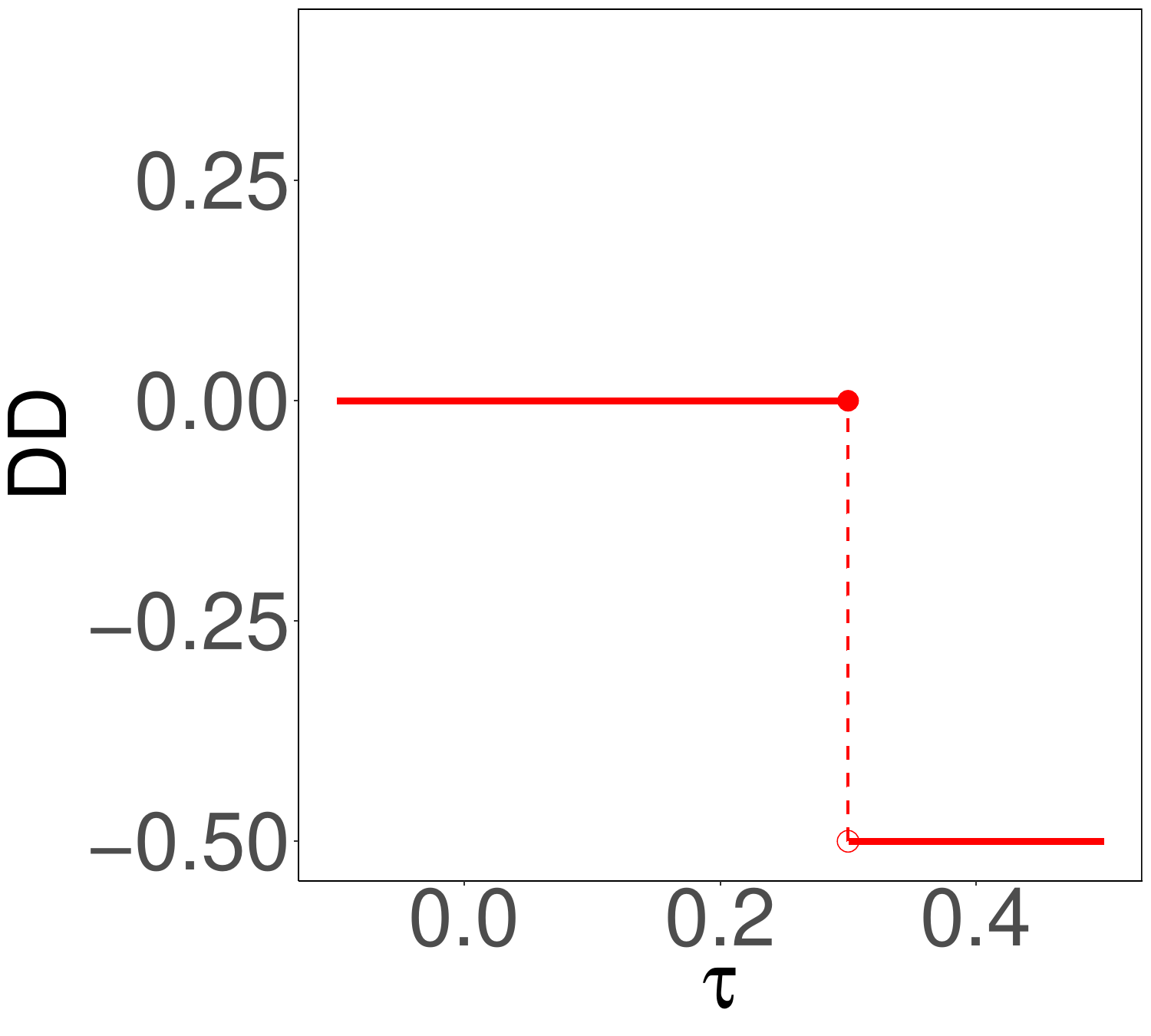} 
		\end{tabular}
		\caption{Oracle disparity DD versus $\tau$. Left: (I); right: (II).}
		\label{fig:DD_tau_beta_0.5}
	\end{center}
\end{figure*}

\begin{figure*}[!htbp]
	\begin{center}
		\newcommand{\thiswidth}{0.2\linewidth}
		\newcommand{\thisgap}{0mm}
		\begin{tabular}{ccc}
			\hspace{\thisgap}\includegraphics[width=\thiswidth]{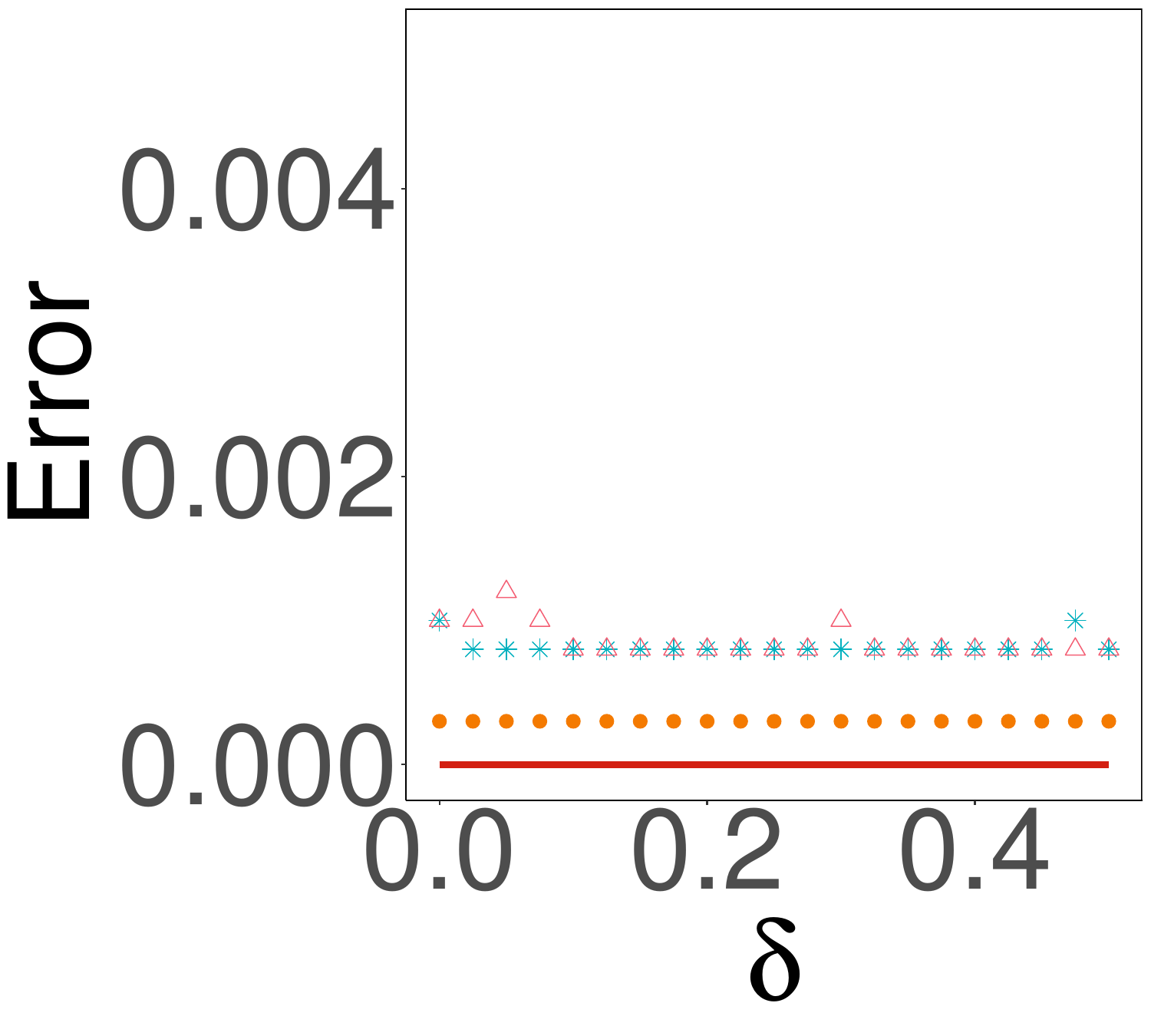} &
			\hspace{\thisgap}\includegraphics[width=\thiswidth]{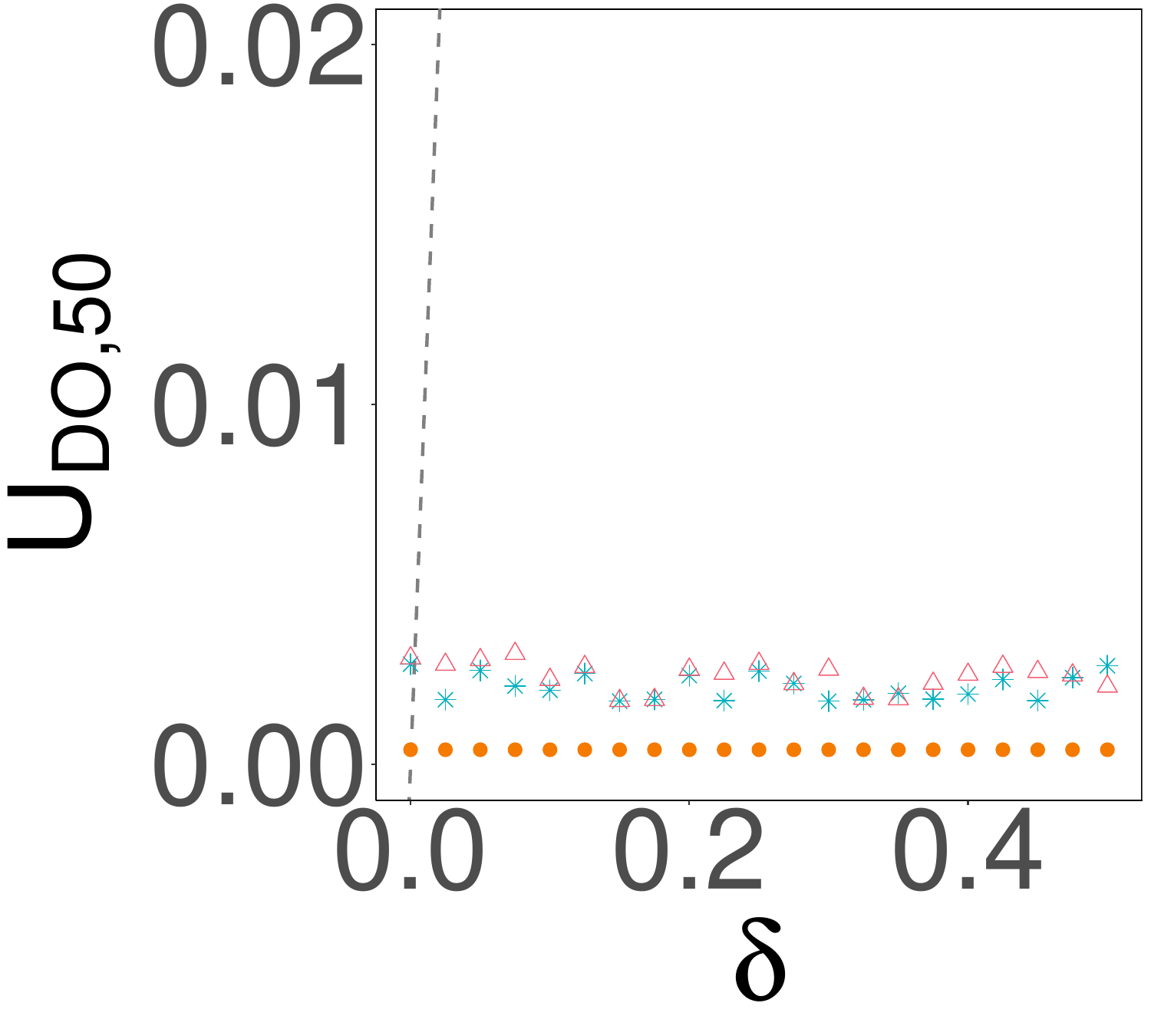} &
			\hspace{\thisgap}\includegraphics[width=\thiswidth]{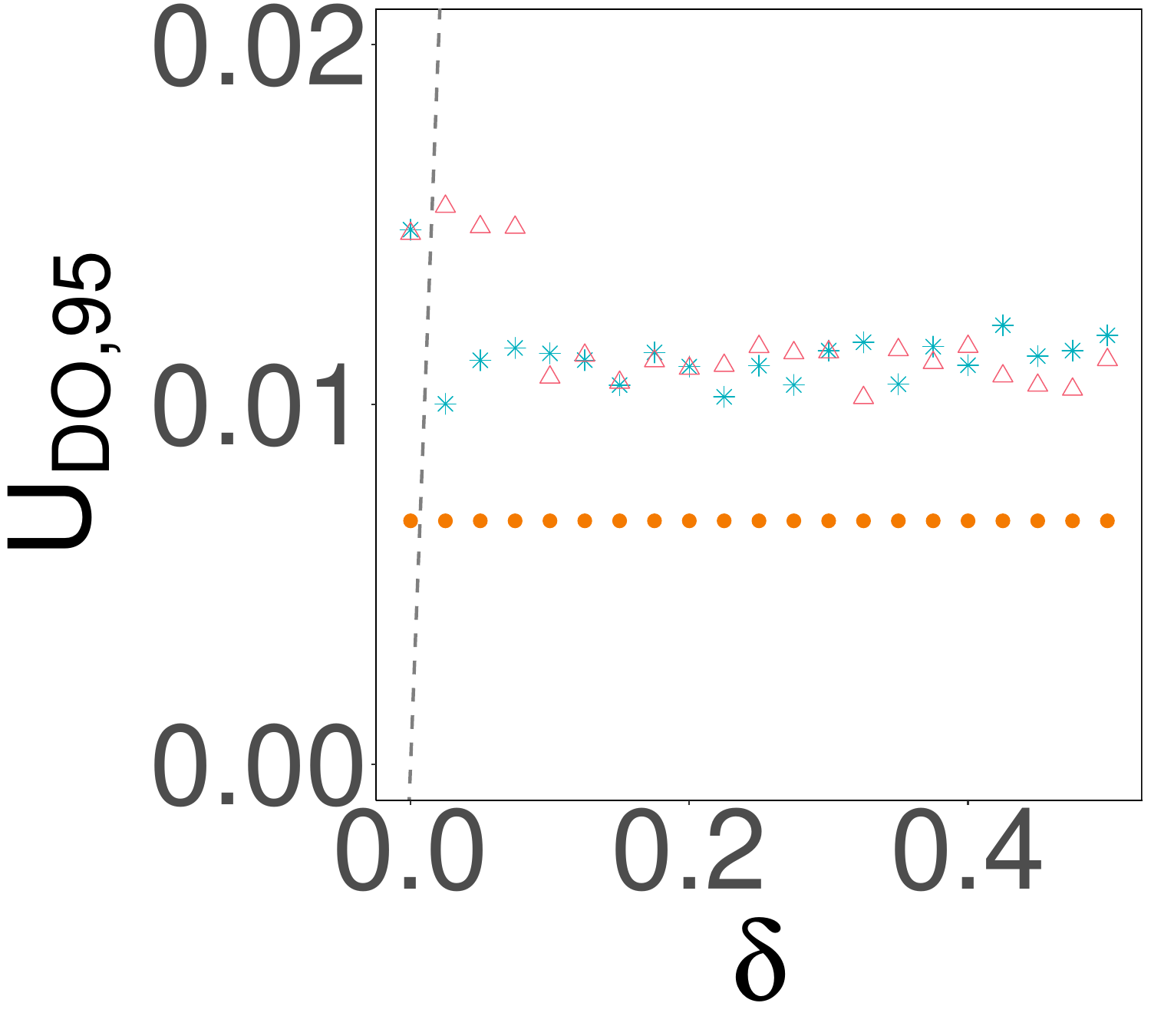} \\
			\hspace{\thisgap}\includegraphics[width=\thiswidth]{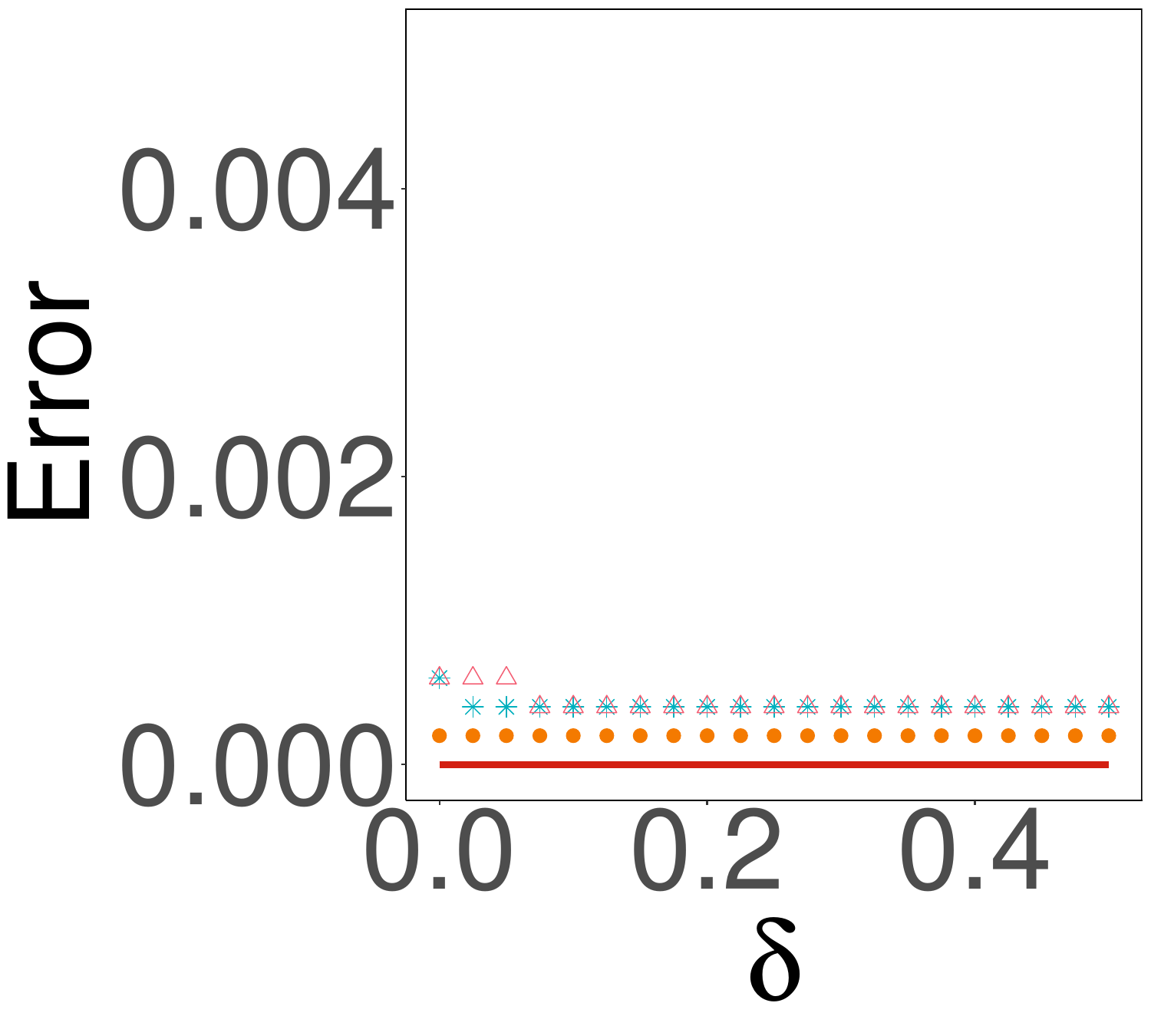} &
			\hspace{\thisgap}\includegraphics[width=\thiswidth]{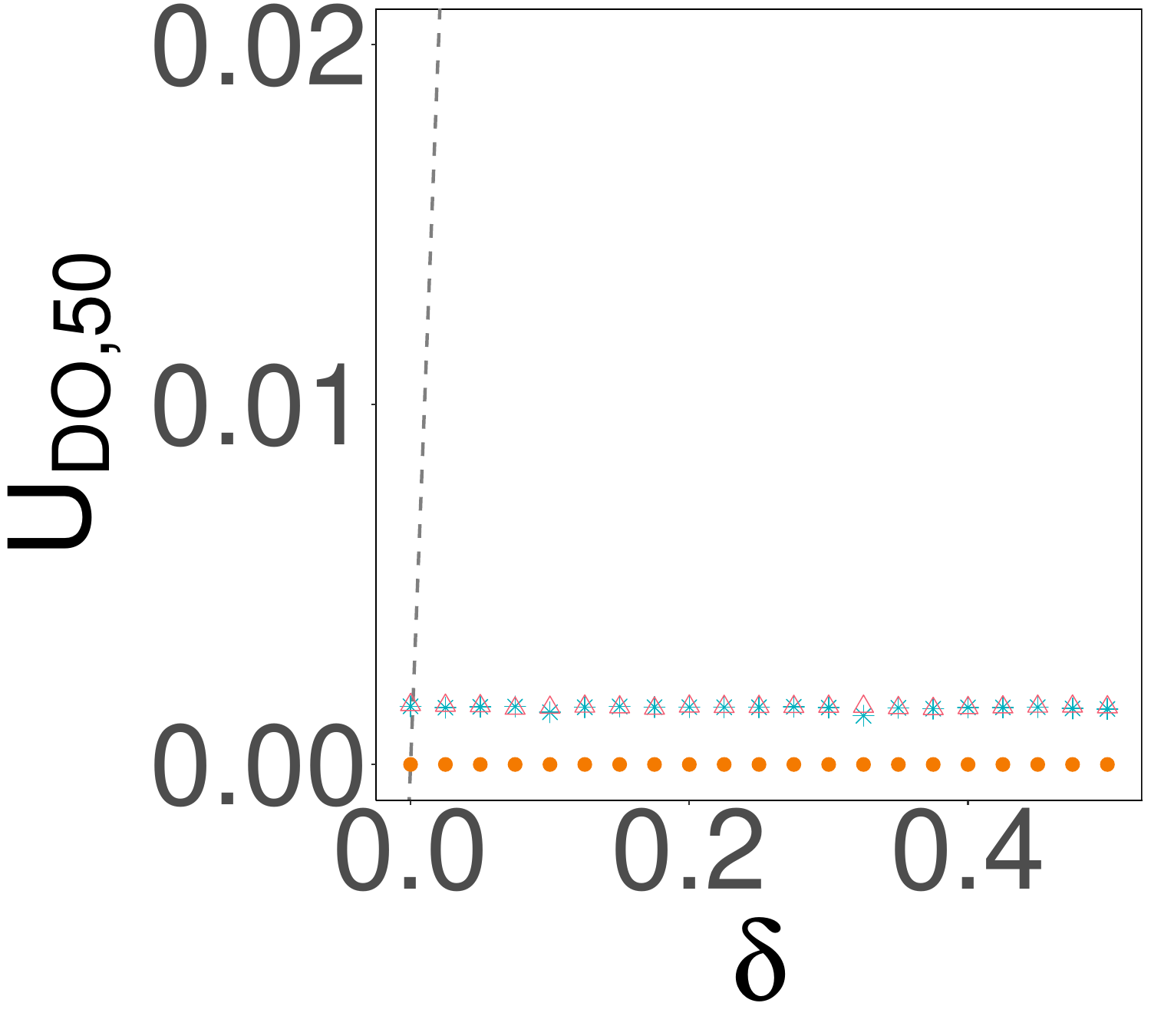} &
			\hspace{\thisgap}\includegraphics[width=\thiswidth]{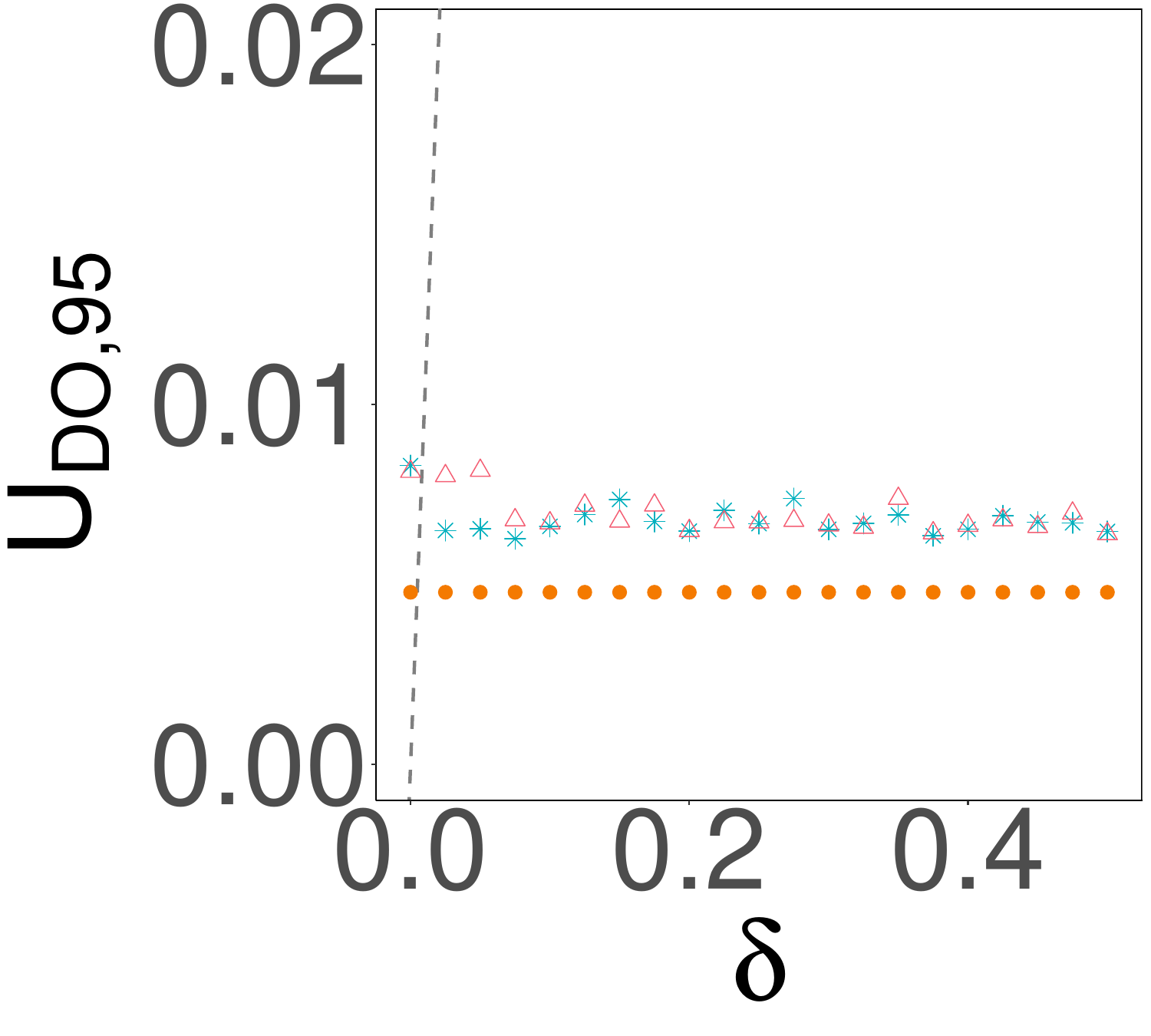} \\
			\hspace{\thisgap}\includegraphics[width=\thiswidth]{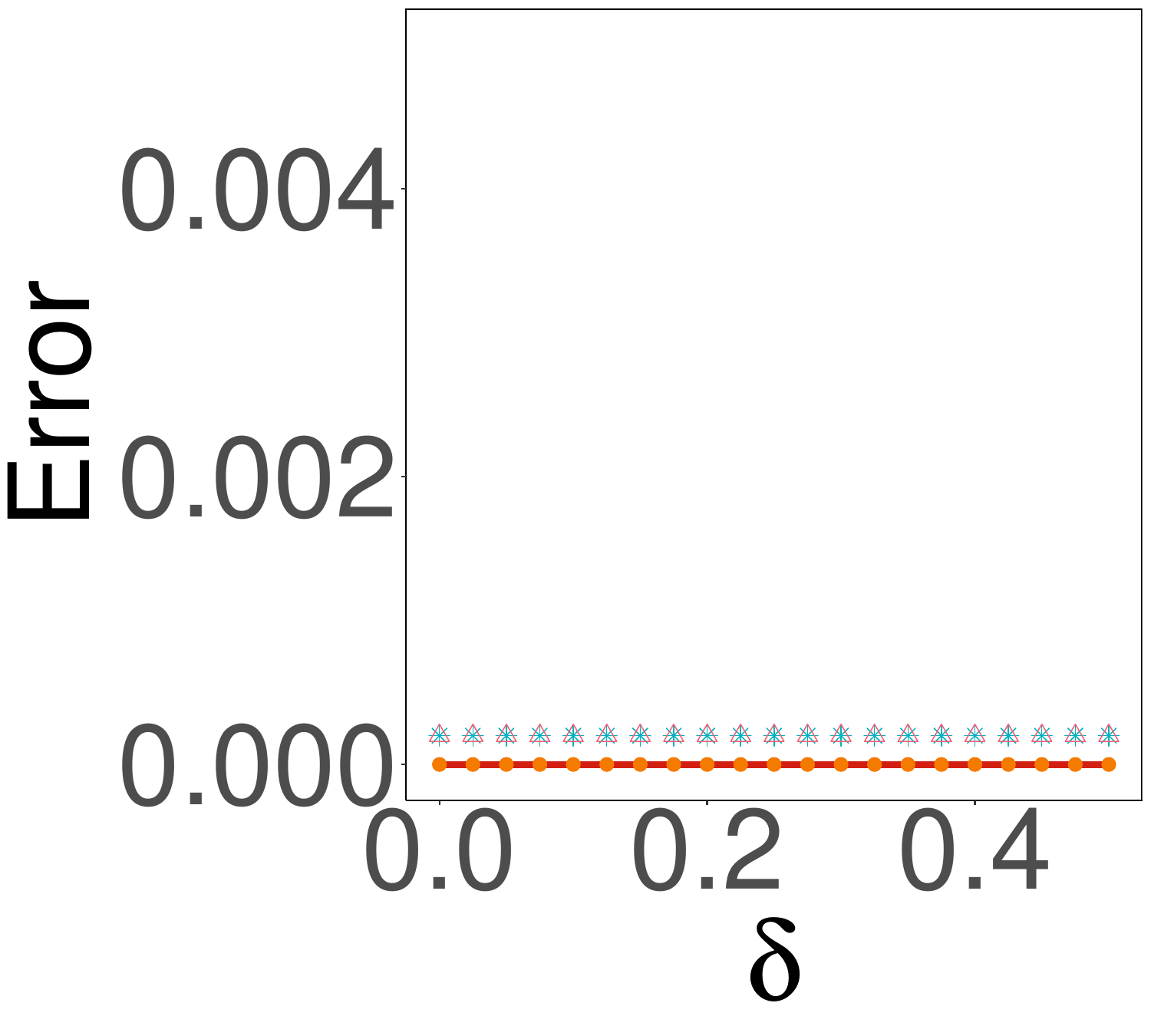} &
			\hspace{\thisgap}\includegraphics[width=\thiswidth]{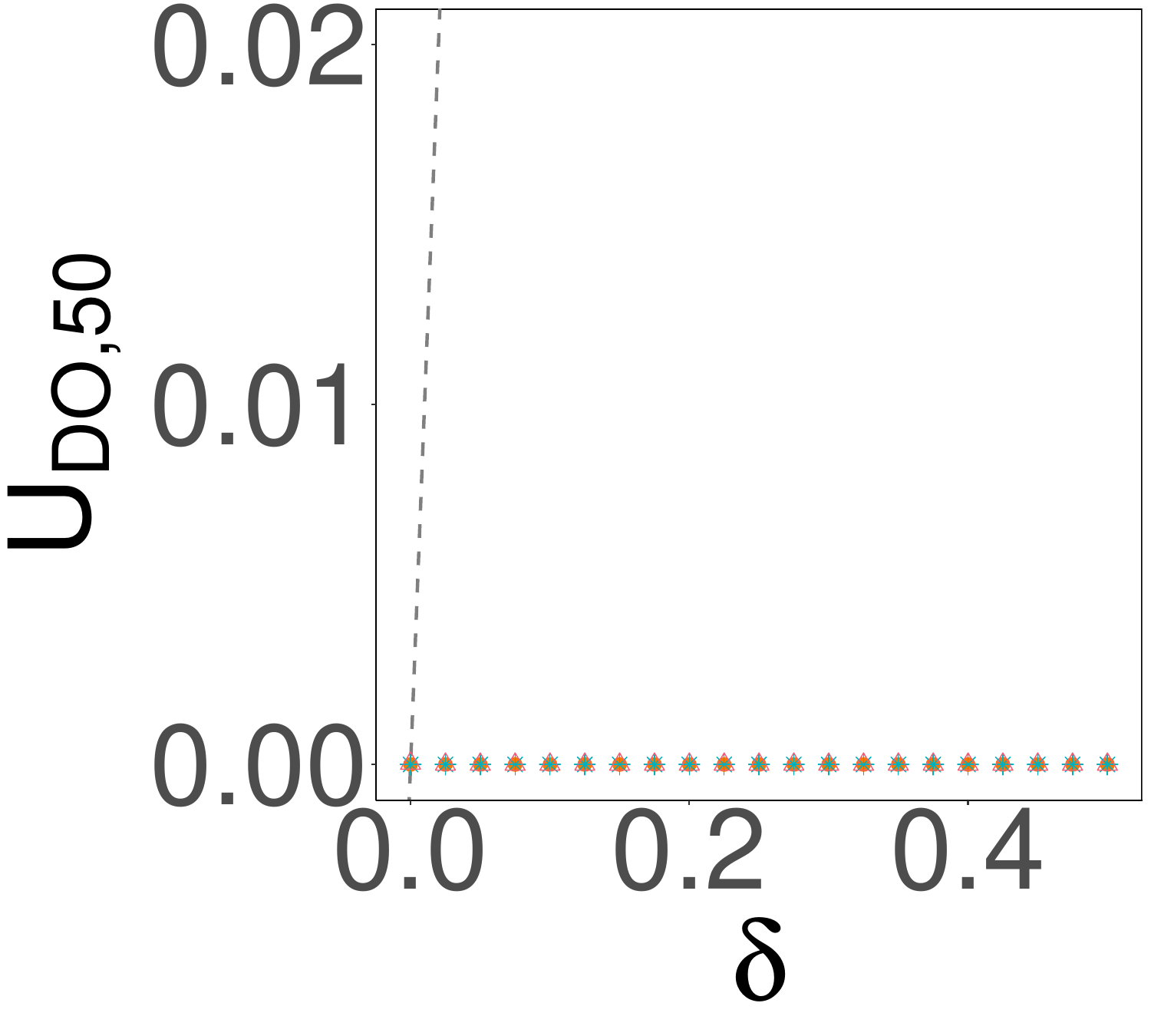} &
			\hspace{\thisgap}\includegraphics[width=\thiswidth]{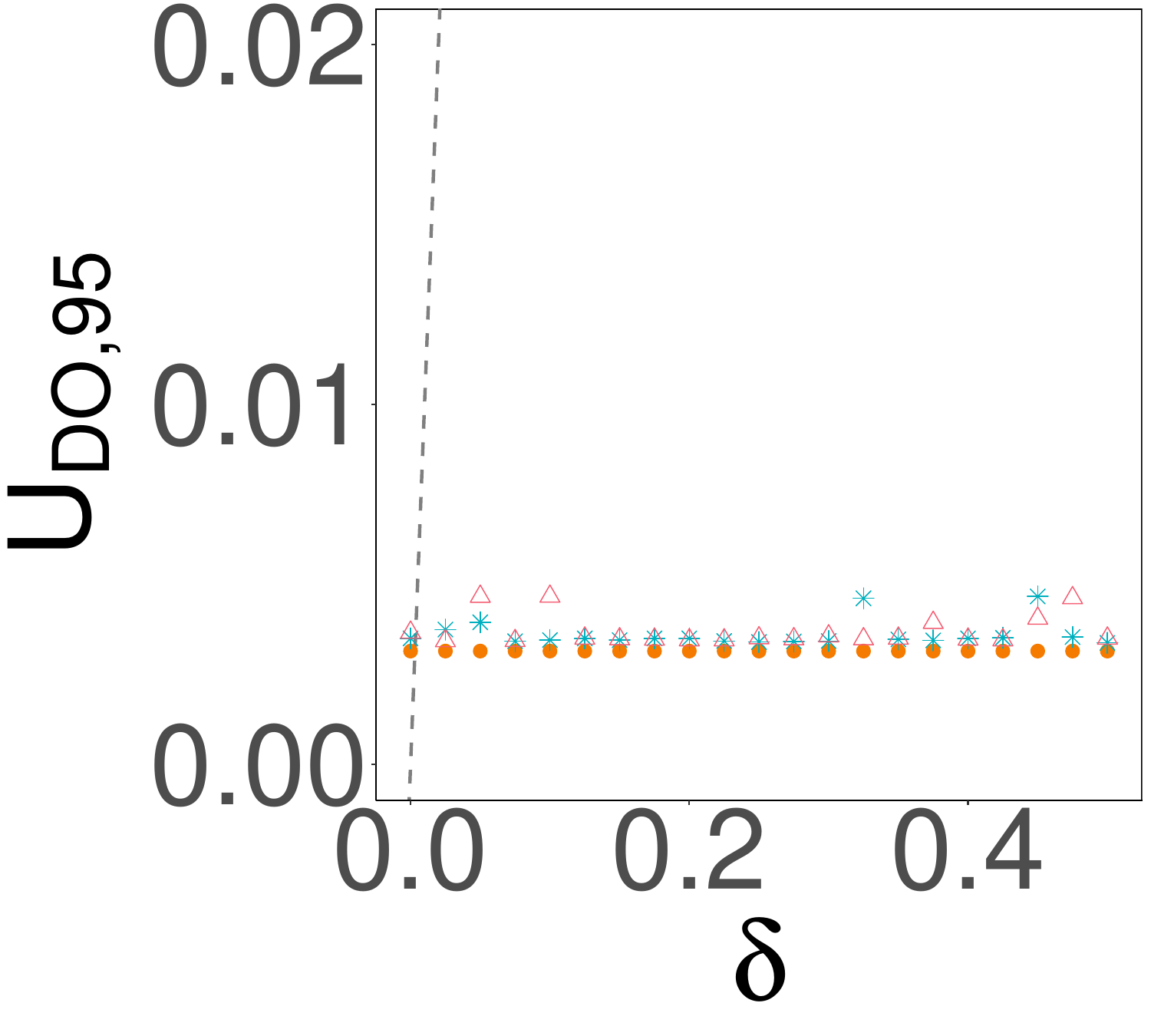}
		\end{tabular}
		\caption{Disparity DO results under (I). Top: $n=1000$; middle: $n=2000$; bottom: $n=5000$.}
		\label{fig:DO_gauss_beta_0.5}
	\end{center}
\end{figure*}

\begin{figure*}[!htbp]
	\begin{center}
		\newcommand{\thiswidth}{0.2\linewidth}
		\newcommand{\thisgap}{0mm}
		\begin{tabular}{ccc}
			\hspace{\thisgap}\includegraphics[width=\thiswidth]{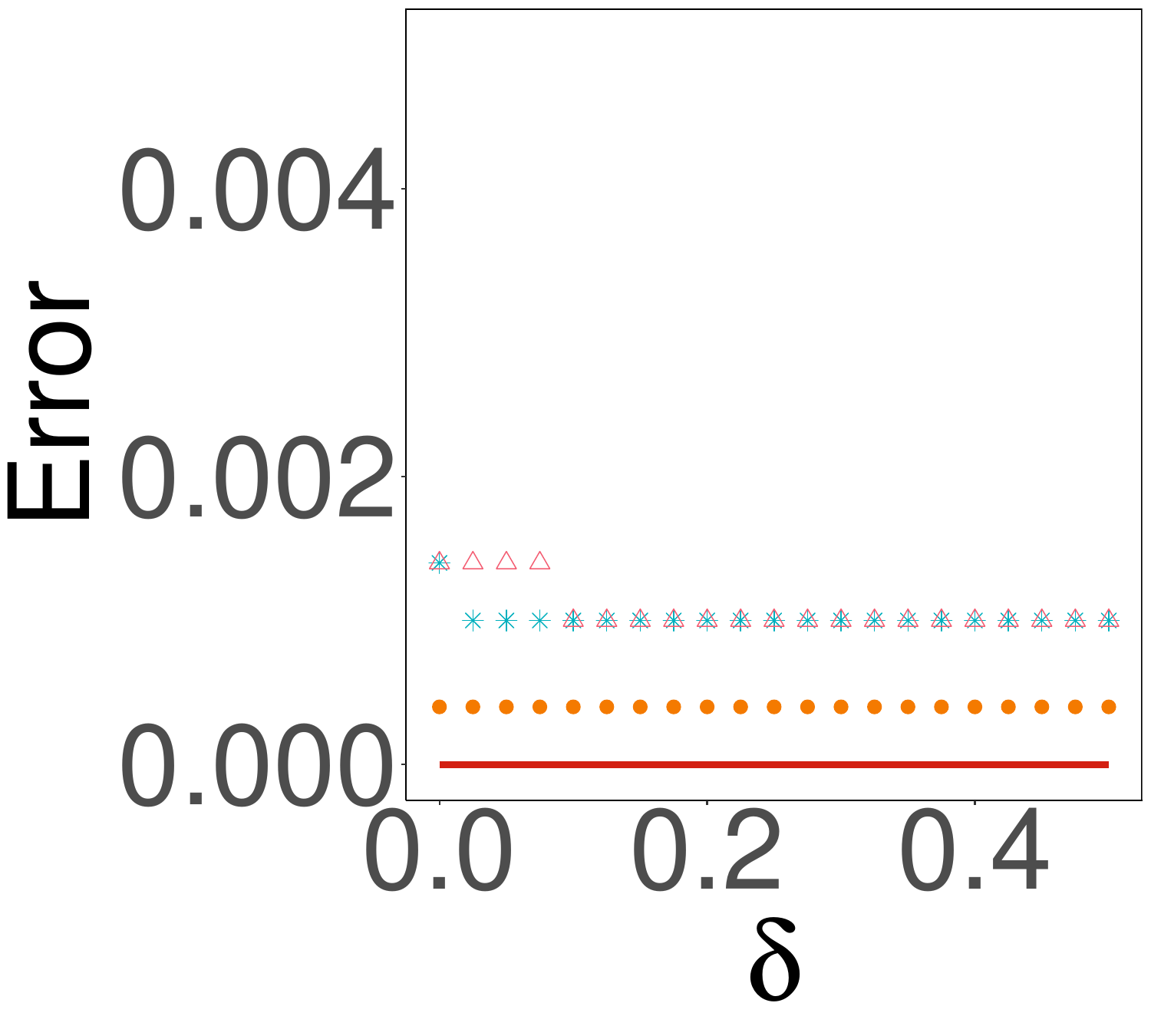} &
			\hspace{\thisgap}\includegraphics[width=\thiswidth]{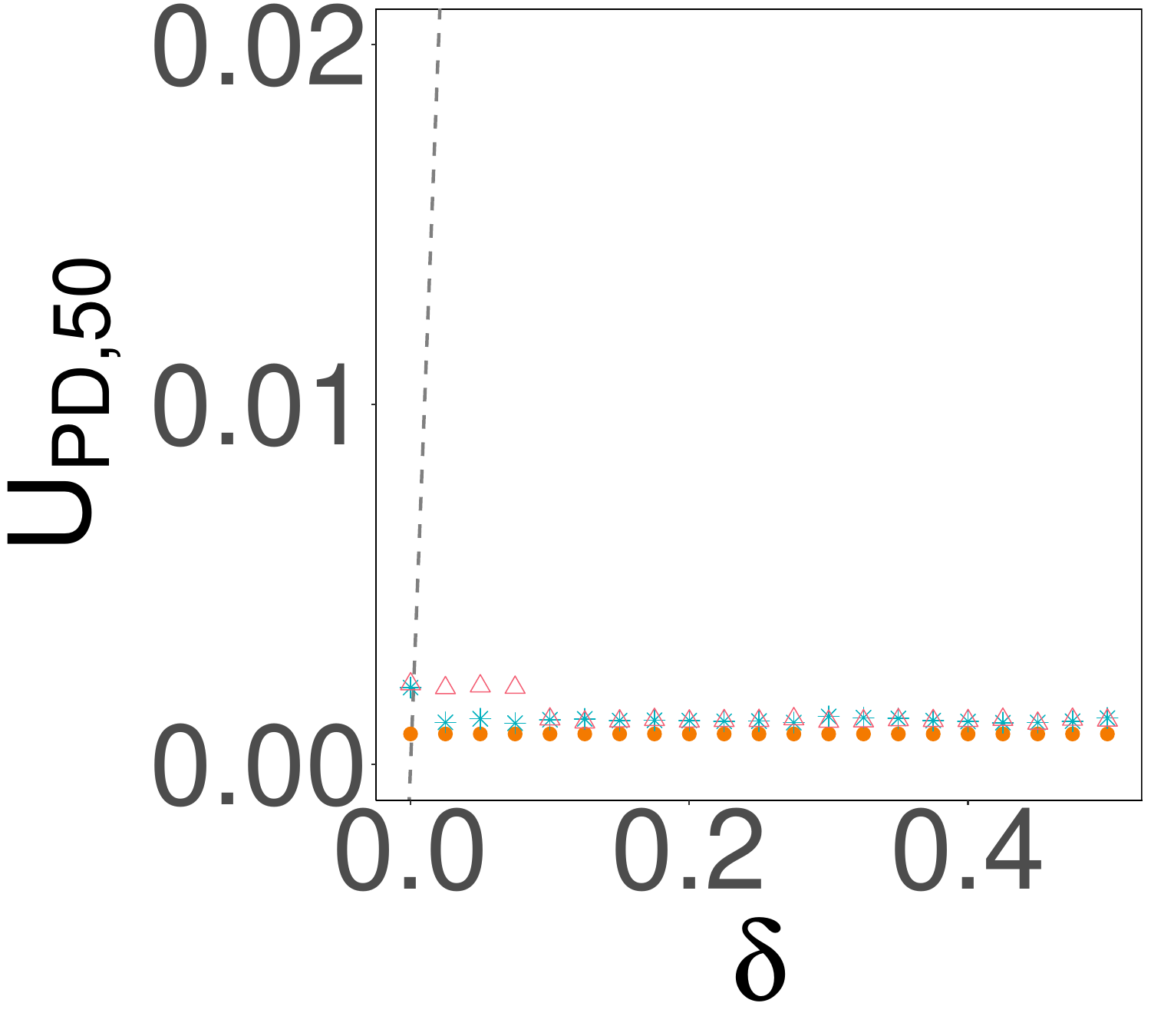} &
			\hspace{\thisgap}\includegraphics[width=\thiswidth]{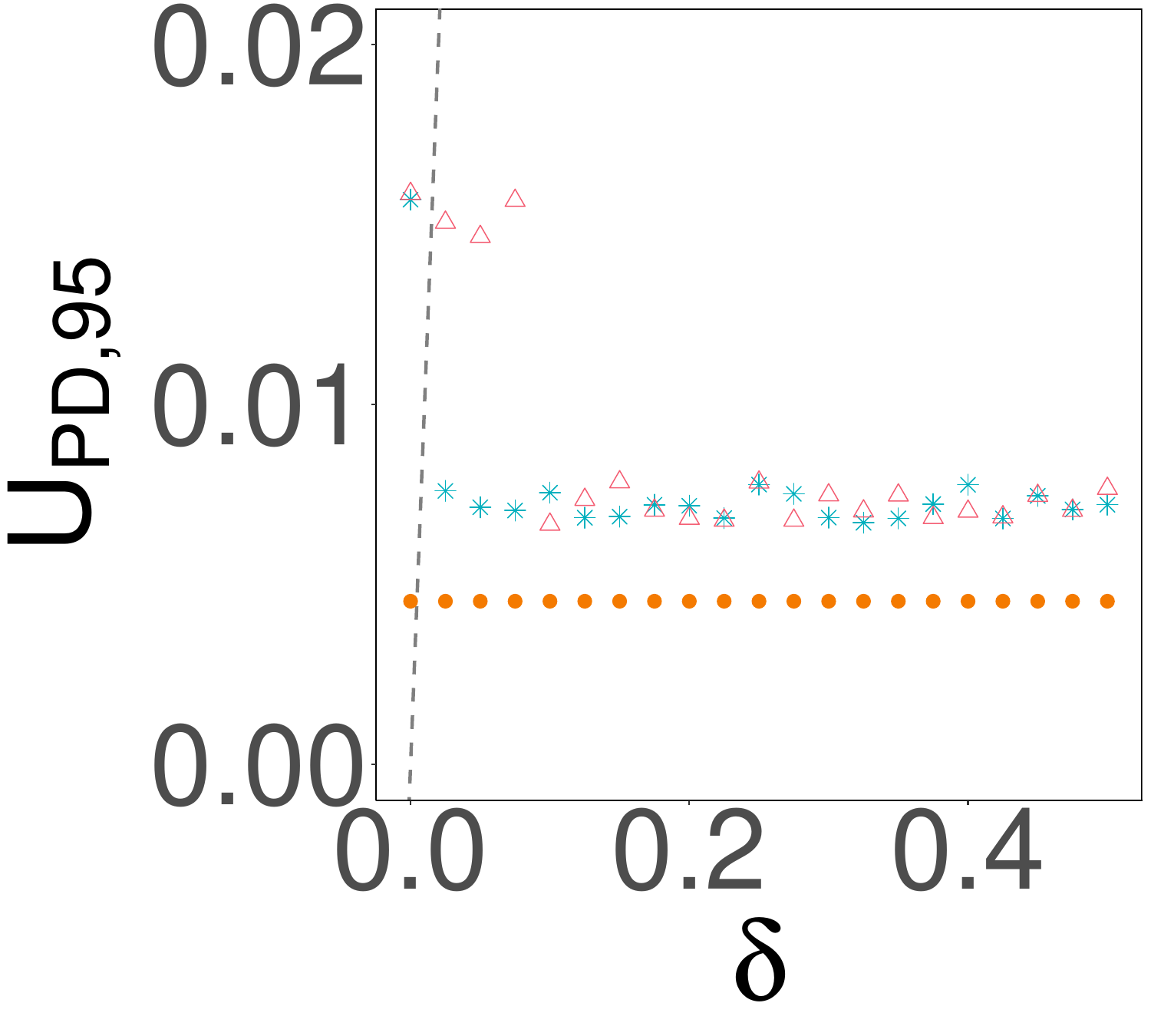} \\
			\hspace{\thisgap}\includegraphics[width=\thiswidth]{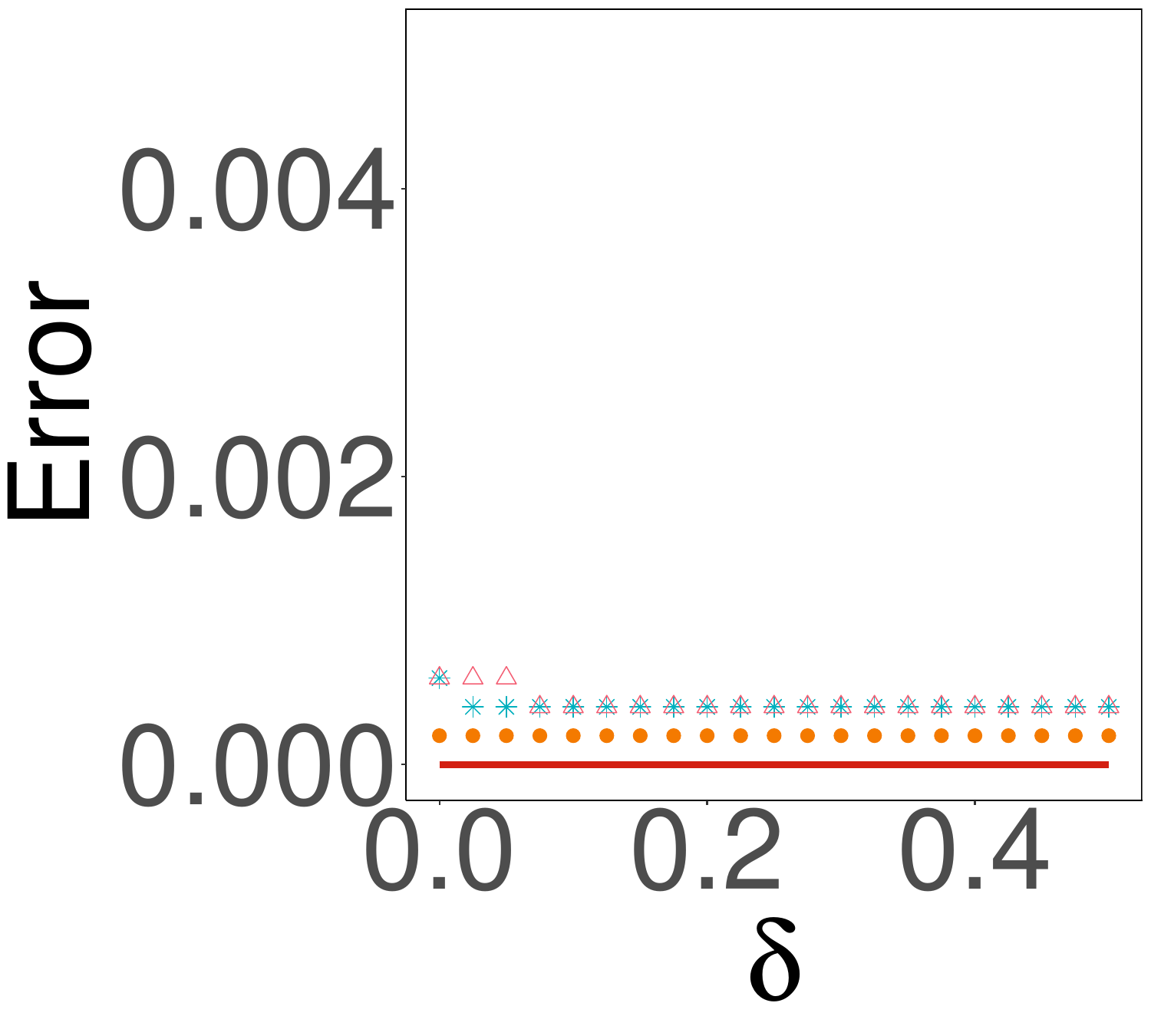} &
			\hspace{\thisgap}\includegraphics[width=\thiswidth]{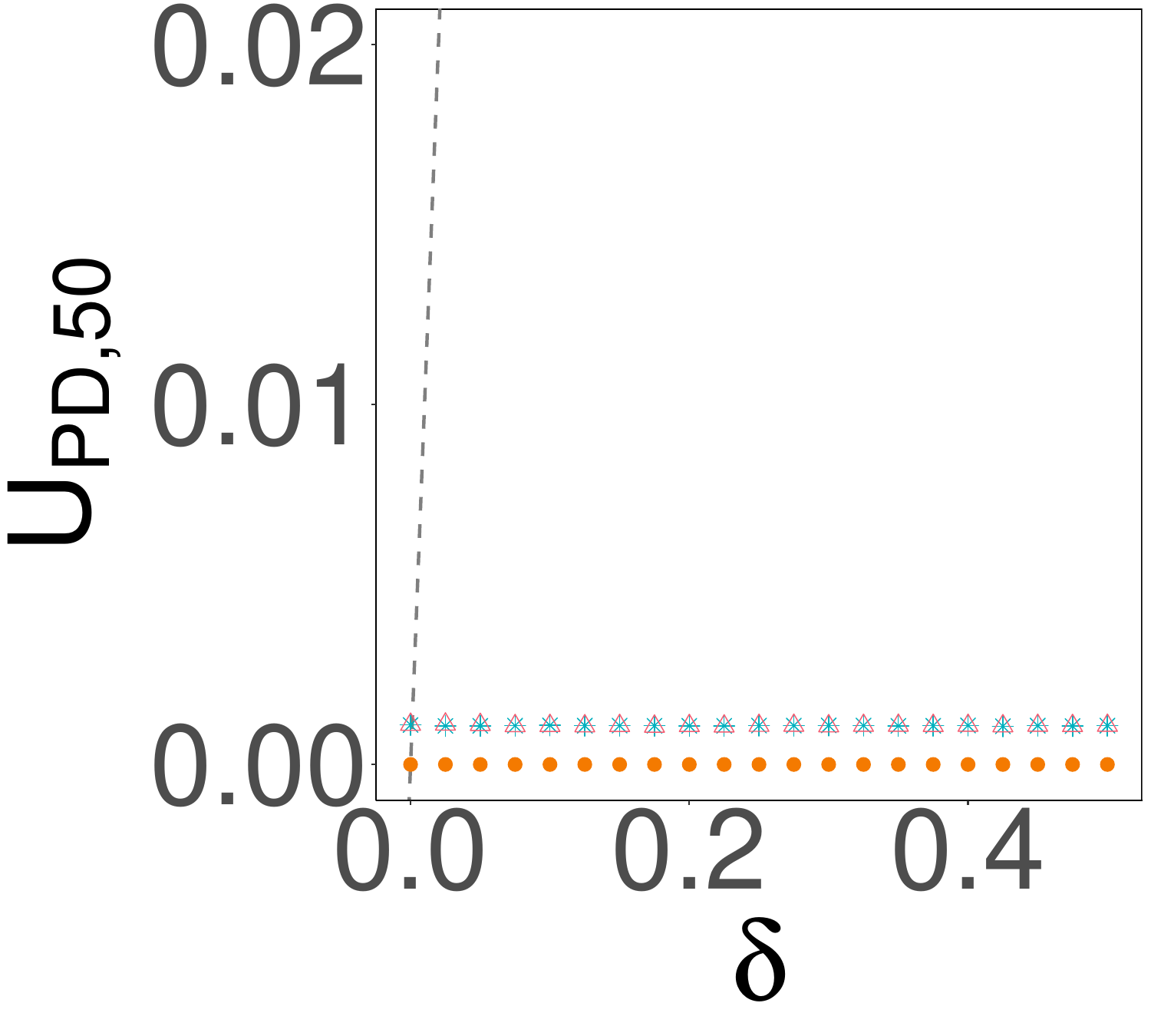} &
			\hspace{\thisgap}\includegraphics[width=\thiswidth]{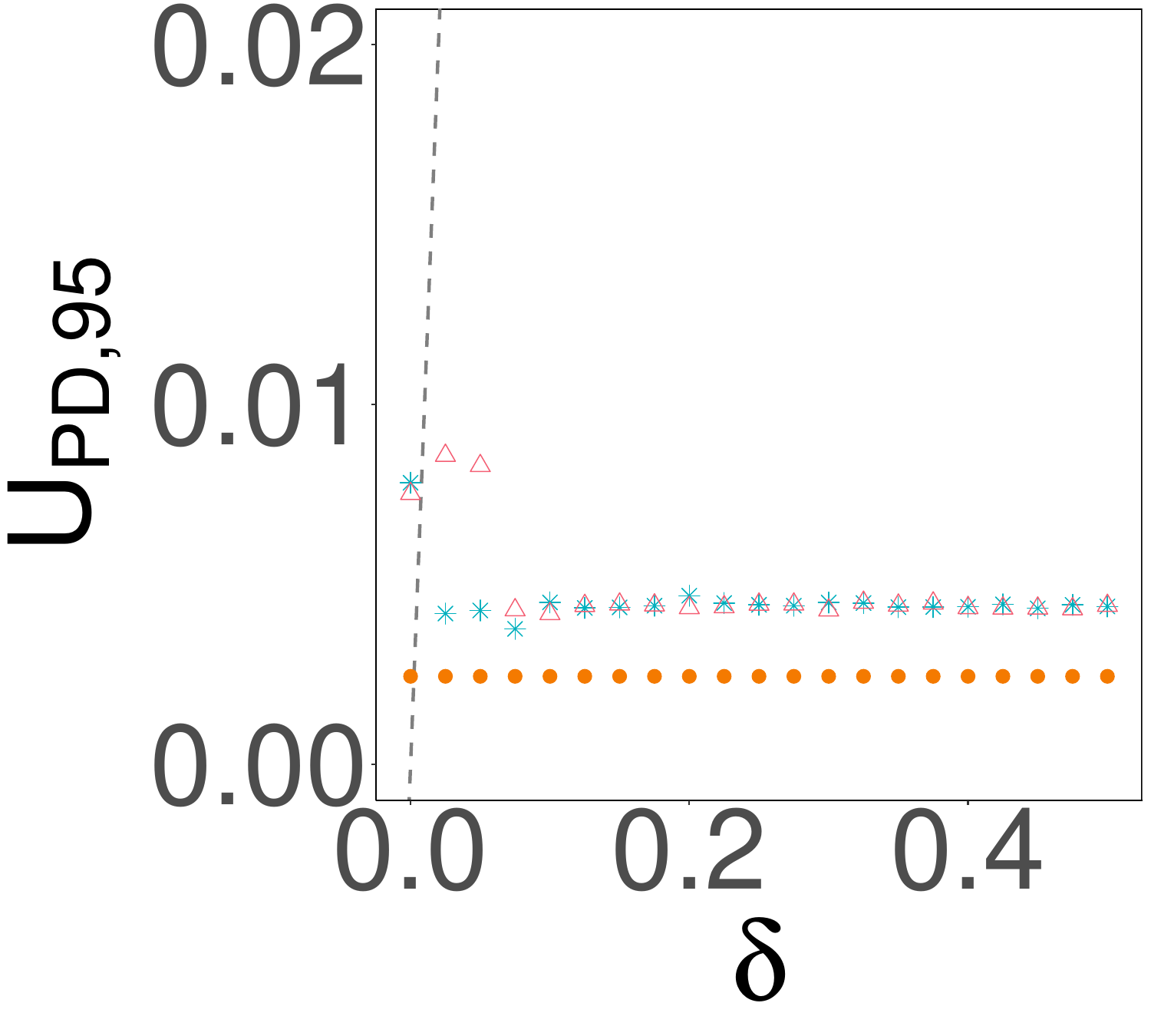} \\
			\hspace{\thisgap}\includegraphics[width=\thiswidth]{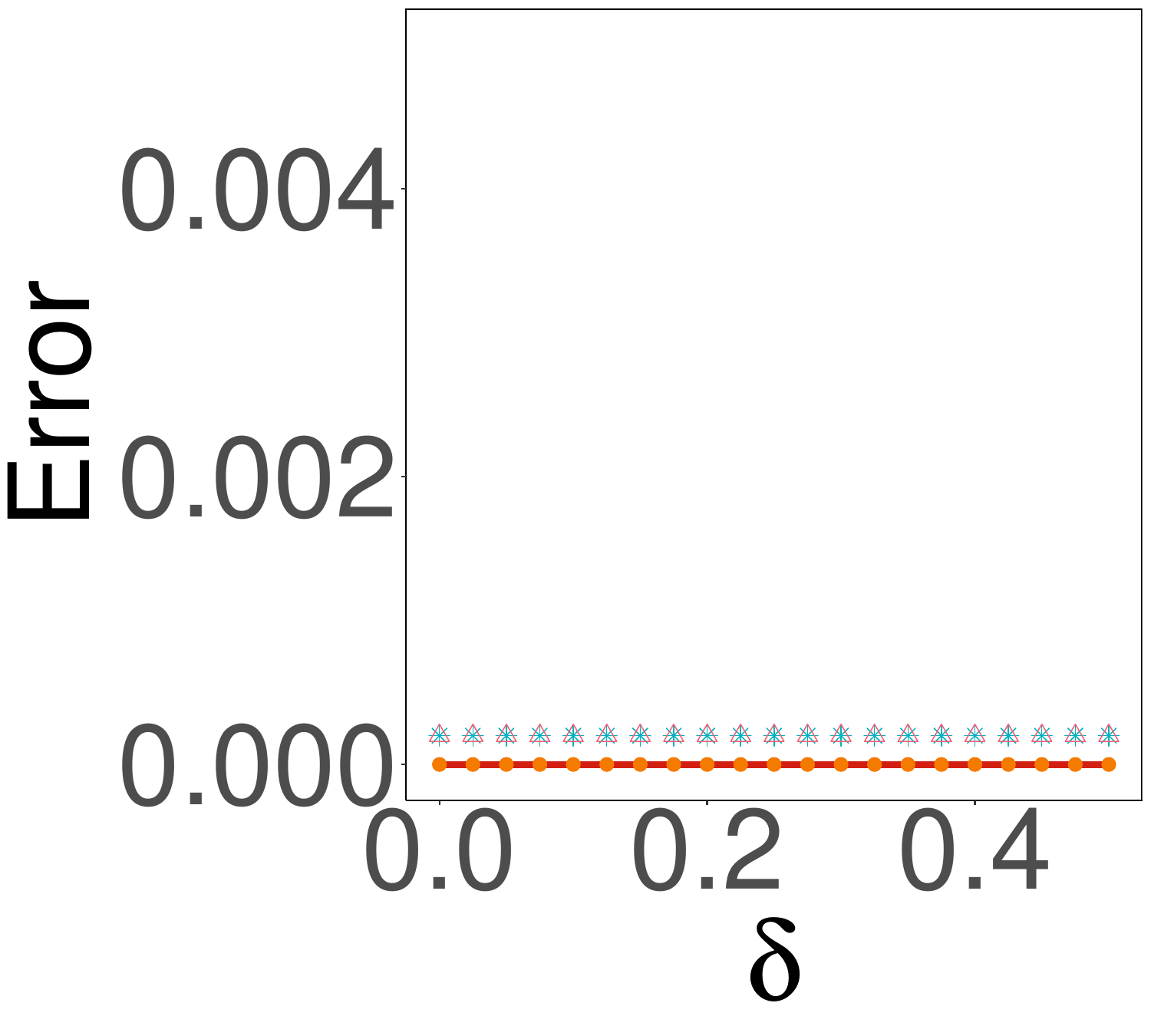} &
			\hspace{\thisgap}\includegraphics[width=\thiswidth]{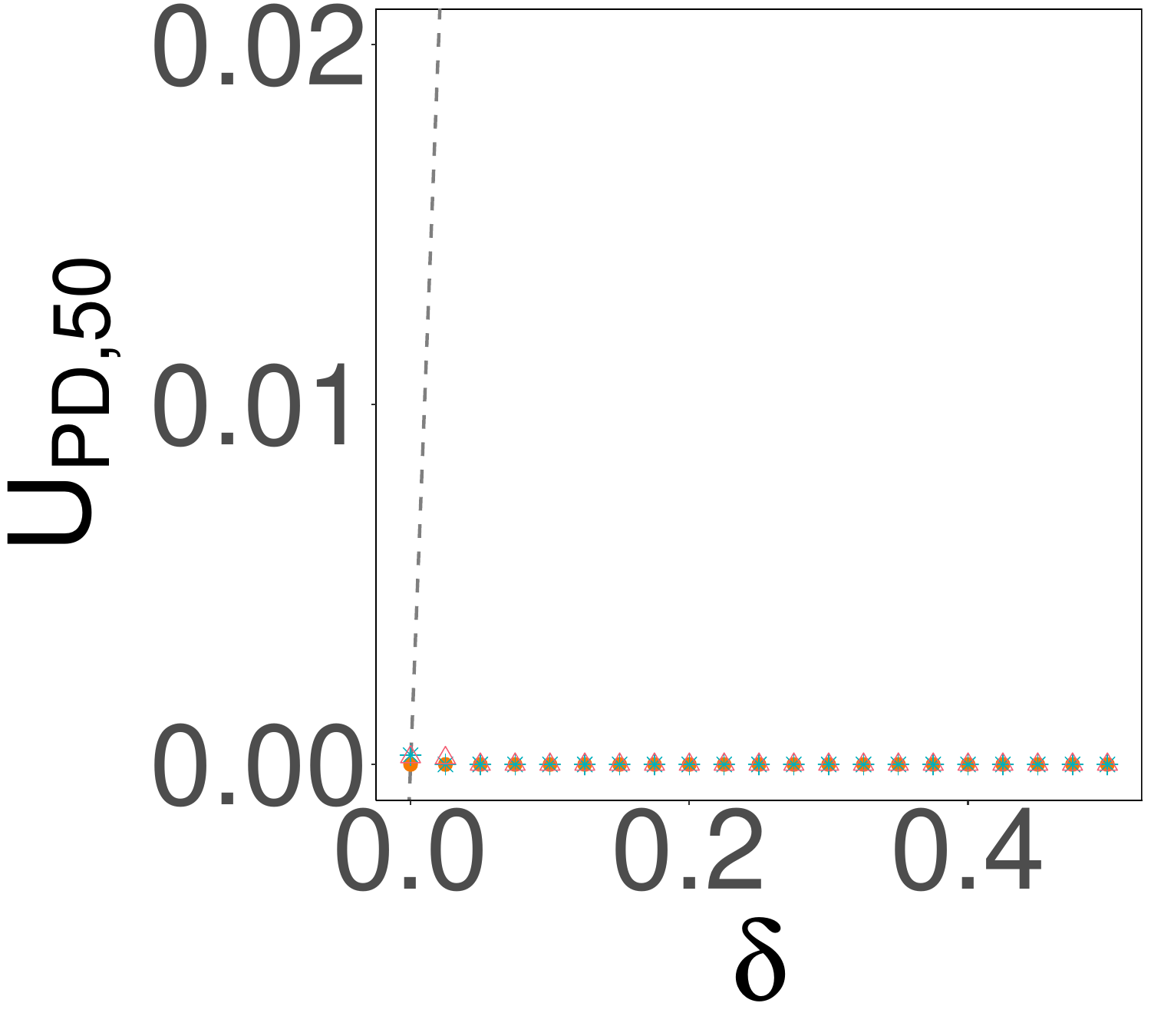} &
			\hspace{\thisgap}\includegraphics[width=\thiswidth]{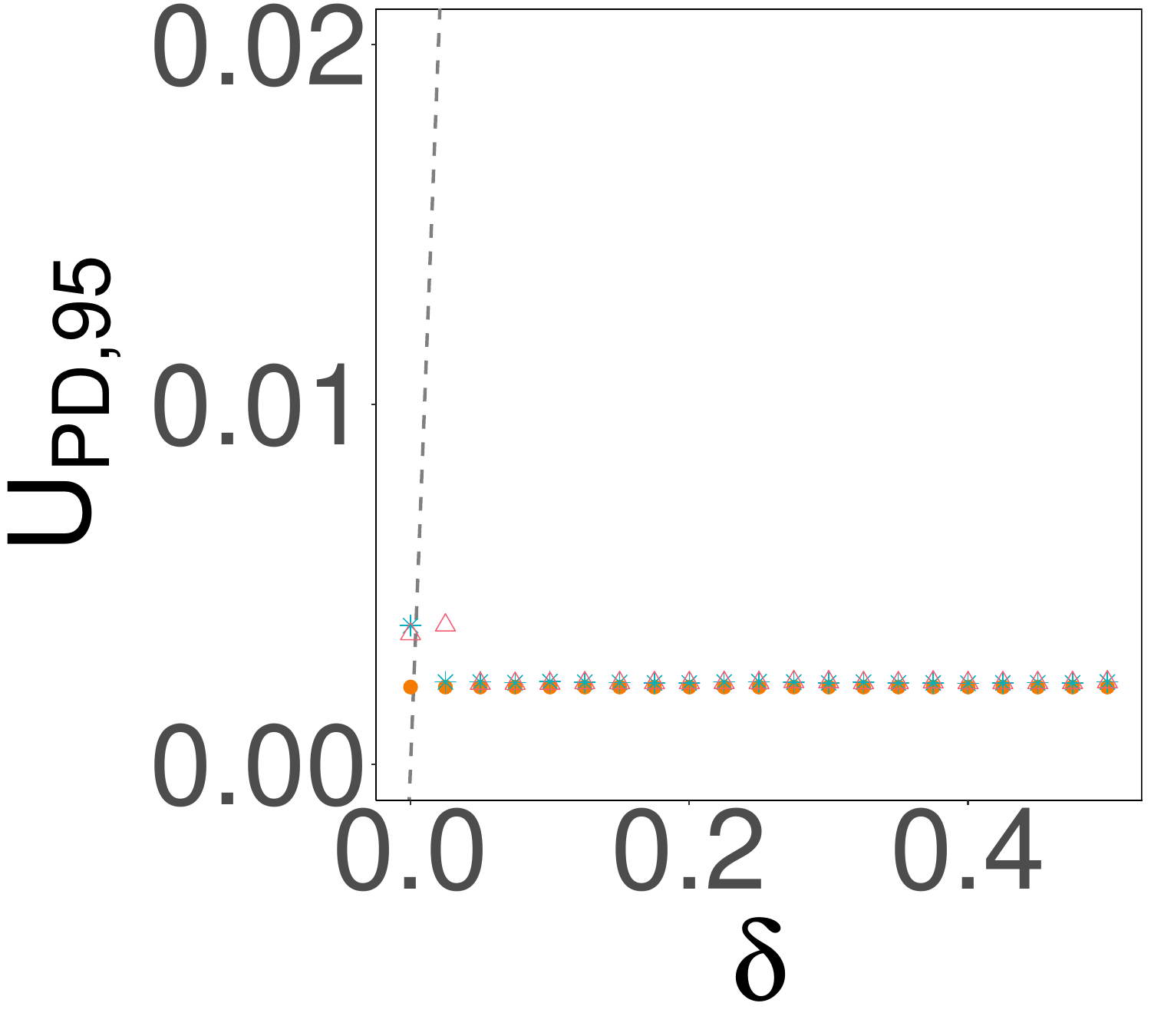}
		\end{tabular}
		\caption{Disparity PD results under (I). Top: $n=1000$; middle: $n=2000$; bottom: $n=5000$.}
		\label{fig:PD_gauss_beta_0.5}
	\end{center}
\end{figure*}

\begin{figure*}[!htbp]
	\begin{center}
		\newcommand{\thiswidth}{0.2\linewidth}
		\newcommand{\thisgap}{0mm}
		\begin{tabular}{ccc}
			\hspace{\thisgap}\includegraphics[width=\thiswidth]{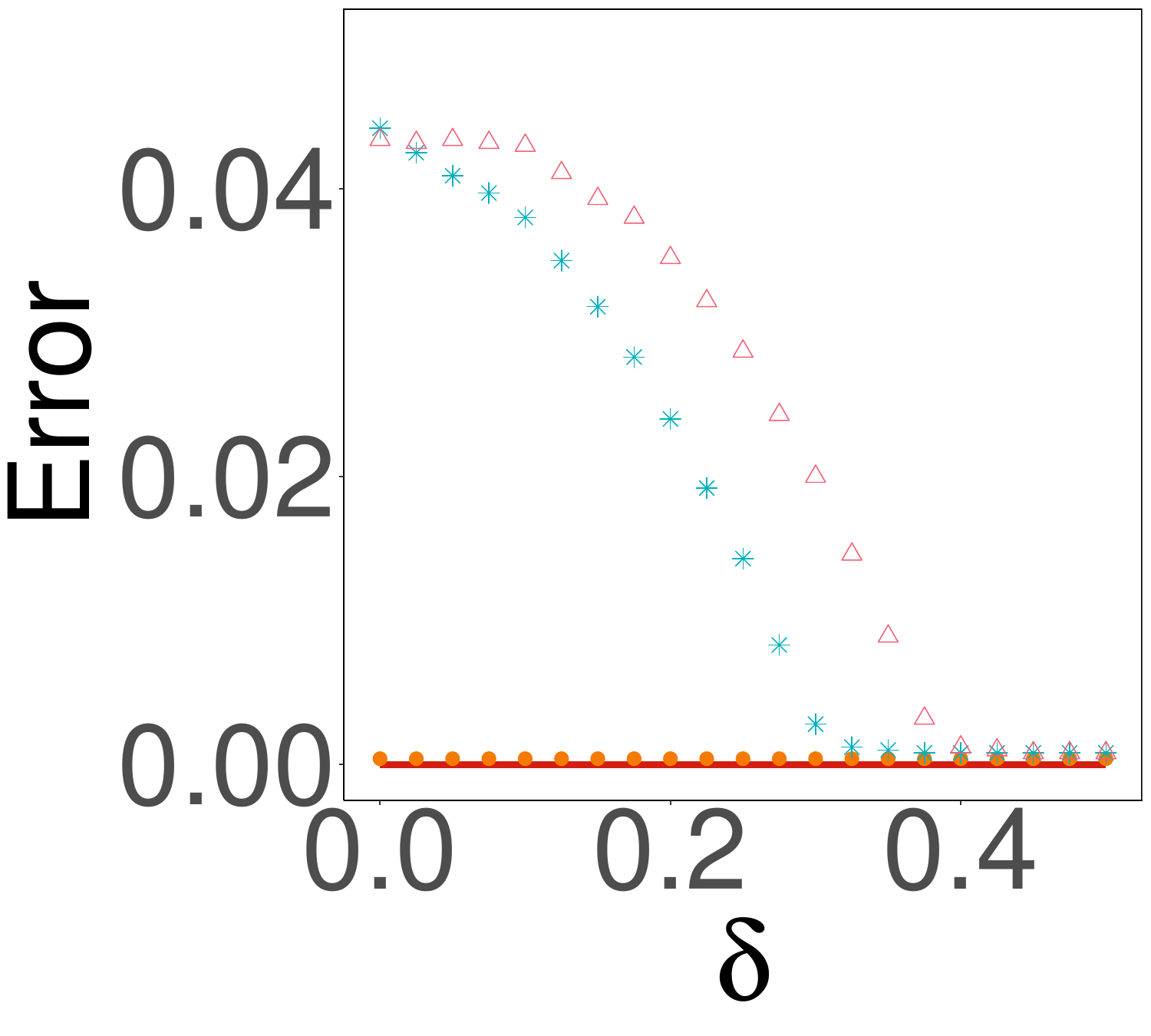} &
			\hspace{\thisgap}\includegraphics[width=\thiswidth]{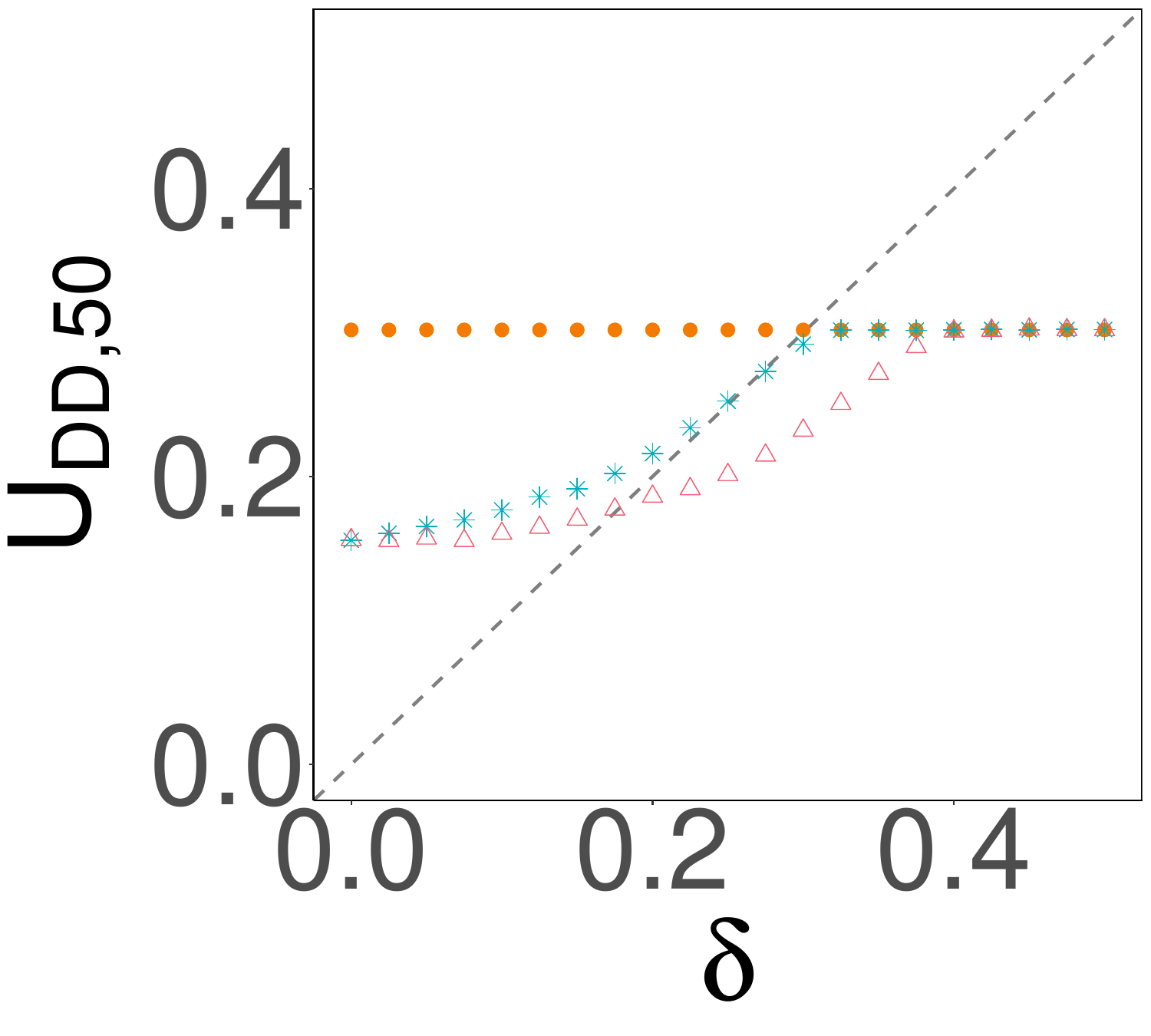} &
			\hspace{\thisgap}\includegraphics[width=\thiswidth]{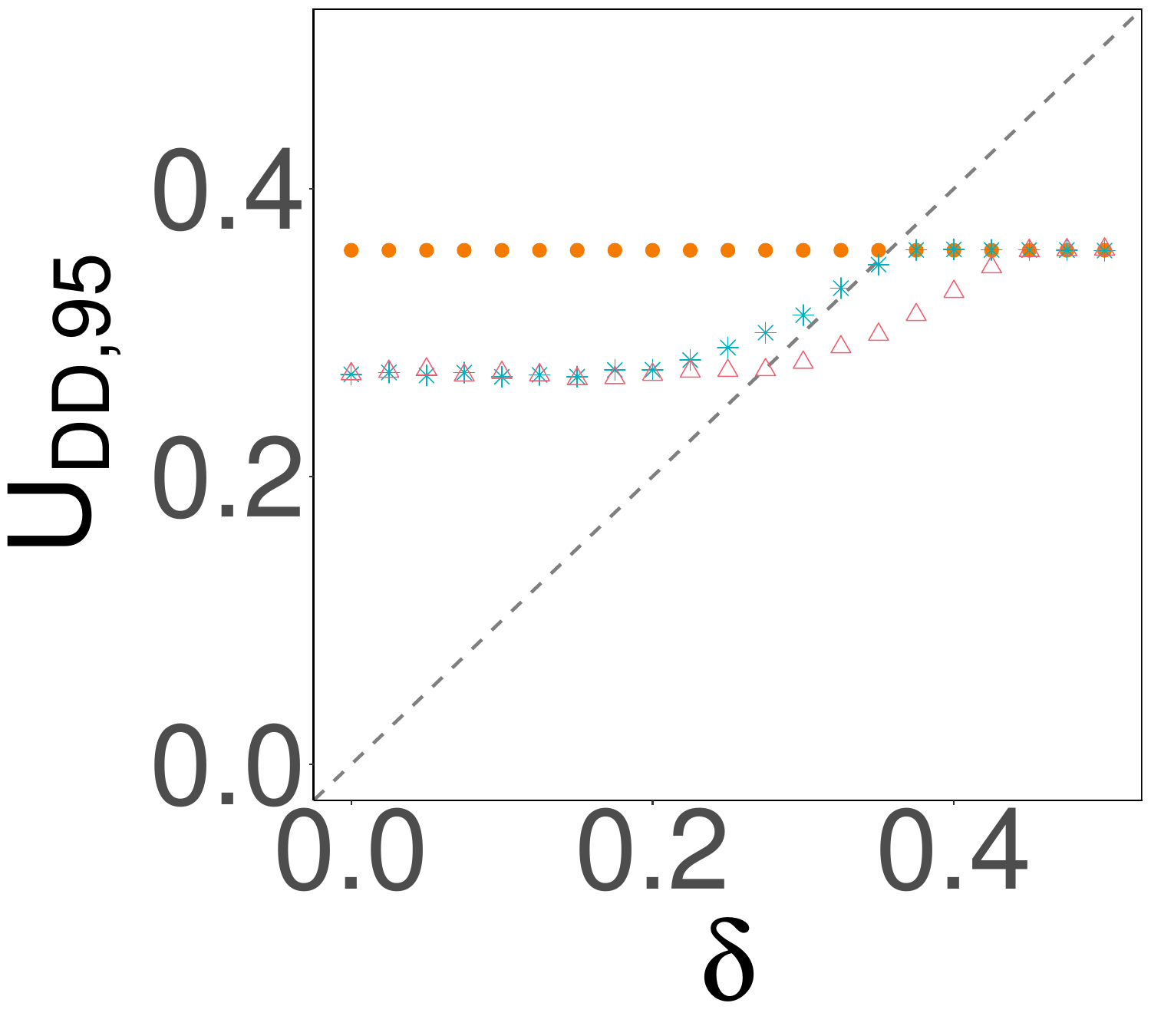} \\
			\hspace{\thisgap}\includegraphics[width=\thiswidth]{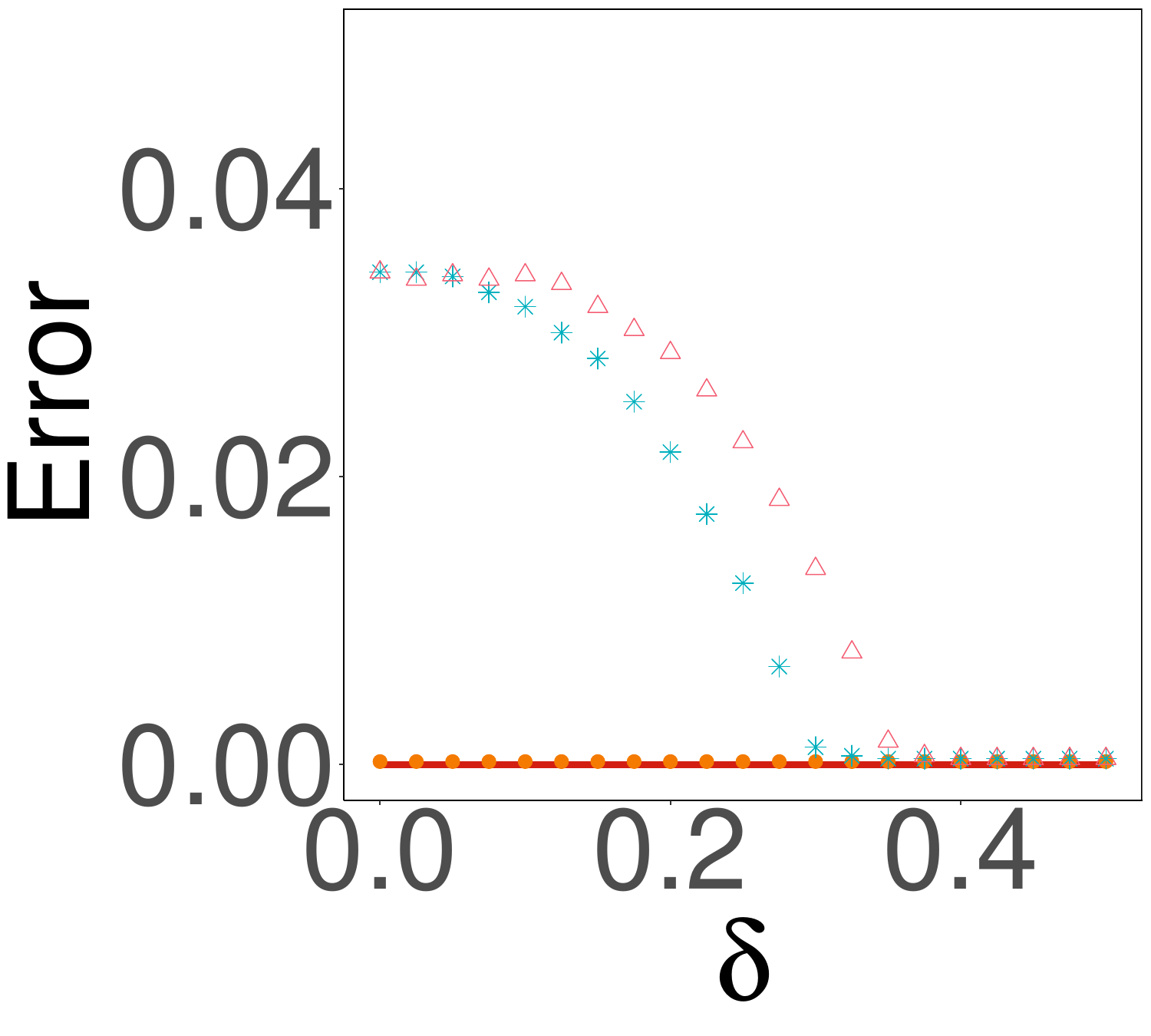} &
			\hspace{\thisgap}\includegraphics[width=\thiswidth]{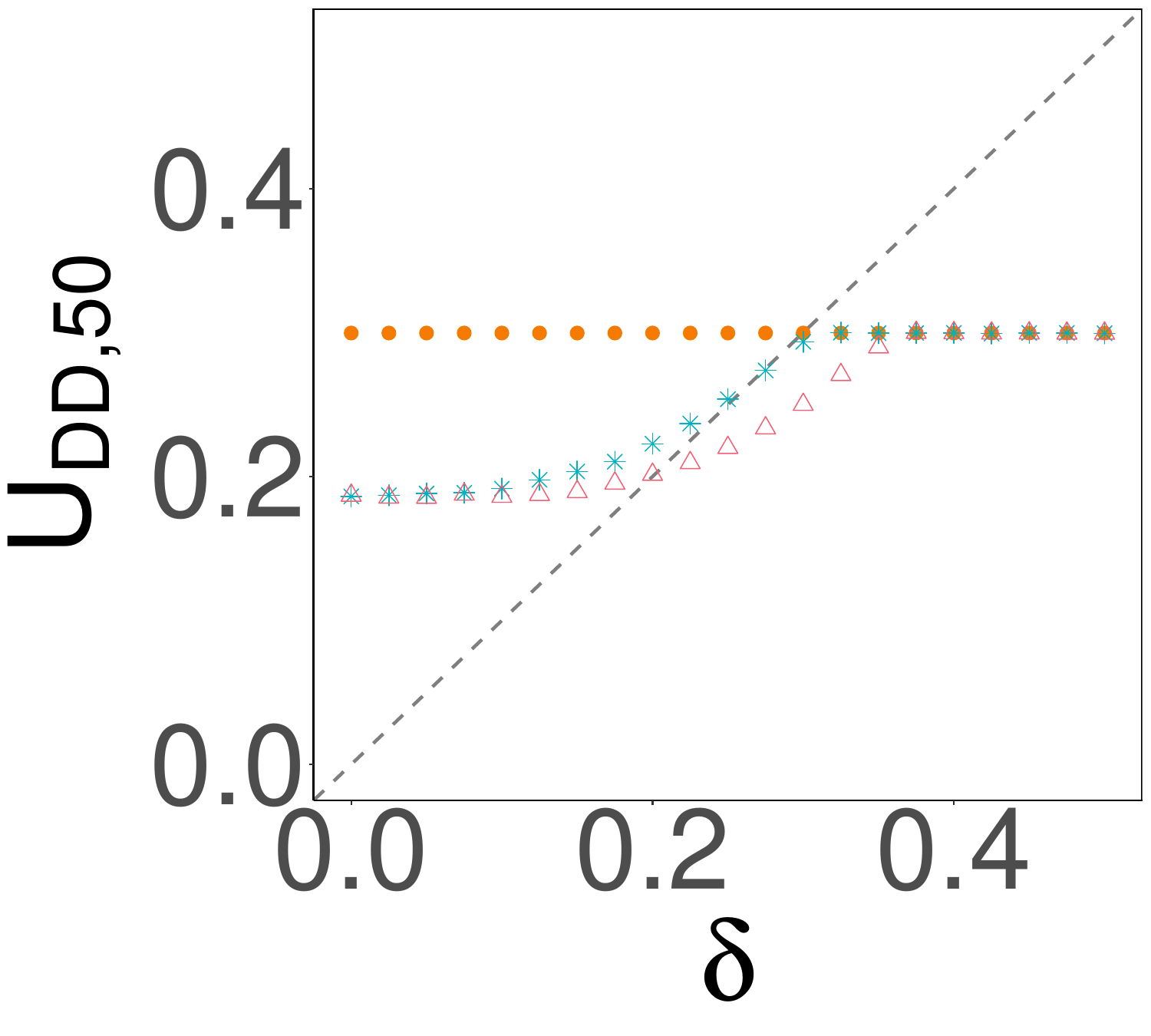} &
			\hspace{\thisgap}\includegraphics[width=\thiswidth]{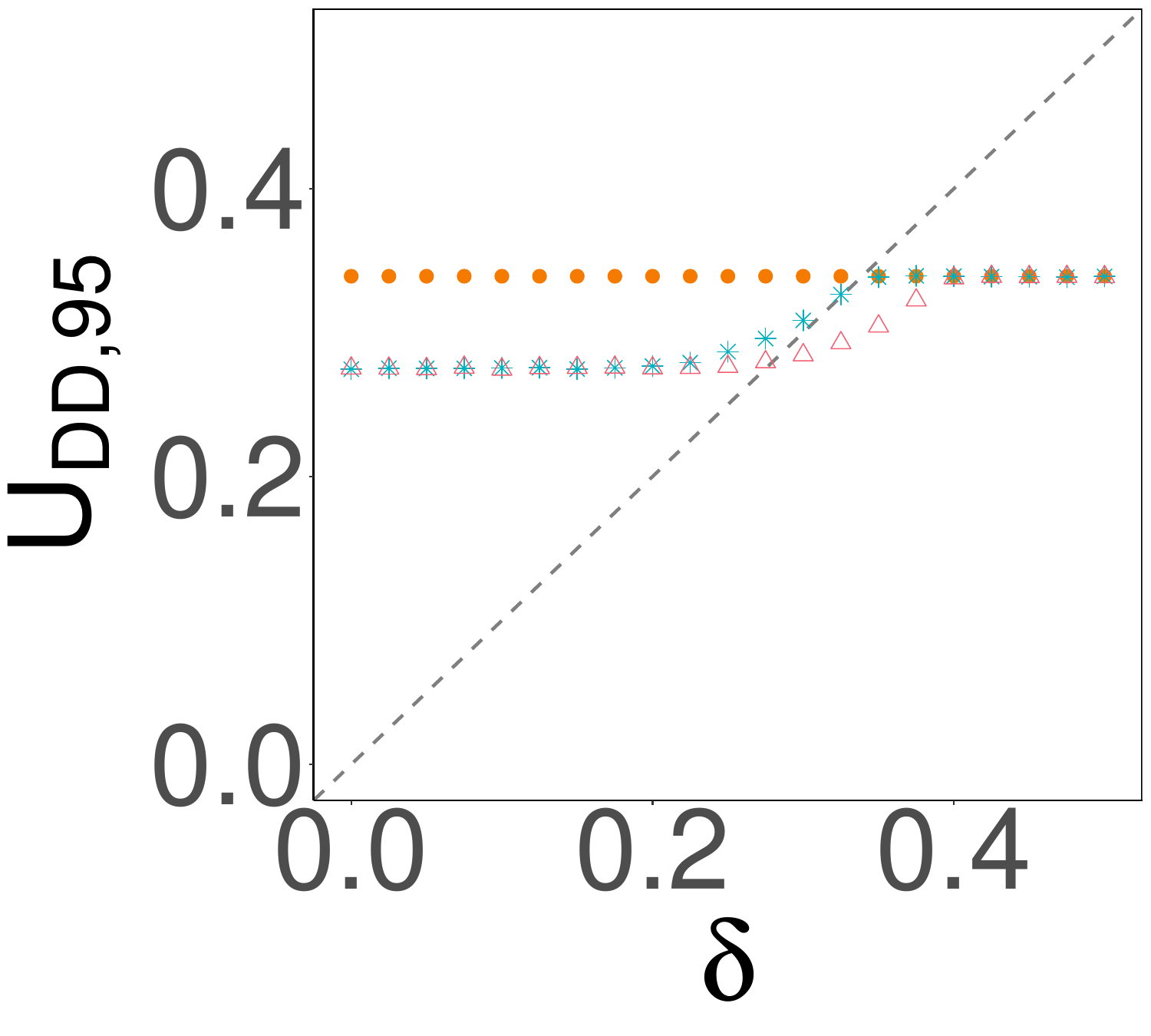} \\
			\hspace{\thisgap}\includegraphics[width=\thiswidth]{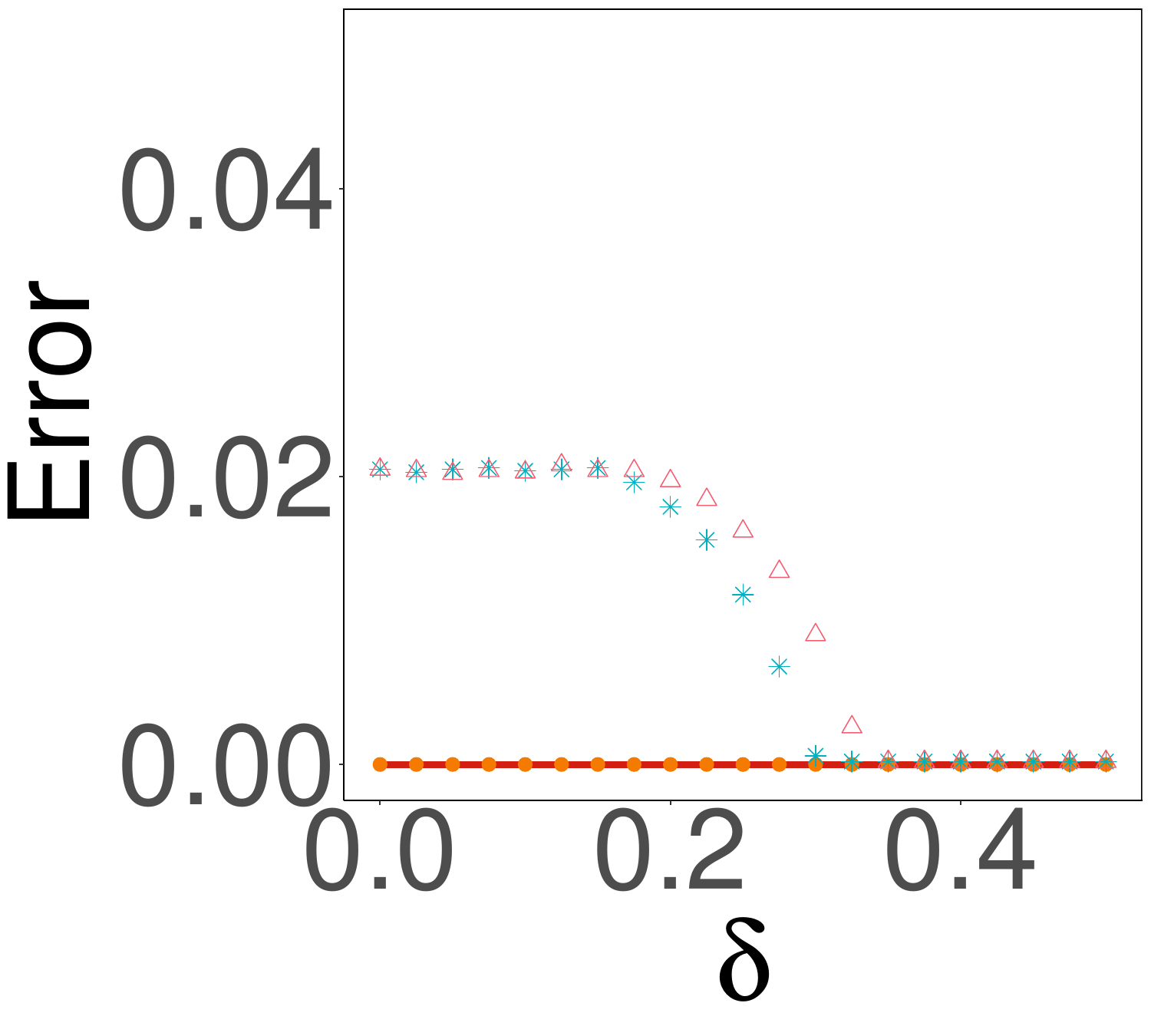} &
			\hspace{\thisgap}\includegraphics[width=\thiswidth]{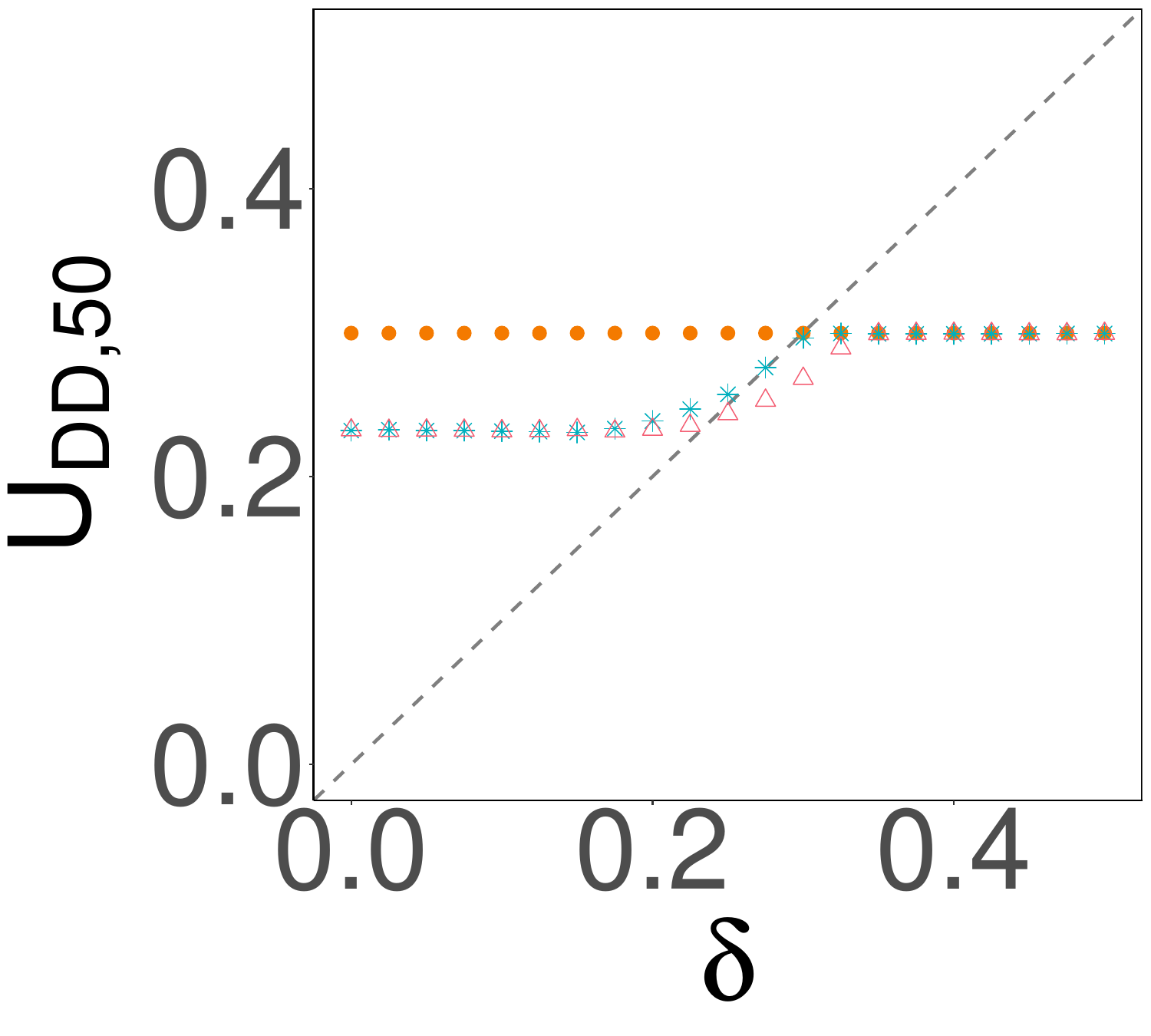} &
			\hspace{\thisgap}\includegraphics[width=\thiswidth]{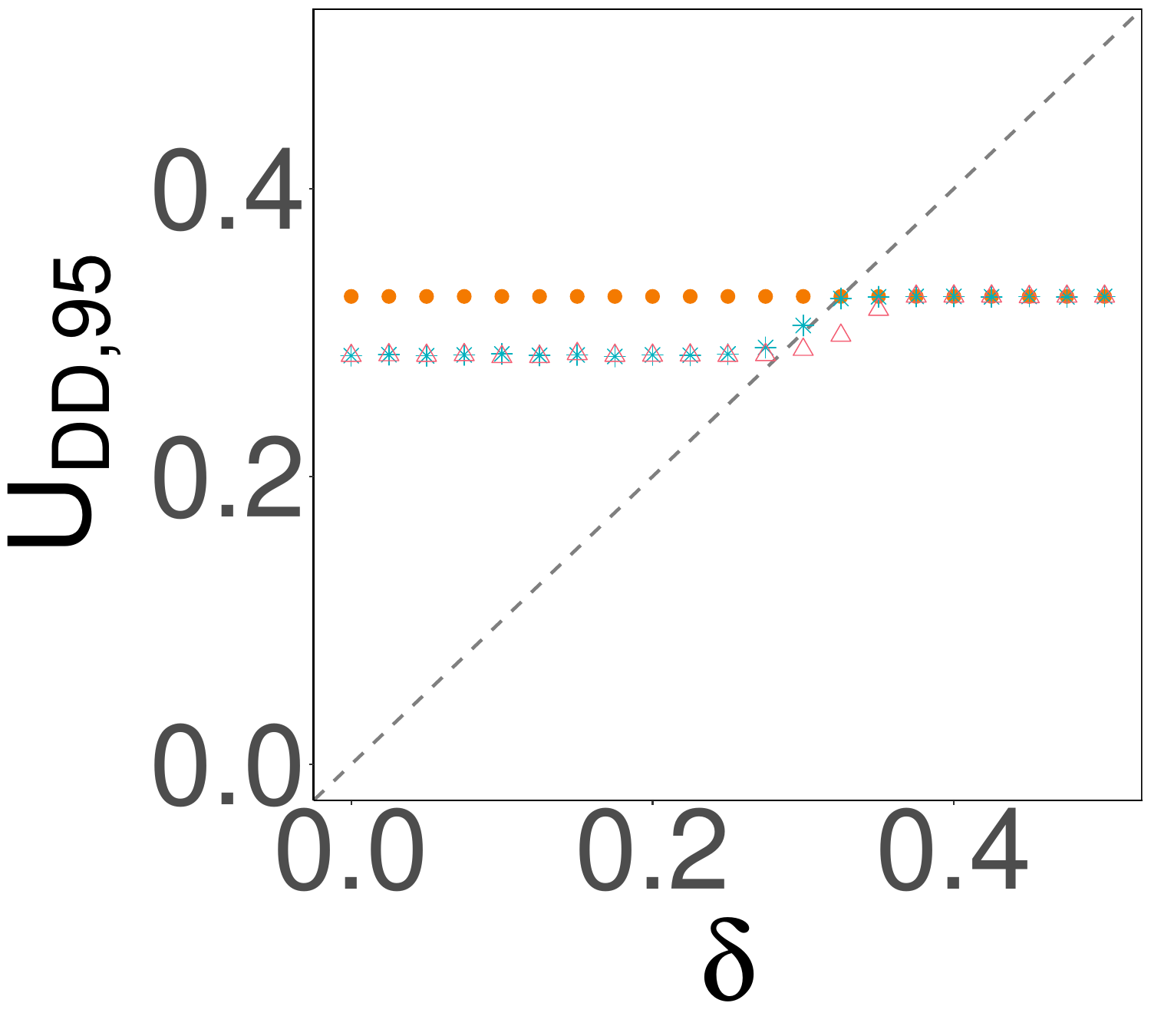}
		\end{tabular}
		\caption{Disparity DD results under (I). Top: $n=1000$; middle: $n=2000$; bottom: $n=5000$.}
		\label{fig:DD_gauss_beta_0.5}
	\end{center}
\end{figure*}

\begin{figure*}[!htbp]
	\begin{center}
		\newcommand{\thiswidth}{0.2\linewidth}
		\newcommand{\thisgap}{0mm}
		\begin{tabular}{ccc}
			\hspace{\thisgap}\includegraphics[width=\thiswidth]{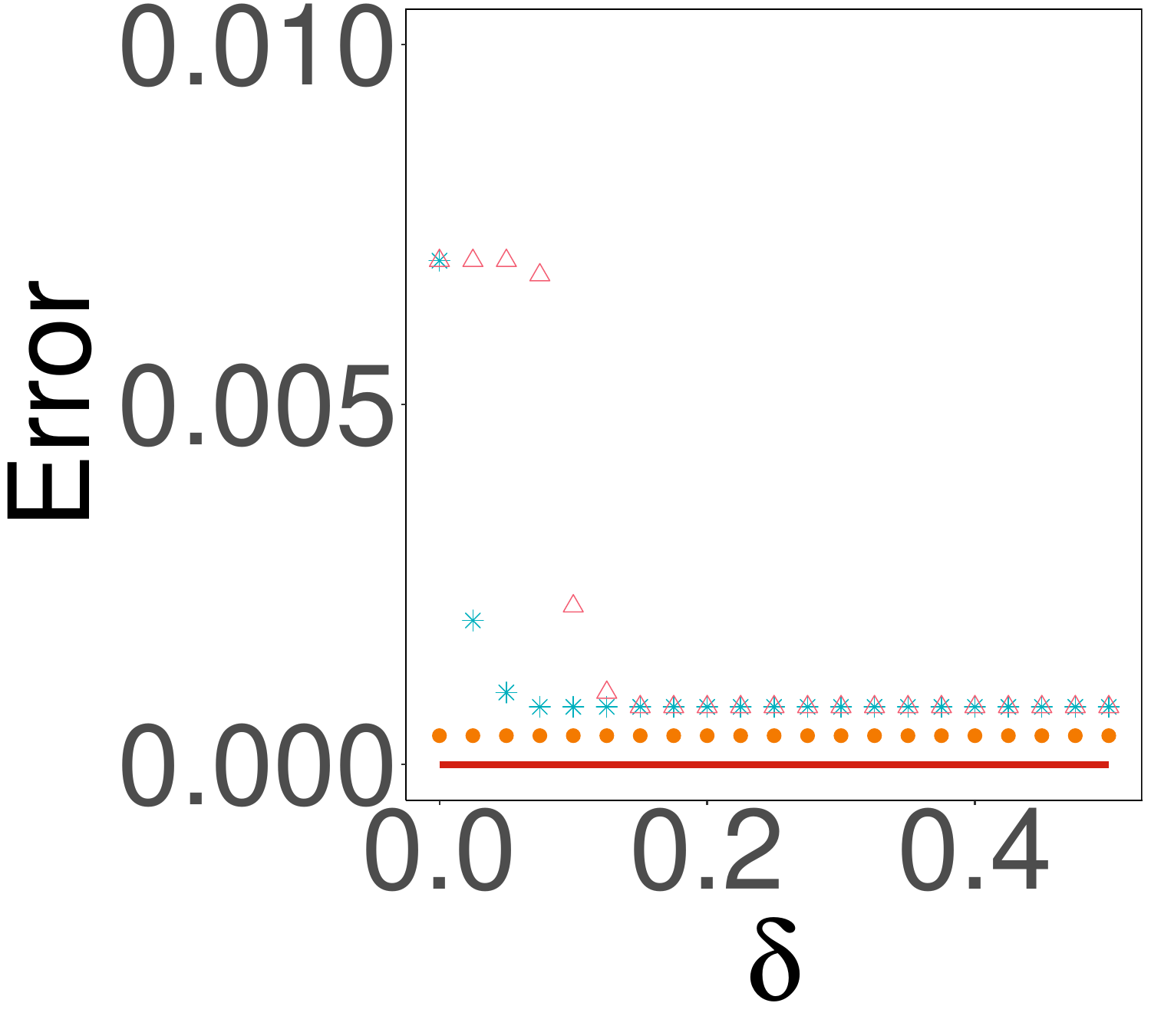} &
			\hspace{\thisgap}\includegraphics[width=\thiswidth]{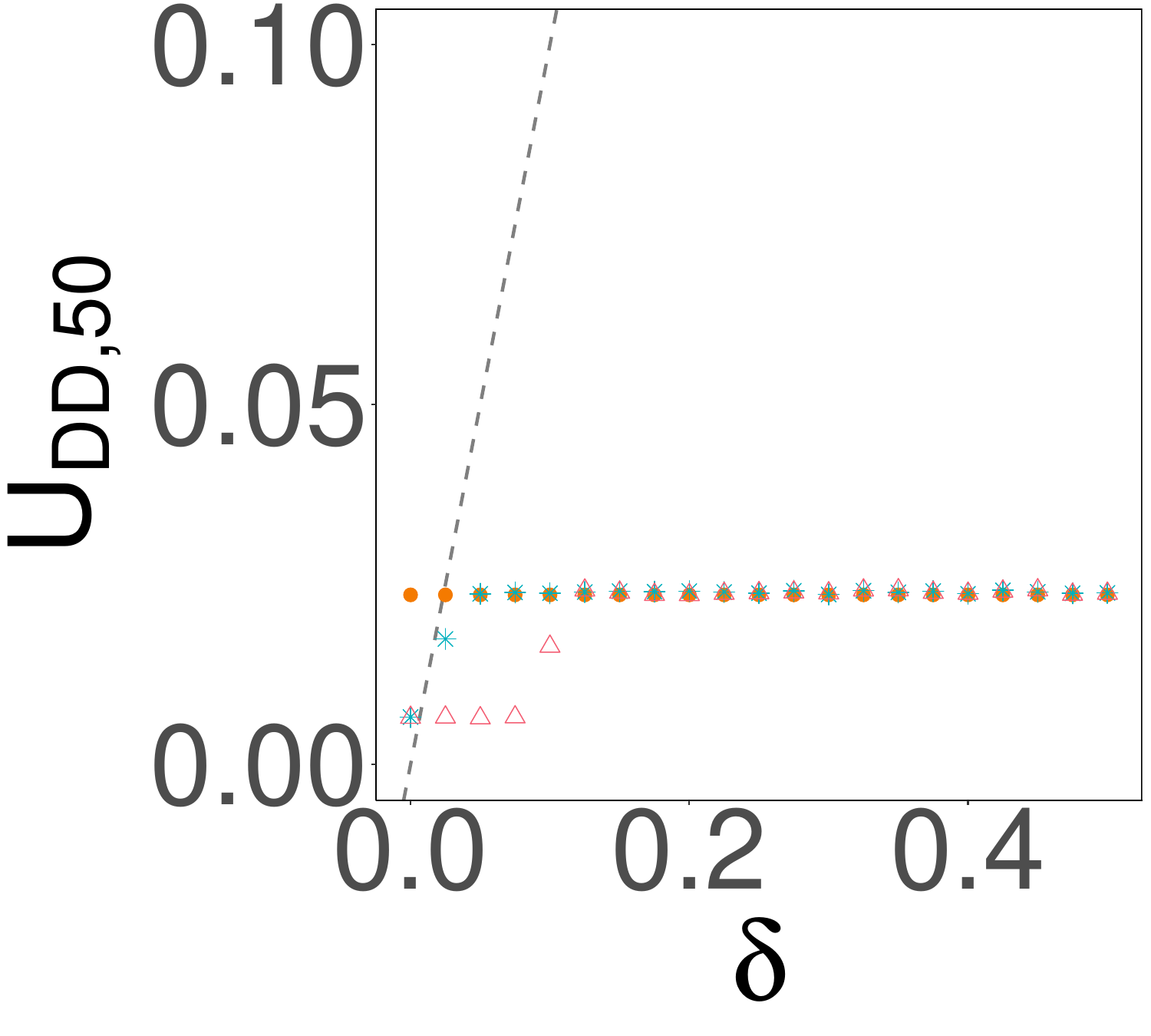} &
			\hspace{\thisgap}\includegraphics[width=\thiswidth]{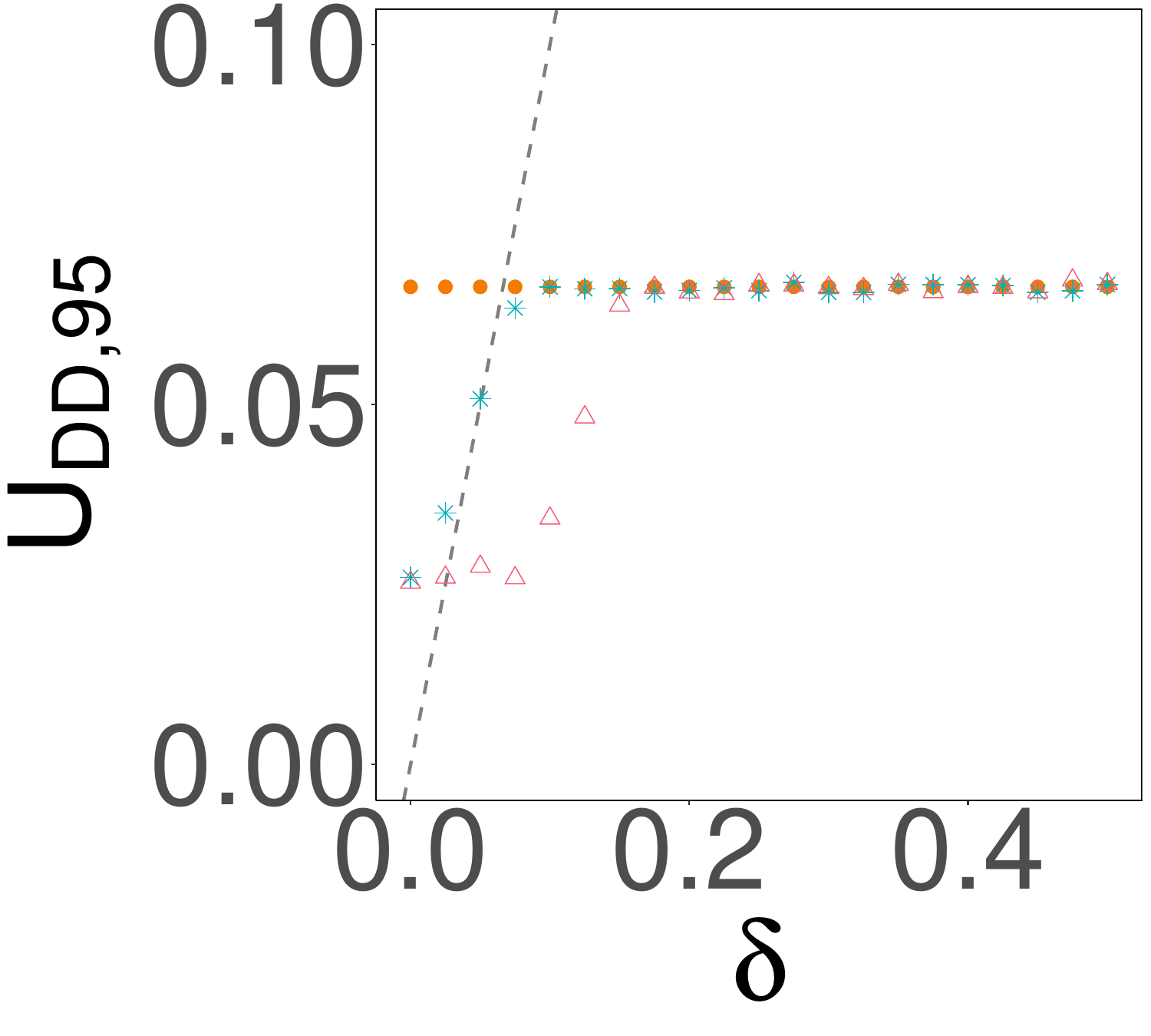} \\
			\hspace{\thisgap}\includegraphics[width=\thiswidth]{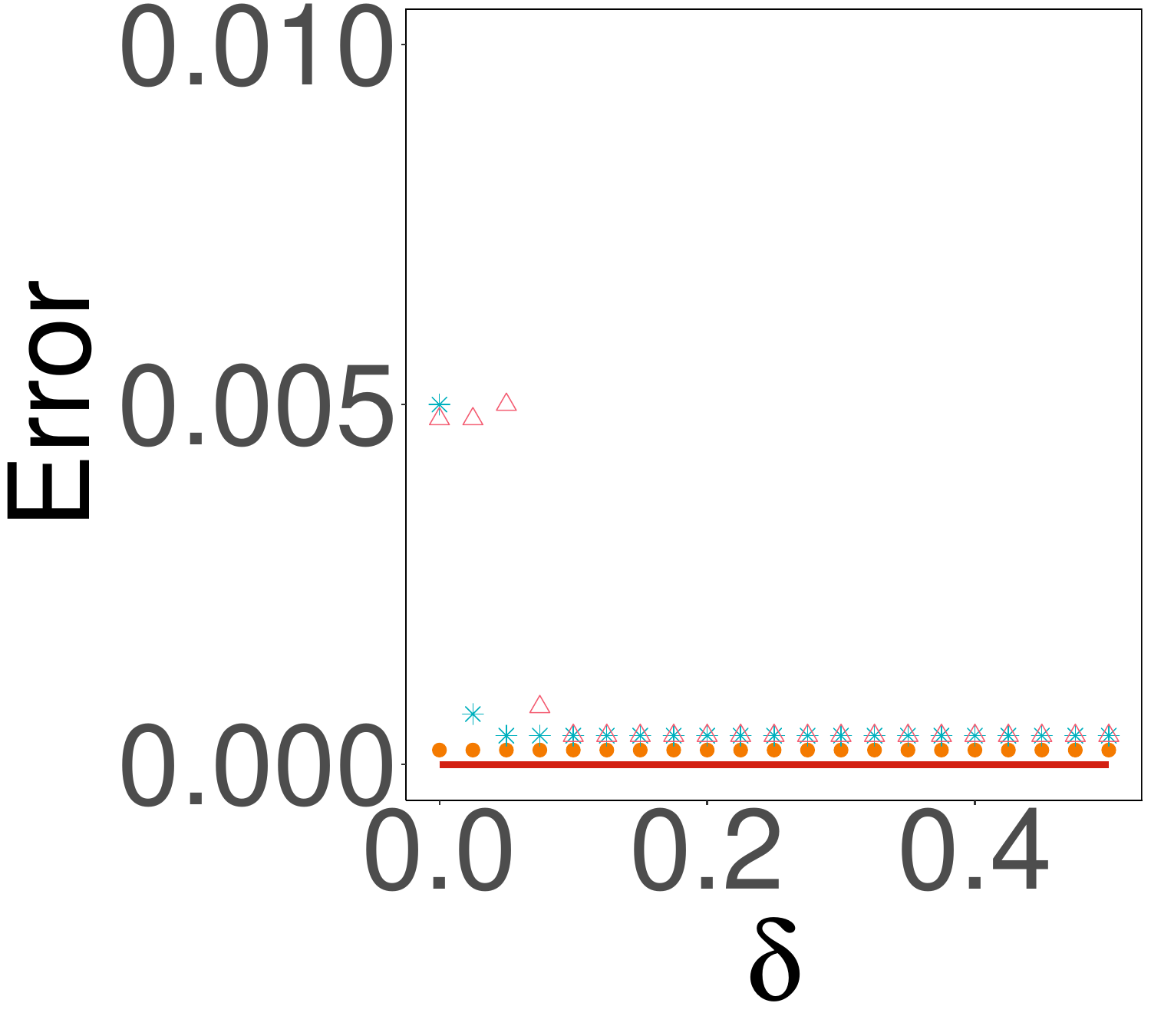} &
			\hspace{\thisgap}\includegraphics[width=\thiswidth]{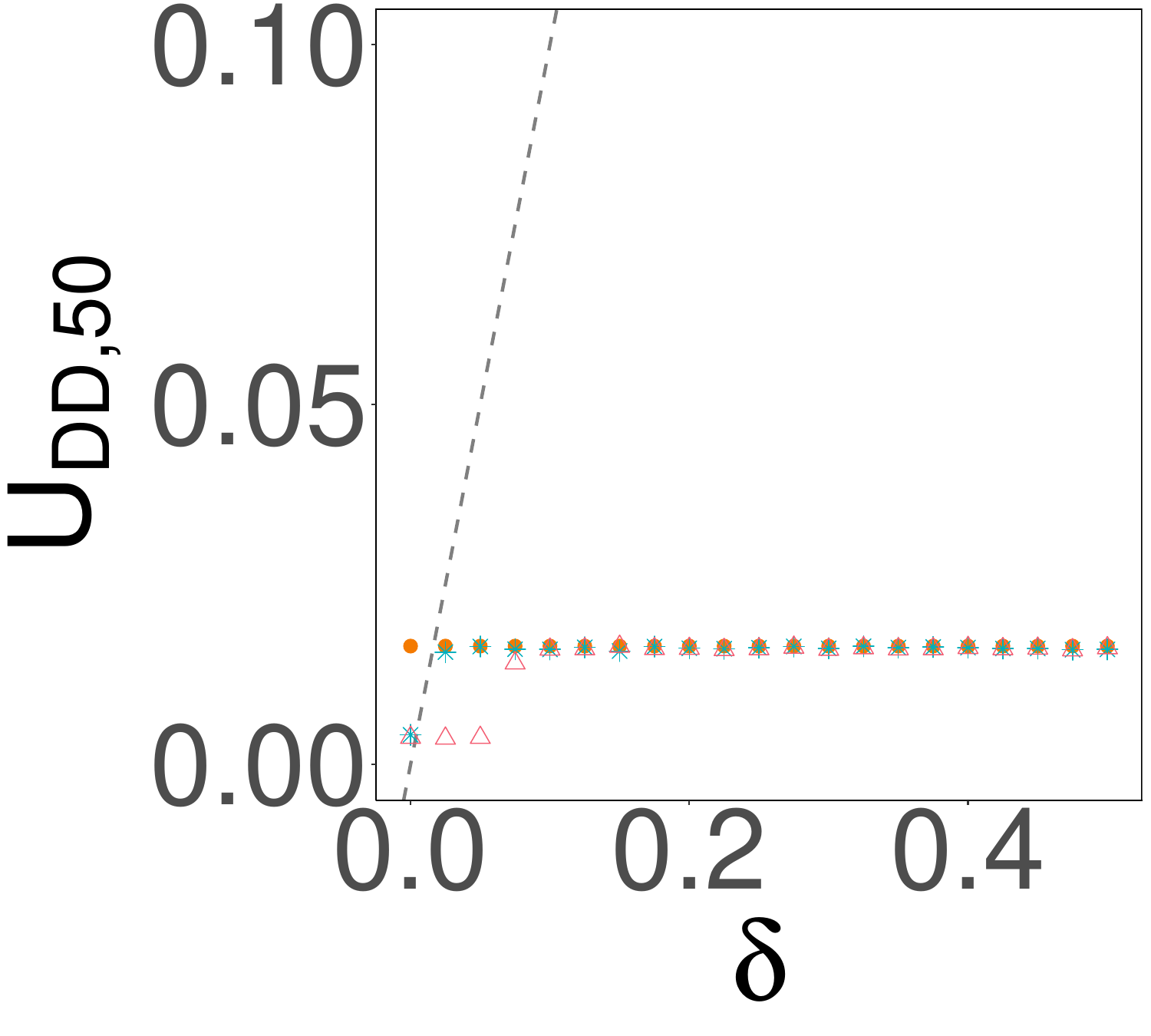} &
			\hspace{\thisgap}\includegraphics[width=\thiswidth]{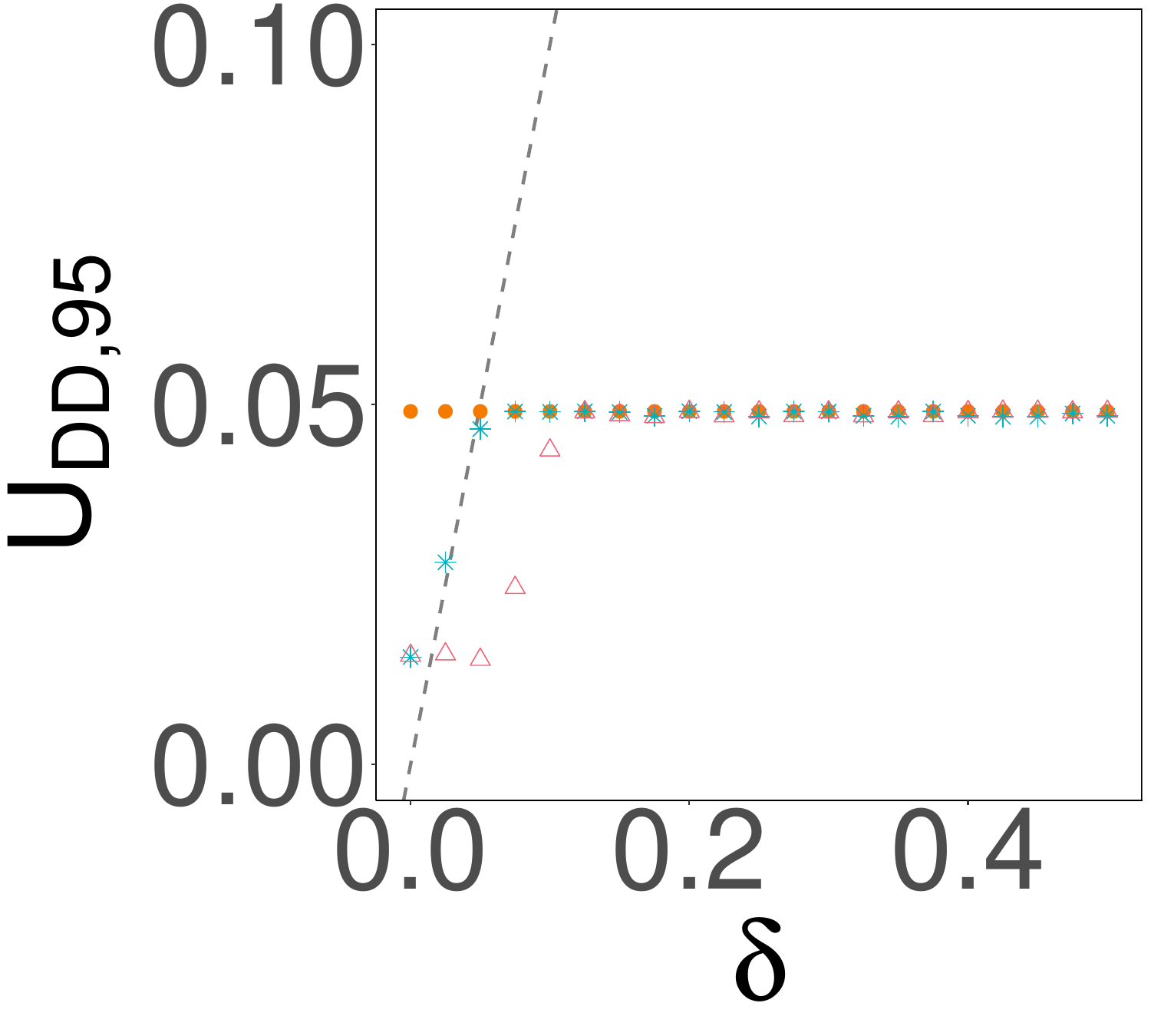} \\
			\hspace{\thisgap}\includegraphics[width=\thiswidth]{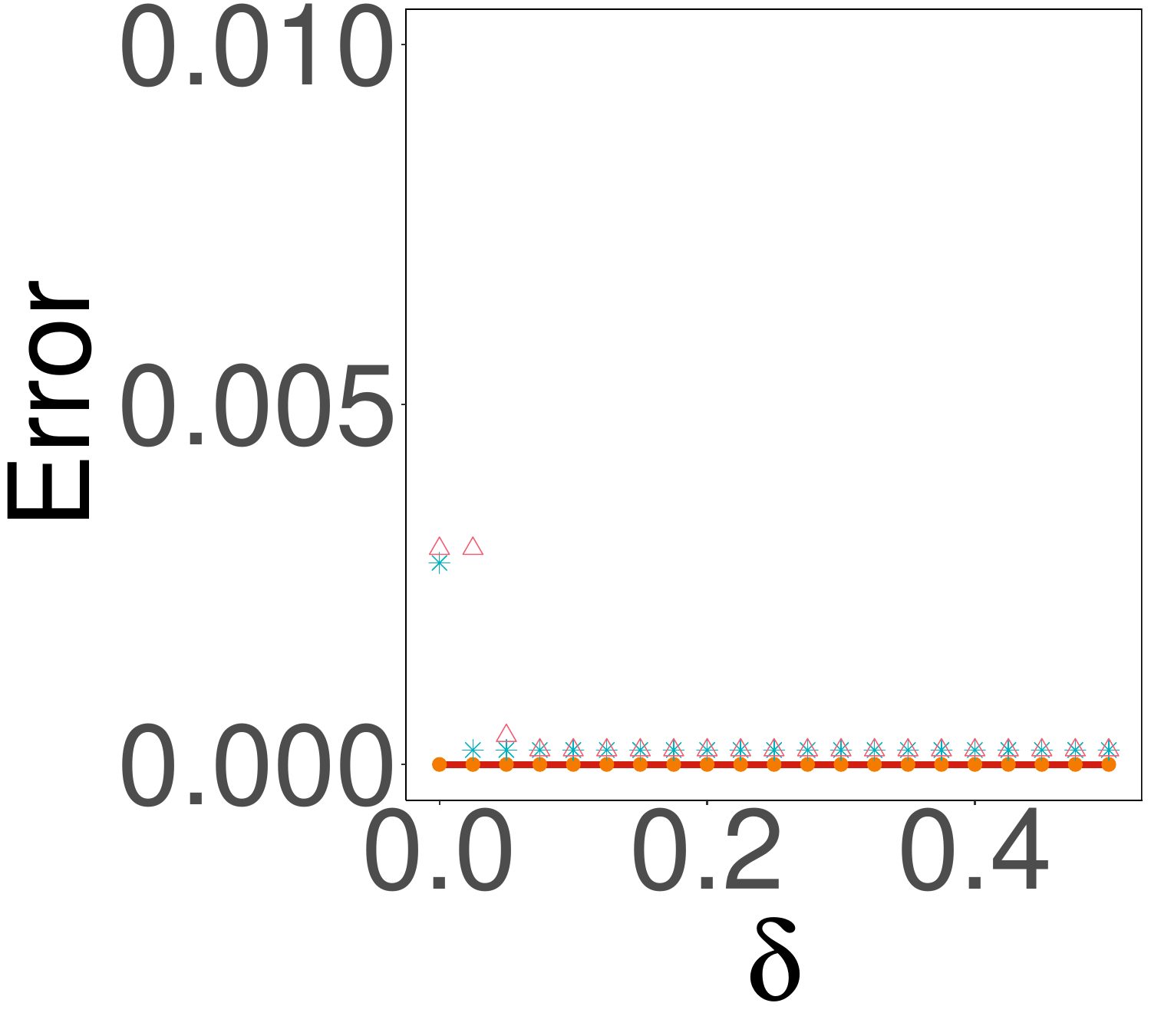} &
			\hspace{\thisgap}\includegraphics[width=\thiswidth]{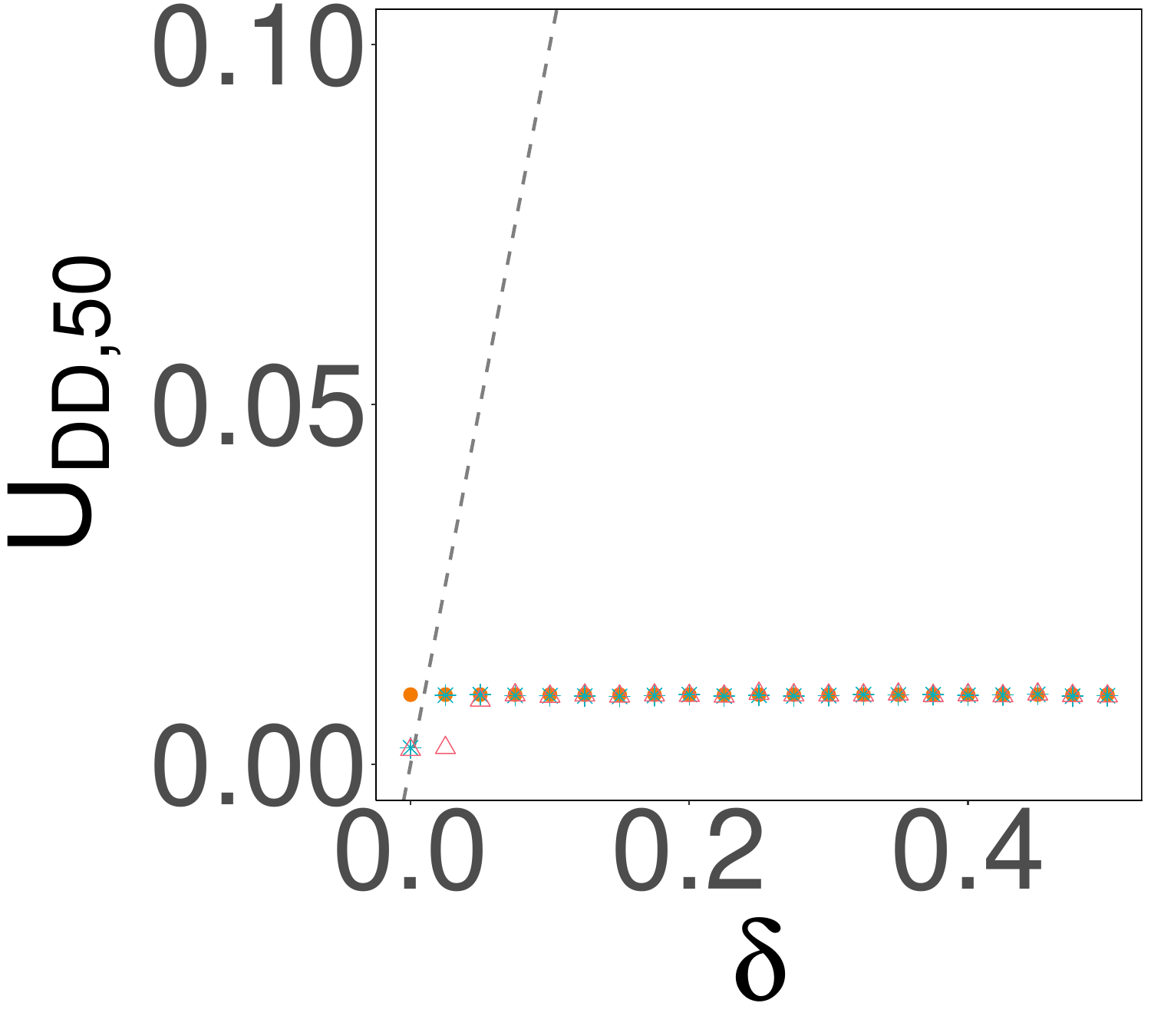} &
			\hspace{\thisgap}\includegraphics[width=\thiswidth]{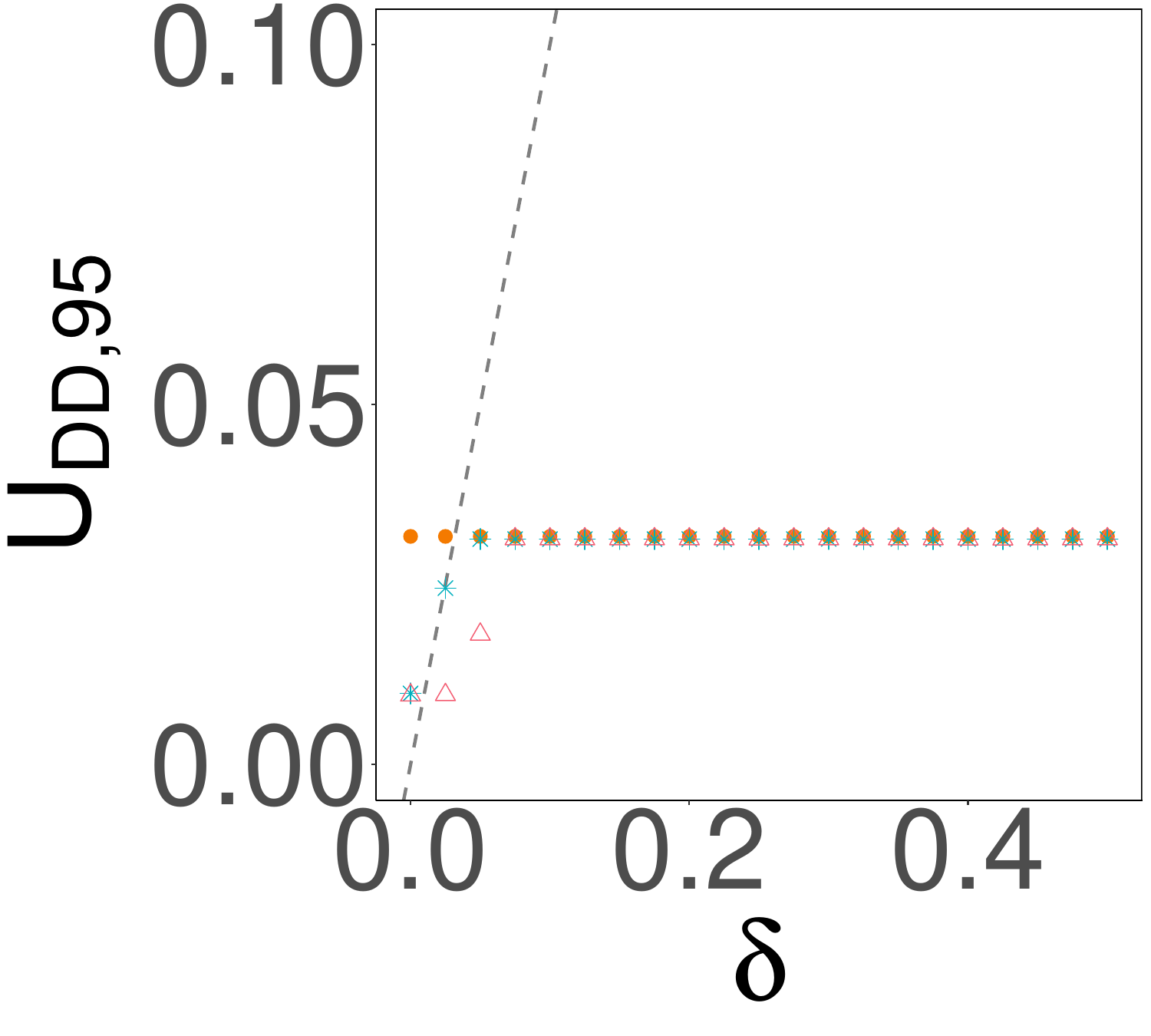}
		\end{tabular}
		\caption{Disparity DD results under (II). Top: $n=1000$; middle: $n=2000$; bottom: $n=5000$.}
		\label{fig:DD_gauss_beta_0.5_II}
	\end{center}
\end{figure*}

\subsection{Additional results for real data}\label{sec:apx_realdata}

In practice, we recommend tuning the calibration parameter $\kappa$ in Fair-$\mathrm{FLDA_c}$ to achieve more reliable probabilistic disparity control. Specifically, we select the smallest value of $\kappa$ such that the empirical $1-\rho$ quantile of the disparity remains below the pre-specified threshold $\delta$. To estimate this empirical quantile, we resort to random splitting. The data are randomly divided into two subsets, with one used to estimate the fairness-aware classifier, and the other to evaluate the resulting disparity. We repeat the process multiple times, e.g.~100 times, and the empirical $1-\rho$ quantile is then computed from the empirical distribution of observed disparities. 

The results obtained using the tuned calibration levels are reported in Figure \ref{fig:nhanes_tune}. As shown, the tuned Fair-$\mathrm{FLDA_c}$ consistently maintains disparity below $\delta$ with probability at least $1-\rho$, except for a slight violation under one small $\delta$ under DO. This demonstrates the overall effectiveness of the proposed tuning strategy.

\begin{figure*}[!htbp]
	\begin{center}
		\newcommand{\thiswidth}{0.18\linewidth}
		\newcommand{\thisgap}{0mm}
		\begin{tabular}{ccc}
			\hspace{\thisgap}\includegraphics[width=\thiswidth]{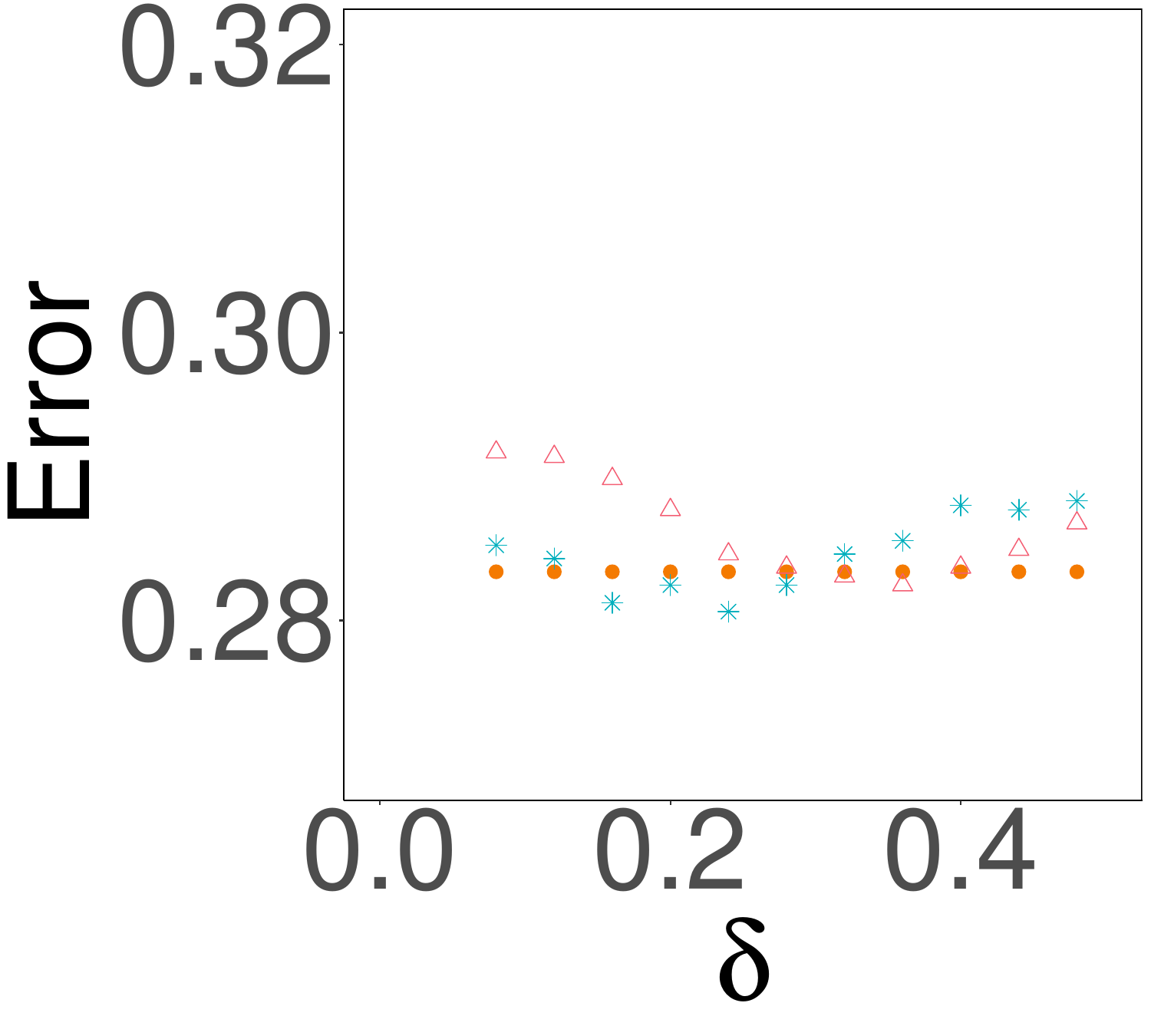} &
			\hspace{\thisgap}\includegraphics[width=\thiswidth]{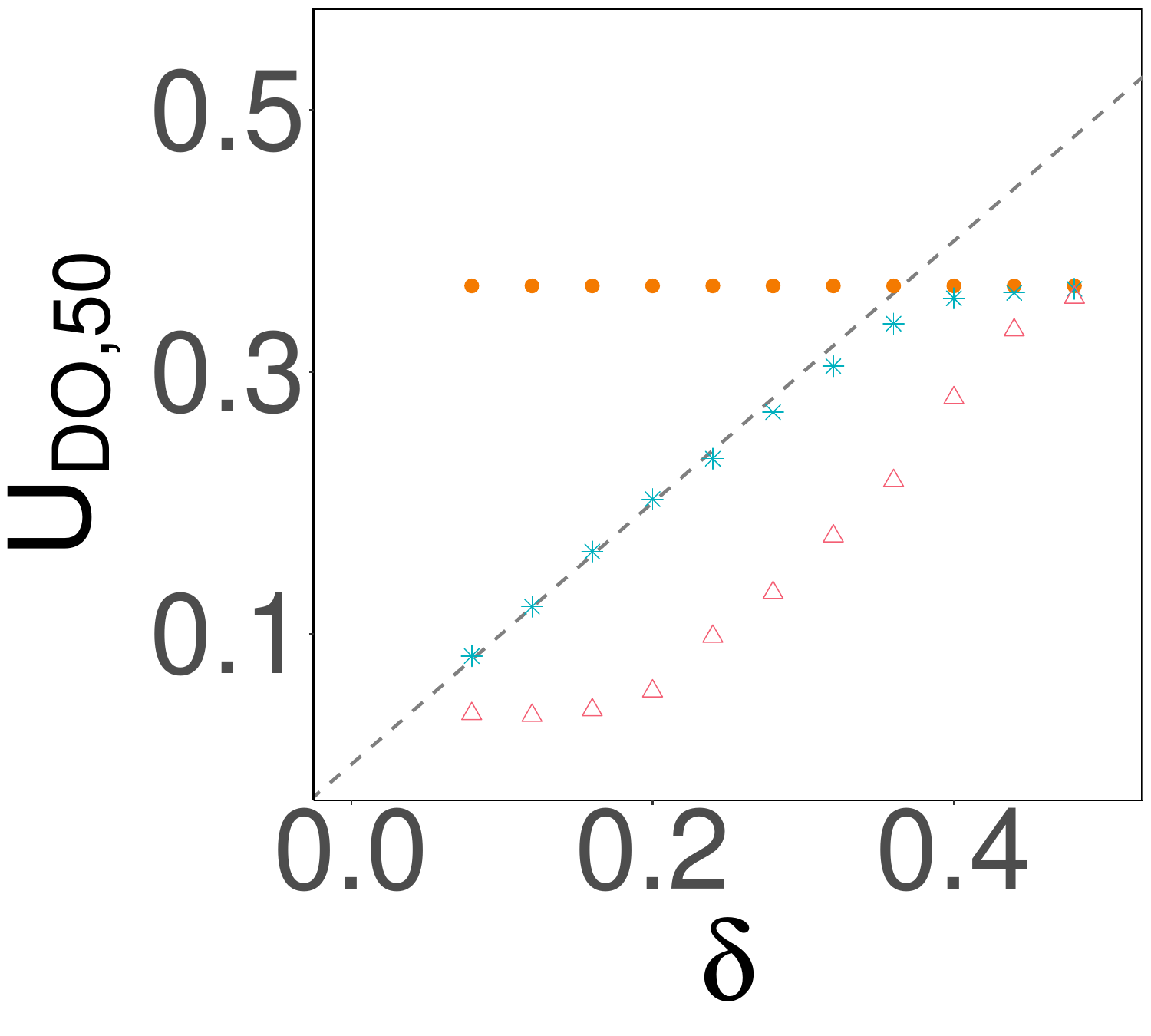} &
			\hspace{\thisgap}\includegraphics[width=\thiswidth]{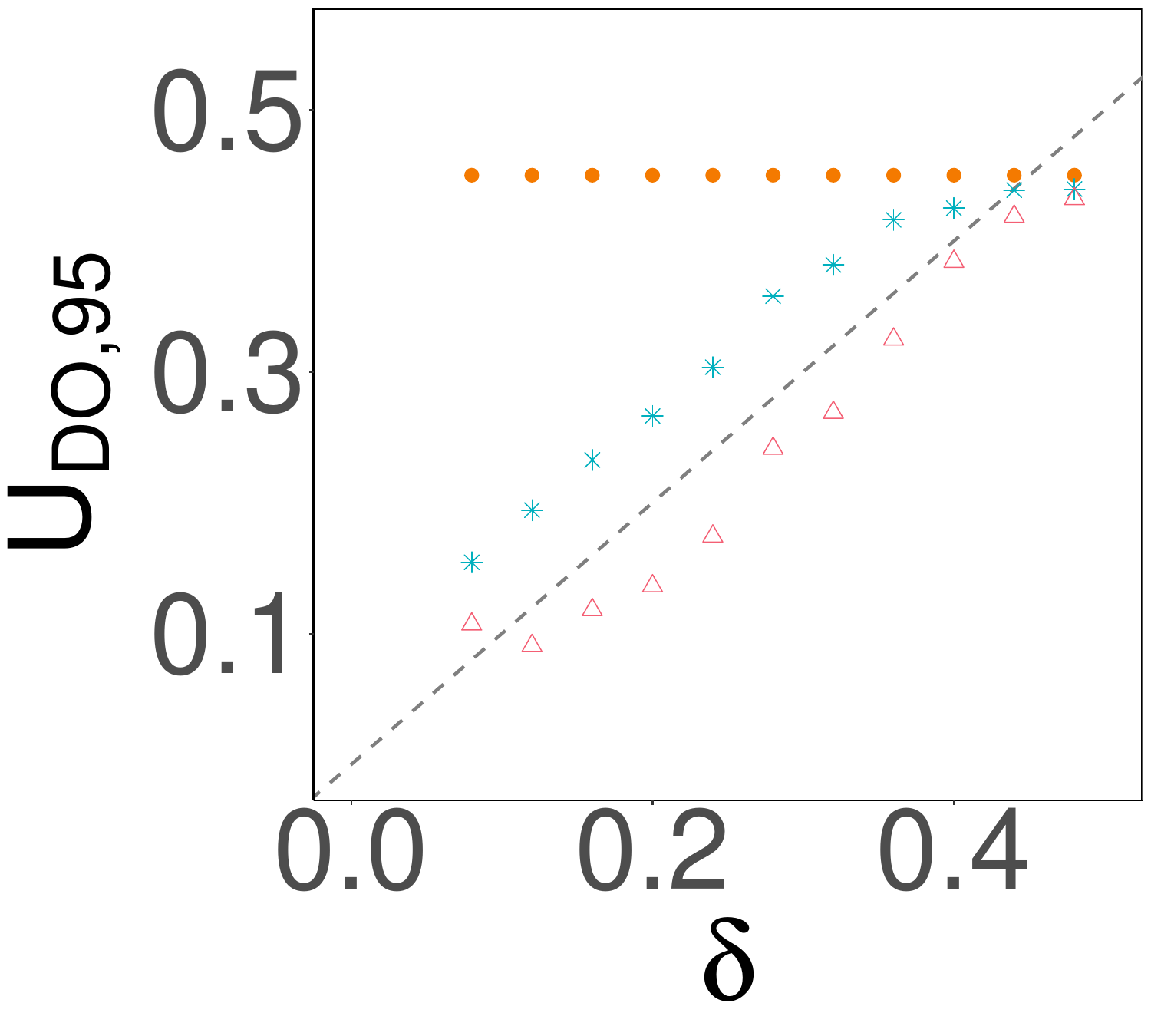} \\
			\hspace{\thisgap}\includegraphics[width=\thiswidth]{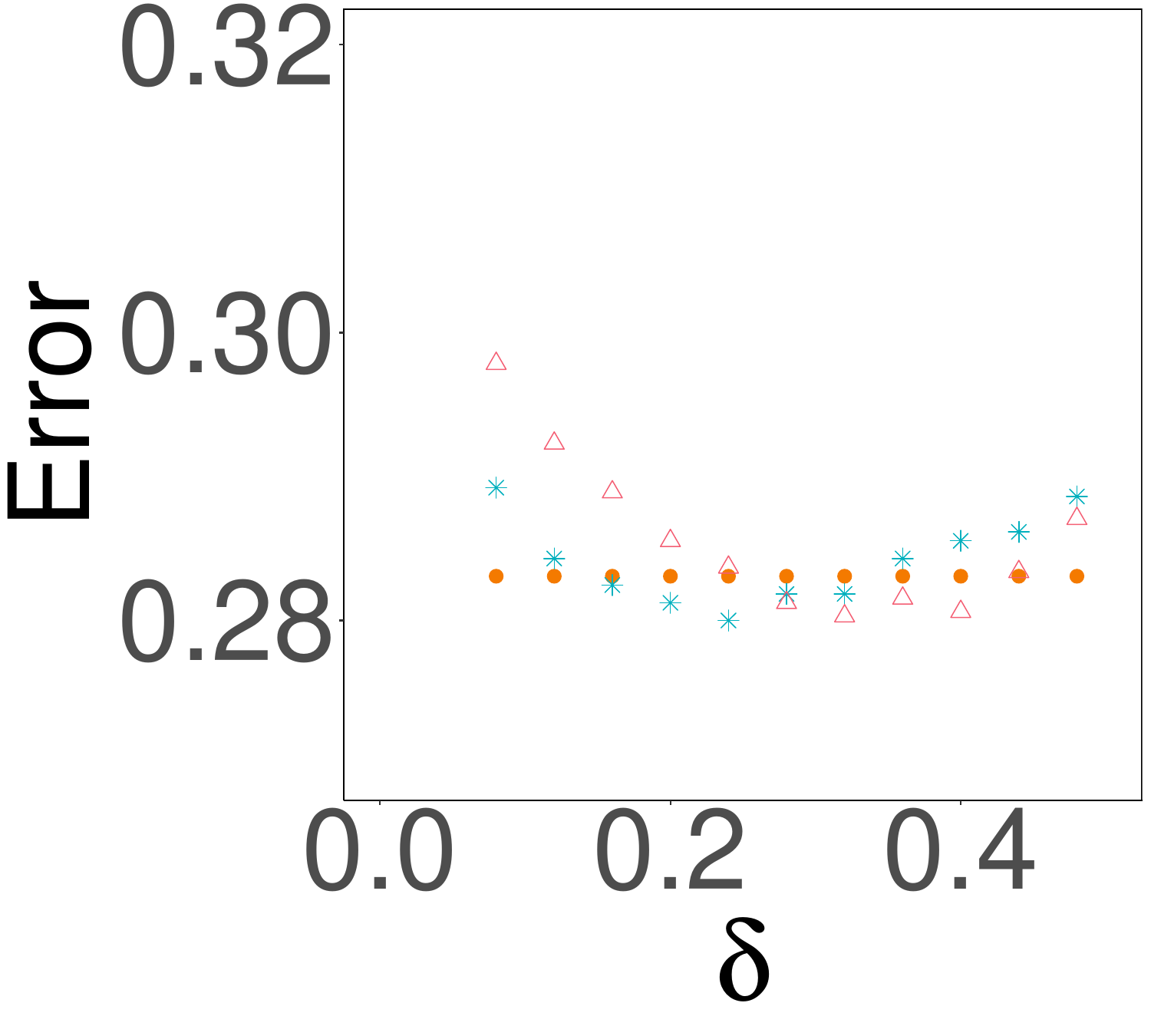} &
			\hspace{\thisgap}\includegraphics[width=\thiswidth]{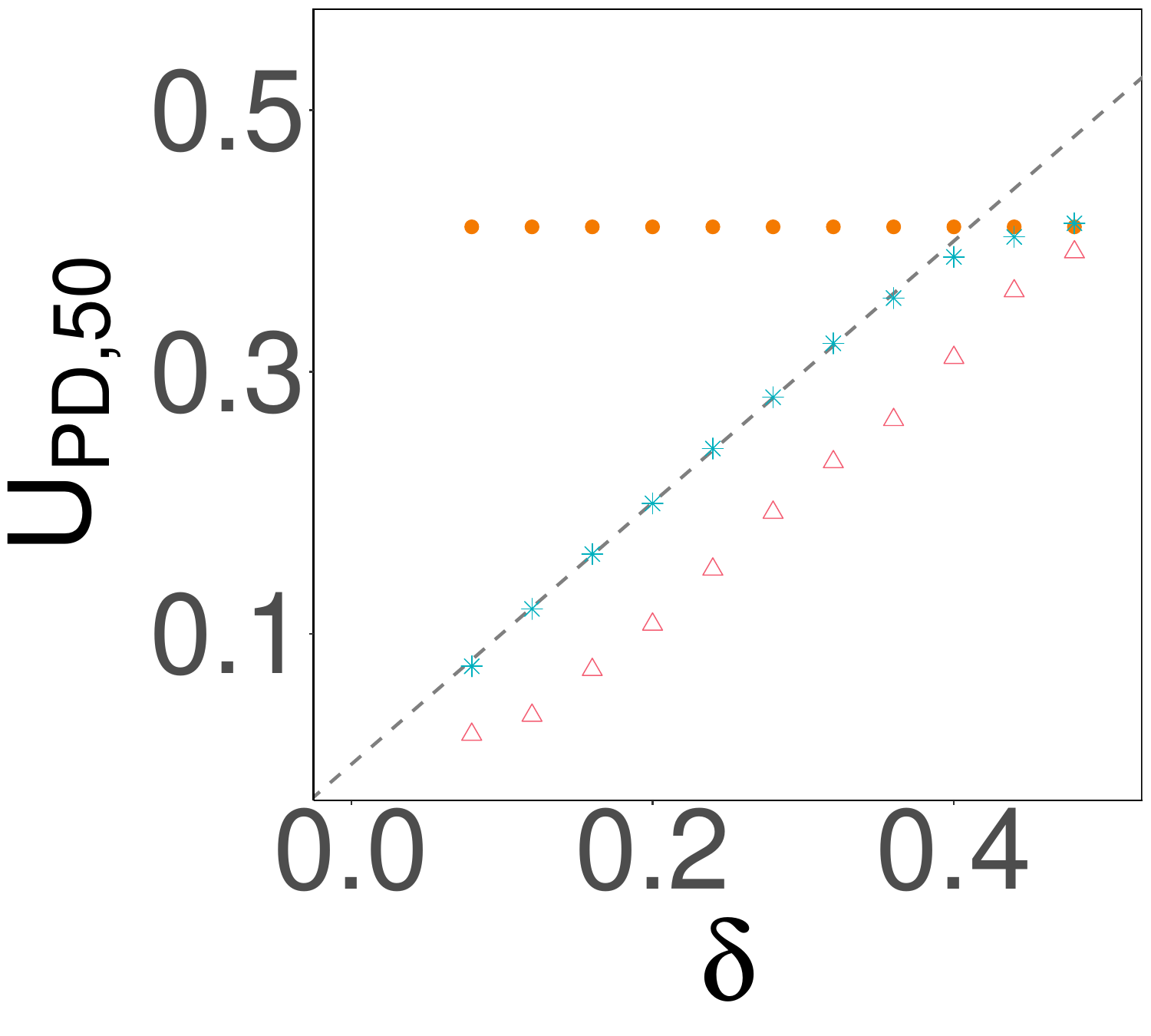} &
			\hspace{\thisgap}\includegraphics[width=\thiswidth]{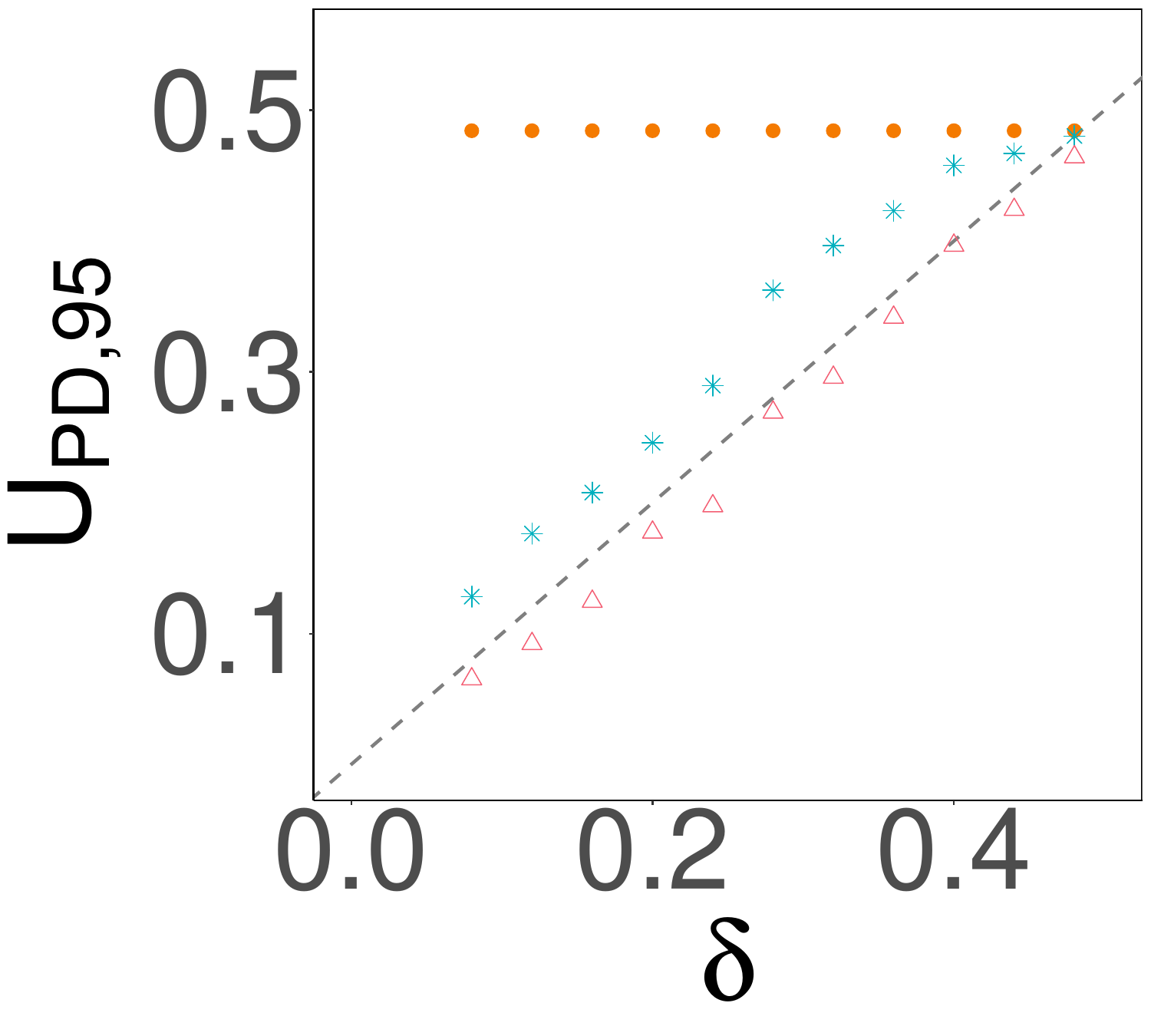} \\
			\hspace{\thisgap}\includegraphics[width=\thiswidth]{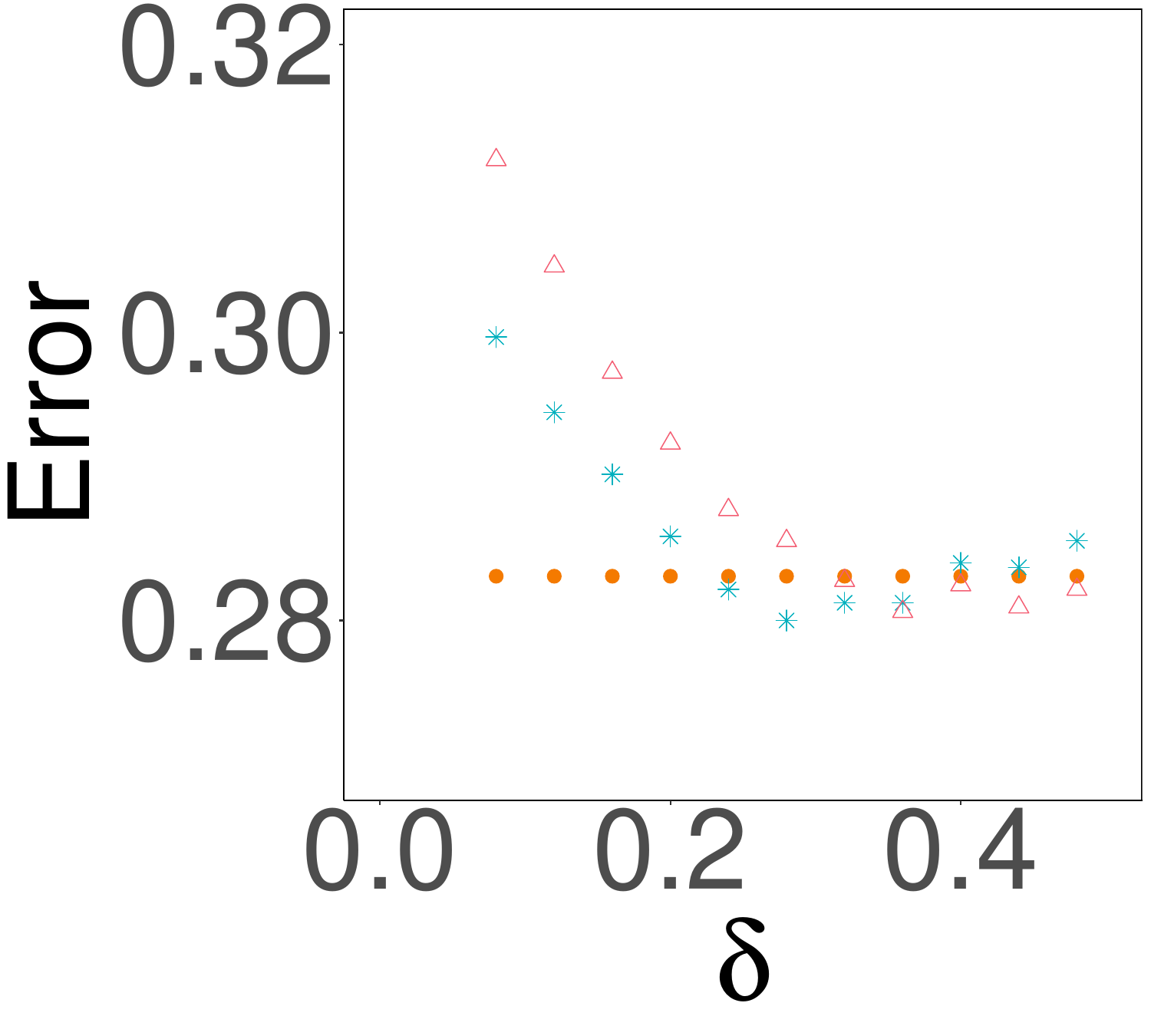} &
			\hspace{\thisgap}\includegraphics[width=\thiswidth]{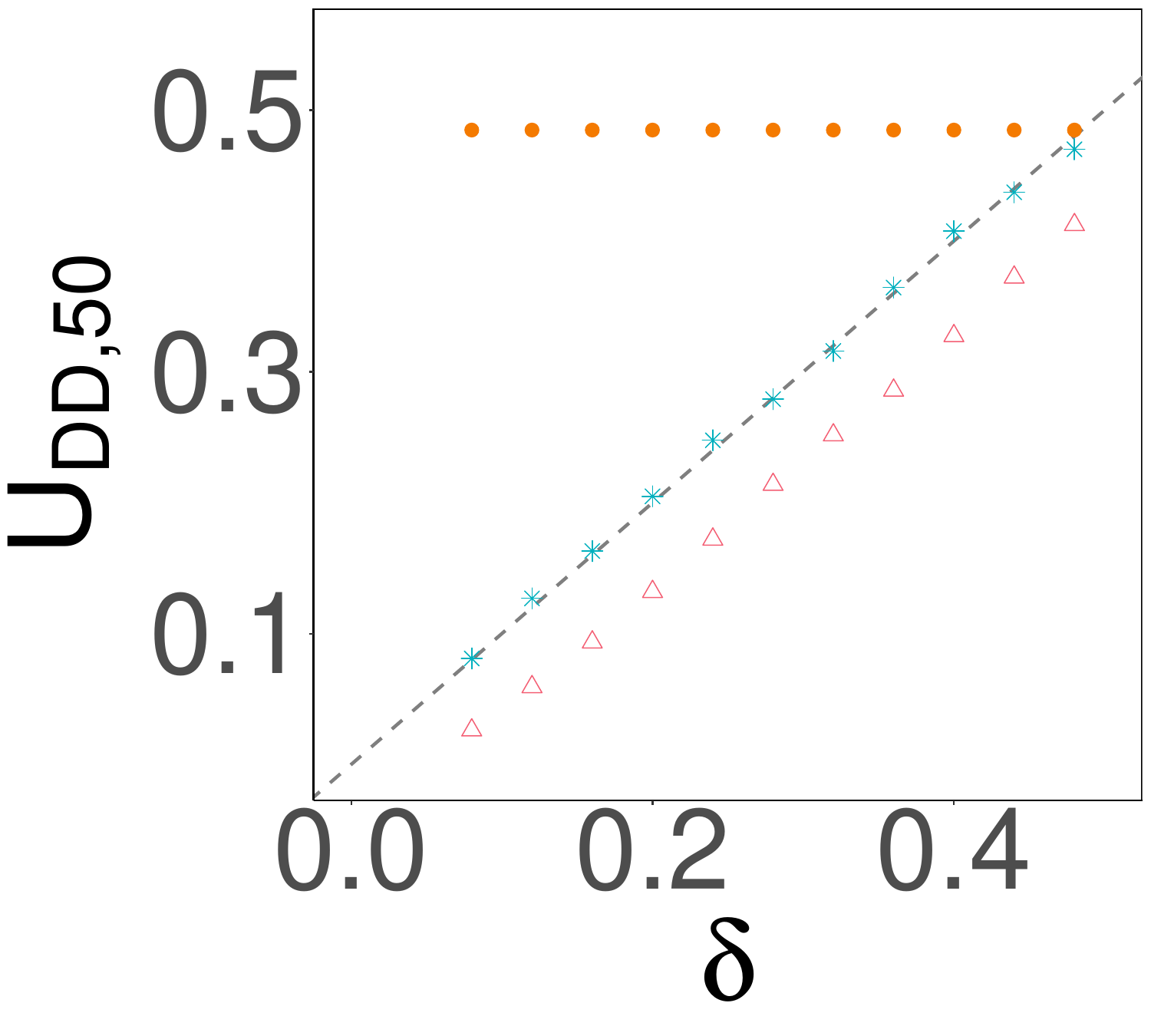} &
			\hspace{\thisgap}\includegraphics[width=\thiswidth]{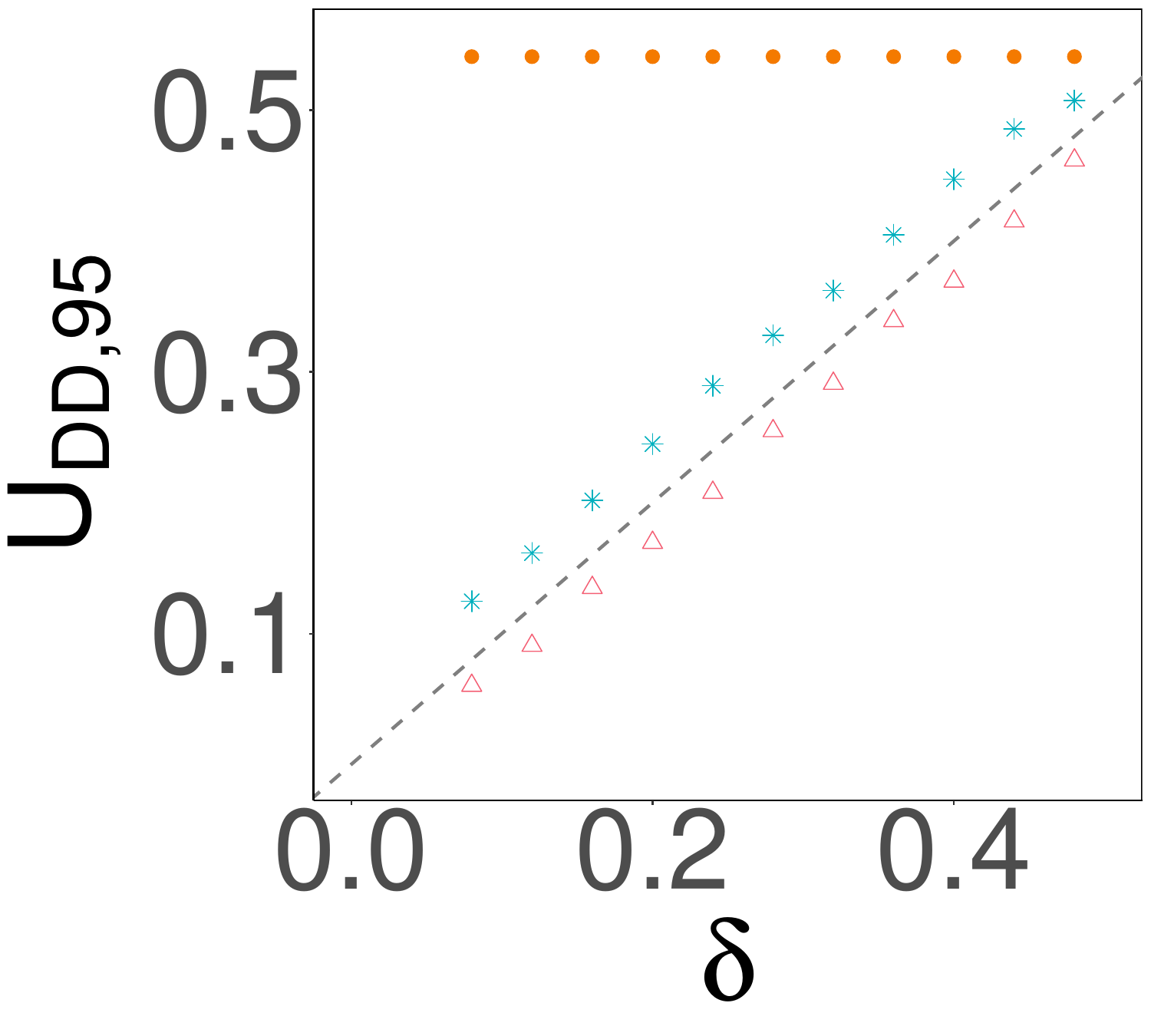} 
		\end{tabular}
		\caption{Results under NHANES with tuned calibration parameters over 100 Monte Carlo trials.}
		\label{fig:nhanes_tune}
	\end{center}
\end{figure*}

\newpage

\section{Proofs for Bayes optimal fairness-aware classifier} \label{appendix_bayes_optimal}
\subsection[]{Proof of \Cref{prop:bilinear_coef}} \label{appendix_proof_bilinear}
\begin{proof}[Proof of \Cref{prop:bilinear_coef}]\leavevmode 
    \begin{itemize}
        \item For DO, we are to show that  $s_{DO, a} = 2a-1$ and $b_{DO, a}=0$.  It can be seen from the following that
    	\begin{align*}
    		DO(f) & = \prob\{ \widehat Y_f(X, 1)=1| A=1, Y=1\} - \prob\{ \widehat Y_f(X, 0)=1 | A=0, Y=1\} \\
    		& = \int_{\Xspace} f(x, 1) \frac{\mathrm{d}P_{1,1}}{\mathrm{d}P_{1,0}}(x)\;\mathrm{d}P_{1,0}(x) - \int_{\Xspace} f(x, 0) \frac{\mathrm{d}P_{0,1}}{\mathrm{d}P_{0,0}}(x) \;\mathrm{d}P_{0,0}(x).
    	\end{align*}

        \item For PD, we are to show that $s_{PD, a}=0$ and $b_{PD, a} = 2a-1$.  It can be seen from the following that
	\begin{align*}
		PD(f) & = \prob\{\widehat Y_f(X, 1)=1|A=1, Y=0\} - \prob\{\widehat Y_f(X, 0)=1|A=0, Y=0\} \\
		& = \int_{\Xspace} f(x, 1) \;\mathrm{d}P_{1,0}(x) - \int_{\Xspace} f(x, 0)\; \mathrm{d}P_{0,0}(x).
	\end{align*}
        
	\item For DD, we are to show that $s_{DD, a} = (2a-1)\pi_{a,1}/\pi_a$ and $b_{DD, a} = (2a-1)\pi_{a,0}/\pi_a$.  It can be seen from the following that
	\begin{align*}
		DD(f)  =\;& \prob\{\widehat Y_f(X, 1)=1| A=1\} - \prob\{\widehat Y_f(X,0)=1 | A=0\} \\
		=\;& \int_{\Xspace} f(x,1)\frac{\pi_{1,1}}{\pi_1}\frac{\mathrm{d}P_{1,1}}{\mathrm{d}P_{1,0}}(x) \;\mathrm{d}P_{1,0}(x) +  \int_{\Xspace} f(x,1)\frac{\pi_{1,0}}{\pi_1}\;\mathrm{d}P_{1,0}(x)  \\
		& - \int_{\Xspace} f(x,0)\frac{\pi_{0,1}}{\pi_0}\frac{\mathrm{d}P_{0,1}}{\mathrm{d}P_{0,0}}(x) \;\mathrm{d}P_{0,0}(x) - \int_{\Xspace} f(x,0)\frac{\pi_{0,0}}{\pi_0}\;\mathrm{d}P_{0,0}(x) \\
		=\;& \int_{\Xspace} f(x,1)\bigg(\frac{\pi_{1,1}}{\pi_1}\frac{\mathrm{d}P_{1,1}}{\mathrm{d}P_{1,0}}(x) + \frac{\pi_{1,0}}{\pi_1} \bigg) \;\mathrm{d}P_{1,0}(x)\\
        &- \int_{\Xspace} f(x,0)\bigg(\frac{\pi_{0,1}}{\pi_0}\frac{\mathrm{d}P_{0,1}}{\mathrm{d}P_{0,0}}(x) + \frac{\pi_{0,0}}{\pi_0} \bigg) \;\mathrm{d}P_{0,0}(x).
	\end{align*}

\end{itemize}
\end{proof}

\subsection[]{Proof of \Cref{thm:fair_bayes_opt}} \label{appendix_proof_optimal_bayes}
\begin{proof}[Proof of \Cref{thm:fair_bayes_opt}]
    If $|D(0)| \le \delta$, the unconstrained Bayes optimal classifier satisfies the fairness constraint. Therefore, we have $\tau_{D, \delta}^\star=0$ and the $\delta$-fair Bayes optimal classifier is given by $f_{D, \delta}^\star = g_{D, 0}$.
    
    If $D(0)>\delta$, by Proposition \ref{prop:D_and_R}, we have $D(\tau_{D, \delta}^\star)=\delta$. Moreover, $\tau_{D, \delta}^\star>0$. By Lemma \ref{lem:fairnessbayesopt_gnp},
    \[ g_{D, \tau_{D, \delta}^\star} = \mathop{\argmin}_{f \in \mathcal F}\bigg\{ R(f) : |D(f)| \le \frac{\tau_{D, \delta}^\star D(\tau_{D, \delta}^\star)}{|\tau_{D, \delta}^\star|} \bigg\} =   \mathop{\argmin}_{f \in \mathcal F}\bigg\{ R(f) : |D(f)| \le \delta  \bigg\}. \]
    
    Analogously, we can establish the claim when $D(0)<-\delta$. This completes the proof.
\end{proof}

\subsection[]{Proof of \Cref{thm_misclassification}} \label{appendix_proof_misclassification_error}
\begin{proof}[Proof of \Cref{thm_misclassification}]
    Let 
    \[ \Lambda_a = \langle X-\mu_{a,0},  \mu_{a,1}-\mu_{a,0} \rangle_{K_a}  = \sum_{j=1}^\infty \frac{(\zeta_{a, j}-\theta_{a,0, j})(\theta_{a, 1, j} - \theta_{a,0, j})}{\lambda_{a,j}}.\]
    Then, by standard properties of Gaussina processes, we have that 
    \begin{align*}
    	\Lambda_a |\{ A=a, Y=0\} &  \sim N\bigg( 0, ~\sum_{j=1}^\infty \frac{ (\theta_{a,1,j} - \theta_{a,0, j})^2 }{\lambda_{a,j}} \bigg),\\
    	\Lambda_a |\{ A=a, Y=1\} & \sim N\bigg( \sum_{j=1}^\infty \frac{ (\theta_{a,1,j} - \theta_{a,0, j})^2 }{\lambda_{a,j}}, ~ \sum_{j=1}^\infty \frac{ (\theta_{a,1,j} - \theta_{a,0, j})^2 }{\lambda_{a,j}} \bigg).
    \end{align*}
    The proposition then follows by a similar argument as the one used in the proof of Theorem 2 in \citet{berrendero2018use} and the format of $f^\star_{D,\delta}$ in \eqref{eq_f_star}.

\end{proof}

\subsection{Auxiliary results} \label{appendix_bayes_optimal_auxiliary}
\begin{proposition}\label{prop:D_and_R}
	Recall that $D(\tau) = D(g_{D, \tau})$, where $g_{D, \tau}$ is defined in \eqref{eq:f_bayesform}. Then, under the assumptions in \Cref{thm:fair_bayes_opt}, the following properties hold.
	\begin{enumerate}
		\item[(i)] The disparity $D(\tau)$ is continuous and non-increasing.
		\item[(ii)]  The misclassification $R(g_{D, \tau})$ is non-increasing on $(-\infty, 0)$ and non-decreasing on $(0, +\infty)$.
	\end{enumerate}
\end{proposition}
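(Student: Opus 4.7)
The plan is to exploit the fact that $g_{D,\tau}$ arises as the pointwise minimizer of a Lagrangian. Combining the expression for $R(f)$ derived in the main text with the bilinear form in Definition \ref{def:linear_disparity}, one obtains
\begin{align*}
R(f) + \tau D(f) = \sum_{a \in \{0,1\}} \int_{\Xspace} f(x,a)\{(\pi_{a,0} + \tau b_{D,a}) - (\pi_{a,1} - \tau s_{D,a})\eta_a(x)\}\,\mathrm{d}P_{a,0}(x) + \prob(Y=1),
\end{align*}
which is minimized over $f \in \mathcal{F}$ by setting $f=1$ exactly where the integrand is negative; this recovers $g_{D,\tau}$ in \eqref{eq:f_bayesform}. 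This variational characterization drives both parts.

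For part (i), I would first establish monotonicity via a two-point comparison: for $\tau_1 < \tau_2$, the Lagrangian optimalities at $\tau_1$ and $\tau_2$ give
\begin{align*}
R(g_{D,\tau_1}) + \tau_1 D(\tau_1) \leq R(g_{D,\tau_2}) + \tau_1 D(\tau_2) \quad \text{and} \quad R(g_{D,\tau_2}) + \tau_2 D(\tau_2) \leq R(g_{D,\tau_1}) + \tau_2 D(\tau_1),
\end{align*}
and adding these yields $(\tau_1 - \tau_2)[D(\tau_1) - D(\tau_2)] \leq 0$, whence $D(\tau_1) \geq D(\tau_2)$. For continuity, let $h_\tau(x,a) = (\pi_{a,1} - \tau s_{D,a})\eta_a(x) - (\pi_{a,0} + \tau b_{D,a})$, so that $g_{D,\tau}(x,a) = \indc\{h_\tau(x,a) \geq 0\}$. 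Since $h_\tau$ is affine in $\tau$ and $\eta_a$ is a continuous random variable under $P_{a,0}$, the level set $\{x : h_\tau(x,a) = 0\}$ has $P_{a,0}$-measure zero for every fixed $\tau$; hence $g_{D,\tau'}(x,a) \to g_{D,\tau}(x,a)$ pointwise $P_{a,0}$-a.e.\ as $\tau' \to \tau$. Dominated convergence, using that $|s_{D,a}\eta_a(\cdot) + b_{D,a}|$ is $P_{a,0}$-integrable (since $\int \eta_a\,\mathrm{d}P_{a,0} = 1$), then gives continuity of $D(\tau)$.

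For part (ii), the plan is to reuse the two-point inequalities, now tracking the sign of $\tau$. Rearranging the first Lagrangian inequality above yields $R(g_{D,\tau_1}) - R(g_{D,\tau_2}) \leq \tau_1[D(\tau_2) - D(\tau_1)]$. For $0 < \tau_1 < \tau_2$, the bracket is non-positive by (i) and $\tau_1 > 0$, so $R(g_{D,\tau_1}) \leq R(g_{D,\tau_2})$, which is the non-decreasing property on $(0,\infty)$. The second inequality rearranges to $R(g_{D,\tau_2}) - R(g_{D,\tau_1}) \leq \tau_2[D(\tau_1) - D(\tau_2)]$; for $\tau_1 < \tau_2 < 0$ the bracket is non-negative and $\tau_2 < 0$, forcing $R(g_{D,\tau_2}) \leq R(g_{D,\tau_1})$ and delivering the non-increasing property on $(-\infty,0)$.

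The main obstacle I anticipate is the continuity step: because the sign of the coefficient $\pi_{a,1} - \tau s_{D,a}$ may change with $\tau$, the convergence $g_{D,\tau'}\to g_{D,\tau}$ cannot be read off from the ratio form $\eta_a(x) \geq (\pi_{a,0}+\tau b_{D,a})/(\pi_{a,1}-\tau s_{D,a})$ of the threshold but must be handled through the affine function $h_\tau$, and the assumption that $\eta_a$ is a continuous random variable must be invoked carefully to certify that $\{h_\tau = 0\}$ is $P_{a,0}$-null at every $\tau$.
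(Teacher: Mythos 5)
Your proof is correct and takes a genuinely different route from the paper's. The paper proves both parts by direct computation: it partitions $\Xspace$ into the sets $\mathcal{E}_{a,+}$ and $\mathcal{E}_{a,-}$ where $s_{\mathrm{D},a}\,\eta_a(x)+b_{\mathrm{D},a}$ is positive or negative, writes $g_{\mathrm{D},\tau_1}-g_{\mathrm{D},\tau_2}$ explicitly as an indicator of an interval in the thresholding ratio, and verifies the sign of the resulting integrand case by case. Your proof instead exploits the variational identity that $g_{\mathrm{D},\tau}$ minimizes the Lagrangian $R(\cdot)+\tau\,\mathrm{D}(\cdot)$ over $f\in\mathcal{F}$ (which you verify cleanly from \Cref{l:risk_densityratio} and \Cref{def:linear_disparity}), and then obtains both the monotonicity of $\mathrm{D}(\tau)$ and the unimodality of $R(g_{\mathrm{D},\tau})$ from the two standard cross-comparison inequalities — the observation that, for a family of linear programs indexed by a multiplier, the optimal value is monotone in the multiplier. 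This buys a shorter, more conceptual argument that makes the proposition a direct corollary of the same Neyman--Pearson structure behind \Cref{thm:fair_bayes_opt}, rather than a separate hands-on computation. For continuity, your handling through the affine function $h_\tau(x,a)=(\pi_{a,1}-\tau s_{\mathrm{D},a})\eta_a(x)-(\pi_{a,0}+\tau b_{\mathrm{D},a})$ is, if anything, more careful than the paper's one-line assertion: the paper simply states that $\tau\mapsto\prob(\cdot)$ is continuous because $\eta_a$ is a continuous random variable, whereas you make explicit the pointwise a.e.\ convergence $g_{\mathrm{D},\tau'}\to g_{\mathrm{D},\tau}$ and the dominated-convergence step. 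The one implicit assumption shared by both proofs — that $\{h_\tau=0\}$ is $P_{a,0}$-null for every $\tau$ — requires that $\pi_{a,1}-\tau s_{\mathrm{D},a}$ and $\pi_{a,0}+\tau b_{\mathrm{D},a}$ do not vanish simultaneously; this is readily checked for $\mathrm{D}\in\{\mathrm{DO},\mathrm{PD},\mathrm{DD}\}$ and is glossed over in the paper as well, so it does not represent a gap peculiar to your approach.
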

\begin{proof}[Proof of Proposition \ref{prop:D_and_R}.]\leavevmode 
    \begin{enumerate}
	\item[(i)] Note that by Definition \ref{def:linear_disparity}, 
	\begin{align*}
		D(\tau) & = \sum_{a \in \{0,1\}} \int_{\Xspace} g_{D, \tau}(x,a) \bigg\{s_{D, a} \frac{\mathrm{d}P_{a,1}}{\mathrm{d}P_{a,0}}(x) + b_{D,a}\bigg\} \mathrm{d}P_{a,0}(x).		
	\end{align*}
	Since $\mathrm{d}P_{a,1}/\mathrm{d}P_{a,0}(x)$ is a continuous random variable given $A=a \in \{0,1\}$ and $Y=y \in \{0, 1\}$, we have that the function $\tau \mapsto \prob_{X|A=a, Y=y}\big( (\pi_{a,1} - \tau s_{D,a})\mathrm{d}P_{a,1}/\mathrm{d}P_{a,0}(x) > \pi_{a,0} + \tau b_{D,a} \big) $ is continuous for $a\in \{0,1\}$ and $y \in \{0, 1\}$. Thus, the function $\tau \mapsto D(\tau)$ is continuous.
	
	Define 
	\[ \mathcal E_{a, +} = \bigg\{x\in \Xspace: s_{D, a} \frac{\mathrm{d}P_{a,1}}{\mathrm{d}P_{a,0}}(x) + b_{D,a} >0  \bigg\}, ~~ \mathcal E_{a, -} = \bigg\{ x \in \Xspace: s_{D, a} \frac{\mathrm{d}P_{a,1}}{\mathrm{d}P_{a,0}}(x) + b_{D,a} <0  \bigg\}. \]
	Let $\tau_1 < \tau_2$. For $a \in \{0, 1\}$ and $x \in \Xspace$,
	\begin{align*}
		g_{D, \tau_1}(x, a) - g_{D, \tau_2}(x, a) = \left\{ 
		\begin{array}{cc}
			\indc\bigg\{\tau_1 < \frac{\pi_{a,1} \frac{\mathrm{d}P_{a,1}}{\mathrm{d}P_{a,0}}(x) - \pi_{a,0} }{s_{D, a} \frac{\mathrm{d}P_{a,1}}{\mathrm{d}P_{a,0}}(x) + b_{D,a} } \le \tau_2 \bigg\}, & x \in \mathcal E_{a, +} ; \\
			-\indc\bigg\{\tau_1 \le \frac{\pi_{a,1} \frac{\mathrm{d}P_{a,1}}{\mathrm{d}P_{a,0}}(x) - \pi_{a,0} }{s_{D, a} \frac{\mathrm{d}P_{a,1}}{\mathrm{d}P_{a,0}}(x) + b_{D,a} } < \tau_2 \bigg\}, & x \in \mathcal E_{a, -} ; \\
			0, & \text{otherwise}.
		\end{array}
		\right.
	\end{align*}
	We then have 
	\begin{align*}
		& \qquad D(\tau_1) - D(\tau_2) \\
		& =  \sum_{a \in \{0,1\}} \int_{\Xspace}\{ g_{D, \tau_1}(x, a) - g_{D, \tau_2}(x, a)  \} \bigg\{s_{D, a} \frac{\mathrm{d}P_{a,1}}{\mathrm{d}P_{a,0}}(x) + b_{D,a}\bigg\} \mathrm{d}P_{a,0}(x) \\
		& =  \sum_{a \in \{0,1\}} \int_{x \in \mathcal E_{a, +}} \indc\bigg\{\tau_1 < \frac{\pi_{a,1} \frac{\mathrm{d}P_{a,1}}{\mathrm{d}P_{a,0}}(x) - \pi_{a,0} }{s_{D, a} \frac{\mathrm{d}P_{a,1}}{\mathrm{d}P_{a,0}}(x) + b_{D,a} } \le \tau_2 \bigg\} \bigg\{s_{D, a} \frac{\mathrm{d}P_{a,1}}{\mathrm{d}P_{a,0}}(x) + b_{D,a}\bigg\} \mathrm{d}P_{a,0}(x) \\
		&  \quad -  \sum_{a \in \{0,1\}} \int_{x \in \mathcal E_{a, -}} \indc\bigg\{\tau_1 \le \frac{\pi_{a,1} \frac{\mathrm{d}P_{a,1}}{\mathrm{d}P_{a,0}}(x) - \pi_{a,0} }{s_{D, a} \frac{\mathrm{d}P_{a,1}}{\mathrm{d}P_{a,0}}(x) + b_{D,a} } < \tau_2 \bigg\} \bigg\{s_{D, a} \frac{\mathrm{d}P_{a,1}}{\mathrm{d}P_{a,0}}(x) + b_{D,a}\bigg\} \mathrm{d}P_{a,0}(x) \\
		& \ge 0.
	\end{align*}
	Consequently, the function $\tau \mapsto D(\tau)$ is non-increasing.
	
	\item[(ii)] We first consider $\tau_1 < \tau_2 <0$.  
	If $x \in \mathcal E_{a, +}$, 
	\begin{align*}
		& \indc\bigg\{\tau_1 < \frac{\pi_{a,1} \frac{\mathrm{d}P_{a,1}}{\mathrm{d}P_{a,0}}(x) - \pi_{a,0} }{s_{D, a} \frac{\mathrm{d}P_{a,1}}{\mathrm{d}P_{a,0}}(x) + b_{D,a} } \le \tau_2 \bigg\} \bigg\{ \pi_{a,0} - \pi_{a,1} \frac{\mathrm{d}P_{a,1}}{\mathrm{d}P_{a,0}}(x) \bigg\} \\
		& \ge -\tau_2 \bigg\{ s_{D, a} \frac{\mathrm{d}P_{a,1}}{\mathrm{d}P_{a,0}}(x) + b_{D,a}  \bigg\} \indc\bigg\{\tau_1 < \frac{\pi_{a,1} \frac{\mathrm{d}P_{a,1}}{\mathrm{d}P_{a,0}}(x) - \pi_{a,0} }{s_{D, a} \frac{\mathrm{d}P_{a,1}}{\mathrm{d}P_{a,0}}(x) + b_{D,a} }  \le \tau_2 \bigg\}  \ge 0. 
	\end{align*}
	
	If $x \in \mathcal E_{a, -}$,
	\begin{align*}
		& \indc\bigg\{\tau_1 \le \frac{\pi_{a,1} \frac{\mathrm{d}P_{a,1}}{\mathrm{d}P_{a,0}}(x) - \pi_{a,0} }{s_{D, a} \frac{\mathrm{d}P_{a,1}}{\mathrm{d}P_{a,0}}(x) + b_{D,a} } < \tau_2 \bigg\} \bigg\{ \pi_{a,0} - \pi_{a,1} \frac{\mathrm{d}P_{a,1}}{\mathrm{d}P_{a,0}}(x) \bigg\} \\
		& \le -\tau_2 \bigg\{s_{D, a} \frac{\mathrm{d}P_{a,1}}{\mathrm{d}P_{a,0}}(x) + b_{D,a}  \bigg\}\indc\bigg\{\tau_1 \le \frac{\pi_{a,1} \frac{\mathrm{d}P_{a,1}}{\mathrm{d}P_{a,0}}(x) - \pi_{a,0} }{s_{D, a} \frac{\mathrm{d}P_{a,1}}{\mathrm{d}P_{a,0}}(x) + b_{D,a} } < \tau_2 \bigg\} \le 0.  
	\end{align*}
	
	Then, by \Cref{l:risk_densityratio}, it holds that
	\begin{align*}
		& \qquad R(g_{D, \tau_1}) - R(g_{D, \tau_2}) \\
		& = \sum_{a \in \{0,1\}} \int_{\Xspace}\{ g_{D, \tau_1}(x, a) - g_{D, \tau_2}(x, a)  \} \bigg\{ \pi_{a,0} - \pi_{a,1} \frac{\mathrm{d}P_{a,1}}{\mathrm{d}P_{a,0}}(x) \bigg\} \mathrm{d}P_{a,0}(x) \\
		& =  \sum_{a \in \{0,1\}} \int_{x \in \mathcal E_{a, +}} \indc\bigg\{\tau_1 < \frac{\pi_{a,1} \frac{\mathrm{d}P_{a,1}}{\mathrm{d}P_{a,0}}(x) - \pi_{a,0} }{s_{D, a} \frac{\mathrm{d}P_{a,1}}{\mathrm{d}P_{a,0}}(x) + b_{D,a} } \le \tau_2 \bigg\} \bigg\{ \pi_{a,0} - \pi_{a,1} \frac{\mathrm{d}P_{a,1}}{\mathrm{d}P_{a,0}}(x) \bigg\} \mathrm{d}P_{a,0}(x) \\
		&  \quad -  \sum_{a \in \{0,1\}} \int_{x \in \mathcal E_{a, -}} \indc\bigg\{\tau_1 \le \frac{\pi_{a,1} \frac{\mathrm{d}P_{a,1}}{\mathrm{d}P_{a,0}}(x) - \pi_{a,0} }{s_{D, a} \frac{\mathrm{d}P_{a,1}}{\mathrm{d}P_{a,0}}(x) + b_{D,a} } < \tau_2 \bigg\} \bigg\{ \pi_{a,0} - \pi_{a,1} \frac{\mathrm{d}P_{a,1}}{\mathrm{d}P_{a,0}}(x) \bigg\} \mathrm{d}P_{a,0}(x) \\
		& \ge 0.
	\end{align*}
	Therefore, $\tau \mapsto R(g_{D, \tau})$ is non-increasing on $(-\infty, 0)$.
	
	Consider $0 \le \tau_1 < \tau_2$.
	If $x \in \mathcal E_{a, +}$, 
	\begin{align*}
		& \indc\bigg\{\tau_1 < \frac{\pi_{a,1} \frac{\mathrm{d}P_{a,1}}{\mathrm{d}P_{a,0}}(x) - \pi_{a,0} }{s_{D, a} \frac{\mathrm{d}P_{a,1}}{\mathrm{d}P_{a,0}}(x) + b_{D,a} } \le \tau_2 \bigg\} \bigg\{ \pi_{a,0} - \pi_{a,1} \frac{\mathrm{d}P_{a,1}}{\mathrm{d}P_{a,0}}(x) \bigg\} \\
		& \le -\tau_1 \bigg\{ s_{D, a} \frac{\mathrm{d}P_{a,1}}{\mathrm{d}P_{a,0}}(x) + b_{D,a}  \bigg\} \indc\bigg\{\tau_1 < \frac{\pi_{a,1} \frac{\mathrm{d}P_{a,1}}{\mathrm{d}P_{a,0}}(x) - \pi_{a,0} }{s_{D, a} \frac{\mathrm{d}P_{a,1}}{\mathrm{d}P_{a,0}}(x) + b_{D,a} }  \le \tau_2 \bigg\}  \le 0. 
	\end{align*}
	
	If $x \in \mathcal E_{a, -}$,
	\begin{align*}
		& \indc\bigg\{\tau_1 \le \frac{\pi_{a,1} \frac{\mathrm{d}P_{a,1}}{\mathrm{d}P_{a,0}}(x) - \pi_{a,0} }{s_{D, a} \frac{\mathrm{d}P_{a,1}}{\mathrm{d}P_{a,0}}(x) + b_{D,a} } < \tau_2 \bigg\} \bigg\{ \pi_{a,0} - \pi_{a,1} \frac{\mathrm{d}P_{a,1}}{\mathrm{d}P_{a,0}}(x) \bigg\} \\
		& \ge -\tau_1 \bigg\{s_{D, a} \frac{\mathrm{d}P_{a,1}}{\mathrm{d}P_{a,0}}(x) + b_{D,a}  \bigg\}\indc\bigg\{\tau_1 \le \frac{\pi_{a,1} \frac{\mathrm{d}P_{a,1}}{\mathrm{d}P_{a,0}}(x) - \pi_{a,0} }{s_{D, a} \frac{\mathrm{d}P_{a,1}}{\mathrm{d}P_{a,0}}(x) + b_{D,a} } < \tau_2 \bigg\} \ge 0.  
	\end{align*}
	Then, we have
	\begin{align*}
		& \qquad R(g_{D, \tau_1}) - R(g_{D, \tau_2}) \\
		& = \sum_{a \in \{0,1\}} \int_{\Xspace}\{ g_{D, \tau_1}(x, a) - g_{D, \tau_2}(x, a)  \} \bigg\{ \pi_{a,0} - \pi_{a,1} \frac{\mathrm{d}P_{a,1}}{\mathrm{d}P_{a,0}}(x) \bigg\} \mathrm{d}P_{a,0}(x) \\
		& =  \sum_{a \in \{0,1\}} \int_{x \in \mathcal E_{a, +}} \indc\bigg\{\tau_1 < \frac{\pi_{a,1} \frac{\mathrm{d}P_{a,1}}{\mathrm{d}P_{a,0}}(x) - \pi_{a,0} }{s_{D, a} \frac{\mathrm{d}P_{a,1}}{\mathrm{d}P_{a,0}}(x) + b_{D,a} } \le \tau_2 \bigg\} \bigg\{ \pi_{a,0} - \pi_{a,1} \frac{\mathrm{d}P_{a,1}}{\mathrm{d}P_{a,0}}(x) \bigg\} \mathrm{d}P_{a,0}(x) \\
		&  \quad -  \sum_{a \in \{0,1\}} \int_{x \in \mathcal E_{a, -}} \indc\bigg\{\tau_1 \le \frac{\pi_{a,1} \frac{\mathrm{d}P_{a,1}}{\mathrm{d}P_{a,0}}(x) - \pi_{a,0} }{s_{D, a} \frac{\mathrm{d}P_{a,1}}{\mathrm{d}P_{a,0}}(x) + b_{D,a} } < \tau_2 \bigg\} \bigg\{ \pi_{a,0} - \pi_{a,1} \frac{\mathrm{d}P_{a,1}}{\mathrm{d}P_{a,0}}(x) \bigg\} \mathrm{d}P_{a,0}(x) \\
		& \le 0.
	\end{align*}
	Therefore, $\tau \mapsto R(g_{D, \tau})$ is non-decreasing on $[0, +\infty)$.
	
\end{enumerate}
\end{proof}

\begin{lemma}\label{l:risk_densityratio}
    For any classifier $f: \Xspace\times \Aspace \rightarrow [0,1]$, we have
    \[R(f) = \sum_{a \in \{0,1\}} \int_{\Xspace} f(x,a) \bigg\{ \pi_{a,0} - \pi_{a,1} \frac{\mathrm{d}P_{a,1}}{\mathrm{d}P_{a,0}}(x) \bigg\} \mathrm{d}P_{a,0}(x) +  \prob(Y=1).\]
\end{lemma}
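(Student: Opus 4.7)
The plan is to unpack the misclassification event by conditioning sequentially on $(A,Y)$ and on the functional feature $X$, and then to turn every integral over $P_{a,1}$ into one over $P_{a,0}$ via the Radon--Nikodym derivative (which is well defined since $P_{a,1}\sim P_{a,0}$ was assumed in \Cref{section_setup}).

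First I would write
\[
R(f)=\mathbb{P}(\widehat{Y}_f=1,Y=0)+\mathbb{P}(\widehat{Y}_f=0,Y=1)=\mathbb{P}(\widehat{Y}_f=1,Y=0)+\mathbb{P}(Y=1)-\mathbb{P}(\widehat{Y}_f=1,Y=1),
\]
so that the additive $\mathbb{P}(Y=1)$ in the claimed identity immediately appears and it only remains to handle the two joint probabilities $\mathbb{P}(\widehat{Y}_f=1,Y=y)$ for $y\in\{0,1\}$. Next, partitioning on $A$ and using the tower property together with \Cref{def:random_classifier}, I would show that
\[
\mathbb{P}(\widehat{Y}_f=1,Y=y)=\sum_{a\in\{0,1\}}\pi_{a,y}\,\mathbb{E}\bigl[f(X,a)\,\big|\,A=a,Y=y\bigr]=\sum_{a\in\{0,1\}}\pi_{a,y}\int_{\Xspace}f(x,a)\,\mathrm{d}P_{a,y}(x),
\]
where in the middle equality I used that $\mathbb{P}(\widehat Y_f=1\mid X,A=a,Y=y)=f(X,a)$ by the definition of the randomised classifier, and in the last equality I invoked the definition of $P_{a,y}$ as the conditional law of $X$ given $A=a,Y=y$.

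Then, for the $y=1$ term, I would apply the change-of-measure identity $\mathrm{d}P_{a,1}=(\mathrm{d}P_{a,1}/\mathrm{d}P_{a,0})\,\mathrm{d}P_{a,0}$, which is licensed by $P_{a,1}\sim P_{a,0}$, to rewrite
\[
\pi_{a,1}\int_{\Xspace}f(x,a)\,\mathrm{d}P_{a,1}(x)=\pi_{a,1}\int_{\Xspace}f(x,a)\,\frac{\mathrm{d}P_{a,1}}{\mathrm{d}P_{a,0}}(x)\,\mathrm{d}P_{a,0}(x).
\]
Substituting back and grouping the two integrals against $\mathrm{d}P_{a,0}$ into a single one yields exactly the stated identity. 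There is no genuine obstacle; the only care needed is the bookkeeping around the tower property and the change of measure, both of which are standard. This derivation never uses the Gaussian or fairness structure, so the lemma holds for any classifier $f\in\mathcal{F}$ in the setup of \Cref{section_setup}.
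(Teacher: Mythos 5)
Your proposal is correct and follows essentially the same route as the paper's proof: condition on $(A,Y)$, use the randomised-classifier definition to turn $\mathbb{P}(\widehat Y_f=1\mid X,A,Y)$ into $f(X,a)$, pass to integrals against $P_{a,y}$, and apply the change of measure $\mathrm{d}P_{a,1}=(\mathrm{d}P_{a,1}/\mathrm{d}P_{a,0})\,\mathrm{d}P_{a,0}$. The only cosmetic difference is that you isolate the constant $\mathbb{P}(Y=1)$ up front via $\mathbb{P}(\widehat Y_f=0,Y=1)=\mathbb{P}(Y=1)-\mathbb{P}(\widehat Y_f=1,Y=1)$, whereas the paper lets it fall out from the $\mathbb{E}[1-f(X,a)\mid A=a,Y=1]$ expansion; both are the same bookkeeping.
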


\begin{proof}
    By definition, we have that 
    \begin{align*}
	R(f) =\;& \sum_{a, y \in \{0,1\}} \prob\{\widehat Y_f(X, a) \neq y | A=a, Y = y\}P(A=a, Y=y) \nonumber \\
	=\;& \sum_{a \in \{0,1\}} \big[ \mean\{1 - f(X, a) | A=a, Y = 1\}P(A=a, Y=1) \\
    &+  \mean\{f(X, a) | A=a, Y = 0\}P(A=a, Y=0)  \big] \nonumber\\
	=\;& \sum_{a \in \{0,1\}} \bigg\{ \int_{\Xspace} f(x,a)\mathrm{d}P_{a,0}(x) \pi_{a,0} - \int_{\Xspace} f(x,a)\mathrm{d}P_{a,1}(x)\pi_{a,1} \bigg\} +  \sum_{a \in \{0,1\}} \prob(A=a, Y=1) \nonumber\\
	=\;&  \sum_{a \in \{0,1\}} \int_{\Xspace} f(x,a) \bigg\{ \pi_{a,0} - \pi_{a,1} \frac{\mathrm{d}P_{a,1}}{\mathrm{d}P_{a,0}}(x) \bigg\} \mathrm{d}P_{a,0}(x) +  \prob(Y=1).
\end{align*}
\end{proof}

\begin{lemma} \label{l_range_tau}
    For the bilinear disparity measures DO, PD and DD, it holds that 
    \begin{align*}
        \pi_{a,1} - \tau_{D, \delta}^\star s_{D,a}>0, \;\; \text{and} \;\;\pi_{a,0}+\tau_{D, \delta}^\star b_{D,a}>0,
    \end{align*}
    with $s_{D,a}$ and $b_{D,a}$ defined in \Cref{def:linear_disparity}.
\end{lemma}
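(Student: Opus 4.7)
The plan is to treat each of the three disparity measures $\mathrm{DO}$, $\mathrm{PD}$, $\mathrm{DD}$ separately. For each, I first identify the feasible interval $\mathcal{I}_{\mathrm{D}}$ on which both strict inequalities hold, namely where $\pi_{a,1} - \tau s_{\mathrm{D},a} > 0$ and $\pi_{a,0} + \tau b_{\mathrm{D},a} > 0$ for both $a \in \{0,1\}$. Substituting the coefficients from \Cref{prop:bilinear_coef} gives
\[
\mathcal{I}_{\mathrm{DO}} = (-\pi_{0,1},\, \pi_{1,1}), \quad
\mathcal{I}_{\mathrm{PD}} = (-\pi_{1,0},\, \pi_{0,0}), \quad
\mathcal{I}_{\mathrm{DD}} = (-\pi_0 \wedge \pi_1,\, \pi_0 \wedge \pi_1),
\]
each of which contains $0$ by \Cref{a_class_prob}. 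Hence the claim is immediate when $\tau_{\mathrm{D},\delta}^\star = 0$, and by the symmetry between the two sides of the interval I focus on the positive branch $\tau_{\mathrm{D},\delta}^\star > 0$ and the right endpoint $\tau_+ = \sup \mathcal{I}_{\mathrm{D}}$.

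The crux is a direct computation of $\mathrm{D}$ at the endpoint $\tau_+$. At $\tau = \tau_+$, one of the coefficients $\pi_{a,1} - \tau s_{\mathrm{D},a}$ or $\pi_{a,0} + \tau b_{\mathrm{D},a}$ vanishes for some group $a$, so the definition \eqref{eq:f_bayesform} forces the classifier $g_{\mathrm{D},\tau_+}(\cdot, a)$ to become constant. In each case the constant takes a value that yields $\mathrm{D}(\tau_+) \leq 0$: for $\mathrm{DO}$ the $A=1$ classifier collapses to $0$, giving $\mathrm{DO}(\tau_+) = -\mathbb{P}(\widehat{Y}=1 \mid A=0, Y=1) \leq 0$; for $\mathrm{PD}$ the $A=0$ classifier collapses to $1$, giving $\mathrm{PD}(\tau_+) = \mathbb{P}(\widehat{Y}=1 \mid A=1, Y=0) - 1 \leq 0$; and for $\mathrm{DD}$ a short case split on $\arg\min_a \pi_a$ produces the analogous inequality in all three sub-cases, including the equality case $\pi_0 = \pi_1$.

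Combining this bound with \Cref{prop:D_and_R}, which asserts that $\mathrm{D}(\cdot)$ is continuous and non-increasing, yields the conclusion. Indeed, when $\tau_{\mathrm{D},\delta}^\star > 0$ the analysis behind \Cref{thm:fair_bayes_opt} forces $\mathrm{D}(\tau_{\mathrm{D},\delta}^\star) = \delta > 0 \geq \mathrm{D}(\tau_+)$, and non-increasingness then gives $\tau_{\mathrm{D},\delta}^\star < \tau_+$ strictly; the left endpoint is handled by the symmetric argument at $\tau_- = \inf \mathcal{I}_{\mathrm{D}}$. The main obstacle I foresee is the corner $\delta = 0$, where the chain $\mathrm{D}(\tau_+) \leq 0 = \delta$ only delivers $\tau_{\mathrm{D},\delta}^\star \leq \tau_+$ rather than a strict inequality. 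Ruling this out requires either a mild non-degeneracy condition (for instance that $\mathrm{D}$ is strictly decreasing near $\tau_+$, which holds whenever the degenerate endpoint classifier is not itself already the unconstrained Bayes rule) or the convention that degenerate endpoint values are excluded from the argmin in \eqref{eq:taustar}.
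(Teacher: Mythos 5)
Your proof takes essentially the same approach as the paper's: evaluate $\mathrm{D}$ at the endpoints of the feasibility interval, where one group's classifier degenerates to a constant, show the sign is non-positive at the right endpoint and non-negative at the left, and then invoke the continuity and monotonicity of $\mathrm{D}(\cdot)$ from \Cref{prop:D_and_R} to place $\tau^\star_{\mathrm{D},\delta}$ strictly in the interior. The corner case $\delta = 0$ you flag is a genuine subtlety that the paper's proof also leaves implicit; it is harmless in the settings where the lemma is used, since under the Gaussian model of Assumption~\ref{a_data} the random variable $\log\eta_a(X)$ is normal with positive variance (Assumption~\ref{a_data}\ref{a_data_snr}), so $\eta_a(X)$ has unbounded support, the endpoint disparity values are strictly one-signed, and the intermediate-value crossing remains strict even when $\delta = 0$.
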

\begin{proof} \leavevmode 
    \begin{enumerate}
        \item \textbf{DO:} In this case, it suffices to show that $- \pi_{0,1} < \tau_{DO, \delta}^\star < \pi_{1,1}$.
        If $\tau=\pi_{1,1}$, then $g_{DO, \pi_{1,1}}(x, 1) = 0$ for all $x \in \Xspace$, and,
        \[ DO(\pi_{1,1}) = DO(g_{DO, \pi_{1,1}}) = -\prob\Big\{ \big(\pi_{0,1} + \pi_{1,1}\big) \frac{\mathrm{d}P_{0,1}}{\mathrm{d}P_{0,0}}(x) > \pi_{0,0} \Big| A=0, Y=1\Big\} \le 0. \]
        If $\tau = -\pi_{0,1}$, then $g_{DO, -\pi_{0,1}}(x, 0) = 0$ for all  $x \in \Xspace$. Then 
        \[  DO(g_{DO, -\pi_{0,1}}) = \prob\Big\{ \big(\pi_{1,1} + \pi_{0,1}\big) \frac{\mathrm{d}P_{1,1}}{\mathrm{d}P_{1,0}}(x) > \pi_{1,0} \Big| A=1, Y=1\Big\} \ge 0.  \]
        Note that if $|DO(0)| \le \delta$, then $\tau_{DO, \delta}^\star =0$. By Proposition \ref{prop:D_and_R}(i), if $DO(0)>\delta$, then $0~<~\tau_{DO, \delta}^\star$ $<\pi_{1,1} $. If $DO(0) < -\delta$, then $-\pi_{0,1} < \tau_{DO, \delta}^\star <0$. Therefore, we conclude that $- \pi_{0,1} < \tau_{DO, \delta}^\star < \pi_{1,1}$.

        \item \textbf{PD} In this case, it suffices to show that $- \pi_{1,0} < \tau_{PD, \delta}^\star < \pi_{0,0}$. Note that if $\tau = -\pi_{1,0}$, we have $g_{PD, -\pi_{1,0}}(x, 1) = 1$ for all $x \in \Xspace$. Then,
        \[ PD(g_{PD, -\pi_{1,0}}) = 1 - \prob\Big\{ \pi_{0,1} \frac{\mathrm{d}P_{0,1}}{\mathrm{d}P_{0,0}}(x) > \pi_{0,0} + \pi_{1,0} \Big| A=0, Y=0 \Big\} \ge 0. \]
        If $\tau = \pi_{0,0}$, we have $g_{PD, \pi_{0,0}}(x, 0) = 1$ for all $x \in \Xspace$. Then,
        \[ PD(g_{PD, \pi_{0,0}}) = \prob\Big\{ \pi_{1,1} \frac{\mathrm{d}P_{1,1}}{\mathrm{d}P_{1,0}}(x) > \pi_{1,0} + \pi_{0,0} \Big| A=1, Y=0 \Big\} - 1 \le 0. \]
        
        Note that if $|PD(0)| \le \delta$, then $\tau_{PD, \delta}^\star =0$. By Proposition \ref{prop:D_and_R} (i), if $PD(0)>\delta$, then $0< \tau_{PD, \delta}^\star < \pi_{0,0} $. If $PD(0) < -\delta$, then $-\pi_{1,0} < \tau_{PD, \delta}^\star <0$. Therefore, we conclude that $- \pi_{1,0} < \tau_{PD, \delta}^\star < \pi_{0,0}$. 

         \item \textbf{DD:}  In this case, it suffices to show that $|\tau_{DD, \delta}^\star| < \min\{\pi_0, \pi_1 \}$. Note that if $\tau = \pi_1$ then $g_{DD, \pi_1}(x, 1) = 0$ for all $x \in \Xspace$. Also, 
        \begin{align*}
        DD(\pi_1) & = - \prob\Big\{ \big(\pi_{0,1} + \frac{\pi_1 \pi_{0,1}}{\pi_0}\big) \frac{\mathrm{d}P_{0,1}}{\mathrm{d}P_{0,0}}(x) > \pi_{0,0} - \frac{\pi_1 \pi_{0,0}}{\pi_0}  \Big| A=0\Big\}  \\
        &  \left\{ \begin{array}{cc}
        	= -1, & \pi_1 \ge \pi_0, \\
        	\le 0, & \pi_1 < \pi_0.
        \end{array} \right.
        \end{align*}
        If $\tau = \pi_0$, then $g_{DD, \pi_0}(x, 0) = 1$ for all $x \in \Xspace$. Then
        \[ DD(\pi_0) =  \prob\Big\{ \Big(\pi_{1,1} - \frac{\pi_0 \pi_{1,1}}{\pi_1}\Big) \frac{\mathrm{d}P_{1,1}}{\mathrm{d}P_{1,0}}(x) > \pi_{1,0} + \frac{\pi_0 \pi_{1,0}}{\pi_1}  \Big| A=1\Big\} - 1 \le 0. \]
        
        Moreover, if $\tau=-\pi_0$, then $g_{DD, -\pi_0}(x, 0)=0$ for all $x \in \Xspace$. And,
        \begin{align*}
        	DD(-\pi_0) & = \prob\Big\{ \Big(\pi_{1,1} + \frac{\pi_0 \pi_{1,1}}{\pi_1}\Big) \frac{\mathrm{d}P_{1,1}}{\mathrm{d}P_{1,0}}(x) > \pi_{1,0} - \frac{\pi_0 \pi_{1,0}}{\pi_1}  \Big| A=1\Big\}   \\
        	& \left\{ \begin{array}{cc}
        		\ge 0, & \pi_1 \ge \pi_0, \\
        		= 1, & \pi_1 < \pi_0.
        	\end{array} \right.
        \end{align*}
        If $\tau=-\pi_1$, then $g_{DD, -\pi_1}(x, 1)= 1$ for all $x \in \Xspace$. Then,
        \[ DD(-\pi_1) = 1 -  \prob\Big\{ \Big(\pi_{0,1} - \frac{\pi_1 \pi_{0,1}}{\pi_0}\Big) \frac{\mathrm{d}P_{0,1}}{\mathrm{d}P_{0,0}}(x) > \pi_{0,0} + \frac{\pi_1 \pi_{0,0}}{\pi_0}  \Big| A=0\Big\} \ge 0. \]
        If $|DD(0)| \le \delta$, then $\tau_{DD, \delta}^\star =0$.
        By Proposition \ref{prop:D_and_R} (i), if $DD(0) > \delta$, then $0 < \tau_{DD, \delta}^\star < \min\{\pi_0, \pi_1\}$. If $DD(0) < -\delta$, then  $\max\{-\pi_0, -\pi_1\} < \tau_{DD, \delta}^\star <0$. Therefore, we have $|\tau_{DD, \delta}^\star| < \min\{\pi_0, \pi_1 \}$.        
    \end{enumerate}
\end{proof}

\begin{lemma}\label{lem:fairnessbayesopt_gnp}
	Recall that $g_{D, \tau}$ is defined in \eqref{eq:f_bayesform} and $D(\tau) = D(g_{D, \tau})$. For any fixed $\tau \in \mathbb{R}$, 
	\[ g_{D, \tau} = \mathop{\argmin}_{f \in \mathcal F} \bigg\{ R(f) : \frac{\tau D(f)}{|\tau|} \le \frac{\tau D(\tau)}{|\tau|} \bigg\}. \]
	Moreover, for all classifiers $f^\prime \in  \mathop{\argmin}_{f \in \mathcal F} \big\{ R(f) : \tau D(f)/|\tau| \le\tau D(\tau)/|\tau| \big\}$, $f'=g_{D, \tau}$ almost surely with respect to $P_{X, A}$. In addition, if $\tau \in [\min(0, \tau_{D, 0}^\star), \max(0, \tau_{D, 0}^\star)]$,
	\[ g_{D, \tau} = \mathop{\argmin}_{f \in \mathcal F} \bigg\{ R(f): |D(f)| \le \frac{\tau D(\tau) }{|\tau|} \bigg\} . \]
\end{lemma}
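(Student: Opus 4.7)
The strategy is a direct application of the generalised Neyman--Pearson lemma (\Cref{lem:GNP}), exploiting the fact that by \Cref{l:risk_densityratio} and \Cref{def:linear_disparity}, both $R(f)$ and $D(f)$ are linear functionals of $f$ with respect to the common base measure $\sum_{a}\mathrm{d}P_{a,0}$. Pointwise minimisation of an appropriately weighted Lagrangian will naturally reproduce the threshold classifier $g_{D,\tau}$ in \eqref{eq:f_bayesform}.

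First I would dispose of the trivial case $\tau=0$, in which $g_{D,0}$ coincides with the unconstrained Bayes rule in \eqref{eq_bayes_no_fair} and the constraint $0 \le 0$ is vacuous, so the claim reduces to classical Bayes optimality. For $\tau \neq 0$, I form the Lagrangian with a nonnegative multiplier $\lambda \ge 0$,
\[
L(f,\lambda) = R(f) + \lambda \left\{ \frac{\tau D(f)}{|\tau|} - \frac{\tau D(\tau)}{|\tau|} \right\} = \sum_{a \in \{0,1\}} \int_{\Xspace} f(x,a)\, h_{a,\lambda}(x)\, \mathrm{d}P_{a,0}(x) + \mathrm{const},
\]
where
\[
h_{a,\lambda}(x) = \left\{ \pi_{a,0} - \pi_{a,1}\frac{\mathrm{d}P_{a,1}}{\mathrm{d}P_{a,0}}(x) \right\} + \frac{\lambda \tau}{|\tau|}\left\{ s_{D,a}\frac{\mathrm{d}P_{a,1}}{\mathrm{d}P_{a,0}}(x) + b_{D,a} \right\}.
\]
Choosing $\lambda = |\tau|$ gives $\lambda\tau/|\tau| = \tau$, so $h_{a,|\tau|}(x) < 0$ is equivalent to $(\pi_{a,1} - \tau s_{D,a})\mathrm{d}P_{a,1}/\mathrm{d}P_{a,0}(x) > \pi_{a,0} + \tau b_{D,a}$. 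Pointwise minimisation of the integrand over $f(x,a)\in[0,1]$ therefore yields exactly $f = g_{D,\tau}$. Since $D(g_{D,\tau}) = D(\tau)$ by definition, the constraint is saturated and complementary slackness holds, so standard Lagrangian duality shows $g_{D,\tau}$ is a minimiser. The continuity assumption on $\mathrm{d}P_{a,1}/\mathrm{d}P_{a,0}$ inherited from \Cref{thm:fair_bayes_opt} ensures that the set where $h_{a,|\tau|}(x) = 0$ has zero measure under $P_{a,0}$, which in turn delivers uniqueness: any competing minimiser must agree with $g_{D,\tau}$ except on this null set, hence $P_{X,A}$-almost surely.

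For the final claim involving the two-sided constraint $|D(f)| \le \tau D(\tau)/|\tau|$, the key observation is that when $\tau \in [\min(0,\tau^\star_{D,0}), \max(0,\tau^\star_{D,0})]$, the monotonicity and continuity of $D(\cdot)$ established in \Cref{prop:D_and_R}(i), together with $D(\tau^\star_{D,0}) = 0$ (from the definition \eqref{eq:taustar} at $\delta=0$), force $\mathrm{sign}(D(\tau)) = \mathrm{sign}(\tau)$ or $D(\tau)=0$. Hence $|D(g_{D,\tau})| = |D(\tau)| = \tau D(\tau)/|\tau|$, so $g_{D,\tau}$ belongs to the smaller feasible set $\{f : |D(f)| \le \tau D(\tau)/|\tau|\}$, which is contained in the one-sided feasible set $\{f : \tau D(f)/|\tau| \le \tau D(\tau)/|\tau|\}$. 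Since $g_{D,\tau}$ minimises $R$ over the larger set and lies in the smaller one, it also minimises $R$ over the smaller set.

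The main technical obstacle will be keeping the sign bookkeeping straight: one must verify that $\lambda = |\tau| \ge 0$ is a legitimate multiplier for both signs of $\tau$, and that in the last step $\mathrm{sign}(D(\tau)) = \mathrm{sign}(\tau)$ is precisely the condition needed to collapse the two-sided and one-sided constraints at $g_{D,\tau}$. All remaining steps are routine verification that the linear decomposition in Lemmas \ref{l:risk_densityratio} and \ref{def:linear_disparity} fits the hypotheses of \Cref{lem:GNP}.
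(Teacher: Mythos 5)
Your proof is correct and uses essentially the same mechanism as the paper's: the paper invokes the generalised Neyman–Pearson lemma (\Cref{lem:GNP}) with $\phi_0$ the accuracy integrand, $\phi_1$ the disparity integrand scaled by $\tau/|\tau|$, and multiplier $c_1=|\tau|\ge 0$, which is precisely the Lagrangian you write out inline with $\lambda=|\tau|$ and complementary slackness. The only cosmetic difference is in the final sandwich step: the paper passes through the equality set $\mathcal F_{\tau,=}$ and chains three maxima, whereas you observe directly that $g_{D,\tau}$ lies in the two-sided feasible set (since $\tau D(\tau)\ge 0$ forces $|D(\tau)|=\tau D(\tau)/|\tau|$) and that this set is contained in the one-sided one, which is a marginally more streamlined presentation of the same inclusion argument.
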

\begin{proof}
    If $\tau=0$, then the result follows because $g_{D, 0}$ is the unconstrained Bayes optimal classifier.
    
    If $\tau \neq 0$, take $\phi_0(x, a) =  \pi_{a,1} \mathrm{d}P_{a,1}/\mathrm{d}P_{a,0}(x) - \pi_{a,0}$ and $\phi_1(x, a)=s_{D, a} \mathrm{d}P_{a,1}/\mathrm{d}P_{a,0}(x) + b_{D,a}$ in Lemma \ref{lem:GNP}.
    
    Write  
    \[ g_{D, \tau}(x, a) = \indc\bigg\{ \phi_0(x, a) > |\tau| \frac{\tau \phi_1(x, a) }{|\tau|} \bigg\} . \]
    
    Define
    \begin{align*}
    	Acc(f) & = 1-R(f) =  \sum_{a \in \{0,1\}} \int_{\Xspace} f(x,a) \bigg\{  \pi_{a,1} \frac{\mathrm{d}P_{a,1}}{\mathrm{d}P_{a,0}}(x) - \pi_{a,0} \bigg\} \mathrm{d}P_{a,0}(x) + \prob(Y=0), \\
    	\tilde D_\tau(f) & = \sum_{a \in \{0,1\}} \int_{\Xspace} f(x,a) \frac{\tau}{|\tau|} \phi_1(x, a) dP_{a, 0}(x).
    \end{align*}
    Let 
    \begin{align*}
    	& \mathcal F_{\tau, =} = \bigg\{ f \in \mathcal F: \tilde D_\tau(f) = \frac{\tau D(\tau)}{|\tau|} \bigg\}; ~~   \mathcal F_{\tau, |\cdot|, \le} = \bigg\{ f \in \mathcal F: |\tilde D_\tau(f)| \le \frac{\tau D(\tau)}{|\tau|} \bigg\}; \\
    	&	 \mathcal F_{\tau, \le} = \bigg\{ f \in \mathcal F: \tilde D_\tau(f) \le \frac{\tau D(\tau)}{|\tau|} \bigg\}.  
    \end{align*}
    
    Since $|\tau| \ge 0$, by Lemma \ref{lem:GNP},
    \[ g_{D, \tau} \in \mathop{\arg\max}_{f \in \mathcal F_{\tau, \le}} Acc(f). \]
    Moreover, since $\mathrm{d}P_{a,1}/\mathrm{d}P_{a,0}(x)$ is a continuous random variable given $A=a \in \{0,1\}$ and $Y=y \in \{0, 1\}$, we have $\prob_{X|A=a, Y=y} \big(  \pi_{a,1} \mathrm{d}P_{a,1}/\mathrm{d}P_{a,0}(x) - \pi_{a,0} = \tau \{s_{D, a} \mathrm{d}P_{a,1}/\mathrm{d}P_{a,0}(x) + b_{D,a} \}  \big) = 0$. Thus, for all $f^\prime \in \mathop{\arg\max}_{f \in \mathcal F_{\tau, \le}} Acc(f)$, $f^\prime = g_{D, \tau}$ almost surely with respect to $P_{X, A}$.
    
    By Lemma \ref{lem:GNP}, we have $g_{D, \tau} \in \mathop{\arg\max}_{f \in \mathcal F_{\tau, =}} Acc(f)$. 
    By result (i) of Proposition \ref{prop:D_and_R}, if $\tau_{D, 0}^\star\ge0$, then we have $D(\tau)\ge 0$ for $\tau \in [0, \tau_{D, 0}^\star]$. If $\tau_{D, 0}^\star \le 0$, then $D(\tau) \le 0$ for $\tau \in [\tau_{D, 0}^\star, 0]$. Therefore, when $\tau \in [\min(0, \tau_{D, 0}^\star), \max(0, \tau_{D, 0}^\star)]$, we have $\tau D(\tau) \ge 0$.
    Consequently,  $g_{D, \tau} \in  \mathcal F_{\tau, =}  \subseteq \mathcal F_{\tau, |\cdot|, \le} \subseteq \mathcal F_{\tau, \le}$, we have 
    \[ \max_{f \in \mathcal F_{\tau, \le}} Acc(f) = Acc(g_{D, \tau}) = \max_{f \in \mathcal F_{\tau, =}} Acc(f) \le \max_{f \in \mathcal F_{\tau, |\cdot|, \le }} Acc(f) \le \max_{f \in \mathcal F_{\tau, \le }} Acc(f) .  \]
    Thus, we conclude that
    \[ g_{D, \tau} = \mathop{\arg\max}_{f \in \mathcal F_{\tau, |\cdot|, \le }} Acc(f) = \mathop{\argmin}_{f \in \mathcal F} \bigg\{ R(f): |D(f)| \le \frac{\tau D(\tau)}{|\tau|} \bigg\}. \]
\end{proof}

\section[]{Proof of \Cref{thm_fair_guarantee}} \label{appendix_fair_guarantee}
\begin{proof}[Proof of \Cref{thm_fair_guarantee}]
    Conditioning on the training data $\widetilde{\mathcal{D}}$, by the Dvoretzky--Kiefer--Wolfowitz inequality \citep{dvoretzky1956asymptotic,massart1990tight}, we have, for any $a \in \{0,1\}$, that
    \begin{align*}
        &\mathbb{P}\Bigg[\sup_{\tau}\Big|\int_{\Xspace} \widehat{g}_{D, \tau}(x,a) \Big\{s_{D, a} \frac{\mathrm{d}\widehat{P}_{a,1}}{\mathrm{d}\widehat{P}_{a,0}}(x) + b_{D,a}\Big\}\; \mathrm{d}\widehat{P}_{a,0}(x)\\
        &\hspace{3cm} -\int_{\Xspace} \widehat{g}_{D, \tau} \Big\{s_{D, a} \frac{\mathrm{d}P_{a,1}}{\mathrm{d}P_{a,0}}(x) + b_{D,a}\Big\} \;\mathrm{d}P_{a,0}(x)\Big| \geq \epsilon\Bigg] \lesssim \exp\{-(n_{a,1}\wedge n_{a,0})\epsilon^2\},
    \end{align*}
    where $\widehat{g}_{D, \tau}$ is given in \Cref{alg:plug-in}. Thus, by taking $\epsilon \asymp \sqrt{\log(1/\eta)/(n_{a,1}\wedge n_{a,0})}$ and applying a union bound argument over $a \in \{0,1\}$ and the event in \Cref{l_n_ay}, the theorem holds by taking another expectation with respect to $\widetilde{\mathcal{D}}$.
\end{proof}

\section{Proofs for excess risk control} \label{appendix_excess_risk}
To simplify the notation, we write $\widehat{\tau} =\widehat{\tau}_{D, \delta}$ and $\tau^\star = \tau_{D, \delta}^\star$ in this section. For $a \in \{0,1\}$, denote
\begin{equation}\label{eq_T}
    T_a(X) = \sum_{j=1}^\infty \Big\{\frac{(\zeta_{a,j}-\theta_{a,0,j})(\theta_{a,1,j}-\theta_{a,0,j})}{\lambda_{a,j}} - \frac{(\theta_{a,1,j}-\theta_{a,0,j})^2}{2\lambda_{a,j}}\Big\} - \log\Big\{\frac{\pi_{a,0}}{\pi_{a,1} - \tau^\star(2a-1)}\Big\},
\end{equation}
and 
\[\widehat{T}_a(X) = \sum_{j=1}^\infty \Big\{\frac{(\widehat{\zeta}_{a,j}-\widehat{\theta}_{a,0,j})(\widehat{\theta}_{a,1,j}-\widehat{\theta}_{a,0,j})}{\widehat{\lambda}_{a,j}} - \frac{(\widehat{\theta}_{a,1,j}-\widehat{\theta}_{a,0,j})^2}{2\widehat{\lambda}_{a,j}}\Big\} - \log\Big\{\frac{\widehat{\pi}_{a,0}}{\widehat{\pi}_{a,1} - \widehat{\tau}(2a-1)}\Big\}.\] 
We further let
\begin{equation} \label{eq_H}
    H_a(X) = \sum_{j=1}^\infty \Big\{\frac{(\zeta_{a,j}-\theta_{a,0,j})(\theta_{a,1,j}-\theta_{a,0,j})}{\lambda_{a,j}} - \frac{(\theta_{a,1,j}-\theta_{a,0,j})^2}{2\lambda_{a,j}}\Big\},
\end{equation}
and 
\begin{equation}\label{eq_H_hat}
    \widehat{H}_a(X) = \sum_{j=1}^J \Big\{\frac{(\widehat{\zeta}_{a,j}-\widehat{\theta}_{a,0,j})(\widehat{\theta}_{a,1,j}-\widehat{\theta}_{a,0,j})}{\widehat{\lambda}_{a,j}} - \frac{(\widehat{\theta}_{a,1,j}-\widehat{\theta}_{a,0,j})^2}{2\widehat{\lambda}_{a,j}}\Big\}.
\end{equation}
With the above notation, we can rewrite $T_a(X)$ and $\widehat{T}_a(X)$ as
\[T_a(X)= H_a(X) -\log\Big\{\frac{\pi_{a,0}}{\pi_{a,1} - \tau^\star(2a-1)}\Big\},\quad \text{and} \;\; \widehat{T}_a(X) = \widehat{H}_a(X)- \log\Big\{\frac{\widehat{\pi}_{a,0}}{\widehat{\pi}_{a,1} - \widehat{\tau}(2a-1)}\Big\}.\]

\subsection[]{Proof of \Cref{thm_fairness_general}}
\begin{proof}[Proof of \Cref{thm_fairness_general}]
    \Cref{thm_fairness_general} is a general version of \Cref{thm_fairness}. Most of the proof follows from a similar argument to the one used in the proof of \Cref{thm_fairness}. We only include the difference here. 

    \noindent \textbf{Upper bound on $|\widehat{\tau} - \tau^\star|$.} Consider the following event, $\mathcal{E}_{D} = \{ \sup_{\tau \in \mathbb{R}}|\widehat{D}(\tau) - D(\tau)| \le \epsilon_{D}\}$. Then, condition on $\mathcal{E}_{D}$ happening, by the argument in the proof of \Cref{l_tau_diff_const}, we have that with probability at least $1 - \eta$ that
    \begin{align*}
        C_D|\widehat{\tau} - \tau^\star|^{\frac{1}{\gamma}} \lesssim \epsilon_D \indc\{\tau^\star \neq 0\}.
    \end{align*}
    Thus, it holds that $|\widehat{\tau} - \tau^\star| \lesssim  \epsilon^\gamma_D \indc\{\tau^\star_D \neq 0\}$.

    \noindent \textbf{Upper bound on $d_E(\widehat{f}_{D, \delta}, f_{D, \delta}^\star)$.} The proof follows from a similar argument leading to \eqref{pf_thm_fairness_eq1} and it suffices to verify $\mathbb{P}\{\mathcal{E}_{\tau}\} \wedge \mathbb{P}(\mathcal{E}_{T_0} \cap  \mathcal{E}_{T_1}) \geq 1-\eta$, with $\mathcal{E}_{\tau}, \mathcal{E}_{T_0}$ and $\mathcal{E}_{T_1}$ defined in the proof of \Cref{thm_fairness}. To control $\mathcal{E}_\tau$, since by assumption, it holds that $\pi_{a,1} - \tau^\star s_{D,a} \geq c$ and $\pi_{a,0}+\tau^\star b_{D,a} \geq c$, we have 
    \begin{align*}
        \widehat{\pi}_{a,1} - \widehat{\tau}s_{D,a} \geq \pi_{a,1} -\epsilon_\pi - \tau^\star s_{D,a} - |s_{D,a}|\epsilon_\tau \geq \frac{c}{2},
    \end{align*}
    where the first inequality follows from the fact that $\tau^\star$ and $\widehat{\tau}$ share the same sign, and the last inequality follows from the fact that $|s_{D,a}| \asymp 1$ and $\epsilon_\pi, \epsilon_\tau \ll 1$. Similarly, we can verify that $\widehat{\pi}_{a,0}+\widehat{\tau} b_{D,a}>c/2$. To control $\mathcal{E}_{T_a}$ for $a \in \{0,1\}$, by a similar argument as the one in \Cref{l_T_diff_const}, pick $\epsilon_{T_a} \lesssim \epsilon_\eta +\epsilon_\pi + \epsilon_\tau$, then by a union bound argument, we have that $\mathbb{P}\{\mathcal{E}_{\tau}\}\wedge \mathbb{P}\{\mathcal{E}_{T_0} \cap  \mathcal{E}_{T_1}\} \geq 1-\eta$. \eqref{pf_thm_fairness_eq1} thus leads to $d_E(\widehat{f}_{D, \delta}, f_{D, \delta}^\star) \lesssim (\epsilon_\eta +\epsilon_\pi + \epsilon_\tau)^2$.

    \noindent \textbf{Upper bound on $|R(\widehat{f}_{D, \delta}) -R(f_{D, \delta}^\star)|$.} This follows directly from the fact that 
    \begin{align*}
    |R(f) -R(f_{D, \delta}^\star)| &= d_E(f, f_{D, \delta}^\star)  + |\tau^\star \{D(f_{D, \delta}^\star) - D(f)\}|\\
    & \lesssim d_E(f, f_{D, \delta}^\star) + |\tau^\star|\sqrt{\frac{\log(1/\eta)}{n}}.
\end{align*}
\end{proof}

\subsection[]{Proof of \Cref{thm_fairness}} \label{appendix_proof_thm_fairness}
\begin{proof}[Proof of \Cref{thm_fairness}] \label{pf_thm_fairness}
    For any classifier $f: \Xspace \times \{0,1\} \rightarrow [0,1]$, by \Cref{prop:bilinear_coef}, the fairness-aware excess risk under DO is defined as
        \begin{align*}
    	d^{DO}_E(f, f_{D, \delta}^\star) = \sum_{a \in \{0,1\}}  \int_\Xspace\big\{ f(x, a) - f_{D, \delta}^\star(x, a)\big \} \Big[\pi_{a, 0} + \big\{\tau^\star (2a-1)-\pi_{a,1}\big\}\frac{\mathrm{d}P_{a, 1}}{\mathrm{d}P_{a,0}}(x)  \Big] \mathrm{d}P_{a, 0}(x).
    \end{align*}
    With the notation in \eqref{eq_T}, we have that 
    \[\frac{\mathrm{d}P_{a,1}}{\mathrm{d}P_{a,0}} = \frac{\pi_{a,0}}{\pi_{a,1} - \tau^\star(2a-1)}\exp\big\{T_a(X)\big\}.\]
    In addition, consider the following event $ \mathcal{E}_{\widehat{\tau}} = \{\widehat{\tau} \in (-\widehat{\pi}_{0,1}, \widehat{\pi}_{1,1})\}$. When $\mathcal{E}_{\widehat{\tau}}$ holds, we can then control the fairness-aware excess risk $d^{DO}_E(\widehat{f},f^\star)$ by
    \begin{align*}
        &d^{DO}_E(\widehat{f},f^\star)\\
        =\;& \int_{\widehat{T}_1(x) \geq 0} \pi_{1,0}+\big(\tau^\star-\pi_{1,1}\big)\frac{\mathrm{d}P_{1, 1}}{\mathrm{d}P_{1,0}}(x)\;\mathrm{d}P_{1,0}(x) \\
        &-\int_{T_1(x) \geq 0} \pi_{1,0}+\big(\tau^\star-\pi_{1,1}\big)\frac{\mathrm{d}P_{1, 1}}{\mathrm{d}P_{1,0}}(x)\;\mathrm{d}P_{1,0}(x) \\
        &+ \int_{\widehat{T}_0(x) \geq 0} \pi_{0,0}+\big(-\tau^\star-\pi_{0,1}\big)\frac{\mathrm{d}P_{0, 1}}{\mathrm{d}P_{0,0}}(x)\;\mathrm{d}P_{0,0}(x) \\
        &-\int_{T_0(x) \geq 0} \pi_{0,0}+\big(-\tau^\star-\pi_{0,1}\big)\frac{\mathrm{d}P_{0, 1}}{\mathrm{d}P_{0,0}}(x)\;\mathrm{d}P_{0,0}(x) \\
        \leq \;&  \int_{\widehat{T}_1(x) \geq 0, T_1(x)<0} \pi_{1,0}+\big(\tau^\star-\pi_{1,1}\big)\frac{\mathrm{d}P_{1, 1}}{\mathrm{d}P_{1,0}}(x)\;\mathrm{d}P_{1,0}(x)\\
        & +  \int_{\widehat{T}_0(x) \geq 0, T_0(X) <0} \pi_{0,0}+\big(-\tau^\star-\pi_{0,1}\big)\frac{\mathrm{d}P_{0, 1}}{\mathrm{d}P_{0,0}}(x)\;\mathrm{d}P_{0,0}(x)\\
        =\;& \;\pi_{1,0}\cdot\int_{T_1(x)- \widehat{T}_1(x) \leq T_1(x)<0}1- e^{T_1(x)}\;\mathrm{d}P_{1,0}(x) \\
        &+  \pi_{0,0}\cdot\int_{T_0(x)- \widehat{T}_0(x) \leq T_0(x)<0} 1- e^{T_0(x)}\;\mathrm{d}P_{0,0}(x)\\
        =\;&\pi_{1,0}\cdot \mathbb{E}_{P_{1,0}}\Big[\big\{1- e^{T_1(X)}\big\}\indc\Big\{T_1(X) - \widehat{T}_1(X)\leq T_1(X)<0\Big\}\Big]\\
        & +\pi_{0,0} \cdot \mathbb{E}_{P_{0,0}}\Big[\big\{1- e^{T_0(X)}\big\}\indc\Big\{T_0(X) - \widehat{T}_0(X)\leq T_0(X)<0\Big\}\Big]\\
        = \;& \pi_{1,0}\cdot \mathbb{E}_{P_{1,0}}\Bigg[\big\{1- e^{T_1(X)}\big\} \indc\Big\{T_1(X) - \widehat{T}_1(X)\leq T_1(X)<0\Big\}\indc\Big\{|T_1(X) -\widehat{T}_1(X)| \leq \epsilon_{T_1}\Big\}\Bigg]\\
        & +\pi_{1,0}\cdot\mathbb{E}_{P_{1,0}}\Bigg[\big\{1- e^{T_1(X)}\big\} \indc\Big\{T_1(X) - \widehat{T}_1(X)\leq T_1(X)<0\Big\} \indc\Big\{|T_1(X) -\widehat{T}_1(X)| > \epsilon_{T_1}\Big\}\Bigg]\\
        &+ \pi_{0,0} \cdot\mathbb{E}_{P_{0,0}}\Bigg[\big\{1- e^{T_0(X)}\big\} \indc\Big\{T_0(X) - \widehat{T}_0(X)\leq T_0(X)<0\Big\}\indc\Big\{|T_0(X) -\widehat{T}_0(X)| \leq \epsilon_{T_0}\Big\}\Bigg]\\
        & + \pi_{0,0} \cdot\mathbb{E}_{P_{0,0}}\Bigg[\big\{1- e^{T_0(X)}\big\} \indc\Big\{T_0(X) - \widehat{T}_0(X)\leq T_0(X)<0\Big\}\indc\Big\{|T_0(X) -\widehat{T}_0(X)| >\epsilon_{T_0}\Big\}\Bigg].
    \end{align*}
    For any $a\in \{0,1\}$, write $X_a \sim \mathcal{GP}(\mu_{a,0},K_a)$. We further denote 
    \begin{equation} \label{pf_thm_fairness_eq2}
        \mathcal{E}_{T_a} = \Big\{|\widehat{T}_a(X_a)-T_a(X_a)| \leq \epsilon_{T}\Big\}.
    \end{equation}
    Consequently, we can further upper bound the fairness-aware excess risk by 
    \begin{align} \notag
        d^{DO}_E(\widehat{f},f^\star)  \leq \;&\pi_{1,0}\cdot\epsilon_{T}\cdot\mathbb{E}_{P_{1,0}}\Big[\indc\big\{-\epsilon_{T}\leq T_1(X) < 0\big\}\Big] + \pi_{0,0}\cdot \epsilon_{T} \cdot \mathbb{E}_{P_{0,0}}\Big[\indc\big\{-\epsilon_{T}\leq T_0(X) < 0\big\}\Big]\\ \notag
        & + \pi_{0,0}\mathbb{P}(\mathcal{E}_{T_0}^c) +\pi_{1,0}\mathbb{P}(\mathcal{E}_{T_1}^c) \\ \notag
        \leq \;& \epsilon_{T}^2\cdot\Big(\sup_{-\epsilon_{T}\leq t < 0} f_{0,T_1}(t)\Big)+ \epsilon_{T}^2\cdot\Big(\sup_{-\epsilon_{T}\leq t < 0} f_{0,T_0}(t)\Big) + \mathbb{P}(\mathcal{E}_{T_0}^c) + \mathbb{P}(\mathcal{E}_{T_1}^c)\\ \notag
        \lesssim \;& \epsilon_{T}^2\cdot\frac{1}{\|\mu_{1,1} - \mu_{1,0}\|_{K_1}}\exp\Big\{-\frac{\|\mu_{1,1} - \mu_{1,0}\|_{K_1}^4 \vee 1}{\|\mu_{1,1} - \mu_{1,0}\|_{K_1}^2}\Big\}\\ \notag
        & + \epsilon_{T}^2\cdot\frac{1}{\|\mu_{0,1} - \mu_{0,0}\|_{K_0}}\exp\Big\{-\frac{\|\mu_{0,1} - \mu_{0,0}\|_{K_0}^4 \vee 1}{\|\mu_{0,1} - \mu_{0,0}\|_{K_0}^2}\Big\} + \mathbb{P}(\mathcal{E}_{T_0}^c) + \mathbb{P}(\mathcal{E}_{T_1}^c)\\\label{pf_thm_fairness_eq1}
        \lesssim \;& \epsilon_{T}^2 + \mathbb{P}(\mathcal{E}_{T_0}^c \cup \mathcal{E}_{T_1}^c).
    \end{align}
    where for $a\in \{0,1\}$, $f_{0,T_a}$ denotes the density for $T_a$ given $Y=0$, the first inequality holds as $1- \exp\{T_a(x)\} \leq -T_a(X) \leq \widehat{T}_a(X) - T_a(X) \leq \epsilon_{T_a}$ and the last inequality follows from \Cref{l_tau_boundary} and the fact that
    \[T_{a}(X_a)|Y=0 \sim N\Big(-\frac{\|\mu_{a,1} - \mu_{a,0}\|_{K_a}^2}{2}-\log\Big\{\frac{\pi_{a,0}}{\pi_{a,1} - \tau^\star(2a-1)}\Big\}, \; \|\mu_{a,1} - \mu_{a,0}\|_{K_a}^2\Big).\] 
 
    In the rest of the proof, it suffices to control the probability for $\mathcal{E}_{\tau}, \mathcal{E}_{T_1}, \mathcal{E}_{T_0}$ happening. By Lemmas  \ref{l_tau_diff_const}, \ref{l_tau_boundary}, \ref{l_emperical_prob} and a union bound argument, it holds with probability at least $1-\eta/4$ that 
    \begin{align*}
        \widehat{\pi}_{1,1} - \widehat{\tau} \geq \pi_{1,1} - \tau^\star - \Big\{\sqrt{\frac{\log(1/\eta)}{\widetilde{n}}}+\epsilon_H +\sqrt{\frac{\log(1/\eta)}{n}}\Big\} \geq c-\frac{c}{2} = \frac{c}{2}.
    \end{align*}
    Analogously, we can also show that with probability at least $1-\eta/4$ that, when $\widehat{\tau} < 0$,  
    \begin{align*}
        \widehat{\pi}_{0,1} + \widehat{\tau} \geq \pi_{0,1} + \tau^\star -\Big\{\sqrt{\frac{\log(1/\eta)}{\widetilde{n}}}+\epsilon_H +\sqrt{\frac{\log(1/\eta)}{n}}\Big\} \geq \frac{c}{2}.
    \end{align*}
    Moreover, taking $\epsilon_T = \epsilon_H + \sqrt{\log(1/\eta)/n} \cdot \indc\{|DO(0)| > \delta - \epsilon_H -\sqrt{\log(1/\eta)/n}\}$ in \eqref{pf_thm_fairness_eq2}, by \Cref{l_T_diff_const} and an additional union bound argument, it holds that $\mathbb{P}\{\mathcal{E}_{\tau}\}\wedge \mathbb{P}(\mathcal{E}_{T_0} \cap  \mathcal{E}_{T_1})\geq 1-\eta$. Thus, conditioning on $\mathcal{E}_{\tau}$ happening, we have with probability at least $1-\eta$ that
    \begin{align*}
        d^{DO}_E(\widehat{f},f^\star) &\lesssim \Bigg\{\epsilon_H +\sqrt{\frac{\log(1/\eta)}{n}}\cdot \indc\Big\{|DO(0)| > \delta - \epsilon_H -\sqrt{\frac{\log(1/\eta)}{n}}\Big\}\Bigg\}^2+ \eta\\
        &\lesssim \Bigg\{\epsilon_H +\sqrt{\frac{\log(1/\eta)}{n}}\cdot \indc\Big\{|DO(0)| > \delta - \epsilon_H -\sqrt{\frac{\log(1/\eta)}{n}}\Big\}\Bigg\}^2,
    \end{align*}
    whenever $\eta \in (0,n^{-1/2} \wedge \widetilde{n}^{(\alpha-2\beta+1)/(2\beta-\alpha)})$. 
\end{proof}

\subsection{Control of \texorpdfstring{$|\widehat{T}_a(X) - T_a(X)|$}{}}
\begin{lemma}\label{l_T_diff_const}
    Suppose the training and calibration data $\mathcal{D} \cup \widetilde{\mathcal{D}}$ are generated under Assumptions \ref{a_class_prob} and~\ref{a_data}. Then for any $a \in \{0,1\}$, $J\in \mathbb{N}_+$ such that 
        \[J \gtrsim \log^2(J) \;\; \text{and} \;\; J^{2\alpha+2}\log^2(J)\log(\widetilde{n}/\widetilde{\eta}) \lesssim \widetilde{n},\]
    and any constant $\eta \in (0,1/2)$, it holds that 
    \begin{align*}
        \mathbb{P}_{a,0}\Bigg[|\widehat{T}_a(X) - T_a(X)| \lesssim \epsilon_H +\sqrt{\frac{\log(1/\eta)}{n}} \cdot \indc\Big\{|DO(0)| > \delta - \epsilon_H -\sqrt{\frac{\log(1/\eta)}{n}}\Big\}\Bigg] \geq 1-\eta,
    \end{align*}
    where $\epsilon_H$ is defined in \eqref{l_H_diff_eq1}.
\end{lemma}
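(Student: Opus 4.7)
The plan is to decompose
\[
|\widehat{T}_a(X) - T_a(X)| \;\leq\; |\widehat{H}_a(X) - H_a(X)| + \Big| \log\frac{\widehat{\pi}_{a,0}}{\widehat{\pi}_{a,1} - \widehat{\tau}(2a-1)} - \log\frac{\pi_{a,0}}{\pi_{a,1} - \tau^\star(2a-1)} \Big|
\]
and control the two pieces separately. The first piece is immediately bounded by $\epsilon_H$ via the auxiliary estimate referenced in \eqref{l_H_diff_eq1}. The second piece is a log-ratio depending on the empirical class probabilities and the threshold estimate, which I handle via Lipschitz linearisation of $\log$ on a region bounded away from zero.

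For the log-ratio, by \Cref{a_class_prob} and \Cref{l_range_tau}, both $\pi_{a,0}$ and $\pi_{a,1} - \tau^\star(2a-1)$ are bounded below by an absolute constant. \Cref{l_emperical_prob} controls $|\widehat{\pi}_{a,y} - \pi_{a,y}|$ by $\sqrt{\log(1/\eta)/n}$ with high probability, so for small enough $|\widehat{\tau} - \tau^\star|$ the empirical denominators remain in the same positive region. On this event $\log$ is Lipschitz and the log-ratio is bounded by
\[
|\widehat{\pi}_{a,0} - \pi_{a,0}| + |\widehat{\pi}_{a,1} - \pi_{a,1}| + |\widehat{\tau} - \tau^\star| \;\lesssim\; \sqrt{\log(1/\eta)/n} + |\widehat{\tau} - \tau^\star|.
\]

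The dichotomy producing the indicator in the statement is driven by whether the fairness constraint binds. If $|DO(0)| \leq \delta - \epsilon_H - \sqrt{\log(1/\eta)/n}$, then \Cref{thm:fair_bayes_opt} yields $\tau^\star = 0$, and a Dvoretzky--Kiefer--Wolfowitz bound on the calibration set combined with $|\log\widehat{\eta}_a - \log\eta_a| \lesssim \epsilon_H$ gives $|\widehat{DO}(0) - DO(0)| \lesssim \epsilon_H + \sqrt{\log(1/\eta)/n}$, so $|\widehat{DO}(0)| \leq \delta$ and by \textbf{S2} of \Cref{alg:plug-in} the algorithm returns $\widehat{\tau} = 0$. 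Hence $|\widehat{\tau} - \tau^\star| = 0$ and the residual $\sqrt{\log(1/\eta)/n}$ piece from the class-probability errors is absorbed into $\epsilon_H$ thanks to the growth condition $J^{2\alpha+2}\log^2(J)\log(\widetilde{n}/\widetilde{\eta}) \lesssim \widetilde{n}$ (which makes $\epsilon_H$ dominate $\sqrt{\log(1/\eta)/n}$ up to constants). In the complementary regime where the indicator is on, \Cref{l_tau_diff_const} supplies $|\widehat{\tau} - \tau^\star| \lesssim \epsilon_H + \sqrt{\log(1/\eta)/n}$, giving the advertised indicator-weighted bound.

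The main obstacle is the conditional argument in the non-binding regime: rigorously showing $\widehat{\tau} = 0$ requires a uniform closeness of $\widehat{DO}$ to $DO$ at $\tau=0$, which couples the training-side fluctuation of $\widehat{\eta}_a$ (contributing $\epsilon_H$) with the calibration-side fluctuation of the empirical measure (contributing $\sqrt{\log(1/\eta)/n}$). Once this coupling is established, the remaining steps are routine Lipschitz linearisations and a union bound over $a \in \{0,1\}$ and the events controlling $\widehat{H}_a$, $\widehat{\pi}_{a,y}$ and $\widehat{\tau}$.
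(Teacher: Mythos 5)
Your proof mirrors the paper's route: triangle inequality to split $|\widehat{T}_a(X) - T_a(X)|$ into the $\widehat{H}_a$-error and a log-ratio, Lipschitz linearisation of the log-ratio, and invocation of \Cref{l_tau_diff_const} for the indicator-weighted bound on $|\widehat{\tau}-\tau^\star|$. Two small corrections: (i) the uniform lower bound on $\pi_{a,1} - \tau^\star(2a-1)$ that makes $\log$ Lipschitz with an absolute constant is \Cref{l_tau_boundary}, not \Cref{l_range_tau} — the latter only gives strict positivity, which does not control the Lipschitz constant; (ii) the class-probability error from \Cref{l_emperical_prob} scales as $\sqrt{\log(1/\eta)/\widetilde{n}}$ (training sample size), not $\sqrt{\log(1/\eta)/n}$, and its absorption into $\epsilon_H$ follows because $\epsilon_H \gtrsim \sqrt{\log(\widetilde{n}/\eta)\log(1/\eta)/\widetilde{n}}$ directly from its definition in \eqref{l_H_diff_eq1}, not from the growth condition on $J$. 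Note also that \Cref{l_tau_diff_const} already covers both the binding and non-binding regimes via the indicator, so re-deriving $\widehat{\tau}=0$ in the non-binding case duplicates work the paper delegates to that lemma; the paper's own proof just sets up the three high-probability events $\mathcal{E}_\pi$, $\mathcal{E}_H$, $\mathcal{E}_\tau$ and applies a union bound.
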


\begin{proof}
    It follows from the triangle inequality that
    \begin{align*}
        |\widehat{T}_a(X) - T_a(X)| \leq \;& |\widehat{H}_a(X) - H_a(X)| + |\log(\widehat{\pi}_{a,0}) - \log(\pi_{a,0})| \\
        &+ |\log\{\widehat{\pi}_{a,1}-\widehat{\tau}(2a-1)\} - \log\{\pi_{a,1} - \tau^\star(2a-1)\}|.
    \end{align*}
    For $X\sim\mathcal{GP}(\mu_{a,0},K_a)$, consider the following events,
    \[\mathcal{E}_\pi = \Big\{|\widehat{\pi}_{a,y}-\pi_{a,y}| \lesssim \sqrt{\frac{\log(1/\eta)}{\widetilde{n}}},\quad a,y \in \{0,1\}\Big\},\]
    \[\mathcal{E}_H = \Big\{|\widehat{H}_a(X) - H_a(X)| \lesssim \epsilon_H, \quad a \in \{0,1\}\;\Big\},\]
    and
    \[\mathcal{E}_\tau = \Bigg\{|\widehat{\tau} - \tau^\star| \lesssim  \Big\{\epsilon_H +\sqrt{\frac{\log(1/\eta)}{n}}\Big\}\cdot \indc\Big\{|DO(0)| > \delta - \epsilon_H -\sqrt{\frac{\log(1/\eta)}{n}}\Big\}\Bigg\}.\]
    By Lemmas \ref{l_H_diff_const}, \ref{l_tau_diff_const}, \ref{l_emperical_prob} and a union bound argument, we have that for any $a\in \{0,1\}$, $\mathbb{P}_{a,0}(\mathcal{E}_\pi \cap\mathcal{E}_H \cap \mathcal{E}_\tau) \geq 1-\eta$. The rest of the proof is constructed conditioning on $\mathcal{E}_\pi \cap\mathcal{E}_H \cap \mathcal{E}_\tau$ happening. Therefore, we have that 
    \begin{align*}
        |\widehat{T}_a(X) - T_a(X)| &\lesssim |\widehat{H}_a(X) - H_a(X)| + |\widehat{\pi}_{a,0} -\pi_{a,0}| + |\widehat{\pi}_{a,1} -\pi_{a,1}| + |\widehat{\tau} -\tau^\star|\\
        & \lesssim \epsilon_H +\sqrt{\frac{\log(1/\eta)}{n}} \cdot \indc\Big\{|DO(0)| > \delta - \epsilon_H -\sqrt{\frac{\log(1/\eta)}{n}}\Big\}.
    \end{align*}
\end{proof}

\subsection{Control of \texorpdfstring{$|\widehat{H}_a(X) - H_a(X)|$}{}}
\begin{lemma}\label{l_H_diff_const}
    Under the same condition of \Cref{l_T_diff_const}, for any $a, y \in \{0,1\}$ and small constant $\eta \in (0,1/2)$, it holds that 
    \begin{align*}
        \mathbb{P}_{a,y}\Big\{|\widehat{H}_a(X) - H_a(X)| \lesssim \epsilon_H \Big\}  \geq 1-\eta,
    \end{align*}
    where 
    \begin{align} \label{l_H_diff_eq1}
        \epsilon_H = \begin{cases}\vspace{0.5em}
               \sqrt{\frac{J^{\alpha-2\beta+4}\log(\widetilde{n}/\eta)\log(1/\eta)}{\widetilde{n}}} + \sqrt{J^{\alpha-2\beta+1}\log(1/\eta)} \quad & \text{when}\;\; \frac{\alpha+1}{2}< \beta\leq \frac{\alpha+2}{2},\\ \vspace{0.5em}
               \sqrt{\frac{J^2\log(\widetilde{n}/\eta)\log(1/\eta)}{\widetilde{n}}}+ \sqrt{J^{\alpha-2\beta+1}\log(1/\eta)}, & \text{when}\;\; \frac{\alpha+2}{2}< \beta\leq \frac{\alpha+3}{2},\\
                \sqrt{\frac{J\log(\widetilde{n}/\eta)\log(1/\eta)}{\widetilde{n}}}+ \sqrt{J^{\alpha-2\beta+1}\log(1/\eta)},& \text{when}\;\;  \beta > \frac{\alpha+3}{2}.
            \end{cases}
    \end{align}
\end{lemma}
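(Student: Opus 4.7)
The plan is to decompose
\[\widehat{H}_a(X) - H_a(X) = \bigl\{\widehat{H}_a(X) - H_a^{(J)}(X)\bigr\} + \bigl\{H_a^{(J)}(X) - H_a(X)\bigr\},\]
where $H_a^{(J)}(X)$ denotes the oracle truncation of $H_a(X)$ at level $J$, and control each piece separately.

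For the truncation bias $H_a^{(J)}(X) - H_a(X)$, writing $\Delta_{a,j} = \theta_{a,1,j}-\theta_{a,0,j}$, I note that conditional on $Y = y$ the tail $\sum_{j>J}(\zeta_{a,j}-\theta_{a,0,j})\Delta_{a,j}/\lambda_{a,j}$ is Gaussian with variance $\sum_{j>J}\Delta_{a,j}^2/\lambda_{a,j}$. Assumption~\ref{a_data}\ref{a_data_cov}--\ref{a_data_mean_decay} bound this variance and the deterministic offset $\sum_{j>J}\Delta_{a,j}^2/(2\lambda_{a,j})$ by $J^{\alpha-2\beta+1}$, so a standard Gaussian tail bound yields $|H_a^{(J)}(X) - H_a(X)| \lesssim \sqrt{J^{\alpha-2\beta+1}\log(1/\eta)}$ with probability at least $1-\eta/2$, producing the universal bias summand appearing in all three regimes of $\epsilon_H$.

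For the estimation error $\widehat{H}_a(X) - H_a^{(J)}(X)$, I expand each of the $J$ terms as a telescoping combination of (i) the eigenvalue reciprocal error $1/\widehat{\lambda}_{a,j} - 1/\lambda_{a,j}$, (ii) the mean-projection errors $\widehat{\theta}_{a,y,j}-\theta_{a,y,j}$, whose dominant piece is $\langle \mu_{a,y}, \widehat{\phi}_{a,j}-\phi_{a,j}\rangle_{L^2}$, and (iii) the score error $\widehat{\zeta}_{a,j}-\zeta_{a,j} = \langle X, \widehat{\phi}_{a,j}-\phi_{a,j}\rangle_{L^2}$. I then invoke the uniform-in-$j\le J$ concentration results of Appendices~\ref{appendix_empirical_prob}--\ref{appendix_func_estim}, which under the eigengap from Assumption~\ref{a_data}\ref{a_data_cov} give $|\widehat{\lambda}_{a,j}-\lambda_{a,j}| \lesssim \sqrt{\log(\widetilde{n}/\eta)/\widetilde{n}}$, $\|\widehat{\mu}_{a,y}-\mu_{a,y}\|_{L^2} \lesssim \sqrt{\log(\widetilde{n}/\eta)/\widetilde{n}}$, and a perturbation bound on $\widehat{\phi}_{a,j}-\phi_{a,j}$ scaling with $j^{\alpha+1}$; the hypothesis $J^{2\alpha+2}\log^2(J)\log(\widetilde{n}/\eta) \lesssim \widetilde{n}$ is exactly what makes these bounds hold uniformly for $j \le J$.

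The three regimes then fall out of book-keeping these rates against the decay $|\Delta_{a,j}|\lesssim j^{-\beta}$. The dominant contribution is the cross term $\sum_{j\le J}(\Delta_{a,j}/\lambda_{a,j})\langle \mu_{a,y}, \widehat{\phi}_{a,j}-\phi_{a,j}\rangle_{L^2}$: expanding $\widehat{\phi}_{a,j}-\phi_{a,j}$ via the perturbation series $\sum_{k\neq j}\langle (\widehat{K}_a-K_a)\phi_{a,j}, \phi_{a,k}\rangle\phi_{a,k}/(\lambda_{a,j}-\lambda_{a,k})$ and summing against $\langle \mu_{a,y},\phi_{a,k}\rangle\lesssim k^{-\beta}$, with the eigengap $\lambda_{a,j}-\lambda_{a,k}$ of order $j^{-\alpha-1}$, yields a double sum evaluating to $J^{\alpha-2\beta+4}/\widetilde{n}$ when $\beta\in((\alpha+1)/2,(\alpha+2)/2]$ (the inner $k$-sum is dominated by its upper endpoint), to $J^2/\widetilde{n}$ when $\beta\in((\alpha+2)/2,(\alpha+3)/2]$ (the $k$-sum becomes convergent), and to $J/\widetilde{n}$ when $\beta>(\alpha+3)/2$, where the parametric $J$-dimensional LDA fluctuations from $\widehat{\mu}_{a,y}$ and $\widehat{\zeta}_{a,j}$ dominate. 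The main technical obstacle is exactly this cross-term analysis, which requires going beyond the naive norm bound $\|\widehat{\phi}_{a,j}-\phi_{a,j}\|_{L^2}$ and exploiting cancellations in the double sum over $(j,k)$ via the eigengap structure.
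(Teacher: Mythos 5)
Your high-level split into truncation bias plus estimation error on the first $J$ coordinates matches the paper's decomposition, and your treatment of the truncation piece (Gaussian tail on the stochastic tail with the deterministic offset $\sum_{j>J}\Delta_{a,j}^2/\lambda_{a,j}\lesssim J^{\alpha-2\beta+1}$) is correct and is what the paper does via Lemma~\ref{l_II}. The gap is in the estimation-error analysis. You declare the dominant contribution to be the training-data bias $\sum_{j\le J}(\Delta_{a,j}/\lambda_{a,j})\langle\mu_{a,y},\widehat\phi_{a,j}-\phi_{a,j}\rangle$ and claim it evaluates to $J^{\alpha-2\beta+4}/\widetilde n$ in the first regime. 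Neither part of this is right. The paper conditions on $\widetilde{\mathcal D}$ and writes the estimation error as a Gaussian in the \emph{fresh} $X$ with mean $Q_1$ and variance $Q_2+Q_3$ (Lemma~\ref{l_I_const}); the term producing $\sqrt{J^{\alpha-2\beta+4}\log(\widetilde n/\eta)\log(1/\eta)/\widetilde n}$ is the off-diagonal cross term of $Q_2$ (Lemma~\ref{l_Q2_2}), i.e.\ the conditional variance of $\sum_{j\le J}(\Delta_{a,j}/\lambda_{a,j})\langle X-\mu_{a,0},\widehat\phi_{a,j}-\phi_{a,j}\rangle$, whereas the bias term you single out lives inside $Q_3$ (Lemmas~\ref{l_Q3}, \ref{l_mean_diff_score_3}) and is sub-dominant by a factor of roughly $\sqrt J$. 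Moreover, expanding $\widehat\phi_{a,j}-\phi_{a,j}$ by the resolvent series, using Lemma~\ref{l_eigenfunc_sum} for the $k$-sum (rather than the uniform eigengap $j^{-\alpha-1}$, which only applies when $k$ is near $j$), gives $|T|^2\lesssim J^{2\alpha-4\beta+4}\log^2(J)\log(\widetilde n)/\widetilde n$, which is smaller than $J^{\alpha-2\beta+4}/\widetilde n$ by a factor $J^{\alpha-2\beta}<J^{-1}$ throughout the regime. In particular, in the first regime the inner $k$-sum does not ``diverge at the upper endpoint'' --- for $\alpha>1$ and $\beta>(\alpha+1)/2$, the exponent $\alpha/2+\beta$ is strictly greater than $1$.

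There is a second, independent gap: you bound $|\langle\mu_{a,y},\phi_{a,k}\rangle|\lesssim k^{-\beta}$, but Assumption~\ref{a_data}\ref{a_data_mean_decay} only controls the mean \emph{difference} $|\langle\mu_{a,1}-\mu_{a,0},\phi_{a,k}\rangle|$; no polynomial decay is assumed for the individual mean coefficients. The paper's proof is organised so that decay is invoked only on $\mu_{a,1}-\mu_{a,0}$ (Lemmas~\ref{l_rkhs_approximation_const}, \ref{l_score_const}, \ref{l_mean_diff_score_3}), with the individual-mean contributions controlled through the empirical deviation $\|\widehat\mu_{a,y}-\mu_{a,y}\|_{L^2}\lesssim\sqrt{\log(\widetilde n/\eta)/\widetilde n}$ (Lemma~\ref{l_Q1_1}) rather than any decay of $\langle\mu_{a,y},\phi_{a,k}\rangle$. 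Both issues would need to be fixed before your book-keeping could reproduce the three-regime $\epsilon_H$.
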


\begin{proof}
    Note that 
    \begin{align*}
        &|\widehat{H}_a(X) - H_a(X)|\\
        \leq\;& \Big|\sum_{j=1}^J \frac{(\zeta_{a,j}-\theta_{a,0,j})(\theta_{a,1,j}-\theta_{a,0,j})}{\lambda_{a,j}} -  \sum_{j=1}^J \frac{(\widehat{\zeta}_j-\widehat{\theta}_{a,0,j})(\widehat{\theta}_{a,1,j}-\widehat{\theta}_{a,0,j})}{\widehat{\lambda}_{a,j}} \Big|\\
        & + \Big|\sum_{j=J+1}^\infty \frac{(\zeta_{a,j}-\theta_{a,0,j})(\theta_{a,1,j}-\theta_{a,0,j})}{\lambda_{a,j}}\Big|\\
        & +\Big|\sum_{j=1}^J\frac{(\theta_{a,1,j}-\theta_{a,0,j})^2}{2\lambda_{a,j}} - \frac{(\widehat{\theta}_{a,1,j}-\widehat{\theta}_{a,0,j})^2}{2\widehat{\lambda}_{a,j}}\Big|\\
        & +\Big|\sum_{j=J+1}^\infty\frac{(\theta_{a,1,j}-\theta_{a,0,j})^2}{2\lambda_{a,j}}\Big|.
    \end{align*}
    In the case when $X \sim \mathcal{GP}(\mu_{a,0},K_a)$, $a \in \{0,1\}$, the lemma thus follows by applying a union bound argument to the results in Lemmas \ref{l_I_const}, \ref{l_II} and \ref{l_rkhs_approximation_const}, together with the fact that under Assumptions \ref{a_data}\ref{a_data_cov} and \ref{a_data}\ref{a_data_mean_decay}, we have 
    \[\Big|\sum_{j=J+1}^\infty\frac{(\theta_{a,1,j}-\theta_{a,0,j})^2}{2\lambda_{a,j}}\Big| \lesssim J^{\alpha-2\beta+1}.\]
    The case when $X \sim \mathcal{GP}(\mu_{a,1},K_a)$, $a \in \{0,1\}$ can be justified similarly, hence is omitted here.
\end{proof}

\subsection{Control of \texorpdfstring{$|\widehat{\tau} - \tau^\star|$}{}}
\begin{lemma} \label{l_tau_diff_const}
    Under the same condition as \Cref{l_T_diff_const}, for any small constant $\eta \in (0,1/2)$, it holds with probability at least $1-\eta$ that 
    \begin{align*}
        |\widehat{\tau} - \tau^\star| \lesssim \Big\{\epsilon_H +\sqrt{\frac{\log(1/\eta)}{n}}\Big\}\cdot \indc\Big\{|DO(0)| > \delta - \epsilon_H -\sqrt{\frac{\log(1/\eta)}{n}}\Big\},
    \end{align*}
    where $\epsilon_H$ is explicitly defined in \eqref{l_H_diff_eq1}.
    
\end{lemma}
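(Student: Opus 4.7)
\medskip
\noindent\textbf{Proof plan.}
The plan is to control $|\widehat{\tau} - \tau^\star|$ in two stages: first a uniform deviation bound of $\widehat{DO}(\cdot)$ around $DO(\cdot)$, and then a steepness (margin) transfer from function values back to argument values at $\tau^\star$.

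First, I would show that $\sup_{\tau \in \mathbb{R}} |\widehat{DO}(\tau) - DO(\tau)| \lesssim \epsilon_H + \sqrt{\log(1/\eta)/n}$ on an event of probability at least $1 - \eta/2$. Introducing the intermediate quantity $DO^{\dagger}(\tau)$, defined as the population disparity of the plug-in rule $\widehat{g}_{DO,\tau}$ evaluated against the true $P_{a,y}$, the deviation $|\widehat{DO}(\tau) - DO^{\dagger}(\tau)|$ is controlled uniformly in $\tau$ by the Dvoretzky--Kiefer--Wolfowitz inequality, exactly as in the proof of \Cref{thm_fair_guarantee}, yielding the $\sqrt{\log(1/\eta)/n}$ term. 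The residual $|DO^{\dagger}(\tau) - DO(\tau)|$ is the plug-in error, and after a symmetric-difference argument it reduces to bounding $\mathbb{P}_{a,y}\{\mathrm{sign}(\widehat{T}_a(X)) \neq \mathrm{sign}(T_a(X))\}$, which, by \Cref{l_H_diff_const} together with the Gaussian density of $T_a$ being bounded near the threshold (via \Cref{l_tau_boundary}), is $O(\epsilon_H)$.

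Next, I would verify the steepness property of $DO(\tau)$ at $\tau^\star$ with exponent $\gamma=1$. Under the Gaussian model, $T_a \mid \{A=a, Y=1\}$ is Gaussian with mean and variance governed by $\|\mu_{a,1}-\mu_{a,0}\|_{K_a}$, and hence $DO(\tau)$ is a difference of two smooth Gaussian CDFs whose thresholds depend on $\tau$ only through $\log\{\pi_{a,0}/(\pi_{a,1} - \tau(2a-1))\}$. A direct differentiation would show that $|DO'(\tau^\star)|$ is bounded below by a positive absolute constant, using \Cref{a_class_prob}, \Cref{a_data}\ref{a_data_snr}, and \Cref{l_tau_boundary}, which confines $\tau^\star$ to the compact interior $(-\pi_{0,1}, \pi_{1,1})$ away from the singularities of the logarithm.

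Finally, a case split on $|DO(0)|$ closes the argument. Set $\epsilon = \epsilon_H + \sqrt{\log(1/\eta)/n}$ and condition on the uniform deviation event from the first step. If $|DO(0)| \leq \delta - C\epsilon$, then $|\widehat{DO}(0)| \leq \delta$, so $\widehat{\tau} = 0$, and simultaneously $\tau^\star = 0$, making the difference vanish and matching the indicator in the claim. Otherwise, monotonicity of $DO$ and $\widehat{DO}$ from \Cref{prop:D_and_R}, combined with the extremal characterisations $|\widehat{DO}(\widehat{\tau})| \leq \delta$ and $|DO(\tau^\star)| \leq \delta$, forces $|DO(\widehat{\tau}) - DO(\tau^\star)| \lesssim \epsilon$; the steepness from the second step then converts this to $|\widehat{\tau} - \tau^\star| \lesssim \epsilon$. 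The main obstacle will be the steepness verification, which demands a quantitative lower bound on a Gaussian density at a $\tau^\star$-dependent threshold uniformly in the regime of interest. A secondary subtlety is ruling out that $\widehat{\tau}$ jumps to the opposite sign of $\tau^\star$ under the deviation event; this is handled because two monotone functions that are uniformly $\epsilon$-close must have their $\pm\delta$ level sets on the same side of zero once $\epsilon$ is small relative to $|DO(0) \mp \delta|$.
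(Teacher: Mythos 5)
Your plan matches the paper's proof in both structure and key lemmas: the uniform deviation bound $\sup_\tau|\widehat{DO}(\tau)-DO(\tau)|\lesssim\epsilon_H+\sqrt{\log(1/\eta)/n}$ is exactly \Cref{l_DO_estimation_const} (decomposed into a DKW term plus a plug-in error, as you describe), the steepness step is \Cref{l_D_lower_upper_const} combined with \Cref{l_tau_boundary} to keep $\tau^\star$ away from the singularities of the logarithm, and the final transfer is the same case analysis (the paper's Cases 1--4), with the only cosmetic difference being that the paper directly bounds the one-sided tails $\prob(\widehat{\tau}>\tau^\star+\varepsilon)$ and $\prob(\widehat{\tau}<\tau^\star-\varepsilon)$ while you pass through $|DO(\widehat{\tau})-DO(\tau^\star)|\lesssim\epsilon$ as an intermediate step. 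One small imprecision: your sign-flip safeguard invokes ``once $\epsilon$ is small relative to $|DO(0)\mp\delta|$,'' which is not an available hypothesis; the correct observation is that whenever $|DO(0)|$ lies within $O(\epsilon)$ of $\delta$, both $\tau^\star$ and $\widehat{\tau}$ are already $O(\epsilon)$-close to zero (the paper's Case 2), so a sign flip is harmless rather than impossible.
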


\begin{proof}
    Denote $\mathcal{E}_{DO} = \{ \sup_{\tau \in \mathbb{R}}|\widehat{DO}(\tau) - DO(\tau)| \le \epsilon_{DO}\}$. Taking 
    \[\epsilon_{DO} \asymp \epsilon_H +\sqrt{\frac{\log(1/\eta)}{n}},\]
    it holds from \Cref{l_DO_estimation_const} that $\mathbb{P}(\mathcal{E}_{DO}^c) \leq \eta$. The rest of the proof is divided into four cases.

    \noindent \textbf{Case 1: When $|DO(0)| \le \delta - \epsilon_{DO}$.} In this case, we have that $\tau^\star = 0$. Moreover, under the event $\mathcal{E}_{DO}$, $|\widehat{DO}(0)| \le \delta$, then $\widehat{\tau}=0$. Consequently, $|\widehat{\tau} - \tau^\star|=0$.

    \noindent \textbf{Case 2 : When $\tau^\star=0$ and $ \delta - \epsilon_{DO} < |DO(0)| \le \delta$.} In this case, we have that 
    \begin{align*}
	\prob(\widehat{\tau} - \tau^\star > \varepsilon) & = \prob(\widehat{\tau}> \varepsilon) \\
	& \le \prob(\widehat{\tau} > \varepsilon, \mathcal{E}_{DO}) + \prob(\mathcal{E}_{DO}^c) \\
	& \le \prob(\widehat{\tau} > \varepsilon, \widehat{DO}(0) > \delta, \mathcal{E}_{DO}) + \prob(\mathcal{E}_{DO}^c) \\
	& \le \prob \big( \widehat{DO}(\varepsilon) > \delta, \mathcal{E}_{DO} \big) + \prob(\mathcal{E}_{DO}^c) \\
	& \le \prob \big( \widehat{DO}(\varepsilon) - DO(\varepsilon) > \delta - DO(0) + c_1 \varepsilon, \mathcal{E}_{DO} \big) + \prob(\mathcal{E}_{DO}^c) \\
	& \le \prob(\mathcal{E}_{DO}^c),
\end{align*}
where the second inequality follows as $\widehat{\tau} >0$, the third inequality follows from \Cref{l_est_DO_decrease}, the fourth inequality follows as $DO(\epsilon) \le DO(0) - c_1\varepsilon$ by \Cref{l_D_lower_upper_const} and the last inequality follows by taking  $\varepsilon > \epsilon_{DO}/c_1$. Analogously,
\begin{align*}
	\prob(\widehat{\tau} < \tau^\star -\varepsilon) & = \prob(\widehat{\tau} < -\varepsilon)  \\
	& \le  \prob(\widehat{\tau} < -\varepsilon, \mathcal{E}_{DO}) + \prob(\mathcal{E}_{DO}^c) \\
	& = \prob(\widehat{\tau} < -\varepsilon, \widehat{DO}(0) < -\delta, \mathcal{E}_{DO}) + \prob(\mathcal{E}_{DO}^c) \\
	& \le  \prob\big( \widehat{DO}(-\varepsilon) < -\delta, \mathcal{E}_{DO} \big) + \prob(\mathcal{E}_{DO}^c) \\
	& \le \prob\big( \widehat{DO}(-\varepsilon) - DO(-\varepsilon) < -\delta - DO(0) - c_1\varepsilon, \mathcal{E}_{DO} \big) + \prob(\mathcal{E}_{DO}^c) \\
	& \le \prob(\mathcal{E}_{DO}^c),
\end{align*}
where the second inequality follows from \Cref{l_est_DO_decrease}, the third inequality follows from the fact that $DO(-\varepsilon) \ge DO(0) + c_1\varepsilon$ by \Cref{l_D_lower_upper_const} and the last inequality follows by taking  $\varepsilon > \epsilon_{DO}/c_1$.

\noindent \textbf{Case 3: When $DO(\tau^\star)=\delta$ and $\tau^\star >0$.} In this case, by \Cref{prop:D_and_R}, it holds that $DO(0)>\delta$ . Hence,
\begin{align*}
	\prob(\widehat{\tau} > \tau^\star + \varepsilon) & \le \prob\big( \widehat{\tau} > \tau^\star + \varepsilon, \mathcal{E}_{DO} \big) +\prob(\mathcal{E}_{DO}^c)  \\
	& = \prob\big( \widehat{\tau} > \tau^\star + \varepsilon, \widehat{DO}(0) > \delta, \mathcal{E}_{DO} \big)  +\prob(\mathcal{E}_{DO}^c) \\
	& \le \prob\big( \widehat{DO}(\tau^\star + \varepsilon) > \delta, \mathcal{E}_{DO} \big) + \prob(\mathcal{E}_{DO}^c) \\
	& = \prob\big( \widehat{DO}(\tau^\star + \varepsilon) - DO(\tau^\star+\varepsilon) > DO(\tau^\star) - DO(\tau^\star + \varepsilon) , \mathcal{E}_{DO} \big) + \prob(\mathcal{E}_{DO}^c) \\
	& \le \prob( \epsilon_{DO} > c_1 \varepsilon ) + \prob(\mathcal{E}_{DO}^c) =  \prob(\mathcal{E}_{DO}^c),
\end{align*}
where the first equality follows as $\prob( \widehat{\tau} > \tau^\star + \varepsilon, \widehat{DO}(0) \le \delta) = 0$, the second inequality follows from \Cref{l_est_DO_decrease}, the third inequality follows from \Cref{l_D_lower_upper_const} and the last equality follows by taking $\varepsilon > \epsilon_{DO}/c_1$. Similarly, by taking $\varepsilon > \epsilon_{DO}/c_1$, it also holds that
\begin{align*}
	& \quad \prob(\widehat{\tau} < \tau^\star - \varepsilon) \\
	& \le \prob(\widehat{\tau} < \tau^\star - \varepsilon, \mathcal{E}_{DO}) + \prob(\mathcal{E}_{DO}^c) \\
	& \le  \prob\big( \widehat{DO}(\tau^\star - \varepsilon) \le \delta,  \mathcal{E}_{DO} \big) + \prob(\mathcal{E}_{DO}^c) \\
	& = \prob\big(  DO(\tau^\star - \varepsilon) - \widehat{DO}(\tau^\star - \varepsilon) \ge DO(\tau^\star - \varepsilon) -DO(\tau^\star),  \mathcal{E}_{DO} \big) +  \prob(\mathcal{E}_{DO}^c) \\
	& \le  \prob( \epsilon_{DO} > c_1 \varepsilon )  + \prob(\mathcal{E}_{DO}^c)  \\
	& \le \prob(\mathcal{E}_{DO}^c).
\end{align*}

\noindent \textbf{Case 4: When $DO(\tau^\star)=-\delta$ and $\tau^\star <0$.} In this case, it holds that $DO(0)<-\delta$. The rest of the proof follows similarly to the proof for \textbf{Case 3}, we include them for completeness. 
For $\varepsilon > \epsilon_{DO}/c_1$,
\begin{align*}
	& \quad \prob(\widehat{\tau} > \tau^\star+\varepsilon) \\
	& \le \prob( \widehat{\tau} > \tau^\star+\varepsilon, \mathcal{E}_{DO} ) + \prob(\mathcal{E}_{DO}^c) \\
	& \le \prob( \widehat{DO}(\tau^\star + \varepsilon) \ge -\delta, \mathcal{E}_{DO} ) +  \prob(\mathcal{E}_{DO}^c)  \\
	& = \prob( \widehat{DO}(\tau^\star + \varepsilon) - DO(\tau^\star+\varepsilon) \ge DO(\tau^\star) - DO(\tau^\star+\varepsilon), \mathcal{E}_{DO} ) +  \prob(\mathcal{E}_{DO}^c) \\ 
	& \le  \prob(\mathcal{E}_{DO}^c). 
\end{align*}
Since $\prob(\widehat{\tau} < \tau^\star-\varepsilon,  \widehat{DO}(0) \ge -\delta )=0$, for $\varepsilon > \epsilon_{DO}/c_1$,
\begin{align*}
	\prob(\widehat{\tau} < \tau^\star-\varepsilon) & \le \prob(\widehat{\tau} < \tau^\star-\varepsilon, \mathcal{E}_{DO}) + \prob( \mathcal{E}_{DO}^c ) \\
	& =  \prob(\widehat{\tau} < \tau^\star-\varepsilon, \widehat{DO}(0) < -\delta, \mathcal{E}_{DO}) + \prob( \mathcal{E}_{DO}^c ) \\
	& \le \prob( \widehat{DO}(\tau^\star - \varepsilon) < -\delta,  \mathcal{E}_{DO}) + \prob( \mathcal{E}_{DO}^c ) \\
	& = \prob( \widehat{DO}(\tau^\star - \varepsilon) - DO(\tau^\star - \varepsilon) < -\delta - DO(\tau^\star - \varepsilon),  \mathcal{E}_{DO}) + \prob( \mathcal{E}_{DO}^c ) \\
	& \le \prob( \mathcal{E}_{DO}^c ).
\end{align*}

The lemma thus follows by combining results from all cases together.

\end{proof}

\subsection{Control of \texorpdfstring{$|\widehat{DO}(\tau) - DO(\tau)|$}{}}
\begin{lemma}\label{l_DO_estimation_const}
    Under the same condition of \Cref{l_T_diff_const}, for any small constant $\eta \in (0,1/2)$, it holds with probability at least $1-\eta$ that 
    \begin{align*}
        \sup_{\tau \in \mathbb{R}} |\widehat{DO}(\tau) - DO(\tau)| \lesssim \epsilon_H +\sqrt{\frac{\log(1/\eta)}{n}},
    \end{align*}
    where $\epsilon_H$ is explicitly defined in \eqref{l_H_diff_eq1}.
\end{lemma}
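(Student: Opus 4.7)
The plan is to decompose $\widehat{DO}(\tau)-DO(\tau)$ into a classifier-approximation piece and an empirical-process piece, and control each of them uniformly in $\tau$. Using $s_{DO,a}=2a-1$, $b_{DO,a}=0$ from \Cref{prop:bilinear_coef}, for $a \in \{0, 1\}$ I would write the $a$-th summand of $\widehat{DO}(\tau)-DO(\tau)$ as
\[
(2a-1)\Bigl\{\underbrace{\textstyle\int \widehat g_{DO,\tau}(x,a)\,d(\widehat P_{a,1}-P_{a,1})(x)}_{(\mathrm{I}_a)} + \underbrace{\textstyle\int [\widehat g_{DO,\tau}(x,a) - g_{DO,\tau}(x,a)]\,dP_{a,1}(x)}_{(\mathrm{II}_a)}\Bigr\},
\]
and bound $\sup_\tau|(\mathrm{I}_a)|$ and $\sup_\tau|(\mathrm{II}_a)|$ separately before applying a union bound over $a$.

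For $(\mathrm{I}_a)$, I would condition on the training data $\widetilde{\mathcal{D}}$ and note that $\widehat g_{DO,\tau}(\cdot,a)=\indc\{\widehat\eta_a(\cdot)\ge \widehat c_a(\tau)\}$ with monotone threshold $\widehat c_a(\tau)=\widehat\pi_{a,0}/(\widehat\pi_{a,1}-\tau(2a-1))$ on the range where the denominator is positive; outside this range the classifier is identically $0$ or $1$ and $(\mathrm{I}_a)$ collapses to an empirical-measure deviation of a constant function. On the interior, $\{\widehat g_{DO,\tau}(\cdot,a):\tau\}$ is a one-parameter family of level sets of the single real-valued statistic $\widehat\eta_a(X)$, so the Dvoretzky--Kiefer--Wolfowitz inequality applied to the pushforward of $\widehat\eta_a(X)$ under $P_{a,1}$ yields $\sup_\tau|(\mathrm{I}_a)|\lesssim\sqrt{\log(1/\eta)/n_{a,1}}\lesssim\sqrt{\log(1/\eta)/n}$ with probability at least $1-\eta/4$ (the last inequality invoking \Cref{l_n_ay}); taking expectation over $\widetilde{\mathcal{D}}$ preserves the bound.

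For $(\mathrm{II}_a)$, I would bound it by $\prob_{a,1}\{\widehat g_{DO,\tau}(X,a)\neq g_{DO,\tau}(X,a)\}$. Recasting both indicators as thresholds on $\log\eta_a$ and $\log\widehat\eta_a$ at levels $c_a(\tau)=\log\{\pi_{a,0}/(\pi_{a,1}-\tau(2a-1))\}$ and $\widehat c_a(\tau)$, the disagreement event is contained in $\{|\log\eta_a(X)-c_a(\tau)|\le |\log\widehat\eta_a(X)-\log\eta_a(X)|+|\widehat c_a(\tau)-c_a(\tau)|\}$. Splitting this event via a union bound, \Cref{l_H_diff_const} controls the log-ratio increment by $\epsilon_H$, \Cref{l_emperical_prob} controls $|\widehat c_a(\tau)-c_a(\tau)|$ by $\sqrt{\log(1/\eta)/\widetilde n}$ uniformly in $\tau$ on any sub-interval bounded away from the endpoints $\pm\pi_{a,1}$, and a small-ball estimate for $\log\eta_a(X)=H_a(X)-\tfrac12\|\mu_{a,1}-\mu_{a,0}\|_{K_a}^2$ (which under $P_{a,1}$ is Gaussian with bounded density, by \Cref{l_tau_boundary}) then yields $\sup_\tau|(\mathrm{II}_a)|\lesssim\epsilon_H+\sqrt{\log(1/\eta)/n}$.

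Combining the two parts and union-bounding over $a\in\{0,1\}$ and the high-probability events of Lemmas \ref{l_H_diff_const}, \ref{l_tau_boundary}, \ref{l_emperical_prob}, and \ref{l_n_ay} delivers the claim. The main obstacle I expect is making the control of $(\mathrm{II}_a)$ genuinely uniform in $\tau$, as $\widehat c_a(\tau)$ can diverge near the boundary of the effective range; the remedy is to observe that outside a bounded interval both $\widehat g_{DO,\tau}(\cdot,a)$ and $g_{DO,\tau}(\cdot,a)$ stabilise to identical constants (agreeing on the tails of $\tau$), so the supremum reduces to a compact $\tau$-interval on which $\tau\mapsto c_a(\tau)$ is Lipschitz and the small-ball density bound is uniform.
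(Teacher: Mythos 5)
Your decomposition is in substance identical to the paper's: the paper first bounds $\sup_\tau |\widehat{DO}(\tau) - \mathbb{E}\{\widehat{DO}(\tau)\mid\widetilde{\mathcal D}\}|$ and $\sup_\tau|\mathbb{E}\{\widehat{DO}(\tau)\mid\widetilde{\mathcal D}\} - DO(\tau)|$, which correspond exactly to $\sum_a(2a-1)(\mathrm I_a)$ and $\sum_a(2a-1)(\mathrm{II}_a)$. Your treatment of $(\mathrm I_a)$ via DKW applied to the pushforward of $\widehat\eta_a(X)$ is also the paper's argument verbatim.

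The gap is in how you close $(\mathrm{II}_a)$ uniformly in $\tau$. Your containment of the disagreement event inside $\{|\log\eta_a(X)-c_a(\tau)|\le \epsilon_H+|\widehat c_a(\tau)-c_a(\tau)|\}$ is fine, and the small-ball bound $\prob\{|\log\eta_a(X)-c_a(\tau)|\le\delta\}\lesssim\delta$ is indeed uniform in $c_a(\tau)$ since a Gaussian density is globally bounded. But the radius $\delta = \epsilon_H + |\widehat c_a(\tau)-c_a(\tau)|$ is \emph{not} uniformly $O(\epsilon_H+\sqrt{\log(1/\eta)/n})$: on the log scale $\widehat c_a(\tau)-c_a(\tau)$ contains $\log(\widehat\pi_{a,1}-\tau(2a-1))-\log(\pi_{a,1}-\tau(2a-1))$, which blows up as $\tau$ approaches $\min\{\widehat\pi_{a,1},\pi_{a,1}\}$. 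Your remedy — that outside a bounded interval both classifiers ``stabilise to identical constants'' — is not true in the transition zone $\tau\in(-\pi_{0,1},-\widehat\pi_{0,1}]$ or $[\widehat\pi_{1,1},\pi_{1,1})$ (wlog one ordering): there one classifier has already collapsed to the constant $0$ while the other is still a non-trivial level set, so they do \emph{not} agree and the small-ball bound gives nothing. The paper handles this region separately (its ``Step 2--Case 2'') by noting that for $\tau$ in this strip the true classifier's threshold $c_a(\tau)\gtrsim\log\sqrt{\widetilde n/\log(1/\eta)}$, so the residual probability is a Gaussian \emph{tail} of order $\exp\{-c\log^2(\widetilde n)\}\lesssim\epsilon_H$, which is a qualitatively different estimate from the small-ball bound and is the missing ingredient in your sketch. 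Until you supply that tail argument, your supremum over $\tau$ is not controlled.
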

\begin{proof}
    Note that, by triangle inequality, 
    \begin{align} \notag
        &\sup_{\tau \in \mathbb{R}} |\widehat{DO}(\tau) - DO(\tau)|\\ \notag
        \leq\;&  \sup_{\tau \in \mathbb{R}}|\widehat{DO}(\tau) - \mathbb{E}\{\widehat{DO}(\tau) | \widetilde{\data}\}| +  \sup_{\tau \in \mathbb{R}} |\mathbb{E}\{\widehat{DO}(\tau) | \widetilde{\data}\} - DO(\tau)|\\\label{l_DO_estimation_const_eq1}
        =\;& (I)+(II).
    \end{align}
    \noindent \textbf{Step 1: Upper bound on $(I)$.} To control $(I)$, by the Dvoretzky--Kiefer--Wolfowitz inequality \citep{dvoretzky1956asymptotic,massart1990tight}, we have that 
    \begin{align*}
        &\prob\bigg[ \sup_{\tau \in \mathbb{R}} \bigg|   \frac{1}{n_{1,1}}\sum_{i=1}^{n_{1,1}}\indc\big\{ (\widehat{\pi}_{1,1} - \tau) \widehat \eta_1(X_{1,1,i}) > \widehat{\pi}_{1,0}\big\} - \prob\big\{ (\widehat{\pi}_{1,1} - \tau) \widehat \eta_1(X_{1,1,i}) > \widehat{\pi}_{1,0} | \widetilde{\data}  \big\}   \bigg| \ge \epsilon \bigg| \widetilde{\data}  \bigg] \\
        &\hspace{11cm} \le 2\exp\{-2n_{1,1}\epsilon^2\}, \\
        &\prob\bigg[ \sup_{\tau \in \mathbb{R}} \bigg|  \frac{1}{n_{0, 1}}\sum_{i=1}^{n_{0,1}} \indc\big\{ (\widehat{\pi}_{0,1} + \tau) \widehat \eta_0(X_{0,1,i}) > \widehat{\pi}_{0,0} \big\}- \prob\big\{ (\widehat{\pi}_{0,1} + \tau) \widehat \eta_0(X_{0,1,i}) > \widehat{\pi}_{0,0}   | \widetilde{\data}  \big\}   \bigg| \ge \epsilon \bigg| \widetilde{\data}  \bigg] \\
        & \hspace{11cm}\le 2\exp\{-2n_{0,1}\epsilon^2\}. 
    \end{align*}
    Thus, by a union bound argument, it holds with probability at least $1-\eta/2$ that 
    \begin{align}\label{eq:error_D_cond_train}
        \sup_{\tau \in \mathbb{R}} |\widehat{DO}(\tau) -  \mathbb{E}\{ \widehat{DO}(\tau) | \widetilde{\data} \}|  \lesssim \sqrt{\frac{\log(1/\eta)}{n_{1,1}}} + \sqrt{\frac{\log(1/\eta)}{n_{0,1}}} \lesssim \sqrt{\frac{\log(1/\eta)}{n}},
    \end{align}
    where the last inequality follows by further conditioning on the event in \Cref{l_n_ay} and a union bound argument.

    \noindent \textbf{Step 2: Upper bound on $(II)$.} Note that 
    \begin{align*}
        &\mathbb{E}\{ 	\widehat{DO}(\tau) | \widetilde{\data} \} \\
    =\;&\prob_{1,1}\big\{ (\widehat{\pi}_{1,1} - \tau) \widehat \eta_1(X) > \widehat{\pi}_{1,0} | \widetilde{\data} \big\}- \prob_{0,1}\big\{ (\widehat{\pi}_{0,1} + \tau) \widehat \eta_0(X) > \widehat{\pi}_{0,0}   | \widetilde{\data}  \big\}\\
    =\;& \left\{
    \begin{array}{cc}
    \prob_{1,1}\big[ \log\{\widehat{\eta}_1(X)\} >\log( \frac{\widehat{\pi}_{1,0}}{\widehat{\pi}_{1,1}-\tau})  \big| \widetilde{\data}\big] -  \prob_{0,1}\big[ \log\{\widehat{\eta}_0(X)\} >\log( \frac{\widehat{\pi}_{0,0}}{\widehat{\pi}_{0,1}+\tau})  \big| \widetilde{\data}\big]	 , & -\widehat{\pi}_{0,1} < \tau < \widehat{\pi}_{1,1},  \\
        \prob_{1,1}\big[ \log\{\widehat{\eta}_1(X)\} >\log( \frac{\widehat{\pi}_{1,0}}{\widehat{\pi}_{1,1}-\tau})  \big| \widetilde{\data} \big] , & \tau \le -\widehat{\pi}_{0,1}, \\
        -  \prob_{0,1}\big[ \log\{\widehat{\eta}_0(X)\} >\log( \frac{\widehat{\pi}_{0,0}}{\widehat{\pi}_{0,1}+\tau}) \big| \widetilde{\data} \big], & \tau \ge \widehat{\pi}_{1,1}. 
    \end{array}
     \right. \\
    \end{align*}
    We denote $\Pi = (-\min(\pi_{0,1},\widehat{\pi}_{0,1}), \min(\pi_{1,1}, \widehat{\pi}_{1,1}))$. In the rest of the proof, to control $(II)$, we will consider various cases.

    \noindent \textbf{Step 2-Case 1: $\tau \in \Pi$.} Note that for $\tau \in \Pi$, 
    \begin{align*}
        &\mathbb{E}\{ 	\widehat{DO}(\tau) | \widetilde{\data} \} \\
        =\;& \prob\big\{ (\widehat{\pi}_{1,1} - \tau) \widehat \eta_1(X_{1,1,i}) > \widehat{\pi}_{1,0} | \widetilde{\data} \big\}- \prob\big\{ (\widehat{\pi}_{0,1} + \tau) \widehat \eta_0(X_{0,1,i})> \widehat{\pi}_{0,0}   | \widetilde{\data}  \big\} \\
        =\;&\prob_{1,1}\big\{ (\widehat{\pi}_{1,1} - \tau) \widehat \eta_1(X) > \widehat{\pi}_{1,0} | \widetilde{\data} \big\}- \prob_{0,1}\big\{ (\widehat{\pi}_{0,1} + \tau) \widehat \eta_0(X) > \widehat{\pi}_{0,0}   | \widetilde{\data}  \big\}\\
        =\;& \int_{\widehat{H}_1(x) - \log\{\widehat{\pi}_{1,0}/(\widehat{\pi}_{1,1} - \tau)\} >0} \mathrm{d}P_{1,1}(x) - \int_{\widehat{H}_0(x) - \log\{\widehat{\pi}_{0,0}/(\widehat{\pi}_{0,1} + \tau)\} >0} \mathrm{d}P_{0,1}(x),
    \end{align*}
    with $\widehat{H}_a$ defined in \eqref{eq_H_hat}. Similarly, we can rewrite $DO(\tau)$ as
    \begin{align*}
        DO(\tau) = \int_{H_1(x) - \log\{\pi_{1,0}/(\pi_{1,1} - \tau)\} >0} \mathrm{d}P_{1,1}(x) - \int_{H_0(x) - \log\{\pi_{0,0}/(\pi_{0,1} + \tau)\} >0} \mathrm{d}P_{0,1}(x),
    \end{align*}
    with $H_a$ defined in \eqref{eq_H}. 
    For $a\in \{0,1\}$ and $X \sim \mathcal{GP}(\mu_{a,1},K_a)$, consider the following event,
    \[\mathcal{E}_{H_a} = \Big\{\sup_{\tau\in \Pi}\Big|\widehat{H}_a(X) -\log\Big(\frac{\widehat{\pi}_{a,0}}{\widehat{\pi}_{a,1} + (1-2a)\tau}\Big) - H_a(X)+\log\Big(\frac{\pi_{a,0}}{\pi_{a,1} + (1-2a)\tau}\Big)\Big| \leq \epsilon_H,\;\; a\in\{0,1\}\Big\}.\]
    Note that 
    \begin{align*}
        &\sup_{\tau\in \Pi}\Big|\widehat{H}_a(X) -\log\Big(\frac{\widehat{\pi}_{a,0}}{\widehat{\pi}_{a,1} + (1-2a)\tau}\Big) - H_a(X)+\log\Big(\frac{\pi_{a,0}}{\pi_{a,1} + (1-2a)\tau}\Big)\Big| \\
        =\;&\sup_{\tau\in \Pi}\Big\{|\widehat{H}_a(X) - H_a(X)|+ |\log(\widehat{\pi}_{a,0}) - \log(\pi_{a,0})| + |\log(\widehat{\pi}_{a,1} + (1-2a)\tau) - \log(\pi_{a,1} + (1-2a)\tau)|\Big\}\\
        \lesssim \;& |\widehat{H}_a(X) - H_a(X)| + |\widehat{\pi}_{a,0} -\pi_{a,0}| + |\widehat{\pi}_{a,1}-\pi_{a,1}|.
    \end{align*}
    Thus, picking $\epsilon_H$ as the one in \Cref{l_H_diff_eq1}, by Lemmas \ref{l_H_diff_const}, \ref{l_emperical_prob} and a union bound argument, we have that $\mathbb{P}_{0,1}(\mathcal{E}_{H_0})+\mathbb{P}_{1,1}(\mathcal{E}_{H_1})\geq 1-\eta/2$. 
    
    For any $\tau \in \Pi$, it holds that 
    \begin{align} \notag
        &\mathbb{E}\{ \widehat{DO}(\tau) | \widetilde{\data} \} -DO(\tau)\\ \notag
        =\;&\int_{\widehat{H}_1(x) - \log\{\widehat{\pi}_{1,0}/(\widehat{\pi}_{1,1} - \tau)\} >0} \mathrm{d}P_{1,1}(x) -\int_{H_1(x) - \log\{\pi_{1,0}/(\pi_{1,1} - \tau)\} >0} \mathrm{d}P_{1,1}(x)\\ \notag
        & +\int_{H_0(x) - \log\{\pi_{0,0}/(\pi_{0,1} + \tau)\} >0} \mathrm{d}P_{0,1}(x) - \int_{\widehat{H}_0(x) - \log\{\widehat{\pi}_{0,0}/(\widehat{\pi}_{0,1} + \tau)\} >0} \mathrm{d}P_{0,1}(x)\\ \notag
        \leq\;& \int_{0>H_1(x) - \log\{\pi_{1,0}/(\pi_{1,1} - \tau)\}> H_1(x) - \log\{\pi_{1,0}/(\pi_{1,1} - \tau)\}-\widehat{H}_1(x) + \log\{\widehat{\pi}_{1,0}/(\widehat{\pi}_{1,1} - \tau)\}} \mathrm{d}P_{1,1}(x)\\ \notag
        & +\int_{0<H_0(x) - \log\{\pi_{0,0}/(\pi_{0,1} + \tau)\} <H_0(x) - \log\{\pi_{0,0}/(\pi_{0,1} + \tau)\} -\widehat{H}_0(x) + \log\{\widehat{\pi}_{0,0}/(\widehat{\pi}_{0,1} + \tau)} \mathrm{d}P_{0,1}(x)\\ \label{l_DO_estimation_const_eq2}
        \leq\;& \epsilon_{H}\Bigg[\frac{1}{\|\mu_{1,1} - \mu_{1,0}\|_{K_1}}\exp\Big\{-\frac{\|\mu_{1,1} - \mu_{1,0}\|_{K_1}^4 \vee 1}{\|\mu_{1,1} - \mu_{1,0}\|_{K_1}^2}\Big\}+ \frac{1}{\|\mu_{0,1} - \mu_{0,0}\|_{K_0}}\exp\Big\{-\frac{\|\mu_{0,1} - \mu_{0,0}\|_{K_0}^4 \vee 1}{\|\mu_{0,1} - \mu_{0,0}\|_{K_0}^2}\Big\}\Bigg]\\ \label{l_DO_estimation_const_eq3}
        \asymp \;&\epsilon_{H},
    \end{align}
    where the last inequality follows from a similar argument as the one leading to \eqref{pf_thm_fairness_eq1}. Using a similar argument, we can also achieve a same (up to constant order) upper bound for $DO(\tau)-\mathbb{E}\{ \widehat{DO}(\tau) | \widetilde{\data}\}$ as the one in \eqref{l_DO_estimation_const_eq3}.

    \noindent \textbf{Step 2-Case 2: When $-\pi_{0,1} < \tau \leq -\widehat{\pi}_{0,1}$}. Note that in this case, by standard calculation, we have that 
    \begin{align*}
        DO(\tau) = \int_{H_1(x) - \log\{\pi_{1,0}/(\pi_{1,1} - \tau)\} >0} \mathrm{d}P_{1,1}(x) - \int_{H_0(x) - \log\{\pi_{0,0}/(\pi_{0,1} + \tau)\} >0} \mathrm{d}P_{0,1}(x),
    \end{align*}
    and
    \begin{align*}
        \mathbb{E}\{ 	\widehat{DO}(\tau) | \widetilde{\data} \} 
        =\int_{\widehat{H}_1(x) - \log\{\widehat{\pi}_{1,0}/(\widehat{\pi}_{1,1} - \tau)\} >0} \mathrm{d}P_{1,1}(x).
    \end{align*}
    Consequently,
    \begin{align*}
        &\sup_{-\pi_{0,1} < \tau \leq -\widehat{\pi}_{0,1}}\big|\mathbb{E}\{ 	\widehat{DO}(\tau) | \widetilde{\data} \} - DO(\tau)\big|\\
        \leq \;& \sup_{-\pi_{0,1} < \tau \leq -\widehat{\pi}_{0,1}}
        \Big|\int_{\widehat{H}_1(x) - \log\{\widehat{\pi}_{1,0}/(\widehat{\pi}_{1,1} - \tau)\} >0} \mathrm{d}P_{1,1}(x)-\int_{H_1(x) - \log\{\pi_{1,0}/(\pi_{1,1} - \tau)\} >0} \mathrm{d}P_{1,1}(x)\Big|\\
        &+ \sup_{-\pi_{0,1} < \tau \leq -\widehat{\pi}_{0,1}}\Big|\int_{H_0(x) - \log\{\pi_{0,0}/(\pi_{0,1} + \tau)\} >0} \mathrm{d}P_{0,1}(x)\Big|\\
        =\;& (A) + (B).
    \end{align*}
    Note that $(A)$ can be controlled using a similar argument as the one used in \textbf{Step 1} and we only present the upper bound on $(B)$ in the sequel. Consider the event $\mathcal{E}_\pi = \{|\pi_{0,1} - \widehat{\pi}_{0,1}| \lesssim \sqrt{\log(1/\eta)/\widetilde{n}}\}$. By \Cref{l_emperical_prob}, it holds that $\mathbb{P}(\mathcal{E}_\pi) \geq 1-\eta/2$. The rest of the proof is constructed conditioning on the event $\mathcal{E}_\pi$ happening. Therefore, we have that 
    \begin{align} \notag
        (B) &= \sup_{-\pi_{0,1} < \tau \leq -\widehat{\pi}_{0,1}} \mathbb{P}_{0,1}\Big\{H_0(X) >\log\Big(\frac{\pi_{0,0}}{\pi_{0,1}+\tau}\Big)\Big\}\\ \notag
        &\leq \mathbb{P}_{0,1}\Big\{H_0(X) >\log\Big(\frac{\pi_{0,0}}{\pi_{0,1}-\widehat{\pi}_{0,1}}\Big)\Big\}\\ \notag
        & \leq \mathbb{P}_{0,1}\Big\{H_0(X) >\log\Big(\sqrt{\frac{C_p^2\widetilde{n}}{\log(1/\eta)}}\Big)\Big\} \\ \notag
        &= \mathbb{P}_{0,1}\Big\{H_0(X) -\mathbb{E}_{P_{0,1}}\{H_0(X)\} >\log\Big(\sqrt{\frac{C_p^2\widetilde{n}}{\log(1/\eta)}}\Big)-\frac{\|\mu_{0,1} -\mu_{0,0}\|_{K_0}^2}{2}\Big\}\\ \label{l_DO_estimation_const_eq4}
        & \lesssim \exp\Big[-\frac{1}{\|\mu_{0,1} -\mu_{0,0}\|_{K_0}^2}\log^2\Big(\sqrt{\frac{\widetilde{n}}{\log(1/\eta)}}\Big)\Big] \leq \eqref{l_DO_estimation_const_eq4} \lesssim \epsilon_H,
    \end{align}
    where the third inequality follows from the standard property of Gaussian random variables. 

    The proof for the other cases follows a similar argument as the one used in \textbf{Step2-Case 2}, hence it is omitted here. The lemma thus follows by substituting the results in \eqref{eq:error_D_cond_train} and \eqref{l_DO_estimation_const_eq3} (or \eqref{l_DO_estimation_const_eq4}) into \eqref{l_DO_estimation_const_eq1} and a union bound argument.
\end{proof}

\subsection{Behaviour of \texorpdfstring{$DO$}{} around \texorpdfstring{$\tau^\star$}{} }
\begin{lemma} \label{l_D_lower_upper_const}
    Recall $DO$ given in \Cref{def_disparity_measure}. Under \Cref{a_data}, for any $\epsilon$ in a small neighbourhood of $0$, there exists absolute constants $c_1,c_2 >0$ such that
    \begin{align*}
	c_1\epsilon \le DO(\tau^\star) - DO(\tau^\star+\epsilon) \le c_2 \epsilon, ~~ c_1\epsilon \le DO(\tau^\star - \epsilon) - DO(\tau^\star) \le c_2 \epsilon.
\end{align*}
\end{lemma}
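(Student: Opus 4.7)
The plan is to pin down the derivative of $\tau \mapsto DO(\tau)$ at $\tau^\star$ and show it is bounded above and below by absolute negative constants in a small neighbourhood, after which the lemma follows from the mean value theorem. Under the Gaussian setup of \Cref{section_algorithm}, $H_a(X) := \log \eta_a(X)$ is normally distributed under $P_{a,1}$ with mean $\sigma_a^2/2$ and variance $\sigma_a^2$, where $\sigma_a^2 := \|\mu_{a,1}-\mu_{a,0}\|_{K_a}^2$. Recalling from \Cref{l_range_tau} that $-\pi_{0,1} < \tau^\star < \pi_{1,1}$, for $\tau$ in a small neighbourhood of $\tau^\star$ we can write
\begin{equation*}
    DO(\tau) \;=\; \Phi\!\left(\frac{\log\{\pi_{0,0}/(\pi_{0,1}+\tau)\} - \sigma_0^2/2}{\sigma_0}\right) \;-\; \Phi\!\left(\frac{\log\{\pi_{1,0}/(\pi_{1,1}-\tau)\} - \sigma_1^2/2}{\sigma_1}\right),
\end{equation*}
where $\Phi$ is the standard normal cumulative distribution function. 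This rewriting is the crux: it replaces the abstract probability $DO(\tau)$ by a concrete smooth function of $\tau$.

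Differentiating yields, for all $\tau$ in that neighbourhood,
\begin{equation*}
    DO'(\tau) \;=\; -\,\phi(h_0(\tau))\,\frac{1}{\sigma_0(\pi_{0,1}+\tau)} \;-\; \phi(h_1(\tau))\,\frac{1}{\sigma_1(\pi_{1,1}-\tau)},
\end{equation*}
where $h_0(\tau),h_1(\tau)$ are the two arguments of $\Phi$ above and $\phi$ is the standard normal density. Next I would show each of $\pi_{0,1}+\tau^\star$ and $\pi_{1,1}-\tau^\star$ is bounded away from $0$ and above (by \Cref{a_class_prob} and the interior-range condition $\tau^\star\in(-\pi_{0,1},\pi_{1,1})$ which is quantitative via \Cref{l_range_tau}), and that $\sigma_a^2 \asymp 1$, which follows from \Cref{a_data}\ref{a_data_snr} for the lower bound and from \Cref{a_data}\ref{a_data_cov}-\ref{a_data}\ref{a_data_mean_decay} for the upper bound (since $\sigma_a^2 = \sum_j (\theta_{a,1,j}-\theta_{a,0,j})^2/\lambda_{a,j} \lesssim \sum_j j^{\alpha-2\beta} < \infty$ because $\beta > (\alpha+1)/2$). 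Consequently $h_0(\tau^\star)$ and $h_1(\tau^\star)$ are bounded in absolute value by an absolute constant, so $\phi(h_a(\tau^\star))$ is bounded above by $1/\sqrt{2\pi}$ and below by an absolute positive constant. Combining, there exist absolute constants $0<c_1\le c_2<\infty$ with $-c_2 \le DO'(\tau^\star) \le -c_1$.

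By continuity of $DO'$ in a neighbourhood of $\tau^\star$ (where $\tau^\star$ is interior to $(-\pi_{0,1},\pi_{1,1})$), the same two-sided bound extends to a small interval around $\tau^\star$. Applying the mean value theorem to $DO$ on $[\tau^\star, \tau^\star+\epsilon]$ and on $[\tau^\star-\epsilon, \tau^\star]$ immediately yields both chains of inequalities in the lemma, possibly after relabelling $c_1, c_2$. The main obstacle is conceptual rather than technical: one must ensure the constants $c_1,c_2$ are \emph{absolute}, i.e., depend only on the constants in Assumptions \ref{a_class_prob} and \ref{a_data} and not on the model parameters themselves; this is handled by stringing together the quantitative bounds above on $\sigma_a$, on $\pi_{a,y}\pm\tau^\star$, and on $h_a(\tau^\star)$.
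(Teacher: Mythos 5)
Your proof is correct and follows essentially the same route as the paper: rewrite $DO(\tau)$ explicitly via $\Phi$ using the Gaussianity of $\log\eta_a(X)$, compute $DO'$, bound it above and below by absolute negative constants, and apply the mean value theorem. One small citation slip: the quantitative bound $-\pi_{0,1}+c \le \tau^\star \le \pi_{1,1}-c$ (which you need to keep $\pi_{0,1}+\tau^\star$ and $\pi_{1,1}-\tau^\star$ bounded away from $0$) is supplied by \Cref{l_tau_boundary}, not \Cref{l_range_tau}, which only gives strict inequality.
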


\begin{proof}
    Note that for any $\tau \in \mathbb{R}$, 
\begin{align*}
    &DO(\tau) \\
    =\;& \prob\big\{ \pi_{1,1}\eta_1(X) - \pi_{1,0} > \tau \eta_1(X) | A=1, Y=1  \big\} - \prob\big\{ \pi_{0,1} \eta_0(X) - \pi_{0,0} > - \tau \eta_0(X) | A=0, Y=1 \big\} \\
    =\;& \left\{
    \begin{array}{cc}
    \prob_{1,1}\big[ \log\{\eta_1(X)\} >\log( \frac{\pi_{1,0}}{\pi_{1,1}-\tau})  \big] -  \prob_{0,1}\big[ \log\{\eta_0(X)\} >\log( \frac{\pi_{0,0}}{\pi_{0,1}+\tau}) \big]	 , & -\pi_{0,1} < \tau < \pi_{1,1},  \\
        \prob_{1,1}\big[ \log\{\eta_1(X)\} >\log( \frac{\pi_{1,0}}{\pi_{1,1}-\tau})  \big] , & \tau \le -\pi_{0,1}, \\
        -  \prob_{0,1}\big[ \log\{\eta_0(X)\} >\log( \frac{\pi_{0,0}}{\pi_{0,1}+\tau}) \big], & \tau \ge \pi_{1,1}. 
    \end{array}
     \right. \\
     =\;&  \left\{
     \begin{array}{cc}
         \Phi\bigg[  \frac{\|\mu_{1,1} -\mu_{1,0}\|_{K_1}}{2} - \frac{\log\big\{ \pi_{1,0}/(\pi_{1,1} - \tau) \big\}}{ \|\mu_{1,1} -\mu_{1,0}\|_{K_1} }  \bigg] - \Phi\bigg[  \frac{\|\mu_{0,1} -\mu_{0,0}\|_{K_0}}{2} - \frac{\log\big\{ \pi_{0,0}/(\pi_{0,1} + \tau) \big\}}{ \|\mu_{0,1} -\mu_{0,0}\|_{K_0} }  \bigg], &  -\pi_{0,1} < \tau < \pi_{1,1}, \\
          \Phi\bigg[  \frac{\|\mu_{1,1} -\mu_{1,0}\|_{K_1}}{2} - \frac{\log\big\{ \pi_{1,0}/(\pi_{1,1} - \tau) \big\}}{ \|\mu_{1,1} -\mu_{1,0}\|_{K_1} }  \bigg], &   \tau \le -\pi_{0,1}, \\
          - \Phi\bigg[  \frac{\|\mu_{0,1} -\mu_{0,0}\|_{K_0}}{2} - \frac{\log\big\{ \pi_{0,0}/(\pi_{0,1} + \tau) \big\}}{ \|\mu_{0,1} -\mu_{0,0}\|_{K_0} }  \bigg], &   \tau \ge \pi_{1,1}.
        \end{array}
    \right.
         \end{align*}
     By standard calculation, the derivative of $DO$ with respect to $\tau$ is 
     \begin{align}  \notag
        DO'(\tau) =\;& -\frac{1}{\|\mu_{1,1} -\mu_{1,0}\|_{K_1}(\pi_{1,1}-\tau)}\cdot \phi\bigg[\frac{\|\mu_{1,1} -\mu_{1,0}\|_{K_1}}{2} - \frac{\log\big\{ \pi_{1,0}/(\pi_{1,1} - \tau) \big\}}{ \|\mu_{1,1} -\mu_{1,0}\|_{K_1} } \bigg]\\\label{l_D_lower_upper_const_eq1}
        & -\frac{1}{\|\mu_{0,1} -\mu_{0,0}\|_{K_0}(\pi_{0,1}+\tau)}\cdot\phi\bigg[  \frac{\|\mu_{0,1} -\mu_{0,0}\|_{K_0}}{2} - \frac{\log\big\{ \pi_{0,0}/(\pi_{0,1} + \tau) \big\}}{ \|\mu_{0,1} -\mu_{0,0}\|_{K_0} }  \bigg],
    \end{align}
    where $\phi$ is the standard normal pdf. For any $\epsilon$ in a small right neighborhood of 0, by \Cref{l_tau_boundary}, it holds that  $-\pi_{0,1}+c < \tau^\star-\epsilon, \tau^\star, \tau^\star+\epsilon < \pi_{1,1}-c$, where $c>0$ is a small constant.  Thus, by \eqref{l_D_lower_upper_const_eq1}, we have that $-c_2 \le DO'(\tau^\star-\epsilon), DO'(\tau^\star), DO'(\tau^\star +\epsilon) \le -c_1 $ for some universal constants $c_1, c_2>0$. Hence, by the mean value theorem, we have that 
    \begin{align*}
        c_2 \epsilon \ge DO(\tau^\star) - DO(\tau^\star+\epsilon) \ge c_1 \epsilon, ~~ c_2\epsilon \ge DO(\tau^\star - \epsilon) - DO(\tau^\star) \ge c_1 \epsilon.
    \end{align*}
\end{proof}

\subsection{Auxiliary results}
In this subsection, without loss of generality, we assume that $X \sim \mathcal{GP}(\mu_{a,0}, K)$ for $a \in \{0,1\}$. The case when $X \sim \mathcal{GP}(\mu_{a,1},K)$ can be justified similarly.

\begin{lemma} \label{l_tau_boundary}
    Recall the definition of $\tau^\star$ in \eqref{eq:taustar} with $D$ chosen as DO. Under Assumptions \ref{a_class_prob} and \ref{a_data}, there exists a small absolute constant $0<c<\min\{\pi_{0,1},\pi_{1,1}\}$ such that 
    \[-\pi_{0,1} + c \leq \tau^\star \leq \pi_{1,1} -c .\]
\end{lemma}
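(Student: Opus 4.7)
The plan is to combine the quantitative control on $DO(\tau)$ at the boundaries of its natural interval with the strict monotonicity already established for $DO$. From \Cref{l_range_tau} I have $\tau^\star \in (-\pi_{0,1}, \pi_{1,1})$; the task is to convert this open-interval statement into a quantitative gap using only absolute constants from Assumptions \ref{a_class_prob} and \ref{a_data}.

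First I would dispose of the trivial case $|DO(0)| \le \delta$, in which $\tau^\star = 0$ by definition in \eqref{eq:taustar}. Otherwise, \Cref{prop:D_and_R} and the derivative computation \eqref{l_D_lower_upper_const_eq1} ensure that $DO$ is continuous and strictly decreasing on $(-\pi_{0,1}, \pi_{1,1})$, and that $DO(\tau^\star) \in \{-\delta, \delta\}$. Consider the case $DO(0) > \delta$, where $\tau^\star > 0$ and $DO(\tau^\star) = \delta \ge 0$. Using the explicit Gaussian formula for $DO(\tau)$ recorded in the proof of \Cref{l_D_lower_upper_const}, the first $\Phi$-term vanishes as $\tau \uparrow \pi_{1,1}$ (since $\log\{\pi_{1,0}/(\pi_{1,1}-\tau)\} \to +\infty$), leaving
\[
    \lim_{\tau \uparrow \pi_{1,1}} DO(\tau) = -\Phi\!\left[\frac{\|\mu_{0,1} - \mu_{0,0}\|_{K_0}}{2} + \frac{\log\{(\pi_{0,1}+\pi_{1,1})/\pi_{0,0}\}}{\|\mu_{0,1}-\mu_{0,0}\|_{K_0}}\right].
\]
By \Cref{a_class_prob} the probabilities $\pi_{a,y}$ lie in a compact subinterval of $(0,1)$ with absolute endpoints, and by \Cref{a_data}\ref{a_data_cov}--\ref{a_data_mean_decay} the paper has already argued $\|\mu_{a,1}-\mu_{a,0}\|_{K_a} \asymp 1$ with absolute constants. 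The argument of $\Phi$ above is therefore bounded in absolute value by some absolute $M$, so the displayed limit is at most $-\Phi(-M) =: -c_0 < 0$ with $c_0$ absolute.

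Continuity of $DO$ then yields an absolute $c_1 > 0$ with $DO(\pi_{1,1} - c_1) \le -c_0/2 < 0 \le \delta = DO(\tau^\star)$, and strict monotonicity forces $\tau^\star \le \pi_{1,1} - c_1$. The case $DO(0) < -\delta$ is entirely symmetric: the limit $\lim_{\tau \downarrow -\pi_{0,1}} DO(\tau)$ takes the analogous form with $\|\mu_{1,1}-\mu_{1,0}\|_{K_1}$ in place, is bounded below by an absolute positive constant $c_0'$, and by the same continuity-plus-monotonicity argument yields $\tau^\star \ge -\pi_{0,1} + c_1'$ for some absolute $c_1'>0$. Taking $c = \min(c_1, c_1')$ completes the proof. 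I expect no real obstacle beyond this bookkeeping: the entire argument hinges on the two $\Phi$-values at the endpoints being bounded away from $0$ by absolute constants, which is immediate from the standing boundedness of $\pi_{a,y}$ and of the RKHS norms $\|\mu_{a,1}-\mu_{a,0}\|_{K_a}$.
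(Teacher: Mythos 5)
Your proposal is correct and follows essentially the same route as the paper's proof: both rely on the explicit Gaussian CDF formula for $DO(\tau)$, the boundedness $\pi_{a,y} \in [C_p, C_p']$ and $\|\mu_{a,1}-\mu_{a,0}\|_{K_a} \asymp 1$, and the observation that the first $\Phi$-term vanishes while the second stays bounded away from zero as $\tau \uparrow \pi_{1,1}$, so the $\delta$-crossing must occur an absolute distance short of the endpoint. The only cosmetic difference is that you bound $DO(\pi_{1,1}-c_1)$ directly via the limit and a uniform modulus of continuity, whereas the paper introduces the root $\tau_0$ of $DO$ and runs a contradiction argument on $\epsilon_1 = \pi_{1,1}-\tau_0$; your version is arguably the cleaner phrasing of the same idea, and you correctly note that the only real work is bounding the two $\Phi$-arguments by absolute constants.
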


\begin{proof}[Proof of \Cref{l_tau_boundary}]
    By the definition of $DO$, we have that
   \begin{align*}
	&DO(\tau) \\
        =\;& \prob\big\{ \pi_{1,1}\eta_1(X) - \pi_{1,0} > \tau \eta_1(X) | A=1, Y=1  \big\} - \prob\big\{ \pi_{0,1} \eta_0(X) - \pi_{0,0} > - \tau \eta_0(X) | A=0, Y=1 \big\} \\
	=\;& \left\{
    	\begin{array}{cc}
    	\prob_{1,1}\big[ \log\{\eta_1(X)\} >\log( \frac{\pi_{1,0}}{\pi_{1,1}-\tau})  \big] -  \prob_{0,1}\big[ \log\{\eta_0(X)\} >\log( \frac{\pi_{0,0}}{\pi_{0,1}+\tau}) \big]	 , & -\pi_{0,1} < \tau < \pi_{1,1},  \\
    		\prob_{1,1}\big[ \log\{\eta_1(X)\} >\log( \frac{\pi_{1,0}}{\pi_{1,1}-\tau})  \big] , & \tau \le -\pi_{0,1}, \\
    		-  \prob_{0,1}\big[ \log\{\eta_0(X)\} >\log( \frac{\pi_{0,0}}{\pi_{0,1}+\tau}) \big], & \tau \ge \pi_{1,1}. 
    	\end{array}
    	 \right. \\
	  =\;&  \left\{
    	 \begin{array}{cc}
    	 	 \Phi\bigg[  \frac{\|\mu_{1,1} -\mu_{1,0}\|_{K_1}}{2} - \frac{\log\big\{ \pi_{1,0}/(\pi_{1,1} - \tau) \big\}}{ \|\mu_{1,1} -\mu_{1,0}\|_{K_1} }  \bigg] - \Phi\bigg[  \frac{\|\mu_{0,1} -\mu_{0,0}\|_{K_0}}{2} - \frac{\log\big\{ \pi_{0,0}/(\pi_{0,1} + \tau) \big\}}{ \|\mu_{0,1} -\mu_{0,0}\|_{K_0} }  \bigg], &  -\pi_{0,1} < \tau < \pi_{1,1}, \\
    	 	  \Phi\bigg[  \frac{\|\mu_{1,1} -\mu_{1,0}\|_{K_1}}{2} - \frac{\log\big\{ \pi_{1,0}/(\pi_{1,1} - \tau) \big\}}{ \|\mu_{1,1} -\mu_{1,0}\|_{K_1} }  \bigg], &   \tau \le -\pi_{0,1}, \\
    	 	  - \Phi\bigg[  \frac{\|\mu_{0,1} -\mu_{0,0}\|_{K_0}}{2} - \frac{\log\big\{ \pi_{0,0}/(\pi_{0,1} + \tau) \big\}}{ \|\mu_{0,1} -\mu_{0,0}\|_{K_0} }  \bigg], &   \tau \ge \pi_{1,1}.
    	 	\end{array}
     	\right.
     \end{align*}
     
    In the case when $|DO(0)| \leq \delta$, the lemma holds trivially as $\tau^\star = 0$, and it is automatically bounded away from the boundary by a small constant. We will divide the following proof by conditioning on different assumptions. 

    Let $\tau_0$ denote the value such that $DO(\tau_0) = 0$. Then in the case when $DO(0) >\delta$, it holds that $\tau^\star >0$ and $DO(\tau^\star) = \delta$. By \Cref{prop:D_and_R} and the fact that $DO(\pi_{1,1}) <0$, it must be the case that $0< \tau^\star < \tau_0 < \pi_{1,1}$. Consequently, we write $\tau_0 = \pi_{1,1}-\epsilon$ where $0 < \epsilon_1 < \pi_{1,1}$, and in the rest of the proof it suffices to prove that $\epsilon_1 \asymp 1$. By the fact that $DO(\tau_0) =0$ and $\Phi$ is a strictly increasing function, it holds that
    \begin{align*}
          \frac{\|\mu_{1,1} -\mu_{1,0}\|_{K_1}}{2} - \frac{\log\big\{ \pi_{1,0}/(\pi_{1,1} - \tau_0) \big\}}{ \|\mu_{1,1} -\mu_{1,0}\|_{K_1} }=  \frac{\|\mu_{0,1} -\mu_{0,0}\|_{K_0}}{2} - \frac{\log\big\{ \pi_{0,0}/(\pi_{0,1} + \tau_0) \big\}}{ \|\mu_{0,1} -\mu_{0,0}\|_{K_0}}, 
    \end{align*}
    hence implies
    \begin{equation} \label{l_tau_boundary_eq1}
        \frac{\|\mu_{1,1} -\mu_{1,0}\|_{K_1}}{2} - \frac{\log\big(\pi_{1,0}/\epsilon_1 \big)}{ \|\mu_{1,1} -\mu_{1,0}\|_{K_1} }=  \frac{\|\mu_{0,1} -\mu_{0,0}\|_{K_0}}{2} - \frac{\log\big\{ \pi_{0,0}/(\pi_{0,1} + \pi_{1,1}-\epsilon_1) \big\}}{ \|\mu_{0,1} -\mu_{0,0}\|_{K_0}}.
    \end{equation}
    Suppose that $\epsilon_1$ is a function of $n$ and $\widetilde{n}$ such that $\epsilon_1 \prec 1$. However, in this case $\log\big(\pi_{1,0}/\epsilon \big) \gg \log\big\{ \pi_{0,0}/(\pi_{0,1} + \pi_{1,1}-\epsilon) \big\} \asymp 1$, hence \eqref{l_tau_boundary_eq1} never holds when $\|\mu_{0,1} -\mu_{0,0}\|_{K_0}\asymp \|\mu_{1,1} -\mu_{1,0}\|_{K_1} \asymp 1$. Thus, we achieve a contradiction, and it must be the case that $\epsilon_1 =c_1$ where $c_1>0$ is an absolute constant. Therefore, we have that $\tau^\star >0$ and $\pi_{1,1} - \tau^\star > \pi_{1,1} - \tau_0 =\epsilon_1 =c_1$.

    Similarly, when $DO(0) < -\delta$, we have $\tau^\star <0$ and $DO(\tau^\star) = -\delta$. With the same notation as above, by \Cref{prop:D_and_R}, it must be the case that $-\pi_{0,1}<-\pi_{0,1} +\epsilon_2 =\tau_0 < \tau^\star<0$, where $0< \epsilon_2 < -\pi_{0,1}$. Using a similar argument as above, it holds that
    \begin{equation}\label{l_tau_boundary_eq2}
        \frac{\|\mu_{1,1} -\mu_{1,0}\|_{K_1}}{2} - \frac{\log\big\{ \pi_{1,0}/(\pi_{1,1}+\pi_{0,1}-\epsilon_2) \big\}}{ \|\mu_{1,1} -\mu_{1,0}\|_{K_1} }=  \frac{\|\mu_{0,1} -\mu_{0,0}\|_{K_0}}{2} - \frac{\log\big( \pi_{0,0}/\epsilon_2 \big)}{ \|\mu_{0,1} -\mu_{0,0}\|_{K_0}}.
    \end{equation}
    Suppose that $\epsilon_2$ is a function of $n$ and $\widetilde{n}$ such that $\epsilon_2 \prec 1$. However, in this case, we have that $\log\big( \pi_{0,0}/\epsilon_2 \big) \gg \log\big\{ \pi_{1,0}/(\pi_{1,1}+\pi_{0,1}-\epsilon_2)\}$. Thus \eqref{l_tau_boundary_eq2} never holds and we reach a contradiction. Hence, we conclude that $\epsilon_2 = c_2$ and $\tau^\star + \pi_{0,1} \geq \tau_0 + \pi_{0,1} =\epsilon_2 = c_2$. The lemma thus follows by combining the results in the three cases above.
\end{proof}

\begin{lemma} \label{l_est_DO_decrease}
    Recall the empirical estimator for Disparity of Opportunity 
    \[\widehat{DO}(\tau) = \frac{1}{n_{1,1}}\sum_{i=1}^{n_{1,1}}\indc\big\{ (\widehat{\pi}_{1,1} - \tau) \widehat \eta_1(X_{1,1,i}) > \widehat{\pi}_{1,0}\big\}  - \frac{1}{n_{0, 1}}\sum_{i=1}^{n_{0,1}} \indc\big\{ (\widehat{\pi}_{0,1} + \tau) \widehat \eta_0(X_{0,1,i}) > \widehat{\pi}_{0,0} \big\}.\]
    It holds that $\widehat{DO}$ is a non-increasing function. Moreover, $\widehat{DO}(\widehat{\pi}_{1,1}) \le 0$ and $\widehat{DO}(-\widehat{\pi}_{0,1}) \ge 0$.
\end{lemma}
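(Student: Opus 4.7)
The plan is to split $\widehat{DO}(\tau)$ into its two sums and analyse each indicator separately as a function of $\tau$. The key observation is that by construction (Step \textbf{S1} of \Cref{alg:plug-in}), $\widehat{\eta}_a(x)$ is an exponential and hence strictly positive for every $x$; this is the only non-cosmetic ingredient, so the main ``obstacle'' is really just remembering to invoke positivity of $\widehat{\eta}_a$ together with $\widehat{\pi}_{a,y}>0$ (which follows from observing at least one sample of each type, or can be assumed without loss of generality).

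First I would rewrite the first indicator as $\indc\{h_{1,i}(\tau)>0\}$ with $h_{1,i}(\tau)=(\widehat{\pi}_{1,1}-\tau)\widehat{\eta}_1(X_{1,1,i})-\widehat{\pi}_{1,0}$. Since $\widehat{\eta}_1(X_{1,1,i})>0$, the map $\tau\mapsto h_{1,i}(\tau)$ is strictly decreasing, so $\indc\{h_{1,i}(\tau)>0\}$ is non-increasing in $\tau$. The same reasoning applied to $h_{0,i}(\tau)=(\widehat{\pi}_{0,1}+\tau)\widehat{\eta}_0(X_{0,1,i})-\widehat{\pi}_{0,0}$ (strictly increasing in $\tau$) shows the second indicator is non-decreasing. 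Averaging preserves these monotonicities, and subtracting a non-decreasing function from a non-increasing one gives that $\widehat{DO}$ is non-increasing on $\mathbb{R}$.

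For the boundary evaluations, I would plug in directly. At $\tau=\widehat{\pi}_{1,1}$, we have $h_{1,i}(\widehat{\pi}_{1,1})=-\widehat{\pi}_{1,0}<0$, so every term of the first sum vanishes; the second sum is a nonnegative average of indicators, hence $\widehat{DO}(\widehat{\pi}_{1,1})\le 0$. Symmetrically, at $\tau=-\widehat{\pi}_{0,1}$ we have $h_{0,i}(-\widehat{\pi}_{0,1})=-\widehat{\pi}_{0,0}<0$, so the second sum vanishes and $\widehat{DO}(-\widehat{\pi}_{0,1})\ge 0$. The entire argument is purely algebraic/monotonic and requires no probabilistic input, so no step should present any real difficulty beyond bookkeeping the signs.
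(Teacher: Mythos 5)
Your proof is correct, and it takes a cleaner route than the paper's own argument. The paper's proof first rewrites $\widehat{DO}(\tau)$ piecewise on the three ranges $(-\infty, -\widehat{\pi}_{0,1}]$, $(-\widehat{\pi}_{0,1}, \widehat{\pi}_{1,1})$, and $[\widehat{\pi}_{1,1}, +\infty)$ by dividing through by $\widehat{\pi}_{1,1}-\tau$ and $\widehat{\pi}_{0,1}+\tau$ wherever those are positive, and then verifies $\widehat{DO}(\tau_1)-\widehat{DO}(\tau_2)\ge 0$ separately within each regime by recognizing the difference as a nonnegative sum of indicators of intervals. Your observation --- that each term $\indc\{h_{a,i}(\tau)>0\}$ is directly monotone on all of $\mathbb{R}$ because $h_{a,i}$ is affine in $\tau$ with sign-definite slope $\mp\widehat{\eta}_a(X_{a,1,i})$ (and $\widehat{\eta}_a$ is an exponential, hence strictly positive) --- gives the same conclusion without any case split or division. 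The boundary evaluations are treated identically in both proofs, by plugging in and observing that one of the two averages collapses to zero.

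One small remark: you flag the positivity of $\widehat{\pi}_{a,y}$ as a hypothesis you might need to ``assume without loss of generality,'' but in fact your argument never requires it. Since the event inside the indicator is a strict inequality, $\indc\{h_{1,i}(\widehat{\pi}_{1,1})>0\}=\indc\{-\widehat{\pi}_{1,0}>0\}=0$ already holds for $\widehat{\pi}_{1,0}\ge 0$, which is automatic; the same applies to the other boundary. Only the strict positivity of $\widehat{\eta}_a$ (which is structural, by construction in \textbf{S1}) is genuinely load-bearing for the monotonicity step.
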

\begin{proof}[Proof of \Cref{l_est_DO_decrease}]
    Recall that $s_{DO, a} = 2a-1$ and $b_{DO, a}=0$. Then $\{ \tau\in\real:  \widehat{\pi}_{a,0}+\tau b_{D,a}\ge 0,  \widehat{\pi}_{a,1} - \tau s_{D, a} \ge 0, \forall a \in \{0, 1\} \} = \{ \tau \in \real: -\widehat{\pi}_{0,1} \le \tau \le \widehat{\pi}_{1,1} \}$.
    The estimated disparity function is
    \begin{align*}
    	\widehat{DO}(\tau) & = \frac{1}{n_{1,1}}\sum_{i=1}^{n_{1,1}}\indc\big\{ (\widehat{\pi}_{1,1} - \tau) \widehat \eta_1(X_{1,1,i}) > \widehat{\pi}_{1,0}\big\}  - \frac{1}{n_{0, 1}}\sum_{i=1}^{n_{0,1}} \indc\big\{ (\widehat{\pi}_{0,1} + \tau) \widehat \eta_0(X_{0,1,i}) > \widehat{\pi}_{0,0} \big\} \\
    	& = \left\{ 
    	\begin{array}{cc}
    		 \frac{1}{n_{1,1}}\sum_{i=1}^{n_{1,1}}\indc\bigg\{ \widehat \eta_1(X_{1,1,i}) > \frac{\widehat{\pi}_{1,0}}{\widehat{\pi}_{1,1} - \tau}\bigg\}  - \frac{1}{n_{0, 1}}\sum_{i=1}^{n_{0,1}} \indc\bigg\{ \widehat \eta_0(X_{0,1,i}) > \frac{\widehat{\pi}_{0,0}}{\widehat{\pi}_{0,1} + \tau} \bigg\}, &  \tau \in (-\widehat{\pi}_{0,1}, \widehat{\pi}_{1,1}), \\
    		  \frac{1}{n_{1,1}}\sum_{i=1}^{n_{1,1}}\indc\bigg\{ \widehat \eta_1(X_{1,1,i}) > \frac{\widehat{\pi}_{1,0}}{\widehat{\pi}_{1,1} - \tau}\bigg\}, & \tau \in (-\infty, -\widehat{\pi}_{0,1}], \\
    		  - \frac{1}{n_{0, 1}}\sum_{i=1}^{n_{0,1}} \indc\bigg\{ \widehat \eta_0(X_{0,1,i}) > \frac{\widehat{\pi}_{0,0}}{\widehat{\pi}_{0,1} + \tau} \bigg\}, & \tau \in [\widehat{\pi}_{1,1}, +\infty).
    	\end{array}
    	\right.
    \end{align*}
    For $-\widehat{\pi}_{0,1} \le \tau_1 < \tau_2 \le \widehat{\pi}_{1,1}$, 
    \begin{align*}
    	& \quad \widehat{DO}(\tau_1) - \widehat{DO}(\tau_2) \\
    	& = \frac{1}{n_{1,1}}\sum_{i=1}^{n_{1,1}}\indc\bigg\{ \frac{\widehat{\pi}_{1,0}}{\widehat{\pi}_{1,1} - \tau_1} < \widehat{\eta}_1(X_{1,1,i}) \le \frac{\widehat{\pi}_{1,0}}{\widehat{\pi}_{1,1} - \tau_2} \bigg\} + \frac{1}{n_{0, 1}}\sum_{i=1}^{n_{0,1}} \indc\bigg\{ \frac{\widehat{\pi}_{0,0}}{\widehat{\pi}_{0,1} + \tau_2} < \widehat{\eta}_0(X_{0,1,i}) \le \frac{\widehat{\pi}_{0,0}}{\widehat{\pi}_{0,1} + \tau_1}   \bigg\}  \\
    	& \ge 0.
    \end{align*}
    For $\tau_1 < \tau_2 < -\widehat{\pi}_{0,1}$,
    \begin{align*}
    	& \quad \widehat{DO}(\tau_1) - \widehat{DO}(\tau_2) \\
    	& = \frac{1}{n_{1,1}}\sum_{i=1}^{n_{1,1}}\indc\bigg\{ \frac{\widehat{\pi}_{1,0}}{\widehat{\pi}_{1,1} - \tau_1} < \widehat{\eta}_1(X_{1,1,i}) \le \frac{\widehat{\pi}_{1,0}}{\widehat{\pi}_{1,1} - \tau_2} \bigg\} \ge 0.
    \end{align*}
    For $\widehat{\pi}_{1,1} < \tau_1 < \tau_2$,
    \begin{align*}
    	& \quad \widehat{DO}(\tau_1) - \widehat{DO}(\tau_2) \\
    	& = \frac{1}{n_{0, 1}}\sum_{i=1}^{n_{0,1}} \indc\bigg\{ \frac{\widehat{\pi}_{0,0}}{\widehat{\pi}_{0,1} + \tau_2} < \widehat{\eta}_0(X_{0,1,i}) \le \frac{\widehat{\pi}_{0,0}}{\widehat{\pi}_{0,1} + \tau_1}   \bigg\} \ge 0.
    \end{align*}
    Hence, $\widehat{DO}(\tau)$ is a non-increasing function. Therefore, if $|\widehat{DO}(0)| \le \delta$, then $\widehat{\tau} =0$. If $\widehat{DO}(0)>\delta$, then $\widehat{\tau} \ge 0$. If $\widehat{DO}(0)<-\delta$, $\widehat{\tau} \le 0$.
    Moreover, a straight calculation leads to
    \[ \widehat{DO}(\widehat{\pi}_{1,1}) \le 0 \quad \text{and} \quad  \widehat{DO}(-\widehat{\pi}_{0,1}) \ge 0.\]
\end{proof}

\begin{lemma}\label{l_I_const}
    Conditioning on the training data $\widetilde{\mathcal{D}}$, under the same condition as the one in \Cref{l_T_diff_const}, for any small constant $\eta \in (0,1/2)$, it holds with probability at least $1-\eta$ that
    \begin{align*}
            &\Big|\sum_{j=1}^J \frac{(\widehat{\zeta}_{a,j}-\widehat{\theta}_{a,0,j})(\widehat{\theta}_{a,1,j}-\widehat{\theta}_{a,0,j})}{\widehat{\lambda}_{a,j}} - \sum_{j=1}^J \frac{(\zeta_{a,j}-\theta_{a,0,j})(\theta_{a,1,j}-\theta_{a,0,j})}{\lambda_{a,j}} \Big|\\
            \lesssim \;&\begin{cases}\vspace{0.5em}
               \sqrt{\frac{J^{\alpha-2\beta+4}\log(\widetilde{n}/\eta)\log(1/\eta)}{\widetilde{n}}} \quad & \text{when}\;\; \frac{\alpha+1}{2}< \beta\leq \frac{\alpha+2}{2},\\ \vspace{0.5em}
               \sqrt{\frac{J^2\log(\widetilde{n}/\eta)\log(1/\eta)}{\widetilde{n}}}, & \text{when}\;\; \frac{\alpha+2}{2}< \beta\leq \frac{\alpha+3}{2},\\\vspace{0.5em}
                \sqrt{\frac{J\log(\widetilde{n}/\eta)\log(1/\eta)}{\widetilde{n}}},& \text{when}\;\;  \beta > \frac{\alpha+3}{2}.
            \end{cases}
        \end{align*}

\end{lemma}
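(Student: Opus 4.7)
\medskip
\noindent\textbf{Proof proposal for Lemma~\ref{l_I_const}.}
The plan is to decompose the target difference into pieces driven by the three sources of estimation error in $\widehat\eta_a$ (eigenvalues, eigenfunctions/projection scores, and mean projections) and then control each piece by combining (i) concentration of linear functionals of the Gaussian process $X$ given $\widetilde{\mathcal D}$, with (ii) the eigenspace estimation rates from Appendix~F. Throughout, write $\Delta\theta_j=\theta_{a,1,j}-\theta_{a,0,j}$ and $\Delta\widehat\theta_j=\widehat\theta_{a,1,j}-\widehat\theta_{a,0,j}$, and recall that under Assumption~\ref{a_data} the sums $\sum_{j\le J}\Delta\theta_j^2/\lambda_{a,j}$ are $O(1)$ while $\sum_{j\le J}\Delta\theta_j^2/\lambda_{a,j}^2$ and $\sum_{j\le J}1/\lambda_{a,j}$ grow polynomially in $J$ at rates depending on $\beta$ versus $\alpha$.

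First I would add and subtract to obtain a telescoping decomposition
\begin{align*}
 \sum_{j=1}^J\frac{(\widehat\zeta_{a,j}-\widehat\theta_{a,0,j})\Delta\widehat\theta_j}{\widehat\lambda_{a,j}}
 -\sum_{j=1}^J\frac{(\zeta_{a,j}-\theta_{a,0,j})\Delta\theta_j}{\lambda_{a,j}}
 = \mathrm{I}_1+\mathrm{I}_2+\mathrm{I}_3+\mathrm{I}_4,
\end{align*}
where $\mathrm{I}_1$ replaces $\lambda_{a,j}$ by $\widehat\lambda_{a,j}$, $\mathrm{I}_2$ replaces $\Delta\theta_j$ by $\Delta\widehat\theta_j$, $\mathrm{I}_3$ replaces $\theta_{a,0,j}$ by $\widehat\theta_{a,0,j}$, and $\mathrm{I}_4$ replaces $\zeta_{a,j}$ by $\widehat\zeta_{a,j}=\langle X,\widehat\phi_{a,j}\rangle_{L^2}$. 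For $\mathrm{I}_1$ I would use $\widehat\lambda_{a,j}^{-1}-\lambda_{a,j}^{-1}=(\lambda_{a,j}-\widehat\lambda_{a,j})/(\lambda_{a,j}\widehat\lambda_{a,j})$ together with the uniform eigenvalue bound $|\widehat\lambda_{a,j}-\lambda_{a,j}|\lesssim \|\widehat K_a-K_a\|_{\op}\lesssim\sqrt{\log(\widetilde n/\eta)/\widetilde n}$ and a spectral gap argument (which forces the upper constraint $J^{2\alpha+2}\log^2 J\log(\widetilde n/\eta)\lesssim \widetilde n$, so that $\widehat\lambda_{a,j}\asymp\lambda_{a,j}$). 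For $\mathrm{I}_4$ I would further split as
$\widehat\zeta_{a,j}-\zeta_{a,j}=\langle X,\widehat\phi_{a,j}-\phi_{a,j}\rangle_{L^2}$ and apply sin-$\Theta$-type bounds $\|\widehat\phi_{a,j}-\phi_{a,j}\|_{L^2}\lesssim j^{\alpha+1}\sqrt{\log(\widetilde n/\eta)/\widetilde n}$, while $\mathrm{I}_3$ uses the mean-function rate $\|\widehat\mu_{a,y}-\mu_{a,y}\|_{L^2}\lesssim\sqrt{\log(1/\eta)/\widetilde n}$ combined with the same eigenfunction control.

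Each piece then reduces to bounding a sum of the form $\sum_{j\le J}a_j\lambda_{a,j}^{-1}|\Delta\theta_j|\cdot\mathrm{err}_j$, which I would bound by Cauchy--Schwarz:
\begin{equation*}
 \Bigl(\sum_{j\le J}\tfrac{\Delta\theta_j^2}{\lambda_{a,j}^2}\Bigr)^{1/2}\Bigl(\sum_{j\le J}a_j^2\mathrm{err}_j^2\Bigr)^{1/2}.
\end{equation*}
Under Assumption~\ref{a_data} the first factor behaves like $J^{(\alpha-2\beta+3)/2}$ when $\beta\le(\alpha+3)/2$ and is $O(1)$ otherwise, which is precisely what produces the three regimes stated in the lemma. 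The variance factor $\sum_j\mathrm{err}_j^2$ combines the eigenfunction/eigenvalue rates (yielding a $J^{\alpha+1}/\widetilde n$ or $1/\widetilde n$ type term) with the Gaussian fluctuation of $\zeta_{a,j}$, giving the overall scaling $\sqrt{J^{\alpha-2\beta+4}\log(\widetilde n/\eta)\log(1/\eta)/\widetilde n}$, $\sqrt{J^{2}\log(\widetilde n/\eta)\log(1/\eta)/\widetilde n}$, or $\sqrt{J\log(\widetilde n/\eta)\log(1/\eta)/\widetilde n}$ in the three regimes. The $\log(1/\eta)$ factor comes from a Gaussian tail bound on $X$ given $\widetilde{\mathcal D}$.

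The main obstacle will be the careful tracking of the three regimes in $\beta$: many of the intermediate sums have different leading orders depending on whether $\beta$ is close to $(\alpha+1)/2$ (where the approximation pieces matter most) or exceeds $(\alpha+3)/2$ (where $\sum\Delta\theta_j^2/\lambda_{a,j}^2$ converges and the classical $J/\widetilde n$ LDA rate is recovered). In particular, the cross terms mixing eigenfunction error with $\Delta\theta_j/\lambda_{a,j}$ must be audited case-by-case so that the dominating term is identified in each regime without introducing spurious factors of $J$. Once that regime analysis is complete, a union bound over all four sub-terms $\mathrm{I}_1,\dots,\mathrm{I}_4$ and an application of the eigenspace concentration results, combined with the event in Lemma~\ref{l_emperical_prob} for the probability estimates, will deliver the stated high-probability bound.
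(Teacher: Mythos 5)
Your decomposition into $\mathrm{I}_1,\dots,\mathrm{I}_4$ is a reasonable start, but the proposed bookkeeping via Cauchy--Schwarz misses the structural idea that the paper's proof actually relies on, and is lossy enough that it would not deliver the stated rates. Conditionally on the training data $\widetilde{\mathcal D}$, the entire quantity $\sum_{j\le J}\widehat\lambda_{a,j}^{-1}(\widehat\zeta_{a,j}-\widehat\theta_{a,0,j})\Delta\widehat\theta_j - \sum_{j\le J}\lambda_{a,j}^{-1}(\zeta_{a,j}-\theta_{a,0,j})\Delta\theta_j$ is a linear functional of the Gaussian process $X$ and hence itself a (one-dimensional) Gaussian random variable; the paper splits it into two pieces, computes their conditional means and variances \emph{exactly} (the quantities $Q_1$, $Q_2$ and $Q_3$), and then applies a Gaussian tail bound $|N(Q_1,Q_2)|\lesssim |Q_1|+\sqrt{Q_2\log(1/\eta)}$, $|N(0,Q_3)|\lesssim\sqrt{Q_3\log(1/\eta)}$. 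The heavy lifting (Lemmas~\ref{l_Q1_const}--\ref{l_Q3}) is entirely about controlling these deterministic-given-$\widetilde{\mathcal D}$ quantities. Your Cauchy--Schwarz split places the Gaussian randomness of $X$ into one of the two factors, and a term like $\bigl(\sum_j(\zeta_{a,j}-\theta_{a,0,j})^2/\lambda_{a,j}\bigr)^{1/2}$ is a $\chi^2_J$-type object of size $O_{\mathrm p}(\sqrt J)$. Concretely, for the piece corresponding to the paper's $(II)=\sum_j(\zeta_{a,j}-\theta_{a,0,j})\bigl(\Delta\widehat\theta_j/\widehat\lambda_{a,j}-\Delta\theta_j/\lambda_{a,j}\bigr)$, Cauchy--Schwarz gives $O_{\mathrm p}(\sqrt J)\cdot\sqrt{Q_3}\asymp J\sqrt{\log(\widetilde n/\eta)/\widetilde n}$, whereas the conditional-Gaussian bound gives $\sqrt{Q_3\log(1/\eta)}\asymp\sqrt{J\log(\widetilde n/\eta)\log(1/\eta)/\widetilde n}$; you lose an extra $\sqrt J$. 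The same loss occurs in the eigenfunction-driven piece. So the route you sketch cannot reach the stated exponents without the conditional-variance calculation.

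There is also an arithmetic slip in your regime analysis. You claim the factor $\bigl(\sum_{j\le J}\Delta\theta_j^2/\lambda_{a,j}^2\bigr)^{1/2}$ behaves like $J^{(\alpha-2\beta+3)/2}$ for $\beta\le(\alpha+3)/2$ and is $O(1)$ otherwise. But under Assumption~\ref{a_data}, $\Delta\theta_j^2/\lambda_{a,j}^2\asymp j^{2\alpha-2\beta}$, so $\sum_{j\le J}\Delta\theta_j^2/\lambda_{a,j}^2$ is $O(1)$ iff $\beta>\alpha+1/2$ and otherwise grows like $J^{2\alpha-2\beta+1}$; the boundary is $\alpha+1/2$, not $(\alpha+3)/2$, and the exponent is different. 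This would propagate to incorrect conclusions about which regime dominates. I'd recommend abandoning the term-by-term Cauchy--Schwarz strategy and instead mirroring the paper's two-term decomposition $(I)+(II)$, computing the conditional distribution of each given $\widetilde{\mathcal D}$, and then bounding the conditional mean $Q_1$, the diagonal and off-diagonal parts of $Q_2$, and $Q_3$ using the eigenspace concentration results of Appendix~F, with a union bound at the end.
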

\begin{proof}
    Note that
    \begin{align} \notag
        &\sum_{j=1}^J \frac{(\widehat{\zeta}_{a,j}-\widehat{\theta}_{a,0,j})(\widehat{\theta}_{a,1,j}-\widehat{\theta}_{a,0,j})}{\widehat{\lambda}_{a,j}}-\sum_{j=1}^\infty \frac{(\zeta_{a,j}-\theta_{a,0,j})(\theta_{a,1,j}-\theta_{a,0,j})}{\lambda_{a,j}} \\ \notag
        =\;& \sum_{j=1}^J \frac{(\widehat{\zeta}_{a,j}-\widehat{\theta}_{a,0,j})(\widehat{\theta}_{a,1,j}-\widehat{\theta}_{a,0,j})}{\widehat{\lambda}_{a,j}}-\sum_{j=1}^J \frac{(\zeta_{a,j}-\theta_{a,0,j})(\widehat{\theta}_{a,1,j}-\widehat{\theta}_{a,0,j})}{\widehat{\lambda}_{a,j}}\\ \notag
        &+ \sum_{j=1}^J \frac{(\zeta_{a,j}-\theta_{a,0,j})(\widehat{\theta}_{a,1,j}-\widehat{\theta}_{a,0,j})}{\widehat{\lambda}_{a,j}}-\sum_{j=1}^J \frac{(\zeta_{a,j}-\theta_{a,0,j})(\theta_{a,1,j}-\theta_{a,0,j})}{\lambda_{a,j}}\\ \notag
        =\;&  \sum_{j=1}^J \frac{(\widehat{\zeta}_{a,j}-\widehat{\theta}_{a,0,j}-\zeta_{a,j}+\theta_{a,0,j})(\widehat{\theta}_{a,1,j}-\widehat{\theta}_{a,0,j})}{\widehat{\lambda}_{a,j}}+ \sum_{j=1}^J(\zeta_{a,j}-\theta_{a,0,j})\Big(\frac{\widehat{\theta}_{a,1,j}-\widehat{\theta}_{a,0,j}}{\widehat{\lambda}_{a,j}} - \frac{\theta_{a,1,j}-\theta_{a,0,j}}{\lambda_{a,j}}\Big)\\ \label{l_I_const_eq1}
        =\;& (I) + (II).
    \end{align}
    When $X \sim \mathcal{GP}(\mu_{a,0},K)$, by standard properties of Gaussian process, it holds that $(\zeta_{a,1}, \ldots, \zeta_{a,j})^\top \sim N(0,\Lambda)$ where $\Lambda = \text{diag}(\lambda_1, \ldots, \lambda_{a,j})$ and for any $j, k \in [J]$ such that $j\neq k$, $\var(\zeta_{a,j}) = \lambda_{a,j}$,
    \begin{align*}
        \var\big(\widehat{\zeta}_{a,j}-\widehat{\theta}_{a,0,j}-\zeta_{a,j}+\theta_{a,0,j}\big)  &= \int\int K_a(s,t)\big\{\widehat{\phi}_{a,j}(s)-\phi_{a,j}(s)\big\}\big\{\widehat{\phi}_{a,j}(t)-\phi_{a,j}(t)\big\}\;\mathrm{d}s\;\mathrm{d}t,
    \end{align*}
    and
    \begin{align*}
        &\cov\big(\widehat{\zeta}_{a,j}-\widehat{\theta}_{a,0,j}-\zeta_{a,j}+\theta_{a,0,j}, \widehat{\zeta}_{a,k}-\widehat{\theta}_{a,0,k}-\zeta_k+\theta_{a,0,k}\big)\\
        =\;& \int\int K_a(s,t)\big\{\widehat{\phi}_{a,j}(s)-\phi_{a,j}(s)\big\}\big\{\widehat{\phi}_{a,k}(t)-\phi_{a,k}(t)\big\}\;\mathrm{d}s\;\mathrm{d}t.
    \end{align*}
    Therefore, it holds that
    \[\sum_{j=1}^J \frac{(\widehat{\zeta}_{a,j}-\widehat{\theta}_{a,0,j}-\zeta_{a,j}+\theta_{a,0,j})(\widehat{\theta}_{a,1,j}-\widehat{\theta}_{a,0,j})}{\widehat{\lambda}_{a,j}} \sim N\Big(Q_1, Q_2\Big),\]
    and
    \[\sum_{j=1}^J(\zeta_{a,j}-\theta_{a,0,j})\Big(\frac{\widehat{\theta}_{a,1,j}-\widehat{\theta}_{a,0,j}}{\widehat{\lambda}_{a,j}} - \frac{\theta_{a,1,j}-\theta_{a,0,j}}{\lambda_{a,j}}\Big)\sim N\Big(0, Q_3\Big),\]
    where 
    \begin{align*}
        Q_1 =  \sum_{j=1}^J \frac{(\widehat{\theta}_{a,1,j}-\widehat{\theta}_{a,0,j})}{\widehat{\lambda}_{a,j}}\int\big\{\mu_{a,0}(t)-\widehat{\mu}_{a,0}(t)\big\}\widehat{\phi}_{a,j}(t) \;\mathrm{d}t,
    \end{align*}
    \begin{align*}
        Q_2 =\;& \sum_{j=1}^J\frac{(\widehat{\theta}_{a,1,j}-\widehat{\theta}_{a,0,j})^2}{\widehat{\lambda}_{a,j}^2}\int\int K_a(s,t)\big\{\widehat{\phi}_{a,j}(s)-\phi_{a,j}(s)\big\}\big\{\widehat{\phi}_{a,j}(t)-\phi_{a,j}(t)\big\}\;\mathrm{d}s\;\mathrm{d}t\\
        &\hspace{-6mm} + 2\sum_{1\leq j<k\leq J}\frac{\widehat{\theta}_{a,1,j}-\widehat{\theta}_{a,0,j}}{\widehat{\lambda}_{a,j}}\cdot\frac{\widehat{\theta}_{a,1,k}-\widehat{\theta}_{a,0,k}}{\widehat{\lambda}_{a,k}}\int\int K_a(s,t)\big\{\widehat{\phi}_{a,j}(s)-\phi_{a,j}(s)\big\}\big\{\widehat{\phi}_{a,k}(t)-\phi_{a,k}(t)\big\}\;\mathrm{d}s\;\mathrm{d}t,
    \end{align*}
    and 
    \[Q_3 = \sum_{j=1}^J \lambda_{a,j}\Big(\frac{\widehat{\theta}_{a,1,j}-\widehat{\theta}_{a,0,j}}{\widehat{\lambda}_{a,j}} - \frac{\theta_{a,1,j}-\theta_{a,0,j}}{\lambda_{a,j}}\Big)^2.\]
    Consequently, by standard Gaussian tail properties \citep[e.g.~Proposition 2.1.2 in][]{vershynin2018high}, it holds with probability at least $1-\eta$ that
    \begin{align}\label{l_I_const_eq2}
        |(I)| \lesssim |Q_1| + \sqrt{Q_2\log(1/\eta)}, \quad \text{and} \quad |(II)| \lesssim \sqrt{Q_3\log(1/\eta)}.
    \end{align}
    Substituting \eqref{l_I_const_eq2} into \eqref{l_I_const_eq1}, the lemma thus follows by applying a union bound argument to the results in Lemmas \ref{l_Q1_const}, \ref{l_Q2_1}, \ref{l_Q2_2} and \ref{l_Q3}.
\end{proof}

\begin{lemma} \label{l_II}
   Under Assumptions \ref{a_class_prob} and \ref{a_data}, if we assume that $X~\sim~\mathcal{GP}(\mu_{a,0},K)$ for $a\in \{0,1\}$, then for any small constant $\eta \in (0,1/2)$, it holds with probability at least $1-\eta$ that 
    \begin{align*}
        \Big|\sum_{j=J+1}^\infty \frac{(\zeta_{a,j}-\theta_{a,0,j})(\theta_{a,1,j}-\theta_{a,0,j})}{\lambda_{a,j}}\Big| \lesssim \sqrt{J^{\alpha-2\beta+1}\log(1/\eta)}.
    \end{align*}
\end{lemma}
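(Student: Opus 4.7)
The target sum is a one-dimensional quantity with a very clean probabilistic structure, so the plan is essentially to identify it as a centered Gaussian, compute its variance using the decay rates in Assumption \ref{a_data}, and then apply a standard Gaussian tail bound.

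First I would exploit the Karhunen--Lo\`eve decomposition: since $X \sim \mathcal{GP}(\mu_{a,0}, K_a)$ and $\{\phi_{a,j}\}_{j\ge 1}$ are the orthonormal eigenfunctions of $K_a$, the scores $\zeta_{a,j} - \theta_{a,0,j} = \langle X - \mu_{a,0}, \phi_{a,j}\rangle_{L^2}$ are independent centered Gaussians with variance $\lambda_{a,j}$. Consequently, the tail sum
\[
W \;=\; \sum_{j=J+1}^{\infty} \frac{(\zeta_{a,j}-\theta_{a,0,j})(\theta_{a,1,j}-\theta_{a,0,j})}{\lambda_{a,j}}
\]
is itself a centered Gaussian random variable (convergence of the series follows from the variance computation below together with the assumption $\beta > (\alpha+1)/2$).

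Next I would compute the variance. By independence,
\[
\mathrm{Var}(W) \;=\; \sum_{j=J+1}^{\infty} \frac{(\theta_{a,1,j}-\theta_{a,0,j})^2}{\lambda_{a,j}^2}\,\lambda_{a,j} \;=\; \sum_{j=J+1}^{\infty} \frac{(\theta_{a,1,j}-\theta_{a,0,j})^2}{\lambda_{a,j}}.
\]
Assumptions \ref{a_data}\ref{a_data_cov} and \ref{a_data}\ref{a_data_mean_decay} give $\lambda_{a,j} \asymp j^{-\alpha}$ and $|\theta_{a,1,j}-\theta_{a,0,j}| \lesssim j^{-\beta}$, so each summand is $\lesssim j^{\alpha-2\beta}$. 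Since $2\beta - \alpha > 1$, the integral test yields $\mathrm{Var}(W) \lesssim \sum_{j>J} j^{\alpha-2\beta} \lesssim J^{\alpha-2\beta+1}$.

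Finally, I would invoke the standard Gaussian tail bound $\mathbb{P}(|W| > t\sqrt{\mathrm{Var}(W)}) \le 2\exp(-t^2/2)$ with $t \asymp \sqrt{\log(1/\eta)}$ to obtain $|W| \lesssim \sqrt{J^{\alpha-2\beta+1}\log(1/\eta)}$ with probability at least $1-\eta$. This matches the claim. There is no real obstacle here: the only mildly subtle point is justifying exchange of summation and expectation/variance in the infinite series, which is immediate from monotone convergence once the variance bound above is established, and checking that the reasoning goes through identically whether the base measure is $P_{a,0}$ or $P_{a,1}$ (in the latter case the scores are Gaussian with shifted means $\theta_{a,1,j}-\theta_{a,0,j}$ but the same variance structure, so the argument is unchanged up to absorbing an additional deterministic bias term that is controlled by the same $J^{\alpha-2\beta+1}$ rate via Cauchy--Schwarz).
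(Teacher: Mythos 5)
Your proposal is correct and follows essentially the same route as the paper: identify $W$ as a centered Gaussian via the Karhunen--Lo\`eve representation, compute its variance as $\sum_{j>J}(\theta_{a,1,j}-\theta_{a,0,j})^2/\lambda_{a,j}$, bound this by $J^{\alpha-2\beta+1}$ using the decay rates in Assumptions \ref{a_data}\ref{a_data_cov} and \ref{a_data}\ref{a_data_mean_decay}, and finish with a standard Gaussian tail bound. Your closing remark about the $\mu_{a,1}$-centered case goes slightly beyond the stated lemma (which fixes $X\sim\mathcal{GP}(\mu_{a,0},K)$) but is in the same spirit as the paper's treatment of that case elsewhere.
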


\begin{proof}
    When $X \sim \mathcal{GP}(\mu_{a,0},K)$, it holds that 
    \[\sum_{j=J+1}^\infty \frac{(\zeta_{a,j}-\theta_{a,0,j})(\theta_{a,1,j}-\theta_{a,0,j})}{\lambda_{a,j}} \sim N\Big(0,\sum_{j=J+1}^\infty \frac{(\theta_{a,1,j}-\theta_{a,0,j})^2}{\lambda_{a,j}}\Big).\]
    Consequently, by standard Gaussian tail properties \citep[e.g.~Proposition 2.1.2 in][]{vershynin2018high}, we have with probability at least $1-\eta$ that 
    \[\Big|\sum_{j=J+1}^\infty \frac{(\zeta_{a,j}-\theta_{a,0,j})(\theta_{a,1,j}-\theta_{a,0,j})}{\lambda_{a,j}}\Big| \lesssim \sqrt{\log(1/\eta)\sum_{j=J+1}^\infty \frac{(\theta_{a,1,j}-\theta_{a,0,j})^2}{\lambda_{a,j}}} \lesssim \sqrt{J^{\alpha-2\beta+1}\log(1/\eta)},\]
    where the last inequality follows from Assumptions \ref{a_data}\ref{a_data_cov} and \ref{a_data}\ref{a_data_mean_decay}. The lemma thus follows.
\end{proof}

\begin{lemma} \label{l_Q1_const}
    Under the same condition of \Cref{l_T_diff_const}, for any small constant $\eta \in (0,1/2)$, it holds with probability at least $1-\eta$ that
    \begin{align*}
        Q_1 = &\Big|\sum_{j=1}^J \frac{(\widehat{\theta}_{a,1,j}-\widehat{\theta}_{a,0,j})}{\widehat{\lambda}_{a,j}}\int\big\{\mu_{a,0}(t)-\widehat{\mu}_{a,0}(t)\big\}\widehat{\phi}_{a,j}(t) \;\mathrm{d}t\Big|\\
        \lesssim \;& \begin{cases}
            \sqrt{\frac{J\log(\widetilde{n}/\eta)}{\widetilde{n}}}, \quad & \text{when} \;\;\frac{\alpha+1}{2} < \beta \leq \frac{\alpha+2}{2},\\
            \sqrt{\frac{\log(\widetilde{n}/\eta)}{\widetilde{n}}}, & \text{when} \;\; \beta > \frac{\alpha+2}{2}.
        \end{cases}
    \end{align*}
\end{lemma}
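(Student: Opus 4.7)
The strategy is to bound $|Q_1|$ via the termwise triangle inequality $|Q_1| \le \sum_{j=1}^{J} |A_j|\,|B_j|$, where $A_j = (\widehat\theta_{a,1,j}-\widehat\theta_{a,0,j})/\widehat\lambda_{a,j}$ and $B_j = \int\{\mu_{a,0}(t)-\widehat\mu_{a,0}(t)\}\widehat\phi_{a,j}(t)\,\mathrm{d}t$, and to control each factor separately using the parametric decays from \Cref{a_data} together with standard concentration for the plug-in estimators on an event of probability at least $1-\eta/2$.

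For $A_j$, I would combine \Cref{a_data}\ref{a_data_mean_decay} ($|\theta_{a,1,j}-\theta_{a,0,j}|\lesssim j^{-\beta}$) and \Cref{a_data}\ref{a_data_cov} ($\lambda_{a,j}\asymp j^{-\alpha}$ with $j^{-\alpha-1}$-order spectral gaps) with the standard perturbation bounds $\widehat\lambda_{a,j}\asymp\lambda_{a,j}$ and $\|\widehat\phi_{a,j}-\phi_{a,j}\|_{L^2}\lesssim j^{\alpha+1}\sqrt{\log(\widetilde n/\eta)/\widetilde n}$ developed elsewhere in the appendix. The truncation condition $J^{2\alpha+2}\log^2(J)\log(\widetilde n/\widetilde\eta)\lesssim\widetilde n$ keeps these perturbations much smaller than the signal $j^{-\beta}$ for all $j\le J$, so one obtains $|A_j|\lesssim j^{\alpha-\beta}$ uniformly over $j\in[J]$.

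For $B_j$, I would expand $\widehat\mu_{a,0}-\mu_{a,0}=\sum_k \bar\zeta_{a,k}\phi_{a,k}$ in the true eigenbasis, where $\bar\zeta_{a,k}$ is the sample mean of the centred scores $\langle X_i^0-\mu_{a,0},\phi_{a,k}\rangle$; these are independent Gaussians with variance $\lambda_{a,k}/\widetilde n_{a,0}$ by standard properties of Gaussian processes. Writing $c_{jk}=\langle\widehat\phi_{a,j},\phi_{a,k}\rangle$ gives $B_j=-\sum_k c_{jk}\bar\zeta_{a,k}$, which is conditionally Gaussian given $\widehat\phi_{a,j}$ with variance $\widetilde n_{a,0}^{-1}\sum_k c_{jk}^2\lambda_{a,k}$. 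Under the $J$-condition the diagonal term is dominant ($c_{jj}^2\to 1$ while $\sum_{k\ne j}c_{jk}^2\lesssim\|\widehat\phi_{a,j}-\phi_{a,j}\|_{L^2}^2$ is vanishing), so this conditional variance is of order $\lambda_{a,j}/\widetilde n$, and a Gaussian tail bound plus a union bound over $j\in[J]$ delivers $|B_j|\lesssim j^{-\alpha/2}\sqrt{\log(\widetilde n/\eta)/\widetilde n}$ uniformly.

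Combining yields
\[
|Q_1|\;\lesssim\;\sqrt{\log(\widetilde n/\eta)/\widetilde n}\cdot\sum_{j=1}^{J} j^{\alpha/2-\beta},
\]
and the two regimes in the claim fall out directly from evaluating this polynomial sum: for $\beta>(\alpha+2)/2$ we have $\alpha/2-\beta<-1$ and the sum is $O(1)$, giving $|Q_1|\lesssim\sqrt{\log(\widetilde n/\eta)/\widetilde n}$; for $(\alpha+1)/2<\beta\le(\alpha+2)/2$ the sum is at most of order $J^{\alpha/2-\beta+1}\le J^{1/2}$, yielding $|Q_1|\lesssim\sqrt{J\log(\widetilde n/\eta)/\widetilde n}$. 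I expect the main technical obstacle to be the off-diagonal contributions $c_{jk}$ with $k\ne j$ in the analysis of $B_j$: these couple the Gaussians $\bar\zeta_{a,k}$ across modes, and only the combination of the $J$-condition and the spacing assumption in \Cref{a_data}\ref{a_data_cov} keeps the diagonal variance $c_{jj}^2\lambda_{a,j}/\widetilde n$ as the leading term so that the uniform Gaussian tail bound on $|B_j|$ is tight.
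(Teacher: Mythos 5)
Your factorisation $Q_1\le\sum_{j\le J}|A_j|\,|B_j|$, the target estimate $|B_j|\lesssim j^{-\alpha/2}\sqrt{\log(\widetilde n/\eta)/\widetilde n}$, and the reduction to the polynomial sum $\sum_{j\le J}j^{\alpha/2-\beta}$ all match the paper's route. However, the way you obtain the $B_j$ bound has a genuine flaw. After writing $B_j=-\sum_k c_{jk}\bar\zeta_{a,k}$ with $c_{jk}=\langle\widehat\phi_{a,j},\phi_{a,k}\rangle$, you assert that $B_j$ is conditionally Gaussian given $\widehat\phi_{a,j}$ with variance $\widetilde n_{a,0}^{-1}\sum_k c_{jk}^2\lambda_{a,k}$. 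That is not true: $\widehat\phi_{a,j}$ is a measurable function of $\widehat K_a$, which is built from the same samples $\widetilde X_{a,0}^i$ that determine the $\bar\zeta_{a,k}$, so conditioning on $\widehat\phi_{a,j}$ does not leave $(\bar\zeta_{a,k})_k$ jointly Gaussian, and the advertised conditional Gaussian tail bound has no basis. The paper's \Cref{l_Q1_1} avoids this dependence by splitting $B_j = -\bar\zeta_{a,j} + \langle\mu_{a,0}-\widehat\mu_{a,0},\,\widehat\phi_{a,j}-\phi_{a,j}\rangle$: the first term is \emph{unconditionally} Gaussian with variance $\lambda_{a,j}/\widetilde n_{a,0}$, and the second is controlled pathwise by Cauchy--Schwarz as $\|\widehat\mu_{a,0}-\mu_{a,0}\|_{L^2}\|\widehat\phi_{a,j}-\phi_{a,j}\|_{L^2}$, a product of two separately-concentrating norms that needs no joint conditioning and is lower order under the $J$-condition. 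That is the missing idea.

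There is also a smaller gap in your $A_j$ bound. You claim $|A_j|\lesssim j^{\alpha-\beta}$ uniformly over $j\le J$ because the truncation condition keeps the estimation perturbation of $\widehat\theta_{a,1,j}-\widehat\theta_{a,0,j}$ below $j^{-\beta}$. That holds only for $\beta\le(3\alpha+2)/2$; for $\beta>(3\alpha+2)/2$, the noise floor $\sqrt{j^{-\alpha}\log(\widetilde n/\eta)/\widetilde n}$ in \Cref{l_score_const} can exceed $j^{-\beta}$ near $j=J$, since $J^{2\alpha+2}\log^2(J)\log(\widetilde n/\eta)\lesssim\widetilde n$ does not imply $J^{2\beta-\alpha}\log(\widetilde n/\eta)\lesssim\widetilde n$. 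The correct bound there is $|A_j|\lesssim j^{\alpha-\beta}+j^{\alpha/2}\sqrt{\log(\widetilde n/\eta)/\widetilde n}$; the additional term contributes $\sum_{j\le J}j^{-\alpha/2}\cdot j^{\alpha/2}\cdot\log(\widetilde n/\eta)/\widetilde n = J\log(\widetilde n/\eta)/\widetilde n \lesssim \sqrt{\log(\widetilde n/\eta)/\widetilde n}$ under the $J$-condition, so the stated rate survives, but your proof as written silently assumes the noise-free regime. The paper's proof carries the noise term through as a separate case.
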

\begin{proof}
    By Lemmas \ref{l_rkhs_approximation_const} and \ref{l_Q1_1}, the event in \eqref{l_rkhs_approximation_const_eq3} and a union bound argument, we have with probability at least $1-\eta/2$ that
    \begin{align} \notag
            &\Big|\sum_{j=1}^J \frac{(\widehat{\theta}_{a,1,j}-\widehat{\theta}_{a,0,j})}{\widehat{\lambda}_{a,j}}\int\big\{\mu_{a,0}(t)-\widehat{\mu}_{a,0}(t)\big\}\widehat{\phi}_{a,j}(t) \;\mathrm{d}t\Big|\\ \notag 
            \leq\;& \sum_{j=1}^J\Big|\frac{\widehat{\theta}_{a,1,j}-\widehat{\theta}_{a,0,j}}{\widehat{\lambda}_{a,j}}\Big|\cdot\Big|\int\big\{\mu_{a,0}(t)-\widehat{\mu}_{a,0}(t)\big\}\widehat{\phi}_{a,j}(t) \;\mathrm{d}t\Big|\\ \notag
            \lesssim \;& \sum_{j=1}^J \Bigg|\frac{\widehat{\theta}_{a,1,j}-\widehat{\theta}_{a,0,j}}{\sqrt{\lambda_{a,j}}}\Bigg|\frac{1}{\sqrt{\lambda_{a,j}}}\sqrt{\frac{j^{-\alpha}\log(\widetilde{n}/\eta)}{\widetilde{n}}} \\ \label{l_Q1_const_eq1}
            \lesssim \;& \sum_{j=1}^J \frac{|\widehat{\theta}_{a,1,j}-\widehat{\theta}_{a,0,j}|}{\sqrt{\lambda_{a,j}}}\sqrt{\frac{\log(\widetilde{n}/\eta)}{\widetilde{n}}}.
    \end{align}
    To further control \eqref{l_Q1_const_eq1}, we will consider three different cases.

    \noindent \textbf{Case 1: When $(\alpha+1)/2 < \beta \leq (\alpha+2)/2$.} In this scenario, by a union bound argument, we have with probability at least $1-\eta$ that 
    \begin{align*}
        \eqref{l_Q1_const_eq1} &\lesssim \sqrt{\sum_{j=1}^J \frac{(\widehat{\theta}_{a,1,j}-\widehat{\theta}_{a,0,j})^2}{\widehat{\lambda}_{a,j}}} \sqrt{\sum_{j=1}^J \frac{\log(\widetilde{n}/\eta)}{\widetilde{n}}}\\
        & \lesssim \sqrt{\frac{J\log(\widetilde{n}/\eta)}{\widetilde{n}}}\Big\{1 \vee \Big(\frac{J^{2}\log^2(J)\log(\widetilde{n}/\eta)}{\widetilde{n}}\Big)^{\frac{1}{4}}\Big\}\\ 
        & \lesssim \sqrt{\frac{J\log(\widetilde{n}/\eta)}{\widetilde{n}}},
    \end{align*}
    where the first inequality follows from Cauchy--Schwarz inequality, the second inequality follows from \Cref{l_rkhs_approximation_const} and the last inequality follows from the fact that 
    \[\sqrt{\frac{J^{2}\log^2(J)\log(\widetilde{n}/\eta)}{\widetilde{n}}} \lesssim 1.\]

    \noindent \textbf{Case 2: When $(\alpha+2)/2 < \beta \leq (3\alpha+2)/2$.} In this case, by \Cref{l_score_const}, we have with probability at least $1-\eta/2$ that, for any $j \in [J]$, $|\widehat{\theta}_{a,1,j}-\widehat{\theta}_{a,0,j}| \lesssim j^{-\beta}$. Therefore, we have that 
    \[\eqref{l_Q1_const_eq1} \lesssim \sqrt{\frac{\log(\widetilde{n}/\eta)}{\widetilde{n}}}\sum_{j=1}^J j^{\frac{\alpha}{2} -\beta} \lesssim \sqrt{\frac{\log(\widetilde{n}/\eta)}{\widetilde{n}}},\]
    where the last inequality follows as $\alpha/2 -\beta < -1$.

    \noindent \textbf{Case 3: When $\beta > (3\alpha+2)/2$.} In this case, by \Cref{l_score_const}, we have that with probability at least $1-\eta/2$ that, for any $j \in [J]$, 
    \[|\widehat{\theta}_{a,1,j}-\widehat{\theta}_{a,0,j}| \lesssim j^{-\beta} + \sqrt{\frac{j^{-\alpha}\log(\widetilde{n}/\eta)}{\widetilde{n}}}.\]  
    Therefore, we have that 
    \begin{align*}
        \eqref{l_Q1_const_eq1} &\lesssim \sqrt{\frac{\log(\widetilde{n}/\eta)}{\widetilde{n}}}\Big\{ \sum_{j=1}^Jj^{\frac{\alpha}{2} -\beta} + \sum_{j=1}^J \sqrt{\frac{\log(\widetilde{n}/\eta)}{\widetilde{n}}}\Big\}\\
        & \lesssim \sqrt{\frac{\log(\widetilde{n}/\eta)}{\widetilde{n}}} + \frac{J\log(\widetilde{n}/\eta)}{\widetilde{n}}\\
        & \lesssim \sqrt{\frac{\log(\widetilde{n}/\eta)}{\widetilde{n}}},
    \end{align*}
    where the second inequality follows as $\alpha/2 -\beta < -1$ and the last inequality follows as $J^2\log(\widetilde{n}/\eta) \lesssim  \widetilde{n}$.

    The lemma thus follows by combining results in three cases together. 
\end{proof}

\begin{lemma} \label{l_Q1_1}
   Under Assumptions \ref{a_class_prob} and \ref{a_data}\ref{a_data_cov}, for any small constant $\eta \in (0,1/2)$, it holds with probability at least $1-\eta$ that, for any $a, y \in \{0,1\}$ and $j \in [J]$ such that $J^{2\alpha+2}\log(\widetilde{n}/\eta) \lesssim \widetilde{n}$,
    \begin{align*}
        \Big|\int\big\{\mu_{a,y}(t)-\widehat{\mu}_{a,y}(t)\big\}\widehat{\phi}_{a,j}(t) \;\mathrm{d}t\Big| \lesssim \sqrt{\frac{j^{-\alpha}\log(\widetilde{n}/\eta)}{\widetilde{n}}}.
    \end{align*}
\end{lemma}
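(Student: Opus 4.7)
My plan is to isolate the randomness in $\widehat{\phi}_{a,j}$ by writing
\[\int\big\{\mu_{a,y}(t) - \widehat{\mu}_{a,y}(t)\big\}\widehat{\phi}_{a,j}(t)\,\mathrm{d}t = \langle \mu_{a,y} - \widehat{\mu}_{a,y}, \phi_{a,j}\rangle_{L^2} + \langle \mu_{a,y} - \widehat{\mu}_{a,y}, \widehat{\phi}_{a,j} - \phi_{a,j}\rangle_{L^2},\]
and bounding the two pieces by separate arguments.

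The first piece has a clean distribution: since $\widehat{\mu}_{a,y} - \mu_{a,y} = \widetilde{n}_{a,y}^{-1}\sum_{i \in \widetilde{\mathcal{D}}_{a,y}}(\widetilde{X}^i_{a,y} - \mu_{a,y})$ is a centred Gaussian process, its projection onto the deterministic direction $\phi_{a,j}$ is $N(0,\lambda_{a,j}/\widetilde{n}_{a,y})$. A standard Gaussian tail bound, together with $\lambda_{a,j}\lesssim j^{-\alpha}$ from \Cref{a_data}\ref{a_data_cov} and the concentration of group sizes in \Cref{l_n_ay}, yields $|\langle \widehat{\mu}_{a,y} - \mu_{a,y}, \phi_{a,j}\rangle_{L^2}| \lesssim \sqrt{j^{-\alpha}\log(1/\eta)/\widetilde{n}}$ with probability at least $1-\eta/2$, which already matches the target rate.

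For the second piece I would use Cauchy--Schwarz together with two standard FDA concentration inputs: (i) $\|\widehat{\mu}_{a,y} - \mu_{a,y}\|_{L^2} \lesssim \sqrt{\log(1/\eta)/\widetilde{n}}$ via Borel--TIS (equivalently, a Laurent--Massart tail on $\sum_k(\widehat{\theta}_{a,y,k}-\theta_{a,y,k})^2$, using $\sum_k\lambda_{a,k}<\infty$ under \Cref{a_data}\ref{a_data_cov}), and (ii) $\|\widehat{\phi}_{a,j}-\phi_{a,j}\|_{L^2} \lesssim j^{\alpha+1}\|\widehat{K}_a - K_a\|_{\mathrm{op}}$ via Davis--Kahan with the eigengap $\lambda_{a,j}-\lambda_{a,j+1}\gtrsim j^{-\alpha-1}$ supplied by \Cref{a_data}\ref{a_data_cov}. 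A Hanson--Wright style concentration for sample covariances of Gaussian trajectories then gives $\|\widehat{K}_a-K_a\|_{\mathrm{op}}\lesssim\sqrt{\log(\widetilde{n}/\eta)/\widetilde{n}}$. Under the assumed scaling $J^{2\alpha+2}\log(\widetilde{n}/\eta)\lesssim\widetilde{n}$, the product combines to $O\big(\sqrt{j^{-\alpha}\log(\widetilde{n}/\eta)/\widetilde{n}}\big)$, and a union bound across the three high-probability events completes the argument.

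The main technical obstacle is the eigenfunction perturbation. Davis--Kahan is in its valid regime exactly when $j^{\alpha+1}\|\widehat{K}_a - K_a\|_{\mathrm{op}}\lesssim 1$, which is precisely what the condition $J^{2\alpha+2}\log(\widetilde{n}/\eta)\lesssim\widetilde{n}$ buys. A further subtlety is that $\widehat{\mu}_{a,y}$ and $\widehat{\phi}_{a,j}$ are built from the same training sample $\widetilde{\mathcal{D}}$, so a blunt Cauchy--Schwarz may be wasteful; should it fall just shy of the target, I would sharpen the second piece by expanding $\widehat{\phi}_{a,j}-\phi_{a,j} = (a_{a,j}-1)\phi_{a,j}+\sum_{k\neq j}c_{a,jk}\phi_{a,k}$, using that $a_{a,j}-1 = O(\|\widehat{\phi}_{a,j}-\phi_{a,j}\|_{L^2}^2)$ is second order and that $c_{a,jk}\approx \langle(\widehat{K}_a-K_a)\phi_{a,j},\phi_{a,k}\rangle/(\lambda_{a,j}-\lambda_{a,k})$ has variance $\lambda_{a,j}\lambda_{a,k}/[\widetilde{n}(\lambda_{a,j}-\lambda_{a,k})^2]$, which is summable against the Gaussian coefficients $(\widehat{\theta}_{a,y,k}-\theta_{a,y,k})$ and recovers the claimed rate.
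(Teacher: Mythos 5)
Your decomposition matches the paper's: split off the deterministic direction,
\[
\int\big\{\mu_{a,y}-\widehat{\mu}_{a,y}\big\}\widehat{\phi}_{a,j}
= \big\langle \mu_{a,y}-\widehat{\mu}_{a,y},\, \phi_{a,j}\big\rangle_{L^2}
+ \big\langle \mu_{a,y}-\widehat{\mu}_{a,y},\, \widehat{\phi}_{a,j}-\phi_{a,j}\big\rangle_{L^2},
\]
then use the exact Gaussian tail on the first piece and Cauchy--Schwarz on the second.  The first piece is handled exactly as the paper does (via \Cref{l_mean_diff_score_1} rather than by hand, but the content is the same), and already hits the target rate.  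The issue is the eigenfunction perturbation bound you feed into the second piece.

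The Davis--Kahan rate you quote, $\|\widehat{\phi}_{a,j}-\phi_{a,j}\|_{L^2}\lesssim j^{\alpha+1}\|\widehat{K}_a-K_a\|_{\op}$, is not tight enough under the lemma's stated scaling.  With $\|\widehat{\mu}_{a,y}-\mu_{a,y}\|_{L^2}\lesssim\sqrt{\log(1/\eta)/\widetilde{n}}$ and $\|\widehat{K}_a-K_a\|_{\op}\lesssim\sqrt{\log(\widetilde{n}/\eta)/\widetilde{n}}$, the Cauchy--Schwarz product is
$j^{\alpha+1}\sqrt{\log(1/\eta)\log(\widetilde{n}/\eta)}/\widetilde{n}$, and domination by the target $\sqrt{j^{-\alpha}\log(\widetilde{n}/\eta)/\widetilde{n}}$ needs $j^{(3\alpha+2)/2}\sqrt{\log(1/\eta)/\widetilde{n}}\lesssim 1$, i.e.\ $J^{3\alpha+2}\log(1/\eta)\lesssim\widetilde{n}$.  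Since $\alpha>1$, this is strictly stronger than the assumed $J^{2\alpha+2}\log(\widetilde{n}/\eta)\lesssim\widetilde{n}$: at $j=J$ the required bound fails by a factor of roughly $J^{\alpha/2}$.  So your primary route does not close.

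The paper avoids this by using the sharper rate $\|\widehat{\phi}_{a,j}-\phi_{a,j}\|_{L^2}\lesssim\sqrt{j^2\log(\widetilde{n}/\eta)/\widetilde{n}}$ from \Cref{l_eigenfunc_estimation} (order $j$, not $j^{\alpha+1}$).  That sharper rate comes precisely from the Hall--Horowitz perturbation identity (\Cref{l_eigen_split}) and the weighted sums in \Cref{l_eigenfunc_sum}: the coefficient $c_{a,jk}\approx\langle(\widehat{K}_a-K_a)\phi_{a,j},\phi_{a,k}\rangle/(\lambda_{a,j}-\lambda_{a,k})$ has variance $\asymp\lambda_{a,j}\lambda_{a,k}/[\widetilde{n}(\lambda_{a,j}-\lambda_{a,k})^2]$, and $\sum_{k\neq j}\lambda_{a,j}\lambda_{a,k}(\lambda_{a,j}-\lambda_{a,k})^{-2}\lesssim j^2$, whereas Davis--Kahan through the worst-case gap pays the full $j^{\alpha+1}$.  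You do sketch exactly this perturbation expansion as a fallback, so you have the right idea; but it is the main route, not a contingency.  Promote it: establish $\|\widehat{\phi}_{a,j}-\phi_{a,j}\|_{L^2}\lesssim j\sqrt{\log(\widetilde{n}/\eta)/\widetilde{n}}$ first (as the paper's \Cref{l_eigenfunc_estimation} does), and then your Cauchy--Schwarz step closes cleanly under the stated $J$ condition.  Note also that blunt Cauchy--Schwarz on the second piece is in fact adequate once the sharp eigenfunction rate is in hand; you need not worry about the dependence between $\widehat{\mu}_{a,y}$ and $\widehat{\phi}_{a,j}$ at this level of precision.
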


\begin{proof}
    Consider the following events,
    \[\mathcal{E}_1 = \Big\{\|\widehat{\mu}_{a,y}(t)-\mu_{a,y}(t)\|_{L^2} \lesssim \sqrt{\frac{\log(1/\eta)}{\widetilde{n}}},\; \text{for}\; a,y \in \{0,1\}\Big\},\] 
    \[\mathcal{E}_2  = \Big\{\|\widehat{\phi}_{a,j} - \phi_{a,j}\|_{L^2} \lesssim \sqrt{\frac{j^2\log(\widetilde{n}/\eta)}{\widetilde{n}}}, \quad \text{for}\;a \in\{0,1\}, j \in [J]\Big\},\]
    and 
    \[\mathcal{E}_3 = \Big\{\Big|\frac{1}{\widetilde{n}_{a,y}}\sum_{i=1}^{\widetilde{n}_{a,y}} \int \big\{\widetilde{X}_{a,y}^i(t)-\mu_{a,y}(t)\big\}\phi_{a,j}(t)\;\mathrm{d}t\Big|\lesssim \sqrt{\frac{j^{-\alpha}\log(\widetilde{n}/\eta)}{\widetilde{n}}},\quad \text{for}\;a \in\{0,1\}, j \in [J]\Big\}.\]
    By Lemmas \ref{l_n_ay}, \ref{l_mean_cov_estimation}, \ref{l_eigenfunc_estimation} and \ref{l_mean_diff_score_1}, it holds from a union-bound argument that $\mathbb{P}(\mathcal{E}_1 \cap \mathcal{E}_2 \cap \mathcal{E}_3) \geq 1- \eta$. The rest of the proof is constructed conditioning on the events happening. Note that 
    \begin{align} \notag
        &\Big|\int\big\{\mu_{a,y}(t)-\widehat{\mu}_{a,y}(t)\big\}\widehat{\phi}_{a,j}(t) \;\mathrm{d}t\Big|\\ \notag
        =\;& \Big|\int\big\{\widehat{\mu}_{a,y}(t)-\mu_{a,y}(t)\big\}\big\{\widehat{\phi}_{a,j}(t) - \phi_{a,j}(t)\big\}\;\mathrm{d}t \Big|+ \Big|\int\big\{\widehat{\mu}_{a,y}(t)-\mu_{a,y}(t)\big\}\phi_{a,j}(t)\;\mathrm{d}t\Big|\\ \label{l_Q1_1_eq1}
        =\;& (I) + (II),
    \end{align}
    and in the rest of the proof, we will control $(I)$ and $(II)$ individually.

    To control $(I)$, it holds from Cauchy--Schwarz inequality that 
    \begin{align} \notag
        |(I)| &\leq \sqrt{\int \big\{\mu_{a,y}(t)-\widehat{\mu}_{a,y}(t)\big\}^2\;\mathrm{d}t \int \big\{\widehat{\phi}_{a,j}(t) - \phi_{a,j}(t)\big\}^2\;\mathrm{d}t}\\ \label{l_Q1_1_eq2}
        & \lesssim \sqrt{\frac{\log(1/\eta)}{\widetilde{n}}\frac{j^2\log(\widetilde{n}/\eta)}{\widetilde{n}}} = \sqrt{\frac{j^2\log(\widetilde{n}/\eta)\log(1/\eta)}{\widetilde{n}^2}},
    \end{align}
    where the last inequality follows from $\mathcal{E}_1$ and $\mathcal{E}_2$.
    
    To control $(II)$, it holds that 
    \begin{align} \notag
        |(II)| & = \Big|\frac{1}{\widetilde{n}_{a,y}}\sum_{i=1}^{\widetilde{n}_{a,y}} \int \big\{\widetilde{X}_{a,y}^i(t) - \mu_{a,y}(t)\big\}\phi_{a,j}(t)\;\mathrm{d}t\Big|\\ \label{l_Q1_1_eq3}
        & \lesssim \sqrt{\frac{j^{-\alpha}\log(\widetilde{n}/\eta)}{\widetilde{n}}},
    \end{align}
    where the inequality holds from $\mathcal{E}_3$. Substituting the results in \eqref{l_Q1_1_eq2} and \eqref{l_Q1_1_eq3} into \eqref{l_Q1_1_eq1}, we have that for any $j \in [J]$,
    \begin{align*}
        \Big|\int\big\{\mu_{a,y}(t)-\widehat{\mu}_{a,y}(t)\big\}\widehat{\phi}_{a,j}(t) \;\mathrm{d}t\Big| &\lesssim \sqrt{\frac{j^2\log(\widetilde{n}/\eta)\log(1/\eta)}{\widetilde{n}^2}}+\sqrt{\frac{j^{-\alpha}\log(\widetilde{n}/\eta)}{\widetilde{n}}}\\
        & \lesssim \sqrt{\frac{j^{-\alpha}\log(\widetilde{n}/\eta)}{\widetilde{n}}},
    \end{align*}
    whenever $J^{\alpha+2}\log(1/\eta) \lesssim \widetilde{n}$. Thus, the lemma follows.
\end{proof}

\begin{lemma} \label{l_Q2_1}
     Under the same condition of \Cref{l_T_diff_const}, for any small constant $\eta \in (0,1/2)$, it holds with probability at least $1-\eta$ that
    \begin{align*}
        &\Big|\sum_{j=1}^J\frac{(\widehat{\theta}_{a,1,j}-\widehat{\theta}_{a,0,j})^2}{\widehat{\lambda}_{a,j}^2}\int\int K_a(s,t)\big\{\widehat{\phi}_{a,j}(s)-\phi_{a,j}(s)\big\}\big\{\widehat{\phi}_{a,j}(t)-\phi_{a,j}(t)\big\}\;\mathrm{d}s\;\mathrm{d}t\Big|\\
        \lesssim \;& \begin{cases}\vspace{0.5em}
            \frac{J^{\alpha-2\beta+3}\log(\widetilde{n}/\eta)}{\widetilde{n}}, \quad &\text{when}\;\;  \frac{\alpha+1}{2}< \beta \leq \frac{\alpha+2}{2},\\ \vspace{0.5em}
            \frac{J\log(\widetilde{n}/\eta)}{\widetilde{n}}, & \text{when}\;\; \frac{\alpha+2}{2}< \beta \leq \frac{\alpha+3}{2},\\
            \frac{\log(\widetilde{n}/\eta)}{\widetilde{n}}, & \text{when}\;\; \beta >\frac{\alpha+3}{2}.
        \end{cases}
    \end{align*}
\end{lemma}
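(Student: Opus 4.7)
\noindent\textbf{Proof plan for \Cref{l_Q2_1}.}  My strategy is to control each summand via a refined eigenfunction perturbation bound, and then sum carefully in three regimes that correspond exactly to the three cases in the statement.

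First, by Mercer's decomposition $K_a(s,t)=\sum_k\lambda_{a,k}\phi_{a,k}(s)\phi_{a,k}(t)$, the inner double integral can be rewritten as
\[
\int\!\!\int K_a(s,t)\{\widehat\phi_{a,j}(s)-\phi_{a,j}(s)\}\{\widehat\phi_{a,j}(t)-\phi_{a,j}(t)\}\,\mathrm ds\,\mathrm dt \;=\; \sum_{k\ge1}\lambda_{a,k}\langle\widehat\phi_{a,j}-\phi_{a,j},\phi_{a,k}\rangle_{L^2}^2.
\]
I would next establish the sharp bound
\[
\sum_{k\ge 1}\lambda_{a,k}\langle\widehat\phi_{a,j}-\phi_{a,j},\phi_{a,k}\rangle^2 \;\lesssim\; \lambda_{a,j}\,\|\widehat\phi_{a,j}-\phi_{a,j}\|_{L^2}^2 \;\lesssim\; \lambda_{a,j}\cdot\frac{j^{2}\log(\widetilde n/\eta)}{\widetilde n},
\]
i.e.\ $\lesssim j^{2-\alpha}\log(\widetilde n/\eta)/\widetilde n$. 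The derivation uses the identity (obtained by expanding the square and invoking $K_a\phi_{a,k}=\lambda_{a,k}\phi_{a,k}$)
\[
\sum_k\lambda_{a,k}\langle\widehat\phi_{a,j}-\phi_{a,j},\phi_{a,k}\rangle^2 = \langle K_a\widehat\phi_{a,j},\widehat\phi_{a,j}\rangle - \lambda_{a,j} + \lambda_{a,j}\|\widehat\phi_{a,j}-\phi_{a,j}\|^2,
\]
together with the first-order cancellation $\langle K_a\widehat\phi_{a,j},\widehat\phi_{a,j}\rangle-\lambda_{a,j} = (\widehat\lambda_{a,j}-\lambda_{a,j}) + \langle(K_a-\widehat K_a)\widehat\phi_{a,j},\widehat\phi_{a,j}\rangle$ and standard high-probability bounds $\|\widehat K_a-K_a\|_{\mathrm{op}}\lesssim \sqrt{\log(\widetilde n/\eta)/\widetilde n}$ and $\|\widehat\phi_{a,j}-\phi_{a,j}\|_{L^2}^2\lesssim j^2\log(\widetilde n/\eta)/\widetilde n$ from \Cref{l_mean_cov_estimation} and \Cref{l_eigenfunc_estimation}.

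Combining this with $\widehat\lambda_{a,j}\asymp\lambda_{a,j}\asymp j^{-\alpha}$ from \Cref{a_data}\ref{a_data_cov} and the score bound $|\widehat\theta_{a,1,j}-\widehat\theta_{a,0,j}|^2\lesssim j^{-2\beta}+j^{-\alpha}\log(\widetilde n/\eta)/\widetilde n$ (which follows from \Cref{a_data}\ref{a_data_mean_decay} together with \Cref{l_score_const} used already in the proof of \Cref{l_Q1_const}), the $j$th summand satisfies
\[
\frac{(\widehat\theta_{a,1,j}-\widehat\theta_{a,0,j})^2}{\widehat\lambda_{a,j}^{2}}\cdot\frac{j^{2-\alpha}\log(\widetilde n/\eta)}{\widetilde n}\;\lesssim\;\frac{j^{\alpha-2\beta+2}\log(\widetilde n/\eta)}{\widetilde n} \;+\; \frac{j^{2}\log^{2}(\widetilde n/\eta)}{\widetilde n^{2}}.
\]

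Summing over $j=1,\ldots,J$, the first term contributes $J^{\alpha-2\beta+3}\log(\widetilde n/\eta)/\widetilde n$ when $\beta\le(\alpha+3)/2$ (exponent $\ge -1$) and $O(\log(\widetilde n/\eta)/\widetilde n)$ when $\beta>(\alpha+3)/2$; the second contributes $J^{3}\log^{2}(\widetilde n/\eta)/\widetilde n^{2}$ which, under the assumed condition $J^{2\alpha+2}\log^{2}(J)\log(\widetilde n/\eta)\lesssim\widetilde n$, is dominated by the first in every regime. The transitions at $\beta=(\alpha+2)/2$ and $\beta=(\alpha+3)/2$ arise because $\alpha-2\beta+3$ crosses $1$ and then $0$, producing exactly the three regimes stated. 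A final union bound over the high-probability events for covariance, eigenfunction, and score estimation, with constants absorbed into the relabelled $\eta$, yields the claim. \textbf{The main obstacle} is the refined bound of Step~2: replacing the naive weight $\lambda_{a,1}\asymp 1$ by the much smaller $\lambda_{a,j}\asymp j^{-\alpha}$ is exactly what turns the crude rate $J^{2\alpha-2\beta+3}$ (which Cauchy--Schwarz plus the $L^{2}$ eigenfunction error would give) into the claimed $J^{\alpha-2\beta+3}$ in Case 1, a saving of $J^{\alpha}$ that relies on the leading-order cancellation in the eigenvalue/eigenfunction perturbation.
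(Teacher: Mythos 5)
Your overall architecture matches the paper's: rewrite the double integral via Mercer's decomposition, obtain a bound of order $j^{2-\alpha}\log(\widetilde n/\eta)/\widetilde n$ for that quantity, combine with the score and eigenvalue bounds, and sum over $j$ in three regimes. You correctly identify the crux ("the main obstacle") and you also correctly reduce the case analysis to the crossing of the exponent $\alpha-2\beta+3$ through $1$ and $0$. The bookkeeping in the summation step, including the domination of the cross term $J^3\log^2(\widetilde n/\eta)/\widetilde n^2$ under the truncation condition, is correct.

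However, the argument you sketch for the key step does not actually deliver the rate $j^{2-\alpha}\log(\widetilde n/\eta)/\widetilde n$. Your identity
\[
\sum_k\lambda_{a,k}\langle\widehat\phi_{a,j}-\phi_{a,j},\phi_{a,k}\rangle^2 = (\widehat\lambda_{a,j}-\lambda_{a,j}) + \langle(K_a-\widehat K_a)\widehat\phi_{a,j},\widehat\phi_{a,j}\rangle + \lambda_{a,j}\|\widehat\phi_{a,j}-\phi_{a,j}\|^2
\]
is an algebraic tautology (it amounts to writing $\langle K_a\widehat\phi_{a,j},\widehat\phi_{a,j}\rangle-\lambda_{a,j}$ in two ways) and by itself gives no improvement: the only tools you invoke are $\|\widehat K_a-K_a\|_{\mathrm{op}}\lesssim\sqrt{\log(\widetilde n/\eta)/\widetilde n}$ and $\|\widehat\phi_{a,j}-\phi_{a,j}\|_{L^2}^2\lesssim j^2\log(\widetilde n/\eta)/\widetilde n$, which bound $\langle(K_a-\widehat K_a)\widehat\phi_{a,j},\widehat\phi_{a,j}\rangle$ only by $O(\sqrt{\log(\widetilde n/\eta)/\widetilde n})$, far above the target $j^{2-\alpha}\log(\widetilde n/\eta)/\widetilde n$. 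The near-cancellation of $(\widehat\lambda_{a,j}-\lambda_{a,j})$ against $\langle(K_a-\widehat K_a)\widehat\phi_{a,j},\widehat\phi_{a,j}\rangle$ is real but cannot be seen through the operator norm alone: quantifying the residual requires knowing that $\langle\widehat\phi_{a,j}-\phi_{a,j},\phi_{a,k}\rangle$ is of size roughly $|\lambda_{a,j}-\lambda_{a,k}|^{-1}\sqrt{j^{-\alpha}k^{-\alpha}\log(\widetilde n/\eta)/\widetilde n}$ for $k\neq j$, i.e.\ exactly the Fourier-coefficient structure delivered by the perturbation expansion (\Cref{l_eigen_split}), and then summing via the weighted eigengap estimates of \Cref{l_eigenfunc_sum} with $r=2$, $\gamma=2\alpha$. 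That is what the paper's \Cref{l_Q2_1_int} does; without it, the identity is vacuous. So the gap is concrete: you assert but do not prove $\sum_k\lambda_{a,k}\langle\widehat\phi_{a,j}-\phi_{a,j},\phi_{a,k}\rangle^2\lesssim\lambda_{a,j}\|\widehat\phi_{a,j}-\phi_{a,j}\|^2$, and the two ingredients you cite are not strong enough to establish it. (The claimed comparison is in fact consistent with the perturbation theory, but it is not a consequence of the stated lemmas and would need a proof essentially equivalent to the one in \Cref{l_Q2_1_int}.)
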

\begin{proof}
    Consider the following event
    \begin{equation} \label{l_Q2_1_eq2}
        \mathcal{E}_1= \Big\{ \lambda_{a,j}/2 \leq \widehat{\lambda}_{a,j} \leq  3\lambda_{a,j}/2 ,\quad \text{for} \; j \in [J], a \in\{0,1\}\Big\}.
    \end{equation}
    By a similar argument as the one used to control \eqref{l_rkhs_approximation_const_eq3}, we have that $\mathbb{P}(\mathcal{E}_1) \geq 1-\eta$. The rest of the proof is constructed conditioning on $\mathcal{E}_1$. Note that,  by triangle inequality, we have that 
    \begin{align}\notag
        &\Big|\sum_{j=1}^J\frac{(\widehat{\theta}_{a,1,j}-\widehat{\theta}_{a,0,j})^2}{\widehat{\lambda}_{a,j}^2}\int\int K_a(s,t)\big\{\widehat{\phi}_{a,j}(s)-\phi_{a,j}(s)\big\}\big\{\widehat{\phi}_{a,j}(t)-\phi_{a,j}(t)\big\}\;\mathrm{d}s\;\mathrm{d}t\Big| \\ \label{l_Q2_1_eq1}
        \leq \;& \sum_{j=1}^J\frac{(\widehat{\theta}_{a,1,j}-\widehat{\theta}_{a,0,j})^2}{\widehat{\lambda}_{a,j}^2}\Big|\int\int K_a(s,t)\big\{\widehat{\phi}_{a,j}(s)-\phi_{a,j}(s)\big\}\big\{\widehat{\phi}_{a,j}(t)-\phi_{a,j}(t)\big\}\;\mathrm{d}s\;\mathrm{d}t\Big|.
    \end{align}
    We divide the following proof into three cases depending on the value of $\beta$. For any $\alpha>1$, by \Cref{l_Q2_1_int}, we have with probability at least $1-\eta$ that 
    \[\Big|\int\int K_a(s,t)\big\{\widehat{\phi}_{a,j}(s)-\phi_{a,j}(s)\big\}\big\{\widehat{\phi}_{a,j}(t)-\phi_{a,j}(t)\big\}\;\mathrm{d}s\;\mathrm{d}t\Big| \lesssim \frac{j^{2-\alpha}\log(\widetilde{n}/\eta)}{\widetilde{n}}.\]

    \noindent \textbf{Case 1: When $(\alpha+1)/2 <\beta \leq (\alpha+2)/2$.} In this scenario, by \Cref{l_score_const}, we have with probability at least $1-\eta/2$ that, for any $j \in [J]$, $|\widehat{\theta}_{a,1,j}-\widehat{\theta}_{a,0,j}| \lesssim j^{-\beta}$. Therefore, we have that 
    \begin{align*}
        \eqref{l_Q2_1_eq1} \lesssim  \frac{\log(\widetilde{n}/\eta)}{\widetilde{n}}\sum_{j=1}^J j^{-2\beta+2\alpha+2-\alpha} = \frac{\log(\widetilde{n}/\eta)}{\widetilde{n}}\sum_{j=1}^J j^{-2\beta+\alpha+2} \leq  \frac{J^{\alpha-2\beta+3}\log(\widetilde{n}/\eta)}{\widetilde{n}},
    \end{align*}
    where the last inequality follows from the fact that $\alpha-2\beta+2 \geq 0$.

    \noindent \textbf{Case 2: When $(\alpha+2)/2 <\beta \leq (\alpha+3)/2$.} In this scenario, by \Cref{l_score_const}, we still have with probability at least $1-\eta/2$ that, for any $j \in [J]$, $|\widehat{\theta}_{a,1,j}-\widehat{\theta}_{a,0,j}| \lesssim j^{-\beta}$. Therefore, we have that 
    \begin{align*}
        \eqref{l_Q2_1_eq1} \lesssim \frac{\log(\widetilde{n}/\eta)}{\widetilde{n}}\sum_{j=1}^J j^{-2\beta+\alpha+2} \leq  \frac{J\log(\widetilde{n}/\eta)}{\widetilde{n}},
    \end{align*}
    where the last inequality follows from the fact that $-1 \leq -2\beta+\alpha+2<0$.

    \noindent \textbf{Case 3: When $\beta > (\alpha+3)/2$.} In this case, by \Cref{l_score_const}, we have that with probability at least $1-\eta/2$ that, for any $j \in [J]$, 
    \[|\widehat{\theta}_{a,1,j}-\widehat{\theta}_{a,0,j}| \lesssim j^{-\beta} + \sqrt{\frac{j^{-\alpha}\log(\widetilde{n}/\eta)}{\widetilde{n}}}.\]
    Therefore, we have that 
    \begin{align*}
         \eqref{l_Q2_1_eq1} &\lesssim \frac{\log(\widetilde{n}/\eta)}{\widetilde{n}}\sum_{j=1}^J \{j^{-2\beta} \vee \frac{j^{-\alpha}\log(\widetilde{n}/\eta)}{\widetilde{n}}\} j^{2+\alpha} \leq \frac{\log(\widetilde{n}/\eta)}{\widetilde{n}}\Big\{\sum_{j=1}^J j^{\alpha-2\beta+2} +\sum_{j=1}^J \frac{j^2}{\widetilde{n}} \Big\}\\
         &\leq \frac{\log(\widetilde{n}/\eta)}{\widetilde{n}} \Big\{1 \vee \frac{J^3\log(\widetilde{n})}{\widetilde{n}}\Big\} \leq \frac{\log(\widetilde{n}/\eta)}{\widetilde{n}},
    \end{align*}
    where the third inequality follows from the fact that $\alpha-2\beta+2 <-1$ in this case and the last inequality follows from the assumption on $J$. 

    The lemma thus follows by combining the results for all six cases above. 
\end{proof}

\begin{lemma} \label{l_Q2_2}
   Under the same condition of \Cref{l_T_diff_const}, for any small constant $\eta \in (0,1/2)$, it holds with probability at least $1-\eta$ that
   \begin{align*}
       &\Big|\sum_{1\leq j<k\leq J}\frac{\widehat{\theta}_{a,1,j}-\widehat{\theta}_{a,0,j}}{\widehat{\lambda}_{a,j}}\cdot\frac{\widehat{\theta}_{a,1,k}-\widehat{\theta}_{a,0,k}}{\widehat{\lambda}_{a,k}}\int\int K_a(s,t)\big\{\widehat{\phi}_{a,j}(s)-\phi_{a,j}(s)\big\}\big\{\widehat{\phi}_{a,k}(t)-\phi_{a,k}(t)\big\}\;\mathrm{d}s\;\mathrm{d}t\Big|\\
       \lesssim\; &\begin{cases}\vspace{0.5em}
            \frac{J^{\alpha-2\beta+4}\log(\widetilde{n}/\eta)}{\widetilde{n}}, \quad &\text{when}\;\;\frac{\alpha+1}{2}< \beta \leq \frac{\alpha+2}{2},\\\vspace{0.5em}
            \frac{J^2\log(\widetilde{n}/\eta)}{\widetilde{n}}, &\text{when}\;\; \frac{\alpha+2}{2}< \beta \leq \frac{\alpha+3}{2},\\\vspace{0.5em}
            \frac{J\log(\widetilde{n}/\eta)}{\widetilde{n}}, &\text{when}\;\;\frac{\alpha+3}{2}< \beta \leq \frac{\alpha+4}{2},\\\vspace{0.5em}
            \frac{\log(\widetilde{n}/\eta)}{\widetilde{n}}, &\text{when}\;\; \beta > \frac{\alpha+4}{2}.
       \end{cases}
   \end{align*}
\end{lemma}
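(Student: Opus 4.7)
The plan is to mimic the proof strategy of Lemma \ref{l_Q2_1}, supplemented by a Cauchy--Schwarz step that reduces the off-diagonal integrals $I_{jk}=\int\int K_a(s,t)\{\widehat{\phi}_{a,j}(s)-\phi_{a,j}(s)\}\{\widehat{\phi}_{a,k}(t)-\phi_{a,k}(t)\}\,\mathrm{d}s\,\mathrm{d}t$ to the diagonal integrals $I_{jj}$ already treated in the preceding lemma. First, I would condition on the event $\mathcal{E}_1$ in \eqref{l_Q2_1_eq2}, under which $\widehat{\lambda}_{a,j}^{-1}\asymp \lambda_{a,j}^{-1}\lesssim j^{\alpha}$, so that the prefactor $(\widehat{\lambda}_{a,j}\widehat{\lambda}_{a,k})^{-1}$ can be absorbed into a manageable polynomial in $j$ and $k$.

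Since $K_a$ is a positive semi-definite kernel, the bilinear form $(f,g)\mapsto \int\int K_a(s,t)f(s)g(t)\,\mathrm{d}s\,\mathrm{d}t$ satisfies the Cauchy--Schwarz inequality, giving $|I_{jk}|\le \sqrt{I_{jj}\,I_{kk}}$. Combining with the diagonal estimate $I_{jj}\lesssim j^{2-\alpha}\log(\widetilde{n}/\eta)/\widetilde{n}$ obtained within the proof of Lemma \ref{l_Q2_1} (via Lemma \ref{l_Q2_1_int}), one obtains $|I_{jk}|\lesssim (jk)^{1-\alpha/2}\log(\widetilde{n}/\eta)/\widetilde{n}$. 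Setting $b_j=j^{1+\alpha/2}|\widehat{\theta}_{a,1,j}-\widehat{\theta}_{a,0,j}|$, the target expression is then bounded by
\[
\frac{\log(\widetilde{n}/\eta)}{\widetilde{n}}\sum_{1\le j<k\le J}b_j b_k \;\le\; \frac{\log(\widetilde{n}/\eta)}{\widetilde{n}}\Big(\sum_{j=1}^J b_j\Big)^2.
\]

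It then remains to control $\sum_{j=1}^J b_j$ through Lemma \ref{l_score_const}, which supplies $|\widehat{\theta}_{a,1,j}-\widehat{\theta}_{a,0,j}|\lesssim j^{-\beta}$ in the regimes $(\alpha+1)/2<\beta\le (\alpha+3)/2$, and an additional stochastic term $\sqrt{j^{-\alpha}\log(\widetilde{n}/\eta)/\widetilde{n}}$ for $\beta>(\alpha+3)/2$. Partitioning into the four $\beta$-regimes stated in the lemma, the deterministic contribution to $(\sum_j b_j)^2$ is, respectively, $\lesssim J^{4+\alpha-2\beta}$, $\lesssim J^{2}$, $\lesssim J$, and $\lesssim 1$ (since $\sum_j j^{1+\alpha/2-\beta}$ transitions from $J^{2+\alpha/2-\beta}$ to $O(1)$ across the thresholds $\beta=(\alpha+2k)/2$, $k=1,2,3$); multiplying by $\log(\widetilde{n}/\eta)/\widetilde{n}$ recovers the advertised rates. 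The stochastic contribution, present only in Cases 3--4, gives $\sum_j j\sqrt{\log(\widetilde{n}/\eta)/\widetilde{n}}\lesssim J^{2}\sqrt{\log(\widetilde{n}/\eta)/\widetilde{n}}$; its squared weight is $J^{4}\log^{2}(\widetilde{n}/\eta)/\widetilde{n}^{2}$, which is dominated by the deterministic rate under the assumption $J^{2\alpha+2}\log^{2}(J)\log(\widetilde{n}/\widetilde{\eta})\lesssim\widetilde{n}$.

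The main technical obstacle is the delicate case-by-case bookkeeping, in particular confirming that the stochastic term's squared contribution remains negligible under the hypothesis on $J$, and that the cross-term Cauchy--Schwarz bound is not overly wasteful relative to the diagonal rates obtained in Lemma \ref{l_Q2_1}. A subsidiary step is applying union bounds across $a\in\{0,1\}$ and $j\in[J]$ to the high-probability events in Lemmas \ref{l_Q2_1_int}, \ref{l_score_const} and \eqref{l_Q2_1_eq2}, which produces the $\log(\widetilde{n}/\eta)$ factor in the final bound.
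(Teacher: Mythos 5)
Your proposal is correct and takes a genuinely different, shorter route than the paper. The paper's proof decomposes the off-diagonal quantity $I_{jk}=\int\int K_a(s,t)\{\widehat{\phi}_{a,j}(s)-\phi_{a,j}(s)\}\{\widehat{\phi}_{a,k}(t)-\phi_{a,k}(t)\}\,\mathrm{d}s\,\mathrm{d}t$ by repeatedly invoking Lemma \ref{l_eigen_split}, which requires proving a standalone technical result (Lemma \ref{l_Q2_2_int}) controlling each of the three resulting terms via Lemmas \ref{l_cov_proj_1}, \ref{l_average_cov}, \ref{l_eigenfunc_estimation}, \ref{l_eigenfunc_sum}. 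Your proposal sidesteps all of that: because $K_a$ is a positive semi-definite kernel, the bilinear form is a semi-inner product, so $|I_{jk}|\le\sqrt{I_{jj}I_{kk}}$, and the diagonal estimate $I_{jj}\lesssim j^{2-\alpha}\log(\widetilde n/\eta)/\widetilde n$ (already established via Lemma \ref{l_Q2_1_int}) is all that is needed. The bound $|I_{jk}|\lesssim (jk)^{1-\alpha/2}\log(\widetilde n/\eta)/\widetilde n$ that this produces is, perhaps surprisingly, exactly the rate Lemma \ref{l_Q2_2_int} establishes, so nothing is lost; what you gain is avoiding an entire auxiliary lemma. Your subsequent bookkeeping with $b_j=j^{1+\alpha/2}|\widehat{\theta}_{a,1,j}-\widehat{\theta}_{a,0,j}|$ and the bound $\sum_{j<k}b_jb_k\le(\sum_j b_j)^2$ reproduces the four regimes; the only small imprecision is that the stochastic component of $|\widehat{\theta}_{a,1,j}-\widehat{\theta}_{a,0,j}|$ from Lemma \ref{l_score_const} begins to matter only for $\beta>(3\alpha+2)/2$ rather than $\beta>(\alpha+3)/2$ (since $(\alpha+4)/2\le(3\alpha+2)/2$ for $\alpha>1$, the deterministic bound $j^{-\beta}$ actually suffices throughout Case 3), but treating both cases with the weaker additive bound and then showing the stochastic part is dominated under the hypothesis on $J$ is a valid, if slightly conservative, way to close the argument.
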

\begin{proof}
    Note that by the triangle inequality, we have that 
    \begin{align} \notag
        &\Big|\sum_{1\leq j<k\leq J}\frac{\widehat{\theta}_{a,1,j}-\widehat{\theta}_{a,0,j}}{\widehat{\lambda}_{a,j}}\cdot\frac{\widehat{\theta}_{a,1,k}-\widehat{\theta}_{a,0,k}}{\widehat{\lambda}_{a,k}}\int\int K_a(s,t)\big\{\widehat{\phi}_{a,j}(s)-\phi_{a,j}(s)\big\}\big\{\widehat{\phi}_{a,k}(t)-\phi_{a,k}(t)\big\}\;\mathrm{d}s\;\mathrm{d}t\Big|\\ \notag
        \leq \;& \sum_{1\leq j<k\leq J}\frac{|\widehat{\theta}_{a,1,j}-\widehat{\theta}_{a,0,j}|}{\widehat{\lambda}_{a,j}}\cdot\frac{|\widehat{\theta}_{a,1,k}-\widehat{\theta}_{a,0,k}|}{\widehat{\lambda}_{a,k}}\cdot\Big|\int\int K_a(s,t)\big\{\widehat{\phi}_{a,j}(s)-\phi_{a,j}(s)\big\}\big\{\widehat{\phi}_{a,k}(t)-\phi_{a,k}(t)\big\}\;\mathrm{d}s\;\mathrm{d}t\Big|\\  \notag
        = \;& \sum_{j=1}^{J-1}\sum_{k=j+1}^J\frac{|\widehat{\theta}_{a,1,j}-\widehat{\theta}_{a,0,j}|}{\widehat{\lambda}_{a,j}}\cdot\frac{|\widehat{\theta}_{a,1,k}-\widehat{\theta}_{a,0,k}|}{\widehat{\lambda}_{a,k}}\\\label{l_Q2_2_eq1}
        & \hspace{5cm} \cdot\Big|\int\int K_a(s,t)\big\{\widehat{\phi}_{a,j}(s)-\phi_{a,j}(s)\big\}\big\{\widehat{\phi}_{a,k}(t)-\phi_{a,k}(t)\big\}\;\mathrm{d}s\;\mathrm{d}t\Big|
    \end{align}
    The rest of the proof then follows a similar argument as the one used in the proof of \Cref{l_Q2_1}. The proof below is constructed conditioning on the event in \eqref{l_Q2_1_eq2} and we will divide the proof into various cases.
    For any $\alpha>1$, by \Cref{l_Q2_2_int}, we have with probability at least $1-\eta$ that
    \begin{align*}
        \Big|\int\int K_a(s,t)\big\{\widehat{\phi}_{a,j}(s)-\phi_{a,j}(s)\big\}\big\{\widehat{\phi}_{a,k}(t)-\phi_{a,k}(t)\big\}\;\mathrm{d}s\;\mathrm{d}t\Big| \lesssim \sqrt{\frac{j^{2-\alpha}k^{2-\alpha}\log^2(\widetilde{n}/\eta)}{\widetilde{n}^2}}.
    \end{align*}

    \noindent \textbf{Case 1: When $(\alpha+1)/2 < \beta \leq (\alpha+2)/2$.} In this scenario,
    by \Cref{l_score_const}, we have with probability at least $1-\eta/2$ that, for any $j \in [J]$, $|\widehat{\theta}_{a,1,j}-\widehat{\theta}_{a,0,j}| \lesssim j^{-\beta}$. Therefore, we have that 
    \begin{align*}
        \eqref{l_Q2_2_eq1} &\lesssim \frac{\log(\widetilde{n}/\eta)}{\widetilde{n}}\sum_{j=1}^{J-1}j^{1-\frac{\alpha}{2}}\frac{|\widehat{\theta}_{a,1,j}-\widehat{\theta}_{a,0,j}|}{\widehat{\lambda}_{a,j}} \sum_{k=j+1}^J k^{1-\frac{\alpha}{2}}\frac{|\widehat{\theta}_{a,1,k}-\widehat{\theta}_{a,0,k}|}{\widehat{\lambda}_{a,k}}\\
        & \lesssim \frac{\log(\widetilde{n}/\eta)}{\widetilde{n}}\sum_{j=1}^{J-1}j^{\frac{\alpha}{2}-\beta+1}  \sum_{k=j+1}^J k^{\frac{\alpha}{2}-\beta+1} \leq \frac{\log(\widetilde{n}/\eta)}{\widetilde{n}}\sum_{j=1}^{J-1}j^{\frac{\alpha}{2}-\beta+1}\cdot J^{\frac{\alpha}{2}-\beta+2}\\
        & \leq  \frac{J^{\alpha-2\beta+4}\log(\widetilde{n}/\eta)}{\widetilde{n}},
    \end{align*}
    where the third inequality follows as $\alpha/2 -\beta +1 >0$.

    \noindent \textbf{Case 2: When $(\alpha+2)/2 < \beta \leq (\alpha+3)/2$.} In this scenario, by \Cref{l_score_const}, we still have with probability at least $1-\eta/2$ that, for any $j \in [J]$, $|\widehat{\theta}_{a,1,j}-\widehat{\theta}_{a,0,j}| \lesssim j^{-\beta}$. Therefore, we have that 
    \begin{align*}
        \eqref{l_Q2_2_eq1} &\lesssim \frac{\log(\widetilde{n}/\eta)}{\widetilde{n}}\sum_{j=1}^{J-1}j^{\frac{\alpha}{2}-\beta+1}  \sum_{k=j+1}^J k^{\frac{\alpha}{2}-\beta+1} \leq \frac{J\log(\widetilde{n}/\eta)}{\widetilde{n}}\sum_{j=1}^{J-1}j^{\alpha-2\beta+2} \leq \frac{J^{2}\log(\widetilde{n}/\eta)}{\widetilde{n}},
    \end{align*}
    where the second inequality follows as $ \sum_{k=j+1}^J k^{\alpha/2-\beta+1} \leq Jj^{\alpha/2-\beta+1}$ as $-1< \alpha/2-\beta+1< 0$ and the last inequality follows as $-1 \leq \alpha -2\beta +2 <0$.

     \noindent \textbf{Case 3: When $(\alpha+3)/2 < \beta \leq (\alpha+4)/2$.}  In this scenario, by \Cref{l_score_const}, we still have with probability at least $1-\eta/2$ that, for any $j \in [J]$, $|\widehat{\theta}_{a,1,j}-\widehat{\theta}_{a,0,j}| \lesssim j^{-\beta}$. Therefore, we have that
     \begin{align*}
         \eqref{l_Q2_2_eq1} &\lesssim \frac{\log(\widetilde{n}/\eta)}{\widetilde{n}}\sum_{j=1}^{J-1}j^{\frac{\alpha}{2}-\beta+1}  \sum_{k=j+1}^J k^{\frac{\alpha}{2}-\beta+1} \leq \frac{J\log(\widetilde{n}/\eta)}{\widetilde{n}}\sum_{j=1}^{J-1}j^{\alpha-2\beta+2} \leq \frac{J\log(\widetilde{n}/\eta)}{\widetilde{n}},
     \end{align*}
    where the second inequality follows as $-1 \leq \alpha/2-\beta+1< 0$ and the last inequality follows as $\alpha-2\beta+2 < -1$.

    \noindent \textbf{Case 4: When $\beta > (\alpha+4)/2$.} In this case, we have that with probability at least $1-\eta/2$ that, for any $j \in [J]$, 
    \[|\widehat{\theta}_{a,1,j}-\widehat{\theta}_{a,0,j}| \lesssim j^{-\beta} + \sqrt{\frac{j^{-\alpha}\log(\widetilde{n}/\eta)}{\widetilde{n}}}.\]
    Therefore, we have that 
    \begin{align*}
        \eqref{l_Q2_2_eq1} &\lesssim \frac{\log(\widetilde{n}/\eta)}{\widetilde{n}}\sum_{j=1}^{J-1}\Big\{j^{\frac{\alpha}{2}-\beta+1}\vee \sqrt{\frac{j^{2}\log(\widetilde{n}/\eta)}{\widetilde{n}}} \Big\}  \sum_{k=j+1}^J \Big\{k^{\frac{\alpha}{2}-\beta+1} \vee \sqrt{\frac{k^2\log(\widetilde{n}/\eta)}{\widetilde{n}}}\Big\}\\
        & \leq \frac{\log(\widetilde{n}/\eta)}{\widetilde{n}}\sum_{j=1}^{J-1}\Big\{j^{\frac{\alpha}{2}-\beta+1}\vee \sqrt{\frac{j^{2}\log(\widetilde{n}/\eta)}{\widetilde{n}}} \Big\} \Big\{1 \vee \sqrt{\frac{J^4\log(\widetilde{n}/\eta)}{\widetilde{n}}}\Big\}\\
        & \leq \frac{\log(\widetilde{n}/\eta)}{\widetilde{n}}\Big\{\sum_{j=1}^{J-1} j^{\frac{\alpha}{2}-\beta+1}\vee \sum_{j=1}^{J-1}\sqrt{\frac{j^{2}\log(\widetilde{n}/\eta)}{\widetilde{n}}} \Big\} \leq \frac{\log(\widetilde{n}/\eta)}{\widetilde{n}},
    \end{align*}
    where the second inequality follows as $\alpha/2-\beta+1 < -1$ and the third inequality follows from the assumption of $J$.
    
    The lemma thus follows by combining results for all cases together.
\end{proof}

\begin{lemma} \label{l_Q2_1_int}
    Under Assumptions \ref{a_class_prob} and \ref{a_data}\ref{a_data_cov}, for $a\in \{0,1\}$ and any small constant $\eta \in (0,1/2)$, it holds with probability at least $1-\eta$ that, for any $j \in [J]$ such that $J^{2\alpha+2}\log(1/\eta) \lesssim \widetilde{n}$,
    \begin{align*}
        \Big|\int\int K_a(s,t)\big\{\widehat{\phi}_{a,j}(s)-\phi_{a,j}(s)\big\}\big\{\widehat{\phi}_{a,j}(t)-\phi_{a,j}(t)\big\}\;\mathrm{d}s\;\mathrm{d}t\Big| \lesssim \frac{j^{2-\alpha}\log(\widetilde{n}/\eta)}{\widetilde{n}}.
    \end{align*}
\end{lemma}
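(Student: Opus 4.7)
The plan is to convert the double integral into a spectral sum using Mercer's decomposition, and then control the eigenfunction fluctuations through a first-order perturbation expansion in the covariance operator. Writing $K_a(s,t)=\sum_{k=1}^\infty \lambda_{a,k}\phi_{a,k}(s)\phi_{a,k}(t)$, the integral equals
\[
\sum_{k=1}^\infty \lambda_{a,k}\,\langle \widehat\phi_{a,j}-\phi_{a,j},\phi_{a,k}\rangle_{L^2}^2.
\]
The $k=j$ term is handled using $\|\widehat\phi_{a,j}\|_{L^2}=\|\phi_{a,j}\|_{L^2}=1$, which yields $\langle \widehat\phi_{a,j}-\phi_{a,j},\phi_{a,j}\rangle_{L^2}=-\tfrac12\|\widehat\phi_{a,j}-\phi_{a,j}\|_{L^2}^2$; combined with the high-probability bound $\|\widehat\phi_{a,j}-\phi_{a,j}\|_{L^2}^2\lesssim j^{2}\log(\widetilde n/\eta)/\widetilde n$ of \Cref{l_eigenfunc_estimation}, this diagonal contribution is $\lesssim \lambda_{a,j}\cdot j^{4}\log^2/\widetilde n^{2}$, which is absorbed into $j^{2-\alpha}\log(\widetilde n/\eta)/\widetilde n$ under the sample-size condition $J^{2\alpha+2}\log(\widetilde n/\eta)\lesssim\widetilde n$.

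For $k\neq j$, the plan is to expand
\[
\widehat\phi_{a,j}-\phi_{a,j}=\sum_{k\neq j}\frac{\langle(\widehat K_a-K_a)\phi_{a,j},\phi_{a,k}\rangle_{L^2}}{\lambda_{a,j}-\lambda_{a,k}}\phi_{a,k}+R_{a,j},
\]
with a remainder $R_{a,j}$ controlled in $L^2$ by $\|\widehat K_a-K_a\|_{\mathrm{op}}^2$ times a spectral factor, which, using the operator-norm bound $\|\widehat K_a-K_a\|_{\mathrm{op}}\lesssim\sqrt{\log(\widetilde n/\eta)/\widetilde n}$ from Appendix~\ref{appendix_func_estim}, contributes only lower-order terms. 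The key quantitative input is the concentration estimate $|\langle(\widehat K_a-K_a)\phi_{a,j},\phi_{a,k}\rangle_{L^2}|^{2}\lesssim \lambda_{a,j}\lambda_{a,k}\log(\widetilde n/\eta)/\widetilde n$ with high probability, which follows from Gaussianity of the principal component scores (mean zero, variances $\lambda_{a,j}$ and $\lambda_{a,k}$) together with a union bound over a polynomial grid of indices.

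Combining the two displays reduces the proof to a deterministic sum
\[
\frac{\log(\widetilde n/\eta)}{\widetilde n}\sum_{k\neq j}\frac{\lambda_{a,j}\lambda_{a,k}^{2}}{(\lambda_{a,j}-\lambda_{a,k})^{2}},
\]
whose order under \Cref{a_data}\ref{a_data_cov} I would evaluate by splitting the range of $k$ into three regimes. For $k\geq 2j$, the gap satisfies $|\lambda_{a,j}-\lambda_{a,k}|\asymp \lambda_{a,j}\asymp j^{-\alpha}$, giving a contribution of order $j^{-\alpha}\sum_{k\geq 2j}k^{-2\alpha}\asymp j^{1-2\alpha}\cdot j^{-\alpha}\cdot j^{2\alpha}=j^{2-\alpha}$ after the $\lambda_{a,j}/(\lambda_{a,j}-\lambda_{a,k})^{2}$ factor. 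For $k\leq j/2$ the gap behaves like $\lambda_{a,k}\asymp k^{-\alpha}$ and produces at most $j^{1-\alpha}$, which is dominated. The delicate case is the middle range $j/2<k<2j$, $k\neq j$, where the assumed spacing $\lambda_{a,j}-\lambda_{a,j+1}\gtrsim j^{-\alpha-1}$ telescopes to $|\lambda_{a,j}-\lambda_{a,k}|\gtrsim |j-k|\,j^{-\alpha-1}$, so the summand is bounded by $j^{-\alpha+2}/|j-k|^{2}$, and summing over $|j-k|\geq 1$ yields $O(j^{2-\alpha})$ as well. This is the main obstacle of the proof, since the required lower bound on $|\lambda_{a,j}-\lambda_{a,k}|$ for \emph{non-adjacent} indices $k$ close to $j$ must be extracted by chaining the one-step spacing from \Cref{a_data}\ref{a_data_cov}; once this is in place, assembling the three regimes and inserting the concentration bound gives the claimed $j^{2-\alpha}\log(\widetilde n/\eta)/\widetilde n$ rate with probability at least $1-\eta$ after a union bound.
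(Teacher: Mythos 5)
Your proposal is correct and follows essentially the same route as the paper's proof: reduce the double integral to $\sum_k \lambda_{a,k}\langle\widehat\phi_{a,j}-\phi_{a,j},\phi_{a,k}\rangle^2$, handle the diagonal term separately, apply a first-order perturbation formula to the off-diagonal coefficients, concentrate the covariance cross-terms $\langle(\widehat K_a-K_a)\phi_{a,j},\phi_{a,k}\rangle$ at rate $\sqrt{\lambda_{a,j}\lambda_{a,k}\log(\widetilde n/\eta)/\widetilde n}$ (this is exactly Lemma~\ref{l_cov_proj_1}), and then bound the resulting spectral sum. Three presentational differences are worth noting. First, you expand directly via Mercer and use the normalization identity $\langle\widehat\phi_{a,j}-\phi_{a,j},\phi_{a,j}\rangle=-\tfrac12\|\widehat\phi_{a,j}-\phi_{a,j}\|_{L^2}^2$ for the diagonal, which is a clean substitute for the paper's Cauchy--Schwarz step. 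Second, the paper uses the exact Hall--Horowitz identity from Lemma~\ref{l_eigen_split} (twice, nested), which sidesteps any remainder term; your first-order expansion with remainder $R_{a,j}$ needs the additional step you only sketch, namely a quantitative bound on $R_{a,j}$ showing it is dominated under $J^{2\alpha+2}\log(\widetilde n/\eta)\lesssim\widetilde n$ (this is precisely what the paper's split into $(II)_1$ and $(II)_2$ accomplishes, with $(II)_2$ controlled via $\|\widehat K_a-K_a\|_{L^2}\|\widehat\phi_{a,j}-\phi_{a,j}\|_{L^2}$). Third, you bound the spectral sum $\sum_{k\neq j}\lambda_{a,j}\lambda_{a,k}^2/(\lambda_{a,j}-\lambda_{a,k})^2$ by hand via a three-regime split, whereas the paper simply invokes Lemma~\ref{l_eigenfunc_sum} (with $r=2$, $\gamma=2\alpha$) to get $O(j^{2+\alpha})$ and hence $O(j^{2-\alpha})$; your explicit telescoping of the one-step spacing in the middle range $j/2<k<2j$ is the right argument and is in fact what the cited lemma encodes. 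Two small corrections: in the $k\ge 2j$ regime your exponent arithmetic gives $j^{1-2\alpha}\cdot j^{-\alpha}\cdot j^{2\alpha}=j^{1-\alpha}$, not $j^{2-\alpha}$ (harmless, since $j^{1-\alpha}\le j^{2-\alpha}$ and the middle regime dominates); and the concentration of the cross-terms uniformly over all $k$ in the (infinite) sum requires either a truncation of the tail $k>J$ or a union bound that you gesture at but do not supply — the paper shares this imprecision by stating its event $\mathcal{E}_4$ only for $k\in[J]$.
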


\begin{proof}
    By \Cref{l_eigen_split}, we have that 
    \begin{align} \notag
        &\Big|\int\int K_a(s,t)\big\{\widehat{\phi}_{a,j}(t)-\phi_{a,j}(t)\big\}\big\{\widehat{\phi}_{a,j}(s)-\phi_{a,j}(s)\big\}\;\mathrm{d}s\;\mathrm{d}t\Big| \\ \notag
        \leq \;& \Big|\int\big\{\widehat{\phi}_{a,j}(m) - \phi_{a,j}(m)\big\}\phi_{a,j}(m)\;\mathrm{d}m \cdot \int\int K_a(s,t)\phi_{a,j}(t)\big\{\widehat{\phi}_{a,j}(s)-\phi_{a,j}(s)\big\}\;\mathrm{d}s\;\mathrm{d}t\Big|\\ \notag
        & +\Big|\sum_{l:l \neq j}(\widehat{\lambda}_{a,j} - \lambda_{a,l})^{-1}\int\int K_a(s,t)\phi_{a,l}(t)\big\{\widehat{\phi}_{a,j}(s)-\phi_{a,j}(s)\big\}\;\mathrm{d}s\;\mathrm{d}t \\ \label{l_Q2_1_int_eq1}
        & \hspace{5cm} \cdot \int\int\big\{\widehat{K}_a(m,\ell)-K_a(m,\ell)\big\}\widehat{\phi}_{a,j}(m)\phi_{a,l}(\ell)\;\mathrm{d}m\;\mathrm{d}\ell\Big|,
    \end{align}
    and for any $l \in \mathbb{N}+$, it holds that
    \begin{align} \notag
       &\int\int K_a(s,t)\phi_{a,l}(t)\big\{\widehat{\phi}_{a,j}(s)-\phi_{a,j}(s)\big\}\;\mathrm{d}s\;\mathrm{d}t\\ \notag
       =\; &  \int\int K_a(s,t)\phi_{a,l}(t)\phi_{a,j}(s)\;\mathrm{d}s\;\mathrm{d}t\int\big\{\widehat{\phi}_{a,j}(m) - \phi_{a,j}(m)\big\}\phi_{a,j}(m)\;\mathrm{d}m\\ \notag
       & + \sum_{r:r \neq j} (\widehat{\lambda}_{a,j} - \lambda_{a,r})^{-1}\int\int K_a(s,t)\phi_{a,l}(t)\phi_{a,r}(s)\;\mathrm{d}s\;\mathrm{d}t\\ \notag
       & \hspace{6cm}\cdot \int\int\big\{\widehat{K}_a(m,\ell)-K_a(m,\ell)\big\}\widehat{\phi}_{a,j}(m)\phi_{a,r}(\ell)\;\mathrm{d}m\;\mathrm{d}\ell\\\label{l_Q2_1_int_eq2}
       =\;&\begin{cases}
           (\widehat{\lambda}_{a,j} - \lambda_{a,l})^{-1}\lambda_{a,l} \int\int\big\{\widehat{K}_a(m,\ell)-K_a(m,\ell)\big\}\widehat{\phi}_{a,j}(m)\phi_{a,l}(\ell)\;\mathrm{d}m\;\mathrm{d}\ell, \quad &\text{when}\;\; l \neq j,\\\\
           \lambda_{a,j}\int\big\{\widehat{\phi}_{a,j}(m) - \phi_{a,j}(m)\big\}\phi_{a,j}(m)\;\mathrm{d}m &\text{when}\;\; l =j.
           \end{cases} .
    \end{align}
    Substituting the results in \eqref{l_Q2_1_int_eq2} into \eqref{l_Q2_1_int_eq1}, we have that 
    \begin{align} \notag
        &\Big|\int\int K_a(s,t)\big\{\widehat{\phi}_{a,j}(t)-\phi_{a,j}(t)\big\}\big\{\widehat{\phi}_{a,j}(s)-\phi_{a,j}(s)\big\}\;\mathrm{d}s\;\mathrm{d}t\Big|\\ \notag
        \leq\;& \Big|\lambda_{a,j}\Big\{\int\big\{\widehat{\phi}_{a,j}(m) - \phi_{a,j}(m)\big\}\phi_{a,j}(m)\;\mathrm{d}m \Big\}^2\Big|\\ \notag
        & +\Big|\sum_{l:l \neq j}\lambda_{a,j}(\widehat{\lambda}_{a,j} - \lambda_{a,l})^{-2} \Big\{\int\int\big\{\widehat{K}_a(m,\ell)-K_a(m,\ell)\big\}\widehat{\phi}_{a,j}(m)\phi_{a,l}(\ell)\;\mathrm{d}m\;\mathrm{d}\ell\Big\}^2\Big|\\ \label{l_Q2_1_int_eq3}
        & = (I)+ (II).
    \end{align}
    Consider the following events,
    \begin{align*}
        \mathcal{E}_1 = \big\{(\widehat{\lambda}_{a,j} - \lambda_{a,k})^{-2} \leq 2 (\lambda_{a,j} - \lambda_{a,k})^{-2}, \quad \text{for}\; k,j \in\mathbb{N}_+, a\in\{0,1\} \big\},
    \end{align*}
    \begin{align*}
        \mathcal{E}_2 = \Big\{\|\widehat{K}_a - K_a\|_{L^2} \lesssim \sqrt{\frac{\log(1/\eta)}{\widetilde{n}}},\;\;a\in\{0,1\}\Big\},
    \end{align*}
    \begin{align*}
        \mathcal{E}_3  = \Big\{\|\widehat{\phi}_{a,j} - \phi_{a,j}\|_{L^2} \lesssim \sqrt{\frac{j^2\log(\widetilde{n}/\eta)}{\widetilde{n}}}, \quad \text{for}\; j \in [J],a\in\{0,1\}\Big\},
    \end{align*}
    and
    \begin{align*}
        \mathcal{E}_4 = \Big\{\Big|\int\int\big\{\widehat{K}_a(m,\ell) -K_a(m,\ell)\big\}\phi_{a,j}(m)\phi_{a,k}(\ell)\;\mathrm{d}m\;\mathrm{d}\ell\Big| \lesssim \sqrt{\frac{j^{-\alpha}k^{-\alpha}\log(\widetilde{n}/\eta)}{\widetilde{n}}}, \; k,j \in [J],a \in\{0,1\}\Big\}.
    \end{align*}
    By Lemmas \ref{l_n_ay}, \ref{l_average_cov}, \ref{l_eigenfunc_estimation}, \ref{l_cov_proj_1} and a similar argument as the one leads to \eqref{eq_l_eigenfunc_estimation_7}, it holds from a union-bound argument that $\mathbb{P}(\mathcal{E}_1 \cap \mathcal{E}_2 \cap \mathcal{E}_3 \cap \mathcal{E}_4) \geq 1- \eta$. The rest of the proof is constructed conditioning on these events happening. To control $(I)$, we have that 
    \begin{align} \label{l_Q2_1_int_eq4}
        (I) \leq j^{-\alpha}\int\big\{\widehat{\phi}_{a,j}(m) - \phi_{a,j}(m)\big\}^2\;\mathrm{d}m \cdot \int \phi_{a,j}^2(m)\;\mathrm{d}m  \lesssim \frac{j^{2-\alpha}\log(\widetilde{n}/\eta)}{\widetilde{n}},
    \end{align}
    where the first inequality follows from Cauchy--Schwarz inequality. To control $(II)$, note that by the triangle inequality, we have that 
    \begin{align*}
        (II) \leq\;& \Big|\sum_{l:l \neq j}\lambda_{a,l}(\widehat{\lambda}_{a,j} - \lambda_{a,l})^{-2} \Big\{\int\int\big\{\widehat{K}_a(m,\ell)-K_a(m,\ell)\big\}\phi_{a,j}(m)\phi_{a,l}(\ell)\;\mathrm{d}m\;\mathrm{d}\ell\Big\}^2\Big|\\
        & + \Big|\sum_{l:l \neq j}\lambda_{a,l}(\widehat{\lambda}_{a,j} - \lambda_{a,l})^{-2} \Big\{\int\int\big\{\widehat{K}_a(m,\ell)-K_a(m,\ell)\big\}\big\{\widehat{\phi}_{a,j}(m)-\phi_{a,j}(m)\big\}\phi_{a,l}(\ell)\;\mathrm{d}m\;\mathrm{d}\ell\Big\}^2\Big|\\
        =\; & (II)_1 + (II)_2. 
    \end{align*}
    To control $(II)_1$, we have that 
    \begin{align} \notag
        (II)_1 &\lesssim \sum_{l:l \neq j}\lambda_{a,l}(\widehat{\lambda}_{a,j} - \lambda_{a,l})^{-2} \cdot \frac{j^{-\alpha}l^{-\alpha}\log(\widetilde{n}/\eta)}{\widetilde{n}} \\\label{l_Q2_1_int_eq5}
        &\leq \frac{j^{-\alpha}\log(\widetilde{n}/\eta)}{\widetilde{n}}\sum_{l:l \neq j}(\lambda_{a,j} - \lambda_{a,l})^{-2} l^{-2\alpha}\lesssim \frac{j^{2-\alpha}\log(\widetilde{n}/\eta)}{\widetilde{n}},
    \end{align}
    where the last inequality follows from \Cref{l_eigenfunc_sum}. Similarly, to control $(II)_2$, we have that 
    \begin{align}\notag
        (II)_2 &\lesssim \sum_{l:l \neq j}\lambda_{a,l}(\lambda_{a,j} - \lambda_{a,l})^{-2}\|\widehat{K}_a-K_a\|_{L^2}^2\|\widehat{\phi}_{a,j} -\phi_{a,j}\|^2_{L^2}\|\phi_{a,l}\|^2_{L^2}\\\label{l_Q2_1_int_eq6}
        & \lesssim \frac{\log(\widetilde{n}/\eta)\log(1/\eta)}{\widetilde{n}^2}\sum_{l:l \neq j}(\lambda_{a,j} - \lambda_{a,l})^{-2}j^{2}l^{-\alpha} \lesssim \frac{j^{4+\alpha}\log(\widetilde{n}/\eta)\log(1/\eta)}{\widetilde{n}^2} \lesssim \frac{j^{2-\alpha}\log(\widetilde{n}/\eta)}{\widetilde{n}},
    \end{align}
    where the first inequality follows from Cauchy--Schwarz inequality, the third inequality follows from \Cref{l_eigenfunc_sum} and the last inequality follows from the assumption that $J^{2\alpha+2}\lesssim \widetilde{n}$. The lemma thus follows by substituting the results in \eqref{l_Q2_1_int_eq4}, \eqref{l_Q2_1_int_eq5} and \eqref{l_Q2_1_int_eq6} into \eqref{l_Q2_1_int_eq3}. 
\end{proof}

\begin{lemma} \label{l_Q2_2_int}
    Under Assumptions \ref{a_class_prob} and \ref{a_data}\ref{a_data_cov}, for any $a\in \{0,1\}$ and small constant $\eta \in (0,1/2)$, it holds with probability at least $1-\eta$ that, for any $j,k \in [J]$ such that $1\leq j< k\leq J$, $J^{2\alpha+2}\log(1/\eta) \lesssim \widetilde{n}$,
    \begin{align*}
        \Big|\int\int K_a(s,t)\big\{\widehat{\phi}_{a,j}(s)-\phi_{a,j}(s)\big\}\big\{\widehat{\phi}_{a,k}(t)-\phi_{a,k}(t)\big\}\;\mathrm{d}s\;\mathrm{d}t\Big| \lesssim \sqrt{\frac{j^{2-\alpha}k^{2-\alpha}\log^2(\widetilde{n}/\eta)}{\widetilde{n}^2}}.
    \end{align*}
\end{lemma}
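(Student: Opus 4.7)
The plan is to follow the same scheme as the proof of Lemma \ref{l_Q2_1_int}, but adapted to the off-diagonal case $j \neq k$. First, I would apply Mercer's decomposition $K_a(s,t) = \sum_m \lambda_{a,m} \phi_{a,m}(s) \phi_{a,m}(t)$ to rewrite the target integral as
\[
I := \sum_{m=1}^{\infty} \lambda_{a,m} \, \langle \widehat{\phi}_{a,j} - \phi_{a,j}, \phi_{a,m}\rangle_{L^2} \, \langle \widehat{\phi}_{a,k} - \phi_{a,k}, \phi_{a,m}\rangle_{L^2}.
\]
Then, using the perturbation expansion from Lemma \ref{l_eigen_split} (as exploited in \eqref{l_Q2_1_int_eq2}), I would express each coefficient as
\[
\langle \widehat{\phi}_{a,j} - \phi_{a,j}, \phi_{a,m}\rangle_{L^2} = \begin{cases} \langle \widehat{\phi}_{a,j} - \phi_{a,j}, \phi_{a,j}\rangle_{L^2}, & m=j, \\ (\widehat{\lambda}_{a,j} - \lambda_{a,m})^{-1} \int\int \{\widehat{K}_a - K_a\}(u,v) \widehat{\phi}_{a,j}(u) \phi_{a,m}(v)\, \mathrm{d}u\,\mathrm{d}v, & m \neq j, \end{cases}
\]
and analogously for $k$.

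Next, I would split $I$ according to the three regimes $m=j$, $m=k$, and $m \notin \{j,k\}$. The diagonal-type terms ($m=j$ and $m=k$) each combine a Cauchy--Schwarz estimate $|\langle \widehat{\phi}_{a,j} - \phi_{a,j},\phi_{a,j}\rangle| \le \|\widehat{\phi}_{a,j}-\phi_{a,j}\|_{L^2}$ with the perturbation identity for the other index, giving bounds of the form $\lambda_{a,j}^{1/2} \lambda_{a,k}^{1/2}$ times the corresponding gap-inverse and integral terms. For the off-diagonal sum $m \notin \{j,k\}$ the general term is
\[
\lambda_{a,m} (\widehat{\lambda}_{a,j}-\lambda_{a,m})^{-1} (\widehat{\lambda}_{a,k}-\lambda_{a,m})^{-1}\, A_{j,m} A_{k,m}, \quad A_{i,m}:=\int\int (\widehat{K}_a-K_a)\widehat{\phi}_{a,i}\phi_{a,m},
\]
which after Cauchy--Schwarz in $m$ reduces to a product of two square-summed quantities of the same type already handled in Lemma \ref{l_Q2_1_int}. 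Using the eigengap sum $\sum_{m\neq i}\lambda_{a,m}(\lambda_{a,i}-\lambda_{a,m})^{-2} l^{-2\alpha} \lesssim i^{2-\alpha}$ from Lemma \ref{l_eigenfunc_sum}, together with the events controlling $\|\widehat{K}_a - K_a\|_{L^2}$, the projection bounds $|\!\int\!\int (\widehat{K}_a-K_a)\phi_{a,j}\phi_{a,m}| \lesssim \sqrt{j^{-\alpha}m^{-\alpha}\log(\widetilde{n}/\eta)/\widetilde{n}}$, and $\|\widehat{\phi}_{a,i}-\phi_{a,i}\|_{L^2}\lesssim \sqrt{i^2 \log(\widetilde n/\eta)/\widetilde{n}}$, each resulting contribution is of order $\sqrt{j^{2-\alpha}k^{2-\alpha}\log^2(\widetilde n/\eta)/\widetilde n^2}$, under the assumption $J^{2\alpha+2}\log(1/\eta)\lesssim \widetilde{n}$ which controls the lower-order remainders.

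The main obstacle will be bookkeeping in the off-diagonal sum: in Lemma \ref{l_Q2_1_int} the symmetric case $j=k$ allowed one to bound each $A_{j,m}^2$ term directly, but here one must handle the cross product $A_{j,m} A_{k,m}$ under \emph{two} different eigengaps $(\widehat{\lambda}_{a,j}-\lambda_{a,m})^{-1}$ and $(\widehat{\lambda}_{a,k}-\lambda_{a,m})^{-1}$. I would resolve this by Cauchy--Schwarz in the summation index $m$, namely
\[
\Bigl|\sum_{m \notin \{j,k\}} \lambda_{a,m}(\widehat\lambda_{a,j}-\lambda_{a,m})^{-1}(\widehat\lambda_{a,k}-\lambda_{a,m})^{-1} A_{j,m}A_{k,m}\Bigr| \le \sqrt{S_j \cdot S_k},
\]
with $S_i := \sum_m \lambda_{a,m}(\lambda_{a,i}-\lambda_{a,m})^{-2} A_{i,m}^2$, each of which is exactly the quantity appearing and already bounded by $\lesssim i^{2-\alpha}\log(\widetilde n/\eta)/\widetilde n$ in the proof of Lemma \ref{l_Q2_1_int}. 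The $\widehat{\phi}_{a,j}$ inside $A_{j,m}$ is replaced by $\phi_{a,j} + (\widehat{\phi}_{a,j}-\phi_{a,j})$, whose residual contribution was already shown negligible under $J^{2\alpha+2}\log(1/\eta)\lesssim \widetilde n$. Combining the three regimes and taking a union bound over the controlling events yields the stated rate.
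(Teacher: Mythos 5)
Your plan mirrors the paper's own proof almost exactly: Mercer's expansion reduces the integral to a single sum over $m$ (equivalently, in the paper's notation, only the $l=r$ diagonal survives after applying Lemma~\ref{l_eigen_split} to both $\widehat\phi_{a,j}$ and $\widehat\phi_{a,k}$), the sum is split into the regimes $m=j$, $m=k$, $m\notin\{j,k\}$, and Cauchy--Schwarz in $m$ is exactly how the paper resolves the two-gap cross term $(III)$, with each square-summed factor bounded via Lemma~\ref{l_eigenfunc_sum} as in Lemma~\ref{l_Q2_1_int}. The one spot you gloss over is the pair of diagonal-type terms: for $m=j$ the prefactor is $\lambda_{a,j}$, not $\lambda_{a,j}^{1/2}\lambda_{a,k}^{1/2}$, accompanied by the single cross-gap $(\widehat\lambda_{a,k}-\lambda_{a,j})^{-1}$, and turning the resulting bound $|\lambda_{a,j}-\lambda_{a,k}|^{-1}\sqrt{j^{2-3\alpha}k^{-\alpha}\log^2(\widetilde n/\eta)/\widetilde n^2}$ into the stated rate requires an explicit eigengap case split ($j<k\le 2j$ versus $k>2j$) using the spacing condition in Assumption~\ref{a_data}\ref{a_data_cov} --- Lemma~\ref{l_eigenfunc_sum} controls sums of inverse gaps, not a lone cross-gap, so this step needs to be spelled out.
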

\begin{proof}
    By \Cref{l_eigen_split}, we have that 
    \begin{align} \notag
        &\Big|\int\int K_a(s,t)\big\{\widehat{\phi}_{a,j}(t)-\phi_{a,j}(t)\big\}\big\{\widehat{\phi}_{a,k}(s)-\phi_{a,k}(s)\big\}\;\mathrm{d}s\;\mathrm{d}t\Big| \\ \notag
         = \;& \Big|\int\big\{\widehat{\phi}_{a,j}(m) - \phi_{a,j}(m)\big\}\phi_{a,j}(m)\;\mathrm{d}m \cdot \int\int K_a(s,t)\phi_{a,j}(t)\big\{\widehat{\phi}_{a,k}(s)-\phi_{a,k}(s)\big\}\;\mathrm{d}s\;\mathrm{d}t\Big|\\ \notag
        & +\Big|\sum_{l:l \neq j}(\widehat{\lambda}_{a,j} - \lambda_{a,l})^{-1}\int\int K_a(s,t)\phi_{a,l}(t)\big\{\widehat{\phi}_{a,k}(s)-\phi_{a,k}(s)\big\}\;\mathrm{d}s\;\mathrm{d}t \\ \label{l_Q2_2_int_eq1}
        & \hspace{5cm} \cdot \int\int\big\{\widehat{K}_a(m,\ell)-K_a(m,\ell)\big\}\widehat{\phi}_{a,j}(m)\phi_{a,l}(\ell)\;\mathrm{d}m\;\mathrm{d}\ell\Big|,
    \end{align}
    and for any $l \in \mathbb{N}+$,
    \begin{align} \notag
       &\int\int K_a(s,t)\phi_{a,l}(t)\big\{\widehat{\phi}_{a,k}(s)-\phi_{a,k}(s)\big\}\;\mathrm{d}s\;\mathrm{d}t\\ \notag
       =\; &  \int\int K_a(s,t)\phi_{a,l}(t)\phi_{a,k}(s)\;\mathrm{d}s\;\mathrm{d}t\int\big\{\widehat{\phi}_{a,k}(m) - \phi_{a,k}(m)\big\}\phi_{a,k}(m)\;\mathrm{d}m\\ \notag
       & + \sum_{r:r \neq k} (\widehat{\lambda}_{a,k} - \lambda_{a,r})^{-1}\int\int K_a(s,t)\phi_{a,l}(t)\phi_{a,r}(s)\;\mathrm{d}s\;\mathrm{d}t\\ \label{l_Q2_2_int_eq2}
       & \hspace{5cm}\cdot \int\int\big\{\widehat{K}_a(m,\ell)-K_a(m,\ell)\big\}\widehat{\phi}_{a,k}(m)\phi_{a,r}(\ell)\;\mathrm{d}m\;\mathrm{d}\ell.
    \end{align}
    Substituting the results in \eqref{l_Q2_2_int_eq2} into \eqref{l_Q2_2_int_eq1} and using the property of eigenfunctions, we have that 
    \begin{align} \notag
        &\Big|\int\int K_a(s,t)\big\{\widehat{\phi}_{a,j}(t)-\phi_{a,j}(t)\big\}\big\{\widehat{\phi}_{a,k}(s)-\phi_{a,k}(s)\big\}\;\mathrm{d}s\;\mathrm{d}t\Big|\\ \notag
        = \;& \Big|\int\big\{\widehat{\phi}_{a,j}(m) - \phi_{a,j}(m)\big\}\phi_{a,j}(m)\;\mathrm{d}m \cdot (\widehat{\lambda}_{a,k} - \lambda_{a,j})^{-1}\lambda_{a,j}\\ \notag
        & \hspace{7cm} \cdot\int\int\big\{\widehat{K}_a(m,\ell)-K_a(m,\ell)\big\}\widehat{\phi}_{a,k}(m)\phi_{a,j}(\ell)\;\mathrm{d}m\;\mathrm{d}\ell\Big|\\ \notag
        & +\Big|(\widehat{\lambda}_{a,j} - \lambda_{a,k})^{-1}\lambda_{a,k}\int\big\{\widehat{\phi}_{a,k}(m) - \phi_{a,k}(m)\big\}\phi_{a,k}(m)\;\mathrm{d}m\\\notag
        & \hspace{7cm} \cdot\int\int\big\{\widehat{K}_a(m,\ell)-K_a(m,\ell)\big\}\widehat{\phi}_{a,j}(m)\phi_{a,k}(\ell)\;\mathrm{d}m\;\mathrm{d}\ell\Big|\\\notag
         &+\Big|\sum_{l:l \neq j,k}(\widehat{\lambda}_{a,j} - \lambda_{a,l})^{-1}(\widehat{\lambda}_{a,k} - \lambda_{a,l})^{-1}\lambda_{a,l}\int\int\big\{\widehat{K}_a(m,\ell)-K_a(m,\ell)\big\}\widehat{\phi}_{a,k}(m)\phi_{a,l}(\ell)\;\mathrm{d}m\;\mathrm{d}\ell \\ \notag
        &\hspace{7cm} \cdot\int\int\big\{\widehat{K}_a(m,\ell)-K_a(m,\ell)\big\}\widehat{\phi}_{a,j}(m)\phi_{a,l}(\ell)\;\mathrm{d}m\;\mathrm{d}\ell \Big|\\ \label{l_Q2_2_int_eq3}
        =\;&  (I)+ (II) + (III).
    \end{align}
    Consider the following events,
    \begin{align*}
        \mathcal{E}_1 = \big\{|\widehat{\lambda}_{a,j} - \lambda_{a,k}|^{-1} \leq \sqrt{2} |\lambda_{a,j} - \lambda_{a,k}|^{-1}, \quad \text{for}\; k,j \in\mathbb{N}_+,a \in\{0,1\}\big\},
    \end{align*}
    \begin{align*}
        \mathcal{E}_2 = \Big\{\|\widehat{K}_a - K_a\|_{L^2} \lesssim \sqrt{\frac{\log(1/\eta)}{\widetilde{n}}}, \; \text{for}\;a \in\{0,1\}\Big\},
    \end{align*}
    \begin{align*}
        \mathcal{E}_3  = \Big\{\|\widehat{\phi}_{a,j} - \phi_{a,j}\|_{L^2} \lesssim \sqrt{\frac{j^2\log(\widetilde{n}/\eta)}{\widetilde{n}}}, \quad \text{for}\; j \in [J],a \in\{0,1\}\Big\},
    \end{align*}
    and
    \begin{align*}
        \mathcal{E}_4 = \Big\{\Big|\int\int\big\{\widehat{K}_a(m,\ell) -K_a(m,\ell)\big\}\phi_{a,j}(m)\phi_{a,k}(\ell)\;\mathrm{d}m\;\mathrm{d}\ell\Big| \lesssim \sqrt{\frac{j^{-\alpha}k^{-\alpha}\log(\widetilde{n}/\eta)}{\widetilde{n}}}, \; k,j \in [J],a \in\{0,1\}\Big\}.
    \end{align*}
    By Lemmas \ref{l_n_ay}, \ref{l_average_cov}, \ref{l_eigenfunc_estimation}, \ref{l_cov_proj_1} and a similar argument as the one leads to \eqref{eq_l_eigenfunc_estimation_7}, it holds from a union-bound argument that $\mathbb{P}(\mathcal{E}_1 \cap \mathcal{E}_2 \cap \mathcal{E}_3 \cap \mathcal{E}_4) \geq 1- \eta$. The rest of the proof is constructed conditioning on these events happening and we will control the three terms in \ref{l_Q2_2_int_eq3} above separately in the following three steps. 

    \noindent \textbf{Step 1: Upper bound on $(I)$.} Note that by triangle inequality, we have that 
    \begin{align*}
        (I) \leq\;&\Big|\int\big\{\widehat{\phi}_{a,j}(m) - \phi_{a,j}(m)\big\}\phi_{a,j}(m)\;\mathrm{d}m \cdot (\widehat{\lambda}_{a,k} - \lambda_{a,j})^{-1}\lambda_{a,j}\\
        & \hspace{4cm} \cdot\int\int\big\{\widehat{K}_a(m,\ell)-K_a(m,\ell)\big\}\big\{\widehat{\phi}_{a,k}(m)-\phi_{a,k}(m)\big\}\phi_{a,j}(\ell)\;\mathrm{d}m\;\mathrm{d}\ell\Big|\\
        &+\Big|\int\big\{\widehat{\phi}_{a,j}(m) - \phi_{a,j}(m)\big\}\phi_{a,j}(m)\;\mathrm{d}m \cdot (\widehat{\lambda}_{a,k} - \lambda_{a,j})^{-1}\lambda_{a,j}\\
        & \hspace{4cm} \cdot \int\int\big\{\widehat{K}_a(m,\ell)-K_a(m,\ell)\big\}\phi_{a,k}(m)\phi_{a,j}(\ell)\;\mathrm{d}m\;\mathrm{d}\ell\Big|\\
        =\;& (I)_1 + (I)_2.
    \end{align*}
    Note that the control of $(I)_1$ is similar to the argument in \eqref{l_Q2_1_int_eq6} and will be dominated by the upper bound we give on $(I)_2$. We omit the proof here. To control $(I)_2$, we have that 
    \begin{align*}
        (I)_2 &\leq \lambda_{a,j} |\lambda_{a,k} - \lambda_{a,j}|^{-1}\Big|\int\big\{\widehat{\phi}_{a,j}(m) - \phi_{a,j}(m)\big\}\phi_{a,j}(m)\;\mathrm{d}m\Big|\\
        & \hspace{5cm}\cdot \Big|\int\int\big\{\widehat{K}_a(m,\ell)-K_a(m,\ell)\big\}\phi_{a,k}(m)\phi_{a,j}(\ell)\;\mathrm{d}m\;\mathrm{d}\ell\Big|\\
        & \leq j^{-\alpha}|\lambda_{a,k} - \lambda_{a,j}|^{-1}\|\widehat{\phi}_{a,j} - \phi_{a,j}\|_{L^2}\|\phi_{a,j}\|_{L^2}\sqrt{\frac{j^{-\alpha}k^{-\alpha}\log(\widetilde{n}/\eta)}{\widetilde{n}}}\\
        & \leq |\lambda_{a,k} - \lambda_{a,j}|^{-1} \sqrt{\frac{j^{2-3\alpha}k^{-\alpha}\log^2(\widetilde{n}/\eta)}{\widetilde{n}^2}},
    \end{align*}
    where the second inequality follows from Cauchy--Schwarz inequality. Therefore, we have that 
    \begin{align} \label{l_Q2_2_int_eq4}
        (I) \lesssim |\lambda_{a,k} - \lambda_{a,j}|^{-1} \sqrt{\frac{j^{2-3\alpha}k^{-\alpha}\log^2(\widetilde{n}/\eta)}{\widetilde{n}^2}}.
    \end{align}

    \noindent \textbf{Step 2: Upper bound on $(II)$.} The argument to control $(II)$ is similar to the one used in \textbf{Step 1}. We omit the proof here and we have that 
    \begin{equation}\label{l_Q2_2_int_eq5}
        (II) \lesssim |\lambda_{a,k} - \lambda_{a,j}|^{-1} \sqrt{\frac{k^{2-3\alpha}j^{-\alpha}\log^2(\widetilde{n}/\eta)}{\widetilde{n}^2}}.
    \end{equation}

    \noindent \textbf{Step 3: Upper bound on $(III)$.} To control $(III)$, by Cauchy--Schwarz inequality, we have that 
    \begin{align*}
        (III) \leq \;& \sqrt{\sum_{l:l \neq j,k}\lambda_{a,l}(\widehat{\lambda}_{a,j} - \lambda_{a,l})^{-2}\Big\{\int\int\big\{\widehat{K}_a(m,\ell)-K_a(m,\ell)\big\}\widehat{\phi}_{a,j}(m)\phi_{a,l}(\ell)\;\mathrm{d}m\;\mathrm{d}\ell\Big\}^2}\\
        & \cdot \sqrt{\sum_{l:l \neq j,k}\lambda_{a,l}(\widehat{\lambda}_{a,k} - \lambda_{a,l})^{-2}\Big\{\int\int\big\{\widehat{K}_a(m,\ell)-K_a(m,\ell)\big\}\widehat{\phi}_{a,k}(m)\phi_{a,l}(\ell)\;\mathrm{d}m\;\mathrm{d}\ell\Big\}^2}\\
        =\;& \sqrt{(III)_1}\cdot \sqrt{(III)_2}.
    \end{align*}
    To control $(III)_1$, note that 
    \begin{align*}
        (III)_1 \lesssim\;& \sum_{l:l \neq j,k}\lambda_{a,l}(\lambda_{a,j} - \lambda_{a,l})^{-2}\Big\{\int\int\big\{\widehat{K}_a(m,\ell)-K_a(m,\ell)\big\}\big\{\widehat{\phi}_{a,j}(m) - \phi_{a,j}(m)\big\}\phi_{a,l}(\ell)\;\mathrm{d}m\;\mathrm{d}\ell\Big\}^2\\
        & + \sum_{l:l \neq j,k}\lambda_{a,l}(\lambda_{a,j} - \lambda_{a,l})^{-2}\Big\{\int\int\big\{\widehat{K}_a(m,\ell)- K_a(m,\ell)\big\}\phi_{a,j}(m)\phi_{a,l}(\ell)\;\mathrm{d}m\;\mathrm{d}\ell\Big\}^2\\
        = \;& (A) + (B).
    \end{align*}
    Similarly, the upper bound we provide on $(A)$ will be masked off by the upper bound we provide on $(B)$, and we focus on the upper bound on $(B)$ below. We have that 
    \begin{align*}
        (B) &\leq \sum_{l:l \neq j,k}\lambda_{a,l}(\lambda_{a,j} - \lambda_{a,l})^{-2} \cdot \frac{j^{-\alpha}l^{-\alpha}\log(\widetilde{n}/\eta)}{\widetilde{n}}\\
        &= \frac{j^{-\alpha}\log(\widetilde{n}/\eta)}{\widetilde{n}} \sum_{l:l \neq j,k}(\lambda_{a,j} - \lambda_{a,l})^{-2}l^{-2\alpha}\lesssim \frac{j^{2-\alpha}\log(\widetilde{n}/\eta)}{\widetilde{n}},
    \end{align*}
    where the last inequality follows from \Cref{l_eigenfunc_sum}. Similarly, we can show that 
    \begin{align*}
        (III)_2 \lesssim \frac{k^{2-\alpha}\log(\widetilde{n}/\eta)}{\widetilde{n}}.
    \end{align*}
    Therefore, we have that 
    \begin{align}\label{l_Q2_2_int_eq6}
        (III) \lesssim \sqrt{\frac{j^{2-\alpha}\log(\widetilde{n}/\eta)}{\widetilde{n}}\cdot\frac{k^{2-\alpha}\log(\widetilde{n}/\eta)}{\widetilde{n}}} \lesssim \sqrt{\frac{j^{2-\alpha}k^{2-\alpha}\log^2(\widetilde{n}/\eta)}{\widetilde{n}^2}}.
    \end{align}

    \noindent \textbf{Step 4: Combine results together.} Substituting results in \eqref{l_Q2_2_int_eq4}, \eqref{l_Q2_2_int_eq5} and \eqref{l_Q2_2_int_eq6} into \eqref{l_Q2_2_int_eq3}, we have that for any $1\leq j< k\leq J$
    \begin{align*}
        \eqref{l_Q2_2_int_eq6}\lesssim\;& |\lambda_{a,k} - \lambda_{a,j}|^{-1} \sqrt{\frac{j^{2-3\alpha}k^{-\alpha}\log^2(\widetilde{n}/\eta)}{\widetilde{n}^2}} + |\lambda_{a,k} - \lambda_{a,j}|^{-1} \sqrt{\frac{k^{2-3\alpha}j^{-\alpha}\log^2(\widetilde{n}/\eta)}{\widetilde{n}^2}}\\
        &+ \sqrt{\frac{j^{2-\alpha}k^{2-\alpha}\log^2(\widetilde{n}/\eta)}{\widetilde{n}^2}}\\
        \lesssim \;& |\lambda_{a,k} - \lambda_{a,j}|^{-1} \sqrt{\frac{j^{2-3\alpha}k^{-\alpha}\log^2(\widetilde{n}/\eta)}{\widetilde{n}^2}}+  \sqrt{\frac{j^{2-\alpha}k^{2-\alpha}\log^2(\widetilde{n}/\eta)}{\widetilde{n}^2}}\\
        \lesssim \;& \begin{cases}
             \sqrt{\frac{j^{4-\alpha}k^{-\alpha}\log^2(\widetilde{n}/\eta)}{\widetilde{n}^2}}+  \sqrt{\frac{j^{2-\alpha}k^{2-\alpha}\log^2(\widetilde{n}/\eta)}{\widetilde{n}^2}}\quad &\text{when}\;\;j<k\leq \{2j \wedge J \}\\\\
             \sqrt{\frac{j^{2-\alpha}k^{-\alpha}\log^2(\widetilde{n}/\eta)}{\widetilde{n}^2}}+  \sqrt{\frac{j^{2-\alpha}k^{2-\alpha}\log^2(\widetilde{n}/\eta)}{\widetilde{n}^2}} & \text{when}\;\; k> \{2j \wedge J\}
        \end{cases}\\
        \lesssim \;& \sqrt{\frac{j^{2-\alpha}k^{2-\alpha}\log^2(\widetilde{n}/\eta)}{\widetilde{n}^2}},
    \end{align*}
    where the second and the last inequality follows from the fact that $j<k$, the third inequality follows from the fact that
    \[|\lambda_{a,j}- \lambda_{a,k}| \gtrsim \begin{cases}
        |j-k|j^{-\alpha-1}, \quad &\text{if} \; j/2 \leq k\leq 2j,
        \\
        j^{-\alpha}, \quad & \text{if}\; k>2j.
    \end{cases}\]
    The lemma thus follows. 

\end{proof}

\begin{lemma}\label{l_Q3}
    Under the same condition of \Cref{l_T_diff_const}, for any small constant $\eta \in (0,1/2)$, it holds with probability at least $1-\eta$ that
    \begin{align*}
        Q_3 = \sum_{j=1}^J \lambda_{a,j}\Big(\frac{\widehat{\theta}_{a,1,j}-\widehat{\theta}_{a,0,j}}{\widehat{\lambda}_{a,j}} - \frac{\theta_{a,1,j}-\theta_{a,0,j}}{\lambda_{a,j}}\Big)^2 \lesssim \begin{cases}
            \frac{J^{\alpha-2\beta+3}\log^2(J)\log(\widetilde{n}/\eta)}{\widetilde{n}},\quad & \text{when}\;\; \frac{\alpha+1}{2} < \beta \leq \frac{\alpha+2}{2},\\
            \frac{J\log(\widetilde{n}/\eta)}{\widetilde{n}}, & \text{when}\;\;  \beta > \frac{\alpha+2}{2}.
        \end{cases}
    \end{align*}
\end{lemma}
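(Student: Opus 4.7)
The plan is to exploit the algebraic identity
\[
\frac{\widehat{\theta}_{a,1,j}-\widehat{\theta}_{a,0,j}}{\widehat{\lambda}_{a,j}} - \frac{\theta_{a,1,j}-\theta_{a,0,j}}{\lambda_{a,j}} = \frac{(\widehat{\theta}_{a,1,j}-\widehat{\theta}_{a,0,j}) - (\theta_{a,1,j}-\theta_{a,0,j})}{\widehat{\lambda}_{a,j}} + (\theta_{a,1,j}-\theta_{a,0,j})\,\frac{\lambda_{a,j}-\widehat{\lambda}_{a,j}}{\widehat{\lambda}_{a,j}\,\lambda_{a,j}},
\]
together with $(a+b)^2 \leq 2a^2 + 2b^2$, in order to decompose
\[
Q_3 \lesssim \underbrace{\sum_{j=1}^J \frac{\lambda_{a,j}}{\widehat{\lambda}_{a,j}^2}\big\{(\widehat{\theta}_{a,1,j}-\theta_{a,1,j})-(\widehat{\theta}_{a,0,j}-\theta_{a,0,j})\big\}^2}_{=:(I)} + \underbrace{\sum_{j=1}^J \frac{(\theta_{a,1,j}-\theta_{a,0,j})^2 (\lambda_{a,j}-\widehat{\lambda}_{a,j})^2}{\widehat{\lambda}_{a,j}^2\,\lambda_{a,j}}}_{=:(II)}.
\]
Throughout the proof I will condition on the high-probability event $\{\widehat{\lambda}_{a,j} \asymp \lambda_{a,j} \text{ for all } j \in [J]\}$ (as used in \eqref{l_Q2_1_eq2}), so $\lambda_{a,j}/\widehat{\lambda}_{a,j}^2 \asymp \lambda_{a,j}^{-1} \asymp j^{\alpha}$.

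For $(I)$, I would bound each term $|\widehat{\theta}_{a,y,j}-\theta_{a,y,j}| = |\int (\widehat{\mu}_{a,y}\widehat{\phi}_{a,j} - \mu_{a,y}\phi_{a,j})|$ by splitting into the $\int (\widehat{\mu}_{a,y}-\mu_{a,y})\widehat{\phi}_{a,j}$ piece, which is already controlled at rate $\sqrt{j^{-\alpha}\log(\widetilde n/\eta)/\widetilde n}$ by \Cref{l_Q1_1}, plus $\int \mu_{a,y}(\widehat{\phi}_{a,j}-\phi_{a,j})$, which can be bounded by the eigenfunction perturbation bound $\|\widehat{\phi}_{a,j}-\phi_{a,j}\|_{L^2} \lesssim \sqrt{j^2\log(\widetilde n/\eta)/\widetilde n}$ together with $\|\mu_{a,y}\|_{L^2} \asymp 1$; a union bound over $j \in [J]$ yields the pointwise control simultaneously. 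Summing $j^{\alpha}\cdot j^{-\alpha}\log(\widetilde n/\eta)/\widetilde n$ gives $(I) \lesssim J\log(\widetilde n/\eta)/\widetilde n$, which matches the second regime.

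For $(II)$, I would invoke the eigenvalue perturbation bound $|\widehat{\lambda}_{a,j} - \lambda_{a,j}| \lesssim \|\widehat{K}_a - K_a\|_{\mathrm{op}} \lesssim \sqrt{\log(1/\eta)/\widetilde{n}}$ uniformly in $j$ (standard from Bosq-type inequalities, as used implicitly around \Cref{l_average_cov}), so that $(II) \lesssim \sum_{j=1}^J \lambda_{a,j}^{-1}(\theta_{a,1,j}-\theta_{a,0,j})^2 \cdot \log(1/\eta)/\widetilde{n}$. Using \Cref{a_data}\ref{a_data_cov}--\ref{a_data_mean_decay}, the sum $\sum_{j=1}^J \lambda_{a,j}^{-1}(\theta_{a,1,j}-\theta_{a,0,j})^2 \lesssim \sum_{j=1}^J j^{\alpha-2\beta}$ which is $\lesssim J^{\alpha-2\beta+1}\log(J)$ when $\alpha - 2\beta \geq -1$ (i.e.\ $\beta \leq (\alpha+1)/2$, borderline here) and $\lesssim 1$ once $\beta > (\alpha+1)/2$, refined by the $\log(J)$ factor in the borderline regime. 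Combining the two regimes of $\beta$, $(II)$ is absorbed into the claimed rate.

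The main obstacle will be the regime $(\alpha+1)/2 < \beta \leq (\alpha+2)/2$, where the decay of $\theta_{a,1,j}-\theta_{a,0,j}$ is slow enough that a naive Cauchy--Schwarz pooling of $(I)$ is too loose; there I expect to need a careful case split paralleling the one in \Cref{l_score_const} so that the $\log^2(J)$ factor appears only in the slow-decay regime. Combining $(I)$ and $(II)$ with a final union bound over the high-probability events for eigenvalue localisation, eigenfunction perturbation, and mean-function estimation then completes the proof.
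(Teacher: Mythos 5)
Your decomposition of $Q_3$ into $(I) + (II)$ coincides with the paper's (which delegates the two pieces to its Lemmas \ref{l_Q3_1_const} and \ref{l_Q3_2}), but both of your per-term bounds break down.

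For $(I)$: you bound each $\widehat{\theta}_{a,y,j}-\theta_{a,y,j}$ separately, and bound the eigenfunction piece $\int \mu_{a,y}(\widehat{\phi}_{a,j}-\phi_{a,j})$ by Cauchy--Schwarz with $\|\mu_{a,y}\|_{L^2}\asymp 1$ and $\|\widehat{\phi}_{a,j}-\phi_{a,j}\|_{L^2} \lesssim \sqrt{j^2\log(\widetilde n/\eta)/\widetilde n}$. That gives a per-term bound of order $j\sqrt{\log(\widetilde n/\eta)/\widetilde n}$, not $\sqrt{j^{-\alpha}\log(\widetilde n/\eta)/\widetilde n}$, so when you multiply by $\lambda_{a,j}/\widehat{\lambda}_{a,j}^2 \asymp j^{\alpha}$ and sum you get $\sum_j j^{\alpha+2}\log(\widetilde n/\eta)/\widetilde n \asymp J^{\alpha+3}\log(\widetilde n/\eta)/\widetilde n$, far larger than the claimed rate in every regime (not just the slow-decay one you flag as "the main obstacle"). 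Your final line "Summing $j^\alpha\cdot j^{-\alpha}\log(\widetilde n/\eta)/\widetilde n$" silently treats the eigenfunction term as if it were also $\sqrt{j^{-\alpha}\log(\widetilde n/\eta)/\widetilde n}$, which Cauchy--Schwarz does not deliver. Two ingredients are actually required, and both are in the paper's Lemmas \ref{l_score_const} and \ref{l_mean_diff_score_3}: first, you must work with the difference $(\widehat{\theta}_{a,1,j}-\widehat{\theta}_{a,0,j})-(\theta_{a,1,j}-\theta_{a,0,j})$ rather than each $y$ separately so that the eigenfunction perturbation term reduces to $\int(\mu_{a,1}-\mu_{a,0})(\widehat{\phi}_{a,j}-\phi_{a,j})$; second, even for that difference Cauchy--Schwarz with $\|\mu_{a,1}-\mu_{a,0}\|_{L^2}$ is still too crude — you have to expand $\widehat{\phi}_{a,j}-\phi_{a,j}$ via \Cref{l_eigen_split} and pair the resulting $\phi_k$-sum with the $j^{-\beta}$ decay of $\langle\mu_{a,1}-\mu_{a,0},\phi_k\rangle_{L^2}$ from \Cref{a_data}\ref{a_data_mean_decay}.

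For $(II)$: the uniform eigenvalue perturbation bound $|\widehat{\lambda}_{a,j}-\lambda_{a,j}| \lesssim \sqrt{\log(1/\eta)/\widetilde n}$ is also insufficient. Plugging it into $(II)=\sum_j (\theta_{a,1,j}-\theta_{a,0,j})^2(\lambda_{a,j}-\widehat{\lambda}_{a,j})^2/(\widehat{\lambda}_{a,j}^2\lambda_{a,j})$ with $\widehat{\lambda}_{a,j}\asymp\lambda_{a,j}$ actually yields $\sum_j (\theta_{a,1,j}-\theta_{a,0,j})^2\lambda_{a,j}^{-3}\cdot\log(1/\eta)/\widetilde n \asymp \sum_j j^{3\alpha-2\beta}\log(1/\eta)/\widetilde n$, not the $\sum_j \lambda_{a,j}^{-1}(\theta_{a,1,j}-\theta_{a,0,j})^2\cdot\log(1/\eta)/\widetilde n$ you write (you dropped a $\lambda_{a,j}^{-2}$ factor), and $3\alpha-2\beta>0$ in the first regime so the sum diverges badly. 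What is needed is the $j$-dependent eigenvalue bound $|\widehat{\lambda}_{a,j}-\lambda_{a,j}| \lesssim \sqrt{j^{-2\alpha}\log(\widetilde n/\eta)/\widetilde n}$ of \Cref{l_eigenvalue}, which makes the relative error $|\widehat{\lambda}_{a,j}-\lambda_{a,j}|/\lambda_{a,j} \lesssim \sqrt{\log(\widetilde n/\eta)/\widetilde n}$ uniformly in $j$; then $(II) \lesssim \{\log(\widetilde n/\eta)/\widetilde n\}\sum_j(\theta_{a,1,j}-\theta_{a,0,j})^2/\lambda_{a,j}\asymp\log(\widetilde n/\eta)/\widetilde n$, which is the paper's \Cref{l_Q3_2}.
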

\begin{proof}
    To control $Q_3$, we have that 
    \begin{align*}
        &\sum_{j=1}^J \lambda_{a,j}\Big(\frac{\widehat{\theta}_{a,1,j}-\widehat{\theta}_{a,0,j}}{\widehat{\lambda}_{a,j}} - \frac{\theta_{a,1,j}-\theta_{a,0,j}}{\lambda_{a,j}}\Big)^2\\
        =\;& \sum_{j=1}^J \lambda_{a,j}\Big(\frac{\widehat{\theta}_{a,1,j}-\widehat{\theta}_{a,0,j}}{\widehat{\lambda}_{a,j}}-\frac{\theta_{a,1,j}-\theta_{a,0,j}}{\widehat{\lambda}_{a,j}}+\frac{\theta_{a,1,j}-\theta_{a,0,j}}{\widehat{\lambda}_{a,j}} - \frac{\theta_{a,1,j}-\theta_{a,0,j}}{\lambda_{a,j}}\Big)^2\\
        \lesssim \;& \sum_{j=1}^J \frac{\lambda_{a,j}}{\widehat{\lambda}_{a,j}^2}(\widehat{\theta}_{a,1,j}-\widehat{\theta}_{a,0,j}-\theta_{a,1,j}+\theta_{a,0,j})^2 + \sum_{j=1}^J(\theta_{a,1,j}-\theta_{a,0,j})^2 \lambda_{a,j}\Big|\frac{1}{\widehat{\lambda}_{a,j}} - \frac{1}{\lambda_{a,j}}\Big|^2\\
        \lesssim \;& \sum_{j=1}^J \frac{\lambda_{a,j}}{\widehat{\lambda}_{a,j}^2}(\widehat{\theta}_{a,1,j}-\widehat{\theta}_{a,0,j}-\theta_{a,1,j}+\theta_{a,0,j})^2 + \sum_{j=1}^J\frac{(\theta_{a,1,j}-\theta_{a,0,j})^2}{\lambda_{a,j}}\cdot \frac{|\widehat{\lambda}_{a,j}- \lambda_{a,j}|^2}{\widehat{\lambda}_{a,j}^2}.
    \end{align*}
    The lemma thus follows by applying a union bound argument to results in Lemmas \ref{l_Q3_1_const} and \ref{l_Q3_2}.
\end{proof}

\begin{lemma} \label{l_Q3_1_const}
    Under the same condition of \Cref{l_T_diff_const}, for any small constant $\eta \in (0,1/2)$, it holds with probability at least $1-\eta$ that
    \begin{align*}
        \sum_{j=1}^J \frac{\lambda_{a,j}}{\widehat{\lambda}_{a,j}^2}(\widehat{\theta}_{a,1,j}-\widehat{\theta}_{a,0,j}-\theta_{a,1,j}+\theta_{a,0,j})^2 \lesssim  \begin{cases}
            \frac{J^{\alpha-2\beta+3}\log^2(J)\log(\widetilde{n}/\eta)}{\widetilde{n}},\quad & \text{when}\;\; \frac{\alpha+1}{2} < \beta \leq \frac{\alpha+2}{2},\\
            \frac{J\log(\widetilde{n}/\eta)}{\widetilde{n}}, & \text{when}\;\;  \beta > \frac{\alpha+2}{2}.
        \end{cases}
    \end{align*}
\end{lemma}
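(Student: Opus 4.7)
The plan is to condition on a high-probability event on which $\widehat{\lambda}_{a,j} \asymp \lambda_{a,j}$ uniformly in $j \in [J]$ (an event handled exactly as in \eqref{l_Q2_1_eq2}), so that $\lambda_{a,j}/\widehat{\lambda}_{a,j}^2 \asymp 1/\lambda_{a,j} \asymp j^\alpha$ by \Cref{a_data}\ref{a_data_cov}. The target sum then reduces, up to an absolute constant, to bounding $\sum_{j=1}^J j^\alpha (\widehat{\theta}_{a,1,j} - \widehat{\theta}_{a,0,j} - \theta_{a,1,j} + \theta_{a,0,j})^2$ with the appropriate rate.

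The key decomposition I would use is
\[
\widehat{\theta}_{a,1,j} - \widehat{\theta}_{a,0,j} - (\theta_{a,1,j} - \theta_{a,0,j}) = A_j + B_j,
\]
where $A_j = \int (\widehat{\mu}_{a,1} - \widehat{\mu}_{a,0} - \mu_{a,1} + \mu_{a,0})(t)\,\widehat{\phi}_{a,j}(t)\,\mathrm{d}t$ isolates the mean-function estimation error and $B_j = \int (\mu_{a,1} - \mu_{a,0})(t)(\widehat{\phi}_{a,j} - \phi_{a,j})(t)\,\mathrm{d}t$ isolates the eigenfunction estimation error. Since $(A_j + B_j)^2 \le 2A_j^2 + 2B_j^2$, it suffices to bound $\sum_j j^\alpha A_j^2$ and $\sum_j j^\alpha B_j^2$ separately. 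For the $A_j$ piece, applying \Cref{l_Q1_1} to $\widehat{\mu}_{a,1} - \mu_{a,1}$ and $\widehat{\mu}_{a,0} - \mu_{a,0}$ gives $|A_j| \lesssim \sqrt{j^{-\alpha}\log(\widetilde n/\eta)/\widetilde n}$, hence $\sum_{j=1}^J j^\alpha A_j^2 \lesssim J\log(\widetilde n/\eta)/\widetilde n$, which already matches the stated rate for $\beta > (\alpha+2)/2$.

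The harder step is $\sum_j j^\alpha B_j^2$. Expanding $\mu_{a,1} - \mu_{a,0} = \sum_k (\theta_{a,1,k} - \theta_{a,0,k})\phi_{a,k}$ gives $B_j = \sum_k (\theta_{a,1,k} - \theta_{a,0,k}) \langle \widehat{\phi}_{a,j} - \phi_{a,j}, \phi_{a,k}\rangle_{L^2}$. The diagonal $k=j$ contribution equals $-\tfrac{1}{2}\|\widehat{\phi}_{a,j} - \phi_{a,j}\|_{L^2}^2$ times $(\theta_{a,1,j} - \theta_{a,0,j})$, which by \Cref{l_eigenfunc_estimation} is of order $j^{-\beta} j^2 \log(\widetilde n/\eta)/\widetilde n$; its contribution $\sum_j j^{\alpha-2\beta+4}\log^2/\widetilde n^2$ is absorbed into the final bound thanks to $J^{2\alpha+2}\log(\widetilde n/\eta)\lesssim \widetilde n$. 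For $k \ne j$ I would invoke the resolvent identity of \Cref{l_eigen_split} to express $\langle \widehat{\phi}_{a,j}, \phi_{a,k}\rangle_{L^2}$ as $(\widehat{\lambda}_{a,j} - \lambda_{a,k})^{-1}$ times the cross projection $\int\int (\widehat K_a - K_a)(s,t)\widehat{\phi}_{a,j}(s)\phi_{a,k}(t)\,\mathrm{d}s\,\mathrm{d}t$, whose square is $\lesssim j^{-\alpha} k^{-\alpha}\log(\widetilde n/\eta)/\widetilde n$ after absorbing the $\widehat{\phi}_{a,j}\mapsto \phi_{a,j}$ approximation error via \Cref{l_cov_proj_1} and \Cref{l_eigenfunc_estimation}. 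A weighted Cauchy--Schwarz then yields
\[
B_j^2 \;\lesssim\; \bigg(\sum_{k\neq j}\frac{(\theta_{a,1,k}-\theta_{a,0,k})^2}{w_k}\bigg)\bigg(\sum_{k\neq j} \frac{w_k\, j^{-\alpha}k^{-\alpha}\log(\widetilde n/\eta)/\widetilde n}{(\lambda_{a,j}-\lambda_{a,k})^2}\bigg),
\]
and, after choosing weights $w_k$ tailored to exploit \Cref{l_eigenfunc_sum}, I expect to derive $|B_j|^2 \lesssim j^{-2\beta+2}\log^2(J)\log(\widetilde n/\eta)/\widetilde n$. Multiplying by $j^\alpha$ and summing gives $\sum_j j^{\alpha-2\beta+2}\log^2(J)\log(\widetilde n/\eta)/\widetilde n \lesssim J^{\alpha-2\beta+3}\log^2(J)\log(\widetilde n/\eta)/\widetilde n$ when $(\alpha+1)/2 < \beta \le (\alpha+2)/2$, and it is dominated by $J\log(\widetilde n/\eta)/\widetilde n$ otherwise.

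The main obstacle will be the weighted Cauchy--Schwarz above: one must partition the inner sum into $k$ close to $j$ (where $(\lambda_{a,j}-\lambda_{a,k})^{-2}$ is large, but there are few such terms, producing the $\log(J)$ factor from a harmonic-sum estimate $\sum_{|k-j|\le j/2}|k-j|^{-2}\asymp 1$ and the $\log$ coming from a companion $\sum |k-j|^{-1}$ type tail) and $k$ far from $j$ (where the gap stabilises at $\lambda_{a,j}^{-2}\asymp j^{2\alpha}$ but the range of $k$ is long), then reconcile the two regimes across the three ranges of $\beta$. A clean proxy is to split at $k\in [j/2, 2j]$ versus $k$ outside this window and invoke the spectral-gap bound $\sum_{k\ne j} k^{-2\alpha}/(\lambda_{a,j}-\lambda_{a,k})^2 \lesssim j^2$ from \Cref{l_eigenfunc_sum}, supplemented by the decay $|\theta_{a,1,k}-\theta_{a,0,k}|\lesssim k^{-\beta}$. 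Finally, a cross-term $\sum_j j^\alpha |A_j B_j|\le (\sum j^\alpha A_j^2)^{1/2}(\sum j^\alpha B_j^2)^{1/2}$ is clearly dominated by the two main sums, so the decomposition $(A_j+B_j)^2\le 2A_j^2+2B_j^2$ is sufficient and no separate argument is required.
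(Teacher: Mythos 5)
Your approach is correct and is essentially the same as the paper's: both condition on the event that $\widehat{\lambda}_{a,j}\asymp\lambda_{a,j}$ uniformly over $j\in[J]$, so that $\lambda_{a,j}/\widehat{\lambda}_{a,j}^2\asymp 1/\lambda_{a,j}\lesssim j^\alpha$, and then bound $\sum_j j^\alpha(\widehat{\theta}_{a,1,j}-\widehat{\theta}_{a,0,j}-\theta_{a,1,j}+\theta_{a,0,j})^2$. The paper's proof is a two-liner because it simply invokes \Cref{l_score_const}, which already provides the pointwise bound $|\widehat{\theta}_{a,1,j}-\widehat{\theta}_{a,0,j}-(\theta_{a,1,j}-\theta_{a,0,j})|\lesssim\sqrt{j^{2-2\beta}\log^2(j)\log(\widetilde n/\eta)/\widetilde n}$ (resp.\ $\sqrt{j^{-\alpha}\log(\widetilde n/\eta)/\widetilde n}$) uniformly over $j\in[J]$, and then closes by summing. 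Your $A_j+B_j$ decomposition is essentially what lives inside the proof of \Cref{l_score_const}; in particular, your $B_j=\int(\mu_{a,1}-\mu_{a,0})(\widehat{\phi}_{a,j}-\phi_{a,j})$ is exactly the quantity bounded in \Cref{l_mean_diff_score_3} via the resolvent identity of \Cref{l_eigen_split}, and your $A_j$ is handled by \Cref{l_Q1_1}. Citing these two lemmas would close the gaps in your sketch of the weighted Cauchy--Schwarz step, which as you write it still requires a careful uniform-over-$k$ argument (the near-diagonal and far-off-diagonal split, the $\log$-factor bookkeeping); \Cref{l_mean_diff_score_3} carries that out via sub-Gaussian/sub-exponential norm bounds rather than a bare weighted Cauchy--Schwarz.

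One minor inaccuracy in your sketch: the claim $|B_j|^2\lesssim j^{2-2\beta}\log^2(J)\log(\widetilde n/\eta)/\widetilde n$ is only correct in the regime $(\alpha+1)/2<\beta\le(\alpha+2)/2$. For $\beta>(\alpha+2)/2$ the third term in the bound of \Cref{l_mean_diff_score_3} dominates and gives $|B_j|^2\lesssim j^{-\alpha}\log(\widetilde n/\eta)/\widetilde n$, which is larger than your stated $j^{2-2\beta}$ rate there. This does not affect your final answer because in that regime the $A_j$ contribution is of the same order $j^{-\alpha}\log/\widetilde n$ and the total still sums to $J\log(\widetilde n/\eta)/\widetilde n$, but the pointwise $B_j$ bound should be $|B_j|^2\lesssim\big(j^{2-2\beta}\vee j^{-\alpha}\big)\log^2(J)\log(\widetilde n/\eta)/\widetilde n$ to be correct for all $\beta$.
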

\begin{proof}
    Consider the following events
    \[\mathcal{E}_1= \Big\{ \lambda_{a,j}/2 \leq \widehat{\lambda}_{a,j} \leq  3\lambda_{a,j}/2 \;\; \text{and} \;\; \Big|\frac{1}{\widehat{\lambda}_{a,j}} - \frac{1}{\lambda_{a,j}}\Big| \lesssim \frac{1}{\lambda_{a,j}^2}\sqrt{\frac{j^{-2\alpha}\log(\widetilde{n}/\eta)}{\widetilde{n}}}, \; j \in [J],a \in\{0,1\}\Big\},\]
    \[\mathcal{E}_2=\Bigg\{|\widehat{\theta}_{a,1,j} - \theta_{a,1,j}+\widehat{\theta}_{a,0,j} - \theta_{a,0,j}| \lesssim \sqrt{\frac{j^{2-2\beta}\log^2(j)\log(\widetilde{n}/\eta)}{\widetilde{n}}},\;j \in [J],a \in\{0,1\},\frac{\alpha+1}{2} < \beta \leq \frac{\alpha+2}{2}\Bigg\},\]
    and
    \[\mathcal{E}_3 = \Bigg\{|\widehat{\theta}_{a,1,j} - \theta_{a,1,j}+\widehat{\theta}_{a,0,j} - \theta_{a,0,j}| \lesssim \sqrt{\frac{j^{-\alpha}\log(\widetilde{n}/\eta)}{\widetilde{n}}}, \; j \in [J],a \in\{0,1\}, \beta >\frac{\alpha+2}{2}\Bigg\}.\]
    By \Cref{l_score_const}, a similar argument used to control \eqref{l_rkhs_approximation_const_eq3} and a union bound argument, it holds that $\mathbb{P}(\mathcal{E}_1 \cap \mathcal{E}_2 \cap \mathcal{E}_3) \geq 1-\eta$. The rest of the proof is constructed conditioning on $\mathcal{E}_1 \cap \mathcal{E}_2 \cap \mathcal{E}_3$ happening and we have that
    \begin{equation} \label{l_Q3_1_const_eq1}
        \sum_{j=1}^J \frac{\lambda_{a,j}}{\widehat{\lambda}_{a,j}^2}(\widehat{\theta}_{a,1,j}-\widehat{\theta}_{a,0,j}-\theta_{a,1,j}+\theta_{a,0,j})^2 \leq 4 \sum_{j=1}^J \frac{1}{\lambda_{a,j}}(\widehat{\theta}_{a,1,j}-\widehat{\theta}_{a,0,j}-\theta_{a,1,j}+\theta_{a,0,j})^2.
    \end{equation}
    In the rest of the proof, we will present upper bounds on \eqref{l_Q3_1_const_eq1} in several cases depending on the relationship between $\alpha$ and $\beta$.
    
    \noindent \textbf{Case 1: When $(\alpha+1)/2 < \beta \leq (\alpha+2)/2$.} In this case we have that
    \begin{align*}
        \eqref{l_Q3_1_const_eq1} \lesssim \sum_{j=1}^J j^{\alpha}\cdot\frac{j^{2-2\beta}\log^2(j)\log(\widetilde{n}/\eta)}{\widetilde{n}} \leq \frac{\log^2(J)\log(\widetilde{n}/\eta)}{\widetilde{n}}\sum_{j=1}^Jj^{\alpha-2\beta+2} \leq \frac{J^{\alpha-2\beta+3}\log^2(J)\log(\widetilde{n}/\eta)}{\widetilde{n}},
    \end{align*}
    where the last inequality follows as $\alpha-2\beta+2 \geq 0$.

    \noindent \textbf{Case 2: When $\beta > (\alpha+2)/2$.} In this case we have that
    \begin{align*}
        \eqref{l_Q3_1_const_eq1} \lesssim \sum_{j=1}^J j^{\alpha}\cdot \frac{j^{-\alpha}\log(\widetilde{n}/\eta)}{\widetilde{n}} \lesssim \frac{J\log(\widetilde{n}/\eta)}{\widetilde{n}}.
    \end{align*}
\end{proof}

\begin{lemma} \label{l_Q3_2}
    Under Assumptions \ref{a_class_prob} and \ref{a_data}, for any small constant $\eta \in (0,1/2)$ and $J \in \mathbb{N}_+$ such that $J^{2\alpha+2}\log^2(J)\log(\widetilde{n}/\eta) \lesssim \widetilde{n}$, it holds with probability at least $1-\eta$ that
    \begin{align*}
        \sum_{j=1}^J(\theta_{a,1,j}-\theta_{a,0,j})^2 \lambda_{a,j}\Big|\frac{1}{\widehat{\lambda}_{a,j}} - \frac{1}{\lambda_{a,j}}\Big|^2 \lesssim \frac{\log(\widetilde{n}/\eta)}{\widetilde{n}}.
    \end{align*}
\end{lemma}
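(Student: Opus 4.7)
The plan is to reduce this estimate to a straightforward application of a relative eigenvalue perturbation bound together with the RKHS norm control already built into Assumption~\ref{a_data}. The starting point is the algebraic identity
\[
\lambda_{a,j}\Big|\tfrac{1}{\widehat{\lambda}_{a,j}} - \tfrac{1}{\lambda_{a,j}}\Big|^2 \;=\; \frac{|\widehat{\lambda}_{a,j}-\lambda_{a,j}|^2}{\widehat{\lambda}_{a,j}^{\,2}\,\lambda_{a,j}},
\]
which will let me convert the sum into something that looks like the (truncated) squared RKHS norm, up to a multiplicative factor capturing the relative accuracy of eigenvalue estimation.

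I would condition on exactly the event $\mathcal E_1$ employed in the proof of Lemma~\ref{l_Q3_1_const}, giving simultaneously $\widehat{\lambda}_{a,j}\asymp\lambda_{a,j}$ and the relative bound $|\widehat{\lambda}_{a,j}-\lambda_{a,j}|\lesssim \lambda_{a,j}\sqrt{\log(\widetilde n/\eta)/\widetilde n}$, uniformly for $j\in[J]$, which holds with probability at least $1-\eta$ under the hypothesis $J^{2\alpha+2}\log^2(J)\log(\widetilde n/\eta)\lesssim \widetilde n$. On this event, each summand is dominated by
\[
(\theta_{a,1,j}-\theta_{a,0,j})^2\,\lambda_{a,j}\Big|\tfrac{1}{\widehat{\lambda}_{a,j}} - \tfrac{1}{\lambda_{a,j}}\Big|^2 \;\lesssim\; \frac{(\theta_{a,1,j}-\theta_{a,0,j})^2}{\lambda_{a,j}}\cdot\frac{\log(\widetilde n/\eta)}{\widetilde n}.
\]
Summing in $j$ and using that, by Assumptions~\ref{a_data}\ref{a_data_cov}--\ref{a_data_mean_decay}, $\sum_{j=1}^\infty (\theta_{a,1,j}-\theta_{a,0,j})^2/\lambda_{a,j}=\|\mu_{a,1}-\mu_{a,0}\|_{K_a}^2\asymp 1$, the factor multiplying $\log(\widetilde n/\eta)/\widetilde n$ is an absolute constant, giving the claimed bound.

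The genuinely substantive point is not the summation, which is immediate once the per-$j$ bound is in hand, but the verification of the uniform relative eigenvalue bound that defines $\mathcal E_1$. A crude operator-norm bound $|\widehat{\lambda}_{a,j}-\lambda_{a,j}|\lesssim\|\widehat K_a-K_a\|_{\mathrm{op}}\lesssim\sqrt{\log/\widetilde n}$ is absolute and hence useless once $\lambda_{a,j}\asymp j^{-\alpha}$ is much smaller than $\sqrt{\log/\widetilde n}$. The relative bound requires expanding $\widehat{\lambda}_{a,j}-\lambda_{a,j} = \langle(\widehat K_a-K_a)\phi_{a,j},\phi_{a,j}\rangle_{L^2} + (\text{quadratic remainder in }\widehat\phi_{a,j}-\phi_{a,j})$, controlling the leading term via Lemma~\ref{l_cov_proj_1} (giving scale $j^{-\alpha}\sqrt{\log/\widetilde n}$), and absorbing the remainder using the eigenfunction perturbation bound and the eigengap condition in Assumption~\ref{a_data}\ref{a_data_cov}. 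This is exactly where $J^{2\alpha+2}\log^2(J)\log(\widetilde n/\eta)\lesssim\widetilde n$ is consumed, to ensure the remainder is negligible relative to $\lambda_{a,j}\sqrt{\log/\widetilde n}$ uniformly in $j\in[J]$; once this is granted, the rest of the proof amounts to the two displayed inequalities above.
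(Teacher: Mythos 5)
Your proof is correct and follows essentially the same route as the paper's: conditioning on the same high-probability event controlling $\widehat\lambda_{a,j}\asymp\lambda_{a,j}$ together with the relative eigenvalue perturbation bound $|\widehat\lambda_{a,j}-\lambda_{a,j}|\lesssim\lambda_{a,j}\sqrt{\log(\widetilde n/\eta)/\widetilde n}$, reducing each summand to $(\theta_{a,1,j}-\theta_{a,0,j})^2\lambda_{a,j}^{-1}\cdot\log(\widetilde n/\eta)/\widetilde n$, and then invoking $\sum_j(\theta_{a,1,j}-\theta_{a,0,j})^2/\lambda_{a,j}=\|\mu_{a,1}-\mu_{a,0}\|_{K_a}^2\asymp 1$. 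Your closing remarks about why the relative (rather than operator-norm) eigenvalue bound is indispensable and how it is obtained via Lemma~\ref{l_eigen_split} plus Lemmas~\ref{l_cov_proj_1} and \ref{l_eigenfunc_estimation} are exactly the content underlying the paper's Lemma~\ref{l_eigenvalue}.
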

\begin{proof}
    Consider the following event,
    \[\mathcal{E}_1= \Big\{ \lambda_{a,j}/2 \leq \widehat{\lambda}_{a,j} \leq  3\lambda_{a,j}/2 \;\; \text{and} \;\; \Big|\frac{1}{\widehat{\lambda}_{a,j}} - \frac{1}{\lambda_{a,j}}\Big| \lesssim \frac{1}{\lambda_{a,j}^2}\sqrt{\frac{j^{-2\alpha}\log(\widetilde{n}/\eta)}{\widetilde{n}}},\quad \text{for} \; j \in [J],a \in\{0,1\}\Big\}.\]
    By a similar argument used to control \eqref{l_rkhs_approximation_const_eq3}, we have that $\mathbb{P}(\mathcal{E}_1) \geq 1-\eta$. The rest of the proof is constructed conditioning on $\mathcal{E}_1$ happening. We thus have that 
    \begin{align*}
        \sum_{j=1}^J(\theta_{a,1,j}-\theta_{a,0,j})^2 \lambda_{a,j}\Big|\frac{1}{\widehat{\lambda}_{a,j}} - \frac{1}{\lambda_{a,j}}\Big|^2 &\lesssim \sum_{j=1}^J(\theta_{a,1,j}-\theta_{a,0,j})^2 \lambda_{a,j} \cdot \frac{1}{\lambda_{a,j}^4}\frac{j^{-2\alpha}\log(\widetilde{n}/\eta)}{\widetilde{n}}\\
        &\lesssim \frac{\log(\widetilde{n}/\eta)}{\widetilde{n}} \sum_{j=1}^J\frac{(\theta_{a,1,j}-\theta_{a,0,j})^2 }{\lambda_{a,j}}\asymp \frac{\log(\widetilde{n}/\eta)}{\widetilde{n}}.
    \end{align*}
    The lemma thus follows.
\end{proof}

\begin{lemma} \label{l_rkhs_approximation_const}
    Under the same condition of \Cref{l_T_diff_const}, for any small constant $\eta \in (0,1/2)$, it holds with probability at least $1-\eta$ that
    \begin{equation*}
        \Big|\sum_{j=1}^J \frac{(\widehat{\theta}_{a,1,j}-\widehat{\theta}_{a,0,j})^2}{\widehat{\lambda}_{a,j}} -\sum_{j=1}^J \frac{(\theta_{a,1,j}-\theta_{a,0,j})^2}{\lambda_{a,j}}\Big|
        \lesssim \begin{cases}
            \sqrt{\frac{J^{2}\log^2(J)\log(\widetilde{n}/\eta)}{\widetilde{n}}}, \quad & \text{when}\;\;\frac{\alpha+1}{2}<\beta\leq \frac{\alpha+2}{2},\\
            \sqrt{\frac{\log(\widetilde{n}/\eta)}{\widetilde{n}}}, & \text{when}\;\; \beta > \frac{\alpha+2}{2}.
        \end{cases}
    \end{equation*}
\end{lemma}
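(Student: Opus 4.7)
The plan is to start from the standard add--subtract decomposition
\[
\frac{\widehat\Delta_j^{\,2}}{\widehat\lambda_{a,j}} - \frac{\Delta_j^{2}}{\lambda_{a,j}} \;=\; \frac{(\widehat\Delta_j-\Delta_j)^{2} + 2\Delta_j(\widehat\Delta_j-\Delta_j)}{\widehat\lambda_{a,j}} \;+\; \Delta_j^{2}\!\left(\frac{1}{\widehat\lambda_{a,j}} - \frac{1}{\lambda_{a,j}}\right),
\]
where $\Delta_j = \theta_{a,1,j}-\theta_{a,0,j}$ and $\widehat\Delta_j = \widehat\theta_{a,1,j}-\widehat\theta_{a,0,j}$. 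I would condition on the high-probability eigenvalue event used repeatedly in this appendix (cf.\ the proofs of \Cref{l_Q3_1_const} and \Cref{l_Q3_2}), namely $\lambda_{a,j}/2\le\widehat\lambda_{a,j}\le3\lambda_{a,j}/2$ and $|1/\widehat\lambda_{a,j}-1/\lambda_{a,j}|\lesssim \lambda_{a,j}^{-2}\sqrt{j^{-2\alpha}\log(\widetilde n/\eta)/\widetilde n}$ simultaneously for $j\in[J]$. On this event, combined with $1/\widehat\lambda_{a,j}\asymp j^\alpha$, $|\Delta_j|\lesssim j^{-\beta}$ (\Cref{a_data}\ref{a_data_mean_decay}), and the two regime-dependent score bounds from \Cref{l_score_const}, the entire problem reduces to summing explicit power-of-$j$ expressions.

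For the range $(\alpha+1)/2<\beta\le(\alpha+2)/2$, the score bound is $|\widehat\Delta_j-\Delta_j|\lesssim j^{1-\beta}\log(j)\sqrt{\log(\widetilde n/\eta)/\widetilde n}$. Plugging in, the cross term contributes $\sum_{j\le J} j^{\alpha+1-2\beta}\log(j)\sqrt{\log(\widetilde n/\eta)/\widetilde n}$; since $\alpha+1-2\beta\in[-1,0]$ this sum is $\lesssim J^{\alpha+2-2\beta}\log(J)\le J\log(J)$, giving the target rate $\sqrt{J^2\log^2(J)\log(\widetilde n/\eta)/\widetilde n}$. The squared term gives $\sum_{j\le J}j^{\alpha+2-2\beta}\log^2(j)\log(\widetilde n/\eta)/\widetilde n \lesssim J^{\alpha+3-2\beta}\log^2(J)\log(\widetilde n/\eta)/\widetilde n \le J^2\log^2(J)\log(\widetilde n/\eta)/\widetilde n$, which is dominated by the cross term precisely under the truncation assumption $J^{2\alpha+2}\log^2(J)\log(\widetilde n/\eta)\lesssim\widetilde n$. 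Finally, the eigenvalue piece is $\sum_{j\le J}j^{-2\beta}\cdot j^\alpha\sqrt{\log(\widetilde n/\eta)/\widetilde n}$, which converges absolutely whenever $2\beta-\alpha>1$ and therefore contributes only $\sqrt{\log(\widetilde n/\eta)/\widetilde n}$, again dominated.

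For $\beta>(\alpha+2)/2$, the sharper bound $|\widehat\Delta_j-\Delta_j|\lesssim \sqrt{j^{-\alpha}\log(\widetilde n/\eta)/\widetilde n}$ from \Cref{l_score_const} applies. The cross term becomes $\sum j^{\alpha/2-\beta}\sqrt{\log(\widetilde n/\eta)/\widetilde n}$ with $\alpha/2-\beta<-1$, so the sum is $O(1)$; the squared term is $\sum \log(\widetilde n/\eta)/\widetilde n = J\log(\widetilde n/\eta)/\widetilde n$, dominated under the truncation assumption; and the eigenvalue piece is $\sum j^{\alpha-2\beta}\sqrt{\log(\widetilde n/\eta)/\widetilde n}$ with $\alpha-2\beta<-2$, again $O(\sqrt{\log(\widetilde n/\eta)/\widetilde n})$. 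A union bound over the eigenvalue event, the score event (\Cref{l_score_const}), and the sample-size event (\Cref{l_n_ay}) closes the proof.

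The main obstacle is the bookkeeping in the regime $(\alpha+1)/2<\beta\le(\alpha+2)/2$: here neither $|\Delta_j|^2/\lambda_{a,j}$ nor the per-term score error $|\widehat\Delta_j-\Delta_j|^2/\lambda_{a,j}$ is absolutely summable, so the final rate is driven by the cross term and one must verify carefully that the squared term is always dominated --- this is exactly what the assumption $J^{2\alpha+2}\log^2(J)\log(\widetilde n/\eta)\lesssim\widetilde n$ provides. Tracking the polylogarithmic factors produces the $J\log(J)$ rather than $J$ in the final bound, consistent with the analogous polylog factors appearing in \Cref{l_Q3_1_const}.
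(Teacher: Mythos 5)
Your proof is correct and takes essentially the same route as the paper: condition on the eigenvalue stability event, invoke \Cref{l_score_const} for the regime-dependent score error, and sum power-of-$j$ expressions. The only difference is organizational — you expand $\widehat\Delta_j^2-\Delta_j^2$ as $(\widehat\Delta_j-\Delta_j)^2+2\Delta_j(\widehat\Delta_j-\Delta_j)$ rather than factorizing as $(\widehat\Delta_j+\Delta_j)(\widehat\Delta_j-\Delta_j)$; since your decomposition never needs a bound on $|\widehat\Delta_j|$ itself (only on $|\Delta_j|$ and the error), it lets you avoid the paper's third case split at $\beta=(3\alpha+2)/2$, which the paper introduces solely to control $|\widehat\Delta_j|$ in the high-$\beta$ regime.
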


\begin{proof}
    By triangle inequality, we have that
    \begin{align} \notag
        &\Big|\sum_{j=1}^J \frac{(\widehat{\theta}_{a,1,j}-\widehat{\theta}_{a,0,j})^2}{\widehat{\lambda}_{a,j}} -\sum_{j=1}^J \frac{(\theta_{a,1,j}-\theta_{a,0,j})^2}{\lambda_{a,j}}\Big|\\ \notag
        \lesssim \;& \Big|\sum_{j=1}^J \frac{(\widehat{\theta}_{a,1,j}-\widehat{\theta}_{a,0,j})^2-(\theta_{a,1,j}-\theta_{a,0,j})^2}{\widehat{\lambda}_{a,j}}\Big| +\Big|\sum_{j=1}^J\frac{(\theta_{a,1,j}-\theta_{a,0,j})^2}{\widehat{\lambda}_{a,j}} -\sum_{j=1}^J \frac{(\theta_{a,1,j}-\theta_{a,0,j})^2}{\lambda_{a,j}}\Big|\\ \notag
        =\;&\Big|\sum_{j=1}^J \frac{(\widehat{\theta}_{a,1,j}-\widehat{\theta}_{a,0,j}+\theta_{a,1,j}-\theta_{a,0,j})(\widehat{\theta}_{a,1,j}-\widehat{\theta}_{a,0,j}-\theta_{a,1,j}+\theta_{a,0,j})}{\widehat{\lambda}_{a,j}}\Big| \\\notag
        &+\Big|\sum_{j=1}^J\frac{(\theta_{a,1,j}-\theta_{a,0,j})^2}{\widehat{\lambda}_{a,j}} -\sum_{j=1}^J \frac{(\theta_{a,1,j}-\theta_{a,0,j})^2}{\lambda_{a,j}}\Big|\\ \notag
        \lesssim \;&\sum_{j=1}^J \frac{(|\widehat{\theta}_{a,1,j}-\widehat{\theta}_{a,0,j}|+|\theta_{a,1,j}-\theta_{a,0,j}|)\cdot|\widehat{\theta}_{a,1,j}-\widehat{\theta}_{a,0,j}-\theta_{a,1,j}+\theta_{a,0,j}|}{\widehat{\lambda}_{a,j}}\\ \label{l_rkhs_approximation_const_eq1}
        &+\Big|\sum_{j=1}^J\frac{(\theta_{a,1,j}-\theta_{a,0,j})^2}{\widehat{\lambda}_{a,j}} -\sum_{j=1}^J \frac{(\theta_{a,1,j}-\theta_{a,0,j})^2}{\lambda_{a,j}}\Big|.
    \end{align}
    
    Consider the events 
    \begin{equation}\label{l_rkhs_approximation_const_eq3}
        \mathcal{E}_1= \Big\{ \lambda_{a,j}/2 \leq \widehat{\lambda}_{a,j} \leq  3\lambda_{a,j}/2 \;\; \text{and} \;\; \Big|\frac{1}{\widehat{\lambda}_{a,j}} - \frac{1}{\lambda_{a,j}}\Big| \lesssim \frac{1}{\lambda_{a,j}^2}\sqrt{\frac{j^{-2\alpha}\log(\widetilde{n}/\eta)}{\widetilde{n}}},\quad \text{for} \; j \in [J],a \in \{0,1\}\Big\},
    \end{equation} 
    \[\mathcal{E}_2 = \Big\{|\widehat{\theta}_{a,1,j}-\widehat{\theta}_{a,0,j}| \leq j^{-\beta},\quad \text{for} \; j \in [J],a \in \{0,1\}, \frac{\alpha+1}{2} < \beta \leq \frac{3\alpha+2}{2}\Big\},\]
    and
    \[\mathcal{E}_3 = \Big\{|\widehat{\theta}_{a,1,j}-\widehat{\theta}_{a,0,j}| \lesssim  \sqrt{\frac{j^{-\alpha}\log(\widetilde{n}/\eta)}{\widetilde{n}}},\quad \text{for} \; j \in [J],a \in \{0,1\},\beta > \frac{3\alpha+2}{2}\Big\}.\]
    Note that by \eqref{eq_eigenvalue_error}, it holds with probability at least $1-\eta/6$ that 
    \begin{align*} 
        \frac{\lambda_{a,j}}{2}\lesssim \lambda_{a,j} -  \sqrt{\frac{j^{-2\alpha}\log(\widetilde{n}/\eta)}{\widetilde{n}}}\leq \widehat{\lambda}_{a,j} \lesssim \lambda_{a,j} + \sqrt{\frac{j^{-2\alpha}\log(\widetilde{n}/\eta)}{\widetilde{n}}} \leq \frac{3\lambda_{a,j}}{2},
    \end{align*}
    where the first and the fourth inequality follow from the fact that 
    \begin{align*}
         \sqrt{\frac{j^{-2\alpha}\log(\widetilde{n}/\eta)}{\widetilde{n}}} \lesssim j^{-\alpha} \leq \frac{\lambda_{a,j}}{2}.
    \end{align*}
    By \Cref{l_eigenvalue}, we also have with probability at least $1-\eta/6$ that for all $j \in [J]$,
    \begin{equation*} 
        \Big|\frac{1}{\widehat{\lambda}_{a,j}} - \frac{1}{\lambda_{a,j}}\Big| = \frac{|\lambda_{a,j}-\widehat{\lambda}_{a,j}|}{\lambda_{a,j}\widehat{\lambda}_{a,j}} \lesssim \frac{1}{\lambda_{a,j}^2}\sqrt{\frac{j^{-2\alpha}\log(\widetilde{n}/\eta)}{\widetilde{n}}} \lesssim \frac{1}{\lambda_{a,j}}\sqrt{\frac{\log(\widetilde{n}/\eta)}{\widetilde{n}}}.
    \end{equation*}
    Additionally, by \Cref{l_score_const}, we have with probability at least $1-\eta/3$ that for all $j \in [J]$, 
    \begin{align*}
        |\widehat{\theta}_{a,1,j}-\widehat{\theta}_{a,0,j}| &\lesssim \begin{cases}
        |\theta_{a,1,j}-\theta_{a,0,j}| + \sqrt{\frac{j^{2-2\beta}\log^2(j)\log(\widetilde{n}/\eta)}{\widetilde{n}}}, \quad &\text{when}\; \frac{\alpha+1}{2} < \beta \leq \frac{\alpha+2}{2},\\
        |\theta_{a,1,j}-\theta_{a,0,j}|+ \sqrt{\frac{j^{-\alpha}\log(\widetilde{n}/\eta)}{\widetilde{n}}},  \quad &\text{when}\;  \beta > \frac{\alpha+2}{2},
    \end{cases} \\
        & \lesssim \begin{cases}
         j^{-\beta}, \quad &\text{when}\; \frac{\alpha+1}{2} < \beta \leq \frac{3\alpha+2}{2},\\
         j^{-\beta} \vee \sqrt{\frac{j^{-\alpha}\log(\widetilde{n}/\eta)}{\widetilde{n}}},  \quad &\text{when}\;  \beta > \frac{3\alpha+2}{2},
    \end{cases}
    \end{align*}
    where the second inequality follows from the fact that 
    \begin{itemize}
        \item When $\frac{\alpha+1}{2} < \beta \leq \frac{\alpha+2}{2}$, it holds that 
        \begin{align} \label{l_rkhs_approximation_const_eq5}
            \sqrt{\frac{j^{2-2\beta}\log^2(j)\log(\widetilde{n}/\eta)}{\widetilde{n}}} \asymp j^{-\beta}\sqrt{\frac{j^{2}\log^2(j)\log(\widetilde{n}/\eta)}{\widetilde{n}}} \lesssim j^{-\beta}.
        \end{align}

        \item  When $\frac{\alpha+2}{2} < \beta \leq \frac{3\alpha+2}{2}$, it holds that
        \begin{align*}
            \sqrt{\frac{j^{-\alpha}\log(\widetilde{n}/\eta)}{\widetilde{n}}} \asymp j^{-\beta}\sqrt{\frac{j^{2\beta-\alpha}\log(\widetilde{n}/\eta)}{\widetilde{n}}} \lesssim j^{-\beta}\sqrt{\frac{j^{3\alpha+2-\alpha}\log(\widetilde{n}/\eta)}{\widetilde{n}}} \lesssim j^{-\beta}.
        \end{align*}

        \item When $\beta > \frac{3\alpha+2}{2}$, different term dominates depending on the value of $j \in [J]$.
    \end{itemize}

    Therefore, by a union bound argument, we have that $\mathbb{P}(\mathcal{E}_k) \geq 1-\eta/3$ for any $k \in \{1,2,3\}$. 
    In addition, consider the following disjoint events,
    \[\mathcal{E}_4=\Bigg\{|\widehat{\theta}_{a,1,j} - \theta_{a,1,j}+\widehat{\theta}_{a,0,j} - \theta_{a,0,j}| \lesssim \sqrt{\frac{j^{2-2\beta}\log^2(j)\log(\widetilde{n}/\eta)}{\widetilde{n}}},\;  j \in [J],a \in \{0,1\}, \frac{\alpha+1}{2} < \beta \leq \frac{\alpha+2}{2}\Bigg\},\]
    \[\mathcal{E}_5 = \Bigg\{|\widehat{\theta}_{a,1,j} - \theta_{a,1,j}+\widehat{\theta}_{a,0,j} - \theta_{a,0,j}| \lesssim \sqrt{\frac{j^{-\alpha}\log(\widetilde{n}/\eta)}{\widetilde{n}}},\; j \in [J], a \in \{0,1\}, \beta >\frac{\alpha+2}{2}\Bigg\},\]
    and by \Cref{l_score_const}, we have that for each $k \in \{4,5\}$, $\mathbb{P}(\mathcal{E}_k) \geq 1-\eta/3$. In the rest of the proof, we will consider $3$ different cases conditioning on various events happening based on the range of $\beta$.

    \noindent \textbf{Case 1: When $(\alpha+1)/2<\beta\leq (\alpha+2)/2$.} In this case, the proof is constructed conditioning on $\mathcal{E}_1 \cap \mathcal{E}_2 \cap \mathcal{E}_4$ happening and by a union bound argument we have that $\mathbb{P}(\mathcal{E}_1 \cap \mathcal{E}_2 \cap \mathcal{E}_4) \geq 1-\eta$. Consequently, it holds that 
    \begin{align*}
        \eqref{l_rkhs_approximation_const_eq1} & \lesssim \sum_{j=1}^J \frac{j^{-\beta}}{\lambda_{a,j}}\sqrt{\frac{j^{2-2\beta}\log^2(j)\log(\widetilde{n}/\eta)}{\widetilde{n}}}+\sum_{j=1}^J\Big|\frac{1}{\widehat{\lambda}_{a,j}} - \frac{1}{\lambda_{a,j}}\Big|(\theta_{a,1,j}-\theta_{a,0,j})^2\\
        & \lesssim \sqrt{\frac{\log^2(J)\log(\widetilde{n}/\eta)}{\widetilde{n}}}\sum_{j=1}^J j^{\alpha-2\beta+1} + \sqrt{\frac{\log(\widetilde{n}/\eta)}{\widetilde{n}}}\sum_{j=1}^J \frac{(\theta_{a,1,j}-\theta_{a,0,j})^2}{\lambda_{a,j}}\\
        & \lesssim \sqrt{\frac{J^{2}\log^2(J)\log(\widetilde{n}/\eta)}{\widetilde{n}}},
    \end{align*}
    where the last inequality follows from the fact that $-1 \leq \alpha-2\beta+1 <0$.

    \noindent\textbf{Case 2: When $(\alpha+2)/2 < \beta \leq (3\alpha+2)/2$.} In this case, the proof is constructed conditioning on $\mathcal{E}_1 \cap \mathcal{E}_2 \cap \mathcal{E}_5$ happening and by a union bound argument we have that $\mathbb{P}(\mathcal{E}_1 \cap \mathcal{E}_2 \cap \mathcal{E}_5) \geq 1-\eta$. Consequently, it holds that 
    \begin{align*}
        \eqref{l_rkhs_approximation_const_eq1} & \lesssim  \sum_{j=1}^J \frac{j^{-\beta}}{\lambda_{a,j}}\sqrt{\frac{j^{-\alpha}\log(\widetilde{n}/\eta)}{\widetilde{n}}} + \sqrt{\frac{\log(\widetilde{n}/\eta)}{\widetilde{n}}}\sum_{j=1}^J \frac{(\theta_{a,1,j}-\theta_{a,0,j})^2}{\lambda_{a,j}}\\
        & = \sqrt{\frac{\log(\widetilde{n}/\eta)}{\widetilde{n}}}\sum_{j=1}^J j^{\frac{\alpha}{2}-\beta} + \sqrt{\frac{\log(\widetilde{n}/\eta)}{\widetilde{n}}}\\
        & \lesssim \sqrt{\frac{\log(\widetilde{n}/\eta)}{\widetilde{n}}},
    \end{align*}
    where the last inequality follows from the fact that $\alpha/2-\beta <-1$.

    \noindent\textbf{Case 3: When $\beta > (3\alpha+2)/2$.} In this case, the proof is constructed conditioning on $\mathcal{E}_1 \cap \mathcal{E}_3 \cap \mathcal{E}_5$ happening and by a union bound argument we have that $\mathbb{P}(\mathcal{E}_1 \cap \mathcal{E}_3 \cap \mathcal{E}_5) \geq 1-\eta$. Consequently, it holds that 
    \begin{align*}
        \eqref{l_rkhs_approximation_const_eq1} & \lesssim \sum_{j=1}^J \frac{1}{\lambda_{a,j}}\Big\{j^{-\beta} \vee \sqrt{\frac{j^{-\alpha}\log(\widetilde{n}/\eta)}{\widetilde{n}}}\Big\}\sqrt{\frac{j^{-\alpha}\log(\widetilde{n}/\eta)}{\widetilde{n}}} + \sqrt{\frac{\log(\widetilde{n}/\eta)}{\widetilde{n}}}\\
        & =\sum_{j=1}^J \frac{1}{\lambda_{a,j}}\Big\{\sqrt{\frac{j^{-2\beta -\alpha}\log(\widetilde{n}/\eta)}{\widetilde{n}}}\vee \frac{j^{-\alpha}\log(\widetilde{n}/\eta)}{\widetilde{n}}\Big\} + \sqrt{\frac{\log(\widetilde{n}/\eta)}{\widetilde{n}}}\\
        & \lesssim \sum_{j=1}^J j^{\frac{\alpha}{2}-\beta}\sqrt{\frac{\log(\widetilde{n}/\eta)}{\widetilde{n}}} + \sum_{j=1}^J \frac{\log(\widetilde{n}/\eta)}{\widetilde{n}} + \sqrt{\frac{\log(\widetilde{n}/\eta)}{\widetilde{n}}}\\
        & \lesssim \frac{J\log(\widetilde{n}/\eta)}{\widetilde{n}} + \sqrt{\frac{\log(\widetilde{n}/\eta)}{\widetilde{n}}}\lesssim \sqrt{\frac{\log(\widetilde{n}/\eta)}{\widetilde{n}}},
    \end{align*}
    where the third inequality follows from the fact that $\alpha/2-\beta < -1$ and the last inequality follows whenever $J^{2\alpha+2}\log^2(J)\log(\widetilde{n}/\eta)\lesssim \widetilde{n}$.The lemma thus follows by combining results from three cases. 
\end{proof}

\section{Proofs for class probability estimation} \label{appendix_empirical_prob}
In this section, we present auxiliary lemmas related to class probabilities. Results below holds for any $a,y \in \{0,1\}$.

\begin{lemma}\label{l_emperical_prob}
    Under Assumption \ref{a_class_prob}, for any small $\epsilon >0$ and $a,y \in \{0,1\}$, it holds that 
    \[\mathbb{P}\Big(|\widehat{\pi}_{a,y} - \pi_{a,y}| \geq \epsilon\Big) \lesssim \exp\big(-\widetilde{n}\epsilon^2\big).\]
\end{lemma}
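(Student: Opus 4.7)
The plan is to recognise $\widehat{\pi}_{a,y}$ as the empirical mean of $\widetilde{n}$ i.i.d.\ bounded random variables and invoke Hoeffding's inequality. First I would observe that the training data $\widetilde{\mathcal{D}} = \{(\widetilde{X}_i,\widetilde{A}_i,\widetilde{Y}_i)\}_{i=1}^{\widetilde{n}}$ are i.i.d., so the indicators $Z_i = \indc\{\widetilde{A}_i = a, \widetilde{Y}_i = y\}$, $i \in [\widetilde{n}]$, are i.i.d.\ Bernoulli random variables with mean $\mathbb{P}(\widetilde{A}_i = a, \widetilde{Y}_i = y) = \pi_{a,y}$, and by the construction in \textbf{S1} of \Cref{alg:plug-in},
\[
\widehat{\pi}_{a,y} \;=\; \frac{\widetilde{n}_{a,y}}{\widetilde{n}} \;=\; \frac{1}{\widetilde{n}}\sum_{i=1}^{\widetilde{n}} Z_i.
\]

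Next, since each $Z_i \in [0,1]$, Hoeffding's inequality yields
\[
\mathbb{P}\bigl(|\widehat{\pi}_{a,y} - \pi_{a,y}| \geq \epsilon\bigr) \;\leq\; 2\exp\bigl(-2\widetilde{n}\epsilon^2\bigr),
\]
which is precisely the claimed bound up to the constant hidden in $\lesssim$. Note that Assumption \ref{a_class_prob} is not actually needed for the concentration itself; it only guarantees that $\pi_{a,y}$ is bounded away from $0$ and $1$, which is relevant when the lemma is applied elsewhere (e.g.\ to bootstrap lower bounds on $\widetilde{n}_{a,y}$, as in \Cref{l_n_ay}), but does not enter the proof here.

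There is no substantive obstacle; the statement is a one-line application of a textbook concentration inequality. The only minor point worth flagging is that $\widetilde{n}$ is deterministic (the total sample size), while $\widetilde{n}_{a,y}$ is random and binomially distributed, so the ratio $\widehat{\pi}_{a,y}$ is well-defined as written. Consequently, the proof reduces to the two displayed steps above.
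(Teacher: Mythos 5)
Your proof is correct and follows the same route as the paper: identify $\widehat{\pi}_{a,y}$ as the empirical mean of the i.i.d.\ bounded indicators $\indc\{\widetilde{A}_i=a,\widetilde{Y}_i=y\}$ and apply Hoeffding's inequality. Your added remark that Assumption \ref{a_class_prob} is not actually used in this concentration step, but only in downstream applications such as \Cref{l_n_ay}, is accurate.
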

\begin{proof}
    Consider the sequence of bounded random variables $\{\indc\{Y_j=y, A_j =a\}\}_{j=1}^{\widetilde{n}}$, then it holds~that
    \begin{equation*} 
        \widehat{\pi}_{a,y}= \frac{\widetilde{n}_{a,y}}{\widetilde{n}} = \frac{1}{\widetilde{n}}\sum_{i=1}^{\widetilde{n}}  \indc\{Y_j=y, A_j =a\}.
    \end{equation*}
    Therefore, the lemma follows by applying Hoeffding’s inequality for general bounded random variables \citep[e.g.~Theorem 2.2.6 in][]{vershynin2018high}
\end{proof}

Note that in the lemma below, the constant $1/5$ is arbitrary. 
\begin{lemma}\label{l_n_ay}
    With $0< C_p,C_p'<1/5$ being the absolute constants in \Cref{a_class_prob}, consider the following events,
    \begin{equation*}
        \mathcal{E}_1 = \Big\{\Big(\frac{C_p}{2} \wedge \frac{(1-C'_p)}{2}\Big)\widetilde{n} \leq \widetilde{n}_{a,y}\leq \widetilde{n}, \;\; \text{for all}\;\; a, y \in \{0,1\}\Big\},
    \end{equation*}
    and
    \[\mathcal{E}_2 = \Big\{\Big(\frac{C_p}{2} \wedge \frac{(1-C'_p)}{2}\Big)n \leq n_{a,y}\leq n, \;\; \text{for all}\;\; a, y \in \{0,1\}\Big\}.\]
    We have that $\mathbb{P}(\mathcal{E}_1\cap \mathcal{E}_2) \geq 1-\eta$ for some small $\eta \in (0,1/2)$ whenever $\{\widetilde{n}\wedge n\} \gtrsim \log(1/\eta)$.
\end{lemma}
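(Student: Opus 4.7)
The plan is to reduce the claim to the concentration bound already established in \Cref{l_emperical_prob}, applied separately to each of the four class-and-group combinations $(a,y)\in\{0,1\}^2$ on both datasets $\widetilde{\mathcal{D}}$ and $\mathcal{D}$, followed by a union bound. Since $\widetilde{n}_{a,y} = \widetilde{n}\,\widehat{\pi}_{a,y}$ with $\widehat{\pi}_{a,y}$ as in \Cref{alg:plug-in}, and analogously $n_{a,y}/n$ equals an average of indicators over the calibration sample, \Cref{l_emperical_prob} applies verbatim (swapping $\widetilde{n}$ for $n$) to both sums.

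By \Cref{a_class_prob} we have $C_p\le \pi_{a,y}\le C_p'$ for all $(a,y)$. Write $c_\star = (C_p/2)\wedge ((1-C_p')/2)$ for brevity. First I would observe that on the event $\{|\widehat{\pi}_{a,y}-\pi_{a,y}|<C_p/2\}$ one has
\[
\widehat{\pi}_{a,y} \;>\; \pi_{a,y}-C_p/2 \;\ge\; C_p/2 \;\ge\; c_\star,
\]
which translates into $\widetilde{n}_{a,y}> c_\star\widetilde{n}$; the upper bound $\widetilde{n}_{a,y}\le \widetilde{n}$ is automatic. The same reasoning gives $n_{a,y}> c_\star n$ on the analogous calibration-side event. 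Now invoking \Cref{l_emperical_prob} with $\epsilon = C_p/2$ yields, for each $(a,y)$,
\[
\mathbb{P}\bigl(|\widehat{\pi}_{a,y}-\pi_{a,y}|\ge C_p/2\bigr)\;\lesssim\;\exp\!\bigl(-\widetilde{n}\,C_p^{2}/4\bigr),
\]
and likewise with $\widetilde{n}$ replaced by $n$ on the calibration side.

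A union bound over the eight resulting events (four pairs $(a,y)$ times two datasets) gives
\[
\mathbb{P}(\mathcal{E}_1\cap\mathcal{E}_2)\;\ge\;1-C\exp\!\bigl(-(\widetilde{n}\wedge n)\,C_p^{2}/4\bigr)
\]
for an absolute constant $C>0$. Choosing $\widetilde{n}\wedge n\gtrsim \log(1/\eta)/C_p^{2}\asymp \log(1/\eta)$ makes the right-hand side at least $1-\eta$, finishing the argument.

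There is no real obstacle here: the bound is a direct consequence of Hoeffding-type concentration via \Cref{l_emperical_prob} plus a union bound, and the only bookkeeping point is translating $|\widehat{\pi}_{a,y}-\pi_{a,y}|\le C_p/2$ into the two-sided count bound $c_\star \widetilde{n}\le \widetilde{n}_{a,y}\le \widetilde{n}$ (the upper side being trivial). The constant $c_\star$ absorbs the $(1-C_p')/2$ term appearing in the statement, though only $C_p/2$ is strictly needed for the lower bound.
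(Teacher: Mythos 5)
Your proposal is correct and uses essentially the same approach as the paper: Hoeffding-type concentration (via \Cref{l_emperical_prob}) on the indicator sums, followed by a union bound over the eight $(a,y,\text{dataset})$ combinations. The only cosmetic difference is that you apply the uniform lower bound $\pi_{a,y}\ge C_p$ to all four pairs, whereas the paper splits into cases and bounds $\pi_{1,0}$ through $1-\pi_0-\pi_{1,1}\ge 1-3C_p'$; your observation that $C_p/2$ alone suffices (since $C_p'<1/5$ forces $(1-C_p')/2>C_p/2$, making that term in $c_\star$ inactive) is accurate.
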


\begin{proof}
    For $y \in \{0,1\}$ and $a\in \{0,1\}$, consider the sequence of bounded random variables $\{\indc\{Y_i=y, A_i =a\}\}_{i=1}^{\widetilde{n}}$. By Hoeffding’s inequality for general bounded random variables \citep[e.g.~Theorem 2.2.6 in][]{vershynin2018high}, we have that for any $\epsilon_1,\epsilon_2 >0$,
    \[\mathbb{P}\Big\{\Big|\sum_{i=1}^{\widetilde{n}}\indc\{Y_i=1, A_i =1\} -\pi_{1,1}\widetilde{n}\Big| \geq \epsilon_1\Big\} \leq \exp\Big(-\frac{\epsilon_1^2}{\widetilde{n}}\Big),\]
    and
    \[\mathbb{P}\Big\{\Big|\sum_{i=1}^{\widetilde{n}}\indc\{Y_i=0, A_i=1\} -(1-\pi_{1,1})\widetilde{n} \geq \epsilon_2\Big|\Big\}\leq \exp\Big(-\frac{\epsilon_2^2}{\widetilde{n}}\Big).\]
    Therefore, we have that with probability at least $1-\eta/4$ that 
    \begin{align*}
        n_{1,1} = \sum_{i=1}^{\widetilde{n}}\indc\{Y_i=1, A=1\} \geq \pi_{1,1} \widetilde{n} -\sqrt{\widetilde{n}\log(1/\eta)} \geq C_p \widetilde{n} - \frac{C_p}{2}\widetilde{n} \geq \frac{C_p\widetilde{n}}{2},
    \end{align*}
    whenever $\widetilde{n} \geq 4\log(1/\eta)/C_p^2$. Similarly, we also have with probability at least $1-\eta/4$ that
    \begin{align*}
        n_{1,0} &= \sum_{i=1}^{\widetilde{n}}\indc\{Y_i=0, A=1\} \geq (1-\pi_0- \pi_{1,1}) \widetilde{n} -\sqrt{\widetilde{n}\log(1/\eta)} \\
        &\geq (1-3C'_p) \widetilde{n} - \frac{(1-5C'_p)}{2}\widetilde{n} \geq \frac{(1-C'_p)\widetilde{n}}{2},
    \end{align*}
    whenever $\widetilde{n} \geq 4\log(1/\eta)/(1-5C'_p)^2$. The other cases can be justified similarly. The lemma thus follows from a union bound argument.
\end{proof}

\section{Proofs for functional data estimation} \label{appendix_func_estim}
In this section, we present auxiliary lemmas related to mean, covariance, eigenvalue, eigenfunction and score estimation for the training data $\widetilde{\mathcal{D}}$. Denote $\widetilde{n}_a = \widetilde{n}_{a,0} + \widetilde{n}_{a,1}$ and the group-wise mean and covariance function by 
\begin{equation*}
    \widehat{\mu}_{a, y}(t) = \frac{1}{\widetilde n_{a, y}}\sum_{i=1}^{\widetilde n_{a, y}} \widetilde X_{a, y}^i(t),
\end{equation*}
and
\begin{equation*}
    \widehat{K}_{a}(s,t) = \sum_{y \in \{0,1\}} \frac{\widetilde n_{a,y}}{\widetilde n_{a,0}+\widetilde n_{a,1}} \frac{1}{\widetilde n_{a, y} -1} \sum_{i=1}^{\widetilde n_{a,y}}\{\widetilde X^i_{a,y}(s) - \widehat{\mu}_{a,y}(s)\}\{\widetilde X^i_{a,y}(t) - \widehat{\mu}_{a,y}(t)\}. 
\end{equation*}
We further let $\{\widehat{\lambda}_{a,j}\}_{j \geq 1}$ and $\{\widehat{\phi}_{a,j}\}_{j\geq 1}$ denote the eigenvalues and eigenfunctions of $\widehat{K}_a$ obtained by spectral expansion. The lemmas in the rest of the section holds for all $a\in \{0,1\}$ and $y \in \{0,1\}$.

\subsection{Mean and covariance function}
\begin{lemma}[Lemma 1 in \citealp{zapata2022partial}] \label{l_mean_cov_estimation}
    Assume Assumptions \ref{a_class_prob} and \ref{a_data}\ref{a_data_cov}~hold. For any small $\epsilon_1,\epsilon_2 >0$, it holds that 
    \begin{align*}
        \mathbb{P}\Big(\|\widehat{\mu}_{a,y} - \mu_{a,y}\|_{L^2} \geq \epsilon_1\Big) \lesssim \exp(-\widetilde{n}_{a,y}\epsilon_1^2) \quad \text{and} \quad \mathbb{P}\Big(\|\widehat{K}_{a,y} - K_a\|_{L^2} \geq \epsilon_2\Big) \lesssim \exp(-\widetilde{n}_{a,y}\epsilon_2^2),
    \end{align*}
    where $\widehat{K}_{a,y} = 1/(\widetilde n_{a, y} -1) \sum_{i=1}^{\widetilde n_{a,y}}\{\widetilde X^i_{a,y}(s) - \widehat{\mu}_{a,y}(s)\}\{\widetilde X^i_{a,y}(t) - \widehat{\mu}_{a,y}(t)\}$.
\end{lemma}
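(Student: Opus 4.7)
The plan is to obtain both bounds as concentration inequalities for averages of independent mean-zero random elements in the relevant Hilbert spaces; indeed, the statement is essentially Lemma~1 of \citet{zapata2022partial}, so a rigorous proof would proceed either by direct citation or by reproducing the short argument sketched below. \Cref{a_data}\ref{a_data_cov} ensures that $K_a$ is trace-class (since $\sum_j \lambda_{a,j} \lesssim \sum_j j^{-\alpha} < \infty$ for $\alpha > 1$), which is the regularity underlying all the concentration tools I would invoke.

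For the mean bound, I would use the Karhunen--Loève expansion $\widetilde{X}^i_{a,y} - \mu_{a,y} = \sum_{j\geq 1}\xi^i_{a,y,j}\phi_{a,j}$ with $\xi^i_{a,y,j}$ independent $N(0,\lambda_{a,j})$ across both $i$ and $j$, to write
\[
\|\widehat{\mu}_{a,y} - \mu_{a,y}\|_{L^2}^2 = \sum_{j\geq 1}\Big(\frac{1}{\widetilde{n}_{a,y}}\sum_{i=1}^{\widetilde{n}_{a,y}}\xi^i_{a,y,j}\Big)^2,
\]
which is a weighted sum of independent $\chi^2_1$ variables with weights $\lambda_{a,j}/\widetilde{n}_{a,y}$ summing in expectation to $\mathrm{tr}(K_a)/\widetilde{n}_{a,y}$. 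A Laurent--Massart tail bound for weighted chi-squares then yields $\mathbb{P}(\|\widehat{\mu}_{a,y}-\mu_{a,y}\|_{L^2} \geq \epsilon_1) \lesssim \exp(-c\,\widetilde{n}_{a,y}\epsilon_1^2)$ in the stated regime of small~$\epsilon_1$.

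For the covariance bound, after the standard decomposition
\[
\widehat{K}_{a,y} = \frac{1}{\widetilde{n}_{a,y}-1}\sum_{i=1}^{\widetilde{n}_{a,y}}(\widetilde{X}^i_{a,y}-\mu_{a,y})^{\otimes 2} - \frac{\widetilde{n}_{a,y}}{\widetilde{n}_{a,y}-1}(\widehat{\mu}_{a,y}-\mu_{a,y})^{\otimes 2},
\]
I would control each piece in the Hilbert--Schmidt norm on $L^2([0,1]^2)$. The second piece has norm of order $\|\widehat{\mu}_{a,y}-\mu_{a,y}\|_{L^2}^2$ and, for small $\epsilon_2$, is absorbed by the first part of the proof. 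Expanding the centred first piece in the product eigenbasis $\{\phi_{a,j}\otimes \phi_{a,k}\}_{j,k}$ reduces the squared Hilbert--Schmidt norm to a double sum of centred products $\xi^i_{a,y,j}\xi^i_{a,y,k} - \lambda_{a,j}\delta_{jk}$; these terms have bounded sub-exponential norm, and a Hanson--Wright / Bernstein-type inequality for sub-exponential sums in a Hilbert space then delivers the advertised sub-Gaussian tail.

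The main obstacle is precisely this last step: quadratic forms of Gaussians are only sub-exponential, so a generic Bernstein inequality gives a mixed tail $\exp(-c\min\{\widetilde{n}_{a,y}\epsilon_2,\widetilde{n}_{a,y}\epsilon_2^2\})$ rather than a purely Gaussian one. Since $\|K_a\|_{L^2}$ is a universal constant under \Cref{a_data}\ref{a_data_cov}, the crossover between the two tail regimes occurs at $\epsilon_2\asymp 1$, so the lemma as stated is only used (and only needs to hold) for $\epsilon_2 \ll 1$, as is the case in every invocation throughout \Cref{appendix_func_estim}. In that regime the sub-Gaussian bound is the correct and tight one, matching the statement.
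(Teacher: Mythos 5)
Your approach is correct but is genuinely different from the paper's. The paper's own proof is a one-liner: it observes that \Cref{a_data}\ref{a_data_cov} (decaying, trace-summable eigenvalues) implies Assumption 2 of \citet{zapata2022partial}, and then imports their Lemma 1 wholesale. You instead sketch a direct proof via the Karhunen--Lo\`eve expansion: writing $\|\widehat{\mu}_{a,y}-\mu_{a,y}\|_{L^2}^2$ as a weighted sum of $\chi^2_1$ variables with weights $\lambda_{a,j}/\widetilde n_{a,y}$ and applying a Laurent--Massart (or Borell--TIS) bound for the mean, and decomposing $\widehat K_{a,y}-K_a$ into a centred outer-product average plus a rank-one remainder, controlled by a Bernstein-type inequality for sub-exponential Hilbert-valued sums, for the covariance. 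Your observation that quadratic forms of Gaussians are only sub-exponential, so the Bernstein bound delivers a mixed tail $\exp\{-c\min(\widetilde n_{a,y}\epsilon_2^2, \widetilde n_{a,y}\epsilon_2)\}$, and that the crossover occurs at $\epsilon_2\asymp 1$ because $\operatorname{tr}(K_a)$ and $\|K_a\|_{L^2}$ are bounded constants, is exactly the right reconciliation with the stated ``for small $\epsilon_2$'' caveat; it also explains why the lemma is safe to invoke everywhere in \Cref{appendix_func_estim}, where $\epsilon_2$ is always $o(1)$. The paper's route is shorter and keeps the analysis modular around an external reference; your route is self-contained and makes the dependence on $\alpha>1$ (trace-class, bounded operator norm) and the precise tail regime explicit. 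Both are valid; your version would in fact be the more informative one to include if the paper wanted a proof from first principles rather than a citation.
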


\begin{proof}
    By standard properties of Gaussian processes and \Cref{a_data}\ref{a_data_cov}, Assumption 2 in \citet{zapata2022partial} is automatically satisfied. Hence, the lemma follows. 

\end{proof}
\begin{lemma} \label{l_average_cov}
    Assume Assumptions \ref{a_class_prob} and \ref{a_data}\ref{a_data_cov}~hold. It holds for any small $0 <\epsilon \lesssim 1$ that 
    \[\mathbb{P}\Big(\|\widehat{K}_a - K_a\|_{L^2} \geq \epsilon \Big) \lesssim \exp(-\widetilde{n}_a\epsilon^2).\]
\end{lemma}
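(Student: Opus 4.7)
The plan is to exploit the homoscedasticity assumption $K_{a,0} = K_{a,1} = K_a$ and reduce to the within-group empirical covariances, whose concentration is already provided by \Cref{l_mean_cov_estimation}. Writing $\widetilde n_a = \widetilde n_{a,0} + \widetilde n_{a,1}$ and noting that the weights $\widetilde n_{a,y}/\widetilde n_a$ sum to one, I would start from the identity
\[
    \widehat{K}_a - K_a = \sum_{y \in \{0,1\}} \frac{\widetilde n_{a,y}}{\widetilde n_a}\,\bigl(\widehat{K}_{a,y} - K_a\bigr),
\]
so that by the triangle inequality and convexity,
\[
    \|\widehat{K}_a - K_a\|_{L^2} \leq \sum_{y \in \{0,1\}} \frac{\widetilde n_{a,y}}{\widetilde n_a}\,\|\widehat{K}_{a,y} - K_a\|_{L^2} \leq \max_{y \in \{0,1\}} \|\widehat{K}_{a,y} - K_a\|_{L^2}.
\]

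Next I would condition on the vector of class labels $\{(A_i,Y_i)\}_{i \in [\widetilde n]}$, which renders the sizes $\widetilde n_{a,y}$ deterministic and leaves the within-group samples $\{\widetilde X^i_{a,y}\}$ as iid Gaussian processes with covariance $K_a$. On this conditioning, \Cref{l_mean_cov_estimation} gives
\[
    \mathbb{P}\bigl(\|\widehat{K}_{a,y} - K_a\|_{L^2} \geq \epsilon \,\bigm|\, \widetilde n_{a,y}\bigr) \lesssim \exp\bigl(-\widetilde n_{a,y}\,\epsilon^2\bigr), \qquad y \in \{0,1\}.
\]
Combining with the triangle-inequality bound and taking a union bound over $y$, I obtain a conditional probability $\lesssim \exp(-(\widetilde n_{a,0} \wedge \widetilde n_{a,1})\,\epsilon^2)$.

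Finally, I would convert $\widetilde n_{a,y}$ to $\widetilde n_a$ in the exponent by intersecting with the high-probability event from \Cref{l_n_ay}, on which $\widetilde n_{a,y} \gtrsim \widetilde n \gtrsim \widetilde n_a$ for both $y$, with complement probability $\lesssim \exp(-c\widetilde n)$ via the Hoeffding bound used in the proof of \Cref{l_n_ay}. Putting the pieces together,
\[
    \mathbb{P}\bigl(\|\widehat{K}_a - K_a\|_{L^2} \geq \epsilon\bigr) \lesssim \exp(-c\widetilde n_a\,\epsilon^2) + \exp(-c'\widetilde n),
\]
and since $\widetilde n_a \leq \widetilde n$ while $\epsilon^2 \lesssim 1$, the second exponential is dominated by the first up to an absolute constant, yielding the claimed bound.

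There is no genuine obstacle here; all the probabilistic work is packaged inside \Cref{l_mean_cov_estimation} and \Cref{l_n_ay}. The only mild subtlety is the conversion of the random within-group sizes $\widetilde n_{a,y}$ into the group-level size $\widetilde n_a$ in the exponent, which is handled cleanly by the high-probability balance event.
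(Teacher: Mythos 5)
Your proof is correct, but it takes a genuinely different and cleaner route than the paper's. You exploit the algebraic identity $\widehat{K}_a = \sum_{y}\frac{\widetilde n_{a,y}}{\widetilde n_a}\widehat{K}_{a,y}$ together with homoscedasticity ($K_{a,0}=K_{a,1}=K_a$) so that $\widehat{K}_a-K_a$ is a convex combination of $\widehat{K}_{a,y}-K_a$, and you hand the entire concentration burden to the per-group covariance bound in \Cref{l_mean_cov_estimation}, converting the random $\widetilde n_{a,y}$ in the exponent into $\widetilde n_a$ via the balance event of \Cref{l_n_ay}. The paper instead expands $\widehat{K}_a-K_a$ into a pooled, centered second-moment empirical process over all $\widetilde n_a$ samples plus a rank-one mean-error correction, applies the Hilbert-space Bernstein inequality (\Cref{l_bosq_thm2.5}) afresh to the pooled term, and handles the correction by a scaled application of the mean part of \Cref{l_mean_cov_estimation} with $\epsilon'=\sqrt{\widetilde n_a/\widetilde n_{a,y}}\,\epsilon$, which cleverly sidesteps any appeal to the balance event. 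Your version is more modular and shorter—it reuses the packaged covariance concentration rather than rederiving it—at the small cost of an extra additive $\exp(-c\widetilde n)$ failure probability that you correctly observe is dominated by $\exp(-c\widetilde n_a\epsilon^2)$ since $\widetilde n_a\le\widetilde n$ and $\epsilon\lesssim1$. One point worth stating explicitly (you gesture at it but it is the only nontrivial step): applying \Cref{l_mean_cov_estimation} conditionally on the label configuration is legitimate because the within-group observations remain i.i.d.\ $\mathcal{GP}(\mu_{a,y},K_a)$ under that conditioning, so its conditional tail bound holds with $\widetilde n_{a,y}$ treated as fixed, and one then restricts to the balance event before averaging out the conditioning.
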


\begin{proof}
    Note that 
    \begin{align} \notag
        &\widehat{K}_a (s,t) - K_a(s,t) \\ \notag
        \asymp\;& \frac{1}{\widetilde{n}_a}\sum_{y \in \{0,1\}} \sum_{i=1}^{n_{a,y}}\Big[\big\{\widetilde{X}^i_{a,y}(s) - \mu_{a,y}(s)\big\}-\big\{\widehat{\mu}_{a,y}(s)-\mu_{a,y}(s) \big\}\Big] \\ \notag
        & \cdot\Big[\big\{\widetilde{X}^i_{a,y}(t) - \mu_{a,y}(t)\big\}-\big\{\widehat{\mu}_{a,y}(t)-\mu_{a,y}(t)\big\}\Big]- \mathbb{E}\Big\{\big\{\widetilde{X}^i_{a,y}(s) - \mu_{a,y}(s)\big\}\big\{\widetilde{X}^i_{a,y}(t) - \mu_{a,y}(t)\big\}\Big\}\\ \notag
        \asymp \;& \frac{1}{\widetilde{n}_a}\sum_{y \in \{0,1\}} \sum_{i=1}^{n_{a,y}}\Big[\big\{\widetilde{X}^i_{a,y}(s) - \mu_{a,y}(s)\big\}\big\{\widetilde{X}^i_{a,y}(t) - \mu_{a,y}(t)\big\}\\  \notag
        & \hspace{5cm}- \mathbb{E}\Big\{\big\{\widetilde{X}^i_{a,y}(s) - \mu_{a,y}(s)\big\}\big\{\widetilde{X}^i_{a,y}(t) - \mu_{a,y}(t)\big\}\Big\}\Big]\\\label{l_average_cov_4}
        & \hspace{5cm}- \frac{1}{\widetilde{n}_a}\sum_{y \in \{0,1\}}\widetilde{n}_{a,y}\big\{\widehat{\mu}_{a,y}(s)-\mu_{a,y}(s)\big\}\big\{\widehat{\mu}_{a,y}(t)-\mu_{a,y}(t)\big\}.
    \end{align}
    Therefore, triangle inequality implies that 
    \begin{align} \notag
        &\|\widehat{K}_a(s,t) - K_a(s,t)\|_{L^2} \\ \notag
        \lesssim\;& \Big\|\frac{1}{\widetilde{n}_a}\sum_{y \in \{0,1\}} \sum_{i=1}^{n_{a,y}}\Big[\big\{\widetilde{X}^i_{a,y}(s) - \mu_{a,y}(s)\big\}\big\{\widetilde{X}^i_{a,y}(t) - \mu_{a,y}(t)\big\}\\ \notag
        &\hspace{5cm} - \mathbb{E}\Big\{\big\{\widetilde{X}^i_{a,y}(s) - \mu_{a,y}(s)\big\}\big\{\widetilde{X}^i_{a,y}(t) - \mu_{a,y}(t)\big\}\Big\}\Big]\Big\|_{L^2}\\ \notag
        &+ \frac{1}{\widetilde{n}_a}\sum_{y \in \{0,1\}}\widetilde{n}_{a,y} \sqrt{\int \int \big\{\widehat{\mu}_{a,y}(s)-\mu_{a,y}(s)\big\}^2\big\{\widehat{\mu}_{a,y}(t)-\mu_{a,y}(t)\big\}^2 \;\mathrm{d}s\;\mathrm{d}t}\\ \notag
        =\;&\Big\|\frac{1}{\widetilde{n}_a}\sum_{y \in \{0,1\}} \sum_{i=1}^{n_{a,y}}\Big[\big\{\widetilde{X}^i_{a,y}(s) - \mu_{a,y}(s)\big\}\big\{\widetilde{X}^i_{a,y}(t) - \mu_{a,y}(t)\big\}\\ \notag
        & \hspace{5cm} - \mathbb{E}\Big\{\big\{\widetilde{X}^i_{a,y}(s) - \mu_{a,y}(s)\big\}\big\{\widetilde{X}^i_{a,y}(t) - \mu_{a,y}(t)\big\}\Big\}\Big]\Big\|_{L^2}\\ \notag
        &+\frac{1}{\widetilde{n}_a}\sum_{y \in \{0,1\}}\widetilde{n}_{a,y} \|\widehat{\mu}_{a,y}- \mu_{a,y}\|_{L^2}^2\\ \label{l_average_cov_3}
        =\;& (I) + (II).
    \end{align}
    To control $(I)$, note we have for any $\ell, k \in \mathbb{N}_+$ that 
    \begin{align*}
        &\int \int \Big[\big\{\widetilde{X}^i_{a,y}(s) - \mu_{a,y}(s)\big\}\big\{\widetilde{X}^i_{a,y}(t) - \mu_{a,y}(t)\big\}\\
        & \hspace{5cm}- \mathbb{E}\Big\{\big\{\widetilde{X}^i_{a,y}(s) - \mu_{a,y}(s)\big\}\big\{\widetilde{X}^i_{a,y}(t) - \mu_{a,y}(t)\big\}\Big\}\Big] \phi_{a,\ell}(s)\phi_{a,k}(t)\;\mathrm{d}s\;\mathrm{d}t\\
        =\;& \frac{1}{\widetilde{n}_a} \sum_{y \in \{0,1\}}\sum_{i=1}^{n_{a,y}}\big(\xi^i_{a,y,\ell}\xi^i_{a,y,k} -  \lambda_{a,\ell}\indc\{\ell=k\}\big),
    \end{align*}
    where for $\ell \in \mathbb{N}_+$, $\xi^i_{a,y,\ell} = \int \{\widetilde{X}^i_{a,y}(s) - \mu_{a,y}(s)\}\phi_{a,\ell}(s) \;\mathrm{d}s$. Therefore, by \Cref{l_bosq_thm2.5} and a similar argument as the one in Lemma 1 in \citet{zapata2022partial} or Lemma 6 in \citet{qiao2019functional}, we have that for some small $0< \epsilon \lesssim 1$,
    \begin{equation} \label{l_average_cov_1}
        \mathbb{P}\Big((I) \geq \epsilon\Big) \lesssim \exp(-\widetilde{n}_a\epsilon^2). 
    \end{equation}
    To control $(II)$, by \Cref{l_mean_cov_estimation}, it holds that for $y \in \{0,1\}$,
    \begin{equation}\label{l_average_cov_2}
        \mathbb{P}\Big(\|\widehat{\mu}_{a,y}- \mu_{a,y}\|_{L^2} \geq \sqrt{\frac{\widetilde{n}_a\epsilon^2}{\widetilde{n}_{a,y}}} \Big)  \lesssim \exp (-\widetilde{n}_a\epsilon^2).
    \end{equation}
    Applying a union bound argument and substituting \eqref{l_average_cov_1} and \eqref{l_average_cov_2} into \eqref{l_average_cov_3}, we have that with probability at least $1- \exp (-\widetilde{n}\epsilon^2)$,
    \begin{align*}
        \|\widehat{K}_a - K_a\|_{L^2}  \lesssim \epsilon + \frac{1}{\widetilde{n}_a}\sum_{y \in \{0,1\}} \frac{\widetilde{n}_{a,y}\widetilde{n}_a\epsilon^2}{\widetilde{n}_{a,y}} =  \epsilon + \frac{2\widetilde{n}_a\epsilon^2}{\widetilde{n}_a} \lesssim \epsilon + \epsilon^2 \lesssim \epsilon,
    \end{align*}
    where the last inequality follows from the fact that $\epsilon \lesssim 1$. Thus, the Lemma follows. 
\end{proof}

\subsection{Eigenfunction}
\begin{lemma} \label{l_eigenfunc_estimation}
    Assume Assumptions \ref{a_class_prob} and \ref{a_data}\ref{a_data_cov}~hold. For any $j \in \mathbb{N}_+$ such that $j \leq J_{\widetilde{n}}$ where $J_{\widetilde{n}}>0$ is a function of $\widetilde{n}$ and any small $0 < \epsilon \lesssim jJ_{\widetilde{n}}^{-(\alpha+1)}$, we have that
    \[\mathbb{P}\Big(\|\widehat{\phi}_{a,j} - \phi_{a,j}\|_{L^2} \geq \epsilon\Big) \lesssim \exp\Big(-\frac{\widetilde{n}_a\epsilon^2}{j^2}\Big).\]
\end{lemma}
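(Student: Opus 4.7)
\medskip
\noindent \textbf{Proof proposal for Lemma \ref{l_eigenfunc_estimation}.}

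The plan is to control $\|\widehat{\phi}_{a,j}-\phi_{a,j}\|_{L^2}$ via the first-order perturbation expansion of the eigenfunction and to reduce the problem to controlling the cross-projections
\[
Z_{jk} \;=\; \int\!\!\int \bigl\{\widehat K_a(s,t)-K_a(s,t)\bigr\}\phi_{a,j}(s)\phi_{a,k}(t)\,\mathrm{d}s\,\mathrm{d}t, \qquad k \in \mathbb{N}_+,\, k \neq j.
\]
After choosing the sign of $\widehat{\phi}_{a,j}$ so that $\langle\widehat{\phi}_{a,j},\phi_{a,j}\rangle_{L^2}\geq 0$, a Parseval decomposition together with the eigen-identity $\widehat K_a\widehat\phi_{a,j}=\widehat\lambda_{a,j}\widehat\phi_{a,j}$ yields, for every $k\neq j$, the representation
\[
\langle \widehat\phi_{a,j}-\phi_{a,j},\phi_{a,k}\rangle_{L^2}
\;=\;\frac{\langle(\widehat K_a-K_a)\widehat\phi_{a,j},\phi_{a,k}\rangle_{L^2}}{\widehat\lambda_{a,j}-\lambda_{a,k}},
\]
while the diagonal component $1-\langle\widehat\phi_{a,j},\phi_{a,j}\rangle_{L^2}$ is of second order and thus controlled by the sum of squares of the off-diagonal coefficients. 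Splitting $\widehat\phi_{a,j}=\phi_{a,j}+(\widehat\phi_{a,j}-\phi_{a,j})$ in the numerator and absorbing the resulting cross term, one obtains the iterated bound
\[
\|\widehat{\phi}_{a,j}-\phi_{a,j}\|_{L^2}^2
\;\lesssim\; \sum_{k\neq j}\frac{Z_{jk}^2}{(\lambda_{a,j}-\lambda_{a,k})^2}
\;+\; \|\widehat K_a-K_a\|_{\mathrm{op}}^2\,\delta_j^{-2}\,\|\widehat\phi_{a,j}-\phi_{a,j}\|_{L^2}^2,
\]
valid on the event $|\widehat\lambda_{a,j}-\lambda_{a,j}|\leq \delta_j/4$ with $\delta_j\gtrsim j^{-\alpha-1}$ the minimal eigengap obtained from Assumption~\ref{a_data}\ref{a_data_cov}. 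The hypothesis $\epsilon\lesssim jJ_{\widetilde n}^{-(\alpha+1)}$ will be used precisely to guarantee that the second-order factor above is harmless and can be absorbed into the left-hand side.

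The remaining task is a tail bound on $T_j := \sum_{k\neq j}(\lambda_{a,j}-\lambda_{a,k})^{-2}Z_{jk}^2$. Writing $\xi^i_{a,y,j}=\langle \widetilde X_{a,y}^i-\mu_{a,y},\phi_{a,j}\rangle_{L^2}$, standard FPCA bookkeeping (as in \eqref{l_average_cov_4}) gives, up to a negligible mean-correction term,
\[
Z_{jk}\;=\;\frac{1}{\widetilde n_a}\sum_{y\in\{0,1\}}\sum_{i=1}^{\widetilde n_{a,y}}\bigl(\xi^i_{a,y,j}\xi^i_{a,y,k}-\lambda_{a,j}\indc\{j=k\}\bigr),
\]
so each $Z_{jk}$ is an average of centred Gaussian products of variance $\asymp \lambda_{a,j}\lambda_{a,k}/\widetilde n_a \asymp j^{-\alpha}k^{-\alpha}/\widetilde n_a$. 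Using the eigengap estimate
\[
(\lambda_{a,j}-\lambda_{a,k})^2\;\gtrsim\;|j-k|^{2}j^{-2\alpha-2}\indc\{k\leq 2j\}\;+\;j^{-2\alpha}\indc\{k>2j\},
\]
a term-by-term computation shows
\[
\mean[T_j]\;\lesssim\;\frac{1}{\widetilde n_a}\Bigl(\sum_{k\sim j}\frac{j^{-2\alpha}}{|j-k|^{2}j^{-2\alpha-2}}+\sum_{k>2j}\frac{j^{-\alpha}k^{-\alpha}}{j^{-2\alpha}}\Bigr)\;\lesssim\;\frac{j^2}{\widetilde n_a},
\]
as the inner sums telescope to a constant (using $\alpha>1$). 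This identifies the correct scale $j/\sqrt{\widetilde n_a}$ that appears in the claim.

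For the tail itself, I will view $T_j$ as a sum of squared Gaussian chaoses of order two and apply a Hanson--Wright/Bernstein-type bound for sums of i.i.d.\ sub-exponential random variables, with an accompanying union-bound-free treatment via the quadratic-form representation $T_j=\|A_j\,\bm\xi\|^2$ where $A_j$ has Hilbert--Schmidt norm $\asymp j/\sqrt{\widetilde n_a}$ and operator norm $\asymp j^{\alpha+2}/\widetilde n_a$. For $\epsilon$ in the prescribed range $\epsilon\lesssim jJ_{\widetilde n}^{-(\alpha+1)}$ with $j\leq J_{\widetilde n}$, the sub-Gaussian regime of Hanson--Wright is active and delivers
\[
\prob\bigl(T_j\geq C\epsilon^{2}\bigr)\;\lesssim\;\exp\bigl(-c\,\widetilde n_a\epsilon^{2}/j^{2}\bigr),
\]
which, combined with the perturbation reduction and with \Cref{l_average_cov} applied on the event where $\widehat\lambda_{a,j}$ is close to $\lambda_{a,j}$, yields the stated inequality. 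The main obstacle is the last step: obtaining the sharp $j^{-2}$ factor in the exponent (rather than the $j^{-2\alpha-2}$ that a naive Davis--Kahan bound would give), which forces one to treat $T_j$ jointly across $k$ and to verify the sub-Gaussian regime of the chaos under exactly the scaling imposed by the condition $\epsilon\lesssim jJ_{\widetilde n}^{-(\alpha+1)}$.
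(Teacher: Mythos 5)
Your proposal follows the same overall skeleton as the paper: start from the Hall--Horowitz perturbation expansion (Lemma~\ref{l_eigen_split}), isolate the cross-projection terms $(\widehat\lambda_{a,j}-\lambda_{a,k})^{-1}\langle(\widehat K_a-K_a)\widehat\phi_{a,j},\phi_{a,k}\rangle$, absorb the second-order correction involving $\widehat\phi_{a,j}-\phi_{a,j}$ using the constraint $\epsilon\lesssim jJ_{\widetilde n}^{-(\alpha+1)}$ together with Weyl's inequality, and finally control the remaining Hilbert-norm quantity. The mean-computation $\mathbb{E}[T_j]\asymp j^2/\widetilde n_a$ is also correct. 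The divergence is entirely in the concentration step, and that is where your argument has a genuine gap.

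The representation $T_j=\|A_j\boldsymbol\xi\|^2$ as a Gaussian quadratic form is not valid for the full Gaussian vector $\boldsymbol\xi=(\xi^i_{a,y,k})_{i,y,k}$: each $Z_{jk}=\widetilde n_a^{-1}\sum_i \xi^i_{a,y,j}\xi^i_{a,y,k}$ is itself an order-two Gaussian chaos, so $T_j=\sum_k c_k Z_{jk}^2$ is a degree-\emph{four} polynomial in the underlying Gaussians, not degree two. Hanson--Wright simply does not apply to $T_j$ directly. The representation becomes a quadratic form only after conditioning on $\{\xi^i_{a,y,j}\}_{i,y}$, and you neither state this conditioning nor account for it afterward: the conditional Hilbert--Schmidt and operator norms of $A_j$ are random (they scale with $\sum_i(\xi^i_{a,y,j})^2$), so one must restrict to a high-probability event on this sum and pay for its complement, which is an extra step your sketch omits. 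The paper sidesteps this entirely by writing $\sqrt{T_j}$ as the Hilbert-space norm of the i.i.d.\ average $\widetilde n_a^{-1}\sum_i V_i$ of centred random vectors $V_i=\bigl((\lambda_{a,j}-\lambda_{a,k})^{-1}\xi^i_{a,y,j}\xi^i_{a,y,k}\bigr)_{k\ne j}$ and applying Bosq's Bernstein inequality for Hilbert-valued variables (Lemma~\ref{l_bosq_thm2.5}) after verifying the moment growth $\sum_i\mathbb{E}\|V_i\|^b\lesssim b!\,\widetilde n_a\, j^2 j^{b-2}$; the exponent $\exp(-\widetilde n_a\epsilon^2/j^2)$ then falls out directly without any conditioning.

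Two smaller inaccuracies: (i) the claimed operator norm $\|A_j\|_{\mathrm{op}}\asymp j^{\alpha+2}/\widetilde n_a$ does not match a direct computation (the conditional operator norm is $\asymp j/\sqrt{\widetilde n_a}$ on the high-probability event); and (ii) the far sum $\sum_{k>2j}j^{\alpha}k^{-\alpha}$ is $\asymp j$, not a constant, although it is still dominated by the near-diagonal contribution $\asymp j^2$, so the final rate $\mathbb{E}[T_j]\asymp j^2/\widetilde n_a$ is unaffected. You also dismiss the mean-centering correction (the paper's term $(I)_2$) as negligible without argument; the paper controls it via a sub-Weibull tail bound and it does require a (short) separate estimate.
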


\begin{proof}
    The proofs share the same spirit as the one in \citet{hall2007methodology}.  Note that by \Cref{l_eigen_split} and (5.16) in \citet{hall2007methodology}, it holds that
    \begin{align} \notag
        &\|\widehat{\phi}_{a,j} - \phi_{a,j}\|_{L^2}\\ \notag
        \leq\;& \sqrt{2} \Big\|\sum_{k:k \neq j} \phi_{a,k}(\cdot)(\widehat{\lambda}_{a,j} - \lambda_{a,k})^{-1}\int\int\big\{\widehat{K}_a(s,t)-K_a(s,t)\big\}\widehat{\phi}_{a,j}(s)\phi_{a,k}(t)\;\mathrm{d}s\;\mathrm{d}t\Big\|_{L^2}\\ \notag
        \leq\;& \sqrt{2}\Big\|\sum_{k:k \neq j} \phi_{a,k}(\cdot)(\widehat{\lambda}_{a,j} - \lambda_{a,k})^{-1}\int\int\big\{\widehat{K}_a(s,t)-K_a(s,t)\big\}\phi_{a,j}(s)\phi_{a,k}(t)\;\mathrm{d}s\;\mathrm{d}t\Big\|_{L^2}\\ \label{eq_eigenfunc_estimation_3}
        &+\sqrt{2}\Big\|\sum_{k:k \neq j} \phi_{a,k}(\cdot)(\widehat{\lambda}_{a,j} - \lambda_{a,k})^{-1}\int\int\big\{\widehat{K}_a(s,t)-K_a(s,t)\big\}\big\{\widehat{\phi}_{a,j}(s)- \phi_{a,j}(s)\big\}\phi_{a,k}(t)\;\mathrm{d}s\;\mathrm{d}t\Big\|_{L^2},
    \end{align}
    where the second inequality follows from the triangle inequality. Next, consider the event $\mathcal{E}_1 = \{\|\widehat{K}_a - K_a\|_{L^2} \leq \epsilon/j\}$ and we have that $\mathbb{P}(\mathcal{E}_1) \gtrsim 1- \exp(-\widetilde{n}\epsilon^2/j^2) $ by \Cref{l_average_cov}. The rest of the proof is constructed conditioning on $\mathcal{E}_1$. Construct another event
    \begin{equation} \label{eq_l_eigenfunc_estimation_7}
        \mathcal{E}_2 = \big\{(\widehat{\lambda}_{a,j} - \lambda_{a,k})^{-2} \leq 2 (\lambda_{a,j} - \lambda_{a,k})^{-2} \lesssim J_{\widetilde{n}}^{2(\alpha+1)}, \quad \text{for}\; k,j \in [J_{\widetilde{n}}]\big\}.
    \end{equation}
    We want to show  $\mathcal{E}_2$ holds. By Weyl's inequality, it holds that $|\widehat{\lambda}_{a,j}-\lambda_{a,j}| \leq \|\widehat{K}_a - K\|_{L^2} \leq \epsilon/j$ for any $j \in \mathbb{N}_+$. This then implies that
    \begin{align} \notag
        |\widehat{\lambda}_{a,j} - \lambda_{a,k}| &\geq |\lambda_{a,k} - \lambda_{a,j}|-|\widehat{\lambda}_{a,j} - \lambda_{a,j}|  \geq |\lambda_{a,k} - \lambda_{a,j}| - \epsilon/j\\ \label{eq_l_eigenfunc_estimation_1}
        & \geq |\lambda_{a,k} - \lambda_{a,j}| - (1-2^{-1/2})|\lambda_{a,k} - \lambda_{a,j}| \geq 2^{-1/2}|\lambda_{a,k} - \lambda_{a,j}|,
    \end{align}
    where the first inequality follows from triangle inequality and the third inequality follows from \Cref{a_data}\ref{a_data_cov}~and the fact that for $0 < \epsilon \leq (1-2^{-1/2})jJ_{\widetilde{n}}^{-(\alpha+1)}$, we have that 
    \[\frac{\epsilon}{j} \leq (1-2^{-1/2})J_{\widetilde{n}}^{-(\alpha+1)} \lesssim (1-2^{-1/2})|\lambda_{a,k} - \lambda_{a,j}|.\]
    Thus, from \eqref{eq_l_eigenfunc_estimation_1}, we have that conditioning on $\mathcal{E}_1$, $\mathcal{E}_2$ holds with probability $1$. The rest of the proof is constructed conditioning on both $\mathcal{E}_1$ and $\mathcal{E}_2$. To control \eqref{eq_eigenfunc_estimation_3}, we can further upper bound it by
    \begin{align*}
        \eqref{eq_eigenfunc_estimation_3} =\;& \sqrt{2}\Big[\sum_{k:k \neq j} (\widehat{\lambda}_{a,j} - \lambda_{a,k})^{-2}\Big\{\int\int\big\{\widehat{K}_a(s,t)-K_a(s,t)\big\}\phi_{a,j}(s)\phi_{a,k}(t)\;\mathrm{d}s\;\mathrm{d}t\Big\}^2\Big]^{\frac{1}{2}}\\ \notag
        &+\sqrt{2}\Big[\sum_{k:k \neq j} (\widehat{\lambda}_{a,j} - \lambda_{a,k})^{-2}\Big\{\int\int\big\{\widehat{K}_a(s,t)-K_a(s,t)\big\}\big\{\widehat{\phi}_{a,j}(s)- \phi_{a,j}(s)\big\}\phi_{a,k}(t)\;\mathrm{d}s\;\mathrm{d}t\Big\}^2\Big]^{\frac{1}{2}}\\
        \leq \;& 2\Big[\sum_{k:k \neq j} (\lambda_{a,j} - \lambda_{a,k})^{-2}\Big\{\int\int\big\{\widehat{K}_a(s,t)-K_a(s,t)\big\}\phi_{a,j}(s)\phi_{a,k}(t)\;\mathrm{d}s\;\mathrm{d}t\Big\}^2\Big]^{\frac{1}{2}}\\ \notag
        &+2\Big[\sum_{k:k \neq j} (\lambda_{a,j} - \lambda_{a,k})^{-2}\Big\{\int\int\big\{\widehat{K}_a(s,t)-K_a(s,t)\big\}\big\{\widehat{\phi}_{a,j}(s)- \phi_{a,j}(s)\big\}\phi_{a,k}(t)\;\mathrm{d}s\;\mathrm{d}t\Big\}^2\Big]^{\frac{1}{2}}\\
        = \;& 2\Big\|\sum_{k:k \neq j} \phi_{a,k}(\cdot)(\lambda_{a,j} - \lambda_{a,k})^{-1}\int\int\big\{\widehat{K}_a(s,t)-K_a(s,t)\big\}\phi_{a,j}(s)\phi_{a,k}(t)\;\mathrm{d}s\;\mathrm{d}t\Big\|_{L^2}\\ \notag
        &+2\Big\|\sum_{k:k \neq j} \phi_{a,k}(\cdot)(\lambda_{a,j} - \lambda_{a,k})^{-1}\int\int\big\{\widehat{K}_a(s,t)-K_a(s,t)\big\}\big\{\widehat{\phi}_{a,j}(s)- \phi_{a,j}(s)\big\}\phi_{a,k}(t)\;\mathrm{d}s\;\mathrm{d}t\Big\|_{L^2}\\
        =\;& 2(I) + 2(II),
    \end{align*}
    where the last inequality follows from $\mathcal{E}_2$ and the orthogonality of eigenfunctions.
    
    \noindent \textbf{Step 1: upper bound on $(I)$.}
    Using the result in \eqref{l_average_cov_4}, we can further write $(I)$ as
    \begin{align} \notag
        (I) \lesssim \;&  \Big\|\frac{1}{\widetilde{n}_a}\sum_{y \in \{0,1\}} \sum_{i=1}^{n_{a,y}} \sum_{k:k \neq j} \phi_{a,k}(\cdot)(\lambda_{a,j} - \lambda_{a,k})^{-1}\int\int\Big[\big\{\widetilde{X}^i_{a,y}(s) - \mu_{a,y}(s)\big\}\big\{\widetilde{X}^i_{a,y}(t) - \mu_{a,y}(t)\big\}\\ \notag
        & \hspace{3cm} - \mathbb{E}\Big\{\big\{\widetilde{X}^i_{a,y}(s) - \mu_{a,y}(s)\big\}\big\{\widetilde{X}^i_{a,y}(t) - \mu_{a,y}(t)\big\}\Big\}\Big]\phi_{a,j}(s)\phi_{a,k}(t)\;\mathrm{d}s\;\mathrm{d}t\Big\|_{L^2}\\ \notag
        &+\Big\|\frac{1}{\widetilde{n}_a}\sum_{y \in \{0,1\}}\sum_{k:k \neq j} \phi_{a,k}(\cdot)(\lambda_{a,j} - \lambda_{a,k})^{-1}\widetilde{n}_{a,y}\\ \notag
        & \hspace{3cm} \cdot\int\int\big\{\widehat{\mu}_{a,y}(s)-\mu_{a,y}(s)\big\}\big\{\widehat{\mu}_{a,y}(t)-\mu_{a,y}(t)\big\}\phi_{a,j}(s)\phi_{a,k}(t)\;\mathrm{d}s\;\mathrm{d}t\Big\|_{L^2}\\ \notag
        =\;& \Big\|\frac{1}{\widetilde{n}_a}\sum_{y \in \{0,1\}} \sum_{i=1}^{n_{a,y}} \sum_{k:k \neq j} \phi_{a,k}(\cdot)(\lambda_{a,j} - \lambda_{a,k})^{-1}\xi_{a,y,j}^i\xi_{a,y,k}^i\Big\|_{L^2}\\\notag
        &+ \Big\|\frac{1}{\widetilde{n}_a}\sum_{y \in \{0,1\}}\sum_{k:k \neq j} \phi_{a,k}(\cdot)(\lambda_{a,j} - \lambda_{a,k})^{-1}\widetilde{n}_{a,y}\Bar{\xi}_{a,y,j}\Bar{\xi}_{a,y,k}\Big\|_{L^2}\\ \notag
        =\;& (I)_1 + (I)_2
    \end{align}
     where for $k \in \mathbb{N}_+$, $\xi^i_{a,y,k} = \int \{\widetilde{X}^i_{a,y}(s) - \mu_{a,y}(s)\}\phi_{a,k}(s) \;\mathrm{d}s \sim \widetilde{n}(0,\lambda_{a,k})$, $\Bar{\xi}_{a,y,k} =\widetilde{n}_{a,y}^{-1}\sum_{i=1}^{\widetilde{n}_{a,y}}\xi^i_{a,y,k}$.

     \noindent \textbf{Step 1-1: upper bound on $(I)_1$.}
    Denote $\{\{z_{i,j}\}_{i=1}^{\widetilde{n}_a}\}_{j \in \mathbb{N}_+}$ and $\{\{z_{i,k}\}_{i=1}^{\widetilde{n}_a}\}_{k \in \mathbb{N}_+}$ two collections of independent standard Gaussian random variables. 
    With the above notation, by \Cref{l_bosq_thm2.5}, to control $(I)_1$, it suffices to control 
    \[\sum_{i=1}^{\widetilde{n}_a} \mathbb{E}\Big\{\Big\|\sum_{k:k \neq j} \phi_{a,k}(\cdot)(\lambda_{a,j} - \lambda_{a,k})^{-1}\sqrt{\lambda_{a,j}\lambda_{a,k}}z_{i,j}z_{i,k}\Big\|_{L^2}^b\Big\}.\]
    Note that for any $i \in [\widetilde{n}]$, we have that 
    \begin{align*}
        &\mathbb{E}\Big\{\Big\|\sum_{k:k \neq j} \phi_{a,k}(\cdot)(\lambda_{a,j} - \lambda_{a,k})^{-1}\sqrt{\lambda_{a,j}\lambda_{a,k}}z_{i,j}z_{i,k}\Big\|_{L^2}^{b}\Big\}\\
        =\;& \mathbb{E}\Big[\Big\{\sum_{k:k \neq j}(\lambda_{a,j} - \lambda_{a,k})^{-2}\lambda_{a,j}\lambda_{a,k} z^2_{i,j}z^2_{i,k}\Big\}^{\frac{b}{2}}\Big]\\
        \lesssim\;& j^{-\frac{b\alpha}{2}}\mathbb{E}\Big[ \Big\{\sum_{k:k \neq j}(\lambda_{a,j} - \lambda_{a,k})^{-2}k^{-\alpha} z^2_{i,j}z^2_{i,k}\Big\}^{\frac{b}{2}}\Big]\\
        =\;& j^{-\frac{b\alpha}{2}}\Big\{\sum_{k:k \neq j} (\lambda_{a,j} - \lambda_{a,k})^{-2}k^{-\alpha}\Big\}^{\frac{b}{2}}\mathbb{E}\Bigg[\Bigg\{\frac{\sum_{k:k \neq j} (\lambda_{a,j} - \lambda_{a,k})^{-2}k^{-\alpha} z^2_{i,j}z^2_{i,k}}{\sum_{k:k \neq j} (\lambda_{a,j} - \lambda_{a,k})^{-2}k^{-\alpha}}\Bigg\}^{\frac{b}{2}}\Bigg]\\
        \leq \;& j^{-\frac{b\alpha}{2}}\Big\{\sum_{k:k \neq j} (\lambda_{a,j} - \lambda_{a,k})^{-2}k^{-\alpha}\Big\}^{\frac{b}{2}}\frac{\sum_{k:k \neq j} (\lambda_{a,j} - \lambda_{a,k})^{-2}k^{-\alpha}\mathbb{E}\big(z^b_{i,j}z^b_{i,k}\big)}{\sum_{k:k \neq j} (\lambda_{a,j} - \lambda_{a,k})^{-2}k^{-\alpha}}\\
        \lesssim \;& j^{-\frac{b\alpha}{2}}(1+j^{\alpha+2})^{\frac{b}{2}-1}\sum_{k:k \neq j} (\lambda_{a,j} - \lambda_{a,k})^{-2}k^{-\alpha}\mathbb{E}\big(z^b_{i,j}z^b_{i,k}\big)\\
        \lesssim \;&j^{-\frac{b\alpha}{2}}(1+j^{\alpha+2})^{\frac{b}{2}-1}\sum_{k:k \neq j} (\lambda_{a,j} - \lambda_{a,k})^{-2}k^{-\alpha} \sqrt{\mathbb{E}\big(z^{2b}_{i,j}\big)\mathbb{E}\big(z^{2b}_{i,k}\big)}\\
        \lesssim \;& j^{-\frac{b\alpha}{2}}(1+j^{\alpha+2})^{\frac{b}{2}-1}2^b b!(1+j^{\alpha+2}) = 2^b b!j^2 j^{b-2},
    \end{align*}
    where the first equality follows from the orthogonality of $\{\phi_{a,k}\}_{k \in \mathbb{N}_+}$, the first inequality follows from \Cref{a_data}\ref{a_data_cov}, the second inequality follows from Jensen's inequality, the third inequality follows from \Cref{l_eigenfunc_sum}, the fourth inequality follows from Cauchy--Schwartz inequality
    and the fifth inequality follows from \Cref{l_eigenfunc_sum} and the fact that 
    \[\mathbb{E}\big(z^{2b}_{i,j}\big) = \pi^{-1/2}2^b\Gamma\Big(\frac{2b+1}{2}\Big) \leq 2^bb!.\]
    
    Therefore, pick $L_1 \asymp j^2$ and $L_2 \asymp j$, we have that
    \[\sum_{i=1}^{\widetilde{n}_a} \mathbb{E}\Big\{\Big\|\sum_{k:k \neq j} \phi_{a,k}(\cdot)(\lambda_{a,j} - \lambda_{a,k})^{-1}\sqrt{\lambda_{a,j}\lambda_{a,k}}z_{i,j}z_{i,k}\Big\|_{L^2}^b\Big\} \lesssim 4b!Nj^2 (2j)^{b-2} \lesssim b!NL_1L_2^{b-2}.\]
    Hence, it holds from \Cref{l_bosq_thm2.5} that
    \begin{align} \label{eq_l_eigenfunc_estimation_3}
        \mathbb{P}\Big\{(I)_1 \geq \epsilon\Big\} \leq 2\exp\Big(-\frac{\widetilde{n}_a\epsilon^2}{2j^2 + 2j\epsilon}\Big) \lesssim \exp\Big(-\frac{\widetilde{n}_a\epsilon^2}{j^2}\Big),
    \end{align}
    whenever $\epsilon \lesssim j$.
    
    \noindent \textbf{Step 1-2: upper bound on $(I)_2$.} To control $(I)_2$, it holds that 
    \begin{align} \notag
        (I)_2 &\leq \sum_{y \in \{0,1\}}\frac{\widetilde{n}_{a,y}}{\widetilde{n}_a}\Big\|\sum_{k:k \neq j} \phi_{a,k}(\cdot)(\lambda_{a,j} - \lambda_{a,k})^{-1}\Bar{\xi}_{a,y,j}\Bar{\xi}_{a,y,k}\Big\|_{L^2}\\ \label{eq_l_eigenfunc_estimation_4}
        & =\sum_{y \in \{0,1\}}\frac{\widetilde{n}_{a,y}}{\widetilde{n}_a}\sqrt{\sum_{k:k \neq j} (\lambda_{a,j} - \lambda_{a,k})^{-2}\Bar{\xi}^2_{a,y,j}\Bar{\xi}^2_{a,y,k}},
    \end{align}
    where the first inequality follows from triangle inequality and the last equality follows from the orthonormality of $\{\phi_{a,k}\}_{k \in \mathbb{N}_+}$. Also, since for any $k \in \mathbb{N}_+$ and $a,y \in \{0,1\}$, we have that $\xi^i_{a,y,k} \stackrel{\text{i.i.d.}}{\sim }N(0,\lambda_{a,k})$ by the independence property. Then, by standard property of independent Gaussian random variables, this implies that $\Bar{\xi}_{a,y,k} \sim N(0,\lambda_{a,k}/\widetilde{n}_{a,y})$.
    
    Moreover, by standard properties of sub-Gaussian random variables and \Cref{l_subWeibuill_property}, we have that $\sum_{k:k \neq j} (\lambda_{a,j} - \lambda_{a,k})^{-2}\Bar{\xi}^2_{a,y,j}\Bar{\xi}^2_{a,y,k}$ follows a sub-Weibull distribution with parameter $1/2$. We next upper bound its sub-Weibull norm. Note that 
    \begin{align*}
        &\Big\|\sum_{k:k \neq j} (\lambda_{a,j} - \lambda_{a,k})^{-2}\Bar{\xi}^2_{a,y,j}\Bar{\xi}^2_{a,y,k}\Big\|_{\psi_{1/2}} \leq \sum_{k:k \neq j} (\lambda_{a,j} - \lambda_{a,k})^{-2}\big\|\Bar{\xi}^2_{a,y,j}\Bar{\xi}^2_{a,y,k}\big\|_{\psi_{1/2}}\\
        \leq \;& \sum_{k:k \neq j} (\lambda_{a,j} - \lambda_{a,k})^{-2}\big\|\Bar{\xi}_{a,y,j}\big\|^2_{\psi_{2}}\big\|\Bar{\xi}_{a,y,k}\big\|^2_{\psi_{2}} \leq \sum_{k:k \neq j} (\lambda_{a,j} - \lambda_{a,k})^{-2} \frac{\lambda_{a,k}\lambda_{a,j}}{\widetilde{n}_{a,y}^2}\\
        \leq \;&  \frac{j^{-\alpha}}{\widetilde{n}_{a,y}^2}\sum_{k:k \neq j}(\lambda_{a,j} - \lambda_{a,k})^{-2}k^{-\alpha} \lesssim \frac{j^2}{\widetilde{n}_{a,y}^2},
    \end{align*}
    where the first inequality follows from triangle inequality, the second inequality follows from standard property of sub-Gaussian random variables \citep[e.g.~Lemma 2.7.7 in][]{vershynin2018high}, the third inequality follows from the fact that for any $k \in \mathbb{N}_0$, $\|\Bar{\xi}_{a,y,k}\big\|_{\psi_{2}} \lesssim \sqrt{\lambda_{a,k}/\widetilde{n}_{a,y}}$, the fourth inequality follows from \Cref{a_data}\ref{a_data_cov} and the last inequality follows from \Cref{l_eigenfunc_sum}. Therefore, by \Cref{l_subWeibuill_property}.1, it holds for any small $\delta >0$ that 
    \begin{align*}
        \mathbb{P}\Big(\Big|\sum_{k:k \neq j} (\lambda_{a,j} - \lambda_{a,k})^{-2}\Bar{\xi}^2_{a,y,j}\Bar{\xi}^2_{a,y,k}\Big| \geq \delta\Big) \lesssim \exp\Big\{-\Big(\frac{\widetilde{n}_{a,y}^2\delta}{j^2}\Big)^{\frac{1}{2}}\Big\}.
    \end{align*}
    Pick $\delta = \widetilde{n}_a^2\epsilon^4/(\widetilde{n}_{a,y}^2j^2)$, we have that 
    \begin{equation} \label{eq_l_eigenfunc_estimation_5}
        \mathbb{P}\Big(\Big|\sum_{k:k \neq j} (\lambda_{a,j} - \lambda_{a,k})^{-2}\Bar{\xi}^2_{a,y,j}\Bar{\xi}^2_{a,y,k}\Big| \geq \frac{\widetilde{n}_a^2\epsilon^4}{\widetilde{n}_{a,y}^2j^2}\Big) \lesssim \exp\Big(-\frac{\widetilde{n}_a\epsilon^2}{j^2}\Big).
    \end{equation}
    Substituting the result in \eqref{eq_l_eigenfunc_estimation_5} into \eqref{eq_l_eigenfunc_estimation_4}, we have that with probability at least $1- \exp(-\widetilde{n}_a\epsilon^2/j^2)$ that 
    \begin{align}\label{eq_l_eigenfunc_estimation_6}
        (I)_2 \lesssim \sum_{y \in \{0,1\}}\frac{\widetilde{n}_{a,y}}{\widetilde{n}_a}\sqrt{\frac{\widetilde{n}_a^2\epsilon^4}{\widetilde{n}_{a,y}^2j^2}}  \lesssim \frac{\epsilon^2}{j} \leq \epsilon,
    \end{align}
    where the last inequality follows from the fact that $j \geq 1$ and $\epsilon <1$.

    \noindent \textbf{Step 2: upper bound on $(II)$.} To control $(II)$, we have that 
    \begin{align} \notag
        (II)^2 & =\sum_{k:k \neq j}(\lambda_{a,j} - \lambda_{a,k})^{-2}\Big[\int\int\big\{\widehat{K}_a(s,t)-K_a(s,t)\big\}\big\{\widehat{\phi}_{a,j}(s)-\phi_{a,j}(s)\big\}\phi_{a,k}(t)\;\mathrm{d}s\;\mathrm{d}t\Big]^2\\ \notag
        &\lesssim J_{\widetilde{n}}^{2(\alpha+1)}\sum_{k =1}^\infty \Big[\int\Big\{ \int\big\{\widehat{K}_a(s,t)-K_a(s,t)\big\}\big\{\widehat{\phi}_{a,j}(s)-\phi_{a,j}(s)\big\}\;\mathrm{d}s \Big\}\phi_{a,k}(t)\;\mathrm{d}t\Big]^2\\ \notag
        & = J_{\widetilde{n}}^{2(\alpha+1)} \int\Big[ \int\big\{\widehat{K}_a(s,t)-K_a(s,t)\big\}\big\{\widehat{\phi}_{a,j}(s)-\phi_{a,j}(s)\big\}\;\mathrm{d}s \Big]^2\;\mathrm{d}t\\ \notag
        & \leq J_{\widetilde{n}}^{2(\alpha+1)} \Big[\int \big\{\widehat{\phi}_{a,j}(s)-\phi_{a,j}(s)\big\}^2 \;\mathrm{d}s\Big] \Big[\int\int \big\{\widehat{K}_a(s,t)-K_a(s,t)\big\}^2\;\mathrm{d}s\;\mathrm{d}t\Big]\\ \notag
        &= J_{\widetilde{n}}^{2(\alpha+1)}\|\widehat{\phi}_{a,j} - \phi_{a,j}\|_{L^2}^2\|\widehat{K}_a - K_a\|_{L^2}^2\\ \notag
        & \leq J_{\widetilde{n}}^{2(\alpha+1)}\frac{\epsilon^2}{j^2}\|\widehat{\phi}_{a,j} - \phi_{a,j}\|_{L^2}^2 \leq (1-2^{-1/2})^2\|\widehat{\phi}_{a,j} - \phi_{a,j}\|_{L^2}^2,
    \end{align}
    where the first inequality follows from $\mathcal{E}_2$ in \eqref{eq_l_eigenfunc_estimation_7}, the first equality follows from Parseval's identity, the second inequality follows from Cauchy--Schwartz inequality, the third inequality follows from $\mathcal{E}_1$ and the last inequality follows for all $\epsilon$ such that $0 < \epsilon \lesssim jJ_{\widetilde{n}}^{-(\alpha+1)}$. Therefore, we have that 
    \begin{equation} \label{eq_l_eigenfunc_estimation_8}
        \mathbb{P}\Big\{(II) \geq (1-2^{-1/2})\|\widehat{\phi}_{a,j} - \phi_{a,j}\|_{L^2}\Big\} \lesssim \exp\Big(-\frac{\widetilde{n}_a\epsilon^2}{j^2}\Big).
    \end{equation}

    \noindent \textbf{Step 3: Combine results together.} Substituting the results in \eqref{eq_l_eigenfunc_estimation_3}, \eqref{eq_l_eigenfunc_estimation_6} and \eqref{eq_l_eigenfunc_estimation_8} into \eqref{eq_eigenfunc_estimation_3} and applying a union bound argument, it holds with probability at least $1-\exp(-\widetilde{n}_a\epsilon^2/j^2)$ that 
    \begin{align*}
        \|\widehat{\phi}_{a,j} - \phi_{a,j}\|_{L^2} \leq \epsilon + \epsilon +(1-2^{-1/2})\|\widehat{\phi}_{a,j} - \phi_{a,j}\|_{L^2},
    \end{align*}
    which implies that $\|\widehat{\phi}_{a,j} - \phi_{a,j}\|_{L^2} \lesssim \epsilon$ and the lemma thus follows.

\end{proof}

\subsection{Eigenvalue}
\begin{lemma} \label{l_eigenvalue}
    Under Assumptions \ref{a_class_prob} and \ref{a_data}\ref{a_data_cov}, for any small constant $\eta \in (0,1/2)$, it holds with probability at least $1-\eta$ that, for any $j \in [J]$ such that $J^{2\alpha+2}\log(\widetilde{n}/\eta) \lesssim \widetilde{n}_a$,
    \begin{align*}
        |\widehat{\lambda}_{a,j} - \lambda_{a,j}| \lesssim \sqrt{\frac{j^{-2\alpha}\log(\widetilde{n}_a/\eta)}{\widetilde{n}_a}}.
    \end{align*}
\end{lemma}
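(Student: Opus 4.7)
The naive application of Weyl's inequality combined with \Cref{l_average_cov} would only give $|\widehat{\lambda}_{a,j} - \lambda_{a,j}| \lesssim \sqrt{\log(1/\eta)/\widetilde{n}_a}$, which is missing the desired $j^{-\alpha}$ factor. The refinement comes from observing that the error $\widehat{K}_a - K_a$ contracts like $\lambda_{a,j} \asymp j^{-\alpha}$ when projected along the eigenfunction $\phi_{a,j}$. The plan is to extract the following scalar identity: starting from the eigen-equation $\widehat{K}_a\widehat{\phi}_{a,j} = \widehat{\lambda}_{a,j}\widehat{\phi}_{a,j}$, take the inner product with $\phi_{a,j}$, use self-adjointness to write $\langle \phi_{a,j}, \widehat{K}_a \widehat{\phi}_{a,j}\rangle_{L^2} = \langle \widehat{K}_a \phi_{a,j}, \widehat{\phi}_{a,j}\rangle_{L^2}$, and expand $\widehat{K}_a\phi_{a,j} = \lambda_{a,j}\phi_{a,j} + (\widehat{K}_a - K_a)\phi_{a,j}$. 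This yields the exact identity
\[
(\widehat{\lambda}_{a,j} - \lambda_{a,j})\,\langle \phi_{a,j}, \widehat{\phi}_{a,j}\rangle_{L^2} = \langle (\widehat{K}_a - K_a)\phi_{a,j}, \widehat{\phi}_{a,j}\rangle_{L^2}.
\]

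Fixing signs so that $\langle \phi_{a,j}, \widehat{\phi}_{a,j}\rangle_{L^2} \geq 0$, the next step is to show the denominator is bounded away from zero. Since $\|\widehat{\phi}_{a,j}-\phi_{a,j}\|_{L^2}^2 = 2(1-\langle \phi_{a,j}, \widehat{\phi}_{a,j}\rangle_{L^2})$, \Cref{l_eigenfunc_estimation} applied with $\epsilon \asymp j\sqrt{\log(\widetilde{n}/\eta)/\widetilde{n}_a}$ (which lies in the admissible range $\epsilon \lesssim j J^{-(\alpha+1)}$ precisely under the hypothesis $J^{2\alpha+2}\log(\widetilde{n}/\eta) \lesssim \widetilde{n}_a$) gives $\|\widehat{\phi}_{a,j}-\phi_{a,j}\|_{L^2} \lesssim j\sqrt{\log(\widetilde{n}/\eta)/\widetilde{n}_a} \ll 1$ with probability at least $1-\eta/2$, so $\langle \phi_{a,j}, \widehat{\phi}_{a,j}\rangle_{L^2} \geq 1/2$ and can be absorbed into the constant.

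The numerator then splits as
\[
\langle (\widehat{K}_a - K_a)\phi_{a,j}, \widehat{\phi}_{a,j}\rangle_{L^2} = \langle (\widehat{K}_a - K_a)\phi_{a,j}, \phi_{a,j}\rangle_{L^2} + \langle (\widehat{K}_a - K_a)\phi_{a,j}, \widehat{\phi}_{a,j} - \phi_{a,j}\rangle_{L^2}.
\]
The leading term is exactly the diagonal case $k=j$ in \Cref{l_cov_proj_1} (the bound used to build event $\mathcal{E}_4$ in the proofs of Lemmas \ref{l_Q2_1_int} and \ref{l_Q2_2_int}), controlling it by $j^{-\alpha}\sqrt{\log(\widetilde{n}/\eta)/\widetilde{n}_a}$ with high probability. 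For the cross term, Cauchy--Schwarz together with the operator bound $\|\widehat{K}_a - K_a\|_{\mathrm{op}} \leq \|\widehat{K}_a - K_a\|_{L^2} \lesssim \sqrt{\log(1/\eta)/\widetilde{n}_a}$ from \Cref{l_average_cov} and the eigenfunction bound above gives at most $j\log(\widetilde{n}/\eta)/\widetilde{n}_a$. The hypothesis $J^{2\alpha+2}\log(\widetilde{n}/\eta) \lesssim \widetilde{n}_a$ is exactly what is required so that $j^{\alpha+1}\sqrt{\log(\widetilde{n}/\eta)/\widetilde{n}_a} \lesssim 1$, making the cross term dominated by the leading term. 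A union bound over the (at most three) high-probability events and then over $j \in [J]$ (absorbing the $\log J$ into $\log(\widetilde{n}/\eta)$) completes the argument.

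The main obstacle is keeping track of the second-order terms while preserving the factor $j^{-\alpha}$: the naive triangle inequality on $\widehat{K}_a - K_a$ loses this factor, so the identity above is essential, and the condition on $J$ is tight enough that any further slack in the cross term would spoil the rate. The other steps are essentially bookkeeping: applying already-stated lemmas (\ref{l_average_cov}, \ref{l_eigenfunc_estimation}, \ref{l_cov_proj_1}) and combining them through a union bound.
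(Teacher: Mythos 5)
Your proof is correct and follows essentially the same route as the paper: decompose the eigenvalue perturbation into a diagonal term $\langle(\widehat{K}_a-K_a)\phi_{a,j},\phi_{a,j}\rangle_{L^2}$ controlled by \Cref{l_cov_proj_1}, and a cross term controlled via Cauchy--Schwarz together with \Cref{l_average_cov} and \Cref{l_eigenfunc_estimation}, with the hypothesis $J^{2\alpha+2}\log(\widetilde n/\eta)\lesssim\widetilde n_a$ used in exactly the same place to make the cross term subordinate. The only (cosmetic) difference is that you derive the core identity $(\widehat{\lambda}_{a,j}-\lambda_{a,j})\langle\phi_{a,j},\widehat{\phi}_{a,j}\rangle_{L^2}=\langle(\widehat{K}_a-K_a)\phi_{a,j},\widehat{\phi}_{a,j}\rangle_{L^2}$ directly from the eigen-equation and self-adjointness, whereas the paper cites the equivalent perturbation bound in \Cref{l_eigen_split}.
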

\begin{proof}
     By \Cref{l_eigen_split} and the triangle inequality, for any $j \in [J]$, it holds that
    \begin{align*}
       &\big(1-\|\widehat{\phi}_{a,j}-\phi_{a,j}\|_{L^2}\big)|\widehat{\lambda}_{a,j} - \lambda_{a,j}|\\
       \leq\;& \Big|\int\int\big\{\widehat{K}_a(s,t)-K_a(s,t)\big\}\phi_{a,j}(s)\phi_{a,j}(t)\;\mathrm{d}s\;\mathrm{d}t\Big|\\
       &+ \|\widehat{\phi}_{a,j}-\phi_{a,j}\|_{L^2}\Big\|\int\big\{\widehat{K}_a(s,t)-K_a(s,t)\big\}\phi_{a,j}(s)\;\mathrm{d}s\Big\|_{L^2}\\
       \leq\;& \Big|\int\int\big\{\widehat{K}_a(s,t)-K_a(s,t)\big\}\phi_{a,j}(s)\phi_{a,j}(t)\;\mathrm{d}s\;\mathrm{d}t\Big| + \|\widehat{\phi}_{a,j}-\phi_{a,j}\|_{L^2}\Big\|\widehat{K}_a-K_a\Big\|_{L^2},
    \end{align*}
    where the last inequality follows from the fact that 
    \[\sup_{j\in [J]} \Big\|\int\big\{\widehat{K}_a(s,t)-K_a(s,t)\big\}\phi_{a,j}(s)\;\mathrm{d}s\Big\|_{L^2} \leq \Big\|\widehat{K}_a-K_a\Big\|_{L^2}.\]
    Therefore, by Lemmas \ref{l_average_cov}, \ref{l_eigenfunc_estimation} and \ref{l_cov_proj_1} and a union bound argument, we have that with probability at least $1-3\eta$, 
    \begin{align} \label{eq_eigenvalue_error}
        |\widehat{\lambda}_{a,j} - \lambda_{a,j}| \lesssim \sqrt{\frac{j^{-2\alpha}\log(\widetilde{n}_a/\eta)}{\widetilde{n}_a}} +\sqrt{\frac{j^2\log^2(\widetilde{n}_a/\eta)}{\widetilde{n}_a^2}} \lesssim \sqrt{\frac{j^{-2\alpha}\log(\widetilde{n}_a/\eta)}{\widetilde{n}_a}}, \quad \text{for all} \; j\in [J],
    \end{align}
    where the last inequality follows since $J^{2\alpha+2}\log(\widetilde{n}_a/\eta) \lesssim \widetilde{n}_a$.
\end{proof}

\subsection{Projection of difference between covariance function and its estimator}
\begin{lemma} \label{l_cov_proj_1}
    Assume Assumptions \ref{a_class_prob} and \ref{a_data}\ref{a_data_cov}~hold. Then for $\ell,k \in \mathbb{N}_+$ and $0<\epsilon \lesssim \sqrt{\widetilde{n}_{a,y}(\ell k)^{-\alpha}/\widetilde{n}_a}$, it holds that 
    \[\mathbb{P}\Big\{\Big|\int \int \big\{\widehat{K}_a(s,t) - K_a(s,t)\big\}\phi_{a,\ell}(s) \phi_{a,k}(t) \;\mathrm{d}s\;\mathrm{d}t\Big| \geq \epsilon\Big\} \lesssim \exp\Big(-\frac{\widetilde{n}_a\epsilon^2}{(\ell k)^{-\alpha}}\Big).\]
\end{lemma}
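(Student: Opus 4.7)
The plan is to expand the projection of $\widehat{K}_a-K_a$ onto the eigenbasis using the decomposition already established in \eqref{l_average_cov_4}, and then apply Bernstein's inequality to the leading (sub-exponential) term while showing the bias-type term is of strictly lower order in the given range of $\epsilon$.

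First, I would substitute \eqref{l_average_cov_4} and use that $\int \{\widetilde X^i_{a,y}(s)-\mu_{a,y}(s)\}\phi_{a,j}(s)\,\mathrm{d}s=\xi^i_{a,y,j}\sim N(0,\lambda_{a,j})$ (independent across $i$ and jointly Gaussian across $j$) to write
\begin{align*}
\int\!\!\int\{\widehat{K}_a-K_a\}(s,t)\phi_{a,\ell}(s)\phi_{a,k}(t)\,\mathrm{d}s\,\mathrm{d}t
&=\underbrace{\frac{1}{\widetilde n_a}\sum_{y\in\{0,1\}}\sum_{i=1}^{\widetilde n_{a,y}}\bigl(\xi^i_{a,y,\ell}\xi^i_{a,y,k}-\lambda_{a,\ell}\indc\{\ell=k\}\bigr)}_{=:A} \\
&\quad-\underbrace{\frac{1}{\widetilde n_a}\sum_{y\in\{0,1\}}\widetilde n_{a,y}\,\bar\xi_{a,y,\ell}\bar\xi_{a,y,k}}_{=:B},
\end{align*}
where $\bar\xi_{a,y,j}=\widetilde n_{a,y}^{-1}\sum_i \xi^i_{a,y,j}\sim N(0,\lambda_{a,j}/\widetilde n_{a,y})$.

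For the main term $A$, each summand $Z^i_{a,y}:=\xi^i_{a,y,\ell}\xi^i_{a,y,k}-\lambda_{a,\ell}\indc\{\ell=k\}$ is a centered product of two jointly Gaussian variables, hence sub-exponential with $\|Z^i_{a,y}\|_{\psi_1}\lesssim \|\xi^i_{a,y,\ell}\|_{\psi_2}\|\xi^i_{a,y,k}\|_{\psi_2}\lesssim \sqrt{\lambda_{a,\ell}\lambda_{a,k}}\asymp(\ell k)^{-\alpha/2}$ by \Cref{a_data}\ref{a_data_cov}, and with variance $\mathrm{Var}(Z^i_{a,y})\lesssim \lambda_{a,\ell}\lambda_{a,k}\asymp(\ell k)^{-\alpha}$. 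Bernstein's inequality for sub-exponential variables \citep[e.g.~Corollary 2.8.3 in][]{vershynin2018high} then yields
\[
\mathbb{P}\bigl(|A|\ge \epsilon/2\bigr)\lesssim \exp\!\left\{-c\,\widetilde n_a\min\!\left(\frac{\epsilon^2}{(\ell k)^{-\alpha}},\frac{\epsilon}{(\ell k)^{-\alpha/2}}\right)\right\}.
\]
The regime in which the sub-Gaussian exponent is active is $\epsilon\lesssim(\ell k)^{-\alpha/2}$, which is precisely what the stated condition $\epsilon\lesssim\sqrt{\widetilde n_{a,y}(\ell k)^{-\alpha}/\widetilde n_a}\asymp(\ell k)^{-\alpha/2}$ (using $\widetilde n_{a,y}/\widetilde n_a\asymp 1$ from \Cref{a_class_prob} and \Cref{l_n_ay}) provides. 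This gives the claimed bound for $A$.

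For the bias term $B$, each factor $\bar\xi_{a,y,j}$ is sub-Gaussian with norm $\lesssim\sqrt{\lambda_{a,j}/\widetilde n_{a,y}}$, so $\bar\xi_{a,y,\ell}\bar\xi_{a,y,k}$ is sub-exponential with $\psi_1$-norm $\lesssim (\ell k)^{-\alpha/2}/\widetilde n_{a,y}$. Hence $\|B\|_{\psi_1}\lesssim (\ell k)^{-\alpha/2}/\widetilde n_a$, and a sub-exponential tail bound yields
\[
\mathbb{P}(|B|\ge\epsilon/2)\lesssim\exp\!\Bigl(-c\,\widetilde n_a\,\epsilon\,(\ell k)^{\alpha/2}\Bigr),
\]
which in the range $\epsilon\lesssim (\ell k)^{-\alpha/2}$ is dominated by $\exp(-c\widetilde n_a\epsilon^2(\ell k)^{\alpha})$. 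A union bound between $A$ and $B$ closes the argument.

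The one step that needs care is the matching of the admissible range of $\epsilon$ with the cross-over in Bernstein's inequality: the hypothesis $\epsilon\lesssim\sqrt{\widetilde n_{a,y}(\ell k)^{-\alpha}/\widetilde n_a}$ is exactly calibrated to keep us in the sub-Gaussian regime so that the variance term $(\ell k)^{-\alpha}$ drives the tail; outside this range only the weaker sub-exponential rate would hold. Beyond this, the computation is routine and the control of $B$ presents no obstacle.
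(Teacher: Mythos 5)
Your proof is correct and takes essentially the same route as the paper: both start from the decomposition in \eqref{l_average_cov_4} and control the two resulting terms with sub-exponential tail bounds, with the hypothesis on $\epsilon$ serving precisely to keep the Bernstein exponent in its sub-Gaussian regime. The only implementation differences are cosmetic — you apply Bernstein once to the full sum of $\widetilde n_a$ products and bound $\bar\xi_{a,y,\ell}\bar\xi_{a,y,k}$ directly via its $\psi_1$-norm, whereas the paper applies Bernstein per group and bounds each factor $\bar\xi_{a,y,\ell}$, $\bar\xi_{a,y,k}$ separately with a sub-Gaussian tail before taking the product — but these give the same exponent and both rely on $\widetilde n_{a,y}\le\widetilde n_a$ to reduce the constraint to $\epsilon\lesssim(\ell k)^{-\alpha/2}$.
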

\begin{proof}
    Note that 
    \begin{align*}
        \widehat{K}_a (s,t) - K_a(s,t) = \frac{1}{\widetilde{n}_a}\sum_{y \in \{0,1\}} &\sum_{i=1}^{n_{a,y}}\big\{\widetilde{X}^i_{a,y}(s) - \widehat{\mu}_{a,y}(s)\big\}\big\{\widetilde{X}^i_{a,y}(t) - \widehat{\mu}_{a,y}(t)\big\} \\
        &- \mathbb{E}\Big\{\big\{\widetilde{X}^i_{a,y}(s) - \mu_{a,y}(s)\big\}\big\{\widetilde{X}^i_{a,y}(t) - \mu_{a,y}(t)\big\}\Big\}
    \end{align*}
    We have that for any $k, \ell \in \mathbb{N}_+$, 
    \begin{align}  \notag
        &\int \int \big\{\widehat{K}_a(s,t) - K_a(s,t)\big\}\phi_{a,\ell}(s) \phi_{a,k}(t) \;\mathrm{d}s\;\mathrm{d}t\\ \notag
        \asymp \;& \frac{1}{\widetilde{n}_a}\sum_{y \in \{0,1\}}\sum_{i=1}^{n_{a,y}} (\xi^i_{a,y,\ell}-\Bar{\xi}_{a,y,\ell})(\xi^i_{a,y,k}-\Bar{\xi}_{a,y,k}) - \lambda_{a,\ell}\indc\{\ell=k\}\\ \label{l_cov_proj_1_eq1}
        \asymp \;& \sum_{y \in \{0,1\}}\frac{\widetilde{n}_{a,y}}{\widetilde{n}_a} \Big\{\frac{1}{\widetilde{n}_{a,y}}\sum_{i=1}^{n_{a,y}}\big(\xi^i_{a,y,\ell}\xi^i_{a,y,k} -  \lambda_{a,\ell}\indc\{\ell=k\}\big)\Big\} - \sum_{y \in \{0,1\}}\frac{\widetilde{n}_{a,y}}{\widetilde{n}_a}\Big\{\frac{1}{\widetilde{n}_{a,y}}\sum_{i=1}^{\widetilde{n}_{a,y}}\xi^i_{a,y,\ell}\Big\}\Big\{\frac{1}{\widetilde{n}_{a,y}}\sum_{j=1}^{\widetilde{n}_{a,y}}\xi^j_{a,y,k}\Big\},
    \end{align}
    where for $\ell \in \mathbb{N}_+$, $\xi^i_{a,y,\ell} = \int \{\widetilde{X}^i_{a,y}(s) - \mu_{a,y}(s)\}\phi_{a,\ell}(s) \;\mathrm{d}s \sim N(0,\lambda_{a,\ell})$, $\Bar{\xi}_{a,y,\ell} =\widetilde{n}_{a,y}^{-1}\sum_{i=1}^{\widetilde{n}_{a,y}}\xi^i_{a,y,\ell}$. By standard properties of sub-Gaussian random variables \citep[e.g.~Lemma 2.7.7 in][]{vershynin2018high} and \Cref{a_data}\ref{a_data_cov}, we have that $\|\xi^i_{a,y,\ell}\xi^i_{a,y,k}\|_{\psi_1} \leq \sqrt{\lambda_{a,\ell}\lambda_{a,k}} \asymp (\ell k)^{-\alpha/2}$ for any $y \in \{0,1\}$ and $i \in [\widetilde{n}_{a,y}]$. Using the standard property of the covariance operator, it holds that
    \begin{align*}
        \mathbb{E}\big(\xi^i_{a,y,\ell}\xi^i_{a,y,k}\big) &= \mathbb{E}\Big[\int \int \{\widetilde{X}^i_{a,y}(s) - \mu_{a,y}(s)\}\{\widetilde{X}^i_{a,y}(t) - \mu_{a,y}(t)\}\phi_{a,\ell}(s)\phi_{a,\ell}(t)\;\mathrm{d}s\;\mathrm{d}t\Big]\\
        &= \int \int K_a(s,t)\phi_{a,\ell}(t)\;\mathrm{d}s\;\mathrm{d}t = \lambda_{a,\ell}\indc\{\ell=k\}.
    \end{align*}
    Hence, for $y \in \{0,1\}$ it holds from Bernstein’s inequality \citep[e.g.~Theorem 2.8.1 in][]{vershynin2018high}, we have that for any $\delta_{1,y} >0$,
    \[\mathbb{P}\Big\{\Big|\frac{1}{\widetilde{n}_{a,y}}\sum_{i=1}^{n_{a,y}}\big(\xi^i_{a,y,\ell}\xi^i_{a,y,k} -  \lambda_{a,\ell}\indc\{\ell=k\}\big)\Big| \geq \delta_{1,y} \Big\} \lesssim \exp\Big\{-\Big(\frac{\widetilde{n}_{a,y}\delta_{1,y} ^2}{(\ell k)^{-\alpha}} \wedge \frac{\widetilde{n}_{a,y}\delta_{1,y} }{(\ell k)^{-\alpha/2}}\Big)\Big\}.\]
    Moreover, it holds from General Hoeffding inequality \citep[e.g.~Theorem 2.6.2 in][]{vershynin2018high} that for any $\delta_{2,y},\delta_{3,y} > 0$, 
    \[\mathbb{P}\Big\{\Big|\frac{1}{\widetilde{n}_{a,y}}\sum_{i=1}^{\widetilde{n}_{a,y}}\xi^i_{a,y,\ell}\Big| \geq \delta_{2,y}\Big\} \lesssim \exp\Big(-\frac{\widetilde{n}_{a,y}\delta^2_{2,y}}{\ell^{-\alpha}}\Big),\]
    and
    \[\mathbb{P}\Big\{\Big|\frac{1}{\widetilde{n}_{a,y}}\sum_{i=1}^{\widetilde{n}_{a,y}}\xi^i_{a,y,k}\Big| \geq \delta_{3,y}\Big\} \lesssim \exp\Big(-\frac{\widetilde{n}_{a,y}\delta^2_{3,y}}{k^{-\alpha}}\Big).\]
    Pick
    \[\delta_{1,y} = \epsilon\sqrt{\frac{\widetilde{n}_a}{\widetilde{n}_{a,y}}}, \quad \delta_{2,y} = \epsilon\sqrt{\frac{\widetilde{n}_a}{\widetilde{n}_{a,y}k^{-\alpha}}},\quad \text{and} \quad \delta_{3,y} = \epsilon\sqrt{\frac{\widetilde{n}_a}{\widetilde{n}_{a,y}\ell^{-\alpha}}},\]
    and by a union-bound argument, we have that 
    \begin{align*}
        \mathbb{P}\Big\{\Big|\int \int \big\{\widehat{K}_a(s,t) - K_a(s,t)\big\}\phi_{a,\ell}(s) \phi_{a,k}(t) \;\mathrm{d}s\;\mathrm{d}t\Big| \leq \sum_{y\in \{0,1\}}& \frac{\widetilde{n}_{a,y}}{\widetilde{n}_a}\big(\delta_{1,y} + \delta_{2,y}\delta_{3,y}\big)\Big\} \\
        &\gtrsim 1-\exp\Big(-\frac{\widetilde{n}_a\epsilon^2}{(\ell k)^{-\alpha}}\Big).
    \end{align*}
    Note that
    \begin{align*}
        \sum_{a,y\in \{0,1\}}\frac{\widetilde{n}_{a,y}}{\widetilde{n}_a}\big(\delta_{1,y} + \delta_{2,y}\delta_{3,y}\big) &= \sum_{a,y\in \{0,1\}}\frac{\widetilde{n}_{a,y}}{\widetilde{n}_a}\Big\{\epsilon\sqrt{\frac{\widetilde{n}_a}{\widetilde{n}_{a,y}}}+\epsilon^2\frac{\widetilde{n}_a}{\widetilde{n}_{a,y}}\sqrt{\frac{1}{(\ell k)^{-\alpha}}}\Big\}\\
        & \lesssim \epsilon + \epsilon^{3/2} \sqrt{\frac{\epsilon}{(\ell k)^{-\alpha}}} \lesssim \epsilon,
    \end{align*}
    where the last inequality follows from the fact that $\epsilon \lesssim \sqrt{\widetilde{n}_{a,y}(\ell k)^{-\alpha}/\widetilde{n}_a}$, hence 
    \[\sqrt{\frac{\epsilon}{(\ell k)^{-\alpha}}} \lesssim \sqrt{\frac{\widetilde{n}_{a,y}}{\widetilde{n}_a}}\lesssim 1\]
    and the lemma follows.
\end{proof}

\begin{lemma} \label{l_cov_proj_2}
    Assume Assumptions \ref{a_class_prob} and \ref{a_data}\ref{a_data_cov}~hold. For $j \in \mathbb{N}_+$ and $0< \epsilon \lesssim j^{-\alpha/2}$, we have that 
    \[\mathbb{P}\Big\{\Big\|\int\big\{ K_a(s,t)-\widehat{K}_a(s,t) \big\}\phi_{a,j}(t)\;\mathrm{d}t\Big\|_{L^2} \geq \epsilon \Big\} \lesssim \exp\Big(-\frac{\widetilde{n}_a\epsilon^2}{j^{-\alpha}}\Big).\]
\end{lemma}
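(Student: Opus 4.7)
The plan is to mirror the structure used for Lemma~\ref{l_average_cov}, but now project in the second coordinate against $\phi_{a,j}$ before applying concentration. Writing $g_j(s):=\int\{K_a(s,t)-\widehat{K}_a(s,t)\}\phi_{a,j}(t)\,\mathrm{d}t$ and using the decomposition \eqref{l_average_cov_4} of $\widehat{K}_a-K_a$, we obtain
\[
 g_j(s) \;=\; -\frac{1}{\widetilde{n}_a}\sum_{y\in\{0,1\}}\sum_{i=1}^{\widetilde{n}_{a,y}} Z_{a,y}^i(s) \;+\; \sum_{y\in\{0,1\}}\frac{\widetilde{n}_{a,y}}{\widetilde{n}_a}\big\{\widehat{\mu}_{a,y}(s)-\mu_{a,y}(s)\big\}\bar{\xi}_{a,y,j},
\]
where $Z_{a,y}^i(s):=\{\widetilde{X}_{a,y}^i(s)-\mu_{a,y}(s)\}\xi_{a,y,j}^i-\lambda_{a,j}\phi_{a,j}(s)$ with $\xi_{a,y,j}^i=\int\{\widetilde{X}_{a,y}^i(t)-\mu_{a,y}(t)\}\phi_{a,j}(t)\,\mathrm{d}t\sim N(0,\lambda_{a,j})$. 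A direct calculation using $\int K_a(s,t)\phi_{a,j}(t)\,\mathrm{d}t=\lambda_{a,j}\phi_{a,j}(s)$ shows $\mathbb{E}[Z_{a,y}^i]=0$, so the first summand is a centered i.i.d.\ sum in the Hilbert space $L^2([0,1])$.

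For the main i.i.d.\ sum, I would apply Bosq's Theorem~2.5 in $L^2([0,1])$. The input is a Bernstein-type moment bound. Starting from
\[
 \|Z_{a,y}^i\|_{L^2}^2 \;\le\; (\xi_{a,y,j}^i)^2\,\|\widetilde{X}_{a,y}^i-\mu_{a,y}\|_{L^2}^2 + \lambda_{a,j}^2,
\]
Cauchy--Schwarz together with Gaussian moment bounds give $\mathbb{E}[(\xi_{a,y,j}^i)^{2b}]\lesssim b!\,(2\lambda_{a,j})^b$ and, since $\|\widetilde{X}_{a,y}^i-\mu_{a,y}\|_{L^2}^2=\sum_k\lambda_{a,k}Z_k^2$ with $Z_k\sim N(0,1)$ i.i.d.\ and $\sum_k\lambda_{a,k}<\infty$ under Assumption~\ref{a_data}\ref{a_data_cov}, $\mathbb{E}\|\widetilde{X}_{a,y}^i-\mu_{a,y}\|_{L^2}^{2b}\lesssim C^b\,b!$. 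Combining these yields
\[
 \mathbb{E}\|Z_{a,y}^i\|_{L^2}^b \;\le\; \frac{b!}{2}\,L_1\,L_2^{b-2},\qquad L_1\asymp\lambda_{a,j}\asymp j^{-\alpha},\quad L_2\asymp\sqrt{\lambda_{a,j}}\asymp j^{-\alpha/2},
\]
so Bosq's inequality delivers
\[
 \mathbb{P}\Big\{\big\|\tfrac{1}{\widetilde{n}_a}\sum_{y,i}Z_{a,y}^i\big\|_{L^2}\ge \epsilon/2\Big\} \;\lesssim\; \exp\Big(-\frac{\widetilde{n}_a\epsilon^2}{c_1 L_1+c_2 L_2\epsilon}\Big) \;\lesssim\; \exp\Big(-\frac{\widetilde{n}_a\epsilon^2}{j^{-\alpha}}\Big),
\]
where the last step uses $L_2\epsilon\lesssim L_1$, which is precisely the assumption $\epsilon\lesssim j^{-\alpha/2}$.

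For the mean-correction term, its $L^2$ norm is bounded by $\sum_y(\widetilde{n}_{a,y}/\widetilde{n}_a)\|\widehat{\mu}_{a,y}-\mu_{a,y}\|_{L^2}|\bar{\xi}_{a,y,j}|$. Lemma~\ref{l_mean_cov_estimation} gives $\mathbb{P}(\|\widehat{\mu}_{a,y}-\mu_{a,y}\|_{L^2}\ge t_1)\lesssim\exp(-\widetilde{n}_{a,y}t_1^2)$, while $\bar{\xi}_{a,y,j}\sim N(0,\lambda_{a,j}/\widetilde{n}_{a,y})$ yields $\mathbb{P}(|\bar{\xi}_{a,y,j}|\ge t_2)\lesssim\exp(-\widetilde{n}_{a,y}t_2^2/\lambda_{a,j})$. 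Conditioning on the event from Lemma~\ref{l_n_ay} to identify $\widetilde{n}_{a,y}\asymp\widetilde{n}_a$, and choosing $t_1\asymp\epsilon j^{\alpha/2}$ and $t_2\asymp\epsilon$, both tails become $\exp(-\widetilde{n}_a\epsilon^2/j^{-\alpha})$, while the product $t_1 t_2\asymp\epsilon^2 j^{\alpha/2}\lesssim\epsilon$ exactly under the hypothesis $\epsilon\lesssim j^{-\alpha/2}$. A union bound over the main and correction pieces then gives the stated inequality.

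The hard part will be verifying the moment bound $\mathbb{E}\|Z_{a,y}^i\|_{L^2}^b\le (b!/2)L_1 L_2^{b-2}$ in the exact Bernstein form required by Bosq's theorem: the product $(\xi_{a,y,j}^i)^2\|\widetilde{X}_{a,y}^i-\mu_{a,y}\|_{L^2}^2$ couples a single Gaussian score with an infinite quadratic form, so one must track the correct power of $\lambda_{a,j}$ (which dictates the final rate $j^{-\alpha}$) without losing a factor when splitting via Cauchy--Schwarz. Everything else is a bookkeeping exercise analogous to Lemmas~\ref{l_average_cov} and~\ref{l_cov_proj_1}.
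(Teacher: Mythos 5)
Your proof is correct and rests on the same engine as the paper's: apply Bosq's Theorem 2.5 (Lemma~\ref{l_bosq_thm2.5}) to the centred summands $Z_{a,y}^i(s)=\{\widetilde X_{a,y}^i(s)-\mu_{a,y}(s)\}\xi_{a,y,j}^i-\lambda_{a,j}\phi_{a,j}(s)$, identify $L_1\asymp\lambda_{a,j}\asymp j^{-\alpha}$ and $L_2\asymp\sqrt{\lambda_{a,j}}\asymp j^{-\alpha/2}$, and observe that the restriction $\epsilon\lesssim j^{-\alpha/2}$ makes $L_2\epsilon\lesssim L_1$ so the Bernstein bound collapses to $\exp(-\widetilde n_a\epsilon^2/j^{-\alpha})$. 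The only substantive difference lies in the moment-bound derivation. The paper expands $\|Z_{a,y}^i\|_{L^2}^2=\sum_\ell\lambda_{a,j}\lambda_{a,\ell}(Z_{i\ell}Z_{ij}-\indc\{\ell=j\})^2$ and applies Jensen with weights $\lambda_{a,\ell}/\sum_k\lambda_{a,k}$ directly. You instead exploit the exact identity $\|Z_{a,y}^i\|_{L^2}^2=(\xi_{a,y,j}^i)^2\|\widetilde X_{a,y}^i-\mu_{a,y}\|_{L^2}^2-2\lambda_{a,j}(\xi_{a,y,j}^i)^2+\lambda_{a,j}^2\le(\xi_{a,y,j}^i)^2\|\widetilde X_{a,y}^i-\mu_{a,y}\|_{L^2}^2+\lambda_{a,j}^2$ (the middle cross term is nonpositive, so your inequality holds without an extra factor of $2$), then split by Cauchy--Schwarz and bound the two factors through Gaussian moments and a weighted chi-squared bound; this is a bit more transparent about why $\lambda_{a,j}^{b/2}$ drives the rate. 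You are also more explicit than the paper about the mean-correction term $-\sum_y(\widetilde n_{a,y}/\widetilde n_a)\{\widehat\mu_{a,y}-\mu_{a,y}\}\bar\xi_{a,y,j}$, which the paper leaves to the ``even simpler argument'' remark; your two-parameter union bound with $t_1\asymp\epsilon j^{\alpha/2}$, $t_2\asymp\epsilon$ correctly recovers the needed rate and pinpoints that the hypothesis $\epsilon\lesssim j^{-\alpha/2}$ is exactly what makes $t_1t_2\lesssim\epsilon$. Overall, same key lemma and rate, with a somewhat cleaner and more self-contained moment computation.
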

\begin{proof}
    The proof follows from a similar and even simpler argument as the one used in \textbf{Step 1-1} in the proof of \Cref{l_eigenfunc_estimation}. We only include the difference in the following. With the same notation as the one used in the proof of \Cref{l_eigenfunc_estimation}, it holds that 
    \begin{align*}
        &\mathbb{E}\Big\{\Big\|\big\{\widetilde{X}_{a,y}^i(\cdot)-\mu_{a,y}(\cdot)\big\}\xi_{a,y,j}^i-\lambda_{a,j}\phi_{a,j}(\cdot)\Big\|_{L^2}^b\Big\} = \mathbb{E}\Big[\Big\{\sum_{\ell=1}^\infty \big(\xi_{a,y,\ell}^i\xi_{a,y,j}^i - \lambda_{a,j}\indc\{\ell=j\}\big)^2\Big\}^{\frac{b}{2}}\Big]\\
        = \;&\mathbb{E}\Big[\Big\{\sum_{\ell=1}^\infty \lambda_{a,j}\lambda_{a,\ell}\big(Z_{i\ell}Z_{ij} - \indc\{\ell=j\}\big)^2\Big\}^{\frac{b}{2}}\Big] \leq j^{-\frac{\alpha b}{2}}\Big(\sum_{\ell=1}^\infty \lambda_{a,\ell}\Big)^{\frac{b}{2}-1}\sum_{\ell=1}^\infty \mathbb{E}\Big\{\Big(Z_{i\ell}Z_{ij} - \indc\{\ell=j\}\Big)^b\Big\}\\
        \lesssim\;& b!j^{-\alpha}(j^{-\frac{\alpha}{2}})^{b-2}.
    \end{align*}
    Therefore, by picking $L_1 = j^{-\alpha}$ and $L_2 = j^{-\frac{\alpha}{2}}$ in \Cref{l_bosq_thm2.5}, we have that 
    \[\mathbb{P}\Big\{\Big\|\int\big\{ K_a(s,t)-\widehat{K}_a(s,t) \big\}\phi_{a,j}(t)\;\mathrm{d}t\Big\|_{L^2} \geq \epsilon \Big\} \lesssim \exp\Big(-\frac{\widetilde{n}_a\epsilon^2}{j^{-\alpha} + j^{-\frac{\alpha}{2}}\epsilon}\Big).\]
    The lemma thus holds by picking $\epsilon$ satisfying $j^{-\alpha} \gtrsim j^{-\frac{\alpha}{2}}\epsilon$.
\end{proof}

\subsection{Projection score}
\begin{lemma}\label{l_score_const}
    Under  Assumptions \ref{a_class_prob} and \ref{a_data}\ref{a_data_cov}, we have with probability at least $1-\eta$ that, for any $j \in [J]$ such that $J^{2\alpha+2}\log^2(J)\log(\widetilde{n}/\eta) \lesssim \widetilde{n}$,
    \begin{equation*}
    |\widehat{\theta}_{a,1,j} -\widehat{\theta}_{a,0,j}-(\theta_{a,1,j} - \theta_{a,0,j})| \lesssim \begin{cases}
        \sqrt{\frac{j^{2-2\beta}\log^2(j)\log(\widetilde{n}/\eta)}{\widetilde{n}}}, \quad &\text{when}\; \frac{\alpha+1}{2} < \beta \leq \frac{\alpha+2}{2},\\
        \sqrt{\frac{j^{-\alpha}\log(\widetilde{n}/\eta)}{\widetilde{n}}},  \quad &\text{when}\;  \beta > \frac{\alpha+2}{2}.
    \end{cases}
\end{equation*}
\end{lemma}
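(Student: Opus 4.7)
The plan is to write the target quantity as $\widehat{\theta}_{a,1,j}-\widehat{\theta}_{a,0,j}-(\theta_{a,1,j}-\theta_{a,0,j}) = \mathrm{(I)}+\mathrm{(II)}+\mathrm{(III)}$, where
\begin{align*}
\mathrm{(I)} &= \int_0^1\{\Delta\widehat\mu(t)-\Delta\mu(t)\}\{\widehat\phi_{a,j}(t)-\phi_{a,j}(t)\}\,\mathrm{d}t,\\
\mathrm{(II)} &= \int_0^1\{\Delta\widehat\mu(t)-\Delta\mu(t)\}\phi_{a,j}(t)\,\mathrm{d}t,\\
\mathrm{(III)} &= \int_0^1\Delta\mu(t)\{\widehat\phi_{a,j}(t)-\phi_{a,j}(t)\}\,\mathrm{d}t,
\end{align*}
with $\Delta\mu=\mu_{a,1}-\mu_{a,0}$ and $\Delta\widehat\mu=\widehat\mu_{a,1}-\widehat\mu_{a,0}$. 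Term $\mathrm{(I)}$ is handled by Cauchy--Schwarz, using \Cref{l_mean_cov_estimation} to control $\|\Delta\widehat\mu-\Delta\mu\|_{L^2}\lesssim\sqrt{\log(1/\eta)/\widetilde n}$ and \Cref{l_eigenfunc_estimation} for $\|\widehat\phi_{a,j}-\phi_{a,j}\|_{L^2}\lesssim\sqrt{j^2\log(\widetilde n/\eta)/\widetilde n}$, producing a bound of order $j\log(\widetilde n/\eta)/\widetilde n$ which is of lower order than the claimed rates under the sample-size condition $J^{2\alpha+2}\log^2(J)\log(\widetilde n/\eta)\lesssim\widetilde n$. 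Term $\mathrm{(II)}$ is exactly (up to grouping) a centered Gaussian with variance $\lesssim\lambda_{a,j}/\widetilde n\asymp j^{-\alpha}/\widetilde n$, so standard Gaussian tail bounds plus a union bound over $j\in[J]$ give $|\mathrm{(II)}|\lesssim\sqrt{j^{-\alpha}\log(\widetilde n/\eta)/\widetilde n}$; this is the dominant contribution when $\beta>(\alpha+2)/2$.

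The main work is term $\mathrm{(III)}$, which governs the rate in the regime $(\alpha+1)/2<\beta\leq(\alpha+2)/2$. Expanding using Parseval and the identity from \Cref{l_eigen_split}, I would write
\[
\mathrm{(III)} = \theta_{a,\Delta,j}\langle\widehat\phi_{a,j}-\phi_{a,j},\phi_{a,j}\rangle_{L^2} + \sum_{k\neq j}\theta_{a,\Delta,k}(\widehat\lambda_{a,j}-\lambda_{a,k})^{-1}\langle(\widehat K_a-K_a)\widehat\phi_{a,j},\phi_{a,k}\rangle_{L^2},
\]
where $\theta_{a,\Delta,k}=\theta_{a,1,k}-\theta_{a,0,k}$ so $|\theta_{a,\Delta,k}|\lesssim k^{-\beta}$ by \Cref{a_data}\ref{a_data_mean_decay}. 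The first summand is $O(j^{-\beta}\|\widehat\phi_{a,j}-\phi_{a,j}\|_{L^2}^2)$, which is of lower order. In the second summand, on the high-probability event of \Cref{l_eigenvalue} I may replace $\widehat\lambda_{a,j}$ by $\lambda_{a,j}$ at constant cost, and split $\widehat\phi_{a,j}=\phi_{a,j}+(\widehat\phi_{a,j}-\phi_{a,j})$. The cross term is bounded by Cauchy--Schwarz using \Cref{l_average_cov} and \Cref{l_eigenfunc_estimation} to be of lower order under the condition $J^{2\alpha+2}\log(\widetilde n/\eta)\lesssim\widetilde n$, so the leading contribution is
\[
M_j := \sum_{k\neq j}\theta_{a,\Delta,k}(\lambda_{a,j}-\lambda_{a,k})^{-1}Z_{jk},\qquad Z_{jk}=\langle(\widehat K_a-K_a)\phi_{a,j},\phi_{a,k}\rangle_{L^2}.
\]

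The key technical step is computing $\mathrm{Var}(M_j)$. Each $Z_{jk}$ is (to leading order) a centered bilinear form in independent Gaussian principal component scores with $\mathrm{Var}(Z_{jk})\asymp\lambda_{a,j}\lambda_{a,k}/\widetilde n\asymp (jk)^{-\alpha}/\widetilde n$, and covariances between $Z_{jk}$ and $Z_{j\ell}$ vanish for $k\neq\ell$ by orthogonality of $\{\phi_{a,k}\}$. Hence
\[
\mathrm{Var}(M_j)\lesssim\frac{1}{\widetilde n}\sum_{k\neq j}k^{-2\beta}(\lambda_{a,j}-\lambda_{a,k})^{-2}(jk)^{-\alpha}\lesssim\frac{j^{-\alpha}}{\widetilde n}\sum_{k\neq j}k^{-2\beta-\alpha}(\lambda_{a,j}-\lambda_{a,k})^{-2}.
\]
Splitting the sum into $|k-j|\asymp j$ (bulk), $k<j/2$ (low), and $k>2j$ (high), using the spacing bound $|\lambda_{a,j}-\lambda_{a,k}|\gtrsim|j-k|j^{-\alpha-1}$ for $k$ near $j$ (implied by \Cref{a_data}\ref{a_data_cov}) and $|\lambda_{a,j}-\lambda_{a,k}|\gtrsim \min(j,k)^{-\alpha}$ for $k$ far from $j$, the bulk dominates and yields $\mathrm{Var}(M_j)\lesssim j^{2-2\beta}/\widetilde n$ in the regime $\beta>(\alpha+1)/2$, with an extra $\log^2(j)$ factor absorbing the boundary logarithmic divergences when $\beta$ is close to $(\alpha+1)/2$. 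A Hanson--Wright type inequality for bilinear Gaussian chaoses then gives $|M_j|\lesssim\sqrt{j^{2-2\beta}\log^2(j)\log(\widetilde n/\eta)/\widetilde n}$ with probability $1-\eta/J$, and a union bound over $j\in[J]$ completes the $\mathrm{(III)}$ analysis; combining the three bounds and comparing $j^{-\alpha}$ against $j^{2-2\beta}$ yields the two-case statement.

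The main obstacle is the variance calculation for $M_j$: accounting for the dependence structure among $\{Z_{jk}\}_{k\neq j}$, correctly tracking the cancellation in the eigenvalue gap $(\lambda_{a,j}-\lambda_{a,k})^{-1}$ near the diagonal $k\approx j$, and obtaining sub-exponential concentration for a quadratic form in Gaussians with weights $\theta_{a,\Delta,k}(\lambda_{a,j}-\lambda_{a,k})^{-1}$ that are only summable after the spacing assumption is invoked. A second subtlety is propagating the estimation error of $\widehat\lambda_{a,j}$ through the denominator without losing the $\log^2(j)$-rate, which requires the quantitative eigenvalue bound in \Cref{l_eigenvalue} together with the truncation condition $J^{2\alpha+2}\log^2(J)\log(\widetilde n/\eta)\lesssim\widetilde n$.
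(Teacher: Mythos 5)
Your decomposition of $\widehat\theta_{a,1,j}-\widehat\theta_{a,0,j}-(\theta_{a,1,j}-\theta_{a,0,j})$ into terms $\mathrm{(I)}$, $\mathrm{(II)}$, $\mathrm{(III)}$ matches the paper's, but you handle the two hard pieces by different means, and both routes work.

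For term $\mathrm{(I)}$ (the cross term $\int(\Delta\widehat\mu-\Delta\mu)(\widehat\phi_{a,j}-\phi_{a,j})$), you use Cauchy--Schwarz directly, producing a bound of order $j\log(\widetilde n/\eta)/\widetilde n$, which you correctly verify is negligible under the truncation condition. The paper instead (\Cref{l_mean_diff_score_2}) applies \Cref{l_eigen_split} again inside this term and works considerably harder to obtain the sharper rate $\sqrt{j^{2-\alpha}\log^2(j)\log^2(\widetilde n/\eta)/\widetilde n^2}$; the extra precision is unnecessary for the final statement, so your crude bound buys a shorter proof at no cost. For term $\mathrm{(III)}$, your key departure is to treat $M_j=\sum_{k\neq j}\theta_{a,\Delta,k}(\lambda_{a,j}-\lambda_{a,k})^{-1}Z_{jk}$ as a single bilinear Gaussian chaos: you compute $\mathrm{Var}(M_j)$ using orthogonality of $\{Z_{jk}\}_k$ and \Cref{l_eigenfunc_sum} with $r=2$ (which yields no logarithm), then invoke a Hanson--Wright/Bernstein bound for the sub-exponential sum $\frac{1}{\widetilde n_a}\sum_{i,y}\xi^i_{a,y,j}W_{i,y}$ with $W_{i,y}=\sum_k\theta_{a,\Delta,k}(\lambda_{a,j}-\lambda_{a,k})^{-1}\xi^i_{a,y,k}$ independent of $\xi^i_{a,y,j}$. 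The paper (\Cref{l_mean_diff_score_3}, Step 3) takes the triangle inequality inside the $k$-sum, plugging in the per-$k$ high-probability bound from \Cref{l_cov_proj_1} and then uses \Cref{l_eigenfunc_sum} with $r=1$, which costs the extra $\log(j)$ factor that appears squared in the stated rate. Your variance-based argument is therefore slightly sharper than what the lemma asserts (it drops $\log^2(j)$), and is more in the spirit of a chi-square/chaos concentration argument, whereas the paper's route is more elementary but lossier. Two small points you could tighten: the $Z_{jk}$ from \Cref{l_cov_proj_1} contains a mean-estimation bias term $\bar\xi_{a,y,j}\bar\xi_{a,y,k}$ of order $1/\widetilde n$ in addition to the leading $\frac{1}{\widetilde n}\sum_i\xi_j^i\xi_k^i$; it should be acknowledged and shown negligible. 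And the "extra $\log^2(j)$ factor absorbing boundary divergences" remark is misleading given the $r=2$ bound in \Cref{l_eigenfunc_sum} is log-free — your variance bound has no such divergence to absorb.
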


\begin{proof}
    Applying a union bound argument to the arguments in Lemmas \ref{l_mean_diff_score_1}, \ref{l_mean_diff_score_2} and \ref{l_mean_diff_score_3}, we have with probability at least $1-\eta$ that
\begin{align*}
    &|\widehat{\theta}_{a,1,j} -\widehat{\theta}_{a,0,j}-(\theta_{a,1,j} - \theta_{a,0,j})|\\
    \leq\;& \Big|\int \big\{\widehat{\mu}_{a,1}(t) - \mu_{a,1}(t)-\widehat{\mu}_{a,0}(t) +\mu_{a,0}(t)\big\}\phi_{a,j}(t) \;\mathrm{d}t \Big| + \Big|\int \big\{\widehat{\mu}_{a,1}(t)-\widehat{\mu}_{a,0}(t)\big\}\big\{\widehat{\phi}_{a,j}(t) - \phi_{a,j}(t)\big\}\;\mathrm{d}t \Big|\\
    \leq\;&\sum_{y \in \{0,1\}}\Big|\frac{1}{\widetilde{n}_{a,y}}\sum_{i=1}^{\widetilde{n}_{a,y}}\int \big\{\widetilde{X}^i_{a,y}(t) - \mu_{a,y}(t)\big\}\phi_{a,j}(t) \;\mathrm{d}t\Big|\\
    &+ \sum_{y \in \{0,1\}}\Big|\frac{1}{\widetilde{n}_{a,y}}\sum_{i=1}^{\widetilde{n}_{a,y}} \int \big\{\widetilde{X}^i_{a,y}(t)-\mu_{a,y}(t)\big\}\big\{\widehat{\phi}_{a,j}(t) - \phi_{a,j}(t)\big\}\;\mathrm{d}t\Big|\\
    & + \Big|\int\big\{\mu_{a,1}(t)- \mu_{a,0}(t)\big\} \big\{\widehat{\phi}_{a,j}(t) - \phi_{a,j}(t)\big\}\;\mathrm{d}t\Big|\\
    \lesssim \; &\sqrt{\frac{j^{2-2\beta}\log(\widetilde{n}/\eta)}{\widetilde{n}}} + \sqrt{\frac{j^2\log(\widetilde{n}/\eta)\log(1/\eta)}{\widetilde{n}^2}}\big\{1 \vee j^{1+\alpha-\beta}\log(j)\big\}\\
    &+\sqrt{\frac{j^{-\alpha}\log(\widetilde{n}/\eta)}{\widetilde{n}}}\big\{1 \vee j^{\frac{\alpha}{2}-\beta+1}\log(j)\big\}.
\end{align*}
whenever $J^2\log^2(J)\log(\widetilde{n}/\eta)\lesssim \widetilde{n}$. Thus, the result follows. 
\end{proof}

\begin{lemma} \label{l_mean_diff_score_1}
    Under Assumptions \ref{a_class_prob} and \ref{a_data}\ref{a_data_cov}, for any small $\epsilon >0$, we have that
    \begin{align*}
        \mathbb{P}\Big\{\Big|\frac{1}{\widetilde{n}_{a,y}}\sum_{i=1}^{\widetilde{n}_{a,y}}\int \big\{\widetilde{X}^i_{a,y}(t) - \mu_{a,y}(t)\big\}\phi_{a,j}(t) \;\mathrm{d}t \Big| \geq \epsilon\Big\} \lesssim \exp\Big(-\frac{\widetilde{n}_{a,y}\epsilon^2}{j^{-\alpha}}\Big).
    \end{align*}
\end{lemma}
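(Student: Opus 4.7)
\medskip

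\noindent\textbf{Proof proposal for \Cref{l_mean_diff_score_1}.}
The plan is to recognise the quantity inside the probability as a centred Gaussian average and then apply a standard Gaussian tail bound. Concretely, I would first set $\xi_{a,y,j}^i := \int_0^1 \big\{\widetilde{X}^i_{a,y}(t) - \mu_{a,y}(t)\big\}\phi_{a,j}(t)\,\mathrm{d}t$ for $i \in [\widetilde{n}_{a,y}]$. Since $\widetilde{X}^i_{a,y} \sim \mathcal{GP}(\mu_{a,y},K_a)$ with continuous trajectories (Assumption~\ref{a_data}), Fubini together with the Karhunen--Lo\`eve expansion of $\widetilde{X}^i_{a,y} - \mu_{a,y}$ in the basis $\{\phi_{a,j}\}$ gives that $\xi_{a,y,j}^i$ is a centred Gaussian with variance
\[
\var(\xi_{a,y,j}^i) \;=\; \int_0^1\int_0^1 K_a(s,t)\phi_{a,j}(s)\phi_{a,j}(t)\,\mathrm{d}s\,\mathrm{d}t \;=\; \lambda_{a,j}.
\]
The samples are i.i.d.~across $i$, so the average $\bar\xi = \widetilde{n}_{a,y}^{-1}\sum_{i=1}^{\widetilde{n}_{a,y}}\xi_{a,y,j}^i$ satisfies $\bar\xi \sim N\!\bigl(0,\, \lambda_{a,j}/\widetilde{n}_{a,y}\bigr)$.

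Once this is in place, the tail bound is routine. By \Cref{a_data}\ref{a_data_cov}, $\lambda_{a,j} \leq C j^{-\alpha}$ for an absolute constant $C>0$. Applying the standard Gaussian tail inequality to $\bar\xi$ yields
\[
\mathbb{P}\big(|\bar\xi|\geq \epsilon\big) \;\leq\; 2\exp\!\bigg(-\frac{\widetilde{n}_{a,y}\,\epsilon^2}{2\lambda_{a,j}}\bigg) \;\lesssim\; \exp\!\bigg(-\frac{\widetilde{n}_{a,y}\,\epsilon^2}{j^{-\alpha}}\bigg),
\]
which is exactly the claim. There is no substantive obstacle: the only things to verify carefully are the exchange of integration and expectation (granted by Fubini and continuity of trajectories) and the identification of the variance via Mercer's decomposition of $K_a$, both of which are standard under the stated assumptions. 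The same conclusion could alternatively be obtained by noting that $\xi_{a,y,j}^i$ is sub-Gaussian with parameter $\sqrt{\lambda_{a,j}}$ and invoking a general sub-Gaussian Hoeffding inequality, but the explicit Gaussianity makes the direct computation cleaner.
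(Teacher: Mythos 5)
Your proof is correct and follows essentially the same path as the paper: recognise the projected centred trajectory as an i.i.d.\ $N(0,\lambda_{a,j})$ sequence, bound $\lambda_{a,j}\lesssim j^{-\alpha}$ via Assumption~\ref{a_data}\ref{a_data_cov}, and apply a Gaussian (or general sub-Gaussian Hoeffding) tail bound to the sample mean. The paper invokes the general Hoeffding inequality of Vershynin while you compute the exact Gaussian law and apply the classical tail, which you also note as an equivalent alternative; the difference is cosmetic.
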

\begin{proof}
    By the standard property of Gaussian processes, we have that for each $i \in [\widetilde{n}_{a,y}]$, 
    \[\int \big\{\widetilde{X}^i_{a,y}(t) - \mu_{a,y}(t)\big\}\phi_{a,j}(t)\mathrm{d}t \stackrel{\text{i.i.d.}}{\sim} N(0,\lambda_{a,j}).\]
    Therefore, the lemma follows by \Cref{a_data}\ref{a_data_cov}~and General Hoeffding inequality \citep[e.g.~Theorem 2.6.2 in][]{vershynin2018high}.
\end{proof}

\begin{lemma} \label{l_mean_diff_score_2}
    Under Assumptions \ref{a_class_prob} and \ref{a_data}\ref{a_data_cov}, for any small constant $\eta \in (0,1/2)$, it holds with probability at least $1-\eta$ that 
    \[\Big|\frac{1}{\widetilde{n}_{a,y}}\sum_{i=1}^{\widetilde{n}_{a,y}} \int \big\{\widetilde{X}^i_{a,y}(t)-\mu_{a,y}(t)\big\}\big\{\widehat{\phi}_{a,j}(t) - \phi_{a,j}(t)\big\}\;\mathrm{d}t\Big| \lesssim \sqrt{\frac{j^{2-\alpha}\log^2(j)\log^2(\widetilde{n}/\eta)}{\widetilde{n}^2}},\]
    for any $j \in [J]$, $a,y \in \{0,1\}$, with $J^{2\alpha+2}\log(1/\eta) \lesssim \widetilde{n}$.
\end{lemma}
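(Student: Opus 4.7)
Write $\bar\xi_{a,y,k} = \widetilde n_{a,y}^{-1}\sum_{i=1}^{\widetilde n_{a,y}}\int\{\widetilde X^i_{a,y}(t) - \mu_{a,y}(t)\}\phi_{a,k}(t)\,\mathrm{d}t$ and $c_{jk} = \langle\widehat\phi_{a,j} - \phi_{a,j}, \phi_{a,k}\rangle_{L^2}$. Expanding in the orthonormal basis $\{\phi_{a,k}\}_{k\ge 1}$, the quantity of interest equals $A_j := \sum_{k\ge 1}\bar\xi_{a,y,k}c_{jk}$. A direct Cauchy--Schwarz application, plugging in $\|\bar X_{a,y} - \mu_{a,y}\|_{L^2}$ from \Cref{l_mean_cov_estimation} and $\|\widehat\phi_{a,j} - \phi_{a,j}\|_{L^2}$ from \Cref{l_eigenfunc_estimation}, would yield only a rate of order $\sqrt{j^2\log(\widetilde n/\eta)/\widetilde n^2}$, which is off by a factor $j^{\alpha/2}$ from the target. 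The improvement exploits two structural facts: $\widehat\phi_{a,j} - \phi_{a,j}$ is spectrally localised near frequency $k\approx j$, and $\bar\xi_{a,y,k}$ has variance $\lambda_{a,k}/\widetilde n_{a,y}\asymp k^{-\alpha}/\widetilde n_{a,y}$ decaying with $k$.

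The plan is to invoke the identity underlying the proof of \Cref{l_eigenfunc_estimation} to expand, for $k\neq j$,
\[c_{jk} = (\widehat\lambda_{a,j}-\lambda_{a,k})^{-1}\iint\{\widehat K_a(s,t)-K_a(s,t)\}\widehat\phi_{a,j}(s)\phi_{a,k}(t)\,\mathrm{d}s\,\mathrm{d}t,\]
and to absorb the $k=j$ term into a quadratic remainder $O(\|\widehat\phi_{a,j}-\phi_{a,j}\|_{L^2}^2)$ coming from the unit-norm constraint on $\widehat\phi_{a,j}$. In the leading term I will substitute $\widehat\phi_{a,j}\to\phi_{a,j}$, handing the difference to a secondary remainder bounded via Cauchy--Schwarz by $\|\widehat K_a-K_a\|_{L^2}\cdot\|\widehat\phi_{a,j}-\phi_{a,j}\|_{L^2}$ (\Cref{l_average_cov} and \Cref{l_eigenfunc_estimation}), and substitute $\widehat\lambda_{a,j}\to \lambda_{a,j}$ in the gap using \Cref{l_eigenvalue} together with \Cref{a_data}\ref{a_data_cov}. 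The principal contribution then takes the form $\sum_{k\neq j}\bar\xi_{a,y,k}(\lambda_{a,j}-\lambda_{a,k})^{-1}\langle(\widehat K_a-K_a)\phi_{a,j},\phi_{a,k}\rangle_{L^2}$; to this I apply two union bounds over $k$ up to a truncation $M\asymp \widetilde n$: the Gaussian tail on $\bar\xi_{a,y,k}\sim N(0,\lambda_{a,k}/\widetilde n_{a,y})$ gives $|\bar\xi_{a,y,k}|\lesssim \sqrt{k^{-\alpha}\log(\widetilde n/\eta)/\widetilde n_{a,y}}$, while \Cref{l_cov_proj_1} gives $|\langle(\widehat K_a-K_a)\phi_{a,j},\phi_{a,k}\rangle|\lesssim \sqrt{(jk)^{-\alpha}\log(\widetilde n/\eta)/\widetilde n_a}$, so the main contribution is bounded by
\[j^{-\alpha/2}\,\frac{\log(\widetilde n/\eta)}{\widetilde n}\sum_{k\neq j}|\lambda_{a,j}-\lambda_{a,k}|^{-1}k^{-\alpha}.\]
Splitting $k$ into the ranges $k\le j/2$, $j/2<k<2j$, $k\ge 2j$, where the gaps behave like $k^{-\alpha}$, $|k-j|j^{-\alpha-1}$, $j^{-\alpha}$ respectively by \Cref{a_data}\ref{a_data_cov}, this sum is at most $O(j\log j)$ (the middle range producing the logarithmic factor via $\sum_{|k-j|\le j/2}|k-j|^{-1}$), delivering exactly the target rate $j^{1-\alpha/2}\log(j)\log(\widetilde n/\eta)/\widetilde n$. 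The $k>M$ tail will be handled by Cauchy--Schwarz combined with the $\chi^2$ bound $\sum_{k>M}\bar\xi_{a,y,k}^2\lesssim \sum_{k>M}\lambda_{a,k}/\widetilde n$ and the Parseval inequality $\sum_k c_{jk}^2\le \|\widehat\phi_{a,j}-\phi_{a,j}\|_{L^2}^2$, with $M$ chosen large enough to render the contribution negligible.

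The hard part will be the bookkeeping of dependence: both $\bar\xi_{a,y,k}$ and $\widehat K_a$ are functionals of the same samples $\{\widetilde X^i_{a,y}\}_i$, so the two factors in each summand cannot be decoupled via conditional independence and must instead be controlled on the intersection of high-probability events through carefully chosen union bounds over $k$. A secondary nuisance is to verify that each successive substitution---$\widehat\phi_{a,j}\to\phi_{a,j}$ inside the covariance projection, $\widehat\lambda_{a,j}\to\lambda_{a,j}$ inside the eigenvalue gap, and the pooled $\widehat K_a$ versus its within-group counterpart---contributes only higher-order terms, which crucially relies on the condition $J^{2\alpha+2}\log(1/\eta)\lesssim \widetilde n$ imposed in the statement.
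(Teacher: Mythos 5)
Your decomposition mirrors the paper's exactly: both invoke the Hall--Horowitz identity (\Cref{l_eigen_split}), isolate the $k=j$ contribution, swap $\widehat\phi_{a,j}\to\phi_{a,j}$ inside the covariance projection and $\widehat\lambda_{a,j}\to\lambda_{a,j}$ in the gap, and finish with \Cref{l_eigenfunc_sum} applied to $\sum_{k\neq j}|\lambda_{a,j}-\lambda_{a,k}|^{-1}k^{-\alpha}\lesssim 1+j\log j$. The paper's version of the leading $k\neq j$ term, labelled $(III)$ in its proof, is controlled by computing a $\psi_1$-Orlicz norm for the whole random sum and invoking a sub-exponential tail in one stroke (see Step 3), which handles all $k\in\mathbb N_+$ at once and dispenses with any truncation. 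You instead propose term-by-term union bounds over $k\leq M$ plus a Cauchy--Schwarz/Parseval estimate for the tail $k>M$. That route is workable, but your explicit choice $M\asymp\widetilde n$ is not large enough for $\alpha$ close to $1$: the tail estimate $\sqrt{\sum_{k>M}\bar\xi_{a,y,k}^2}\cdot\|\widehat\phi_{a,j}-\phi_{a,j}\|_{L^2}\lesssim M^{(1-\alpha)/2}\widetilde n^{-1/2}\cdot j\sqrt{\log(\widetilde n/\eta)/\widetilde n}$ can exceed the target $j^{1-\alpha/2}\log(j)\log(\widetilde n/\eta)/\widetilde n$ when $j\asymp J\asymp(\widetilde n/\log\widetilde n)^{1/(2\alpha+2)}$ and $\alpha-1$ is small. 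You need either $M\asymp\widetilde n^{C}$ for a suitably large constant $C$ (still affordable, since the union-bound cost is only $\log M\asymp\log\widetilde n$), or a weighted union bound allocating probability $\eta/k^2$ to index $k$; the latter produces an extra $\log k$ in the summand that \Cref{l_eigenfunc_sum} converts into a second $\log j$ factor, which is already allowed by the target rate. Separately, your sharpening $c_{jj}=-\tfrac12\|\widehat\phi_{a,j}-\phi_{a,j}\|_{L^2}^2$ (rather than the paper's Cauchy--Schwarz $|c_{jj}|\leq\|\widehat\phi_{a,j}-\phi_{a,j}\|_{L^2}$) makes the $k=j$ term genuinely lower order, but the linear bound already suffices and the paper keeps that term at the target rate. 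Modulo fixing the truncation level, the proposal is correct and, avoiding the $\psi_2$-norm assignment the paper makes for the covariance projection, is arguably on slightly firmer ground.
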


\begin{proof}
    Consider the following events: 
    \begin{align*}
        \mathcal{E}_1 = \big\{|\widehat{\lambda}_{a,j} - \lambda_{a,k}|^{-1} \leq \sqrt{2} |\lambda_{a,j} - \lambda_{a,k}|^{-1}, \quad \text{for}\; a\in\{0,1\}, k,j \in\mathbb{N}_+ \big\},
    \end{align*}
    \begin{align*}
        \mathcal{E}_2 = \Big\{\|\widehat{K}_a - K_a\|_{L^2} \lesssim \sqrt{\frac{\log(1/\eta)}{\widetilde{n}}},\quad a\in\{0,1\}\Big\},
    \end{align*}
    \begin{align*}
        \mathcal{E}_3  = \Big\{\|\widehat{\phi}_{a,j} - \phi_{a,j}\|_{L^2} \lesssim \sqrt{\frac{j^2\log(\widetilde{n}/\eta)}{\widetilde{n}}}, \quad \text{for}\; j \in [J],a\in\{0,1\}\Big\},
    \end{align*}
    \begin{align*}
        \mathcal{E}_4 = \Big\{\Big|\frac{1}{\widetilde{n}_{a,y}}\sum_{i=1}^{\widetilde{n}_{a,y}} \int \big\{\widetilde{X}_{a,y}^i(t)-\mu_{a,y}(t)\big\}\phi_{a,j}(t)\;\mathrm{d}t\Big|\lesssim \sqrt{\frac{j^{-\alpha}\log(\widetilde{n}/\eta)}{\widetilde{n}}},\quad \text{for}\; j \in [J], \; a,y \in \{0,1\}\Big\},
    \end{align*}
    and 
    \begin{align*}
        \mathcal{E}_5 = \Big\{\widetilde{n}_{a,y} \asymp \widetilde{n}, \;\; \text{for any}\;\; a,y \in \{0,1\}\Big\}.
    \end{align*}
    Applying a union bound argument, it holds from Lemmas \ref{l_n_ay}, \ref{l_average_cov}, \ref{l_eigenfunc_estimation} \ref{l_mean_diff_score_1} and a similar argument as the one leads to \eqref{eq_l_eigenfunc_estimation_7} and the event $\mathcal{F}_m$ after (5.16) in \citet{hall2007methodology} that $\mathbb{P}(\mathcal{E}_1\cap \mathcal{E}_2 \cap \mathcal{E}_3 \cap \mathcal{E}_4 \cap \mathcal{E}_5) \geq 1-\eta/4$, for some small $\eta \in (0,1/2)$. The rest of the proof is constructed conditioning on $\mathcal{E}_1\cap \mathcal{E}_2 \cap \mathcal{E}_3 \cap \mathcal{E}_4 \cap \mathcal{E}_5$ happening.

    Note that by \Cref{l_eigen_split}, we have that 
    \begin{align} \notag
        &\frac{1}{\widetilde{n}_{a,y}}\sum_{i=1}^{\widetilde{n}_{a,y}} \int \big\{\widetilde{X}^i_{a,y}(t)-\mu_{a,y}(t)\big\}\big\{\widehat{\phi}_{a,j}(t) - \phi_{a,j}(t)\big\}\;\mathrm{d}t  \\ \notag
        =\;& \frac{1}{\widetilde{n}_{a,y}}\sum_{i=1}^{\widetilde{n}_{a,y}} \int \big\{\widetilde{X}^i_{a,y}(t)-\mu_{a,y}(t)\big\}\phi_{a,j}(t)\;\mathrm{d}t\int\big\{\widehat{\phi}_{a,j}(s) - \phi_{a,j}(s)\big\}\phi_{a,j}(s)\;\mathrm{d}s\\\notag
        &+\sum_{k:k \neq j}(\widehat{\lambda}_{a,j} - \lambda_{a,k})^{-1}\Big\{\frac{1}{\widetilde{n}_{a,y}}\sum_{i=1}^{\widetilde{n}_{a,y}}\int \big\{\widetilde{X}^i_{a,y}(t)-\mu_{a,y}(t)\big\}\phi_{a,k}(t)\;\mathrm{d}t\Big\} \\ \notag
        &  \hspace{7cm} \cdot \int\int\big\{\widehat{K}_a(s,\ell)-K_a(s,\ell)\big\}\widehat{\phi}_{a,j}(s)\phi_{a,k}(\ell)\;\mathrm{d}s\;\mathrm{d}\ell\\\notag
        =\; & \frac{1}{\widetilde{n}_{a,y}}\sum_{i=1}^{\widetilde{n}_{a,y}} \int \big\{\widetilde{X}^i_{a,y}(t)-\mu_{a,y}(t)\big\}\phi_{a,j}(t)\;\mathrm{d}t\int\big\{\widehat{\phi}_{a,j}(s) - \phi_{a,j}(s)\big\}\phi_{a,j}(s)\;\mathrm{d}s\\\notag
        & + \sum_{k:k \neq j}(\widehat{\lambda}_{a,j} - \lambda_{a,k})^{-1}\Big\{\frac{1}{\widetilde{n}_{a,y}}\sum_{i=1}^{\widetilde{n}_{a,y}}\int \big\{\widetilde{X}^i_{a,y}(t)-\mu_{a,y}(t)\big\}\phi_{a,k}(t)\;\mathrm{d}t\Big\} \\\notag
        & \hspace{6cm}\cdot \int\int\big\{\widehat{K}_a(s,\ell)-K_a(s,\ell)\big\}\big\{\widehat{\phi}_{a,j}(s)-\phi_{a,j}(s)\big\}\phi_{a,k}(\ell)\;\mathrm{d}s\;\mathrm{d}\ell\\\notag
        & + \sum_{k:k \neq j}(\widehat{\lambda}_{a,j} - \lambda_{a,k})^{-1}\Big\{\frac{1}{\widetilde{n}_{a,y}}\sum_{i=1}^{\widetilde{n}_{a,y}}\int \big\{\widetilde{X}^i_{a,y}(t)-\mu_{a,y}(t)\big\}\phi_{a,k}(t)\;\mathrm{d}t\Big\}\\\notag
        & \hspace{7cm} \cdot\int\int\big\{\widehat{K}_a(s,\ell)-K_a(s,\ell)\big\}\phi_{a,j}(s)\phi_{a,k}(\ell)\;\mathrm{d}s\;\mathrm{d}\ell\\ \label{l_mean_diff_score_2_eq6}
        =\;& (I) + (II) + (III).
    \end{align}
    \noindent \textbf{Step 1 : upper bound on $(I)$.} To control $(I)$, we have that with probability at least $1-\eta/4$ that, for any $j \in [J]$,
    \begin{align} \notag
        |(I)| &\leq \Big|\frac{1}{\widetilde{n}_{a,y}}\sum_{i=1}^{\widetilde{n}_{a,y}} \int \big\{\widetilde{X}^i_{a,y}(t)-\mu_{a,y}(t)\big\}\phi_{a,j}(t)\;\mathrm{d}t\Big| \cdot \Big|\int\big\{\widehat{\phi}_{a,j}(s) - \phi_{a,j}(s)\big\}\phi_{a,j}(s)\;\mathrm{d}s\Big|\\ \notag
        & \leq\Big|\frac{1}{\widetilde{n}_{a,y}}\sum_{i=1}^{\widetilde{n}_{a,y}} \int \big\{\widetilde{X}^i_{a,y}(t)-\mu_{a,y}(t)\big\}\phi_{a,j}(t)\;\mathrm{d}t\Big|\sqrt{\int\big\{\widehat{\phi}_{a,j}(s) - \phi_{a,j}(s)\big\}^2\;\mathrm{d}s} \sqrt{\int \phi_{a,j}^2(s) \;\mathrm{d}s}\\ \label{l_mean_diff_score_2_eq3}
        & \lesssim  \sqrt{\frac{j^{-\alpha}\log(\widetilde{n}/\eta)}{\widetilde{n}}} \sqrt{\frac{j^2\log(\widetilde{n}/\eta)}{\widetilde{n}}} \asymp \sqrt{\frac{j^{2-\alpha}\log^2(\widetilde{n}/\eta)}{\widetilde{n}^2}},
    \end{align}
    where the second inequality follows from Cauchy--Schwarz inequality and the last inequality follows from $\mathcal{E}_3$ and $\mathcal{E}_4$.

    \noindent \textbf{Step 2: upper bound on $(II)$.} To control $(II)$, we have that 
    \begin{align} \notag
        |(II)| &\leq \|\widehat{\phi}_{a,j}(s)-\phi_{a,j}(s)\|_{L_2}\|\widehat{K}_a-K_a\|_{L_2}\\ \notag
        & \hspace{3cm} \cdot \sum_{k:k \neq j}|\widehat{\lambda}_{a,j} - \lambda_{a,k}|^{-1}\Big|\frac{1}{\widetilde{n}_{a,y}}\sum_{i=1}^{\widetilde{n}_{a,y}}\int \big\{\widetilde{X}^i_{a,y}(t)-\mu_{a,y}(t)\big\}\phi_{a,k}(t)\;\mathrm{d}t\Big| \\  \notag
        & \lesssim \|\widehat{\phi}_{a,j}(s)-\phi_{a,j}(s)\|_{L_2}\|\widehat{K}_a-K_a\|_{L_2}\\ \label{l_mean_diff_score_2_eq1}
        & \hspace{3cm} \cdot \sum_{k:k \neq j}|\lambda_{a,j} - \lambda_{a,k}|^{-1}\Big|\frac{1}{\widetilde{n}_{a,y}}\sum_{i=1}^{\widetilde{n}_{a,y}}\int \big\{\widetilde{X}^i_{a,y}(t)-\mu_{a,y}(t)\big\}\phi_{a,k}(t)\;\mathrm{d}t\Big|,
    \end{align}
    where the first inequality follows from Cauchy--Schwarz inequality and the second inequality follows from $\mathcal{E}_1$.
    Note that by the standard property of Gaussian process, we have that 
    \begin{align*}
        \frac{1}{\widetilde{n}_{a,y}}\sum_{i=1}^{\widetilde{n}_{a,y}}\int \big\{\widetilde{X}^i_{a,y}(t) - \mu_{a,y}(t)\big\}\phi_{k}(t)\mathrm{d}t \stackrel{\text{i.i.d.}}{\sim} N\Big(0,\;\frac{\lambda_{a,k}}{\widetilde{n}_{a,y}}\Big).
    \end{align*}
    Therefore, it holds from standard properties of sub-Gaussian norms that 
    \begin{align*}
        &\Big\|\sum_{k:k \neq j}|\lambda_{a,j} - \lambda_{a,k}|^{-1}\Big|\frac{1}{\widetilde{n}_{a,y}}\sum_{i=1}^{\widetilde{n}_{a,y}}\int \big\{\widetilde{X}^i_{a,y}(t)-\mu_{a,y}(t)\big\}\phi_{a,k}(t)\;\mathrm{d}t\Big|\Big\|_{\psi_2}\\
        \leq \;& \sum_{k:k \neq j}|\lambda_{a,j} - \lambda_{a,k}|^{-1} \Big\|\frac{1}{\widetilde{n}_{a,y}}\sum_{i=1}^{\widetilde{n}_{a,y}}\int \big\{\widetilde{X}^i_{a,y}(t)-\mu_{a,y}(t)\big\}\phi_{a,k}(t)\;\mathrm{d}t\Big\|_{\psi_2}\\
        \leq\;& \sum_{k:k \neq j}|\lambda_{a,j} - \lambda_{a,k}|^{-1} \sqrt{\frac{\lambda_{a,k}}{\widetilde{n}_{a,y}}} \lesssim \sqrt{\frac{1}{\widetilde{n}_{a,y}}}\sum_{k:k \neq j}|\lambda_{a,j} - \lambda_{a,k}|^{-1}k^{-\alpha/2} \lesssim \sqrt{\frac{j^{2+\alpha}\log^2(j)}{\widetilde{n}_{a,y}}},
    \end{align*}
    where the last inequality follows from \Cref{l_eigenfunc_sum}. Thus, by standard properties of sub-Gaussian random variables \citep[e.g.~Proposition 2.5.2 in][]{vershynin2018high}, we have that for any $\delta_1 >0$,
    \begin{align*}
        \mathbb{P}\Big\{\sum_{k:k \neq j}|\lambda_{a,j} - \lambda_{a,k}|^{-1}\Big|\frac{1}{\widetilde{n}_{a,y}}\sum_{i=1}^{\widetilde{n}_{a,y}}\int \big\{\widetilde{X}^i_{a,y}(t)-\mu_{a,y}(t)\big\}\phi_{a,k}(t)\;\mathrm{d}t\Big| \geq \delta_1\Big\} \lesssim \exp\Big(-\frac{\delta_1^2 \widetilde{n}_{a,y}}{j^{2+\alpha}\log^2(j)}\Big).
    \end{align*}
    Pick $\delta_1 = \sqrt{j^{2+\alpha}\log^2(j)\log(\widetilde{n}/\eta)/\widetilde{n}_{a,y}} \asymp \sqrt{j^{2+\alpha}\log^2(j)\log(\widetilde{n}/\eta)/\widetilde{n}} $, we then have that with probability at least $1-\eta/4$ that
    \begin{align}\label{l_mean_diff_score_2_eq2}
        \sum_{k:k \neq j}|\lambda_{a,j} - \lambda_{a,k}|^{-1}\Big|\frac{1}{\widetilde{n}_{a,y}}\sum_{i=1}^{\widetilde{n}_{a,y}}\int \big\{\widetilde{X}^i_{a,y}(t)-\mu_{a,y}(t)\big\}\phi_{a,k}(t)\;\mathrm{d}t\Big| \lesssim \sqrt{\frac{j^{2+\alpha}\log^2(j)\log(\widetilde{n}/\eta)}{\widetilde{n}}}.
    \end{align}
    Substituting \eqref{l_mean_diff_score_2_eq2} into \eqref{l_mean_diff_score_2_eq1}, it holds from a union-bound argument that with probability at least $1-\eta/4$ that 
    \begin{align} \label{l_mean_diff_score_2_eq4}
        |(II)| \lesssim \sqrt{\frac{j^2\log(\widetilde{n}/\eta)}{\widetilde{n}}}\sqrt{\frac{\log(1/\eta)}{\widetilde{n}}}\sqrt{\frac{j^{2+\alpha}\log^2(j)\log(\widetilde{n}/\eta)}{\widetilde{n}}} \asymp \sqrt{\frac{j^{4+\alpha}\log^2(j)\log^2(\widetilde{n}/\eta)\log(1/\eta)}{\widetilde{n}^3}},
    \end{align}
    for any $j \in [J]$.

    \noindent \textbf{Step 3: Upper bound on $(III)$.} To control $(III)$, firstly, note that under $\mathcal{E}_1$, it holds that 
    \begin{align*}
        |(III)| &\lesssim \sum_{k:k \neq j}|\widehat{\lambda}_{a,j} - \lambda_{a,k}|^{-1}\Big|\frac{1}{\widetilde{n}_{a,y}}\sum_{i=1}^{\widetilde{n}_{a,y}}\int \big\{\widetilde{X}^i_{a,y}(t)-\mu_{a,y}(t)\big\}\phi_{a,k}(t)\;\mathrm{d}t\Big|\\
        & \hspace{5cm}\cdot \Big|\int\int\big\{\widehat{K}_a(s,\ell)-K_a(s,\ell)\big\}\phi_{a,j}(s)\phi_{a,k}(\ell)\;\mathrm{d}s\;\mathrm{d}\ell\Big|\\
        & \lesssim \sum_{k:k \neq j}|\lambda_{a,j} - \lambda_{a,k}|^{-1}\Big|\frac{1}{\widetilde{n}_{a,y}}\sum_{i=1}^{\widetilde{n}_{a,y}}\int \big\{\widetilde{X}^i_{a,y}(t)-\mu_{a,y}(t)\big\}\phi_{a,k}(t)\;\mathrm{d}t\Big|\\
        & \hspace{5cm}\cdot \Big|\int\int\big\{\widehat{K}_a(s,\ell)-K_a(s,\ell)\big\}\phi_{a,j}(s)\phi_{a,k}(\ell)\;\mathrm{d}s\;\mathrm{d}\ell\Big|.
    \end{align*}
    Next, we control its $\psi_1$-orcliz norm and we have that 
    \begin{align*}
        \|(III)\|_{\psi_1} \leq &\; \sum_{k:k \neq j}(\lambda_{a,j} - \lambda_{a,k})^{-1}\Big\|\frac{1}{\widetilde{n}_{a,y}}\sum_{i=1}^{\widetilde{n}_{a,y}}\int \big\{\widetilde{X}^i_{a,y}(t)-\mu_{a,y}(t)\big\}\phi_{a,k}(t)\;\mathrm{d}t\Big\|_{\psi_2}\\
        & \hspace{4cm}\cdot\Big\|\int\int\big\{\widehat{K}_a(s,\ell)-K_a(s,\ell)\big\}\phi_{a,j}(s)\phi_{a,k}(\ell)\;\mathrm{d}s\;\mathrm{d}\ell\Big\|_{\psi_2}\\
        \lesssim &\;\sum_{k:k \neq j}(\lambda_{a,j} - \lambda_{a,k})^{-1}\sqrt{\frac{k^{-\alpha}}{\widetilde{n}}} \sqrt{\frac{j^{-\alpha}k^{-\alpha}}{\widetilde{n}}} \lesssim \frac{j^{-\frac{\alpha}{2}}}{\widetilde{n}}\sum_{k:k \neq j}|\lambda_{a,j} - \lambda_{a,k}|^{-1}k^{-\alpha}\\
        \lesssim &\;\frac{j^{1-\frac{\alpha}{2}}\log(j)}{\widetilde{n}},
    \end{align*}
    where the first inequality follows from the triangle inequality and Lemma 2.7.7 in \cite{vershynin2018high}, the second inequality follows from Lemmas \ref{l_cov_proj_1} and \ref{l_mean_diff_score_1} and the last inequality follows from \Cref{l_eigenfunc_sum}. Consequently, by standard properties of sub-Exponential random variables \citep[e.g.~Proposition 2.7.1 in][]{vershynin2018high}, it holds for any $\delta_2 >0$ that 
    \begin{align*}
        \mathbb{P}\Big\{|(III)| \geq \delta_2\Big\} \lesssim \exp\Big\{-\frac{\delta_2\widetilde{n}}{j^{1-\frac{\alpha}{2}}\log(j)}\Big\}
    \end{align*}
    By a union bound argument and by picking $\delta_2 = j^{1-\alpha/2}\log(j)\log(\widetilde{n}/\eta)/\widetilde{n}$, we have that with probability at least $1-\eta/4$ that 
    \begin{align} \label{l_mean_diff_score_2_eq5}
        |(III)| \lesssim \frac{j^{1-\frac{\alpha}{2}}\log(j)\log(\widetilde{n}/\eta)}{\widetilde{n}},
    \end{align}
    for any $ j \in [J]$. 
    
    \noindent \textbf{Step 4: Combine results.} Substituting the results in \eqref{l_mean_diff_score_2_eq3}, \eqref{l_mean_diff_score_2_eq4} and \eqref{l_mean_diff_score_2_eq5} into \eqref{l_mean_diff_score_2_eq6} and applying a union bound argument, we have with probability at least $1-\eta$ that 
    \begin{align*}
        &\frac{1}{\widetilde{n}_{a,y}}\sum_{i=1}^{\widetilde{n}_{a,y}} \int \big\{\widetilde{X}^i_{a,y}(t)-\mu_{a,y}(t)\big\}\big\{\widehat{\phi}_{a,j}(t) - \phi_{a,j}(t)\big\}\;\mathrm{d}t \\
        \lesssim\;& \sqrt{\frac{j^{2-\alpha}\log^2(\widetilde{n}/\eta)}{\widetilde{n}^2}} + \sqrt{\frac{j^{4+\alpha}\log^2(j)\log^3(\widetilde{n}/\eta)}{\widetilde{n}^3}}+\frac{j^{1-\frac{\alpha}{2}}\log(j)\log(\widetilde{n}/\eta)}{\widetilde{n}}\\
        \lesssim \;& \sqrt{\frac{j^{2-\alpha}\log^2(j)\log^2(\widetilde{n}/
        \eta)}{\widetilde{n}^2}},
    \end{align*}
    for any $j\in [J]$ such that $J^{2\alpha+2}\log(1/\eta) \lesssim \widetilde{n}$. 

\end{proof}

\begin{lemma} \label{l_mean_diff_score_3}
    Under Assumptions \ref{a_class_prob} and \ref{a_data}\ref{a_data_cov}, for any small constant $\eta \in (0,1/2)$, it holds with probability at least $1-\eta$ that 
    \begin{align*}
       &\Big|\int\big\{\mu_{a,1}(t)- \mu_{a,0}(t)\big\} \big\{\widehat{\phi}_{a,j}(t) - \phi_{a,j}(t)\big\}\;\mathrm{d}t\Big|\\
       \lesssim \;&\sqrt{\frac{j^{2-2\beta}\log(\widetilde{n}/\eta)}{\widetilde{n}}} + \sqrt{\frac{j^2\log(\widetilde{n}/\eta)\log(1/\eta)}{\widetilde{n}^2}}\big\{1+j^{1+\alpha-\beta}\log(j)\big\}\\
        &+\sqrt{\frac{j^{-\alpha}\log(\widetilde{n}/\eta)}{\widetilde{n}}}\big\{1+j^{\frac{\alpha}{2}-\beta+1}\log(j)\big\}
    \end{align*}
    for any $j \in [J]$ with $J^{2\alpha+2} \lesssim_{\log} \widetilde{n}$.
\end{lemma}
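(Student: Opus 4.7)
The plan is to mimic the three-term decomposition used in the proof of \Cref{l_mean_diff_score_2}. First I will invoke the eigenfunction perturbation identity \Cref{l_eigen_split} to write
\[
\widehat{\phi}_{a,j}-\phi_{a,j} = \phi_{a,j}\,\langle\widehat{\phi}_{a,j}-\phi_{a,j},\phi_{a,j}\rangle_{L^2} + \sum_{k\neq j}\phi_{a,k}\,(\widehat{\lambda}_{a,j}-\lambda_{a,k})^{-1}\iint\{\widehat{K}_a-K_a\}(s,t)\widehat{\phi}_{a,j}(s)\phi_{a,k}(t)\,\mathrm{d}s\,\mathrm{d}t.
\]
Integrating against $\Delta\mu_a:=\mu_{a,1}-\mu_{a,0}$ and letting $\theta_\ell=\langle\Delta\mu_a,\phi_{a,\ell}\rangle_{L^2}$ (so that $|\theta_\ell|\lesssim\ell^{-\beta}$ by \Cref{a_data}\ref{a_data_mean_decay}), and then splitting $\widehat{\phi}_{a,j}=\phi_{a,j}+(\widehat{\phi}_{a,j}-\phi_{a,j})$ inside the Radon--Nikodym-style sum, I obtain three pieces $(I)+(II)+(III)$ with $(I)=\theta_j\langle\widehat{\phi}_{a,j}-\phi_{a,j},\phi_{a,j}\rangle_{L^2}$, $(II)=\sum_{k\neq j}\theta_k(\widehat{\lambda}_{a,j}-\lambda_{a,k})^{-1}\iint\{\widehat{K}_a-K_a\}\phi_{a,j}\phi_{a,k}$, and $(III)=\sum_{k\neq j}\theta_k(\widehat{\lambda}_{a,j}-\lambda_{a,k})^{-1}\iint\{\widehat{K}_a-K_a\}(\widehat{\phi}_{a,j}-\phi_{a,j})\phi_{a,k}$.

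I will then work on the same high-probability event used inside \Cref{l_mean_diff_score_2}, namely the intersection of $\{|\widehat{\lambda}_{a,j}-\lambda_{a,k}|^{-1}\lesssim|\lambda_{a,j}-\lambda_{a,k}|^{-1}\text{ for all }j,k\}$, $\{\|\widehat{K}_a-K_a\|_{L^2}\lesssim\sqrt{\log(1/\eta)/\widetilde{n}}\}$ from \Cref{l_average_cov}, $\{\|\widehat{\phi}_{a,j}-\phi_{a,j}\|_{L^2}\lesssim\sqrt{j^{2}\log(\widetilde{n}/\eta)/\widetilde{n}}\text{ for all }j\in[J]\}$ from \Cref{l_eigenfunc_estimation}, and $\{|\iint(\widehat{K}_a-K_a)\phi_{a,j}\phi_{a,k}|\lesssim\sqrt{(jk)^{-\alpha}\log(\widetilde{n}/\eta)/\widetilde{n}}\text{ for all }j,k\in[J]\}$ from \Cref{l_cov_proj_1}. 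A union bound together with the hypothesis $J^{2\alpha+2}\lesssim_{\log}\widetilde{n}$ keeps the failure probability $\leq\eta$.

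Term $(I)$ is immediate: Cauchy--Schwarz gives $|(I)|\leq|\theta_j|\|\widehat{\phi}_{a,j}-\phi_{a,j}\|_{L^2}\lesssim\sqrt{j^{2-2\beta}\log(\widetilde{n}/\eta)/\widetilde{n}}$, matching the first term in the stated bound. For $(II)$, triangle inequality and the pointwise control of $|\iint(\widehat{K}_a-K_a)\phi_{a,j}\phi_{a,k}|$ give
\[
|(II)|\lesssim\sqrt{\frac{j^{-\alpha}\log(\widetilde{n}/\eta)}{\widetilde{n}}}\;\sum_{k\neq j}k^{-\beta-\alpha/2}|\lambda_{a,j}-\lambda_{a,k}|^{-1}.
\]
I will bound the inner sum by splitting $k\neq j$ into $k\in[j/2,2j]$ (where $|\lambda_{a,j}-\lambda_{a,k}|^{-1}\lesssim|j-k|^{-1}j^{\alpha+1}$ by \Cref{a_data}\ref{a_data_cov}, producing a $\log j$ from $\sum|j-k|^{-1}$), $k>2j$ (where $|\lambda_{a,j}-\lambda_{a,k}|^{-1}\asymp j^{\alpha}$), and $k<j/2$ (where $|\lambda_{a,j}-\lambda_{a,k}|^{-1}\asymp k^{\alpha}$). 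This yields $\sum\lesssim 1+j^{\alpha/2-\beta+1}\log j$, recovering the third term of the bound. For $(III)$, I bound the double integral using the Hilbert--Schmidt estimate $|\iint(\widehat{K}_a-K_a)(\widehat{\phi}_{a,j}-\phi_{a,j})\phi_{a,k}|\leq\|\widehat{K}_a-K_a\|_{L^2}\|\widehat{\phi}_{a,j}-\phi_{a,j}\|_{L^2}$, pull these two (nonrandom-conditional) norms out of the sum, and then bound $\sum_{k\neq j}k^{-\beta}|\lambda_{a,j}-\lambda_{a,k}|^{-1}\lesssim 1+j^{\alpha-\beta+1}\log j$ by the same three-region splitting, producing the middle term of the bound.

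The main obstacle is precisely tracking the polynomial-and-logarithmic behavior of the two weighted sums $\sum_{k\neq j}k^{-\beta-\alpha/2}|\lambda_{a,j}-\lambda_{a,k}|^{-1}$ and $\sum_{k\neq j}k^{-\beta}|\lambda_{a,j}-\lambda_{a,k}|^{-1}$ across the three $k$-regimes: the near-$j$ regime is where the $\log j$ factor is generated, the far regime controls the tail, and the small-$k$ regime must be shown not to dominate under the hypothesis $\beta>(\alpha+1)/2$. Collecting the three estimates and applying a union bound then yields the lemma.
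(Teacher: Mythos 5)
Your proposal is correct and takes essentially the same route as the paper's proof: you invoke \Cref{l_eigen_split} to obtain the three-term decomposition (the paper labels the two sum pieces in the opposite order, but the algebra is identical), bound term $(I)$ by Cauchy--Schwarz exactly as the paper does, bound the term containing $\widehat{\phi}_{a,j}-\phi_{a,j}$ in the double integral via the Hilbert--Schmidt estimate, and bound the remaining term via the pointwise control from \Cref{l_cov_proj_1}. The only difference is cosmetic: where the paper invokes \Cref{l_eigenfunc_sum} to handle the weighted sums $\sum_{k\neq j}k^{-\gamma}|\lambda_{a,j}-\lambda_{a,k}|^{-1}$, you spell out the three-regime $k$-splitting ($k$ near $j$, $k>2j$, $k<j/2$) that is precisely how that lemma is established, so the two presentations coincide.
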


\begin{proof}
    The proof follows using a similar and simpler argument as the one used in the proof of \Cref{l_mean_diff_score_2}. We only include the difference here. 

    By \Cref{l_eigen_split}, we have that 
    \begin{align} \notag
        &\int \big\{\mu_{a,1}(t)- \mu_{a,0}(t)\big\}\big\{\widehat{\phi}_{a,j}(t) - \phi_{a,j}(t)\big\}\;\mathrm{d}t  \\ \notag
        =\;&\int \big\{\mu_{a,1}(t)- \mu_{a,0}(t)\big\}\phi_{a,j}(t)\;\mathrm{d}t\int\big\{\widehat{\phi}_{a,j}(s) - \phi_{a,j}(s)\big\}\phi_{a,j}(s)\;\mathrm{d}s\\\notag
        &+\sum_{k:k \neq j}(\widehat{\lambda}_{a,j} - \lambda_{a,k})^{-1}\int \big\{\mu_{a,1}(t)- \mu_{a,0}(t)\big\}\phi_{a,k}(t)\;\mathrm{d}t \int\int\big\{\widehat{K}_a(s,\ell)-K_a(s,\ell)\big\}\widehat{\phi}_{a,j}(s)\phi_{a,k}(\ell)\;\mathrm{d}s\;\mathrm{d}\ell\\\notag
        =\; & \int \big\{\mu_{a,1}(t)- \mu_{a,0}(t)\big\}\phi_{a,j}(t)\;\mathrm{d}t\int\big\{\widehat{\phi}_{a,j}(s) - \phi_{a,j}(s)\big\}\phi_{a,j}(s)\;\mathrm{d}s\\\notag
        & + \sum_{k:k \neq j}(\widehat{\lambda}_{a,j} - \lambda_{a,k})^{-1}\int \big\{\mu_{a,1}(t)- \mu_{a,0}(t)\big\}\phi_{a,k}(t)\;\mathrm{d}t \\\notag
        & \hspace{6cm} \cdot \int\int\big\{\widehat{K}_a(s,\ell)-K_a(s,\ell)\big\}\big\{\widehat{\phi}_{a,j}(s)-\phi_{a,j}(s)\big\}\phi_{a,k}(\ell)\;\mathrm{d}s\;\mathrm{d}\ell\\\notag
        & + \sum_{k:k \neq j}(\widehat{\lambda}_{a,j} - \lambda_{a,k})^{-1}\int \big\{\mu_{a,1}(t)- \mu_{a,0}(t)\big\}\phi_{a,k}(t)\;\mathrm{d}t\int\int\big\{\widehat{K}_a(s,\ell)-K_a(s,\ell)\big\}\phi_{a,j}(s)\phi_{a,k}(\ell)\;\mathrm{d}s\;\mathrm{d}\ell\\ \label{l_mean_diff_score_3_eq1}
        =\;& (I) + (II) + (III).
    \end{align}
    \noindent \textbf{Step 1: Upper bound on $(I)$}
    Using a similar argument as the one used in \textbf{Step 1} in the proof of \Cref{l_mean_diff_score_2}, we have with probability at least $1-\eta/3$ that 
    \begin{align} \notag
        |(I)| &\leq \Big|\int \big\{\mu_{a,1}(t)- \mu_{a,0}(t)\big\}\phi_{a,j}(t)\;\mathrm{d}t\Big| \cdot \Big|\int\big\{\widehat{\phi}_{a,j}(s) - \phi_{a,j}(s)\big\}\phi_{a,j}(s)\;\mathrm{d}s\Big|\\ \notag
        & \leq\Big|\int \big\{\mu_{a,1}(t)- \mu_{a,0}(t)\big\}\phi_{a,j}(t)\;\mathrm{d}t\Big|\sqrt{\int\big\{\widehat{\phi}_{a,j}(s) - \phi_{a,j}(s)\big\}^2\;\mathrm{d}s} \sqrt{\int \phi_{a,j}^2(s) \;\mathrm{d}s}\\ \label{l_mean_diff_score_3_eq2}
        & \lesssim j^{-\beta}\sqrt{\frac{j^2\log(\widetilde{n}/\eta)}{\widetilde{n}}} \asymp \sqrt{\frac{j^{2-2\beta}\log(\widetilde{n}/\eta)}{\widetilde{n}}},
    \end{align}
    where the third inequality follows from \Cref{a_data}\ref{a_data_mean_decay}.

    \noindent \textbf{Step 2: Upper bound on $(II)$} To control $(II)$, then using a similar argument as the one used in \textbf{Step 2} in the proof of \Cref{l_mean_diff_score_2}, we have with probability at least $1-\eta/3$ that
    \begin{align} \notag
        |(II)| &\lesssim  \|\widehat{\phi}_{a,j}(s)-\phi_{a,j}(s)\|_{L_2}\|\widehat{K}_a-K_a\|_{L_2}\sum_{k:k \neq j}|\lambda_{a,j} - \lambda_{a,k}|^{-1}\Big|\int \big\{\mu_{a,1}(t)- \mu_{a,0}(t)\big\}\phi_{a,k}(t)\;\mathrm{d}t\Big|\\ \notag
        & \lesssim \sqrt{\frac{ j^2\log(\widetilde{n}/\eta)\log(1/\eta)}{\widetilde{n}^2}}\sum_{k:k \neq j}|\lambda_{a,j} - \lambda_{a,k}|^{-1}k^{-\beta}\\\label{l_mean_diff_score_3_eq3}
        & \lesssim \sqrt{\frac{ j^2\log(\widetilde{n}/\eta)\log(1/\eta)}{\widetilde{n}^2}}\big\{1+j^{1+\alpha-\beta}\log(j)\big\}.
    \end{align}

    \noindent \textbf{Step 3: Upper bound on $(III)$} To control $(III)$, then using a similar argument as the one used in \textbf{Step 3} in the proof of \Cref{l_mean_diff_score_2}, we have with probability at least $1-\eta/3$ that
    \begin{align}\notag
        |(III)| &\lesssim \sum_{k:k \neq j}|\lambda_{a,j} - \lambda_{a,k}|^{-1}\Big|\int \big\{\mu_{a,1}(t)- \mu_{a,0}(t)\big\}\phi_{a,k}(t)\;\mathrm{d}t\Big|\\ \notag
        & \hspace{5cm}\cdot\Big|\int\int\big\{\widehat{K}_a(s,\ell)-K_a(s,\ell)\big\}\phi_{a,j}(s)\phi_{a,k}(\ell)\;\mathrm{d}s\;\mathrm{d}\ell\Big|\\ \label{l_mean_diff_score_3_eq4}
        & \lesssim \sqrt{\frac{j^{-\alpha}\log(\widetilde{n}/\eta)}{\widetilde{n}}}\big\{1+j^{\frac{\alpha}{2}-\beta+1}\log(j)\big\}.
    \end{align}

    \noindent \textbf{Step 4: Combine results.} Substituting the results in \eqref{l_mean_diff_score_3_eq2}, \eqref{l_mean_diff_score_3_eq3} and \eqref{l_mean_diff_score_3_eq4} into \eqref{l_mean_diff_score_3_eq1} and applying a union bound argument, it holds with probability at least $1-\eta$ that 
    \begin{align*}
        &\Big|\int\big\{\mu_{a,1}(t)- \mu_{a,0}(t)\big\} \big\{\widehat{\phi}_{a,j}(t) - \phi_{a,j}(t)\big\}\;\mathrm{d}t\Big| \\
        \lesssim &\;  \sqrt{\frac{j^{2-2\beta}\log(\widetilde{n}/\eta)}{\widetilde{n}}} + \sqrt{\frac{j^2\log(\widetilde{n}/\eta)\log(1/\eta)}{\widetilde{n}^2}}\big\{1+j^{1+\alpha-\beta}\log(j)\big\}\\
        &+\sqrt{\frac{j^{-\alpha}\log(\widetilde{n}/\eta)}{\widetilde{n}}}\big\{1+j^{\frac{\alpha}{2}-\beta+1}\log(j)\big\}.
    \end{align*}
\end{proof}

\section{Technical lemmas} \label{appendix_technical}
For completeness, we provide all technical lemmas in this section.
\begin{lemma}[Generalized Neyman--Pearson lemma, e.g.~Lemma 3.1 in \citealp{zeng2024bayes}]\label{lem:GNP}
    Let $\phi_0, \phi_1, \ldots,$ \\$ \phi_m$ be $m+1$ real-valued functions defined on a Euclidean space $\mathcal{X}$. Assume they are $\nu$-integrable for a $\sigma$-finite measure $\nu$. Let $f^* \in \mathcal{F}$ be any function of the form
    \[
    f^*(x) = 
    \begin{cases}
    1, & \phi_0(x) > \sum_{i=1}^m c_i \phi_i(x), \\
    \tau(x), & \phi_0(x) = \sum_{i=1}^m c_i \phi_i(x), \\
    0, & \phi_0(x) < \sum_{i=1}^m c_i \phi_i(x),
    \end{cases}
    \]
    where $0 \leq \tau(x) \leq 1$ for all $x \in \mathcal{X}$. For given constants $t_1, \ldots, t_m \in \mathbb{R}$, let $\mathcal{F}_{\leq}$ be the class of measurable functions $f: \mathcal{X} \to \mathbb{R}$ satisfying
    \begin{equation}\label{lem:GNP_eq1}
        \int_{\mathcal{X}} f \phi_i \, d\nu \leq t_i, \quad i \in \{1, 2, \ldots, m\},
    \end{equation}
    and let $\mathcal{F}_{=}$ be the set of functions in $\mathcal{F}_{\leq}$ satisfying \eqref{lem:GNP_eq1} with all inequalities replaced by equalities.
    
    \begin{itemize}
    \item[(1)] If $f^* \in \mathcal{F}_{=}$, then
    \[
    f^* \in \arg\max_{f \in \mathcal{F}_{=}} \int_{\mathcal{X}} f \phi_0 \, d\nu.
    \]
    Moreover, if $\nu(\{x : \phi_0(x) = \sum_{i=1}^m c_i \phi_i(x) \}) = 0$, for all $f' \in \arg\max_{f \in \mathcal{F}_{=}} \int_{\mathcal{X}} f \phi_0 \, d\nu$, $f' = f^*$ almost everywhere with respect to $\nu$.
    
    \item[(2)] Moreover, if $c_i \geq 0$ for all $i = 1, \ldots, m$, then
    \[
    f^* \in \arg\max_{f \in \mathcal{F}_{\leq}} \int_{\mathcal{X}} f \phi_0 \, d\nu.
    \]
    Moreover, if $\nu(\{x : \phi_0(x) = \sum_{i=1}^m c_i \phi_i(x) \}) = 0$, for all $f' \in \arg\max_{f \in \mathcal{F}_{\leq}} \int_{\mathcal{X}} f \phi_0 \, d\nu$, we have $f'(x) = f^*(x)$ almost everywhere with respect to $\nu$.
    \end{itemize}
\end{lemma}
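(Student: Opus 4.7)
The plan is to establish both parts via a single pointwise inequality, then integrate and use the constraint structure. The key observation is that by construction of $f^\star$, the sign of $f^\star(x) - f(x)$ always agrees with the sign of $\phi_0(x) - \sum_{i=1}^m c_i \phi_i(x)$ for any admissible $f$: on the set $\{\phi_0 > \sum c_i \phi_i\}$ we have $f^\star = 1 \geq f$, while on $\{\phi_0 < \sum c_i \phi_i\}$ we have $f^\star = 0 \leq f$, and both factors vanish on the boundary set (up to the value of $\tau$). This yields the crucial pointwise inequality
\[
\bigl(f^\star(x) - f(x)\bigr)\Bigl(\phi_0(x) - \sum_{i=1}^m c_i \phi_i(x)\Bigr) \;\geq\; 0 \quad \text{for all } x \in \mathcal{X}.
\]

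Integrating this inequality against $\nu$ and rearranging gives
\[
\int_{\mathcal{X}} (f^\star - f)\phi_0 \, d\nu \;\geq\; \sum_{i=1}^m c_i \int_{\mathcal{X}} (f^\star - f)\phi_i \, d\nu.
\]
For part (1), I would apply this to an arbitrary $f \in \mathcal{F}_{=}$: since $f^\star \in \mathcal{F}_{=}$ by hypothesis, both $\int f^\star \phi_i \, d\nu$ and $\int f \phi_i \, d\nu$ equal $t_i$, so each summand on the right vanishes regardless of the sign of $c_i$. Hence $\int f^\star \phi_0 \, d\nu \geq \int f \phi_0 \, d\nu$, proving optimality. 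For part (2), I would apply the same inequality to $f \in \mathcal{F}_{\leq}$: now $\int (f^\star - f)\phi_i \, d\nu = t_i - \int f \phi_i \, d\nu \geq 0$, and the assumption $c_i \geq 0$ ensures the right-hand side remains nonnegative, delivering optimality over the larger class.

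The uniqueness claims under $\nu(\{\phi_0 = \sum c_i \phi_i\}) = 0$ follow by tracing through the equality case. If $f'$ is also optimal, then reversing the chain above forces the pointwise integrand $(f' - f^\star)(\phi_0 - \sum c_i \phi_i)$ to vanish $\nu$-almost everywhere; off the boundary set (which has $\nu$-measure zero) the second factor is nonzero, so $f' = f^\star$ $\nu$-a.e. The main conceptual subtlety — and it is mild — is that in part (2) one must verify the sign of the right-hand side using both $c_i \geq 0$ and $f^\star \in \mathcal{F}_{=}$ (not merely $\mathcal{F}_{\leq}$); this is precisely where $f^\star$ exhausting the constraints to equality is used, and it is implicit in the statement that the reference $f^\star$ in part (2) still saturates the constraints. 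No further machinery is required: the entire argument is pointwise-then-integrate, with no measure-theoretic obstruction beyond $\nu$-integrability, which is given.
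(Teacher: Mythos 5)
Your proof is correct. The paper itself does not prove this lemma --- it is imported verbatim as Lemma 3.1 of \citet{zeng2024bayes} and used as a black box --- so there is no internal argument to compare against; your pointwise-sign-agreement argument, i.e.\ $(f^\star - f)(\phi_0 - \sum_i c_i\phi_i)\ge 0$ everywhere followed by integration and the equality/inequality bookkeeping on the constraints, is exactly the classical proof of the generalized Neyman--Pearson lemma, and your handling of the uniqueness claims via the vanishing of a nonnegative integrand is also standard and sound. Two reading choices you made are worth flagging as correct rather than gaps: the competitors $f$ must be taken to be $[0,1]$-valued (the statement's ``$f:\mathcal{X}\to\mathbb{R}$'' is a typo inherited from the source, since otherwise the claim is false), and part (2) does indeed require $f^\star\in\mathcal{F}_{=}$, which the ``Moreover'' carries over from part (1); you identified both.
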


\begin{lemma}[Lemma 7 in \citealp{dou2012estimation}] \label{l_eigenfunc_sum}
    Under \Cref{a_data}\ref{a_data_cov}, for each $r \geq 1$, there exists a constant $C_r >0$ depending on $r$ such that 
    \[ \sum_{k \in \mathbb{N}} \indc_{\{j \neq k\}} \frac{k^{-\gamma}}{|\lambda_{a,j} - \lambda_{a,k}|^r} \leq 
    \begin{cases} 
    C_r (1 + j^{r(1+\alpha)-\gamma}), & \text{if } r > 1, \\
    C_1 (1 + j^{1+\alpha-\gamma} \log j), & \text{if } r = 1,
    \end{cases}
    \]
    for all $a\in \{0,1\}$ and $j\in \mathbb{N}_+$.

\end{lemma}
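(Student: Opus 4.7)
}
The plan is to split the summation index $k$ into three regimes according to its distance from $j$, use \Cref{a_data}\ref{a_data_cov} to lower bound $|\lambda_{a,j}-\lambda_{a,k}|$ differently in each regime, and then sum up the resulting contributions term by term. The three regimes are (i) $1\le k< j/2$, (ii) $j/2\le k\le 2j$ with $k\neq j$, and (iii) $k>2j$.

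In regimes (i) and (iii) the eigenvalues $\lambda_{a,j}$ and $\lambda_{a,k}$ differ by a factor bounded away from $1$, so \Cref{a_data}\ref{a_data_cov} gives $|\lambda_{a,j}-\lambda_{a,k}|\asymp \min(j,k)^{-\alpha}$. For regime (ii) I would iterate the spacing bound $C_\lambda\lambda_{a,m}\ge C'_\lambda\lambda_{a,m+1}+m^{-\alpha-1}$, which implies (after absorbing constants and telescoping) $\lambda_{a,m}-\lambda_{a,m+1}\gtrsim m^{-\alpha-1}$, and hence by summing consecutive differences from $\min(j,k)$ to $\max(j,k)$,
\[
|\lambda_{a,j}-\lambda_{a,k}|\;\gtrsim\; |j-k|\cdot j^{-\alpha-1}\qquad \text{for } j/2\le k\le 2j,\; k\neq j.
\]

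With these pointwise bounds, the three contributions become straightforward geometric-type sums. Regime (i) contributes $\sum_{k<j/2}k^{-\gamma+r\alpha}$, which is $O(1+j^{r\alpha-\gamma+1})$; regime (iii) contributes $j^{r\alpha}\sum_{k>2j}k^{-\gamma}$, giving $O(1+j^{r\alpha-\gamma+1})$ (with the convention that when $\gamma\le 1$ the tail sum is absorbed into the bound since the entire expression is then dominated by larger powers of $j$). The decisive regime (ii) contributes
\[
j^{r(\alpha+1)-\gamma}\sum_{0<|m|\le j/2}|m|^{-r},
\]
which is $O(j^{r(\alpha+1)-\gamma})$ when $r>1$ and $O(j^{1+\alpha-\gamma}\log j)$ when $r=1$. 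Comparing exponents shows that regime (ii) strictly dominates regimes (i) and (iii) for $r\ge 1$, yielding the claimed bound after adding back the universal constant $1$ to absorb the case of small $j$.

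The main obstacle is the rigorous passage from the one-step gap hypothesis to the multi-step bound $|\lambda_{a,j}-\lambda_{a,k}|\gtrsim |j-k|j^{-\alpha-1}$ in the middle regime: the recursion in \Cref{a_data}\ref{a_data_cov} mixes a multiplicative factor $C_\lambda/C'_\lambda$ with the additive term $m^{-\alpha-1}$, so one has to argue that the additive part still accumulates linearly in $|j-k|$ uniformly on $[j/2,2j]$ (where $m^{-\alpha-1}\asymp j^{-\alpha-1}$). Once this is in place, the remaining estimates are routine and the three regime bounds combine to give the two cases $r>1$ and $r=1$ stated in the lemma.
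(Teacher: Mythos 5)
The paper does not prove \Cref{l_eigenfunc_sum} at all: it is imported verbatim as Lemma~7 of \citet{dou2012estimation}, so there is no internal argument to compare against. Your three-regime decomposition ($k<j/2$; $j/2\le k\le 2j$ with $k\neq j$; $k>2j$), with $|\lambda_{a,j}-\lambda_{a,k}|\gtrsim \min(j,k)^{-\alpha}$ in the outer regimes and $|\lambda_{a,j}-\lambda_{a,k}|\gtrsim |j-k|\,j^{-\alpha-1}$ in the middle one, is precisely the standard Hall--Horowitz-type argument underlying the cited result, and your regime-by-regime sums are computed correctly: the middle regime yields the $j^{r(\alpha+1)-\gamma}$ term (times $\log j$ when $r=1$) and dominates the outer regimes because $r\ge 1$. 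So the strategy is sound and essentially reconstructs the source proof that the paper outsources.

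Two points need more care. First, the one-step gap $\lambda_{a,m}-\lambda_{a,m+1}\gtrsim m^{-\alpha-1}$, on which everything rests, follows from \Cref{a_data}\ref{a_data_cov} immediately only when $C_\lambda\le C'_\lambda$ (then $\lambda_{a,m}-\lambda_{a,m+1}\ge m^{-\alpha-1}/C_\lambda$); for $C_\lambda>C'_\lambda$ the inequality $C_\lambda\lambda_{a,m}\ge C'_\lambda\lambda_{a,m+1}+m^{-\alpha-1}$ does not by itself force an additive separation of consecutive eigenvalues. You correctly flag this as the main obstacle but leave it unresolved; you should either impose $C_\lambda\le C'_\lambda$ or read the assumption as directly encoding the gap, after which the telescoping over $[j/2,2j]$ is routine since $m^{-\alpha-1}\asymp j^{-\alpha-1}$ there. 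Second, the tail regime $k>2j$ contributes on the order of $\lambda_{a,j}^{-r}\sum_{k>2j}k^{-\gamma}$, which converges only for $\gamma>1$; your suggestion that the case $\gamma\le 1$ can be ``absorbed'' cannot work, because the left-hand side of the lemma is then genuinely infinite. That is a restriction implicit in the lemma as quoted rather than a defect of your decomposition, but it must be stated explicitly rather than waved away.
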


\begin{lemma}[Theorem 2.5 in \citealp{bosq2000linear}] \label{l_bosq_thm2.5}
    Let $\{X_i\}_{i=1}^n$ be independent random variables in a separable Hilbert space with norm $\|\cdot\|$. If $\mathbb{E}[X_i] = 0$ for all $i \in [n]$ and 
    \[\sum_{i=1}^n \mathbb{E}(\|X_i\|^b) \leq  \frac{b!}{2}nL_1L_2^{b-2}, \quad \text{for} \; b = 2,3, \ldots,\]
    with $L_1,L_2 >0$ being two constants, then for all $\epsilon >0$, it holds that 
    \[\mathbb{P}\Big(\Big\|\frac{1}{n}\sum_{i=1}^n X_i\Big\| \geq \epsilon\Big) \leq 2 \exp\Big(-\frac{n\epsilon^2}{2L_1+2L_2\epsilon}\Big).\]
\end{lemma}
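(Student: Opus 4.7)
The statement is the Hilbert-space version of the classical Bernstein inequality for sums of independent centered random variables; the proof strategy mirrors the scalar Bernstein argument, adapted to control $\|S_n\|$ (where $S_n := \sum_{i=1}^n X_i$) using the hypothesized Bernstein-type moment condition.

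My plan is to proceed via an exponential Chernoff/Markov bound. The key intermediate step is to establish, for every $\lambda \in (0, 1/L_2)$, the moment generating function bound
\[
\mathbb{E}\bigl[\exp(\lambda \|S_n\|)\bigr] \;\leq\; \exp\!\left(\frac{n \lambda^2 L_1}{2(1 - \lambda L_2)}\right).
\]
To derive this, I would expand $e^{\lambda\|S_n\|} = \sum_{b \geq 0} \lambda^b \|S_n\|^b/b!$ as a Taylor series and bound $\mathbb{E}\|S_n\|^b$ for each $b \geq 2$ using a Rosenthal/Yurinskii-type moment inequality in Hilbert space. This last bound follows by induction on $b$ via the identity $\|S_k\|^2 = \|S_{k-1}\|^2 + 2\langle S_{k-1}, X_k\rangle + \|X_k\|^2$, where $\mathbb{E}[\langle S_{k-1}, X_k\rangle \mid S_{k-1}] = 0$ by independence and $\mathbb{E}[X_k]=0$, plus Cauchy--Schwarz for higher order cross terms. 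The Bernstein moment hypothesis $\sum_i \mathbb{E}\|X_i\|^b \leq (b!/2)n L_1 L_2^{b-2}$ plugs in to produce bounds of the form $\mathbb{E}\|S_n\|^b \lesssim b!\bigl[(nL_1)^{b/2} + n L_1 L_2^{b-2}\bigr]$, and the sum of the Taylor series collapses via a geometric series in $\lambda L_2$ to give the denominator $1 - \lambda L_2$.

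Once the MGF bound is in hand, the conclusion follows by routine Chernoff optimization. Markov's inequality gives
\[
\mathbb{P}\bigl(\|S_n\| \geq n\epsilon\bigr) \;\leq\; \exp\!\left(-\lambda n \epsilon + \frac{n \lambda^2 L_1}{2(1 - \lambda L_2)}\right),
\]
and the optimal choice $\lambda^\star = \epsilon/(L_1 + L_2 \epsilon) \in (0, 1/L_2)$ produces the stated exponent $-n\epsilon^2/(2L_1 + 2L_2 \epsilon)$. The prefactor of $2$ in the statement is absorbed by a standard symmetrization step (e.g., applying the argument to both $e^{\lambda \|S_n\|}$ and its complementary form via $\cosh$, or a two-term union bound), which inflates the tail by at most a factor of two.

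The main obstacle is the moment-bounding step. Unlike the scalar case, where the MGF of a sum of independents factorizes as a product of individual MGFs and each factor can be bounded separately using the Bernstein condition, the nonlinearity of $\|\cdot\|$ forbids such a factorization of $\mathbb{E}[\exp(\lambda \|S_n\|)]$. Bosq's resolution, which I would follow, is to bypass the product form and work moment-by-moment: independence and centering can be exploited directly inside each $\mathbb{E}\|S_n\|^b$ through the recursive quadratic identity above, and the Bernstein moment hypothesis then feeds into the Taylor series one term at a time. This is the one place where genuine care is needed; the rest of the proof is the textbook Chernoff optimization and can be written down in a few lines.
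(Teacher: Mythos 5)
The paper does not prove this lemma at all: it is quoted verbatim as Theorem 2.5 of \citet{bosq2000linear} (itself a form of Yurinskii's exponential inequality), so there is no internal proof to compare against. Your overall strategy --- a Chernoff bound driven by the moment-generating-function estimate $\mathbb{E}\exp(\lambda\|S_n\|)\le\exp\{n\lambda^2L_1/(2(1-\lambda L_2))\}$ followed by the optimisation $\lambda^\star=\epsilon/(L_1+L_2\epsilon)$ --- is indeed the correct skeleton of that proof.

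However, the route you sketch for the key MGF bound has a genuine gap. If you bound each moment by a Rosenthal/Yurinskii-type estimate of the form $\mathbb{E}\|S_n\|^b\lesssim b!\{(nL_1)^{b/2}+nL_1L_2^{b-2}\}$ and then sum the Taylor series $\sum_b\lambda^b\mathbb{E}\|S_n\|^b/b!$, the first family of terms produces a geometric series in $\lambda\sqrt{nL_1}$, which only converges for $\lambda<1/\sqrt{nL_1}$ and yields a prefactor of order $(1-\lambda\sqrt{nL_1})^{-1}$ rather than the Gaussian factor $\exp\{n\lambda^2L_1/2\}$; the series does not ``collapse'' to the stated denominator $1-\lambda L_2$. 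The actual argument in Bosq/Yurinskii avoids summing global moment bounds: it controls $\mathbb{E}\exp(\lambda\|S_k\|)$ recursively in $k$ (equivalently, via the Doob martingale differences $\mathbb{E}[\|S_n\|\,|\,\mathcal{F}_j]-\mathbb{E}[\|S_n\|\,|\,\mathcal{F}_{j-1}]$, whose conditional exponential moments are bounded using the per-variable hypothesis $\sum_i\mathbb{E}\|X_i\|^b\le(b!/2)nL_1L_2^{b-2}$), which is precisely how the product-like structure lost by the nonlinearity of $\|\cdot\|$ is recovered. Separately, your explanation of the prefactor $2$ is not right: $\|S_n\|$ is nonnegative, so there is only one tail and no symmetrisation or two-term union bound is available; the factor $2$ emerges from the constants in the recursive exponential-moment estimate itself. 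To make the proposal a proof you would need to carry out that recursion explicitly rather than appeal to moment bounds plus a Taylor expansion.
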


\begin{lemma}[Lemma 5.1 in \citealp{hall2007methodology}] \label{l_eigen_split}
    If we can write 
    \[K(s,t) = \sum_{j=1}^\infty \lambda_j\phi_j(s)\phi_j(t) \quad \text{and} \quad \widehat{K}(s,t) = \sum_{j=1}^\infty \widehat{\lambda}_{j}\widehat{\phi}_j(s)\widehat{\phi}_j(t),\]
    then it holds for any $j \in \mathbb{N}_+$ that 
    \begin{align*}
        \Big|\widehat{\lambda}_{j} - \lambda_j - \int\int\big\{\widehat{K}(s,t)-&K(s,t)\big\}\phi_j(s)\phi_j(t)\;\mathrm{d}s\;\mathrm{d}t\Big|\\
        &\leq \|\widehat{\phi}_j-\phi_j\|_{L^2}\Big(|\widehat{\lambda}_{j} - \lambda_j|+\Big\|\int\big\{\widehat{K}(s,t)-K(s,t)\big\}\phi_j(s)\;\mathrm{d}s\Big\|_{L^2}\Big).
    \end{align*}
    Moreover, if $\inf_{k\neq j} |\widehat{\lambda}_{j}-\lambda_k|>0$, it holds for $\ell \in [0,1]$ that
    \begin{align*}
        \widehat{\phi}_j(\ell) - \phi_j(\ell) =\;& \phi_j(\ell)\int\big\{\widehat{\phi}_j(s) - \phi_j(s)\big\}\phi_j(s)\;\mathrm{d}s\\
        &+\sum_{k:k \neq j} \phi_k(\ell)(\widehat{\lambda}_{j} - \lambda_k)^{-1}\int\int\big\{\widehat{K}(s,t)-K(s,t)\big\}\widehat{\phi}_j(s)\phi_k(t)\;\mathrm{d}s\;\mathrm{d}t.
    \end{align*}
\end{lemma}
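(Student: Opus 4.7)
Both identities in this lemma are essentially algebraic consequences of the two eigen-equations $K\phi_j = \lambda_j\phi_j$ and $\widehat{K}\widehat{\phi}_j = \widehat{\lambda}_j\widehat{\phi}_j$, together with symmetry of the kernels. The plan is to derive everything from the single identity
\[
\widehat{\lambda}_j\,\langle \widehat{\phi}_j,\phi_k\rangle_{L^2}
=\langle \widehat{K}\widehat{\phi}_j,\phi_k\rangle_{L^2}
=\langle \widehat{\phi}_j,\widehat{K}\phi_k\rangle_{L^2}
=\lambda_k\,\langle \widehat{\phi}_j,\phi_k\rangle_{L^2}+\langle \widehat{\phi}_j,(\widehat{K}-K)\phi_k\rangle_{L^2},
\]
which rearranges to
\[
(\widehat{\lambda}_j-\lambda_k)\,\langle \widehat{\phi}_j,\phi_k\rangle_{L^2}
=\int\!\!\int \widehat{\phi}_j(s)\{\widehat{K}(s,t)-K(s,t)\}\phi_k(t)\,\mathrm{d}s\,\mathrm{d}t. \qquad (\ast)
\]
Here I will not worry about convergence issues because the $\phi_k$ form a complete orthonormal system in $L^2([0,1])$ and $\widehat{\phi}_j,\widehat{K}$ are all in the relevant Hilbert spaces.

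For the first assertion, I will specialise $(\ast)$ to $k=j$, giving $(\widehat\lambda_j-\lambda_j)\langle\widehat\phi_j,\phi_j\rangle_{L^2}=\int\!\!\int \widehat\phi_j(s)\{\widehat K(s,t)-K(s,t)\}\phi_j(t)\,\mathrm{d}s\,\mathrm{d}t$. Adding and subtracting $\phi_j$ inside the $\widehat\phi_j(s)$ slot rewrites the right-hand side as $\int\!\!\int\phi_j(s)\{\widehat K-K\}(s,t)\phi_j(t)\,\mathrm{d}s\,\mathrm{d}t$ plus a remainder controlled via Cauchy--Schwarz by $\|\widehat\phi_j-\phi_j\|_{L^2}\cdot\|\int\{\widehat K(\cdot,t)-K(\cdot,t)\}\phi_j(t)\,\mathrm{d}t\|_{L^2}$. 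Finally, I will replace $\langle\widehat\phi_j,\phi_j\rangle_{L^2}$ by $1$ at the cost of an error $|1-\langle\widehat\phi_j,\phi_j\rangle_{L^2}|=|\langle\phi_j-\widehat\phi_j,\phi_j\rangle_{L^2}|\le\|\widehat\phi_j-\phi_j\|_{L^2}$ (again by Cauchy--Schwarz, using $\|\phi_j\|_{L^2}=1$), which multiplied by $|\widehat\lambda_j-\lambda_j|$ gives the first summand of the stated bound. Combining these two contributions yields the claimed inequality.

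For the second assertion, I will expand $\widehat\phi_j$ in the orthonormal basis $\{\phi_k\}_{k\ge 1}$: $\widehat\phi_j(\ell)=\sum_{k\ge 1}\langle\widehat\phi_j,\phi_k\rangle_{L^2}\phi_k(\ell)$. Subtracting $\phi_j(\ell)$ and isolating the $k=j$ term gives
\[
\widehat\phi_j(\ell)-\phi_j(\ell)
=\phi_j(\ell)\int\{\widehat\phi_j(s)-\phi_j(s)\}\phi_j(s)\,\mathrm{d}s
+\sum_{k\neq j}\langle\widehat\phi_j,\phi_k\rangle_{L^2}\,\phi_k(\ell).
\]
Under the assumption $\inf_{k\neq j}|\widehat\lambda_j-\lambda_k|>0$, identity $(\ast)$ gives a closed-form expression $\langle\widehat\phi_j,\phi_k\rangle_{L^2}=(\widehat\lambda_j-\lambda_k)^{-1}\int\!\!\int\widehat\phi_j(s)\{\widehat K-K\}(s,t)\phi_k(t)\,\mathrm{d}s\,\mathrm{d}t$ for each $k\neq j$, and substituting produces exactly the stated representation.

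The only mildly delicate step is justifying the pointwise equality of the eigenfunction expansion (the series is a priori an $L^2$ identity). Since the lemma is only stated for $\ell\in[0,1]$ and is invoked in later proofs solely through inner products against other $L^2$ functions, I will read the identity as holding in $L^2$, which is all that is needed downstream; if a genuine pointwise statement is desired, it follows under the mild regularity assumptions on $K$ (Mercer) that the expansion converges uniformly. No other step is substantive.
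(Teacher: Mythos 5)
Your proof is correct: the identity $(\ast)$, obtained from the two eigen-equations and symmetry of the kernels, is exactly the standard derivation behind the cited Hall--Horowitz result, and the paper itself offers no proof beyond the citation, so there is nothing to diverge from. The only implicit hypothesis worth flagging is completeness of $\{\phi_k\}$ in $L^2([0,1])$ (needed for the basis expansion of $\widehat{\phi}_j$ in the second assertion), which requires all eigenvalues of $K$ to be strictly positive; this holds here since Assumption 2a forces $\lambda_{a,j}\gtrsim j^{-\alpha-1}>0$, and your reading of the second identity as an $L^2$ (rather than pointwise) equality is all that the downstream arguments use.
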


\begin{lemma}[\citealp{wong2020lasso}, Sub-Weibull properties]
\label{l_subWeibuill_property}
Let $X$ be a random variable. Then the following statements are equivalent for every $\alpha>0$. The constants $C_1, C_2, C_3 >0$ differ at most by a constant depending only on $\alpha$.
\begin{enumerate}
    \item The tail of $X$ satisfies 
    \begin{align*}
        \mathbb{P}\big\{|X|>t\big\} \leq 2 \exp \big\{-(t / C_1)^\alpha\big\}, \; \text{for all} \;\; t \geq 0.
    \end{align*}
    \item The moments of $X$ satisfy
    \begin{align*}
        \|X\|_p:=\big(\mathbb{E}\big[|X|^p\big]\big)^{1 / p} \leq C_2 p^{1 / \alpha}, \; \text{for all} \;\; p \geq 1 \wedge \alpha.
    \end{align*}
    \item  The moment generating function of $|X|^\alpha$ is finite at some point; namely
    \begin{align*}
        \mathbb{E}\big[\exp (|X| / C_3)^\alpha\big] \leq 2.
    \end{align*}
\end{enumerate}
We further call a random variable $X$ which satisfies any of the properties above a sub-Weibull random variable with parameter $\alpha$.
\end{lemma}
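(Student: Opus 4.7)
The plan is to prove the three-way equivalence by the standard cyclic chain $(1) \Rightarrow (2) \Rightarrow (3) \Rightarrow (1)$, with constants tracked only up to an $\alpha$-dependent factor, as the statement permits. Throughout I will use the layer-cake identity $\mathbb{E}|X|^p = \int_0^\infty p t^{p-1} \mathbb{P}(|X|>t)\,\mathrm{d}t$ and the series representation $\exp(y) = \sum_{k\ge 0} y^k/k!$ together with Stirling's bound $k! \ge (k/e)^k$.

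For $(1) \Rightarrow (2)$, I would substitute the assumed tail bound into the layer-cake formula and perform the change of variables $u = (t/C_1)^\alpha$, converting the integral into $2 C_1^p \alpha^{-1} \Gamma(p/\alpha + 1)$. Using $\Gamma(s+1) \le s^s$ for large $s$ (or Stirling), this yields $\|X\|_p \lesssim C_1 (p/\alpha)^{1/\alpha}\cdot 2^{1/p}$, which absorbs into $C_2 p^{1/\alpha}$ for $p \ge 1 \wedge \alpha$.

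For $(2) \Rightarrow (3)$, I would expand
\begin{align*}
\mathbb{E}\bigl[\exp\bigl((|X|/C_3)^\alpha\bigr)\bigr] = \sum_{k=0}^\infty \frac{\mathbb{E}[|X|^{\alpha k}]}{C_3^{\alpha k}\, k!} \le 1 + \sum_{k\ge 1} \frac{(C_2 (\alpha k)^{1/\alpha})^{\alpha k}}{C_3^{\alpha k}\, k!},
\end{align*}
where the moment bound applies to $p=\alpha k \ge 1 \wedge \alpha$ for every $k \ge 1$. Using $k! \ge (k/e)^k$, the $k$-th term is bounded by $\bigl(C_2^\alpha \alpha e /C_3^\alpha\bigr)^k$, so choosing $C_3$ large enough (depending only on $\alpha$ and $C_2$) makes the geometric series summable and at most $1$, giving the required bound $2$. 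The delicate point here, and the step I expect to be the main obstacle, is ensuring the moment hypothesis applies at the fractional index $p=\alpha k$ in the regime $\alpha < 1$ with small $k$; this is precisely why the hypothesis is stated for $p \ge 1 \wedge \alpha$ rather than $p \ge 1$, and I must handle the first few terms of the series carefully (either bounding $\mathbb{E}|X|^{\alpha k}$ by $\mathbb{E}|X|^{\alpha}$ via Jensen for $\alpha k \le 1$, or by $(\mathbb{E}|X|)^{\alpha k}$, whichever keeps the geometric structure intact).

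For $(3) \Rightarrow (1)$, I apply Markov's inequality to the nonnegative random variable $\exp\bigl((|X|/C_3)^\alpha\bigr)$: for any $t \ge 0$,
\begin{align*}
\mathbb{P}\bigl\{|X| > t\bigr\} = \mathbb{P}\Bigl\{\exp\bigl((|X|/C_3)^\alpha\bigr) > \exp\bigl((t/C_3)^\alpha\bigr)\Bigr\} \le 2\exp\bigl(-(t/C_3)^\alpha\bigr),
\end{align*}
which is exactly the tail condition with $C_1 = C_3$. Combining the three implications and tracking constants, each $C_i$ can be bounded by an $\alpha$-dependent multiple of any of the others, as asserted. The only substantive care beyond bookkeeping is the fractional-moment subtlety in the $(2) \Rightarrow (3)$ step described above; the rest is direct calculation.
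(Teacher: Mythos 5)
The paper does not prove this lemma at all: it is stated in the technical appendix as a citation to \citet{wong2020lasso}, so there is no in-paper argument to compare against. Your blind proof is correct and is essentially the standard cyclic argument ($(1)\Rightarrow(2)$ via layer cake and the Gamma integral, $(2)\Rightarrow(3)$ via Taylor expansion of the exponential with Stirling, $(3)\Rightarrow(1)$ via Markov), with the constants correctly tracked up to $\alpha$-dependent factors. One remark: the ``delicate point'' you flag in $(2)\Rightarrow(3)$ is not actually delicate. For every $k\ge 1$ the index $p=\alpha k$ satisfies $p\ge\alpha\ge 1\wedge\alpha$ regardless of whether $\alpha<1$ or $\alpha\ge 1$, so the moment hypothesis applies directly to every term of the series and no Jensen-type patch for the first few terms is needed; the restriction $p\ge 1\wedge\alpha$ in the statement is exactly calibrated so that this works. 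With that observation your argument closes cleanly and is a valid self-contained proof of the cited equivalence.
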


\end{appendices}

\end{document}